\PassOptionsToPackage{table}{xcolor}
\documentclass{article}

\usepackage{iclr2027_conference,times}
\usepackage[T1]{fontenc}
\usepackage[utf8]{inputenc}
\usepackage{microtype}
\usepackage{amsmath,amssymb,amsthm,mathtools,bm}
\usepackage{aliascnt}
\usepackage{booktabs,multirow,array}
\usepackage{graphicx}
\usepackage{subcaption}
\usepackage{enumitem}
\usepackage{url}
\usepackage{hyperref}
\usepackage[nameinlink,capitalise,noabbrev]{cleveref}
\usepackage{placeins}
\usepackage{titletoc}
\usepackage{tabularx}
\usepackage{nicefrac}
\newcolumntype{C}{>{\centering\arraybackslash}X}

\definecolor{SignFill}{HTML}{D9ECFA}
\definecolor{SignText}{HTML}{155C8A}

\definecolor{GDFill}{HTML}{E9E0F4}
\definecolor{GDText}{HTML}{5C3B86}

\definecolor{BiasFill}{HTML}{FDE7D2}
\definecolor{BiasText}{HTML}{A74600}

\definecolor{insightbg}{RGB}{245,248,252}
\definecolor{insightborder}{RGB}{165,185,210}

\newcolumntype{Y}{>{\raggedright\arraybackslash}X}

\newcolumntype{L}{>{\raggedright\arraybackslash}X}

\newcommand{\R}{\mathbb{R}}

\newcommand{\1}{\mathbf{1}}

\newcommand{\ip}[2]{\left\langle #1,#2\right\rangle}
\newcommand{\norm}[1]{\lVert #1 \rVert}

\newcommand{\cM}{\mathcal{M}}

\newcommand{\softmax}{\operatorname{softmax}}
\newcommand{\sign}{\operatorname{sign}}
\newcommand{\diag}{\operatorname{diag}}

\newcommand{\argmin}{\operatorname*{arg\,min}}

\newcommand{\eps}{\epsilon}

\newcommand{\eprod}{\mathbin{\odot}}

\newcommand{\one}{\mathbf{1}}
\newcommand{\nor}[2][]{\left\lVert #2\right\rVert_{#1}}

\newcommand{\dist}{\operatorname{dist}}

\newtheorem{theorem}{Theorem}

\newaliascnt{lemma}{theorem}
\newtheorem{lemma}[lemma]{Lemma}
\aliascntresetthe{lemma}
\newaliascnt{proposition}{theorem}
\newtheorem{proposition}[proposition]{Proposition}
\aliascntresetthe{proposition}
\newtheorem{localproposition}{Proposition}[section]
\newaliascnt{corollary}{theorem}
\newtheorem{corollary}[corollary]{Corollary}
\aliascntresetthe{corollary}
\newaliascnt{assumption}{theorem}

\aliascntresetthe{assumption}
\theoremstyle{definition}
\newaliascnt{definition}{theorem}

\aliascntresetthe{definition}
\newaliascnt{remark}{theorem}

\aliascntresetthe{remark}

\crefname{theorem}{Theorem}{Theorems}
\Crefname{theorem}{Theorem}{Theorems}
\crefname{lemma}{Lemma}{Lemmas}
\Crefname{lemma}{Lemma}{Lemmas}
\crefname{proposition}{Proposition}{Propositions}
\Crefname{proposition}{Proposition}{Propositions}
\crefname{localproposition}{Proposition}{Propositions}
\Crefname{localproposition}{Proposition}{Propositions}
\crefname{corollary}{Corollary}{Corollaries}
\Crefname{corollary}{Corollary}{Corollaries}
\crefname{assumption}{Assumption}{Assumptions}
\Crefname{assumption}{Assumption}{Assumptions}
\crefname{definition}{Definition}{Definitions}
\Crefname{definition}{Definition}{Definitions}
\crefname{remark}{Remark}{Remarks}
\Crefname{remark}{Remark}{Remarks}

\hypersetup{
  colorlinks=true,
  linkcolor=blue!55!black,
  citecolor=blue!55!black,
  urlcolor=blue!55!black,
  pdftitle={Adam’s Two-Timescale Implicit Bias in Separable Linear Classification},
  pdfauthor={Anonymous authors}
}

\newif\ifarxiv
\arxivtrue       

\ifarxiv
  \iclrfinalcopy
\fi

\title{On the Two Faces of Adam in Separable Linear Classification}

\author{%
  {\large\bfseries
    Chen Fan
    \qquad
    Csaba Szepesv\'{a}ri
  }\\[0.45em]
  {\normalsize
    University of Alberta
  }\\[0.15em]
  {\small\ttfamily
    cfan8@ualberta.ca
    \qquad
    szepesva@ualberta.ca
  }
}

\makeatletter
\renewcommand{\@maketitle}{%
  \vbox{%
    \hsize\textwidth
    \centering

    {\LARGE\bfseries
      \@title\par
    }

    \vspace{0.26in}

    {\@author\par}

    \vskip 0.30in minus 0.10in
  }%
}
\makeatother

\begin{document}
\maketitle
\pagestyle{fancy}
\fancyhf{}                         
\fancyfoot[C]{\thepage}            
\renewcommand{\headrulewidth}{0pt} 
\renewcommand{\footrulewidth}{0pt}
\thispagestyle{fancy}  

\begin{abstract}
We consider the behavior of 
deterministic,
full-batch, bias-corrected Adam in separable linear classification
with softmax parametrization under log-loss.
In this setting, under a wide range of conditions
Adam is known to approach max-norm-margin optimality when its stability constant \(\epsilon\) is zero,
while with a positive \(\epsilon\), it is known to approach Euclidean-margin optimality.
Our main contribution is the quantitative description of Adam's behavior for small fixed positive $\epsilon$.
We give sufficient conditions under which an Adam-trained classifier nearly maximizes the max-norm margin
before the updates become gradient-like. 
We also show that the classifier reaches a fixed target Euclidean margin only much later.
Specifically, we show that for polynomially decreasing stepsizes with exponent \(a\), where \(1/3<a<1\), 
the updates become approximately proportional to the negative gradient after 
$\Theta(\log(1/\epsilon)^{1/(1-a)})$ iterations. 
At that time, the 
classifier still nearly maximizes the max-norm margin. Reaching a fixed target 
Euclidean margin above that of every max-norm-optimal classifier, but below the optimum, 
 is shown to require $\epsilon^{-\Theta(1)/(1-a)}$ iterations. Under inverse-linear stepsize decay (\(a=1\)), the update 
 transition takes polynomially many iterations, whereas reaching the target margin takes 
 exponentially many. Experiments support these predictions. The later change in the classifier 
 can improve or worsen generalization after training error reaches zero, connecting the
  analysis to grokking and its reverse.

\end{abstract}

\section{Introduction}

Adam's implicit bias in separable linear classification depends on its
stability parameter \(\epsilon\). Existing analyses show that removing this
parameter leads to max-norm-margin maximization \citep{zhang2024adam}, whereas
keeping it positive ultimately leads to Euclidean-margin maximization
\citep{wang2022does}. These results describe
different limiting behaviors of the same algorithm. We study how a trajectory
with a small positive \(\epsilon\) evolves between these behaviors, and how the
learning-rate schedule affects this evolution.

This qualitative behavior,
which has already been observed
empirically \citep{fan2025spectral,tsilivis2025flavorsmarginimplicitbias},
is not unexpected:
If \(\epsilon\) is sufficiently small, the gradients initially dominate it, and
Adam's updates are approximately \emph{sign-like}: They 
are nearly aligned with the sign function applied componentwise to the gradient.
When gradients are small relative to $\epsilon$,
the updates are \emph{gradient-like}: They are aligned, up to scaling, with the actual gradient.
Now, it is also true that the updates cannot be gradient-like unless
the gradients are small. The latter can only happen when
the parameter vector has a large magnitude.
But because initially the updates are sign-like,
at this time, the parameter vector needs to be nearly $\ell_\infty$-margin optimal.
After the updates become gradient-like,
it then takes some extra time for the parameter vector to get aligned
with the $\ell_2$-max-margin solution direction.

Our contribution is to quantify this intuitive behavior for deterministic,
full-batch, bias-corrected Adam in separable linear classification, using
logistic loss for binary classification, with an extension to multiclass
softmax cross-entropy. Our analysis applies to polynomially decreasing
stepsizes
\[
\eta_t=\eta_0(t+1)^{-a},\qquad a\in(1/3,1],
\]
a family studied in previous implicit-bias analyses
\citep{nacson2019convergence,zhang2024adam,fan2025spectral}. Each trajectory uses a
fixed positive \(\epsilon\); our results characterize the family of
trajectories as \(\epsilon\) decreases. We bound the max-norm margin gap at
the time marking permanent entry into a
gradient-like regime, which we call the \emph{update transition time}.
We also characterize when this update transition occurs and how much
additional training is needed for the classifier to reach a prescribed
Euclidean-margin accuracy.

At a high level, our main results are as follows. For \(1/3<a<1\), the update
transition time satisfies
\[
T_{\mathrm{update}}
=\Theta\!\left((\log(1/\epsilon))^{1/(1-a)}\right).
\]
At this time, the classifier's max-norm margin gap is small, with a bound that
vanishes as \(\epsilon\) decreases. We also bound its subsequent
deterioration: the classifier remains nearly max-norm-margin optimal while
the accumulated movement after \(T_{\mathrm{update}}\) is small relative to
the weight norm at that time. Let \(\Delta>0\) be the smallest
Euclidean-margin gap among max-norm-optimal classifiers. For any fixed target
gap \(\zeta\in(0,\Delta)\), the first subsequent time at which the
Euclidean-margin gap falls below \(\zeta\) satisfies
\[
T_{\mathrm{Euclidean}}
=\epsilon^{-\Theta(1)/(1-a)}.
\]
Thus, the update transition occurs on a polylogarithmic scale in
\(1/\epsilon\), while reaching this Euclidean-margin accuracy requires a
polynomial scale. For \(a=1\), these scales become polynomial and
exponential, respectively.

There is an intuitive explanation for these bounds. The initial phase
produces a weight vector with norm of order \(\log(1/\epsilon)\). 
Changing the normalized parameter vector by a fixed amount requires 
displacement comparable to this norm: 
a much smaller displacement barely changes its direction. 
After the transition, however, the accumulated distance 
traveled grows only logarithmically in the cumulative stepsize,
as in gradient descent on separable logistic
regression \citep{soudry2018implicit}.
Consequently, substantial classifier change requires the 
cumulative stepsize to be at least polynomial in \(1/\epsilon\).
Cumulative stepsizes grow polynomially in the iteration count when \(a<1\), 
but only logarithmically when \(a=1\). 
This explains the particularly long transition times under 
harmonic stepsize decay.

These results clarify the relationship between the two existing descriptions of Adam’s implicit bias. They predict, for example, that inverse-square-root (\(a=1/2\)) and inverse-linear (\(a=1\)) stepsize decay lead to the same eventual margin geometry, but reaching a fixed Euclidean-margin accuracy takes polynomially many iterations in \(1/\epsilon\) under the former and exponentially many under the latter. Depending on the data distribution, the change in the classifier can improve or worsen generalization after the training examples are already classified correctly. This connects the trajectory description to grokking and its reverse: delayed improvement or deterioration in predictive performance associated with changing implicit bias \citep{lyu2024dichotomy}.

The guarantees hold for finite positive \(\epsilon\) under data- and
algorithm-dependent sufficient conditions, while the scaling statements
describe their behavior as \(\epsilon\) decreases. 
We leave for future work determining the 
polynomial exponent in the bound for \(T_{\mathrm{Euclidean}}\) for
the case when $a<1$. Similarly, the sufficient conditions are conservative;
sharpening them to characterize, for a given dataset and optimizer
parameters, when the described phase behavior occurs is an important
direction for future work. The theory and accompanying experiments provide
a more detailed understanding of how Adam's stability parameter and
learning-rate schedule determine its implicit-bias trajectory.

\section{Problem setup and Preliminaries}

Let $\{(x_i,y_i)\}_{i=1}^n$ be a dataset with $x_i\in\mathbb{R}^d$ and
$y_i\in\{-1,+1\}$. Denote $R_1  :=\max_{i \in [n]} \norm{x_i}_1$ and $R_2  :=\max_{i \in [n]} \norm{x_i}_2$. Define $z_i:=x_i y_i$ and
$[n]:=\{1,\ldots,n\}$. We study the empirical logistic loss

\begin{equation}
L(w)=\frac1n\sum_{i=1}^n\ell(z_i^\top w),
\qquad
\text{where} \qquad
\ell(u)=\log(1+e^{-u}).
\label{eq:loss}
\end{equation}
Write $g(w):=\nabla L(w)$, $L_t:=L(w_t)$, and $g_t:=g(w_t)$.  Define the unnormalized margin
$\rho(w):=\min_{i\in[n]}z_i^\top w$ and the normalized $p$-margin
$\widehat\gamma_p(w):=\rho(w)/\norm{w}_p$ for $p\in\{2,\infty\}$.
The optimal max-norm and Euclidean data margins are
\begin{equation}
\gamma_\infty:=\max_{\norm{u}_\infty\le1}\min_{i \in [n]} z_i^\top u,
\qquad
\gamma_2:=\max_{\norm{u}_2\le1}\min_{i \in [n]} z_i^\top u.
\label{eq:max-margins}
\end{equation}

We analyze deterministic, bias-corrected Adam with a fixed stability constant
$\eps>0$ \citep{kingma2015adam}. Initialize $\bar m_{-1}:=0$ and
$\bar v_{-1}:=0$, and choose $\beta_1,\beta_2 \in [0,1)$. The iterates are
\begin{align}
&\bar m_t=\beta_1\bar m_{t-1}+(1-\beta_1)g_t, \qquad \bar v_t=\beta_2\bar v_{t-1}+(1-\beta_2)g_t^2, \notag \\
&m_t=\bar m_t/(1-\beta_1^{t+1}), \qquad v_t=\bar v_t/(1-\beta_2^{t+1}), \notag \\
&w_{t+1}=w_t-\eta_tD_t, \qquad D_t:=m_t /(\sqrt{v_t}+\eps\1), \label{eq:adam_update}
\end{align}  
where all operations are entrywise. The default PyTorch choice for $\eps$ is
$10^{-8}$ \citep{paszke2019pytorch}. 

We assume that the data is separable: there exists $w \in \mathbb R^d$ such that $\min_{i \in [n]} z_i^\top w > 0$, which is a standard assumption in the analyses of optimization implicit bias
\citep{soudry2018implicit,ji2019risk,nacson2019convergence,
wu2023implicit,zhang2024adam}. Thus, both $\gamma_\infty > 0$ and $\gamma_2 > 0$ hold.

We use the decreasing stepsize schedule
$\eta_t=\eta_0(t+1)^{-a}$, where $\eta_0>0$ and $a\in(1/3,1]$, as in prior
analyses of Adam and steepest descent
\citep{nacson2019convergence,zhang2024adam,fan2025spectral,
baek2026implicit,li2026implicit}. In addition to satisfying $\sum_{t=0}^{\infty} \eta_t = \infty$, this schedule also satisfies, for every $q\in(0,1)$ and $c>0$, $\sum_{r=0}^t q^r
[
\exp(c\sum_{s=t-r}^{t-1}\eta_s)-1]
\le C_{\rm geo} \eta_t, \, \, t \geq t_0$, where $t_{0} = t_0(q,c,a,\eta_0)$ and $C_{\rm geo}=C_{\rm geo}(q,c,a,\eta_0)$ \citep{zhang2024adam};
see \cref{prop:ema-sum} in App. \ref{app:prelim}.

For the analysis, we use the gradient-scale proxy introduced
by \citet{zhang2024adam}: $G(w):=-\frac{1}{n}\sum_{i=1}^n \ell'(z_i^\top w)$. 
It controls the gradient norms through the data margins:
\begin{align*}
    \gamma_\infty G(w)\le\norm{g(w)}_1\le R_1G(w),
\qquad
\gamma_2G(w)\le\norm{g(w)}_2\le R_2G(w).
\end{align*}
Write $G_t:=G(w_t)$. We can control the bias-corrected moment
errors in terms of $G_t$: $\max \{ \norm{m_t-g_t}_1, \norm{\sqrt{v_t}-|g_t|}_1  \} \lesssim \eta_t G_t$ after some $t_{\rm ema}$ burn-in period (as shown in \cref{lem:first-track,lem:second-track} in App. \ref{app:prelim}).  
Thus, the bias-corrected moments track $g_t$ and $|g_t|$ at scale
$O(\eta_tG_t)$. In the idealized case where
$m_t[j]\approx g_t[j]$ and $\sqrt{v_t[j]}\approx |g_t[j]|$, the update is
approximated by the soft-sign map
$\sigma_\eps(g)[j]:=\frac{g[j]}{|g[j]|+\eps}, \, j \in [d]$.
This map satisfies (see \cref{lem:softsign} in App. \ref{app:prelim})
\begin{align}
 \ip{g}{\sigma_\eps(g)}
=\sum_j\frac{g[j]^2}{|g[j]|+\eps}
\ge \norm{g}_1-d\eps,
\qquad
\ip{g}{\sigma_\eps(g)}
\ge\frac{\norm{g}_1^2}{\norm{g}_1+d\eps}.
\label{eq:main_soft_sign}
\end{align}
 The first inequality is informative when $\eps\ll\norm{g}_1$, whereas the
second applies throughout the trajectory. Both are useful for studying Adam's trajectories. However, we must still control the discrepancy between $D_t$ and $\sigma_\eps(g_t)$ for any $\eps>0$, as shown in \cref{lem:alignment} in App. \ref{app:prelim}: 
\begin{align*}
\max \bigl \{ \norm{g_t\eprod(D_t-\sigma_\eps(g_t))}_1, \eps\norm{D_t-\sigma_\eps(g_t)}_2, \left|\ip{g_t}{D_t-\sigma_\eps(g_t)}\right| \bigr \} \le C_{\rm ema}\eta_tG_t, \,\, t \geq t_{\rm ema}. 
\end{align*}
Combining this with a Taylor expansion yields the basic loss recursion (see \cref{prop:master-descent} in App. \ref{app:prelim})
\begin{align*}
L_{t+1}
\le L_t-
\eta_t\ip{g_t}{\sigma_\eps(g_t)}
+O(\eta_t^2G_t), \qquad t \geq t_{\rm ema}. 
\end{align*}
Margin analysis additionally requires a bound on the accumulated
displacement. The unit coefficient on the accumulated step size in the next
result is essential: applying the triangle inequality using only the pointwise
bound $\norm{D_t}_\infty\leq C_D := \sqrt{(1-\beta_1)/(1-\beta_2)}$ from \cref{lem:uniform-D} in App. \ref{app:prelim} would not recover the
$1/\gamma_\infty$ coefficient in \cref{thm:first-clock} or the max-norm optimality in
\cref{thm:margin_main} (see below). 
\begin{lemma}
\label{lem:radius} Suppose that $0 \leq \beta_1 \leq \beta_2 < 1$. 
If there exists a constant $B_g$ such that $B_g \ge \norm{g(w)}_{\infty}$ for every $w \in \mathbb{R}^d$, then the following holds for every integer $r$ with $0\le r<T$ and $\eps>0$:
\begin{equation}
\left\|\sum_{t=r}^{T-1}\eta_tD_t\right\|_\infty
\le
\sum_{t=r}^{T-1}\eta_t
+C_{\rm rad}\eta_r
\left[
1+\log\frac{B_g^2+\eps^2}{(1-\beta_2)\eps^2}
\right],
\label{eq:restart-radius-main}
\end{equation}
where the constant $C_{\rm rad}$ depends only on $\beta_1$ and $\beta_2$.
\end{lemma}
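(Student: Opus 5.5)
The plan is to work one coordinate $j$ at a time and compare $\sum_{t=r}^{T-1}\eta_t|D_t[j]|$ with $\sum_{t=r}^{T-1}\eta_t$. The excess over the unit coefficient should be charged to a telescoping potential built from $\log(\bar v_t[j]+\eps^2)$. This potential changes by at most $\log\frac{B_g^2+\eps^2}{(1-\beta_2)\eps^2}$ over any window, which explains the form of \eqref{eq:restart-radius-main}. Since $\eta_t$ is nonincreasing, any charge of the form $\sum_{t\ge r}\eta_t\,(\text{increment}_t)$ with nonnegative increments can be bounded by $\eta_r$ times the total increment. This gives the $\eta_r$ prefactor.

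\textbf{Step 1: pointwise ratio.} I expand $\bar m_t=(1-\beta_1)\sum_{k\le t}\beta_1^{t-k}g_k$, drop $\eps$ only where it helps, and reduce $|D_t[j]|$ to a $\beta_1$-weighted average of the ratios $|g_k[j]|/(\sqrt{v_t[j]}+\eps)$ over $k\le t$. The bias corrections are handled by the normalizations $1-\beta_1^{t+1}$ and $1-\beta_2^{t+1}$, so that both moments are genuine weighted averages. By Cauchy--Schwarz/Jensen, $|m_t[j]|\le\sqrt{\tilde v_t[j]}$, where $\tilde v_t$ is the bias-corrected $\beta_1$-average of $g^2$. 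Hence $|D_t[j]|\le \sqrt{\tilde v_t[j]}/(\sqrt{v_t[j]}+\eps)$. The goal is then to show that whenever this ratio exceeds $1$, the excess is controlled by a nonnegative increment of the potential.

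\textbf{Step 2: charging the excess.} Set $s_t:=v_t[j]+\eps^2$. Using $\beta_1\le\beta_2$, the $\beta_1$-average puts relatively more weight on recent gradients than the $\beta_2$-average. So $\tilde v_t> v_t$ can only happen when recent squared gradients exceed older ones, i.e.\ when $s$ has been increasing. I aim for an inequality of the form
\[
\frac{\sqrt{\tilde v_t[j]}}{\sqrt{v_t[j]}+\eps}\le 1+C\sum_{k\le t}\beta_1^{t-k}\bigl(\log s_k-\log s_{k-1}\bigr)_+ ,
\]
with $C=C(\beta_1,\beta_2)$. The tool is that the single-step increment satisfies $(1-\beta_2)g_k^2\le \bar v_k-\beta_2\bar v_{k-1}$, and that $x-1\le C\log x$ for $x$ in a bounded range. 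Summing against $\eta_t$, swapping the order of summation, and using monotonicity of $\eta_t$ turns the right-hand side into $\sum_t\eta_t+C'\eta_r\sum_{k}(\log s_k-\log s_{k-1})_+$. The terms with $k<r$ (the memory of gradients before the window) contribute the additive constant $C_{\rm rad}\eta_r$.

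\textbf{Step 3: bounding the total variation.} The positive increments of $\log s$ would telescope if $s$ were monotone. Because $s$ can decrease by at most the factor $\beta_2$ per step, I will use the modified potential $\log\bar v_t[j]+\eps^2$-type quantity corrected by $t\log(1/\beta_2)$. Alternatively, I will absorb the decreases into the geometric weights $\beta_1^{t-k}$; the condition $\beta_1\le\beta_2$ is what makes this absorption work. The total is then bounded by $\log\frac{\max s}{\min s}\le\log\frac{B_g^2+\eps^2}{(1-\beta_2)\eps^2}$, using $v\le B_g^2$ and the floor $(1-\beta_2)\eps^2$ that arises after bias correction. Taking the maximum over $j$ gives the $\ell_\infty$ bound.

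\textbf{Main obstacle.} I expect the main obstacle to be Step 2, namely getting the coefficient of $\sum\eta_t$ to be exactly $1$ rather than $C_D$. A crude geometric comparison $\beta_1^{t-k}/\beta_2^{(t-k)/2}$ loses a constant factor. The excess must therefore be measured through the log-increments of the potential, not through ratios of the weights themselves.
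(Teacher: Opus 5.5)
There is a genuine gap in Steps 2--3. Once you replace the increments by their positive parts $(\log s_k-\log s_{k-1})_+$, what you must control is essentially the total positive variation of $\log s_k$ over the window. That quantity is not bounded by $\log\frac{B_g^2+\eps^2}{(1-\beta_2)\eps^2}$. For example, let $g_t[j]$ alternate between $B_g$ and $0$, with $\eps\ll B_g$. Then $v_t[j]$ oscillates between roughly $B_g^2/(1+\beta_2)$ and $\beta_2B_g^2/(1+\beta_2)$. Every other step contributes about $\log(1/\beta_2)$, so the positive variation over $N$ steps is about $\tfrac N2\log(1/\beta_2)$. Your charge is then of order $\eta_rN$, or at best a fixed multiple of $\sum_t\eta_t$. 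That is exactly the loss of the unit coefficient you identified as the main obstacle.

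The proposed correction by $t\log(1/\beta_2)$ adds a term linear in the window length for the same reason, and ``absorbing the decreases into the weights'' is not made precise. In this example your pointwise bound $\sqrt{\tilde v_t[j]}/(\sqrt{v_t[j]}+\eps)$ exceeds $1$ on the up-steps but is below $1$ on the down-steps. Taking positive parts discards precisely the deficit that compensates the excess.

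The missing idea is to keep the log-increments \emph{signed} and let the monotonicity of $\eta_t$ produce the cancellation. In the paper, Cauchy--Schwarz gives $\bigl|\sum_k\alpha_kM_k/\Delta_k\bigr|^2\le\bigl(\sum_{t=r}^{T-1}\eta_t\bigr)\sum_k\alpha_kM_k^2/Q_k$, using $\Delta_k^2\ge Q_k$; this is equivalent to your pointwise bound linearized by $\sqrt x\le(1+x)/2$. The excess of the squared ratio is then written using $h^2_{r+k-l}=(Q_{k-l}-\beta_2Q_{k-l-1})/(1-\beta_2)$ and $1-z\le-\log z$. This gives $c_\beta\bigl[\beta_1^k+(1-\beta_1)\sum_{l=1}^k\beta_1^{l-1}(\log Q_k-\log Q_{k-l})\bigr]$, with no positive parts.

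After weighting by $\alpha_k=\eta_{r+k}$, this becomes $\sum_uA_u\log Q_u$ with $\sum_uA_u=0$. Because $\alpha_k$ is nonincreasing, $A_u\ge-\alpha_u\beta_1^u$. Subtracting $\log(B_g^2+\eps^2)$ from each $\log Q_u$ then bounds the sum by $\frac{\eta_r}{1-\beta_1}\log\frac{B_g^2+\eps^2}{(1-\beta_2)\eps^2}$. In this Abel-summation argument the oscillations cancel and only the log-range survives.

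The paper also restarts the second-moment floor $Q_k$ at time $r$, so that it has the same initial condition as the post-$r$ part of the first moment. It handles the pre-$r$ momentum separately through the factor $(\beta_1/\sqrt{\beta_2})^{t-r+1}$. A full-history version like yours could also work, but only with signed increments and a separate crude bound for the pre-window increments.
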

 For $\eps=0$, \citet{zhang2024adam} obtained a similar bound under the
additional condition $|g_0[j]|\geq\rho$ for every $j\in[d]$ and some
$\rho>0$, together with a $\rho$-dependent burn-in time. Because $\eps>0$
keeps the update well defined, \cref{lem:radius} permits arbitrary parameter initialization $w_0$,
applies to every interval $[r,T)$, and 
accounts for bias correction. The subtle point is controlling momentum carried into \([r,T)\) while keeping the leading coefficient \(1\); see App. \ref{app:radius} for details. 
Moreover, \cref{lem:radius,lem:uniform-D} rely on the condition $\beta_1\leq\beta_2$, which is also assumed in prior analyses of Adam's implicit bias \citep{zhang2024adam,baek2026implicit}. We assume that $0 \leq \beta_1 \leq \beta_2 < 1$ throughout the following sections. It remains open whether our results extend beyond this condition. 

Next, we introduce the gradient and sign residuals: 
 \begin{align*}
\lambda_t:=\frac{G_t}{\eps},
\qquad
 r_t^{\rm sgn}
:=\frac{\norm{g_t\eprod(D_t-\sign(g_t))}_1}{\norm{g_t}_1},
\qquad
 r_t^{\rm grad}
:=\frac{\norm{\eps D_t-g_t}_2}{\norm{g_t}_2}. 
 \end{align*}
The order parameter $\lambda_t$ separates the two regimes. Under the heuristic
$D_t\approx\sigma_\eps(g_t)$, the bound $\norm{g_t}_1\leq R_1G_t$ gives
$|g_t[j]|/\eps\leq R_1\lambda_t$ for every $j\in[d]$. Thus, when
$\lambda_t\ll1$, every coordinate is small relative to $\eps$ and
$D_t\approx g_t/\eps$. When $\lambda_t\gg1$, some coordinates may remain
small, so coordinatewise closeness to $\operatorname{sign}(g_t)$ need not
hold. We instead measure sign-likeness through relative first-order loss
decrease. Using \eqref{eq:main_soft_sign} and
$\norm{g_t}_1\geq\gamma_\infty G_t=\gamma_\infty\eps\lambda_t$, we have 
$
    1-\frac{\langle g_t,\sigma_\eps(g_t)\rangle}{\norm{g_t}_1}
    \leq \frac{d}{\gamma_\infty\lambda_t}$.
Hence, as $\lambda_t\to\infty$, the soft-sign direction achieves the same
first-order loss decrease as sign descent. This motivates $r_t^{\mathrm{sgn}}$,
since
$ \left|\frac{\langle g_t,D_t\rangle}{\norm{g_t}_1}-1\right|
    \leq r_t^{\mathrm{sgn}}$,
whereas $r_t^{\mathrm{grad}}$ directly measures the discrepancy between $D_t$
and $g_t/\eps$. See \cref{prop:local-interpolation_app} in App. \ref{app:res} for a general characterization of these residuals.

Finally, we establish an $\eps$-uniform reference
time $T_{*}$ after which every trajectory lies in the low-loss regime.
\cref{prop:uniform-clock} shows that $G_t$ and $L_t$ are uniformly comparable for
$t\geq T_{*}$, with $G_{T_{*}}\geq g_{*}>0$. Since the crossover
thresholds defined below vanish as $\eps\downarrow0$ and $G_t\to0$, each threshold lies below
$G_{T_{*}}$ for all sufficiently small $\eps$ and is reached at a finite time after $T_{*}$.

\begin{localproposition}
\label{prop:uniform-clock} 
In the same setting as \cref{lem:radius},
the loss and the proxy converge to zero. Moreover, there exist a time $T_* > t_{\rm ema}$ and a constant $g_*>0$ such that for every $\eps\in(0,1]$, we have  $G_{T_*}\ge g_*$, and $L_t\le\nicefrac{\log2}{n}$ and $\nicefrac12L_t\le G_t\le L_t$ for every $t \ge T_{*}$.
\end{localproposition}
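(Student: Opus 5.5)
The plan is to run the basic loss recursion of \cref{prop:master-descent} with constants that do not depend on $\eps\in(0,1]$. I will first check that $t_{\rm ema}$ and $C_{\rm ema}$ from \cref{lem:alignment} can be taken $\eps$-uniform for $\eps\le 1$, since they come from the EMA tracking bounds of \cref{lem:first-track,lem:second-track} and $\norm{D_t}_\infty\le C_D$. For the first-order term I use the second inequality in \eqref{eq:main_soft_sign} together with $\gamma_\infty G_t\le\norm{g_t}_1\le R_1G_t$ and $\eps\le1$. This gives, for $t\ge t_{\rm ema}$,
\begin{equation*}
L_{t+1}\le L_t-\eta_t\,\frac{\gamma_\infty^2G_t^2}{R_1G_t+d}+C\eta_t^2G_t ,
\end{equation*}
with $C$ independent of $\eps$. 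The pointwise relations between $G$ and $L$ are elementary facts about $\ell$. First, $-\ell'(u)=1/(1+e^{u})\le\log(1+e^{-u})=\ell(u)$ for all $u$, so $G\le L$ always. Second, if $L\le\log 2/n$ then every $\ell(z_i^\top w)\le\log2$, so every $z_i^\top w\ge0$. On $u\ge0$ we have $\ell(u)\le e^{-u}\le 2e^{-u}/(1+e^{-u})=-2\ell'(u)$, which gives $\tfrac12L\le G$.

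\emph{Convergence.} I sum the recursion. Since $G\le L$ and the increment is at most $C\eta_t^2G_t$, any growth of $L$ is controlled, and $\sum\eta_t=\infty$ forces $\liminf G_t=0$. Hence $\liminf L_t=0$, since small $G$ forces all margins large. To upgrade this to full convergence I will show that once $L_t$ falls below the threshold $\log2/n$ at a late time, it stays there and is eventually nonincreasing. There $G_t\ge L_t/2$, and the first-order decrease $\asymp\eta_tG_t^2$ must be compared with the error $C\eta_t^2G_t$.

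\emph{Uniform reference time.} I define $T_*$ as an $\eps$-independent time such that $L_{T_*}\le\tfrac12\log 2/n$ for every $\eps\in(0,1]$. It is obtained by bounding the first hitting time of the $\eps$-uniform recursion above: as long as $L_t\ge\tfrac12\log2/n$, the quantity $G_t$ is bounded below by a positive constant, so the decrease $\gtrsim\eta_t$ beats $C\eta_t^2$ once $\eta_t$ is small. Divergence of $\sum\eta_t$ then bounds the hitting time uniformly. The lower bound $G_{T_*}\ge g_*$ comes from bounded movement. By \cref{lem:uniform-D}, $\norm{w_{T_*}-w_0}_\infty\le C_D\sum_{s<T_*}\eta_s$, so every margin satisfies $z_i^\top w_{T_*}\le R_1\bigl(\norm{w_0}_\infty+C_D\sum_{s<T_*}\eta_s\bigr)=:M$. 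Therefore $G_{T_*}\ge 1/(1+e^{M})=:g_*$, which is independent of $\eps$.

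\emph{Main obstacle.} The hard step is keeping $L_t\le\log2/n$ for all $t\ge T_*$ uniformly in $\eps$. The error $C\eta_t^2G_t$ dominates the decrease $\eta_tG_t^2$ once $G_t\lesssim\eta_t$, and $\prod(1+C\eta_t^2)$ need not converge for $a\le1/2$. My first attempt is a two-regime argument. While $G_t\ge c\eta_t$ the loss strictly decreases. While $G_t<c\eta_t$ I bound the loss multiplicatively over stretches using $L(w')\le L(w)\exp\bigl(R_1\norm{w'-w}_\infty\bigr)$, with the displacement over $[r,T)$ controlled by \cref{lem:radius}. That lemma gives $\sum\eta_t$ plus a lower-order term whose $\log(1/\eps)$ must be absorbed because $\eta_r$ is small. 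If this fails, I would replace the crude Taylor remainder by a self-bounded one, $O(\eta_t^2L_t)$ times a factor depending on $\norm{D_t}$, and combine it with the fact that in the gradient-like regime $\ip{g_t}{D_t}\approx\norm{g_t}_2^2/\eps$ absorbs the error. Choosing $T_*$ larger, so that the leftover slack from $\tfrac12\log2/n$ absorbs the accumulated growth, then closes the argument.
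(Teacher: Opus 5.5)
\textbf{Verdict: genuine gap.} The step you flag as the main obstacle is where the proof actually lives. As written, neither of your plans closes it.

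\emph{Why your first attempt fails.} There are two reasons.
\begin{enumerate}
\item On stretches where $G_t<c\eta_t$, the bound $L(w')\le L(w)\exp(R_1\norm{w'-w}_\infty)$ throws away the descent. It charges a growth factor of order $\exp(R_1\sum_t\eta_t)$ over the stretch, and such stretches need not be short. For $a<1/2$, in the gradient-like phase one has $G_t\asymp\eps/(1+V_t)$ with $V_t\asymp t^{1-a}$ (\cref{prop:bounded_euc}), while $\eta_t\asymp t^{-a}$. Hence $G_t<c\eta_t$ for all large $t$.
\item The term $C_{\rm rad}\eta_r\log\frac{R_1^2+\eps^2}{(1-\beta_2)\eps^2}$ from \cref{lem:radius} cannot be absorbed uniformly over $\eps\in(0,1]$ at any fixed $r$.
\end{enumerate}

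\emph{Your fallback.} It can be completed, but you leave out what is needed. The EMA alignment error sits inside $C_{\rm md}$ with the same bad form $\eta_t^2G_t$, and you only mention the Taylor term. You would need to re-bound it, e.g.\ $|\ip{g_t}{D_t-\sigma_\eps(g_t)}|\le R_2C_{\rm ema}\eta_tG_t^2/\eps$ via Cauchy--Schwarz and \cref{lem:alignment}. You would also need to sharpen the Taylor term with $\norm{D_t}_2\le(R_2+C_{\rm ema}\eta_0)G_t/\eps$. Then you must check that in every regime, including $G_t\approx\eps$, the decrease $\gtrsim\eta_tG_t\min\{1,G_t/\eps\}$ dominates errors $\lesssim\eta_t^2G_t\min\{1,G_t/\eps\}$.

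\emph{Knock-on gaps.} For the same reason, your claim in the convergence paragraph that ``growth of $L$ is controlled'' is unsupported when $a\le 1/2$. Also, bounding the ($\eps$-dependent) first hitting time of $\tfrac12\log2/n$ does not give $L_{T_*}\le\tfrac12\log2/n$ at a deterministic $T_*$ without exactly this persistence.

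\emph{The missing idea.} It is additive, and it is where the hypothesis $a>1/3$ enters; your proposal never uses it. With $\kappa:=\gamma_\infty^2/(R_1+d)$, Young's inequality gives $C\eta_t^2G_t\le\frac{\kappa}{2}\eta_tG_t^2+\frac{C^2}{2\kappa}\eta_t^3$. So for $t\ge t_{\rm ema}$,
\[
L_{t+1}\le L_t-\tfrac{\kappa}{2}\eta_tG_t^2+K\eta_t^3,\qquad \textstyle\sum_t\eta_t^3<\infty\ \text{since } 3a>1.
\]
Hence $\Phi_t:=L_t+K\sum_{s\ge t}\eta_s^3$ is nonincreasing, with $\Phi_{t+1}\le\Phi_t-\frac{\kappa}{2}\eta_tG_t^2$. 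This gives $\sum_t\eta_tG_t^2<\infty$ and convergence of $L_t$, hence $L_t\to0$ by your $\liminf$ argument.

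It also gives the uniform clock, with $h:=\log2/n$:
\begin{enumerate}
\item $\Phi_{t_{\rm ema}}\le B$ uniformly in $\eps$, by bounded movement.
\item Choose $t_1$ with $K\sum_{s\ge t_1}\eta_s^3\le h/2$.
\item While $t\ge t_1$ and $\Phi_t>h$, one has $L_t>h/2$. So $G_t\ge\bar g>0$ and $\Phi$ drops by $\frac{\kappa}{2}\bar g^2\eta_t$.
\item Choose $T_*$ with $\frac{\kappa}{2}\bar g^2\sum_{t_1\le t<T_*}\eta_t>B$. Then $\Phi_\tau\le h$ for some $\tau\le T_*$.
\item Monotonicity gives $L_t\le\Phi_t\le h$ for all $t\ge T_*$ and every $\eps\in(0,1]$.
\end{enumerate}

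\emph{What is fine.} Your pointwise comparisons $\tfrac12L\le G\le L$ are correct. Your bounded-movement lower bound $G_{T_*}\ge g_*$ is also correct.
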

\section{Main Results}
\label{sec:local}

We introduce three stopping times that mark the geometric transition of the update. Recall $\ell_\eps:=\log(1/\eps)$. Fix $b,\delta\in(0,1)$, and let $T_*$ be the time from \cref{prop:uniform-clock}. For $\eps \in (0,1)$, we define
\begin{align*}
    T_1^-(\epsilon;b)
    :=
    \inf\left\{
        t\ge T_*:
        G_t
        \le
        \epsilon\ell_\epsilon^b
    \right\}, \quad
    T_1^{\delta}(\epsilon)
    :=
    \inf\left\{
        t\ge T_*:
        G_t
        \le
        \delta\epsilon
    \right\},
\end{align*}
and
\[
    T_1^+(\epsilon;b)
    :=
    \inf\left\{
        t\ge T_*:
        G_t
        \le
        \epsilon\ell_\epsilon^{-b}
    \right\}.
\]
Here, $\ell_\eps^b\to\infty$ and $\ell_\eps^{-b}\to0$ as
$\eps\downarrow0$. For simplicity, we write $T_1^{-} := T_1^-(\epsilon;b)$, $T_1 := T_1^{\delta}(\eps)$, and $T_1^{+} := T_1^+(\epsilon;b)$. These stopping times correspond to
$\lambda_t=G_t/\eps$ first falling below the levels
$\ell_\eps^b$, $\delta$, and $\ell_\eps^{-b}$, respectively.
 We have the ordering $T_1^{-} < T_1 < T_1^{+}$ for sufficiently small $\eps$, as shown by \cref{prop:time_bracket_app} in App. \ref{app:res}. Thus, $T_1^-$ and $T_1^+$ bracket the update crossover, while
$T_1=T_1^\delta(\eps)$ is the reference crossover time. \cref{thm:residual_nonasymp} shows that for small $\eps$, Adam's local update is sign-like at $T_1^{-}$ and gradient-like at $T_1^{+}$, as indicated by the corresponding small sign and gradient residuals. 
Moreover, at $T_1$, the gradient residual is bounded by $O(\delta)$
plus a term that vanishes with $\eps$. \cref{cor:residual_bound_app} in App. \ref{app:transition} strengthens this
conclusion: for suitably chosen $\delta$ and sufficiently small $\eps$,
every $t \geq T_1$ satisfies
$
r_t^{\mathrm{grad}} \leq p$ and 
$r_t^{\mathrm{sgn}} \geq q$,
where $p,q \in (0,1)$ and $p+q=1$. Thus, $T_1$ marks permanent entry
into the gradient-like regime. 

\begin{theorem}
\label{thm:residual_nonasymp} For any $b,\delta \in (0,1)$, there exists a constant $\Tilde{\eps}_{\rm res}(b,\delta)$ such that for every $0 < \eps \leq \Tilde{\eps}_{\rm res}(b,\delta)$, we have 
    \[
\max\left\{
r_{T_1^-(\epsilon;b)}^{\mathrm{sgn}},
r_{T_1^+(\epsilon;b)}^{\mathrm{grad}}
\right\}
=
\begin{cases}
O\!\left(
\ell_\epsilon^{-b}
+
\ell_\epsilon^{-\frac{a}{1-a}}
\right),
& a\in(1/3,1),\\[2mm]
O\!\left(\ell_\epsilon^{-b}\right),
& a=1.
\end{cases}
\]
Denote $K_0 = R_1 C_D$. We also have 
\[
r_{T_1^\delta(\epsilon)}^{\mathrm{grad}}
\leq
\frac{R_1\delta}{1+R_1\delta}
+
\begin{cases}
O\!\left(
\ell_\epsilon^{-\frac{a}{1-a}}
\right),
& a\in(1/3,1),\\[2mm]
O\!\left(
\epsilon^{\frac{1}{2K_0\eta_0}}
\right),
& a=1.
\end{cases}
\]
\end{theorem}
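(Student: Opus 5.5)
The plan is to reduce every residual bound to two ingredients: the soft-sign approximation error from \cref{lem:alignment}, and the deviation of $\sigma_\eps(g_t)$ from either $\sign(g_t)$ or $g_t/\eps$. The latter is controlled through $\lambda_t$ at the relevant stopping time. The error $C_{\rm ema}\eta_tG_t$ must then be measured against $\norm{g_t}_1$ or $\eps\norm{g_t}_2$, which requires estimating $\eta_t$ at $t=T_1^{\pm},T_1$ in terms of $\ell_\eps$.

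\textbf{Sign residual at $T_1^-$.} Use the triangle inequality
\[
\norm{g_t\eprod(D_t-\sign(g_t))}_1\le \norm{g_t\eprod(D_t-\sigma_\eps(g_t))}_1+\norm{g_t\eprod(\sigma_\eps(g_t)-\sign(g_t))}_1 .
\]
The first term is at most $C_{\rm ema}\eta_tG_t$ by \cref{lem:alignment}. For the second, each coordinate contributes $|g[j]|\eps/(|g[j]|+\eps)\le\eps$, so the term is at most $d\eps$. Dividing by $\norm{g_t}_1\ge\gamma_\infty G_t$ gives
\[
r_t^{\rm sgn}\le \frac{C_{\rm ema}\eta_t}{\gamma_\infty}+\frac{d}{\gamma_\infty\lambda_t}.
\]
At $T_1^-$ we need a lower bound $\lambda\gtrsim\ell_\eps^b$. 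By definition $\lambda_{T_1^-}\le\ell_\eps^b$, so the lower bound has to come from the one-step ratio $G_{t+1}/G_t$. Since $\norm{w_{t+1}-w_t}_\infty\le\eta_tC_D$, each term in $G$ changes by at most a factor $e^{\pm R_1C_D\eta_t}=e^{\pm K_0\eta_t}$. Hence $\lambda_{T_1^-}\ge e^{-K_0\eta_0}\ell_\eps^b$, which gives the $\ell_\eps^{-b}$ term. The same one-step comparison also handles the case $T_1^-=T_*$ for small $\eps$, since $G_{T_*}\ge g_*$.

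\textbf{Gradient residual at $T_1^+$ and $T_1$.} Write $\eps D_t-g_t=\eps(D_t-\sigma_\eps(g_t))+(\eps\sigma_\eps(g_t)-g_t)$. The first piece is at most $C_{\rm ema}\eta_tG_t$ in $\ell_2$ by \cref{lem:alignment}. The second has coordinates $g[j]|g[j]|/(|g[j]|+\eps)$, so its $\ell_2$ norm is at most $\frac{\norm{g}_\infty}{\norm{g}_\infty+\eps}\norm{g}_2$. Since $\norm{g_t}_\infty\le\norm{g_t}_1\le R_1G_t=R_1\eps\lambda_t$, dividing by $\norm{g_t}_2\ge\gamma_2G_t$ yields
\[
r_t^{\rm grad}\le \frac{R_1\lambda_t}{1+R_1\lambda_t}+\frac{C_{\rm ema}\eta_t}{\gamma_2}.
\]
At $T_1^+$ we have $\lambda\le\ell_\eps^{-b}$, giving $O(\ell_\eps^{-b})$. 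At $T_1$ we have $\lambda\le\delta$, which gives exactly $R_1\delta/(1+R_1\delta)$.

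\textbf{Estimating $\eta_t$ at the stopping times (main obstacle).} It remains to lower bound $T_1^-$ (and hence $T_1,T_1^+$) so that $\eta_{T}$ is small. The key fact is that $G_t\le\eps\ell_\eps^b$ forces $\rho(w_t)\gtrsim\log(1/\eps)$, because $G_t\ge\frac1n e^{-\rho(w_t)}/2$ in the low-loss regime of \cref{prop:uniform-clock}. On the other hand, $\rho(w_t)\le\rho(w_{T_*})+R_1\norm{w_t-w_{T_*}}_\infty$, and by \cref{lem:radius} this is at most $R_1\sum_{s<t}\eta_s+O(\log(1/\eps))$ after bounding the logarithmic term.

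The difficulty is that the additive $C_{\rm rad}\eta_r\log(1/\eps^2)$ term is also of order $\ell_\eps$. I would try to handle it by applying \cref{lem:radius} from a large restart time $r$ where $\eta_r$ is small, or simply by using the crude bound $\norm{w_t-w_0}_\infty\le C_D\sum\eta_s$, which suffices up to constants. This gives $\sum_{s<t}\eta_s\ge c\,\ell_\eps$, i.e.\ $t^{1-a}\gtrsim\ell_\eps$ for $a<1$. Then $\eta_t\lesssim t^{-a}\lesssim\ell_\eps^{-a/(1-a)}$.

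For $a=1$, the cumulative sum gives $\eta_0\log t\ge\ell_\eps/K_0$ roughly, so $t\ge\eps^{-1/(K_0\eta_0)}$ and $\eta_t\lesssim\eps^{1/(K_0\eta_0)}$. I would aim for the stated half-exponent $\eps^{1/(2K_0\eta_0)}$ to absorb the factor $2$ in $G\ge\frac12L$ and the constant $n$, and to allow $\eps$ small enough that $\rho(w_{T_*})$ is negligible. This polynomial decay is dominated by $\ell_\eps^{-b}$ at $T_1^\pm$.

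Finally, $\Tilde\eps_{\rm res}$ is chosen so that $T_1^->T_*$, $T_1^->t_{\rm ema}$, and all constants above are absorbed.
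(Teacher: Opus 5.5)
Your proof is correct and follows the paper's argument. Your residual bounds are exactly the upper bounds of \cref{prop:local-interpolation_app}, and the lower bound $\lambda_{T_1^-}\gtrsim\ell_\eps^b$ comes from the same one-step ratio argument. The step sizes at the three stopping times are controlled in the same way: $S_{T_1^-}\ge \ell_\eps/(2K_0)$ is combined with $S_t\le \frac{\eta_0}{1-a}t^{1-a}$ (for $a<1$) or $S_t\le\eta_0(1+\log t)$ (for $a=1$), and then transferred to $T_1$ and $T_1^+$ by monotonicity of $\eta_t$.

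The only difference is how you obtain $S_{T_1^-}\gtrsim\ell_\eps/K_0$. You go through the margin, using the loss--margin inequality, $\rho(w)\le R_1\norm{w}_\infty$, and the crude bound $\norm{w_t-w_0}_\infty\le C_D S_t$. The paper instead reads it off in one line from $G_0e^{-K_0S_{T_1^-}}\le G_{T_1^-}\le\eps\ell_\eps^b$, via \cref{lem:proxy-path-ratio} with $r=0$. Both give the same constant $K_0=R_1C_D$ and hence the same exponent $\frac{1}{2K_0\eta_0}$. Your detour through \cref{lem:radius} is unnecessary here: the crude displacement bound is exactly what this theorem requires.
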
 

We next turn to the classifier geometry, beginning with a few key definitions: 
\begin{align*}
    \cM_{\infty} := \{u: \norm{u}_{\infty} \leq 1, \rho(u) = \gamma_{\infty} \}, \quad \gamma_{2|\infty} := \max_{u \in \cM_{\infty}} \frac{\rho(u)}{\norm{u}_2}, \qquad \Delta_{\rm geom} := \gamma_2 - \gamma_{2|\infty}.
\end{align*}
Here, $\gamma_{2\mid\infty}$ is the largest Euclidean-normalized margin
attained by a max-norm-optimal classifier. Hence,
$\Delta_{\mathrm{geom}}\geq0$ measures the separation between the two optimal
geometries. 
The condition $\Delta_{\mathrm{geom}}>0$ rules out any direction that is optimal under both norm geometries. The following theorem shows that, for small $\eps$, the $\ell_\infty$-margin gap is small at $T_1^{-}$, $T_1$, and $T_1^{+}$ despite the change in update geometry, whereas the $\ell_2$-margin gap is bounded below by $\Delta_{\mathrm{geom}}$ up to a vanishing error.
\begin{theorem} \label{thm:margin_main} Let
$\mathcal{T}_\epsilon(b,\delta)
:=
\left\{
T_1^-(\epsilon;b),\,
T_1^\delta(\epsilon),\,
T_1^+(\epsilon;b)
\right\}$. 
For any $\delta,b \in (0,1)$, 
there exists a constant $\Tilde{\eps}_2(\delta,b)$ such that for every $0< \eps \leq \Tilde{\eps}_2(\delta,b)$ and every
\(T\in\mathcal{T}_\epsilon(b,\delta)\), we have 
\[
\gamma_\infty-\widehat{\gamma}_\infty(w_T)
\leq
O(r_\epsilon), \qquad 
\gamma_2-\widehat{\gamma}_2(w_T)
\geq
\Delta_{\mathrm{geom}}
- O(r_\epsilon), \qquad r_\eps :=
\begin{cases}
\ell_\eps^{-a}, & a\in(1/3,1),\\[2pt]
\nicefrac{\log \ell_\eps}{\ell_\eps}, & a=1.
\end{cases}
\]
\end{theorem}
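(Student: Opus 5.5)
The proof reduces to two estimates at each $T\in\mathcal{T}_\eps(b,\delta)$. The first is a lower bound on the unnormalized margin $\rho(w_T)$ coming from the loss. The second is an upper bound on $\norm{w_T}_\infty$ coming from \cref{lem:radius}. Their ratio gives the $\ell_\infty$ statement. The $\ell_2$ statement then follows from a compactness/stability argument around $\cM_\infty$.

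\textbf{Step 1: lower bound on $\rho(w_T)$.} For $t\ge T_*$, \cref{prop:uniform-clock} gives $L_t\le \log 2/n$, and $\ell(u)\ge \tfrac12 e^{-u}$ for $u\ge 0$. Hence
\[
\rho(w_T)\ge \log\frac{1}{nL_T}-O(1)\ge \log\frac{1}{G_T}-O(1).
\]
At each of the three stopping times, $G_T$ is at most $\eps\ell_\eps^{b}$. It also stays within a constant factor of the threshold, since one step changes $G$ by at most a factor $e^{O(\eta_t)}$. So $\rho(w_T)\ge \ell_\eps - b\log\ell_\eps - O(1)$.

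\textbf{Step 2: upper bound on $\norm{w_T}_\infty$.} This is the sharper estimate. Starting from $w_{T_*}$, I would bound $L$ from below along the trajectory using the descent recursion (\cref{prop:master-descent}) together with \eqref{eq:main_soft_sign}. While $\lambda_t\gg 1$, we have $\ip{g_t}{D_t}\approx\norm{g_t}_1\le R_1 G_t$, and this is tied to the margin via $\gamma_\infty$. The cleaner route, I think, is a ``first clock'' argument (the forthcoming \cref{thm:first-clock}): the loss can drop only as fast as $\exp(-\gamma_\infty\cdot\text{(max-norm distance traveled)})$. Conversely, reaching $G_T\approx\eps$ forces $\sum_{t<T}\eta_t\le \ell_\eps/\gamma_\infty + o(\ell_\eps)$.

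Concretely, I would combine two facts. First, $\rho(w_{T})\le \gamma_\infty\norm{w_T}_\infty$ trivially. Second, $\log(1/G_t)$ grows at rate at least $\gamma_\infty\eta_t$ minus errors. The soft-sign loss decrease $\ge \eta_t(\norm{g_t}_1-d\eps-O(\eta_tG_t))\ge \eta_t G_t(\gamma_\infty - d/\lambda_t - O(\eta_t))$ implies $\log(1/L_T)\ge \gamma_\infty\sum_{t<T}\eta_t - O(\sum \eta_t^2) - (\text{tail from }\lambda_t\lesssim 1)$. Thus the time to reach $G\approx\eps\ell_\eps^{\pm b}$ satisfies $\sum_{t<T}\eta_t\le \ell_\eps/\gamma_\infty + E_\eps$. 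Plugging this into \cref{lem:radius} with $r=T_*$ and $B_g=R_\infty$ gives
\[
\norm{w_T}_\infty\le \norm{w_{T_*}}_\infty+\frac{\ell_\eps}{\gamma_\infty}+E_\eps+C\eta_{T_*}\log(1/\eps).
\]
The last term is dangerous because it is of order $\ell_\eps$. To fix this, I would instead apply \cref{lem:radius} with restart $r$ chosen late, where $\eta_r$ is small, and bound the early segment crudely by $C_D\sum_{t<r}\eta_t$. Choosing $r$ so that $\eta_r\ell_\eps$ is $o(\ell_\eps)$ while $\sum_{t<r}\eta_t=o(\ell_\eps)$ yields the relative error $r_\eps$.

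The error bookkeeping depends on the stepsize regime. For $a<1$ we have $T\asymp\ell_\eps^{1/(1-a)}$ and $\eta_T\ell_\eps\asymp\ell_\eps^{1-a/(1-a)\cdot(1-a)}$. Balancing gives the $\ell_\eps^{-a}$ rate. For $a=1$, $\sum\eta_t\asymp\log T$, and the $\log\ell_\eps/\ell_\eps$ rate comes from the $b\log\ell_\eps$ slack together with $\sum\eta_t^2=O(1)$.

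Dividing Step 1 by Step 2 gives $\widehat\gamma_\infty(w_T)\ge\gamma_\infty(1-O(r_\epsilon))$. This lower bound on the margin direction is the main obstacle: it needs sharp control of $\sum_{t<T}\eta_t$ with leading coefficient exactly $1/\gamma_\infty$, including the regime $\lambda_t=O(1)$ between $T_1^-$ and $T_1^+$. Under the update in that regime, $\ip{g}{D}$ may be much smaller than $\norm{g}_1$. That is harmless, because it only slows the loss decrease. The $\ell_\infty$ movement is still controlled by $\sum\eta_t$, but I must check that this window adds only $o(\ell_\eps)$ to $\sum\eta_t$. I would use that $G$ must only drop by a $\ell_\eps^{2b}$ factor there, and that $\ip{g}{\sigma_\eps(g)}\ge \norm{g}_1^2/(\norm g_1+d\eps)\gtrsim G\lambda$. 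The resulting cost in $\sum\eta$ is $O(\ell_\eps^{b}\log\ell_\eps)$, which is $o(\ell_\eps)$ since $b<1$.

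\textbf{Step 3: the $\ell_2$ statement.} Let $u=w_T/\norm{w_T}_\infty$, so $\norm{u}_\infty=1$ and $\rho(u)\ge\gamma_\infty-O(r_\eps)$. By a Hoffman-type error bound for the polyhedron $\{u:\norm{u}_\infty\le 1,\ z_i^\top u\ge\gamma_\infty\ \forall i\}=\cM_\infty$, there is $u^*\in\cM_\infty$ with $\norm{u-u^*}_2\le C_H\,O(r_\eps)$. Since $\widehat\gamma_2$ is Lipschitz near $\cM_\infty$, whose elements have norm bounded below by $\gamma_\infty/R_1$, we get $\widehat\gamma_2(w_T)=\widehat\gamma_2(u)\le\widehat\gamma_2(u^*)+O(r_\eps)\le\gamma_{2|\infty}+O(r_\eps)$. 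This gives $\gamma_2-\widehat\gamma_2(w_T)\ge\Delta_{\rm geom}-O(r_\eps)$. The threshold $\tilde\eps_2$ is taken so that all stopping times exceed $T_*$, are ordered, and the restart choice in Step 2 is valid.
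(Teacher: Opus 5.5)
Steps 1 and 3 follow the paper's argument: a loss–margin lower bound on $\rho$, then Hoffman plus Lipschitz continuity of $\rho(\cdot)/\norm{\cdot}_2$. Bounding $\norm{w_T}_\infty$ directly at $T_1^-$ and $T_1^\delta$ via \cref{lem:radius}, with a restart at $r\asymp\ell_\eps$, is a fine variant of the paper's route, which treats $T_1^\delta$ first and transfers to $T_1^-$ with \cref{lem:margin_helper}.

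\textbf{The gap is at $T=T_1^+(\eps;b)$.} Your Step 2 still controls $\norm{w_{T_1^+}}_\infty$ by $\sum_{t<T_1^+}\eta_t$ plus restart terms. The post-crossover window contributes a genuine $\Theta(\ell_\eps^b)$ to that sum: by \cref{thm:first-clock}, $S_{T_1^+}-S_{T_1^\delta}=\Theta(\ell_\eps^b)$, so this is not slack in your $O(\ell_\eps^b\log\ell_\eps)$ estimate. Dividing by $\rho(w_{T_1^+})\approx\ell_\eps$ then gives only $\gamma_\infty-\widehat\gamma_\infty(w_{T_1^+})=O(r_\eps+\ell_\eps^{b-1})$. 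Checking that the window adds $o(\ell_\eps)$ proves convergence, not the stated rate. Indeed $\ell_\eps^{b-1}\gg\ell_\eps^{-a}$ whenever $b>1-a$, and $\ell_\eps^{b-1}\gg\log\ell_\eps/\ell_\eps$ for every $b\in(0,1)$ when $a=1$. The same deficit passes to the $\ell_2$ bound at $T_1^+$ through the Hoffman step. Since \cref{lem:radius} is inherently of order $\sum\eta_t$, sharper bookkeeping of $\sum\eta_t$ cannot repair this.

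\textbf{The missing idea.} After $T_1^\delta$ the update is gradient-like, so its size scales with $\lambda_t=G_t/\eps$ rather than with $1$; indeed $\eps\norm{D_t}_2\le(R_2+C_{\rm ema}\eta_0)G_t$. Displacement must therefore be measured by the effective stepsize $\theta_t=\eta_t\norm{g_t}_2/\eps$. Write $w_{t+1}=w_t-\theta_t(u_t+e_t)$ and use \cref{prop:bounded_euc}: $G_t\asymp\eps/(1+V_t)$, $H_t\asymp\log(1+V_t)$, and $\sum_s\theta_s\norm{e_s}_2\le E_0$. Together with $V_{T_1^+}=O(\ell_\eps^b)$, this gives $\norm{w_{T_1^+}-w_{T_1^\delta}}_2\le H_{T_1^+}+E_0=O(\log\ell_\eps)$. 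Since $\norm{w_{T_1^\delta}}_2\ge\norm{w_{T_1^\delta}}_\infty\gtrsim\ell_\eps$, \cref{lem:margin_helper} shows both normalized margins move by $O(\log\ell_\eps/\ell_\eps)=O(r_\eps)$. This is the content of \cref{prop:post_euc} and \cref{cor:margin_app_T1+}.

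\textbf{A smaller point.} \cref{thm:first-clock} only gives $S_{T_1^\delta}-S_{T_*}=\ell_\eps/\gamma_\infty+o(\ell_\eps)$, so you need a remainder of order $O(\omega_a(\ell_\eps))$. Your claim that the $d/\lambda_t$ errors matter only in the tail $\lambda_t\lesssim1$ also needs proof. Using $d/\lambda_t\le d\ell_\eps^{-b}$ uniformly before $T_1^-$ costs $O(\ell_\eps^{1-b})$, which is $O(\omega_a(\ell_\eps))$ only if $b\ge a$ and never when $a=1$. The paper decouples this from $b$ with the auxiliary threshold $\eps\Lambda_\eps$, for which $\ell_\eps/\Lambda_\eps$ and $\log\Lambda_\eps$ are both $O(\omega_a(\ell_\eps))$. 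Alternatively, while $\lambda_t$ exceeds a large constant the loss decays exponentially in $S_t$, which makes $\sum_t\eta_t\,d\eps/L_t$ a geometric-type sum of size $O(1)$.
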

    We discuss four ingredients in the proof of \cref{thm:margin_main}: \textbf{1. How to obtain the constant $\gamma_\infty$; 2. How to derive the sharp weight-norm bound; 3. How to establish the lower bound on the $\ell_2$-margin gap; 4. How to transfer the results at $T_1$ to $T_1^-$ and $T_1^{+}$}; see App. \ref{app:margin} for details. 
    Since the $\ell_\infty$-margin is already nearly optimal at the post-crossover time $T_1^{+}$, we then ask \textbf{how much additional training is required to attain a prescribed Euclidean-margin accuracy}. Importantly, addressing these questions clarifies how the various results of this work fit together.
\paragraph{1. The $\gamma_{\infty}$ constant} 
We first discuss why the direct application of the bounds in \eqref{eq:main_soft_sign} fails. The first bound in \eqref{eq:main_soft_sign} gives
$\left\langle
g_t,\sigma_{\epsilon}(g_t)
\right\rangle
\geq
\|g_t\|_1-d\epsilon
\geq
\left(
\gamma_\infty-\nicefrac{d\epsilon}{G_t}
\right)G_t$. If we worked only with $T_1$, then we have $G_t>\delta\epsilon$ for
$t< T_1$, which leads to 
$\nicefrac{d\epsilon}{G_t}<\nicefrac{d}{\delta}$.
Consequently, we only obtain
$
\left\langle g_t,\sigma_\epsilon(g_t)\right\rangle
\ge
\left(\gamma_\infty-\nicefrac{d}{\delta}\right)G_t$.
The coefficient may be nonpositive and, even when positive, does
not converge to $\gamma_\infty$ as $\epsilon\downarrow0$. Thus,
this estimate cannot recover the sharp bound
$S_{T_1}-S_{T_*}
\le
\nicefrac{\ell_\epsilon}{\gamma_\infty}
+\text{lower-order terms}$.
We could use the other result in
\eqref{eq:main_soft_sign}:
$
\left\langle g_t,\sigma_\epsilon(g_t)\right\rangle
\ge
\kappa_\delta G_t$ where 
$\kappa_\delta:=
\nicefrac{\gamma_\infty^2}{(R_1+d/\delta)}$.
However, $\kappa_\delta$ can be much smaller than
$\gamma_\infty$, yielding the leading term
$\ell_\epsilon/\kappa_\delta$ instead of the sharp term
$\ell_\epsilon/\gamma_\infty$.

To obtain a refined analysis of the trajectory up to $T_1$, we define another stopping time  
\[\bar{T}_{\epsilon}
:=
\inf\left\{
t\geq T_*:G_t\leq\epsilon\Lambda_{\epsilon}
\right\},\]
where $\Lambda_{\epsilon}$ (to be specified next) satisfies $\Lambda_{\epsilon}\to\infty$ and 
$\epsilon\Lambda_{\epsilon}\to 0$. Importantly, $\bar{T}_{\epsilon}$ is an auxiliary proof device rather than a new transition time.
For sufficiently small $\epsilon$, we have
$\Lambda_{\epsilon}>\delta$ and consequently
$\bar{T}_{\epsilon}\leq T_1$. Therefore, the use of $\bar{T}_{\epsilon}$ splits the
trajectory into
\[
\underbrace{
T_*\leq t<\bar{T}_{\epsilon}
}_{\text{Dominant part}}
\qquad+\qquad
\underbrace{
\bar{T}_{\epsilon}\leq t<T_1
}_{\text{Short transition period}}.
\]
Its purpose is
to isolate the part of the trajectory where the first bound in \eqref{eq:main_soft_sign} is sufficiently accurate to recover the exact leading coefficient $\nicefrac{1}{\gamma_\infty}$.  

Concretely, if we consider $t<\bar{T}_{\epsilon}$, we have $G_t>\epsilon\Lambda_{\epsilon}$ and therefore $\nicefrac{d\epsilon}{G_t}
<
\nicefrac{d}{\Lambda_{\epsilon}}
=
o(1)$.  Consequently, it holds that
$
\left\langle
g_t,\sigma_{\epsilon}(g_t)
\right\rangle
\geq
\left(
\gamma_\infty-\nicefrac{d}{\Lambda_{\epsilon}}
\right)G_t$.
Therefore, the effective descent coefficient is
$\gamma_\infty-o(1)$, which preserves the sharp leading constant.
Summing the loss recursion gives
\[
S_{\bar{T}_{\epsilon}}-S_{T_*}
\leq
\frac{\ell_{\epsilon}}{\gamma_\infty}
+
O\left(
\frac{\ell_{\epsilon}}{\Lambda_{\epsilon}}
+
\sum_{t<\bar{T}_{\epsilon}}\eta_t^2
+
1
\right).
\]
Define
\[
\omega_a(\ell)
:=
\begin{cases}
\ell^{1-a},
& a\in(1/3,1),\\[1mm]
\log\ell,
& a=1.
\end{cases}, \qquad \Lambda_{\epsilon}
:=
\begin{cases}
\ell_{\epsilon}^{a},
& a\in(1/3,1),\\[2mm]
\ell_{\epsilon} / \log\ell_{\epsilon},
& a=1. 
\end{cases}
\]
We have $\nicefrac{\ell_{\epsilon}}{\Lambda_{\epsilon}} = \omega_a(\ell_{\epsilon})$. Using $\sum_{t<\bar{T}_{\epsilon}}\eta_t^2 =
O\bigl(\omega_a(\ell_{\epsilon})\bigr)$, we obtain
$
    S_{\bar{T}_{\epsilon}}-S_{T_*}
\leq
\nicefrac{\ell_{\epsilon}}{\gamma_\infty}
+
O\bigl(\omega_a(\ell_{\epsilon})\bigr)$.

Finally, between $\bar{T}_{\epsilon}$ and $T_1$, the gradient
proxy only needs to decrease from
$G_t\asymp\epsilon\Lambda_{\epsilon}$ to 
$G_t\asymp\delta\epsilon$. This corresponds to a multiplicative factor of order
$\Lambda_{\epsilon}/\delta$. The weaker but uniformly positive
descent coefficient $\kappa_\delta$ is sufficient over this
shorter interval and gives
$S_{T_1} - S_{\bar{T}_{\epsilon}}
=
O_\delta\bigl(\log\Lambda_{\epsilon}\bigr)$.
The choice of $\Lambda_{\epsilon}$ ensures that
$\log\Lambda_{\epsilon}=
O\bigl(\omega_a(\ell_{\epsilon})\bigr)$.
Therefore, the transition period only contributes a lower-order term and we obtain the desired 
\begin{align}
    S_{T_1}-S_{T_*}
\leq
\frac{\ell_{\epsilon}}{\gamma_\infty}
+
O\bigl(\omega_a(\ell_{\epsilon})\bigr). \label{eq:s_tt_main}
\end{align}
A similar approach can be used to obtain the scaling behavior of $S_{T_1}$ stated in the following theorem. Its proof is simpler than that of \cref{thm:margin_main} and no additional stopping time is required. It shows that $S_{T_1^-} \sim S_{T_1} \sim S_{T_1^+} \sim (1/\gamma_{\infty}) \ell_{\eps}$ as $\eps \downarrow 0$. 
\begin{theorem}
\label{thm:first-clock} For every $b , \delta \in (0,1)$,  
we have $S_{T_1^{\delta}(\eps)}-S_{T_1^-(\eps;b)} = O(\log\ell_\eps)$, $S_{T_1^{+}(\eps;b)}-S_{T_1^{\delta}(\eps)} = \Theta(\ell_\eps^b)$, and $S_{T_1^{\delta}(\eps)}-S_{T_*} =\frac{1}{\gamma_\infty} \ell_{\eps} +o\!\left( \ell_{\eps} \right)$. 

\end{theorem}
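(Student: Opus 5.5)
We take $S_t:=\sum_{s<t}\eta_s$ to be the cumulative stepsize. Throughout, \cref{prop:uniform-clock} lets us identify $G_t$ with $L_t$ up to a factor $2$ for $t\ge T_*$. The plan is to get matching upper and lower bounds on each increment of $S$. Upper bounds come from summing the loss recursion of \cref{prop:master-descent} with a descent coefficient appropriate to each regime. Lower bounds come from showing the loss cannot drop faster than the displacement allows.

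\textbf{Third claim.} For the upper bound $S_{T_1}-S_{T_*}\le \ell_\eps/\gamma_\infty+o(\ell_\eps)$ we simply reuse \eqref{eq:s_tt_main}. For the matching lower bound, use $-\ell'(u)\ge e^{-u}/2$ for $u\ge 0$, which gives $G_t\ge e^{-\rho(w_t)}/(2n)$. Write $w_t=w_{T_*}+\Delta_t$ and pick the index minimizing $z_i^\top\Delta_t$. This yields $\rho(w_t)\le \max_i z_i^\top w_{T_*}+\rho(\Delta_t)\le C+\gamma_\infty\norm{\Delta_t}_\infty$, where the last step uses the definition of $\gamma_\infty$. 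Since $G_{T_1}\le\delta\eps$, we get $\gamma_\infty\norm{\Delta_{T_1}}_\infty\ge \ell_\eps-O(1)$, so it suffices to show $\norm{\Delta_{T_1}}_\infty\le S_{T_1}-S_{T_*}+o(\ell_\eps)$.

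Applying \cref{lem:radius} directly from $r=T_*$ is not enough, because the term $C_{\rm rad}\eta_{T_*}\log(\cdot)$ is $\Theta(\ell_\eps)$. Instead, we split at a restart time $r_\eps$ with $\eta_{r_\eps}\asymp \ell_\eps^{-1/2}$, i.e.\ $r_\eps\asymp\ell_\eps^{1/(2a)}$. On $[T_*,r_\eps)$ we use $\norm{D_t}_\infty\le C_D$, which costs $O(S_{r_\eps})=O(\ell_\eps^{(1-a)/(2a)})$. This is $o(\ell_\eps)$ exactly because $a>1/3$; for $a=1$ the cost is only $O(\log\ell_\eps)$. On $[r_\eps,T_1)$, \cref{lem:radius} gives leading coefficient $1$ plus an error $O(\eta_{r_\eps}\ell_\eps)=O(\ell_\eps^{1/2})$. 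The case $T_1<r_\eps$ is handled by the crude $C_D$ bound, which then shows $\ell_\eps=O(S_{r_\eps})$, a contradiction for small $\eps$.

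\textbf{First claim.} On $[T_1^-,T_1)$ we have $G_t\ge\delta\eps$. The uniform coefficient $\ip{g_t}{\sigma_\eps(g_t)}\ge\kappa_\delta G_t$, combined with the loss recursion, gives a geometric decay of $L_t$ at rate $\exp(-c\,\eta_t)$, up to $O(\eta_t^2)$ relative errors that are summable against this decay. The loss must fall by a factor $O(\ell_\eps^b/\delta)$, so $S_{T_1}-S_{T_1^-}=O_\delta(\log\ell_\eps)$.

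\textbf{Second claim.} On $[T_1,T_1^+)$ the update is gradient-like, with $\ip{g_t}{D_t}\ge \norm{g_t}_2^2/(\eps(1+R_1\delta))-C\eta_tG_t$ by \cref{lem:alignment}. Together with $\norm{g_t}_2\ge\gamma_2G_t$, the recursion becomes $L_{t+1}\le L_t-c\,\eta_tL_t^2/\eps+O(\eta_t^2L_t)$. Hence $1/L_t$ increases by at least $c\eta_t/\eps$ per step, which gives the upper bound $S_{T_1^+}-S_{T_1}\le C\eps\,(\ell_\eps^b/\eps)=O(\ell_\eps^b)$. For the lower bound, $\norm{D_t}_2\le \norm{g_t}_2/\eps+C\eta_tG_t/\eps$ and a second-order Taylor bound give $L_{t+1}\ge L_t-C\eta_tL_t^2/\eps$. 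Hence $1/L_t$ grows by at most $C'\eta_t/\eps$ per step, so $S_{T_1^+}-S_{T_1}\ge c\,\ell_\eps^b$.

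\textbf{Main obstacle.} The hard step is making the error term $O(\eta_t^2 L_t)$ negligible against $\eta_tL_t^2/\eps$. This requires $\eta_t\eps/L_t\lesssim\eta_t\ell_\eps^b$ to be small. Since $T_1\gtrsim\ell_\eps^{1/(1-a)}$, we have $\eta_t\lesssim\ell_\eps^{-a/(1-a)}$, which beats $\ell_\eps^{-b}$ only when $b<a/(1-a)$. If this fails, I would first try to absorb the error multiplicatively: write the increment of $1/L_t$ as $\eta_t/\eps-O(\eta_t^2/L_t)$ and sum the correction $\sum_t\eta_t^2/L_t$ using the already-established growth of $1/L_t$. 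This should yield a lower-order correction $O(\ell_\eps^{b}\cdot\max_{t\ge T_1}\eta_t)$.
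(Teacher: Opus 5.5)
\textbf{Gap in the upper bound of the second claim.} The gap is in the upper bound $S_{T_1^+}-S_{T_1}=O(\ell_\eps^b)$.

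Your one-step inequality $L_{t+1}\le L_t-c\,\eta_tL_t^2/\eps+C\eta_t^2L_t$ carries the error $O(\eta_t^2G_t)$ inherited from \cref{prop:master-descent}. That error comes from the inner-product alignment bound \eqref{eq:alignment-app} and from the Taylor remainder of \cref{lem:taylor}, which uses only $\norm{D_t}_\infty\le C_D$. Such an inequality certifies a decrease only while $L_t\gtrsim\eps\eta_t$, i.e.\ while $\lambda_t\gtrsim\eta_t$. Below that level it even permits $L_{t+1}>L_t$.

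To reach $T_1^+$ you must push $\lambda_t$ down to $\ell_\eps^{-b}$. Along any stretch on which $S$ grows by only $O(\ell_\eps^b)=o(S_{T_1})$, the stepsize stays comparable to $\eta_{T_1}\asymp\ell_\eps^{-a/(1-a)}$. Hence for $a\in(1/3,1/2)$ and $b\ge a/(1-a)$ your recursion is vacuous on the late part of $[T_1,T_1^+)$ and yields no bound on $T_1^+$.

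Your fallback does not repair this. An inequality that allows the loss to increase cannot be summed into a lower bound on the growth of $1/L_t$. Even where the inequality does apply, bounding $\eps\sum_t\eta_t^2/L_t$ with $\eps/L_t\lesssim 1+S_t-S_{T_1}$ gives order $\eta_{T_1}(S_{T_1^+}-S_{T_1})^2\asymp\eta_{T_1}\ell_\eps^{2b}$, not $\eta_{T_1}\ell_\eps^{b}$. This is $o(\ell_\eps^b)$ only under the same condition $\eta_{T_1}\ell_\eps^b\to0$ that you were trying to avoid.

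\textbf{The missing idea.} In the gradient-like regime both error terms are small \emph{relative} to the main term $\eta_tG_t^2/\eps$.
\begin{itemize}
\item For the alignment error, use \eqref{eq:floor-alignment-app} with Cauchy--Schwarz: $|\ip{g_t}{D_t-\sigma_\eps(g_t)}|\le\norm{g_t}_2\norm{D_t-\sigma_\eps(g_t)}_2\le C_{\rm ema}R_2\eta_tG_t^2/\eps$.
\item For the Taylor remainder, put $\norm{D_t}_2\lesssim G_t/\eps$ (which you already derive for the reverse inequality) into the Hessian bound $D_t^\top\nabla^2L\,D_t\lesssim R_2^2G_t\norm{D_t}_2^2$. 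This gives $O(\eta_t^2G_t^3/\eps^2)$, a relative error $O(\eta_t\lambda_t)$.
\end{itemize}
With these bounds only $\eta_{T_1}\to0$ is needed, and the argument works for every $b\in(0,1)$.

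\textbf{A second omission.} You tacitly use $G_t\le\delta\eps$ on all of $[T_1,T_1^+)$, both for the factor $1+R_1\delta$ and for $L_{t+1}\ge L_t/2$. The definition of $T_1$ gives this only at $t=T_1$. You need the invariant $L_t\le2\delta\eps$ for all $t\ge T_1$, proved by induction together with the sharpened descent inequality (cf.\ \cref{cor:invariant}).

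\textbf{The rest.} Your first claim is essentially the paper's argument. For the third claim you take a different but valid route. Your upper bound reuses the sharper estimate \eqref{eq:s_tt_main}, whose derivation via $\bar{T}_\eps$ does not rely on this theorem. Your lower bound is obtained directly at $T_1$ with an $\eps$-dependent restart in \cref{lem:radius}. The paper instead proves both bounds at $T_1^-$ with a fixed restart and iterated limits, and then adds the first claim.
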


\paragraph{2. The weight-norm bound} Our next goal is to use \eqref{eq:s_tt_main} to obtain a sharp bound on the norm of $w_{T_1}$. 
Based on \cref{lem:radius}, for any $0\leq r<T_1$, we have 
\begin{align}
    \left\|
\sum_{t=r}^{T_1-1}\eta_tD_t
\right\|_\infty
\leq
S_{T_1}-S_r
+
C_{\mathrm{rad}}\eta_rH_\epsilon, \label{eq:dd_temp_main}
\end{align}
where $H_\epsilon
:=
1+
\log\left(
\frac{R_1^2+\epsilon^2}
     {(1-\beta_2)\epsilon^2}
\right)
=
O(\ell_\epsilon)$. If we apply \cref{lem:radius} directly with $r=T_*$, then we would only obtain $\eta_{T_*}H_\epsilon =
O(\ell_\epsilon)$, which is of the same order as the leading term
$\ell_\epsilon/\gamma_\infty$. Therefore, this approach does not give the
desired lower-order remainder
$O\bigl(\omega_a(\ell_\epsilon)\bigr) =
o(\ell_\epsilon)$. 

Instead, we split the trajectory at $k_\epsilon
:= \left\lceil\ell_\epsilon\right\rceil$ and write 
$w_{T_1}
= w_{k_\epsilon} -
\sum_{t=k_\epsilon}^{T_1-1}\eta_tD_t$. We can show 
$ S_{k_\epsilon} =
O\bigl(\omega_a(\ell_\epsilon)\bigr)
= o(\ell_\epsilon)$ and $S_{T_1} = \Omega(\ell_\epsilon)$. 
Hence, for sufficiently small $\epsilon$, we have $k_{\eps} \geq T_*$ and 
$S_{k_\epsilon}<S_{T_1}$, which implies 
$k_\epsilon < T_1$. This is needed so that \cref{lem:radius}
can be applied to the interval $[k_\epsilon,T_1)$.
The choice of $k_{\eps}$ balances the two terms $\norm{w_{k_\epsilon}}_{\infty}$ and $\eta_{k_\epsilon}H_\epsilon$. Concretely, the contribution of the trajectory up to $k_{\eps}$ is small: $\|w_{k_\epsilon}\|_\infty
\leq
\|w_0\|_\infty+C_DS_{k_\epsilon}
=
O\bigl(\omega_a(\ell_\epsilon)\bigr)$, while the $\eta_{k_\epsilon}H_\epsilon$ term in \cref{lem:radius} is also small:
$
\eta_{k_\epsilon}H_\epsilon =
O\bigl(\omega_a(\ell_\epsilon)\bigr)$. 
Consequently, using \eqref{eq:dd_temp_main} with $r = k_{\eps}$, we obtain 
\[
\begin{aligned}
\|w_{T_1}\|_\infty
&\leq
\|w_{k_\epsilon}\|_\infty
+
\left\|
\sum_{t=k_\epsilon}^{T_1-1}
\eta_tD_t
\right\|_\infty \leq
S_{T_1}-S_{k_\epsilon}
+
O\bigl(\omega_a(\ell_\epsilon)\bigr)
\end{aligned}. 
\]
Finally, using \eqref{eq:s_tt_main} and $k_{\eps} \geq T_*$ (hence, $S_{T_1}-S_{k_\epsilon} \leq S_{T_1}-S_{T_*}$), we obtain 
\begin{align}
    \|w_{T_1}\|_\infty
\leq
\frac{\ell_\epsilon}{\gamma_\infty}
+
O(\omega_a(\ell_\epsilon)). \label{eq:radius_tau_main}
\end{align}

\paragraph{3. From the $\ell_{\infty}$-margin to the $\ell_2$-margin} Define
$x_\epsilon
:=
w_{T_1} / \|w_{T_1}\|_\infty$. We first complete the argument for the $\ell_{\infty}$-margin gap at $T_1$ in \cref{thm:margin_main}. 
At $T_1$, we have
$G_{T_1}\leq\delta\epsilon$, and 
$L_{T_1}
\leq
2G_{T_1}
\leq
2\delta\epsilon$ by \cref{prop:uniform-clock}. The loss-margin relationship in 
\cref{lem:loss-proxy} (App. \ref{app:prelim}) also gives
$L(w_{T_1})
\geq
\frac{\log 2}{n}e^{-\rho(w_{T_1})}$.
Therefore, it holds that $
\rho(w_{T_1})
\geq
\ell_\epsilon-O(1)$. Dividing this lower bound by the radius upper bound in~\eqref{eq:radius_tau_main} and using the positive homogeneity of $\rho$, we obtain 
\begin{align}
    \widehat{\gamma}_\infty(w_{T_1})
=
\frac{\rho(w_{T_1})}
     {\|w_{T_1}\|_\infty} = \rho(x_{\eps})
\geq
\frac{\ell_\epsilon-O(1)}
     {\ell_\epsilon/\gamma_\infty
      +O \!\left(\omega_a(\ell_\epsilon)\right)}
=
\gamma_\infty-O(r_\epsilon), \label{eq:main_gamma}
\end{align}
where $
r_\epsilon =
\nicefrac{\omega_a(\ell_\epsilon)}{\ell_\epsilon}$. Next, we show how to derive the second inequality of \cref{thm:margin_main} based on this.
     
Set $A := [I_d;-I_d;-Z] \in \mathbb{R}^{(2d + n) \times d}$ where the $i$-th row of $Z$ is $z_i^T$, and $b := [\mathbf{1}_d;\mathbf{1}_d;-\gamma_{\infty} \mathbf{1}_n] \in \mathbb{R}^{2d + n}$ where $\mathbf{1}$ is a vector of all ones. 
It then holds that $\mathcal{M}_\infty
=
\left\{
u\in\mathbb{R}^d:
Au\leq b
\right\}$. For this set, applying Hoffman's error bound in convex analysis \citep{hoffman1952approximate} (see \cref{lem:hoff} in App. \ref{app:margin}), we obtain for any $x \in \mathbb{R}^d$ that 
\begin{align}
\operatorname{dist}_2(x,\mathcal{M}_\infty)
\leq
H_{2,\infty}(A)
\left\|
(Ax-b)_+
\right\|_\infty, \label{eq:hoff_main}
\end{align}
where $\operatorname{dist}_2\!\left(x,\mathcal{M}_\infty\right) := \min_{u \in \cM_{\infty}} \norm{x - u}_2$, $[y]_+ := (\max \{y_j, 0 \})_j$, and the constant $H_{2,\infty}(A)$ depends only on the constraint matrix $A$ (which depends on the data). We can further show $
\left\|
(Ax_\epsilon-b)_+
\right\|_\infty
=
\gamma_\infty-\rho(x_\epsilon)$. By using $\gamma_\infty-\rho(x_\epsilon)
\leq
O(r_\epsilon)$ (obtained from \eqref{eq:main_gamma}) and \eqref{eq:hoff_main}, we obtain $\operatorname{dist}_2
\left(x_\epsilon,\mathcal M_\infty\right)
\leq
O(r_\epsilon)$. Equivalently, there exists $u_\epsilon\in\mathcal M_\infty$ such that $
\|x_\epsilon-u_\epsilon\|_2 \leq
O(r_\epsilon)$.
This is the central idea of this part of the proof. A small max-margin deficit only shows that
$\rho(x_\epsilon) \approx
\gamma_\infty$. 
The Hoffman bound strengthens this scalar statement into the geometric
consequence:
$
x_\epsilon
\approx
u_\epsilon$ for some
$u_\epsilon\in\mathcal M_\infty$. This allows us to compare the Euclidean-normalized margins. Define
$F(v)
:=
\nicefrac{\rho(v)}{\|v\|_2},
\, \, 
v\neq 0$.
Since $u_\epsilon\in\mathcal M_\infty$, it follows from the definition
of $\gamma_{2\mid\infty}$ that
$F(u_\epsilon)
\leq
\gamma_{2\mid\infty}$.
The Lipschitz property of the function $\rho$ in \cref{{lem:rho_lip}} (App. \ref{app:margin}) then gives
\[
F(x_\epsilon)
\leq
F(u_\epsilon)
+
(R_2+\gamma_\infty)
\|x_\epsilon-u_\epsilon\|_2.
\]
Using the bounds on $F(u_{\eps})$ and $\norm{x_{\eps} - u_{\eps}}_2$, we obtain
$F(x_\epsilon)
\leq
\gamma_{2\mid\infty}
+
O_\delta(r_\epsilon)$. Finally, by the positive homogeneity of $\rho$, it holds that $F(x_\epsilon)
=
\widehat{\gamma}_2(w_{T_1})$, which gives the desired inequality.

\paragraph{4. Transfer to $T_1^{-}$ and $T_1^{+}$} The core idea in transferring the margin results at $T_1$ to $T_1^-$ is to compare $\norm{w_{T_1} - w_{T_1^{-}}}_{\infty}$ and $\norm{w_{T_1^{-}}}_{\infty}$. If the former is small relative to the latter, then the change in the normalized margin is also small. Concretely, \cref{lem:margin_helper} (App. \ref{app:margin}) gives $|\hat{\gamma}_p(x) - \hat{\gamma}_{p}(y)|\lesssim \norm{x-y}_p / \norm{y}_p, \, \, p \in \{\infty,2\}$. After establishing $\norm{w_{T_1^-}}_{p} \gtrsim \ell_{\eps}$ and $\norm{w_{T_1} - w_{T_1^-}}_{p} \lesssim \log \ell_{\eps}$ for $p \in \{\infty, 2\}$,  we obtain
$
|
\hat{\gamma}_p(w_{T_1})
-
\hat{\gamma}_p(w_{T_1^-})
| = 
O
(\log\ell_\eps / \ell_\eps)
$. The desired results then follow from the corresponding margin bounds at
\(T_1\) and the triangle inequality. 

However, the same argument does not generally recover the desired \(O(r_\eps)\) rate when transferring the margin bounds from $T_1$ to \(T_1^+\).
Since $S_{T_1^{+}} - S_{T_1} = O(\ell_{\eps}^b)$ (given in \cref{thm:first-clock}), we obtain $\norm{w_{T_1^+} - w_{T_1}}_{p} \lesssim \ell_{\eps}^b$ for $p \in \{\infty, 2\}$ using the triangle inequality and 
the uniform update bound in \cref{lem:uniform-D} (App. \ref{app:prelim}).
Since $\norm{w_{T_1}}_p \gtrsim \ell_{\eps}, \, \, p \in \{\infty, 2\}$, \cref{lem:margin_helper} (App. \ref{app:margin}) gives  $
|
\hat{\gamma}_p(w_{T_1^{+}})
-
\hat{\gamma}_p(w_{T_1})
| = 
O(\ell_{\eps}^{b-1}), \, \, p \in \{ \infty, 2\}$. 
Consequently, the resulting rate is
$O(r_\eps+\ell_\eps^{b-1})$, rather than $O(r_\eps)$. For $a \in (1/3,1)$, $\ell_{\eps}^{b-1}$ is worse than $r_{\eps}$ if $b > 1-a$; and for $a = 1$, it is always worse. Therefore, our next goal is to give a tighter analysis of the trajectory after $T_1$. 

Since Adam has entered the gradient-like regime by $T_1$, a natural idea is to decompose its update into a gradient direction and a residual direction for $t \geq T_1$. Specifically, we ensure that the gradient direction is of unit norm and the norm of the residual direction is exactly the gradient residual. Let $u_t:=g_t/\norm{g_t}_2$. We write Adam's iterate as 
 \begin{align}
    w_{t+1}=w_t-\theta_t(u_t+e_t), \quad \text{where} \quad e_t := \frac{\eps D_t - g_t}{\norm{g_t}_2}, \quad \theta_t:=\frac{\eta_t\norm{g_t}_2}{\eps}.  \label{eq:local_ngd}
\end{align}
We regard $\theta_t$ as the effective step size and define the accumulated original and effective step
sizes by
\[
    V_t^\delta(\eps)
    :=\sum_{s=T_1^\delta(\eps)}^{t-1}\eta_s,
    \qquad
    H_t^\delta(\eps)
    :=\sum_{s=T_1^\delta(\eps)}^{t-1}\theta_s.
\]
For simplicity, we let $V_t := V_t^\delta(\eps)$ and $H_t := H_t^\delta(\eps)$. 
 \cref{prop:bounded_euc} relates these quantities and shows that the
post-$T_1$ trajectory is a normalized-gradient trajectory with a
summable perturbation.

\begin{localproposition}\label{prop:bounded_euc} For every $\delta \in (0,1)$, there exists a constant $\Tilde{\eps}_{1}(\delta)$ such that, for every $0<\eps\le \Tilde{\eps}_{1}(\delta)$ and every $t\ge T_1^{\delta}(\eps)$, we have $L_t \asymp G_t\asymp\frac{\eps}{1+V_t^{\delta}(\eps)}$, $H_t^{\delta}(\eps) \asymp \log(1+V_t^{\delta}(\eps))$, and 
$\sum_{s=T_1^{\delta}(\eps)}^{\infty}\theta_s\norm{e_s}_2\le E_0$, where $E_0$ is a constant that depends on the data and the step-size schedule.
\end{localproposition}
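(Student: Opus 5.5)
The plan is to run a gradient-descent-style loss recursion after $T_1$, with $\eps$ acting as the effective preconditioner, and to use the permanent gradient-like regime (\cref{cor:residual_bound_app}) to control the residual $e_t$.

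\textbf{Step 1: the loss recursion.} For $t\ge T_1$ we have $G_t\le\delta\eps$. By the alignment bound of \cref{lem:alignment}, $D_t=\sigma_\eps(g_t)+O(\eta_tG_t/\eps)$ in the relevant norms. Since every coordinate satisfies $|g_t[j]|\le R_1\delta\eps$, the soft-sign obeys
\[
\ip{g_t}{\sigma_\eps(g_t)}\ge \frac{\norm{g_t}_2^2}{(1+R_1\delta)\eps}\ge c\,\frac{\gamma_2^2G_t^2}{\eps}.
\]
We also have the upper bound $\ip{g_t}{\sigma_\eps(g_t)}\le \norm{g_t}_2^2/\eps\le R_2^2G_t^2/\eps$. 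Substituting into \cref{prop:master-descent}, and using that the Taylor remainder is $O(\eta_t^2\norm{D_t}_2^2 L_t)=O(\eta_t^2 G_t^3/\eps^2)$ (since $\norm{D_t}_2\lesssim G_t/\eps$ in this regime), gives
\[
L_{t+1}=L_t-\Theta\!\left(\frac{\eta_t G_t^2}{\eps}\right)
\]
for $\eps$ small. The $O(\eta_t^2G_t)$ error must be absorbed: it is $o(\eta_tG_t^2/\eps)$ only if $\eta_t\ll G_t/\eps$. I would sharpen the moment-tracking error using $\norm{g_t}\lesssim\eps$, so the error becomes $O(\eta_t^2G_t\cdot G_t/\eps)$ relative. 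This is the main obstacle. The crude bound $C_{\rm ema}\eta_tG_t$ from \cref{lem:alignment} is not quadratic in $G_t$. I would revisit \cref{lem:first-track,lem:second-track} in the regime $G\asymp\eps\lambda$: the EMA errors scale like $\eta_t\norm{g_t}$, and dividing by $\sqrt{v_t}+\eps\ge\eps$ gives $D_t-\sigma_\eps(g_t)=O(\eta_tG_t/\eps)$, whose inner product with $g_t$ is $O(\eta_tG_t^2/\eps)$. That still has the same order, but with a factor $\eta_t\to0$, so it is absorbable after a further burn-in that shrinks with $\eps$ through the lower bound $T_1\gtrsim \ell_\eps^{1/(1-a)}$.

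\textbf{Step 2: the rate $G_t\asymp\eps/(1+V_t)$.} Using $L_t\asymp G_t$ (\cref{prop:uniform-clock}), the recursion becomes $G_{t+1}\approx G_t-\Theta(\eta_tG_t^2/\eps)$. Inverting, $1/G_{t+1}-1/G_t=\Theta(\eta_t/\eps)$, where the ratio $G_{t+1}/G_t$ stays bounded because $\eta_tG_t/\eps\le\eta_t\delta$. Summing from $T_1$, where $G_{T_1}\asymp\delta\eps$ (it just crossed the threshold, and one step changes $G$ by a bounded factor), gives $\eps/G_t\asymp 1/\delta+V_t$, which yields the first claim.

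\textbf{Step 3: the two remaining claims.} From the definition, $\theta_s=\eta_s\norm{g_s}_2/\eps\asymp\eta_s/(1+V_s)$ by Step 2. Comparing the sum with $\int dV/(1+V)$ (valid since $\eta_s\le\eta_0$) gives $H_t\asymp\log(1+V_t)$. For the perturbation, bound
\[
\norm{e_s}_2=r^{\rm grad}_s\le \frac{R_1\delta}{1+R_1\delta}\text{-type term}+\frac{\norm{\eps(D_s-\sigma_\eps(g_s))}_2}{\norm{g_s}_2}+\frac{\norm{\eps\sigma_\eps(g_s)-g_s}_2}{\norm{g_s}_2}.
\]
The last term is at most $\max_j|g_s[j]|/\eps\lesssim G_s/\eps\asymp 1/(1+V_s)$, since $\eps g/(|g|+\eps)-g=-g|g|/(|g|+\eps)$. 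The middle term is $O(\eta_s)$ by \cref{lem:alignment}. Hence
\[
\sum_s\theta_s\norm{e_s}_2\lesssim\sum_s\frac{\eta_s}{(1+V_s)^2}+\sum_s\frac{\eta_s^2}{1+V_s}.
\]
The first sum is bounded by $\int dV/(1+V)^2$. For the second, when $a<1$ we have $V_s\asymp s^{1-a}$ and $\sum_s s^{-2a}s^{a-1}=\sum_s s^{-a-1}<\infty$. When $a=1$, $\sum_s\eta_s^2<\infty$. Both give the constant $E_0$, uniformly in $\eps\le\tilde\eps_1(\delta)$.
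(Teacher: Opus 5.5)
Your route is the paper's: a post-crossover recursion $1/L_{t+1}-1/L_t\asymp_\delta \eta_t/\eps$, with the Taylor remainder recomputed via $\|D_t\|_2\lesssim G_t/\eps$ rather than taken from \cref{prop:master-descent}. The other ingredients also match: the entry value $L_{T_1}\asymp\delta\eps$, the estimate $\theta_s\asymp \eta_s/(1+V_s)$ telescoped against $\log(1+V_s)$, and the split $\eps D_s-g_s=\eps(D_s-\sigma_\eps(g_s))+(\eps\sigma_\eps(g_s)-g_s)$. Two small corrections:
\begin{itemize}
\item You need not revisit the EMA lemmas. Combining \eqref{eq:floor-alignment-app} with Cauchy--Schwarz and $\|g_t\|_2\le R_2G_t$ already gives the quadratic error $|\langle g_t,D_t-\sigma_\eps(g_t)\rangle|\le C_{\rm ema}R_2\eta_tG_t^2/\eps$.
\item Step~2 should be run on $L$, not on $G$. $G$ need not be monotone and does not itself satisfy the recursion; the result then transfers to $G$ via $G\asymp L$.
\end{itemize}

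\textbf{First gap: the invariant after $T_1$.} You open Step~1 with ``for $t\ge T_1$ we have $G_t\le\delta\eps$.'' This does not follow from the definition of $T_1$, which is only a first hitting time. Yet every later estimate needs $\lambda_t=O(\delta)$ for \emph{all} $t\ge T_1$: the coordinate bound on $g_t$, the lower bound on $\langle g_t,\sigma_\eps(g_t)\rangle$, the bound $\|D_t\|_2\lesssim G_t/\eps$, and the absorption of the remainder. What is true, and must be proved, is the invariant $G_t\le L_t\le 2\delta\eps$ (\cref{cor:invariant}). It follows by induction. The base case is $L_{T_1}\le 2G_{T_1}\le 2\delta\eps$. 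If $L_t\le 2\delta\eps$, then $\lambda_t\le 2\delta$, so your recursion applies at $t$ and gives $L_{t+1}\le L_t$. \cref{cor:residual_bound_app} does not supply this. It only bounds $r^{\rm grad}_t$ by a constant, for restricted $\delta$, and a constant bound on $\|e_t\|_2$ is useless for summability.

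\textbf{Second gap: uniformity of $E_0$ in $\eps$.} Since $T_1(\eps)\to\infty$, you need $\sum_{s\ge T_1}\eta_s^2/(1+V_s)$ bounded independently of the starting index. Your argument uses $V_s\asymp s^{1-a}$, which is false for $s$ near $T_1$, where $V_s\approx (s-T_1)\eta_{T_1}$. So it does not produce a $T_1$-free constant. This matters for $a\in(1/3,1/2]$, where $\sum_s\eta_s^2=\infty$. The paper handles this with \cref{lem:helper}. Alternatively, split at $2T_1$:
\begin{itemize}
\item On $[T_1,2T_1]$, use $V_s\ge 2^{-a}(s-T_1)\eta_{T_1}$. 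The partial sum is then at most $\eta_{T_1}^2+2^a\eta_{T_1}\log(1+T_1\eta_{T_1})=O(1)$, uniformly in $T_1$.
\item For $s\ge 2T_1$, one genuinely has $V_s\gtrsim s^{1-a}$, and the tail is bounded by a multiple of $\sum_s s^{-1-a}$.
\end{itemize}

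\textbf{A spurious term.} Delete the constant ``$R_1\delta/(1+R_1\delta)$-type term'' from your bound on $\|e_s\|_2$. Any constant term would force $\sum_s\theta_s\|e_s\|_2\gtrsim H_t\to\infty$. The exact two-term decomposition is all you need, and your final display correctly uses only those two terms.

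With these repairs, your bound on $\sum_s\eta_s(G_s/\eps)^2$ via $\int dV/(1+V)^2$ is a valid alternative to the paper's telescoping of $L_s-L_{s+1}\ge c_*\eta_sL_s^2/\eps$. It is automatically uniform in $\eps$, since $1+V_{s+1}\le(1+\eta_0)(1+V_s)$.
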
 
Using \cref{prop:bounded_euc} instead, we obtain $\norm{w_{T_1^+} - w_{T_1}}_{p} \leq H_{T_1^+} + E_0, \, \, p \in \{ 2, \infty\}$. Since $V_{T_1^+} = O(\ell_{\eps}^b)$, it holds that $H_{T_1^+} = O(\log \ell_{\eps})$ by $H_{T_1^+} \lesssim \log (1 + V_{T_1^+})$. Consequently, we have $\norm{w_{T_1^+} - w_{T_1}}_{p} = O(\log \ell_{\eps}), \, \, p \in \{ 2, \infty \}$, which leads to $|
\widehat{\gamma}_p(w_{T_1^+})
-
\widehat{\gamma}_p(w_{T_1})|
=
O\left(
\nicefrac{\log \ell_\eps}{\ell_\eps}
\right)
=
O(r_\eps)$. The margin gap results at $T_1^+$ in \cref{thm:margin_main} then follow. We generalize this result to any $t \geq T_1$ in the following proposition.  Essentially, the margin gap is controlled by two terms: $r_\eps$ and $\nicefrac{H_t}{\ell_{\eps}}$. The former controls the gap that is already present at $T_1$, while the latter measures the post-crossover movement relative to the classifier’s existing size.

\begin{localproposition} \label{prop:post_euc} For every $\delta \in (0,1)$, there exists a constant $\Tilde{\eps}_{3}(\delta)$ such that for every $0< \eps \leq  \Tilde{\eps}_3(\delta)$ and $t \geq T_1^{\delta}(\eps)$, 
it holds that
\begin{align*}
    \gamma_{\infty} - \hat{\gamma}_{\infty}(w_t) \leq O \left( r_{\eps} + \frac{H_t^{\delta}(\eps)}{\ell_{\eps}} \right), \qquad \gamma_2-\widehat{\gamma}_2(w_t)
\geq
\Delta_{\mathrm{geom}}
-
O
\left(
r_\epsilon
+
\frac{H_t^\delta(\epsilon)}{\ell_\epsilon}
\right).
\end{align*}
\end{localproposition}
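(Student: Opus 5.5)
The plan is to transfer the margin bounds already established at $T_1=T_1^\delta(\eps)$ (the $\ell_\infty$ bound \eqref{eq:main_gamma} and the Hoffman-based $\ell_2$ bound from ingredient 3) to an arbitrary $t\ge T_1$. The transfer is a perturbation argument driven by the decomposition \eqref{eq:local_ngd} and the summable residual bound in \cref{prop:bounded_euc}. The idea is that the normalized margins move by at most (displacement since $T_1$)/(norm at $T_1$). The displacement is $O(H_t+1)$ and the norm is $\Omega(\ell_\eps)$. The error $1/\ell_\eps$ coming from the constant $E_0$ is absorbed into $r_\eps$, since $r_\eps\ge 1/\ell_\eps$ in both regimes.

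\textbf{Step 1 (displacement).} For $t\ge T_1$, write
\[
w_t-w_{T_1}=-\sum_{s=T_1}^{t-1}\theta_s(u_s+e_s).
\]
Since $\norm{u_s}_2=1$, the triangle inequality gives
\[
\norm{w_t-w_{T_1}}_\infty\le\norm{w_t-w_{T_1}}_2\le H_t+\sum_{s\ge T_1}\theta_s\norm{e_s}_2\le H_t+E_0
\]
for $\eps\le\Tilde\eps_1(\delta)$, using \cref{prop:bounded_euc}.

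\textbf{Step 2 (norm lower bound at $T_1$).} We already have $\rho(w_{T_1})\ge\ell_\eps-O(1)$ from $L_{T_1}\le2\delta\eps$ and \cref{lem:loss-proxy}. Since $\rho(w)\le R_1\norm{w}_\infty\le R_1\norm{w}_2$, this gives $\norm{w_{T_1}}_p\ge(\ell_\eps-O(1))/R_1\gtrsim\ell_\eps$ for $p\in\{2,\infty\}$ and small $\eps$.

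\textbf{Step 3 (margin perturbation).} Apply \cref{lem:margin_helper} with $x=w_t$ and $y=w_{T_1}$:
\[
|\widehat\gamma_p(w_t)-\widehat\gamma_p(w_{T_1})|\lesssim \frac{H_t+E_0}{\ell_\eps}.
\]
This bound needs no smallness of $H_t/\ell_\eps$. The lemma's proof only uses Lipschitzness of $\rho$ and the reverse triangle inequality on norms, so it holds uniformly. If the lemma as stated does require $\norm{x-y}_p\le\frac12\norm{y}_p$, I will split into two cases. When $H_t\ge c\,\ell_\eps$, the right-hand sides of the proposition exceed the trivial bounds: $\gamma_\infty-\widehat\gamma_\infty\le\gamma_\infty+R_1$, and similarly for the $\ell_2$ inequality, where the left side is always $\ge0$ and $\Delta_{\rm geom}\le\gamma_2$. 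So that case holds after enlarging the constant.

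\textbf{Step 4 (combine).} By the triangle inequality,
\[
\gamma_\infty-\widehat\gamma_\infty(w_t)\le\gamma_\infty-\widehat\gamma_\infty(w_{T_1})+O\!\left(\frac{H_t+1}{\ell_\eps}\right)=O\!\left(r_\eps+\frac{H_t}{\ell_\eps}\right).
\]
Likewise, $\gamma_2-\widehat\gamma_2(w_t)\ge\Delta_{\rm geom}-O(r_\eps)-O\bigl((H_t+1)/\ell_\eps\bigr)$, using the $T_1$ case of \cref{thm:margin_main}. Finally, take $\Tilde\eps_3(\delta)=\min\{\Tilde\eps_1(\delta),\Tilde\eps_2(\delta,b)\}$ for some fixed $b$, together with whatever threshold Step 2 needs.

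The main obstacle is Step 1: it relies on the summability $\sum_s\theta_s\norm{e_s}_2\le E_0$ with $E_0$ independent of $\eps$, and this is exactly what \cref{prop:bounded_euc} supplies. Everything else is bookkeeping of the uniformity in $t$.
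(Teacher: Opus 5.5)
Your proposal is correct and follows essentially the same route as the paper. Both arguments bound $\|w_t-w_{T_1}\|_p\le H_t+E_0$ via \cref{prop:bounded_euc}, lower-bound $\|w_{T_1}\|_p\gtrsim\ell_\eps$ from $\rho(w_{T_1})\ge\ell_\eps-O(1)$, apply \cref{lem:margin_helper} (which needs no smallness condition, so your case split is unnecessary), and absorb $E_0/\ell_\eps$ into $r_\eps$; the one extra detail in the paper is that it uses \cref{cor:invariant} ($L_t\le 2\delta\eps$ for all $t\ge T_1$) to get $\rho(w_t)\ge\ell_\eps/2>0$, which guarantees $w_t\neq 0$ so that $\widehat\gamma_p(w_t)$ and \cref{lem:margin_helper} are well defined for every $t\ge T_1$.
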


\paragraph{Euclidean Optimality} We quantify the movement required to approach the $\ell_2$-optimal
geometry. Assume $\Delta_{\mathrm{geom}}>0$ and fix
$\zeta\in(0,\Delta_{\mathrm{geom}})$. By \cref{thm:margin_main}, we have for all sufficiently
small $\eps$ that 
\begin{align}
    \gamma_2-\widehat\gamma_2\bigl(w_{T_1}\bigr)
    \geq \Delta_{\mathrm{geom}}-C_\delta r_\eps
    \geq \nicefrac{(\Delta_{\mathrm{geom}}+\zeta)}{2}
    >\zeta, \label{eq:gamma2_main}
\end{align}
where $C_{\delta} > 0$ is some constant.
Therefore, the Euclidean-margin gap has not yet fallen below \(\zeta\) at \(T_1\). Consequently, the following hitting time records a genuine post-crossover change:
\[
    \tau_\zeta^\delta(\eps)
    :=\inf\left\{
        t\geq T_1^\delta(\eps):
        \gamma_2-\widehat\gamma_2(w_t)\leq\zeta
    \right\}.
\]
The second inequality of \cref{prop:post_euc} suggests that reducing the
Euclidean-margin gap below $\zeta$ requires effective movement of order
$\ell_\eps$. \cref{thm:margin_euc} sharpens this observation by establishing a
two-sided comparison with the classifier norm inherited at the crossover.

\begin{theorem} \label{thm:margin_euc} Suppose that $\Delta_{\rm geom} > 0$. For any $\delta \in (0,1)$ and $\zeta \in (0, \Delta_{\rm geom})$, there exists a constant $\tilde{\eps}_{4}(\delta, \zeta)$ such that the following statements hold for every $0 < \eps \leq \tilde{\eps}_{4}(\delta,\zeta)$:

\emph{(i) For every $t \geq T_1^{\delta}(\eps)$, 
we have $\gamma_2 - \hat{\gamma}_2 (w_t) \leq O \bigl( \frac{ 1+ \norm{w_{T_1^{\delta}(\eps)}}_2}{1 + H_t^{\delta} (\eps)} \bigr)$.
}

\emph{(ii) There exist constants $0< c_{\zeta} \leq C_{\zeta} < \infty$ such that $c_{\zeta} \norm{w_{T_1^{\delta}(\eps)}}_2 \leq H_{\tau_{\zeta}^{\delta}(\eps)}^{\delta}(\eps) \leq C_{\zeta} (1 + \norm{w_{T_1^{\delta}(\eps)}}_2)$. Consequently, it holds that
    $H_{\tau_{\zeta}^{\delta}(\eps)}^{\delta}(\eps) = \Theta(\log \frac{1}{\eps})$ and $V_{\tau_{\zeta}^{\delta}(\eps)} = \eps^{- \Theta(1)}$. 
}
\end{theorem}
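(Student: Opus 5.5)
The plan is to treat the post-$T_1$ trajectory as perturbed normalized gradient descent, via the decomposition \eqref{eq:local_ngd}, and to run a Soudry-type comparison with the $\ell_2$-max-margin direction. Write $t_1:=T_1^\delta(\eps)$ and let $u^\star$ be the unit-$\ell_2$ maximizer defining $\gamma_2$. \Cref{prop:bounded_euc} gives $\sum_s\theta_s\norm{e_s}_2\le E_0$, so the error terms contribute only $O(1)$ in total.

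\emph{Part (i).} The lower bound $\norm{g_t}_2\ge\ip{-g_t}{u^\star}\ge\gamma_2 G_t$ gives $\ip{-u_t}{u^\star}\ge\gamma_2 G_t/\norm{g_t}_2$. Hence
\[
\ip{w_t-w_{t_1}}{u^\star}\ge\sum_{s=t_1}^{t-1}\theta_s\,\frac{\gamma_2G_s}{\norm{g_s}_2}-E_0 .
\]
To convert this into a margin statement, I would lower bound $\rho(w_t)$ through the loss. \Cref{prop:bounded_euc} gives $L_t\asymp\eps/(1+V_t)$, and \cref{lem:loss-proxy} then yields $\rho(w_t)\ge\ell_\eps+\log(1+V_t)-O(1)\ge\ell_\eps+cH_t-O(1)$. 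For the norm I use the triangle inequality: $\norm{w_t}_2\le\norm{w_{t_1}}_2+H_t+E_0$. This crude bound is not enough, because it gives a ratio $\rho/\norm{w}_2$ that tends to $c$ rather than to $\gamma_2$.

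The sharper route is the standard one. Expand
\[
\norm{w_{s+1}}_2^2=\norm{w_s}_2^2+2\theta_s\ip{-u_s}{w_s}+\theta_s^2+\text{error},
\]
and use $\ip{-g_s}{w_s}\le \norm{g_s}_2\,\rho$-type bounds; for logistic loss $\ip{-g}{w}\approx G\rho(w)$ up to non-support terms. This shows that the growth of $\norm{w}_2$ per unit of $\theta$ is at most $1$, while the growth of the projection onto $u^\star$ is at least $\gamma_2G_s/\norm{g_s}_2$.

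I would prefer the cleaner potential $\Phi_t:=\norm{w_t}_2-\ip{w_t}{u^\star}/\gamma_2$ combined with a duality argument. Take a dual optimal $\alpha\ge0$ with $\sum_i\alpha_i=1$ and $\norm{\sum_i\alpha_iz_i}_2=\gamma_2$. Then for any $w$, $\rho(w)\le\ip{w}{\sum_i\alpha_iz_i}\le\gamma_2\norm{w}_2$. I then need an upper bound on $\norm{w_t}_2$ of the form $\rho(w_t)/\gamma_2+O(1+\norm{w_{t_1}}_2)$. Since $\rho(w_t)\gtrsim\ell_\eps+H_t$, this gives
\[
\gamma_2-\hat\gamma_2(w_t)\lesssim\frac{1+\norm{w_{t_1}}_2}{1+H_t}.
\]
The key inequality is that each increment of $\norm{w}_2$ is at most $\theta_s$ plus a curvature correction, while each increment of $\rho$ is at least $\gamma_2\theta_s$ minus corrections. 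This follows from the implicit-bias analysis of normalized GD (Ji–Telgarsky style), in which the "$\ell_2$ excess" $\norm{w_t}_2-\rho(w_t)/\gamma_2$ stays bounded by its initial value plus the summable errors. This step is the main obstacle. I would try to bound $\norm{w_t}_2$ by $\ip{w_t}{\bar u}$ plus a bounded Bregman-type term, using convexity of $\log\sum\exp$ to absorb the non-support-vector mass. The initial value of the excess is $O(\norm{w_{t_1}}_2)$.

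\emph{Part (ii).} For the upper bound on $H_\tau$, part (i) shows that the gap is at most $\zeta$ once $1+H_t\ge C(1+\norm{w_{t_1}}_2)/\zeta$, so $H_\tau\le C_\zeta(1+\norm{w_{t_1}}_2)$. For the lower bound, \cref{prop:post_euc} applied at $t=\tau$ gives
\[
\zeta\ge\Delta_{\rm geom}-O(r_\eps+H_\tau/\ell_\eps).
\]
Since $\Delta_{\rm geom}>\zeta$ and $r_\eps\to0$, this forces $H_\tau\gtrsim(\Delta_{\rm geom}-\zeta)\ell_\eps$. Next, $\norm{w_{t_1}}_2\le\sqrt d\,\norm{w_{t_1}}_\infty=O(\ell_\eps)$ by \eqref{eq:radius_tau_main}, and $\norm{w_{t_1}}_2\ge\rho(w_{t_1})/R_2\gtrsim\ell_\eps$. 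Together these give both $H_\tau\ge c_\zeta\norm{w_{t_1}}_2$ and $H_\tau=\Theta(\ell_\eps)$. Finally, \cref{prop:bounded_euc} gives $H\asymp\log(1+V)$, so $V_\tau=\exp(\Theta(\ell_\eps))=\eps^{-\Theta(1)}$.
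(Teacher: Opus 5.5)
There is a genuine gap in Part (i), and Part (ii) inherits it: its upper bound on $H_{\tau_\zeta^\delta(\eps)}^\delta(\eps)$, hence $H=O(\ell_\eps)$, and even the finiteness of $\tau_\zeta^\delta(\eps)$ all rely on (i).

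You correctly see that $\rho(w_t)\ge\ell_\eps+cH_t-O(1)$, combined with $\norm{w_t}_2\le\norm{w_{t_1}}_2+H_t+E_0$, only gives a limiting ratio $c$. But you then try to repair the wrong side. The step you call the main obstacle, an estimate $\norm{w_t}_2\le\rho(w_t)/\gamma_2+O(1+\norm{w_{t_1}}_2)$, is never established:
\begin{itemize}
\item The potential $\Phi_t$, the ``Bregman-type term'' and the log-sum-exp remark are only gestures.
\item The projection bound you do prove grows at rate $\gamma_2G_s/\norm{g_s}_2$, which is only guaranteed to be $\ge\gamma_2/R_2$.
\item The assertion that $\rho$ gains at least $\gamma_2\theta_s$ per step up to summable corrections is not a per-step property of the hard margin. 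It is a property of the smoothed margin $\log(1/L_t)$, and proving it is exactly the missing step.
\end{itemize}
The constant $c$ is lost because you route the loss through $L_t\asymp\eps/(1+V_t)$ and $H_t\asymp\log(1+V_t)$ from \cref{prop:bounded_euc}, whose constants are not sharp. It is the lower bound on $\rho$ that must be sharpened. Once it is, the crude triangle-inequality norm bound you discarded is exactly what is needed.

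The missing ingredient is a multiplicative loss recursion measured directly in the effective step size. Taylor-expand $L$ along $w_{s+1}=w_s-\theta_sd_s$ with $d_s=u_s+e_s$, using:
\begin{itemize}
\item the curvature bound $d^\top\nabla^2L(w)d\le R_2^2G(w)\norm{d}_2^2$;
\item the local ratio bound $G(w_s-\xi\theta_sd_s)\le e^{R_2B_0}G_s$, where $B_0\ge\sup_s\theta_s\norm{d_s}_2$, together with $G_s\le L_s$;
\item the first-order bound $-\ip{g_s}{d_s}\le-\norm{g_s}_2+\norm{g_s}_2\norm{e_s}_2$, with $\gamma_2G_s\le\norm{g_s}_2\le R_2G_s$.
\end{itemize}
This gives, for a constant $C_H$,
\[
\frac{L_{s+1}}{L_s}\le 1-\gamma_2\theta_s+r_s,\qquad
r_s=\gamma_2\theta_s\Bigl(1-\frac{G_s}{L_s}\Bigr)+R_2\theta_s\norm{e_s}_2+2C_H\bigl[\theta_s^2+(\theta_s\norm{e_s}_2)^2\bigr].
\]
Every term of $r_s$ is summable:
\begin{itemize}
\item $\sum_s\theta_s\norm{e_s}_2\le E_0$ by \cref{prop:bounded_euc};
\item $\sum_s\theta_s^2<\infty$, since $\theta_s\lesssim\eta_s/(1+V_s)$ and by \cref{lem:helper};
\item the nontrivial one: $\sum_s\theta_s(1-G_s/L_s)\le\frac n2\sum_s\theta_sL_s\le\frac{nR_2}{2c_*}\sum_s(L_s-L_{s+1})<\infty$, using $L_s-L_{s+1}\ge c_*\eta_sL_s^2/\eps$ from Part~3 of the proof of \cref{thm:first-clock}.
\end{itemize}

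Summing logarithms gives $L_t\le L_{t_1}e^{A_0-\gamma_2H_t}$, with $A_0$ a bound on $\sum_sr_s$. Since $L_{t_1}\le\log2/n$, \cref{lem:loss-proxy} then yields $\rho(w_t)\ge\gamma_2H_t-A_0$. Combined with the triangle-inequality norm bound, for $H_t\ge A_0/\gamma_2$,
\[
\gamma_2-\hat\gamma_2(w_t)\le\frac{\gamma_2(\norm{w_{t_1}}_2+E_0+1)+A_0}{H_t+\norm{w_{t_1}}_2+E_0+1}.
\]
The case $H_t<A_0/\gamma_2$ is trivial, since the gap is at most $\gamma_2$.

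With (i) in hand, your Part (ii) goes through. The lower bound via \cref{prop:post_euc} at $t=\tau_\zeta^\delta(\eps)$ is a legitimate shortcut for the paper's direct use of \cref{lem:margin_helper}. In the upper bound, add that $\theta_s\le2R_2\eta_0\delta$ (by \cref{cor:invariant}), which controls the one-step overshoot past the threshold on $H_t$.
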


We sketch the proof of Part~(ii) (see App. \ref{app:euclidean} for details). Let $\tau_{\zeta} := \tau_{\zeta}^{\delta}(\eps)$ and $R_{T_1} := \norm{w_{T_1}}_2$. 
Part~(i) shows that after \(T_1\), the Euclidean-margin gap is controlled by the ratio between the classifier norm inherited at the crossover and the subsequently accumulated effective step size \(H_t\).
Since $H_t\to\infty$ by \cref{prop:bounded_euc},
the gap is at most $\zeta$ once $H_t$ is a sufficiently large
$\zeta$-dependent multiple of $R_{T_1}$. Accounting for the bounded one-step
overshoot gives $H_{\tau_{\zeta}}\lesssim1+R_{T_1}$.

For the lower bound, we have $\gamma_2-\widehat{\gamma}_2(w_{T_1})
\geq
\nicefrac{(\Delta_{\rm geom}+\zeta)}{2}$ for all
sufficiently small \(\eps\) (see \eqref{eq:gamma2_main}). 
At \(\tau_\zeta\), the Euclidean-margin gap is at most \(\zeta\).
Together, they imply $\widehat{\gamma}_2(w_{\tau_\zeta})
-
\widehat{\gamma}_2(w_{T_1})
\geq
\nicefrac{(\Delta_{\rm geom}-\zeta)}{2}$, and \cref{lem:margin_helper} (App. \ref{app:margin}) therefore gives
$\nicefrac{(\Delta_{\rm geom}-\zeta)}{2}
\leq
\nicefrac{
2R_2\norm{w_{\tau_\zeta}-w_{T_1}}_2
}{
R_{T_1}
}$. Using the representation in
\eqref{eq:local_ngd} and the summability bound in
\cref{prop:bounded_euc}, we obtain $
\norm{w_{\tau_\zeta}-w_{T_1}}_2
\leq
H_{\tau_\zeta}+E_0$.
Combining these inequalities yields
$H_{\tau_\zeta}
\geq
\frac{\Delta_{\rm geom}-\zeta}{4R_2}R_{T_1}-E_0$.
Since \(R_{T_1}\to\infty\), the fixed constant \(E_0\) can be absorbed
for sufficiently small \(\eps\), giving
$H_{\tau_\zeta}
\gtrsim
R_{T_1}$.
The lower and upper bounds together imply 
$
H_{\tau_\zeta}
\asymp
\norm{w_{T_1}}_2
=
\Theta(\ell_\eps)$ (see App. \ref{app:margin} for details).
Combining the $S$-time characterization in \cref{thm:first-clock} with \cref{thm:margin_euc} gives the corresponding iteration-count scales.

\begin{corollary}
\label{cor:iter_complex}
Suppose that $\Delta_{\rm geom} > 0$. Fix any 
$\zeta\in(0,\Delta_{\mathrm{geom}})$ and $\delta \in (0,1)$. 
Under the schedule $\eta_t=\eta_0(t+1)^{-a}$, the following holds as $\eps\downarrow0$:
\[
\begin{cases}
T_1^\delta(\eps)
=\Theta\!\left((\log(1/\eps))^{1/(1-a)}\right),\\[2pt]
\tau_\zeta^\delta(\eps)
=\eps^{-\Theta(1)/(1-a)},
\end{cases} a\in(1/3,1); \qquad
\begin{cases}
T_1^\delta(\eps)
=\eps^{-1/(\eta_0\gamma_\infty)+o(1)},\\[2pt]
\tau_\zeta^\delta(\eps)
=\exp\!\left(\eps^{-\Theta(1)}\right),
\end{cases} a=1.
\]
\end{corollary}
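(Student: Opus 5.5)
The plan is to convert the statements about accumulated stepsizes into iteration counts, using the explicit form of $S_t:=\sum_{s<t}\eta_s$ for $\eta_t=\eta_0(t+1)^{-a}$. For $a\in(1/3,1)$ we have $S_t=\frac{\eta_0}{1-a}t^{1-a}+O(1)$, so $S_t\asymp t^{1-a}$ and $t\asymp S_t^{1/(1-a)}$. For $a=1$ we have $S_t=\eta_0\log t+O(1)$, so $t=\exp\bigl((S_t+O(1))/\eta_0\bigr)$. Since $T_*$ is $\eps$-uniform, $S_{T_*}=O(1)$.

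\textbf{Crossover time.} By \cref{thm:first-clock}, $S_{T_1}=\ell_\eps/\gamma_\infty+o(\ell_\eps)$, with $T_1=T_1^\delta(\eps)$ and $\ell_\eps=\log(1/\eps)$.
\begin{itemize}[leftmargin=*]
\item For $a<1$, inverting $S_t\asymp t^{1-a}$ gives $T_1^{1-a}\asymp\ell_\eps$, hence $T_1=\Theta(\ell_\eps^{1/(1-a)})$.
\item For $a=1$, $\eta_0\log T_1=\ell_\eps/\gamma_\infty+o(\ell_\eps)$. Hence $T_1=\exp\bigl(\ell_\eps/(\eta_0\gamma_\infty)+o(\ell_\eps)\bigr)=\eps^{-1/(\eta_0\gamma_\infty)+o(1)}$.
\end{itemize}

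\textbf{Euclidean hitting time.} By \cref{thm:margin_euc}(ii), $H_{\tau_\zeta}=\Theta(\ell_\eps)$, where $\tau_\zeta:=\tau_\zeta^\delta(\eps)$. By \cref{prop:bounded_euc}, $H_t\asymp\log(1+V_t)$, so $\log(1+V_{\tau_\zeta})=\Theta(\ell_\eps)$, i.e.\ $V_{\tau_\zeta}=\eps^{-\Theta(1)}$. Note that $H_{\tau_\zeta}$ is large, so $V_{\tau_\zeta}\ge1$. We then write $S_{\tau_\zeta}=S_{T_1}+V_{\tau_\zeta}$.
\begin{itemize}[leftmargin=*]
\item Since $S_{T_1}=O(\ell_\eps)$ is negligible against $\eps^{-c}$, we get $S_{\tau_\zeta}=\eps^{-\Theta(1)}$, with both the upper and lower exponents positive constants.
\item For $a<1$, inverting gives $\tau_\zeta\asymp S_{\tau_\zeta}^{1/(1-a)}=\eps^{-\Theta(1)/(1-a)}$. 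The multiplicative constants are absorbed into the $\Theta(1)$ exponent because $\eps\to0$.
\item For $a=1$, $\tau_\zeta=\exp\bigl((S_{\tau_\zeta}+O(1))/\eta_0\bigr)=\exp(\eps^{-\Theta(1)})$.
\end{itemize}

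\textbf{Main obstacle.} The only delicate point is bookkeeping of the $\Theta$ constants. In \cref{thm:margin_euc}, "$V=\eps^{-\Theta(1)}$" must mean $\eps^{-c_1}\le V\le\eps^{-c_2}$ for fixed constants $0<c_1\le c_2$ and all small $\eps$. This follows by combining $c_\zeta\|w_{T_1}\|_2\le H_{\tau_\zeta}\le C_\zeta(1+\|w_{T_1}\|_2)$ with $\|w_{T_1}\|_2=\Theta(\ell_\eps)$, which comes from \eqref{eq:radius_tau_main} and the margin lower bound $\rho(w_{T_1})\ge\ell_\eps-O(1)$.

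The two-sided constants in $H\asymp\log(1+V)$ then give exponential bounds on $V$ in each direction. Finally, the inversion of $S_t$ must respect integer rounding, e.g.\ $S_t\le x<S_{t+1}$; this is harmless because $\eta_t\le\eta_0$.
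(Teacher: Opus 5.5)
Your proposal is correct and follows essentially the same route as the paper. You invert $S_t=\frac{\eta_0}{1-a}t^{1-a}+O(1)$ (resp.\ $S_t=\eta_0\log t+O(1)$) at $t=T_1^\delta(\eps)$ using \cref{thm:first-clock}, and at $t=\tau_\zeta^\delta(\eps)$ using $S_{\tau_\zeta}=S_{T_1}+V_{\tau_\zeta}$, where the two-sided bound $\tfrac12\eps^{-\alpha}\le V_{\tau_\zeta}\le\eps^{-\beta}$ makes $V_{\tau_\zeta}$ dominate $S_{T_1}=O(\ell_\eps)$. The constant bookkeeping you flag as the main obstacle (combining $H_{\tau_\zeta}\asymp\|w_{T_1}\|_2=\Theta(\ell_\eps)$ with $H_t\asymp_\delta\log(1+V_t)$) is exactly what the paper carries out inside the proof of \cref{thm:margin_euc}.
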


Using the gradient proxy for the cross-entropy loss introduced by \citet{fan2025spectral}, all the results in this section can be extended to the separable multiclass classification setting (details in App. \ref{app:multiclass}). 


\paragraph{Experiments}
\label{sec:experiments}

\begin{figure*}[t]
    \centering
    \captionsetup[subfigure]{
        font=small,
        labelfont=bf,
        skip=2pt
    }

    \begin{subfigure}[t]{0.242\textwidth}
        \centering
        \includegraphics[width=\linewidth]
        {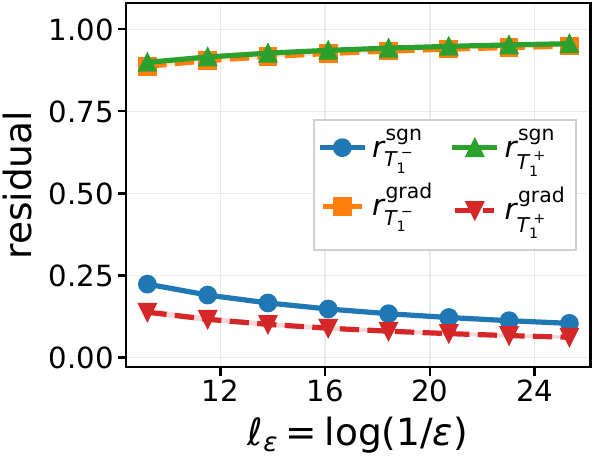}
        \caption{Update residuals}
        \label{fig:update-residuals}
    \end{subfigure}\hfill%
    \begin{subfigure}[t]{0.242\textwidth}
        \centering
        \includegraphics[width=\linewidth]
        {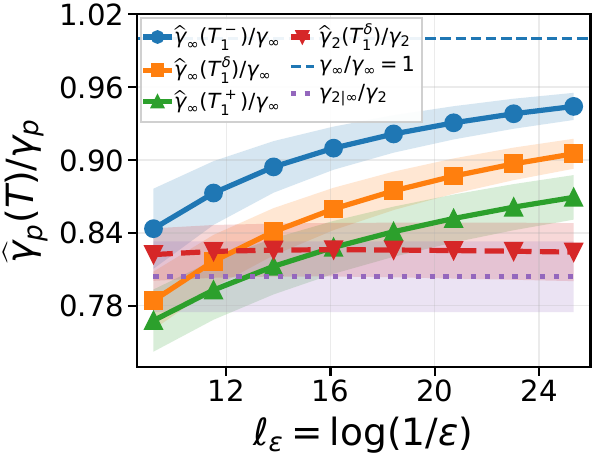}
        \caption{Classifier margins}
        \label{fig:classifier-margins}
    \end{subfigure}\hfill%
    \begin{subfigure}[t]{0.242\textwidth}
        \centering
        \includegraphics[width=\linewidth]
        {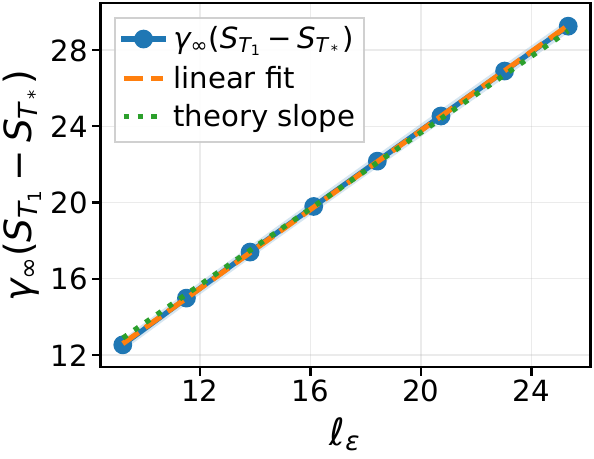}
        \caption{Local-update crossover}
        \label{fig:update-crossover}
    \end{subfigure}\hfill%
    \begin{subfigure}[t]{0.242\textwidth}
        \centering
        \includegraphics[width=\linewidth]
        {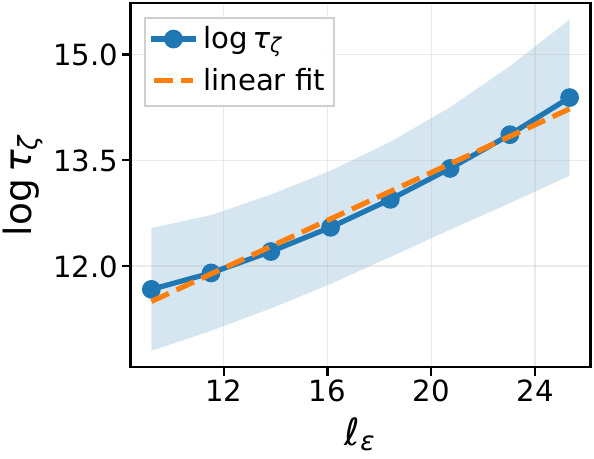}
        \caption{Classifier transition}
        \label{fig:classifier-transition}
    \end{subfigure}

    \caption{\textbf{Update--classifier separation and transition
    scales.}
    \textbf{(a)} Sign and gradient residuals at $T_1^{-}$ and
    $T_1^{+}$ versus $\ell_{\eps}=\log(1/\eps)$.
    \textbf{(b)} Max-norm margin ratios at $T_1^{-}$,
    $T_1$, and $T_1^{+}$, together with the
    Euclidean-margin ratio at $T_1$. The horizontal lines
    indicate $1$ and the across-seed mean of
    $\gamma_{2\mid\infty}/\gamma_2$.
    \textbf{(c)}
$\gamma_{\infty}\bigl(S_{T_1}-S_{T_\star}\bigr)$
    versus $\ell_{\eps}$. The fitted slope is $1.036$, close to the
    predicted unit slope. Since $\eta_t=\eta_0(t+1)^{-1/2}$ implies
    $S_t=2\eta_0\sqrt{t}+O(1)$, this also supports
    $T_1=\Theta(\ell_{\eps}^2)$.
    \textbf{(d)} $\log\tau_{\zeta}$ versus
    $\ell_{\eps}$, consistent with
    $\tau_{\zeta}=\eps^{-\Theta(1)}$.  See App. \ref{app:additional_binary}/App. \ref{app:nonlinear} for more binary/nonlinear experiments.}
    \label{fig:update-classifier-transition}
\end{figure*}
We test our predictions on eight balanced Gaussian-mixture datasets with $n=64$ and $d=128$. For seed $s$, we draw $\widetilde{\nu}_s\sim\mathcal{N}(0,I_d)$, set $\nu_{\star,s}=\widetilde{\nu}_s/\|\widetilde{\nu}_s\|_2$, and sample $X\mid Y=y\sim\mathcal{N}\!\left(8y\nu_{\star,s},D^2\right)$ where $D_{jj}=3^{(j-1)/(d-1)}$. Thus, the coordinatewise standard deviations range from $1$ to $3$. For each $\eps\in\{10^{-4},\ldots,10^{-11}\}$, we run Adam from $w_0=0$ with $(\beta_1,\beta_2)=(0.9,0.99)$ and $\eta_t=10^{-3}(t+1)^{-1/2}$. Define $T_\star:=\max_{\eps}\inf\{t:L_t\leq \log(2)/n\}$. We take $b=0.9$, $\delta=0.2$, and $\zeta=0.75\Delta_{\mathrm{geom}}$. All curves report the mean and one standard deviation across the seeds.

\cref{fig:update-residuals,fig:classifier-margins} confirm the separation between the update and classifier transitions. As $\eps\downarrow0$, the ordered residual pair $(r^{\mathrm{sgn}},r^{\mathrm{grad}})$ approaches $(0,1)$ at $T_1^-$ and $(1,0)$ at $T_1^+$. Meanwhile, the max-norm margin ratios at $T_1^-$, $T_1$, and $T_1^+$ all approach one, whereas the Euclidean-margin ratio at $T_1$ remains close to $\gamma_{2\mid\infty}/\gamma_2<1$. At the smallest tested value of $\eps$, we have $r_{T_1^+}^{\mathrm{grad}}=0.062\pm0.005$ while $ \nicefrac{( \gamma_2-\widehat{\gamma}_2(w_{T_1^+}))}{\Delta_{\mathrm{geom}}} =0.831\pm0.030$. Thus, more than $83\%$ of the max-norm-to-Euclidean gap remains even after the update has become strongly gradient-like. 
\cref{fig:update-crossover,fig:classifier-transition} support the predicted separation in time scales. The fitted slope of $\gamma_\infty(S_{T_1}-S_{T_\star})$ against $\ell_\eps$ is $1.036$, close to the predicted unit slope. Since $a=1/2$, the approximately linear dependence of $\sqrt{T_1}$ and $\log\tau_\zeta$ on $\ell_\eps$ agrees with $T_1=\Theta(\ell_\eps^2)$ and $\tau_\zeta=\eps^{-\Theta(1)}$. The experiments therefore reproduce the predicted polylogarithmic-versus-polynomial separation between the local-update and classifier transitions.
\section{Related Work}
\label{sec:relatedworks}

For separable linear classification, gradient descent converges in direction to the Euclidean max-margin classifier
\citep{soudry2018implicit,ji2019risk,nacson2019convergence}.
Its margin gap decays at a rate \(O(1/\log t)\), while normalized gradient descent and accelerated methods improve this rate to \(O(1/t)\) and \(O(\nicefrac{\log t}{t^2})\), respectively
\citep{ji2021primaldual,ji2021fast}.
Related results cover arbitrary constant step sizes
\citep{wu2023implicit},
multiclass classification
\citep{soudry2018implicit,ravi2024implicit},
homogeneous neural networks
\citep{lyu2019gradient},
matrix factorization
\citep{gunasekar2017implicit,li2020towards},
and self-attention
\citep{tarzanagh2023transformers};
see \citet{vardi2023implicit} for a survey.
Implicit bias under more general steepest-descent geometries has also been studied
\citep{fan2025spectral,tsilivis2025flavorsmarginimplicitbias,li2026implicit},
with sign and spectral descent as prominent examples
\citep{bernstein2018signsgd,carlson2015preconditioned,bernstein2024old}.
The implicit bias of mirror descent has also been investigated
\citep{sun2022mirror,JMLR:v24:23-0836}.

For Adam on separable linear data, a fixed positive stability constant leads to the Euclidean max-margin direction, whereas removing it yields convergence of the normalized margin to the optimal max-norm margin
\citep{wang2022does,zhang2024adam}.
In the per-sample setting without a stability constant, \citet{baek2026implicit} show that incremental sampling can alter this bias and, in an extreme case, recover the Euclidean max-margin direction.
Related analyses connect AdamW to $\ell_\infty$-constrained optimization
\citep{xie2024implicitbiasadamwellinfty}
and study the implicit bias of adaptive methods in homogeneous neural networks
\citep{wang2021implicit,gronich2026implicit}. In another related work, \citet{wang2026stability} study memoryless smoothed-sign descent---the \(\beta_1=\beta_2=0\) special case of Adam---under an annealed stability constant.
They show that the annealing rate selects a continuum of limiting classifiers along a Burg-type barrier path.
Their analysis compares endpoint geometries across annealing schedules; we instead study the within-trajectory evolution of the update and classifier geometries for bias-corrected Adam with a fixed stability constant.

\section{Conclusion}
\label{sec:conclusion}


We studied deterministic, full-batch Adam with a fixed $\eps>0$ on linearly separable classification problems. At the update crossover, Adam is already gradient-like, yet its classifier remains nearly max-norm-margin optimal and, when $\Delta_{\mathrm{geom}}>0$, bounded away from Euclidean optimality. Closing this gap requires $\Theta(\log(1/\eps))$ additional effective movement, yielding polylogarithmic-versus-polynomial transition times for $a\in(1/3,1)$ and polynomial-versus-exponential times for $a=1$. Important directions for future work include extending the current analysis to stochastic Adam, Adam with constant step sizes, and neural networks.

\bibliography{references}
\bibliographystyle{iclr2027_conference}

\clearpage
\clearpage
\appendix

\startcontents[appendices]

\printcontents[appendices]{}{1}{%
  \setcounter{tocdepth}{2}%
}

\vspace{1em}

\section{Experiments}
\label{app:exp}

\subsection{Binary Experiments} \label{app:additional_binary}
We present additional results from the binary experiments beyond those in \cref{fig:update-classifier-transition}. Recall that $T_1^{-} := T_1^{-}(\eps;b)$, $T_1 := T_1^{\delta}(\eps)$, and $T_1^{+} := T_1^+(\eps;b)$. All curves report the mean and one standard deviation across the seeds.  
For $\eps = 10^{-11}$, 
\cref{fig:app1} shows that the sign and gradient residuals converge to $(0,1)$ and $(1,0)$ when $\lambda_t = G_t/\eps$ become large and small, respectively. This demonstrates that $\lambda_t$ serves as an effective order parameter and supports the claims in \cref{prop:local-interpolation_app} in App. \ref{app:res}. \cref{fig:app2} shows that $\nicefrac{(S_{T_1} - S_{T_1^{-}})}{\log \ell_{\eps}}$ is bounded above (in fact, decreases as $\ell_{\eps}$ increases), supporting the claim $S_{T_1} - S_{T_1^-} = O(\log \ell_{\eps})$ in \cref{thm:first-clock}. Moreover, as $\ell_{\eps}$ increases, the curve of $\nicefrac{(S_{T_1^+} - S_{T_1})}{\ell_{\eps}^b}$ levels off after an initial increase, supporting the claim $S_{T_1^+} - S_{T_1} = \Theta(\ell_{\eps}^b)$ in \cref{thm:first-clock}. \cref{fig:app3} shows that the accumulated gradient residual \(E_t\) levels off, while the accumulated effective step size $H_t \asymp \log (1+V_t)$ diverges, supporting the claim of \cref{prop:bounded_euc}.  \cref{fig:app4} shows a linear relationship between $\sqrt{T_1}$ and $\ell_{\eps}$, further supporting the scaling behavior claimed in \cref{cor:iter_complex}. 

For each dataset, we compute $\gamma_\infty$ by linear programming using SciPy's HiGHS solver. We then minimize $\frac12\|u\|_2^2$ subject to $\|u\|_\infty\leq 1$ and $Zu\geq\gamma_\infty\mathbf{1}$. Here, $Z\in\mathbb{R}^{n\times d}$ has $i$-th row
$z_i^\top=y_i x_i^\top$. Denoting the resulting solution by $u_\star$, we compute $\gamma_{2|\infty}=\gamma_\infty/\|u_\star\|_2$. 

\begin{figure*}[h!]
    \centering
    \captionsetup[subfigure]{
        font=small,
        labelfont=bf,
        skip=2pt
    }

    \begin{subfigure}[t]{0.242\textwidth}
        \centering
        \includegraphics[width=\linewidth]
        {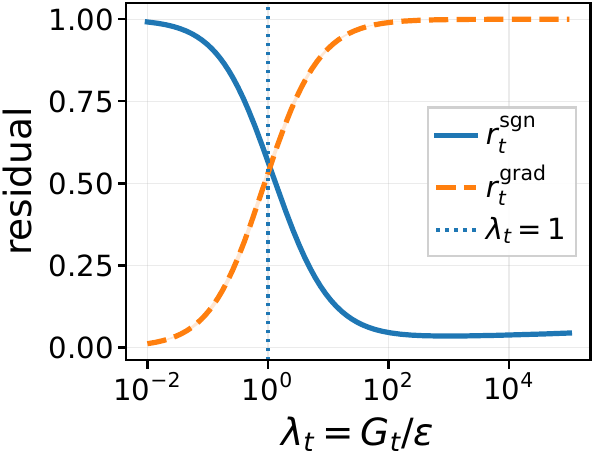}
        \caption{Residual vs. $\lambda_t$}
        \label{fig:app1}
    \end{subfigure}\hfill%
    \begin{subfigure}[t]{0.242\textwidth}
        \centering
        \includegraphics[width=\linewidth]
        {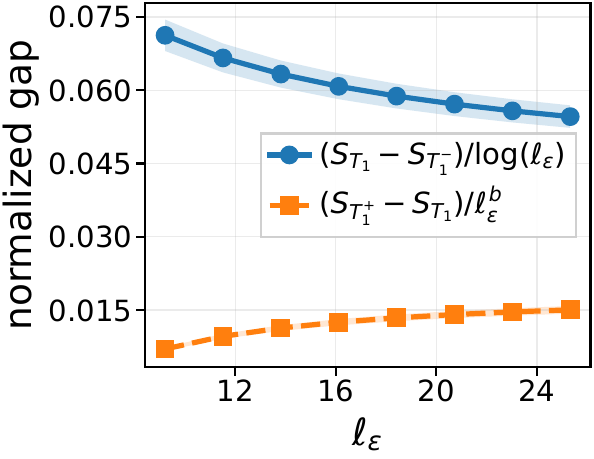}
        \caption{$S$-time difference}
        \label{fig:app2}
    \end{subfigure}\hfill%
    \begin{subfigure}[t]{0.242\textwidth}
        \centering
        \includegraphics[width=\linewidth]
        {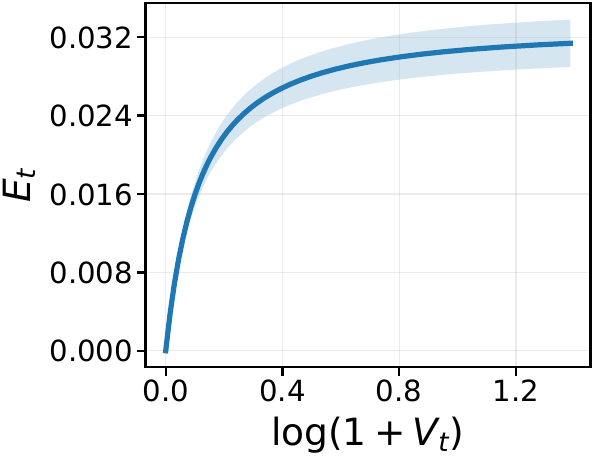}
        \caption{Accumulated Residual}
        \label{fig:app3}
    \end{subfigure}\hfill%
    \begin{subfigure}[t]{0.242\textwidth}
        \centering
        \includegraphics[width=\linewidth]
        {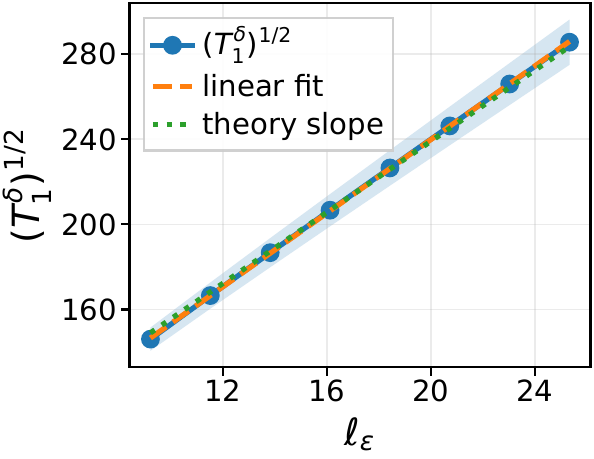}
        \caption{$\sqrt{T_1}$ vs. $\ell_{\eps}$}
        \label{fig:app4}
    \end{subfigure}

    \caption{Additional Binary Experiments. \textbf{(a)} Fix $\eps = 10^{-11}$. Sign ($r_t^{\rm sgn}$) and gradient residuals ($r_t^{\rm grad}$) against the order parameter $\lambda_t = G_t / \eps$. \textbf{(b)} $\nicefrac{(S_{T_1} - S_{T_1^{-}})}{\log \ell_{\eps}}$ and $\nicefrac{(S_{T_1^+} - S_{T_1})}{\ell_{\eps}^b}$ against $\ell_{\eps}$. (c) $E_t:= \sum_{s=T_1}^{t-1} \theta_s \norm{e_s}_2$ against $\log (1 + V_t)$. (d) $\sqrt{T_1}$ against $\ell_{\eps}$.}
    \label{fig:binary_app}
\end{figure*}

\subsection{Nonlinear Experiments} \label{app:nonlinear}
These experiments provide a qualitative extension of the linear results. Because the network is trained with a constant stepsize, we do not use them to test the iteration-time exponents derived for linear models. Moreover, exact max-margin computation is challenging, so we use empirical reference margins as approximations.

\paragraph{Experimental setting.} We adopt the experimental setup of \citet{tsilivis2025flavorsmarginimplicitbias}. For each of the eight seeds, we sample without replacement 50 images of the digit 3 and 50 images of the digit 6 from the MNIST training set. We assign labels $-1$ and $+1$, respectively, and shuffle the resulting $n=100$ examples. Each image is flattened into $x_i\in\mathbb{R}^{784}$ and divided by $255$ without additional preprocessing. Testing uses all 1,968 images labeled 3 or 6 in the official MNIST test set. The test data are used only to report loss and accuracy. 

We train a two-layer ReLU network of width $m = 128$, given by
\begin{equation} f_{\theta}(x) = \sum_{r=1}^{m} a_r \operatorname{ReLU}(w_r^\top x), \qquad \theta=(W,a), \label{eq:nonlinear-model} \end{equation} where $W=(w_1^\top,\ldots,w_m^\top)\in\mathbb{R}^{m\times d}$ and both layers are trained. The network has no bias terms and no $1/\sqrt{m}$ output normalization. We use the inactive ReLU subgradient at zero. With $\alpha=0.05$, the initialization is \begin{equation} W_{rj,0} \stackrel{\mathrm{i.i.d.}}{\sim} \mathcal{N}\!\left(0,\frac{\alpha^2}{d}\right), \qquad a_{r,0} \stackrel{\mathrm{i.i.d.}}{\sim} \mathcal{N}(0,\alpha^2). \label{eq:nonlinear-initialization} \end{equation} The signed logits and the logistic loss are 
\begin{align} 
q_i(\theta):=y_i f_\theta(x_i), \quad L(\theta):=\frac{1}{n}\sum_{i=1}^{n} \log\!\left(1+e^{-q_i(\theta)}\right). \label{eq:nonlinear-loss}
\end{align}
We further define
\begin{align}
     G(\theta):=\frac{1}{n}\sum_{i=1}^{n} \frac{1}{1+e^{q_i(\theta)}}, \qquad s_i(\theta):= \frac{1}{1 + \exp(q_i(\theta))}, \, i \in [n]. 
\end{align}
Denote $G_t := G(\theta_t)$ and $g_t := \nabla L(\theta_t)$.  The quantity $s_i(\theta)$ is the magnitude of the derivative of the logistic loss with respect to the signed logit $q_i(\theta)$. Therefore, $G_t$ is the average of this loss-derivative magnitude across samples. However, it does not track the actual parameter gradient scale. The actual parameter gradient is $g_t
=
-\frac{1}{n}
\sum_{i=1}^{n}
y_i s_i(\theta_t)
\nabla_{\theta} f_{\theta_t}(x_i)$, which involves the changing Jacobians $\nabla_{\theta} f_{\theta_t}(x_i)$. Consequently, unlike the linear setting, $G_t / \epsilon$ is only an approximate order parameter in the neural-network experiments. For Adam's hyperparameters, we set $(\beta_1,\beta_2)=(0.9,0.99)$ and use a constant stepsize $\eta=3\times10^{-3}$. 
We consider $\eps\in\{10^{-4},10^{-8},10^{-12}\}$ and run each trajectory for 100,000 updates, recording 480 logarithmically spaced checkpoints. 


For geometric references, we run paired GD and sign descent (SignGD) trajectories for at most 20,000 iterations using the same data, initialization, and stepsize. Because GD's update magnitude shrinks with the gradient, its late trajectory is stable. We define its reference margin as the median over the final $10\%$ of saved checkpoints. In contrast, because SignGD's update magnitude remains nearly constant, it can continue to move even after the loss becomes very small. We stop it at the first checkpoint at which the training loss falls below $10^{-7}$ and use its margin at that iterate. Because exact max-margin computation is intractable, we use the GD reference margin as the empirical proxy for $\ell_2$-max-margin.
We denote the reference margins of SignGD and GD by $\bar{\gamma}_{2,s}^{\mathrm{Sign}}$ and  $\bar{\gamma}_{2,s}^{\mathrm{GD}}$ for each seed $s$. The average values across seeds and standard deviations are: $\bar{\gamma}_{2}^{\mathrm{Sign}} = 0.026472 \pm 0.014431$ and $\bar{\gamma}_{2}^{\mathrm{GD}} = 0.507090 \pm 0.038898$. For every seed $s$, the value of  $\bar{\gamma}_{2,s}^{\mathrm{GD}} - \bar{\gamma}_{2,s}^{\mathrm{Sign}}$ is positive. 

\paragraph{Update and margin geometry.} After flattening $W$ and concatenating it with $a$, we measure the sign and gradient residuals \begin{align} r_t^{\mathrm{sgn}} := \frac{ \left\|g_t\odot \bigl(D_t-\operatorname{sign}(g_t)\bigr)\right\|_1 }{ \|g_t\|_1 }, \qquad
 r_t^{\mathrm{grad}} := \frac{\|\eps D_t-g_t\|_2}{\|g_t\|_2}. \label{eq:nonlinear-gradient-residual} 
\end{align}

A small $r_t^{\mathrm{sgn}}$ indicates a sign-like direction, whereas a small $r_t^{\mathrm{grad}}$ indicates $D_t\approx g_t/\eps$ and hence a gradient-like direction. We also compute the gradient residual separately on the $W$ and $a$ blocks, denoted as $r^{\rm grad}_{t,W}$ and $r^{\rm grad}_{t,a}$, respectively.  
The raw training margin is $\rho(\theta):=\min_{i\in[n]}y_i f_\theta(x_i)$. Because the network is two-homogeneous under the joint scaling $\theta\mapsto c\theta$, we use the scale-invariant margin: 
\[ \widehat{\gamma}_{2}(\theta) := \frac{\rho(\theta)}{\|\theta\|_2^2}, \qquad \|\theta\|_2^2 := \|W\|_F^2+\|a\|_2^2. \] This normalized margin is invariant to joint scaling but not to the function-preserving layer rebalancing $(W,a)\mapsto(cW,a/c)$. 
We define the following quantity to measure the classifier’s geometric transition:
\begin{equation} 
R_{2,s}(t) := \frac{ \bar{\gamma}_{2,s}^{\mathrm{GD}} - \widehat{\gamma}_{2,s}^{\mathrm{Adam}}(t) }{ \bar{\gamma}_{2,s}^{\mathrm{GD}} - \bar{\gamma}_{2,s}^{\mathrm{Sign}} }.
\label{eq:nonlinear-defs} 
\end{equation} 
It is equal to one when Adam's $\ell_2$-margin is replaced by the SignGD reference. For $R_2(t)$, the denominator is the total empirical Euclidean-margin gap between GD and SignGD, while the numerator is the part of this gap that Adam has not yet closed. Therefore, $R_2(t)$ is the fraction of the empirical SignGD-to-GD Euclidean-margin gap that remains, while $1-R_2(t)$ measures the progress towards the GD reference. 
Moreover, we can rewrite \eqref{eq:nonlinear-defs} as
\begin{align*}
\widehat{\gamma}_{2,s}^{\mathrm{Adam}}(t)
=
R_{2,s}(t)\overline{\gamma}_{2,s}^{\mathrm{Sign}}
+
\bigl(1-R_{2,s}(t)\bigr)
\overline{\gamma}_{2,s}^{\mathrm{GD}}.
\end{align*}
Therefore, when \(R_{2,s}(t)\in[0,1]\), it is an affine coefficient between the two corresponding scalar reference margins. 

\paragraph{Empirical transition times.}
We define a few time points that mark the geometric transitions of the update and the classifier. They are 
\begin{align*}
T_*
&:=
\text{first of five consecutive checkpoints with $100\%$ training accuracy},\\
T_G^{0.1}
&:=
\text{first of five consecutive checkpoints at or after $T_*$ with }
\frac{G_t}{\eps}\leq 0.1,\\
T_{\mathrm{upd}}^{0.30}
&:=
\text{first of five consecutive checkpoints at or after $T_*$ with }
\max\left\{
r_{t,W}^{\mathrm{grad}},
r_{t,a}^{\mathrm{grad}}
\right\}\leq 0.30,\\
\tau_{0.5}^{\mathrm{emp}}
&:=
\text{first checkpoint at or after $T_{\mathrm{upd}}^{0.30}$ from which }
R_2(t)\leq 0.5
\text{ holds at every remaining checkpoint},
\end{align*}
where the last definition requires at least five remaining checkpoints. The threshold \(0.30\) in $T_{\rm upd}^{0.30}$ means that in each layer, the mismatch between \( \epsilon D_t\) and \(g_t\) is at most \(30\%\) of the gradient norm. It signals that the update has entered a gradient-like regime where $\epsilon D_t \approx g_t$. The half-gap condition $R_2(t) \leq 0.5$ in the definition of $\tau_{0.5}^{\rm emp}$ is equivalent to $\hat{\gamma}_2^{\rm Adam} (t) \geq \frac{\bar{\gamma}_2^{\rm GD} + \bar{\gamma}_2^{\rm Sign}}{2}$. Thus, at \(\tau_{0.5}^{\mathrm{emp}}\), Adam has traversed at least half of the empirical Euclidean-margin interval from SignGD to GD. Moreover, this time is restricted to $t \geq T_{\rm upd}^{0.30}$. The experiment asks how long the classifier takes to change after the local update has already become gradient-like. It can happen that $R_2(t) > 0.5$ throughout the trajectory and $\tau_{0.5}^{\mathrm{emp}}$ is not reached by the end of training.   


\paragraph{Results.} 
\cref{fig:app_nonlinear1,fig:app_nonlinear2,fig:app_nonlinear3} show the results for $\eps = 10^{-4}$, $\eps = 10^{-8}$, and $\eps = 10^{-12}$, respectively. Each curve shows the mean over the eight seeds, and shaded bands show one sample standard deviation. The gray dashed, blue dotted, and green dashed vertical lines in each figure mark the medians of $T_*$, $T_G^{0.1}$, and $T_{\mathrm{upd}}^{0.30}$, respectively. Their values together with the first and third quartiles are reported in the following table for the three choices of $\eps$. We first note that the training accuracy saturates very early: the median values of $T_*$ are 8, 5, and 5, while the logistic loss continues to decrease by many orders
of magnitude. Thus, predictive accuracy alone does not reveal either geometric transition. 

\begin{table*}[h!]
    \centering
    \caption{ Event times $T_*$, $T_G^{0.1}$, $T_{\mathrm{upd}}^{0.30}$, and $\tau_{0.5}^{\mathrm{emp}}$ for $\eps \in \{10^{-4}, 10^{-8}, 10^{-12}\}$, reported as median \([Q_1,Q_3]\) over eight seeds and rounded to the
    nearest iteration, where \(Q_1\) and \(Q_3\) are the first and third quartiles.
    }
    \label{tab:nonlinear-event-times}
    \resizebox{\textwidth}{!}{
    \begin{tabular}{cccccc}
        \toprule
        $\eps$
        & $T_*$
        & $T_G^{0.1}$
        & $T_{\mathrm{upd}}^{0.30}$
        & $\tau_{0.5}^{\mathrm{emp}}$ \\
        \midrule
        $10^{-4}$
        & $8\,[4,8]$
        & $592\,[588,610]$
        & $1{,}106\,[1{,}100,1{,}106]$
        & $64{,}821\,[56{,}686,73{,}648]\;(3/8\ \mathrm{confirmed})$ \\

        $10^{-8}$
        & $5\,[2,8]$
        & $2{,}120\,[2{,}070,2{,}172]$
        & $2{,}970\,[2{,}952,2{,}970]$
        & Not observed \\

        $10^{-12}$
        & $5\,[2,8]$
        & $3{,}779\,[3{,}756,3{,}871]$
        & $4{,}808\,[4{,}808,4{,}808]$
        & Not observed \\
        \bottomrule
    \end{tabular}
    }
\end{table*}

For $\eps \in \{10^{-4}, 10^{-8}, 10^{-12} \}$, the corresponding medians of $T_{\rm upd}^{0.3}$ are 1106, 2970, and 4808. Therefore, the geometric transition is delayed as $\eps$ decreases. While the median sign residual is approximately one, the median gradient residual at $T_{\mathrm{upd}}^{0.30}$ decreases from \(0.280\) to \(0.241\) and then to \(0.212\). Thus, the updates in all three cases have entered the gradient-like regime and are no longer sign-like.   

The classifier transition is substantially slower. Every trajectory satisfies
$R_2(T_{\mathrm{upd}}^{0.30})>0.5$, with median values $0.677$, $0.765$, $0.818$  for $\eps = 10^{-4},10^{-8}$ and $10^{-12}$, respectively. At the last iteration, the corresponding median values remain $0.500$, $0.700$, and $0.776$. The defining event of 
$\tau_{0.5}^{\mathrm{emp}}$ is confirmed in only three of eight seeds for
$\eps=10^{-4}$ and in none of the seeds for $\eps=10^{-8}$ or $10^{-12}$.
Therefore, the transition toward Euclidean optimality occurs much later than the update's geometric transition. Moreover, decreasing $\eps$ leaves a larger empirical Euclidean-margin gap in the gradient-like regime.

 \begin{figure*}[h!]
    \centering

    \begin{subfigure}[t]{0.333\textwidth}
      \centering
      \includegraphics[width=\linewidth]{
        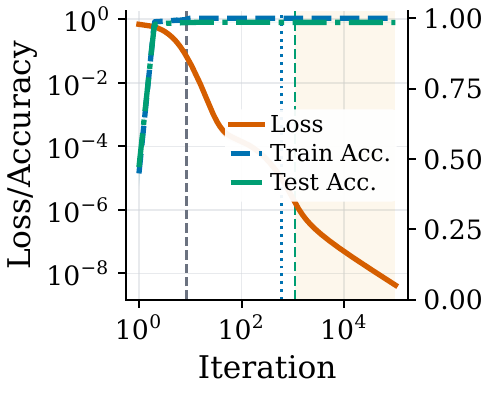
      }
      \caption{Loss and accuracy.}
      \label{}
    \end{subfigure}\hfill%
    \begin{subfigure}[t]{0.333\textwidth}
      \centering
      \includegraphics[width=\linewidth]{               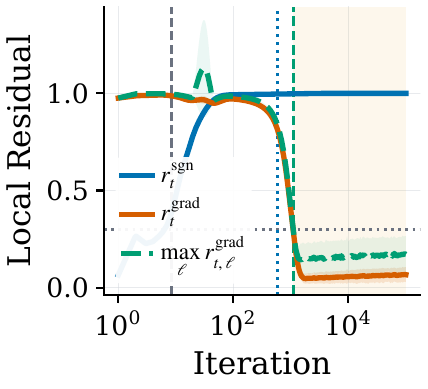
      }
      \caption{Update residuals.}
      \label{}
    \end{subfigure}\hfill%
    \begin{subfigure}[t]{0.333\textwidth}
      \centering
      \includegraphics[width=\linewidth]{
        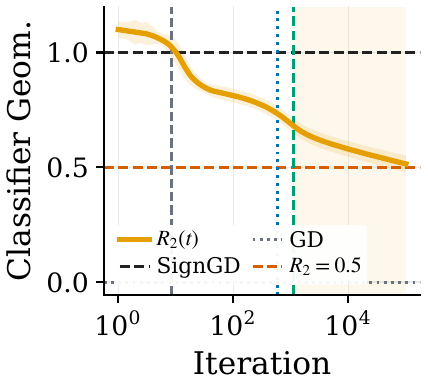
      }
      \caption{Margin geometry.}
      \label{}
    \end{subfigure}\hfill%

    \caption{Nonlinear experiment with $\eps=10^{-4}$. Curves show the mean over eight paired seeds, and shaded bands show one sample standard deviation. (a) Logistic loss and training and test accuracy against iterations. (b) The sign and gradient residuals and the maximum layerwise gradient residual against iterations. (c) $R_2(t)$ against iterations; the SignGD and GD references are one and zero, respectively, and $R_2=0.5$ marks half of the empirical Euclidean-margin gap.     
    }
    \label{fig:app_nonlinear1}
\end{figure*}

 \begin{figure*}[h!]
    \centering

    \begin{subfigure}[t]{0.333\textwidth}
      \centering
      \includegraphics[width=\linewidth]{
        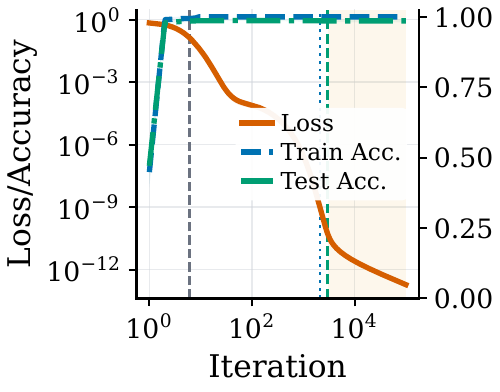
      }
      \caption{Loss and accuracy.}
      \label{}
    \end{subfigure}\hfill%
    \begin{subfigure}[t]{0.333\textwidth}
      \centering
      \includegraphics[width=\linewidth]{               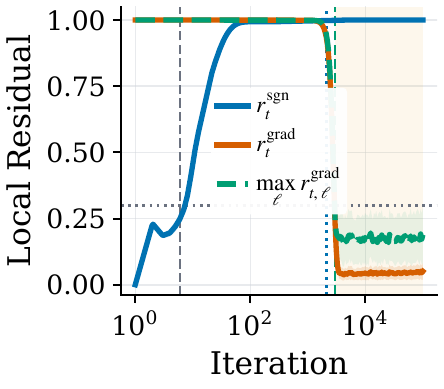
      }
      \caption{Update residuals.}
      \label{}
    \end{subfigure}\hfill%
    \begin{subfigure}[t]{0.333\textwidth}
      \centering
      \includegraphics[width=\linewidth]{
        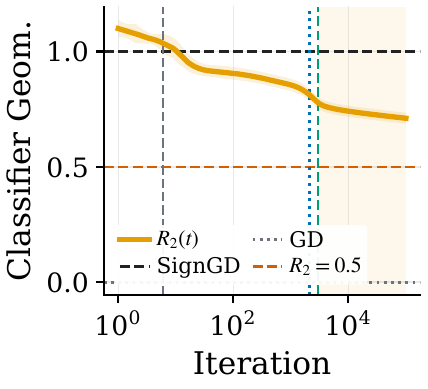
      }
      \caption{Margin geometry.}
      \label{}
    \end{subfigure}\hfill%
    \caption{
    Nonlinear experiment with $\eps=10^{-8}$. The plottings are the same as in Figure~\ref{fig:app_nonlinear1}.     
    }
    \label{fig:app_nonlinear2}
\end{figure*}

 \begin{figure*}[h!]
    \centering

    \begin{subfigure}[t]{0.333\textwidth}
      \centering
      \includegraphics[width=\linewidth]{
        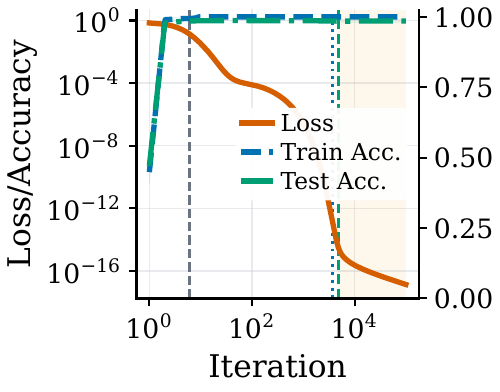
      }
      \caption{Loss and accuracy.}
      \label{}
    \end{subfigure}\hfill%
    \begin{subfigure}[t]{0.333\textwidth}
      \centering
      \includegraphics[width=\linewidth]{               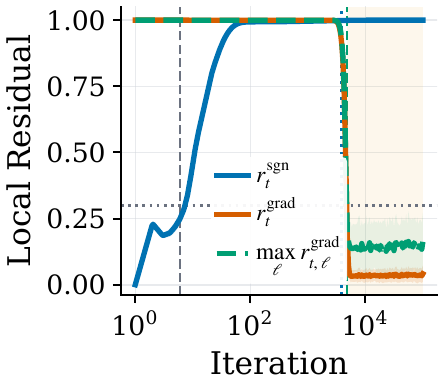
      }
      \caption{Update residuals.}
      \label{}
    \end{subfigure}\hfill%
    \begin{subfigure}[t]{0.333\textwidth}
      \centering
      \includegraphics[width=\linewidth]{
        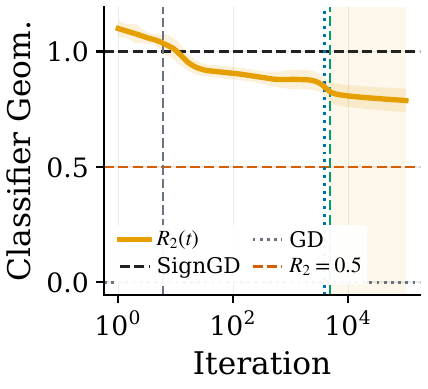
      }
      \caption{Margin geometry.}
      \label{}
    \end{subfigure}\hfill%
    \caption{
    Nonlinear experiment with $\eps=10^{-12}$. The plottings are the same as in Figure~\ref{fig:app_nonlinear1}.
    }
    \label{fig:app_nonlinear3}
\end{figure*}
\section{Preliminaries}
\label{app:prelim}

Our objective is 
$L(w) = \frac{1}{n} \sum_{i=1}^n \ell(z_i^T w)$ where $\ell(u) := \log(1+e^{-u})$.
Recall that 
\begin{align*}
    a_i(w):= -l'(z_i^T w) = \frac{1}{1 + e^{z_i^T w}}, \quad G(w):= \frac{1}{n} \sum_{i=1}^n a_i(w), \quad g(w) := \nabla L(w) = -\frac{1}{n} \sum_{i=1}^n a_i(w) z_i.
\end{align*}
For simplicity, we let $G_t := G(w_t)$, $L_t := L(w_t)$, and $g_t := g(w_t)$. We use the decreasing step size schedule $\eta_t = \eta_0 (1+t)^{-a}$ where $\eta_0 > 0$ and $a \in (1/3,1]$. We assume that the data is separable and consequently $\gamma_{\infty}, \gamma_2 > 0$. We further assume that $\beta_1$, $\beta_2$, and $\eps$ are chosen such that $0 \leq \beta_1 \leq \beta_2 < 1$ and $\eps \in (0,1]$. Recall that $R_1  :=\max_{i \in [n]} \norm{x_i}_1$ and $R_2  :=\max_{i \in [n]} \norm{x_i}_2$. 

\subsection{The loss and the proxy} 
\label{app:sub_prelim}
The lower bound on the $\ell_1$-norm of $g(w)$ was proved in \citet[Lemma~A.1]{zhang2024adam}. We give an extension to the $\ell_2$-norm case following the same approach. 
\begin{lemma}\label{lem:proxy}
It holds for every $w \in \R^d$ that 
\begin{align}
    \gamma_\infty G(w)\le\norm{g(w)}_1\le R_1G(w),
\qquad
\gamma_2G(w)\le\norm{g(w)}_2\le R_2G(w). \label{eq:proxy-gradient}
\end{align}
In particular, we have $G(w)>0$ and $g(w)\neq0$ at every finite $w$.
\end{lemma}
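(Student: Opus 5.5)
The upper bounds follow from the triangle inequality applied to the representation $g(w)=-\frac1n\sum_i a_i(w)z_i$, where $a_i(w)=1/(1+e^{z_i^\top w})>0$. For $p\in\{1,2\}$ we have
\[
\norm{g(w)}_p\le\frac1n\sum_i a_i(w)\norm{z_i}_p=\frac1n\sum_i a_i(w)\norm{x_i}_p\le R_pG(w),
\]
because $|y_i|=1$ gives $\norm{z_i}_p=\norm{x_i}_p$ and $R_p$ bounds every $\norm{x_i}_p$.

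The lower bounds use duality. Take the maximizer $u_\infty$ in \eqref{eq:max-margins} with $\norm{u_\infty}_\infty\le1$ and $\min_i z_i^\top u_\infty=\gamma_\infty$. This maximizer exists by compactness, and it is the dual norm of $\ell_1$ that appears here. By Hölder's inequality,
\[
\norm{g(w)}_1\ge\ip{-g(w)}{u_\infty}=\frac1n\sum_i a_i(w)\,z_i^\top u_\infty\ge\gamma_\infty\frac1n\sum_i a_i(w)=\gamma_\infty G(w).
\]
The middle step needs $a_i(w)\ge0$, which holds, and $z_i^\top u_\infty\ge\gamma_\infty$ for every $i$. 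The Euclidean case is identical: use the $\ell_2$ maximizer $u_2$ with $\norm{u_2}_2\le1$ and apply Cauchy--Schwarz in place of Hölder. This gives $\norm{g(w)}_2\ge\gamma_2G(w)$.

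For the final claim, every $a_i(w)$ is strictly positive at finite $w$, so $G(w)>0$. Separability gives $\gamma_\infty>0$, so the lower bound forces $\norm{g(w)}_1>0$, that is, $g(w)\neq0$.

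There is no real obstacle here. The only point needing care is to use the nonnegativity of the weights $a_i$ together with the dual pairing: the $\ell_1$ bound pairs with the $\ell_\infty$ ball and the $\ell_2$ bound pairs with itself.
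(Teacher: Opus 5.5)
Your proposal is correct and follows the paper's proof: the triangle inequality on $g(w)=-\frac1n\sum_i a_i(w)z_i$ gives the upper bounds, pairing $-g(w)$ with the attained maximizers $u_\infty$ and $u_2$ (via Hölder and Cauchy--Schwarz, using $a_i(w)>0$ and $z_i^\top u\ge\gamma$) gives the lower bounds, and positivity of the $a_i$ yields $G(w)>0$ and $g(w)\neq 0$. The remaining details you supply, such as $\norm{z_i}_p=\norm{x_i}_p$ because $|y_i|=1$, are exactly what that argument requires.
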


\begin{proof} Recall that $z_i := x_i y_i$. We let $\phi(u) := \min_{1 \leq i \leq n} z_i^T u$. This function is continuous because it is the pointwise minimum of finitely many continuous functions. Hence, the maxima are obtained when maximizing  $\phi(u)$ over the compact feasible sets $\{ u: \norm{u}_{\infty} \leq 1\}$ and $\{u: \norm{u}_2 \leq 1\}$. We denote the maximizers by $u_{\infty}$ and $u_{2}$ respectively and they satisfy: 
\begin{align*}
    \norm{u_{\infty}}_{\infty} \leq 1, \min_{i}z_i^T u_{\infty} = \gamma_{\infty} \quad \text{and} \quad \norm{u_{2}}_{2} \leq 1, \min_{i}z_i^T u_{2} = \gamma_{2}.
\end{align*}
Consequently, we have $z_{i}^T u_{\infty} \geq \gamma_{\infty}, \forall i$ and $z_i^T u_2 \geq \gamma_2, \forall i$ (note that $\gamma_{\infty}$ and $\gamma_2$ are positive given the data is separable). This implies the following
\begin{align*}
    -\langle g(w), u_{\infty} \rangle = \frac{1}{n} \sum_{i=1}^n a_i(w) z_i^T u_{\infty} \geq  \frac{1}{n} \sum_{i=1}^n a_i(w) \gamma_{\infty} = \gamma_{\infty} G(w). 
\end{align*}
Consequently, we have $\gamma_{\infty} G(w) \leq -\langle g(w), u_{\infty} \rangle \leq |\langle g(w), u_{\infty} \rangle| \leq \norm{g(w)}_1 \norm{u_{\infty}}_{\infty} \leq \norm{g(w)}_1$. Similarly for the Euclidean norm, it holds that
\begin{align*}
        -\langle g(w), u_{2} \rangle = \frac{1}{n} \sum_{i=1}^n a_i(w) z_i^T u_{2} \geq  \frac{1}{n} \sum_{i=1}^n a_i(w) \gamma_{2} = \gamma_{2} G(w). 
\end{align*}
Consequently, we have $\gamma_{2} G(w) \leq -\langle g(w), u_{2} \rangle \leq |\langle g(w), u_{2} \rangle| \leq \norm{g(w)}_2 \norm{u_{2}}_{2} \leq \norm{g(w)}_2$. For the upper bounds, we have via triangle inequality that
\begin{align*}
    \norm{g(w)}_2 = \norm{-\frac{1}{n} \sum_{i=1}^n a_i(w) z_i}_2 \leq \frac{1}{n} \sum_{i=1}^n a_i(w) \norm{z_i}_2 \leq R_2 G(w), \\
    \norm{g(w)}_{1} = \norm{-\frac{1}{n} \sum_{i=1}^n a_i(w) z_i}_{1} \leq \frac{1}{n} \sum_{i=1}^n a_i(w) \norm{z_i}_{1} \leq R_1 G(w).
 \end{align*}
Finally, since $e^{z_i^T w} \in (0, \infty)$, it holds that $a_i(w) = \frac{1}{1 + e^{z_i^T w}} \in (0,1)$ and consequently $G(w) = \frac{1}{n} \sum_{i=1}^n a_{i}(w) > 0$, and the conclusion $g(w) \neq 0$ for every finite $w$ follows as $\norm{g(w)}_1 > 0$ ($\norm{g(w)}_2 > 0$). 
\end{proof}

The lower bound on $\frac{G(w)}{L(w)}$ and the conclusion $\rho(w) \geq 0$ are proved in \citet[Lemma~C.4,C.7]{zhang2024adam}. We provide the proof here for completeness.

\begin{lemma}
\label{lem:loss-proxy}
For every $w \in \R^d$, we have
\begin{equation}
G(w)\le L(w),
\qquad
\frac{G(w)}{L(w)}\ge1-\frac{nL(w)}2.
\label{eq:G-L-sandwich}
\end{equation}
If either $G(w)\le1/(2n)$ or $L(w)\le\log2/n$ holds, then  we have $L(w)\le2G(w)$. Define $\rho(w):= \min_{i \in [n]} z_i^T w$. The inequality $L(w)\le\log2/n$ implies that
\begin{equation}
\rho(w)\ge0 \quad \text{and} \quad 
L(w)\ge\frac{\log2}{n}e^{-\rho(w)}.
\label{eq:loss-margin}
\end{equation}
\end{lemma}

\begin{proof}
Let $\ell_i :=\ell(z_i^\top w)$. Since
$e^{-\ell_i}=1/(1+e^{-z_i^\top w})$, it holds that
\[
a_i(w)=\frac1{1+e^{z_i^\top w}}=1-e^{-\ell_i}.
\]
For $x\ge0$, we have
$1-e^{-x}\le x$ and $1-e^{-x}\ge x-x^2/2$. Averaging these inequalities gives
\[
G(w)\le L(w),
\qquad
G(w)\ge L(w)-\frac1{2n}\sum_i\ell_i^2 \stackrel{(a)}{\geq} L(w)-\frac{n L(w)^2}{2},
\]
where (a) follows from 
$\sum_i\ell_i^2\le(\sum_i\ell_i)^2=n^2L(w)^2$. Rearranging leads to the desired. 

If $G(w)\le1/(2n)$, we have $a_i(w)\le nG(w)\le1/2, \forall i$. Since
$\ell_i=-\log(1-a_i)$ and $-\log(1-x)\le2x$ on $[0,1/2]$, we obtain
\[
L(w)=\frac1n\sum_i\ell_i = \frac1n\sum_i -\log(1 - a_i(w)) \le\frac2n\sum_i a_i(w)=2G(w).
\]
If instead $L(w)\le\log2/n$, then each individual loss satisfies
$\ell_i\le nL(w)\le\log2$, hence $a_i=1-e^{-\ell_i}\le1/2$, and the same argument applies.

Given $L(w) \leq \frac{\log 2}{n}$, we have $\ell_i \leq \log 2, \forall i$. The function $\ell(u) = \log(1 + e^{-u})$ is strictly decreasing as $\ell'(u) = -\frac{1}{1+e^u} < 0$. Thus, given $z_i^T w < 0$, we have that $\ell_i = \ell(z_i^T w) > l(0) = \log 2$, which is a contradiction. Since $z_i^T w \geq 0, \forall i$, we conclude that $\rho(w) \geq 0$ by taking the minimum over $i$. 

Let $i^*$ be the index such that $\rho(w) = z^T_{i^*}$. We then have
\begin{align}
    L(w) = \frac{1}{n} \sum_{i=1}^n \log (1 + e^{-z_i^T w}) \geq \frac{1}{n} \log (1 + e^{-z_{i^*}^T w}) = \frac{1}{n} \log (1 + e^{-\rho(w)}). \label{eq:temp}
\end{align}
Define $x:= e^{-\rho(w)}$. We have $0 < x \leq 1$ given $\rho(w) \geq 0$. The function $f(x) = \log(1+x)$ is concave on $[0,1]$ because $f''(x) = - \frac{1}{(1+x)^2} < 0$. By concavity, we have
\begin{align*}
    \log(1+x) = f(x) = f((1-x)0 + 1x) \geq (1-x)f(0) + xf(1) = x \log2.
\end{align*}
Substituting $x = e^{-\rho_(w)}$, we obtain $\log (1 + e^{-\rho(w)}) \geq (\log 2) e^{-\rho(w)}$. Combining this with \eqref{eq:temp} finishes the proof. 
\end{proof}

The result in the following proposition was shown to hold for the step size schedule $\eta_t = (t+2)^{-a}$ in \citet[Lemma~C.1.]{zhang2024adam}. We provide a proof here for completeness and show it holds for the schedule $\eta_t = \eta_0 (t+1)^{-a}$. 
\begin{proposition}
\label{prop:ema-sum}
Let $\eta_t=\eta_0(t+1)^{-a}$ with $a\in(0,1]$ and $\eta_0 > 0$. For every $q\in[0,1)$ and $c>0$, there exist a time $t_{0} = t_0(q,c,a,\eta_0)$ and a constant $C_{\rm geo}=C_{\rm geo}(q,c,a,\eta_0)$ such that
\begin{equation}
\sum_{r=0}^t q^r
\left[
\exp\!\left(c\sum_{s=t-r}^{t-1}\eta_s\right)-1
\right]
\le C_{\rm geo} \eta_t,
\qquad t\ge t_{0}.
\label{eq:ema-sum}
\end{equation} 
\end{proposition}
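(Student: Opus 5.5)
The plan is to split the sum at $r\approx t/2$ and bound the two pieces separately. Write $\Sigma_r:=\sum_{s=t-r}^{t-1}\eta_s$, so that $\Sigma_0=0$ and the $r=0$ term vanishes. When $q=0$ the whole sum is identically zero, so assume $q\in(0,1)$.

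\emph{Near range, $1\le r\le \lfloor t/2\rfloor$.} Every index $s\in[t-r,t-1]$ satisfies $s+1\ge t/2$. Hence $\eta_s\le \eta_0 (t/2)^{-a}\le 2^a\cdot 2^a\eta_t$ for $t\ge1$, using $(t+1)/t\le 2$. This gives $\Sigma_r\le c_1 r\eta_t$ with $c_1:=4^a$. Now use the elementary inequality $e^x-1\le x e^x$ to get
\[
\sum_{r=1}^{\lfloor t/2\rfloor} q^r\bigl[e^{c\Sigma_r}-1\bigr]\le c\,c_1\eta_t\sum_{r\ge1} r\,\bigl(q e^{c c_1\eta_t}\bigr)^r .
\]
Since $\eta_t\to0$, choose $t_0$ so that $e^{c c_1\eta_t}\le q^{-1/2}$ for $t\ge t_0$. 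The series is then at most $\sum_r r q^{r/2}=q^{1/2}/(1-q^{1/2})^2$, so this piece is $O(\eta_t)$ with a constant depending only on $(q,c,a,\eta_0)$.

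\emph{Far range, $\lfloor t/2\rfloor<r\le t$.} Here I bound crudely, $\Sigma_r\le S_t:=\sum_{s=0}^{t-1}\eta_s$, so the piece is at most $(t+1)\,q^{t/2}\,e^{cS_t}$.
\begin{itemize}
\item For $a=1$, $S_t\le \eta_0(1+\log t)$, so $e^{cS_t}$ grows only polynomially in $t$.
\item For $a<1$, $S_t\le \eta_0 t^{1-a}/(1-a)+\eta_0$, which is $o(t)$.
\end{itemize}
In both cases $(t+1)q^{t/2}e^{cS_t}=\exp(-\tfrac{t}{2}\log(1/q)+o(t))$. This decays exponentially and is therefore $\le \eta_t=\eta_0(t+1)^{-a}$ once $t$ is large, with the threshold depending on $(q,c,a,\eta_0)$. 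Enlarge $t_0$ accordingly and add the two pieces to obtain \eqref{eq:ema-sum}.

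The main obstacle is the far range when $a=1$. There the exponential factor $e^{cS_t}$ is only polynomially large, but I must make sure the crude bound still loses to $q^{t/2}$ and does not need a finer split. Since $t^{c\eta_0}q^{t/2}\to0$ faster than any power of $t$, this works. The near range is routine once $\eta_s\lesssim\eta_t$ is established on $[t/2,t]$.
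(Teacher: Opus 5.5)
Your proof is correct, but it is organized differently from the paper's.

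The paper treats the two regimes separately. For $0<a<1$ it does \emph{not} split the sum. It uses concavity of $x^{1-a}$ to get the linear bound $\sum_{s=t-r}^{t-1}\eta_s\le \frac{2^a}{1-a}\,r\eta_t$ for \emph{every} $0\le r\le t$. It then expands $e^{d_t r}-1$ as a power series and applies the moment bound $\sum_r q^r r^k\le k!/(1-q)^{k+1}$. This yields a closed-form threshold $t_0$ and constant $C_{\rm geo}=\frac{2^{a+1}c}{(1-a)(1-q)^2}$, with no exponentially small remainder to control. Only for $a=1$, where no linear-in-$r$ bound with a fixed constant holds for $r$ near $t$, does the paper split at $t/2$. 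There it keeps the exact form $\log\bigl(1+r/(t-r+1)\bigr)$ and uses a binomial expansion.

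You instead use a single split at $t/2$ for all $a\in(0,1]$. In the near range you use $\eta_s\le 4^a\eta_t$ and $e^x-1\le xe^x$, absorbing the exponential into the geometric ratio $qe^{cc_1\eta_t}\le q^{1/2}$. In the far range you use the crude bound $(t+1)q^{t/2}e^{cS_t}$, which suffices because $S_t=o(t)$ in both regimes.

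What each route buys:
\begin{itemize}
\item Your argument is shorter, avoids the power-series and binomial bookkeeping, and handles $a<1$ and $a=1$ uniformly.
\item Your near-range constant stays bounded as $a\uparrow 1$, whereas the paper's $a<1$ constant carries a factor $1/(1-a)$.
\item The cost is that your $t_0$ comes from an exponential-versus-polynomial comparison in the far range. It is explicit in principle but messier than the paper's closed form for $a<1$.
\end{itemize}
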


\begin{proof} If \(q=0\), then the sum is zero. In this case, we can take \(t_0=0\) and \(C_{\mathrm{geo}}=0\). Henceforth, we assume \(q\in(0,1)\). We first prove 
\begin{align}
    \sum_{r=0}^{\infty} q^r r^k \leq \frac{k ! }{(1-q)^{k+1}}, \label{eq:use_q}
\end{align}
where $k$ is an integer with $k \geq 1$. For every integer $r \geq 0$, it holds that $r^k \leq (r+1)(r+2)\cdots (r+k) = k!\binom{r+k}{k} $. Therefore, we have $\sum_{r=0}^{\infty} q^r r^k
\le
k!\sum_{r=0}^{\infty} \binom{r+k}{k} q^r$. To evaluate the last series, we can differentiate $\sum_{n=0}^{\infty} q^n = \frac{1}{1-q}$ exactly $k$ times. Since $0 < q < 1$, the power series can be differentiated term by term and we obtain $\sum_{n=k}^{\infty}\frac{n!}{(n-k)!}\,q^{n-k}
= \frac{k!}{(1-q)^{k+1}}$. Setting $r = n-k$ and dividing by $k!$, it holds that $\sum_{r=0}^{\infty} \binom{r+k}{k} q^r = \frac{1}{(1-q)^{k+1}}$. Substituting this into the previous inequality finishes the proof of \eqref{eq:use_q}. 

Define $H_{t,r} := \sum_{s = t - r}^{t-1} \eta_s, 0 \leq r \leq t$ and $\mathcal E_t = \sum_{r=0}^t q^r (e^{c H_{t,r}} - 1)$. We derive a bound on $H_{t,r}$. Changing variables to $j = s+1$, we can write $H_{t,r}
= \eta_0\sum_{j=t-r+1}^{t}j^{-a}$. For every integer $j \geq 1$, it holds that
\begin{align*}
    j^{-a}=
\left(\frac{j+1}{j}\right)^a (j+1)^{-a} \le
2^a (j+1)^{-a}.
\end{align*}
Moreover, because $x^{-a}$ is decreasing, we have $(j+1)^{-a}
\le \int_j^{j+1} x^{-a}\,dx$. Combining these two inequalities leads to $H_{t,r} \le
2^a\eta_0\int_{t-r+1}^{t+1}x^{-a}\,dx,
\, 0\le r\le t$. Summing over $j=t-r+1,...,t$, we have
\begin{align}
    H_{t,r}
\le
2^a\eta_0\int_{t-r+1}^{t+1}x^{-a}\,dx,
\qquad 0\le r\le t. \label{eq:H_bound}
\end{align}

We first consider the case where $0< a <1$. Evaluating the integral in \eqref{eq:H_bound}, we obtain
\begin{align}
    H_{t,r}
\le
\frac{2^a\eta_0}{1-a}
\left[
(t+1)^{1-a}-(t-r+1)^{1-a}
\right]. \label{eq:H_bound2}
\end{align}
Define $u:= \frac{t - r+1}{t+1} \in (0,1]$. Since $0 < 1 - a < 1$, it holds that $u^{1 -a} \geq u$. Consequently, we have
\begin{align*}
    (t+1)^{1-a}-(t-r+1)^{1-a}
&= (t+1)^{1-a}\left(1-u^{1-a}\right) \le (t+1)^{1-a}(1-u)\\
&= (t+1)^{1-a}\frac{r}{t+1} = \frac{r}{(t+1)^a}.
\end{align*}
Substituting this into \eqref{eq:H_bound2} gives $H_{t,r} \le \frac{2^a\eta_0}{1-a}\frac{r}{(t+1)^a}
= \frac{2^a}{1-a}\,r\eta_t$. Define $d_t := \frac{2^a c}{1 - a} \eta_t$. We have $c H_{t,r} \leq d_t r$ and consequently $\mathcal E_t \leq \sum_{r = 0}^t q^r (e^{d_t r} - 1)$. 

Using $e^{d_t r} - 1 = \sum_{k=1}^{\infty} \frac{d_t^k r^k}{k !}$, we obtain
\begin{align*}
    \mathcal E_t
&\le
\sum_{r=0}^{t} q^r
\sum_{k=1}^{\infty} \frac{d_t^k r^k}{k!} =
\sum_{k=1}^{\infty} \frac{d_t^k}{k!}
\sum_{r=0}^{t} q^r r^k \le
\sum_{k=1}^{\infty} \frac{d_t^k}{k!}
\sum_{r=0}^{\infty} q^r r^k.
\end{align*}
The exchange of the sums is justified because all terms are nonnegative. Applying \eqref{eq:use_q}, we further obtain
\begin{align*}
    \mathcal E_t \le
\sum_{k=1}^{\infty}
\frac{d_t^k}{k!}
\frac{k!}{(1-q)^{k+1}} =
\frac{1}{1-q}
\sum_{k=1}^{\infty}
\left(\frac{d_t}{1-q}\right)^k.
\end{align*}
Next, we choose $t_0$ to be 
\begin{align}
    t_0
:=
\left\lceil
\left(
\frac{2^{a+1}c\eta_0}{(1-a)(1-q)}
\right)^{1/a}
\right\rceil. \label{eq:t_01}
\end{align}
Then, for every $t \geq t_0$, it holds that $(t+1)^a
\ge
\frac{2^{a+1}c\eta_0}{(1-a)(1-q)}$, or equivalently, $\frac{d_t}{1-q} \leq \frac{1}{2}$.
For any $x \in [0,1/2]$, we have $\sum_{k=1}^{\infty} x^k = \frac{x}{1 - x} \leq 2x$. We apply this to $\frac{d_t}{1-q}$ to obtain $\mathcal E_t
\le
\frac{1}{1-q}\frac{2d_t}{1-q}
=
\frac{2d_t}{(1-q)^2}.$
Substituting the definition of $d_t$, we obtain
\begin{align*}
    \mathcal E_t  \leq \frac{2^{a+1} c}{(1-a)(1-q)^2} \eta_t.
\end{align*}
Consequently, for $0 < a < 1$, we can take $C_{\rm geo} = \frac{2^{a+1} c}{(1-a)(1-q)^2}$ and $t_0$ to be the one in \eqref{eq:t_01}. 

Next, we consider the case where $a = 1$. In this case, we have $\eta_t = \frac{\eta_0}{t + 1}$. From \eqref{eq:H_bound}, it holds that 
\begin{align*}
    H_{t,r} \le
2\eta_0\int_{t-r+1}^{t+1}\frac{dx}{x} =
2\eta_0\log\left(\frac{t+1}{t-r+1}\right) =
2\eta_0\log\left(1+\frac{r}{t-r+1}\right).
\end{align*}
Set $m :=\lceil 2 c \eta_0 \rceil$. We then obtain $e^{cH_{t,r}} \le
\left(1+\frac{r}{t-r+1}\right)^{2c\eta_0} \le
\left(1+\frac{r}{t-r+1}\right)^m$. From this, applying the binomial theorem leads to
\begin{align*}
    e^{cH_{t,r}}-1
\le
\sum_{k=1}^{m}
\binom{m}{k}
\left(\frac{r}{t-r+1}\right)^k.
\end{align*}
Multiplying by $q^r$ and summing gives 
\begin{align}
    \mathcal{E}_{t} \leq \sum_{k=1}^{m}
\binom{m}{k} U_{t,k}, \qquad U_{t,k}:=
\sum_{r=0}^{t}
q^r\left(\frac{r}{t-r+1}\right)^k. \label{eq:uk}
\end{align} 
We show each $U_{t,k}$ where $1 \leq k \leq m$ is bounded by a constant multiple of $1/t$. For $t \geq 1$, we decompose $U_{t,k}$ into $U_{t,k} = U^{\rm s}_{t,k} + U^{\rm l}_{t,k}$ where 
\begin{align*}
    U_{t,k}^{\mathrm{s}}
:=
\sum_{r=0}^{\lfloor t/2\rfloor}
q^r\left(\frac{r}{t-r+1}\right)^k, \qquad U_{t,k}^{\mathrm{l}}
:=
\sum_{r=\lfloor t/2\rfloor+1}^{t}
q^r\left(\frac{r}{t-r+1}\right)^k. 
\end{align*}
If $r \leq \lfloor t/2 \rfloor$, then it holds that $t  - r + 1 \geq \frac{t}{2}$ and consequently $\frac{r}{t-r+1} \leq \frac{2r}{t}$. Given this, we obtain for $U_{t,k}^{\mathrm{s}}$ that
\begin{align*}
    U_{t,k}^{\mathrm{s}} \le
\left(\frac{2}{t}\right)^k
\sum_{r=0}^{\lfloor t/2\rfloor} q^r r^k \le
\left(\frac{2}{t}\right)^k
\sum_{r=0}^{\infty} q^r r^k \le
\frac{2^k k!}{(1-q)^{k+1}}\,t^{-k}. 
\end{align*}
For $t \geq 1$ and $k \geq 1$, it holds that $t^{-k} \leq t^{-1}$. Therefore, we conclude that
\begin{align}
    U_{t,k}^{\mathrm{s}}
\le
\frac{2^k k!}{(1-q)^{k+1}}\,\frac{1}{t}, \qquad \text{when} \, t \geq 1, \, 1 \leq k \leq m. \label{eq:us}
\end{align}
For $r > \lfloor t/2 \rfloor$, we have $r > t /2$ and consequently $q^r \leq q^{t/2}$. Moreover, given $t - r + 1 \geq 1$, it holds that $(\frac{r}{t -r + 1})^k \leq r^k \leq t^k$. Since there are at most $t$ terms in the sum $U_{t,k}^{\mathrm{l}}$, we have $U_{t,k}^{\mathrm{l}} \leq q^{t/2} t^{k+1}$. To make this at most $1/t$ for every $1 \leq k \leq m$, it is sufficient to require $q^{t/2} t^{m+2} \leq 1$. Let $\lambda := - \log q > 0$. This condition is equivalent to $(m+2) \log t \leq \frac{\lambda t}{2}$, which is guaranteed whenever $(m+2) \sqrt{t} \leq \frac{\lambda t }{2}$ given $\log t \leq \sqrt{t}$ for $t \geq 1$. Therefore, we choose $t_0$ in this case to be
\begin{align}
    t_0
:=
\left\lceil
\left(
\frac{2(m+2)}{-\log q}
\right)^2
\right\rceil, \qquad m = \lceil 2 c \eta_0 \rceil. \label{eq:t01}
\end{align}
For every $t \geq t_0$, it holds that $q^{t/2} t^{m+2} \leq 1$. Therefore, we obtain for $1 \leq k \leq m$ that
\begin{align}
    U_{t,k}^{\mathrm{l}} \le q^{t/2}t^{k+1} = \bigl(q^{t/2}t^{m+2}\bigr)t^{k-m-1} \le t^{k-m-1} \le \frac{1}{t}. \label{eq:ul}
\end{align}
Combining \eqref{eq:us} and \eqref{eq:ul}, we obtain 
\begin{align*}
    U_{t,k} \leq [\frac{2^k k!}{(1-q)^{k+1}}+1] \frac{1}{t}, \qquad t \geq t_0.
\end{align*}
Substituting this into \eqref{eq:uk}, we obtain
\begin{align*}
    \mathcal{E}_t \leq \frac{1}{t} \sum_{k=1}^m \binom{m}{k} [\frac{2^k k!}{(1 - q)^{k+1}} + 1], \qquad t \geq t_0.  
\end{align*}
Define $A_m(q) := \sum_{k=1}^m \binom{m}{k} [\frac{2^k k!}{(1 - q)^{k+1}} + 1]$. Since $t \geq t_0 \geq 1$, we have $\frac{1}{t} \leq \frac{2}{t+1} = \frac{2}{\eta_0} \eta_t$. Therefore, we conclude that $\mathcal{E}_t\leq \frac{2 A_m(q)}{\eta_0} \eta_t$. For $a = 1$, we can take $C_{\rm geo}$ to be 
\begin{align*}
    C_{\rm geo}
=
\frac{2m}{\eta_0}
\max_{1\le k\le m}\binom{m}{k}
\left[
\frac{2^m m!}{(1-q)^{m+1}}+1
\right],
\end{align*}
and $t_0$ to be the one in \eqref{eq:t01}. 
\end{proof}

\begin{lemma}
\label{lem:derivative-ratio}
For any $u,v\in\R$, we have
\begin{equation}
e^{-|u-v|}
\le\frac{-\ell'(u)}{-\ell'(v)}
\le e^{|u-v|}.
\label{eq:derivative-ratio}
\end{equation}
Consequently, it holds that
$\left|\frac{-\ell'(u)}{-\ell'(v)}-1\right|
\le e^{|u-v|}-1$.

\end{lemma}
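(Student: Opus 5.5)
The plan is to compute the ratio explicitly and reduce it to a one-dimensional monotonicity statement. Since $-\ell'(u)=\frac{1}{1+e^{u}}$, we have
\[
\frac{-\ell'(u)}{-\ell'(v)}=\frac{1+e^{v}}{1+e^{u}}.
\]
By symmetry (swapping $u$ and $v$ inverts the ratio and leaves $|u-v|$ unchanged), it suffices to treat the case $u\ge v$. In that case the ratio is at most $1$, because $e^{u}\ge e^{v}$, and hence it is trivially at most $e^{|u-v|}$. It remains to show that the ratio is at least $e^{-(u-v)}$, which is equivalent to
\[
e^{u}(1+e^{v})\ge e^{v}(1+e^{u}),
\]
that is, $e^{u}\ge e^{v}$. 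This holds. Alternatively, one can note that $\log(1+e^{x})$ has derivative in $(0,1)$, so it is $1$-Lipschitz; this gives $\bigl|\log(-\ell'(u))-\log(-\ell'(v))\bigr|\le|u-v|$ directly, and exponentiating yields both inequalities at once. I would present this Lipschitz argument as the main proof, since it handles both sides symmetrically.

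For the consequence, set $x:=\frac{-\ell'(u)}{-\ell'(v)}\in[e^{-\Delta},e^{\Delta}]$ with $\Delta=|u-v|\ge 0$. If $x\ge1$, then $|x-1|=x-1\le e^{\Delta}-1$. If $x<1$, then $|x-1|=1-x\le 1-e^{-\Delta}$. Since $1-e^{-\Delta}\le e^{\Delta}-1$ (equivalently $e^{\Delta}+e^{-\Delta}\ge 2$, the AM–GM inequality), the bound follows in both cases.

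There is no real obstacle here. The only point needing care is the sign bookkeeping in the Lipschitz step, namely that $\frac{d}{dx}\log(1+e^{x})=\frac{e^{x}}{1+e^{x}}\in(0,1)$. This is immediate.
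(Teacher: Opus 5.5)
Your proposal is correct, and your main (Lipschitz) argument is essentially the paper's proof. The paper likewise shows that $\phi(x)=\log(-\ell'(x))=-\log(1+e^{x})$ is $1$-Lipschitz and exponentiates, then splits into the cases $R\ge 1$ and $R<1$ using $1-e^{-h}=(e^{h}-1)/e^{h}\le e^{h}-1$, which is equivalent to your AM--GM step; your direct computation for $u\ge v$ is a valid alternative to the Lipschitz route.
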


\begin{proof}
Define $\phi(x):= \log(-\ell'(x))$. Since $-\ell'(x) = \frac{1}{1+e^x}$, we have $\phi(x) = -\log(1 + e^x)$ and differentiating it leads to $\phi'(x) = - \frac{e^x}{1 + e^x}$. Because $0 < \frac{e^x}{1+e^x}<1$, we have $|\phi'(x)| \leq 1$. Therefore, the function $\phi$ is globally 1-Lipschitz. This implies that $|\phi(u) - \phi(v)| \leq |u - v|$. 
Substituting $\phi(x) = \log(-\ell'(x))$, we obtain
\begin{align*}
    |\log(-\ell'(u)) - \log(-\ell'(v))| \leq |u - v|. 
\end{align*}
This is equivalent to $-|u - v| \leq \log \frac{-\ell'(u)}{-\ell'(v)} \leq |u - v|$. Exponentiating gives $e^{-|u-v|}
\le\frac{-\ell'(u)}{-\ell'(v)}
\le e^{|u-v|}$.

Let $h := |u - v| \geq 0$ and $R:= \frac{-\ell'(u)}{-\ell'(v)} \geq 0$. We have already shown that $e^{-h} \leq R \leq e^h$. Next, we apply this to show $|R - 1| \leq e^h - 1$. To do this, we consider two cases where $R \geq 1$ and $R < 1$. In the former case, we have
\begin{align*}
    |R -1| = R - 1 \leq e^h - 1.
\end{align*}
In the latter case, we have
\begin{align*}
    |R - 1| = 1 - R \leq 1 - e^{-h} = \frac{e^h - 1}{e^h} \leq e^h - 1,
\end{align*}
where the last inequality is by $e^h \geq 1$. Combining both cases finishes the proof. 
\end{proof}

\subsection{Adam's Update and EMA} \label{app:adam_ema_binary}

In this section, we first state and prove a few results on Adam's update. They ultimately lead to the proof of the loss recursion \eqref{eq:master-descent-app} in \cref{prop:master-descent}. In the end, we prove \cref{prop:uniform-clock}. 
\begin{lemma}
\label{lem:uniform-D}For every $t \geq 0$ and $\eps > 0$, Adam's update in \eqref{eq:adam_update} satisfies
\begin{equation}
\norm{D_t}_\infty\le C_D 
\quad \text{with} \quad
C_D:=\sqrt{\frac{1-\beta_1}{1-\beta_2}}.
\label{eq:uniform-D}
\end{equation}
\end{lemma}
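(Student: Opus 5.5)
The bound is coordinatewise, so I fix a coordinate $j$ and a time $t$ and control $|m_t[j]|/\sqrt{v_t[j]}$. The constant $\eps$ only enlarges the denominator. The first step is therefore $|D_t[j]|\le |m_t[j]|/\sqrt{v_t[j]}$ whenever $v_t[j]>0$. If $v_t[j]=0$, then all past $g_s[j]$ vanish, so $m_t[j]=0$ and $D_t[j]=0$; since $\eps>0$ the update is still well defined.

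Next I unroll the recursions:
\[
\bar m_t[j]=(1-\beta_1)\sum_{s=0}^t\beta_1^{t-s}g_s[j],\qquad \bar v_t[j]=(1-\beta_2)\sum_{s=0}^t\beta_2^{t-s}g_s[j]^2 .
\]
I apply Cauchy--Schwarz after writing $\beta_1^{t-s}=\bigl(\beta_1^{t-s}\beta_2^{-(t-s)/2}\bigr)\beta_2^{(t-s)/2}$:
\[
|\bar m_t[j]|\le(1-\beta_1)\Bigl(\sum_{s=0}^t(\beta_1^2/\beta_2)^{t-s}\Bigr)^{1/2}\Bigl(\sum_{s=0}^t\beta_2^{t-s}g_s[j]^2\Bigr)^{1/2}.
\]
This gives
\[
\frac{|\bar m_t[j]|}{\sqrt{\bar v_t[j]}}\le\frac{1-\beta_1}{\sqrt{1-\beta_2}}\Bigl(\sum_{k=0}^t(\beta_1^2/\beta_2)^k\Bigr)^{1/2}.
\]
When $\beta_2=0$, the assumption forces $\beta_1=0$, and then $D_t=g_t/(|g_t|+\eps)$, which is trivially bounded by $1=C_D$. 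So I may assume $\beta_2>0$.

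Including the bias corrections, I need
\[
\frac{(1-\beta_1)\sqrt{1-\beta_2^{t+1}}}{(1-\beta_1^{t+1})\sqrt{1-\beta_2}}\Bigl(\sum_{k=0}^t(\beta_1^2/\beta_2)^k\Bigr)^{1/2}\le\sqrt{\frac{1-\beta_1}{1-\beta_2}} .
\]
Squaring and cancelling $1/(1-\beta_2)$, this is equivalent to
\[
(1-\beta_1)(1-\beta_2^{t+1})\sum_{k=0}^t(\beta_1^2/\beta_2)^k\le(1-\beta_1^{t+1})^2 .
\]
This finite-$t$ inequality is the main obstacle. The crude estimate $\sum_k(\beta_1^2/\beta_2)^k\le 1/(1-\beta_1^2/\beta_2)$ does not give exactly $C_D$: it yields $\sqrt{(1-\beta_1)/(1-\beta_2)}\cdot\sqrt{(1-\beta_2)/((1-\beta_1)(1-\beta_1^2/\beta_2))}$-type factors that may exceed $1$ (for instance when $\beta_1^2$ is close to $\beta_2$).

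To handle it, I would instead use the weighted Cauchy--Schwarz with weights $\beta_1^{t-s}$, i.e. Jensen's inequality for the normalized EMA. Set $p_s=\beta_1^{t-s}(1-\beta_1)/(1-\beta_1^{t+1})$. Then
\[
m_t[j]^2\le\sum_s p_s\,g_s[j]^2=\frac{1-\beta_1}{1-\beta_1^{t+1}}\sum_s\beta_1^{t-s}g_s[j]^2 .
\]
Since $\beta_1\le\beta_2$, we have $\beta_1^{t-s}\le\beta_2^{t-s}$. Hence
\[
m_t[j]^2\le\frac{1-\beta_1}{1-\beta_1^{t+1}}\cdot\frac{1-\beta_2^{t+1}}{1-\beta_2}\,v_t[j].
\]
It then remains to check
\[
\frac{1-\beta_2^{t+1}}{1-\beta_1^{t+1}}\le\frac{1-\beta_2^{t+1}}{1-\beta_2^{t+1}}\cdot\frac{1-\beta_2^{t+1}}{1-\beta_1^{t+1}} .
\]
This reduces to $(1-\beta_2^{t+1})/(1-\beta_1^{t+1})\le1$, which holds because $\beta_1\le\beta_2$. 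Therefore $m_t[j]^2\le\frac{1-\beta_1}{1-\beta_2}v_t[j]$, and taking square roots gives $\|D_t\|_\infty\le C_D$. This route avoids the geometric-sum obstacle entirely, and it is the one I would write up.
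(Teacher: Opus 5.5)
Your final Jensen route is correct and is essentially the paper's proof. Like the paper, you apply Jensen with the normalized first-moment weights and then use $\beta_1^{t-s}\le\beta_2^{t-s}$ together with $(1-\beta_2^{t+1})/(1-\beta_1^{t+1})\le 1$, which is exactly the paper's termwise weight comparison $a_{t,s}\le C_D^2\, b_{t,s}$. One fix for the write-up: the displayed ``it then remains to check'' inequality is a tautology, since both sides are equal, so drop it and keep only the reduction to $(1-\beta_2^{t+1})/(1-\beta_1^{t+1})\le 1$ that follows it.
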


\begin{proof} In the case where $\beta_1 = \beta_2 = 0$, we have $|D_{t}[j]| = \frac{|g_t[j]|}{|g_t[j] + \eps|} \leq 1 = C_D$. From now on, we assume that $\beta_2 > 0$. Fix a coordinate $j$ and let $x_s:= g_s[j]$ for $s=0,...,t$. Given $\bar{m}_{-1} = \bar{v}_{-1} = 0$, we have that $\bar{m}_t[j] = (1 - \beta_1) \sum_{s=0}^t \beta_1^{t - s} x_s$ and $\bar{v}_t[j] = (1 - \beta_2) \sum_{s=0}^t \beta_2^{t - s} x_s^2$. After bias correction, we have that
\begin{align*}
    m_t[j] = \frac{1 - \beta_1}{1 - \beta_1^{t+1}} \sum_{s=0}^t \beta_1^{t - s} x_s \quad \text{and} \quad v_t[j] = \frac{1 - \beta_2}{1 - \beta_2^{t+1}} \sum_{s=0}^t \beta_2^{t - s} x_s^2
\end{align*}
Define $a_{t,s} := \frac{(1 - \beta_1) \beta_1^{t-s}}{1 - \beta_1^{t+1}}$ and $b_{t,s} := \frac{(1 - \beta_2) \beta_2^{t-s}}{1 - \beta_2^{t+1}}$, then we have $m_t[j] = \sum_{s=0}^t a_{t,s} x_s$ and $v_t[j] = \sum_{s=0}^t b_{t,s} x_s^2$. Note that $a_{t,s} \geq 0, b_{t,s} \geq 0, \sum_{s=0}^t a_{t,s} = 1$ and $\sum_{s=0}^t b_{t,s} = 1$. When $\beta_1 = 0$, the interpretation is that $a_{t,t} = 1$ and $a_{t,s} = 0$ for $s < t$ (the factor $0^0$ is treated as $1$). By Jensen's inequality, we have
\begin{align*}
    m_t[j]^2 = (\sum_{s=0}^t a_{t,s}x_s)^2 \leq \sum_{s=0}^t a_{t,s} x_s^2. 
\end{align*}
For every $s \in \{0,...,t\}$ , the ratio $\frac{a_{t,s}}{b_{t,s}}$ satisfies
\begin{align*}
    \frac{a_{t,s}}{b_{t,s}} = \frac{1 - \beta_1}{1 - \beta_2} \frac{1 - \beta_2^{t+1}}{1 - \beta_1^{t+1}} (\frac{\beta_1}{\beta_2})^{t-s} \leq \frac{1 - \beta_1}{1 - \beta_2} = C_D^2,
\end{align*}
where we have used $\frac{1 - \beta_2^{t+1}}{1 - \beta_{1}^{t+1}} \leq 1$. Consequently, it holds that $a_{t,s} \leq C_D^2 b_{t,s}$. Using this, we have 
\begin{align*}
    m_t[j]^2 \leq \sum_{s=0}^t a_{t,s} x_s^2 \leq C_D^2 \sum_{s=0}^t b_{t,s} x_s^2 = C_D^2 v_t[j].
\end{align*}
Taking the square root leads to $|m_t[j]| \leq C_D \sqrt{v_t[j]}$. 
Consequently, we have  
\begin{align*}
    |D_t[j]| = \frac{|m_t[j]|}{\sqrt{v_t[j]} + \eps} \leq C_D \frac{\sqrt{v_t[j]}}{\sqrt{v_t[j]} + \eps} \leq C_D, \forall j. 
\end{align*}
The desired follows by taking the maximum over all the coordinates. 
\end{proof}

\begin{lemma}
\label{lem:proxy-path-ratio}
For all $u,v\in\R^d$, we have 
\begin{equation} 
 e^{-R_1\norm{u-v}_\infty}G(v)
 \le G(u)
 \le e^{R_1\norm{u-v}_\infty}G(v).
 \label{eq:G-path-ratio}
\end{equation}
Further assume that $0 \leq \beta_1 \leq \beta_2 < 1$. It holds for every $0 \leq r \leq t$ that 
\[
 e^{-R_1C_D\sum_{s=r}^{t-1}\eta_s}G_r
 \le G_t
 \le e^{R_1C_D\sum_{s=r}^{t-1}\eta_s}G_r.
\]
\end{lemma}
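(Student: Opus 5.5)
The plan is to compare $G(u)$ and $G(v)$ term by term and then chain the comparison along the Adam trajectory.

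\textbf{First inequality.} Recall $G(w)=\frac1n\sum_i a_i(w)$ with $a_i(w)=-\ell'(z_i^\top w)$. For each $i$, apply \cref{lem:derivative-ratio} with arguments $z_i^\top u$ and $z_i^\top v$. This gives
\[
e^{-|z_i^\top(u-v)|}\,a_i(v)\le a_i(u)\le e^{|z_i^\top(u-v)|}\,a_i(v).
\]
By Hölder's inequality, $|z_i^\top(u-v)|\le \norm{z_i}_1\norm{u-v}_\infty\le R_1\norm{u-v}_\infty$, since $\norm{z_i}_1=\norm{x_i}_1$. The exponent is therefore uniform in $i$. Averaging over $i$ yields \eqref{eq:G-path-ratio}, because every $a_i(v)>0$ and the multiplicative bounds survive nonnegative averaging.

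\textbf{Second inequality.} Set $u=w_t$ and $v=w_r$. Since $w_t-w_r=-\sum_{s=r}^{t-1}\eta_sD_s$, the triangle inequality and \cref{lem:uniform-D} give
\[
\norm{w_t-w_r}_\infty\le \sum_{s=r}^{t-1}\eta_s\norm{D_s}_\infty\le C_D\sum_{s=r}^{t-1}\eta_s.
\]
This is the only place the assumption $\beta_1\le\beta_2$ enters, through \cref{lem:uniform-D}. Substituting this bound into the first inequality finishes the proof; for $r=t$ the sum is empty and the statement is trivial.

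Both steps are routine. The only point needing care is that \cref{lem:derivative-ratio} gives a two-sided multiplicative bound for each $i$, and such bounds are preserved under averaging.
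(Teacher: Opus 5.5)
Your proposal is correct and follows the paper's proof essentially verbatim: a per-sample application of \cref{lem:derivative-ratio}, the H\"older bound $|z_i^\top(u-v)|\le R_1\norm{u-v}_\infty$, averaging over $i$, and then the trajectory specialization via the displacement bound $\norm{w_t-w_r}_\infty\le C_D\sum_{s=r}^{t-1}\eta_s$ from \cref{lem:uniform-D}. Your remark that $\beta_1\le\beta_2$ enters only through \cref{lem:uniform-D} is also accurate.
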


\begin{proof} Fix any $i \in [n]$. Define $p_i := z_i^T u$ and $q_i := z_i^T v$. By definition, we have $a_i(u) = -\ell'(p_i)$ and $a_i(v) = -\ell'(q_i)$. Applying Lemma \ref{lem:derivative-ratio} with $a = p_i$ and $b = q_i$ gives
\begin{align*}
    e^{-|z_i^T(u - v)|} \leq \frac{a_i(u)}{a_i(v)} \leq e^{|z_i^T(u - v)|}.
\end{align*}
Multiplying both sides by $a_i(v) = \frac{1}{1+e^{z_i^T v}} > 0$ leads to 
\begin{align*}
        e^{-|z_i^T(u - v)|} a_i(v)\leq a_i(u) \leq e^{|z_i^T(u - v)|} a_i(v).
\end{align*}
By $|z_i^T(u - v)| \leq \norm{z_i}_1 \norm{u - v}_{\infty} \leq R_1 \norm{u - v}_{\infty}$, we further obtain
\begin{align*}
    e^{-R_1 \norm{u - v}_{\infty}} a_i(v) \leq a_i(u) \leq e^{R_1 \norm{u - v}_{\infty}} a_i(v).
\end{align*}
Summing this inequality over $i$, dividing by $n$, and using the definition of the function $G$ leads to \eqref{eq:G-path-ratio}. Applying \eqref{eq:G-path-ratio} with $u = w_t$ and $v = w_r$ gives
\begin{align*}
     e^{-R_1\norm{w_t-w_r}_\infty}G_r
 \le G_t
 \le e^{R_1\norm{w_t-w_r}_\infty}G_r.
\end{align*}
By Lemma \ref{lem:uniform-D}, it holds that for any $t \geq r$
\begin{align}
    \norm{ w_t - w_r }_{\infty} = \norm{-\sum_{s=r}^{t-1} \eta_s D_s}_{\infty} \leq C_D \sum_{s=r}^{t-1} \eta_s. \label{eq:g_path_temp}
\end{align}
Consequently, we have $R_1 \norm{w_t - w_r}_{\infty} \leq R_1 C_D \sum_{s=r}^{t-1} \eta_s$. Combining the preceding bound with \eqref{eq:g_path_temp} finishes the proof. 
\end{proof}



\begin{lemma}
\label{lem:taylor}
 For every $t\ge0$ and every $\eps > 0$, we have 
\begin{equation}
L(w_{t+1})
\le L(w_t)-\eta_t\ip{g_t}{D_t}
+C_H\eta_t^2G_t,
\label{eq:taylor-adam}
\end{equation}
where $C_H = \frac{R_1^2}{2} \frac{1-\beta_1}{1 - \beta_2} \exp(R_1 \eta_0 \sqrt{\frac{1 - \beta_1}{1 - \beta_2}})$. 
\end{lemma}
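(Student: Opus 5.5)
We apply a second-order Taylor expansion of $L$ along the segment from $w_t$ to $w_{t+1}=w_t-\eta_tD_t$, with integral remainder:
\[
L(w_{t+1})=L(w_t)-\eta_t\ip{g_t}{D_t}+\eta_t^2\int_0^1(1-s)\,D_t^\top\nabla^2L(w_t-s\eta_tD_t)\,D_t\,ds.
\]
The task then reduces to bounding the quadratic form $D_t^\top\nabla^2L(w)D_t$ by a multiple of $G(w)$ for points $w$ on the segment, and comparing $G(w)$ with $G_t$.

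For the Hessian, $\nabla^2L(w)=\frac1n\sum_i\ell''(z_i^\top w)z_iz_i^\top$ with $\ell''(u)=\frac{e^u}{(1+e^u)^2}\le\frac{1}{1+e^u}=-\ell'(u)$. Hence for any vector $v$,
\[
v^\top\nabla^2L(w)v\le\frac1n\sum_i a_i(w)(z_i^\top v)^2\le\frac1n\sum_i a_i(w)\norm{z_i}_1^2\norm{v}_\infty^2\le R_1^2\norm{v}_\infty^2G(w).
\]
With $v=D_t$, \cref{lem:uniform-D} gives $\norm{D_t}_\infty^2\le C_D^2=\frac{1-\beta_1}{1-\beta_2}$.

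Next we transfer from $G(w)$ back to $G_t$. For $w=w_t-s\eta_tD_t$ with $s\in[0,1]$, we have $\norm{w-w_t}_\infty\le\eta_tC_D\le\eta_0C_D$ since $\eta_t\le\eta_0$. The first part of \cref{lem:proxy-path-ratio} then yields $G(w)\le e^{R_1\eta_0C_D}G_t$.

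Putting these together, and using $\int_0^1(1-s)\,ds=\tfrac12$, the remainder is at most
\[
\tfrac12R_1^2C_D^2e^{R_1\eta_0C_D}\eta_t^2G_t=C_H\eta_t^2G_t.
\]
This matches the stated constant exactly.

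No step is a real obstacle; the only points needing care are the inequality $\ell''\le-\ell'$ and applying the path ratio uniformly along the segment rather than only at its endpoints. \cref{lem:uniform-D} also relies on $\beta_1\le\beta_2$, which is the standing assumption.
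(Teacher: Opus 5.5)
Your proposal is correct and follows essentially the same route as the paper's proof. Both use a second-order Taylor expansion along the segment to $w_{t+1}$, the bound $\ell''\le-\ell'$, the estimate $|z_i^\top D_t|\le R_1C_D$ from \cref{lem:uniform-D}, and an $e^{R_1C_D\eta_0}$ comparison of the loss derivatives at the intermediate point with those at $w_t$. The only differences are cosmetic: the paper uses the Lagrange form of the remainder and compares samplewise via \cref{lem:derivative-ratio}, while you use the integral form and the aggregated ratio in \cref{lem:proxy-path-ratio}.
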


\begin{proof}
The gradient of the loss is $\nabla L(w) = \frac{1}{n} \sum_{i=1}^n \ell'(z_i^T w) z_i = -\frac{1}{n} \sum_{i=1}^n a_i(w) z_i$, differentiating again leads to the Hessian
\[
\nabla^2L(w)=\frac1n\sum_i\ell''(z_i^\top w)z_iz_i^\top
\quad \text{where} \quad
0\le\ell''(u)=\frac{e^u}{(1+e^u)^2}\le\frac1{1+e^u}=-\ell'(u).
\]
Let $\widetilde w_{\xi}:=w_t-\xi\eta_tD_t$ for some $\xi\in[0,1]$. By \cref{lem:uniform-D}, we have
$|z_i^\top(\widetilde w_{\xi}-w_t)|\le R_1C_D\eta_t\le R_1C_D\eta_0$. Therefore, \cref{lem:derivative-ratio} gives
\[
-\ell'(z_i^\top\widetilde w_{\xi})
\le e^{|z_i^T \widetilde{w}_{\xi} - z_i^T w_t|}[-\ell'(z_i^T w_t)]
\le e^{R_1C_D\eta_0}a_i(w_t),
\]
where we have used $-\ell'(z_i^T w_t) = a_i(w_t)$.
Moreover, it holds that $|z_i^\top D_t|\le \norm{z_i}_1 \norm{D_t}_{\infty} \le R_1C_D$. Consequently, we have
\begin{align}
D_t^\top\nabla^2L(\widetilde w_{\xi})D_t
&\le\frac1n\sum_i
\bigl[-\ell'(z_i^\top\widetilde w_{\xi})\bigr]
(z_i^\top D_t)^2 \le e^{R_1C_D\eta_0}R_1^2C_D^2G_t = M_H G_t, \label{eq:md_1}
\end{align}
where we have let $M_H:= e^{R_1C_D\eta_0}R_1^2C_D^2$.
Define $\phi(\omega):= L(w_t - \omega \eta_t D_t)$ with $\omega \in [0,1]$. By Taylor's theorem, there exists $\xi_t \in (0,1)$ such that 
\begin{align*}
    \phi(1) = \phi(0) + \phi'(0) + \frac{1}{2} \phi''(\xi_t).  
\end{align*}
Substituting $\phi(0) = L(w_t)$, $\phi(1) = L(w_{t+1})$, $\phi'(0) = -\eta_t \langle g_t, D_t \rangle$, and $\phi''(\xi_t) = \eta_t^2 D_t^T \nabla^2 L(\widetilde{w}_{\xi_t}) D_t$, we obtain
\begin{align*}
    L(w_{t+1})
&= L(w_t)-\eta_t\ip{g_t}{D_t} + \frac{\eta_t^2}{2} D_t^T \nabla^2 L(\widetilde{w}_{\xi_t}) D_t\\ 
&\leq L(w_t)-\eta_t\ip{g_t}{D_t} + \frac{M_H}{2} \eta_t^2 G_t.  
\end{align*}
Letting $C_H = \frac{M_H}{2}$ and substituting the definition of $C_D$ finish the proof. 
\end{proof}

\begin{lemma}
\label{lem:gradient-drift}
For every $0\le r\le t$, we have
\begin{equation}
\norm{g_{t-r}-g_t}_1
\le
R_1G_t
\left[
\exp\!\left(R_1C_D\sum_{s=t-r}^{t-1}\eta_s\right)-1
\right].
\label{eq:gradient-drift}
\end{equation}
\end{lemma}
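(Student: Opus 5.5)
We bound the difference coordinatewise in the samplewise weights and then sum over samples. Write $g_{t-r}-g_t=-\frac1n\sum_{i=1}^n\bigl(a_i(w_{t-r})-a_i(w_t)\bigr)z_i$. The triangle inequality and $\norm{z_i}_1\le R_1$ give
\[
\norm{g_{t-r}-g_t}_1\le \frac{R_1}{n}\sum_{i=1}^n a_i(w_t)\left|\frac{a_i(w_{t-r})}{a_i(w_t)}-1\right|.
\]
This factoring is legitimate because $a_i(w_t)>0$ at every finite $w_t$ (\cref{lem:proxy}).

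Next we control each ratio. Since $a_i(w)=-\ell'(z_i^\top w)$, the second part of \cref{lem:derivative-ratio}, applied with $u=z_i^\top w_{t-r}$ and $v=z_i^\top w_t$, gives
\[
\left|\frac{a_i(w_{t-r})}{a_i(w_t)}-1\right|\le e^{|z_i^\top(w_{t-r}-w_t)|}-1.
\]
Hölder's inequality then gives $|z_i^\top(w_{t-r}-w_t)|\le R_1\norm{w_{t-r}-w_t}_\infty$. We bound the displacement with the uniform update bound \cref{lem:uniform-D}, exactly as in \eqref{eq:g_path_temp}:
\[
\norm{w_t-w_{t-r}}_\infty=\Bigl\|\sum_{s=t-r}^{t-1}\eta_sD_s\Bigr\|_\infty\le C_D\sum_{s=t-r}^{t-1}\eta_s.
\]
The map $x\mapsto e^x-1$ is monotone, so each ratio deviation is at most $\exp\bigl(R_1C_D\sum_{s=t-r}^{t-1}\eta_s\bigr)-1$. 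This bound does not depend on $i$.

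Pulling the common factor out of the sum leaves $\frac1n\sum_i a_i(w_t)=G_t$, which yields \eqref{eq:gradient-drift}. When $r=0$ both sides are zero, which is consistent with the empty-sum convention.

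There is no real obstacle here. The only point needing care is to normalize by the \emph{current} weights $a_i(w_t)$ rather than the past ones, so that the bound comes out in terms of $G_t$. The direction of \cref{lem:derivative-ratio} (the ratio $a_i(w_{t-r})/a_i(w_t)$) is chosen to make this work. If a sharper dependence were wanted, one could use the exact displacement in place of \cref{lem:uniform-D}, but the stated form only requires the crude bound $C_D$.
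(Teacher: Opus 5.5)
Your proof is correct and follows the paper's argument essentially step for step. You bound $\norm{w_t-w_{t-r}}_\infty$ via the uniform update bound, turn this into the per-sample estimate $|a_i(w_{t-r})/a_i(w_t)-1|\le e^{R_1C_D\sum_{s=t-r}^{t-1}\eta_s}-1$ via the derivative-ratio lemma and H\"older, and then sum using the triangle inequality and $\norm{z_i}_1\le R_1$ to get the factor $R_1G_t$ (you factor out $a_i(w_t)$ before bounding the ratio, the paper after, which is only a cosmetic reordering).
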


\begin{proof}
By \cref{lem:uniform-D}, it holds that
\[
\norm{w_t-w_{t-r}}_\infty
\le\sum_{s=t-r}^{t-1}\eta_s\norm{D_s}_\infty
\le C_D\sum_{s=t-r}^{t-1}\eta_s.
\]
Hence, for every sample $i \in [n]$, we have 
\[
|z_i^\top(w_t-w_{t-r})|\le \norm{z_i}_1 \norm{w_t-w_{t-r}}_{\infty}
\le R_1C_D\sum_{s=t-r}^{t-1}\eta_s=:\Delta_{t,r}.
\]
Applying \cref{lem:derivative-ratio} with $a = z_i^T w_{t-r}$ and $b = z_i^T w_t$, we obtain
\begin{align*}
    |\frac{-\ell'(z_i^T w_{t-r})}{-\ell'(z_i^T w_t)} - 1| = |\frac{a_i(w_{t-r})}{a_i(w_t)} - 1|
    \le e^{|z_i^\top(w_t-w_{t-r})|} - 1 \leq e^{\Delta_{t,r}} - 1,
\end{align*}
where we have used $a_i(w_{t-r}) = -\ell'(z_i^T w_{t-r})$ and $a_i(w_{t}) = -\ell'(z_i^T w_{t})$. Consequently, it holds that $|a_i(w_{t-r}) - a_i(w_t)| \leq a_i(w_t) (e^{\Delta_{t,r}} - 1)$. By the definition of $g$, with $g_t := g(w_t), \forall t$, we have
\begin{align*}
    \norm{g_{t-r} - g_t}_1 &= \norm{- \frac{1}{n} \sum_{i=1}^n [a_i(w_{t-r}) - a_i(w_t)] z_i}_1 \leq \frac{1}{n} \sum_{i=1}^n  |a_i(w_{t-r}) - a_i(w_t)| \norm{z_i}_1 \\
    &\leq \frac{R_1}{n}\sum_{i=1}^n  |a_i(w_{t-r}) - a_i(w_t)| \leq R_1 (e^{\Delta_{t,r}} - 1) \frac{1}{n} \sum_{i=1}^n a_i(w_t) = R_1 G_t (e^{\Delta_{t,r}} - 1). 
\end{align*}
Substituting $\Delta_{t,r}:=R_1C_D\sum_{s=t-r}^{t-1}\eta_s$ into this inequality finishes the proof. 
\end{proof}
\begin{lemma}
\label{lem:first-track} Let $C_{\rm geo}(q, c, a, \eta_0)$ be the constant in \cref{prop:ema-sum}. There exists a time $t_m = t_m(q = \beta_1, c=R_1 C_D, a, \eta_0)$ such that for all $t\ge t_{m}$, we have
\[
\norm{m_t-g_t}_1\le C_m\eta_tG_t,
\]
where $C_m = R_1 C_{\rm geo}(q = \beta_1, c=R_1 C_D, a, \eta_0)$. 
\end{lemma}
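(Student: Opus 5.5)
The plan is to write the bias-corrected first moment as a convex combination of past gradients and then control each past gradient's drift from $g_t$ with \cref{lem:gradient-drift}. Unrolling the recursion from $\bar m_{-1}=0$, exactly as in the proof of \cref{lem:uniform-D}, gives
\[
m_t=\sum_{s=0}^t a_{t,s}\,g_s,\qquad a_{t,s}=\frac{(1-\beta_1)\beta_1^{t-s}}{1-\beta_1^{t+1}},\qquad \sum_{s=0}^t a_{t,s}=1 .
\]
Hence $m_t-g_t=\sum_{s=0}^t a_{t,s}(g_s-g_t)$. Substituting $r=t-s$ and applying the triangle inequality,
\[
\norm{m_t-g_t}_1\le\sum_{r=0}^t a_{t,t-r}\norm{g_{t-r}-g_t}_1 .
\]

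Next I would insert \cref{lem:gradient-drift}, which bounds each summand by
\[
R_1G_t\Bigl[\exp\bigl(R_1C_D\textstyle\sum_{s=t-r}^{t-1}\eta_s\bigr)-1\Bigr].
\]
The $r=0$ term vanishes. For the weights, $a_{t,t-r}=\frac{(1-\beta_1)\beta_1^r}{1-\beta_1^{t+1}}$, and I need a bound of the form $a_{t,t-r}\le\beta_1^r$. This holds because $1-\beta_1\le 1-\beta_1^{t+1}$. In the degenerate case $\beta_1=0$ only $r=0$ carries weight, so the sum is zero and the claim is trivial. This yields
\[
\norm{m_t-g_t}_1\le R_1G_t\sum_{r=0}^t\beta_1^r\Bigl[\exp\bigl(R_1C_D\textstyle\sum_{s=t-r}^{t-1}\eta_s\bigr)-1\Bigr].
\]

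Finally I would apply \cref{prop:ema-sum} with $q=\beta_1\in[0,1)$ and $c=R_1C_D>0$. For $t\ge t_0(\beta_1,R_1C_D,a,\eta_0)=:t_m$ the sum is at most $C_{\rm geo}\eta_t$, which gives the claim with $C_m=R_1C_{\rm geo}$.

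No step here is a real obstacle, since the heavy lifting is in \cref{prop:ema-sum} and \cref{lem:gradient-drift}. The only points needing care are the bias-correction weights, where the inequality $1-\beta_1\le1-\beta_1^{t+1}$ removes the normalization, and the $0^0$ convention when $\beta_1=0$.
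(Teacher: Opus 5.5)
Your proposal is correct and follows the paper's proof essentially step for step. The paper uses the same convex-combination representation of $m_t$, the same weight bound $a_{t,r}\le\beta_1^r$ obtained from $1-\beta_1^{t+1}\ge 1-\beta_1$, then \cref{lem:gradient-drift}, and finally \cref{prop:ema-sum} with $q=\beta_1$ and $c=R_1C_D$; it treats $\beta_1=0$ separately at the start, just as you do.
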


\begin{proof} If $\beta_1=0$, then $m_t=g_t$ and the bound clearly holds. Suppose that $\beta_1\in(0,1)$.
Similar to Lemma \ref{lem:uniform-D}, we can write 
$
m_t=\sum_{r=0}^t a_{t,r}g_{t-r}$ with $
a_{t,r}=\frac{(1-\beta_1)\beta_1^r}{1-\beta_1^{t+1}} \geq 0$ and $\sum_{r=0}^t a_{t,r} = 1$. Since
$1-\beta_1^{t+1}\ge1-\beta_1$, we have $a_{t,r}\le\beta_1^r$. These lead to 
\begin{align*}
\norm{m_t-g_t}_1
&=\norm{\sum_{r=0}^ta_{t,r}(g_{t-r}-g_t)}_1 \le\sum_{r=0}^ta_{t,r}\norm{g_{t-r}-g_t}_1\\
&\le\sum_{r=0}^t\beta_1^r\norm{g_{t-r}-g_t}_1.
\end{align*}
Applying \cref{lem:gradient-drift}, we obtain
\begin{align*}
    \norm{m_t - g_t}_1 \leq R_1 G_t \sum_{r=0}^t \beta_1^r [\exp(R_1 C_D \sum_{s=t-r}^{t-1} \eta_s) - 1] \leq C_m \eta_t G_t, \quad \forall t \geq t_{m}.
\end{align*} 
where $C_{m} = R_1 C_{\rm geo}(q,c,a,\eta_0)$ with $q=\beta_1$ and $c=R_1C_D$.
\end{proof}

\begin{lemma}
\label{lem:second-track} Let $C_{\rm geo}(q, c, a, \eta_0)$ be the constant in \cref{prop:ema-sum}. There exists a time $t_v = t_v(q = \sqrt{\beta_2}, c=R_1 C_D, a, \eta_0)$ such that for all $t\ge t_{v}$, we have 
\[
\norm{\sqrt{v_t}-|g_t|}_1\le C_v\eta_tG_t.
\]
where $C_v = R_1 C_{\rm geo}(q = \sqrt{\beta_2}, c=R_1 C_D, a, \eta_0)$.
\end{lemma}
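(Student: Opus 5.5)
We mirror the proof of \cref{lem:first-track}, working coordinatewise and writing $\sqrt{v_t}$ as a weighted $\ell_2$-average of past gradients. If $\beta_2=0$, then $\sqrt{v_t}=|g_t|$ and there is nothing to prove, so assume $\beta_2\in(0,1)$. As in \cref{lem:uniform-D}, fix a coordinate $j$ and write
\[
v_t[j]=\sum_{r=0}^t b_{t,r}\,g_{t-r}[j]^2,\qquad b_{t,r}=\frac{(1-\beta_2)\beta_2^r}{1-\beta_2^{t+1}}\le\beta_2^r,\qquad \sum_{r=0}^t b_{t,r}=1 .
\]
Then $\sqrt{v_t[j]}$ is the weighted $\ell_2$-norm of the vector $(g_{t-r}[j])_r$ with weights $b_{t,r}$, and $|g_t[j]|$ is the same weighted norm of the constant vector $(g_t[j])_r$, since the weights sum to one.

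The key step is the reverse triangle inequality for this weighted norm:
\[
\bigl|\sqrt{v_t[j]}-|g_t[j]|\bigr|\le\Bigl(\sum_{r}b_{t,r}\bigl(g_{t-r}[j]-g_t[j]\bigr)^2\Bigr)^{1/2}\le\Bigl(\sum_r\beta_2^r\bigl(g_{t-r}[j]-g_t[j]\bigr)^2\Bigr)^{1/2}.
\]
We then use $(\sum_r x_r^2)^{1/2}\le\sum_r|x_r|$ with $x_r=\beta_2^{r/2}\,|g_{t-r}[j]-g_t[j]|$. This gives $\bigl|\sqrt{v_t[j]}-|g_t[j]|\bigr|\le\sum_r(\sqrt{\beta_2})^r|g_{t-r}[j]-g_t[j]|$, and this loss of a square root is exactly why the geometric ratio becomes $q=\sqrt{\beta_2}$.

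Summing over $j$ and applying \cref{lem:gradient-drift} gives
\[
\norm{\sqrt{v_t}-|g_t|}_1\le R_1G_t\sum_{r=0}^t(\sqrt{\beta_2})^r\Bigl[\exp\Bigl(R_1C_D\sum_{s=t-r}^{t-1}\eta_s\Bigr)-1\Bigr].
\]
We then invoke \cref{prop:ema-sum} with $q=\sqrt{\beta_2}\in(0,1)$ and $c=R_1C_D$ to bound the sum by $C_{\rm geo}\eta_t$ for $t\ge t_v:=t_0(\sqrt{\beta_2},R_1C_D,a,\eta_0)$. This yields the bound with $C_v=R_1C_{\rm geo}$.

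The only delicate point is the nonlinearity of the square root. A naive bound such as $|\sqrt{x}-\sqrt{y}|\le|x-y|/(\sqrt{x}+\sqrt{y})$ would introduce a division by $|g_t[j]|$, which can be arbitrarily small. The weighted-norm (Minkowski) argument avoids any such denominator and loses only the square root on the geometric weights.
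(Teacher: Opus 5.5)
Your proposal is correct and follows essentially the same route as the paper. The paper also writes $\sqrt{v_t[j]}$ and $|g_t[j]|$ as Euclidean norms of $(\sqrt{b_{t,r}}\,g_{t-r}[j])_r$ and $(\sqrt{b_{t,r}}\,g_t[j])_r$, then applies the reverse triangle inequality and $(\sum_r x_r^2)^{1/2}\le\sum_r x_r$, then uses $\sqrt{b_{t,r}}\le\beta_2^{r/2}$, and finishes with \cref{lem:gradient-drift} and \cref{prop:ema-sum} ($q=\sqrt{\beta_2}$, $c=R_1C_D$). The only cosmetic difference is that you replace $b_{t,r}$ by $\beta_2^r$ before dropping the square root rather than after.
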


\begin{proof}
If $\beta_2=0$, then $v_t=g_t^{\eprod2}$ and the left side is zero and the bound trivially holds. Suppose that $\beta_2\in(0,1)$. We define the normalized EMA weights
\[
b_{t,r}:=\frac{(1-\beta_2)\beta_2^r}{1-\beta_2^{t+1}},
\qquad r=0,\ldots,t,
\]
and note that $\sum_{r=0}^t b_{t,r} = 1$. Moreover, it holds that $\bar{v}_t = \beta_2 \bar{v}_{t-1} + (1 - \beta_2) g_t^{\eprod2} = (1-\beta_2) \sum_{r=0}^t \beta_2^r g_{t-r}^{\eprod2}$. After bias-correction, this becomes $v_t = \sum_{r=0}^t b_{t,r} g_{t-r}^{\eprod2}$. For a fixed feature coordinate $j$, we introduce two vectors in $\R^{t+1}$,
\[
X^{(j)}=(\sqrt{b_{t,r}}g_{t-r}[j])_{r=0}^t,
\qquad
Y^{(j)}=(\sqrt{b_{t,r}}g_t[j])_{r=0}^t.
\]
Then, we have 
\begin{align*}
    \norm{X^{(j)}}_2^2 = \sum_{r=0}^t b_{t,r} g_{t-r}[j]^2 = v_t[j], \quad \norm{Y^{(j)}}_2^2 = \sum_{r=0}^t b_{t,r} g_{t}[j]^2 = g_{t}[j]^2 \sum_{r=0}^t b_{t,r} = g_{t}[j]^2.
\end{align*}
Consequently, it holds that
$\norm{X^{(j)}}_2=\sqrt{v_t[j]}$ and
$\norm{Y^{(j)}}_2=|g_t[j]|$. Reverse triangle inequality gives
\begin{align*}
\left|\sqrt{v_t[j]}-|g_t[j]|\right|
&\le\norm{X^{(j)}-Y^{(j)}}_2\\
&=\left(\sum_{r=0}^tb_{t,r}|g_{t-r}[j]-g_t[j]|^2\right)^{1/2}\\
&\le\sum_{r=0}^t\sqrt{b_{t,r}}|g_{t-r}[j]-g_t[j]|,
\end{align*}
where the last inequality is by applying the inequality $(\sum_{r=0}^t x_r^2)^{1/2} \leq \sum_{r=0}^t x_r$ (for any nonnegative numbers $x_0,...,x_t$) with $x_r = \sqrt{b_{t,r}}|g_{t-r}[j] - g_t[j]|$.  
Because $1-\beta_2^{t+1}\ge1-\beta_2$, we have 
$\sqrt{b_{t,r}} = \sqrt{\frac{1-\beta_2}{1 - \beta_2^{t+1}}}\beta_2^{r/2} \le\beta_2^{r/2}$. Summing the preceding inequality over $j = 1,...,d$ yields
\begin{align*}
    \norm{\sqrt{v_t}-|g_t|}_1 &= \sum_{j=1}^d \big | \sqrt{v_t[j]} - |g_t[j]| \big | \leq \sum_{j=1}^d \sum_{r=0}^t \beta_2^{r/2} |g_{t-r}[j] - g_t[j]|\\ 
&= \sum_{r=0}^t \beta_2^{r/2} \sum_{j=1}^d |g_{t-r}[j] - g_t[j]|= \sum_{r=0}^t\beta_2^{r/2}\norm{g_{t-r}-g_t}_1.
\end{align*}
Now applying \cref{lem:gradient-drift,prop:ema-sum} (with $q=\sqrt{\beta_2}$ and $c = R_1 C_D$), we obtain 
\begin{align*}
    \norm{\sqrt{v_t}-|g_t|}_1 &\leq R_1 G_t \sum_{r=0}^t (\sqrt{\beta_2})^r \bigl[ \exp(R_1 C_D \sum_{s=t-r}^{t-1} \eta_s) - 1 \bigr] \leq C_v \eta_t G_t, \quad \forall t \geq t_{v},
\end{align*}
where $C_v = R_1 C_{\rm geo}(q = \sqrt{\beta_2}, c=R_1 C_D, a, \eta_0)$. 
\end{proof}

\begin{lemma}
\label{lem:alignment} Let $t_{\rm ema} := \max \{t_m,t_v\}$ where $t_m$ and $t_v$ are the times in \cref{lem:first-track} and \cref{lem:second-track} respectively. For all $t\ge t_{\rm ema}$ and all $\eps > 0$, we have 
\begin{align}
\norm{g_t\eprod(D_t-\sigma_\eps(g_t))}_1
&\le C_{\rm ema}\eta_tG_t,
\label{eq:weighted-alignment-app}\\
\eps\norm{D_t-\sigma_\eps(g_t)}_2
&\le C_{\rm ema}\eta_tG_t,
\label{eq:floor-alignment-app}\\
\left|\ip{g_t}{D_t-\sigma_\eps(g_t)}\right|
&\le C_{\rm ema}\eta_tG_t,
\label{eq:alignment-app}
\end{align}
where $C_{\rm ema} = \max \{C_m + C_D C_v, C_m + C_v\}$. Recall that $C_D$,$C_m$,$C_v$ are the constants in \cref{lem:first-track,lem:second-track,lem:uniform-D} respectively. 
\end{lemma}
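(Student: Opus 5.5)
The plan is to work coordinatewise and compare $D_t[j]=m_t[j]/(\sqrt{v_t[j]}+\eps)$ with $\sigma_\eps(g_t)[j]=g_t[j]/(|g_t[j]|+\eps)$. Writing $s_j:=\sqrt{v_t[j]}$ and $h_j:=|g_t[j]|$, I would use the exact decomposition
\[
D_t[j]-\sigma_\eps(g_t)[j]
=\frac{m_t[j]-g_t[j]}{s_j+\eps}
+g_t[j]\,\frac{h_j-s_j}{(s_j+\eps)(h_j+\eps)} .
\]
Every bound will then reduce to $\norm{m_t-g_t}_1\le C_m\eta_tG_t$ (\cref{lem:first-track}) and $\norm{\sqrt{v_t}-|g_t|}_1\le C_v\eta_tG_t$ (\cref{lem:second-track}), both valid for $t\ge t_{\rm ema}$.

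For \eqref{eq:floor-alignment-app}, I multiply the decomposition by $\eps$. Since $\eps/(s_j+\eps)\le1$ and $\eps h_j/((s_j+\eps)(h_j+\eps))\le1$, each coordinate satisfies
\[
\eps\,|D_t[j]-\sigma_\eps(g_t)[j]|\le |m_t[j]-g_t[j]|+|s_j-h_j| .
\]
Using $\norm{\cdot}_2\le\norm{\cdot}_1$ then gives the bound $(C_m+C_v)\eta_tG_t$.

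For \eqref{eq:weighted-alignment-app}, I multiply instead by $|g_t[j]|=h_j$. The first term is at most $|m_t[j]-g_t[j]|\cdot h_j/(s_j+\eps)$, and the ratio $h_j/(s_j+\eps)$ is the dangerous factor, because $h_j$ may exceed $s_j$. To avoid it I will not use this decomposition as written. I will use the alternative one, which puts $m_t$ rather than $g_t$ in the second term:
\[
D_t[j]-\sigma_\eps(g_t)[j]=\frac{m_t[j]-g_t[j]}{h_j+\eps}+m_t[j]\,\frac{h_j-s_j}{(s_j+\eps)(h_j+\eps)} .
\]
Multiplying by $h_j$, the first term is now at most $|m_t[j]-g_t[j]|$. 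For the second term, \cref{lem:uniform-D}'s pointwise estimate $|m_t[j]|\le C_D s_j$ gives $|m_t[j]|/(s_j+\eps)\le C_D$, and $h_j/(h_j+\eps)\le1$. Hence the second term is at most $C_D|h_j-s_j|$. Summing over $j$ yields $(C_m+C_DC_v)\eta_tG_t$.

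Finally, \eqref{eq:alignment-app} follows from \eqref{eq:weighted-alignment-app}, since $|\ip{g_t}{x}|\le\sum_j|g_t[j]||x[j]|=\norm{g_t\eprod x}_1$.

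Taking $C_{\rm ema}$ to be the maximum of the two constants covers all three bounds. The only real obstacle is choosing the right splitting for the weighted bound, so that no uncontrolled ratio $h_j/s_j$ appears. I also need to check the degenerate case $s_j=0$. There $m_t[j]=0$ by $|m_t[j]|\le C_Ds_j$, so both formulas remain valid because $\eps>0$ keeps every denominator positive.
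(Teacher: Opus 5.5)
Your proof is correct and follows the paper's argument essentially line for line. It uses the same two coordinatewise splittings: $g_t[j]$ in the second term for the $\eps$-weighted bound, and $m_t[j]$ in the second term for the $|g_t[j]|$-weighted bound, together with \cref{lem:uniform-D} to get $|m_t[j]|/(\sqrt{v_t[j]}+\eps)\le C_D$ and the same passage from the weighted bound to the inner-product bound, yielding exactly the constants $C_m+C_v$ and $C_m+C_DC_v$ in $C_{\rm ema}$.
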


\begin{proof}
Fix a coordinate $j$, we let 
$a:=g_t[j]$, $b:=m_t[j]$, and $c:=\sqrt{v_t[j]}$. 
We have $\sigma_{\eps}(g_t)[j] = \frac{a}{|a| + \eps}$, $D_t[j] = \frac{b}{c+\eps}$, and we further define $\triangle_j := D_t[j] - \sigma_{\eps}(g_t)[j ]= \frac{b}{c+\eps} - \frac{a}{|a| + \eps}$.  
We first have the following equality
\begin{align*}
    \frac{a}{|a|+\eps}-\frac{b}{c+\eps}
&= \frac{a(c + \eps) - b(|a| + \eps)}{(|a|+\eps)(c+\eps)}\\
&=
\frac{(a-b)(c+\eps)+b(c-|a|)}{(|a|+\eps)(c+\eps)}.
\end{align*}
Multiplying by $|a|$, applying triangle inequality, and bounding the two terms separately gives
\begin{align*}
|a|\left|
\frac{a}{|a|+\eps}-\frac{b}{c+\eps}
\right|
&\le\frac{|a|}{|a|+\eps}|a-b|
+\frac{|a|}{|a|+\eps}\frac{|b|}{c+\eps}|c-|a||\\
&\le |a-b|+C_D|c-|a||,
\end{align*}
where the second inequality is by $0 \leq \frac{|a|}{|a| + \eps} \leq 1$ and $\frac{|b|}{c + \eps} = |D_t[j]| \le C_D$ by \cref{lem:uniform-D}. Consequently, we have $|a||\triangle_{j}| \leq |a -b| + C_D \big| c - |a| \big |$. Applying this inequality, we obtain
\begin{align*}
    \norm{g_t\eprod(D_t-\sigma_\eps(g_t))}_1 &= \sum_{j=1}^d |g_t[j]| |D_t[j] - \sigma_{\eps}(g_t)[j]| \\
    &\leq \sum_{j=1}^d |g_t[j] - m_t[j]| + C_D \sum_{j=1}^d \big | \sqrt{v_t[j]} - |g_{t}[j]|\big | \\
    &= \norm{m_t - g_t}_1 + C_D \norm{\sqrt{v_t} - |g_t|}_1 \\
    &\leq C_m \eta_t G_t + C_D C_v \eta_t G_t \\
    &=(C_m + C_D C_v) \eta_t G_t, \quad \forall t \geq t_{\rm ema},
\end{align*}
where we have applied \cref{lem:first-track,lem:second-track} and let $t_{\rm ema}:= \max \{ t_v, t_m\}$. 
The inner-product inequality \eqref{eq:alignment-app} follows immediately from
\[
\left|\ip{g_t}{D_t-\sigma_\eps(g_t)}\right|
\le \norm{g_t\eprod(D_t-\sigma_\eps(g_t))}_1.
\]
Next, we prove a different coordinatewise inequality for \eqref{eq:floor-alignment-app}
\begin{align*}
\frac{b}{c+\eps}-\frac{a}{|a|+\eps}
= \frac{b-a}{c+\eps} + [\frac{a}{c+\eps} - \frac{a}{|a|+\eps}]=
\frac{b-a}{c+\eps}
+\frac{a(|a|-c)}{(c+\eps)(|a|+\eps)}.    
\end{align*}
Multiplying by $\eps$, taking absolute values, and using the triangle inequality gives
\begin{align*}
\eps\left|
\frac{b}{c+\eps}-\frac{a}{|a|+\eps}
\right|
&\le
\frac{\eps}{c+\eps}|b-a|
+\frac{\eps}{c+\eps}\frac{|a|}{|a|+\eps}|c-|a||\\
&\le |b-a|+|c-|a||,
\end{align*}
where in the last inequality we used $0 \leq \frac{\eps}{c + \eps} \leq 1$ and $0 \leq \frac{|a|}{|a| + \eps} \leq 1$. By substituting the definition of $\triangle_j$, we have $\eps |\triangle_j| \leq |b-a| + |c - |a||$. 
Based on this, we obtain
\begin{align*}
    \eps \norm{D_t - \sigma_{\eps}(g_t)}_2 &\leq \eps \norm{D_t - \sigma_{\eps}(g_t)}_1 = \sum_{j=1}^d \eps |D_t[j] - \sigma_{\eps}(g_t)[j]|\\
    &\leq \sum_{j=1}^d |m_t[j] - g_t[j]| + \sum_{j=1}^d \big| \sqrt{v_t[j]} - |g_t[j]| \big| \\
    &= \norm{m_t - g_t}_1 + \norm{\sqrt{v_t} - |g_t|}_1 \\
    &\leq C_m \eta_t G_t + C_v \eta_t G_t = (C_m + C_v) \eta_t G_t, \quad \forall t \geq t_{\rm ema}. 
\end{align*}
Letting $C_{\rm ema} = \max \{C_m + C_D C_v, C_m + C_v\}$ finishes the proof. 
\end{proof}

In \cref{lem:softsign}, \eqref{eq:softsign-linear} is used in Part~1 of the proofs of
\cref{thm:first-clock} and \cref{thm:margin_app_T1}, where $G_t/\eps\to\infty$. Its purpose is
to obtain the sharp coefficient $\gamma_\infty$, whereas \eqref{eq:softsign-titu}
is used in Part~2 of both proofs. 
\begin{lemma}
\label{lem:softsign}
For every $g\in\R^d$ and $\eps > 0$, we have 
\begin{align}
\ip{g}{\sigma_\eps(g)}
&\ge\norm{g}_1-d\eps,
\label{eq:softsign-linear}\\
\ip{g}{\sigma_\eps(g)}
&\ge\frac{\norm{g}_1^2}{\norm{g}_1+d\eps}. 
\label{eq:softsign-titu}
\end{align}
\end{lemma}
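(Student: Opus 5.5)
Both inequalities reduce to coordinatewise statements, because $\ip{g}{\sigma_\eps(g)}=\sum_{j=1}^d \frac{g[j]^2}{|g[j]|+\eps}$ is a sum of nonnegative terms. Write $x_j:=|g[j]|\ge 0$, so that $\norm{g}_1=\sum_j x_j$ and the quantity to bound is $\sum_j \frac{x_j^2}{x_j+\eps}$.

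For \eqref{eq:softsign-linear}, I will use the identity
\[
\frac{x^2}{x+\eps}=x-\frac{\eps x}{x+\eps}.
\]
Since $0\le \frac{x}{x+\eps}<1$ for $x\ge0$ and $\eps>0$, each term satisfies $\frac{x_j^2}{x_j+\eps}\ge x_j-\eps$. Summing over the $d$ coordinates gives $\ip{g}{\sigma_\eps(g)}\ge \norm{g}_1-d\eps$.

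For \eqref{eq:softsign-titu}, I will apply the Cauchy--Schwarz inequality in Engel (Titu) form:
\[
\sum_j \frac{x_j^2}{x_j+\eps}\ \ge\ \frac{\left(\sum_j x_j\right)^2}{\sum_j (x_j+\eps)}=\frac{\norm{g}_1^2}{\norm{g}_1+d\eps}.
\]
This is valid because every denominator $x_j+\eps$ is strictly positive when $\eps>0$. The same bound can be seen directly by writing $x_j=\frac{x_j}{\sqrt{x_j+\eps}}\cdot\sqrt{x_j+\eps}$ and applying Cauchy--Schwarz to $\sum_j x_j$.

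The only point requiring care is the degenerate case $g=0$. There both sides of \eqref{eq:softsign-titu} equal $0$ (the denominator is $d\eps>0$), and \eqref{eq:softsign-linear} reduces to $0\ge -d\eps$. Beyond this check, the argument is short and has no real obstacle.
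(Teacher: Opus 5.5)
Your proposal is correct and follows essentially the same route as the paper: the coordinatewise identity $\frac{x^2}{x+\eps}=x-\frac{\eps x}{x+\eps}$ with $\frac{x}{x+\eps}\le 1$ for the linear bound, and Cauchy--Schwarz applied to $\sum_j \frac{x_j}{\sqrt{x_j+\eps}}\sqrt{x_j+\eps}$ (the Engel/Titu form) for the second bound. Your separate check of $g=0$ is harmless but unnecessary, since both arguments already cover it when $\eps>0$.
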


\begin{proof}
When $\eps=0$, we have $\sigma_0(g)=\sign(g)$ gives
$\ip{g}{\sigma_0(g)}=\norm{g}_1$. Hence, both inequalities hold. Suppose now that $\eps>0$. For each coordinate, we have
\[
\frac{g_j^2}{|g_j|+\eps} = \frac{|g_j|(|g_j| + \eps) - \eps |g_j|}{|g_j|+\eps}  
=|g_j|-\eps\frac{|g_j|}{|g_j|+\eps}.
\]
Summing over $j$ gives 
\begin{align*}
    \langle g, \sigma_{\eps}(g)\rangle &= \sum_{j=1}^d g_j \sigma_{\eps}(g)_j  = \sum_{j=1}^d \frac{g_j^2}{|g_j| + \eps}\\
    &= \sum_{j=1}^d |g_j| - \eps \sum_{j=1}^d \frac{|g_j|}{|g_j|+\eps} = \norm{g}_1 - \eps \sum_{j=1}^d \frac{|g_j|}{|g_j|+\eps}. 
\end{align*}
For every $j$, it holds that $0 \leq \frac{|g_j|}{|g_j| + \eps} \leq 1$; therefore, we have $\langle g, \sigma_{\eps}(g)\rangle \geq \norm{g}_1 - d \eps$. 
Let $x_j := |g_j|$ and $y_j := |g_j| + \eps$ (note that $y_j > 0,\forall j \in [d]$). From $\sum_{j=1}^d x_j = \sum_{j=1}^d \frac{x_j}{\sqrt{y_j}} \sqrt{y_j}$, we have $(\sum_{j=1}^d x_j)^2 \leq (\sum_{j=1}^d \frac{x_j^2}{y_j})(\sum_{j=1}^d y_j)$ by Cauchy-Schwarz. Rearranging this leads to $\sum_{j=1}^d \frac{x_j^2}{y_j} \geq \frac{(\sum_{j} x_j)^2}{\sum_{j} y_j}$. Substituting the definitions of $x_j$ and $y_j$, we obtain 
\[
\sum_j\frac{|g_j|^2}{|g_j|+\eps}
\ge
\frac{(\sum_j|g_j|)^2}{\sum_j(|g_j|+\eps)}
=\frac{\norm{g}_1^2}{\norm{g}_1+d\eps}.
\]
\end{proof}

\begin{proposition}
\label{prop:master-descent}
The following holds for every $\eps > 0$,
\begin{equation}
L_{t+1}
\le
L_t-
\eta_t\ip{g_t}{\sigma_\eps(g_t)}
+C_{\rm md}\eta_t^2G_t, \quad t \ge t_{\rm ema},
\label{eq:master-descent-app}
\end{equation}
where $t_{\rm ema}$ is defined in \cref{lem:alignment} and $C_{\rm md} := C_H + C_{\rm ema}$. Recall that $C_H$ and $C_{\rm ema}$ are the constants in \cref{lem:taylor,lem:alignment} respectively. 
\end{proposition}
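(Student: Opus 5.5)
The plan is to start from the one-step Taylor bound of \cref{lem:taylor} and replace the inner product $\ip{g_t}{D_t}$ by $\ip{g_t}{\sigma_\eps(g_t)}$, paying the alignment error from \cref{lem:alignment}. Concretely, for every $t\ge t_{\rm ema}$ we write
\begin{align*}
-\eta_t\ip{g_t}{D_t}
= -\eta_t\ip{g_t}{\sigma_\eps(g_t)} - \eta_t\ip{g_t}{D_t-\sigma_\eps(g_t)}
\le -\eta_t\ip{g_t}{\sigma_\eps(g_t)} + \eta_t\left|\ip{g_t}{D_t-\sigma_\eps(g_t)}\right|.
\end{align*}

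By \eqref{eq:alignment-app}, the last term is at most $C_{\rm ema}\eta_t^2G_t$ for all $t\ge t_{\rm ema}$ and every $\eps>0$; the constant $C_{\rm ema}$ does not depend on $\eps$. Substituting this into \eqref{eq:taylor-adam}, which holds for every $t\ge0$ and every $\eps>0$, gives
\begin{align*}
L_{t+1}\le L_t-\eta_t\ip{g_t}{\sigma_\eps(g_t)}+(C_H+C_{\rm ema})\eta_t^2G_t, \qquad t\ge t_{\rm ema}.
\end{align*}
This is exactly \eqref{eq:master-descent-app} with $C_{\rm md}=C_H+C_{\rm ema}$.

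There is no real obstacle here. The only point to check is that both ingredients hold uniformly in $\eps$, which they do: $C_H$ depends only on $R_1,\beta_1,\beta_2,\eta_0$, and $C_{\rm ema}$ and $t_{\rm ema}$ depend only on the data, $\beta_1,\beta_2$, and the schedule.
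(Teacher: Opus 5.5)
Your proof is correct and follows the paper's argument essentially verbatim: you split $\ip{g_t}{D_t}$ into $\ip{g_t}{\sigma_\eps(g_t)}$ plus the alignment error, bound that error by $C_{\rm ema}\eta_tG_t$ via \eqref{eq:alignment-app}, and substitute into \eqref{eq:taylor-adam}. Your remark that $C_H$, $C_{\rm ema}$, and $t_{\rm ema}$ are independent of $\eps$ is also accurate.
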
 

\begin{proof}
Let $\delta_t := \langle g_t, D_t - \sigma_{\eps}(g_t) \rangle$.
By \cref{lem:alignment}, we have $|\delta_t| \leq C_{\rm ema} \eta_t G_t,\forall t\ge t_{\rm ema}$. Consequently, it holds that $\delta_t \geq -C_{\rm ema} \eta_t G_t$ which implies 
\begin{align*}
    \langle g_t, D_t \rangle &= \langle g_t, \sigma_{\eps}(g_t) + D_t - \sigma_{\eps}(g_t) \rangle = \langle g_t, \sigma_{\eps}(g_t) \rangle + \delta_t \\
    &\geq \langle g_t, \sigma_{\eps}(g_t) \rangle - C_{\rm ema} \eta_t G_t, \quad \forall t \ge t_{\rm ema},  
\end{align*}
From this, \cref{lem:taylor} gives
\begin{align*}
L_{t+1}
&\le L_t-\eta_t\ip{g_t}{D_t}+C_H\eta_t^2G_t\\
&\le L_t-\eta_t\ip{g_t}{\sigma_\eps(g_t)} + C_{\rm ema} \eta_t^2 G_t + C_H\eta_t^2G_t \\
&= L_t-\eta_t\ip{g_t}{\sigma_\eps(g_t)}
+(C_H+C_{\rm ema})\eta_t^2G_t, \quad \forall t \ge t_{\rm ema}.
\end{align*}
Letting $C_{\rm md}:=C_H+C_{\rm ema}$ finishes the proof.
\end{proof}
\begin{proof}[Proof of \cref{prop:uniform-clock}] Fix any $\eps \in (0,1]$. Combining \eqref{eq:proxy-gradient} and \eqref{eq:softsign-titu}, we obtain 
\begin{align*}
    \langle g_t, \sigma_{\eps}(g_t) \rangle \geq \frac{\norm{g_t}_1^2}{\norm{g_t}_1 + d \eps} \geq \frac{\gamma_{\infty}^2}{R_1 + d} G_t^2,
\end{align*}
where the last step uses $G_t \leq 1$ and $\eps \leq 1$. Define $\kappa := \frac{\gamma_{\infty}^2}{R_1 + d}$. Substituting this into \eqref{eq:master-descent-app}, we have
\begin{align*}
    L_{t+1} \leq L_t - \kappa \eta_t G_t^2 + C_{\rm md} \eta_t^2 G_t.
\end{align*}
By Young's inequality, we have $C_{\rm md} \eta_t^2 G_t = (\sqrt{\kappa \eta_t} G_t) (\frac{C_{\rm md}}{\sqrt{\kappa}} \eta_t^{3/2}) \leq \frac{\kappa}{2} \eta_t G_t^2 + \frac{C_{\rm md}^2}{2 \kappa} \eta_t^3$. 
Consequently, it holds that
\begin{align}
    L_{t+1} \leq L_t - c \eta_t G_t^2 + K \eta_t^3, \qquad \forall t \geq t_{\rm ema}, \label{eq:lowloss_1}
\end{align}
where $c:= \frac{\kappa}{2} = \frac{\gamma_{\infty}^2}{2(R_1 + d)}$ and $K:= \frac{C_{\rm md}^2}{2 \kappa} = \frac{(R_1 + d) C_{\rm md}^2}{2 \gamma_{\infty}^2}$.
Since $\eta_t = \frac{\eta_0}{(t+1)^a}$ with $a \in  (\frac{1}{3},1]$, we have $\sum_{t=0}^{\infty} \eta_t^3 = \eta_0^3 \sum_{t=0}^{\infty} (t+1)^{-3a} < \infty$. Define $R_t := \sum_{s =t}^{\infty} \eta_s^3$. We then have $R_{t+1} = R_t - \eta_t^3$ and $R_t \downarrow 0$. This holds because
\begin{align*}
    R_t = \eta_0^3 \sum_{s =t}^{\infty} \frac{1}{(s+1)^{3a}} \leq \eta_0^3 \int_{t}^{\infty} x^{-3a} dx = \frac{\eta_0^3}{3a - 1} t^{1 - 3a}, 
\end{align*}
and $t^{1 - 3a} \rightarrow 0$ as $1 - 3a < 0$. Further define $\Phi_{t} := L_t + K R_t, \forall t$. Using \eqref{eq:lowloss_1}, we obtain 
\begin{align}
    \Phi_{t+1} &= L_{t+1} + K R_{t+1} \nonumber \\
    &\leq L_t - c \eta_t G_t^2 + K \eta_t^3 + K(R_t - \eta_t^3) \nonumber\\
    &= L_t + K R_t - c \eta_t G_t^2 \nonumber\\ 
    &= \Phi_t - c \eta_t G_t^2, \qquad \forall t \geq t_{\rm ema}. \label{eq:lowloss_descent} 
\end{align}
Note that $\Phi_{t}$ is non-increasing for $t \geq t_{\rm ema}$. Rearranging and summing from $t=t_{\rm ema}$ to $T - 1$, we obtain
\begin{align}
    c \sum_{t=t_{\rm ema}}^{T-1} \eta_t G_t^2 \leq \Phi_{t_{\rm ema}} - \Phi_T \leq \Phi_{t_{\rm ema}}. 
\end{align}
Taking $T \rightarrow \infty$ leads to $\sum_{t=t_{\rm ema}}^{\infty} \eta_t G_t^2 \leq \frac{\Phi_{\rm ema}}{c} < \infty$. The prefix from $t = 0$ to $t_{\rm ema} - 1$ contains only finitely many terms. Therefore, we conclude that $\sum_{t=0}^{\infty} \eta_t G_t^2 < \infty$. This holds for every $\eps \in (0,1]$. 

Next, we show that $G_{t} \rightarrow 0$. As $a \leq 1$, we have $\sum_{t=0}^{\infty} \eta_t = \infty$. Suppose that $\liminf_{t \rightarrow \infty} G_t > 0$. Then there exists $\delta > 0$ and $t_{\delta}$ such that $G_t \geq \delta$ for $t \geq t_{\delta}$. Consequently we have $\sum_{t = t_{\delta}}^{\infty} \eta_t G_t^2 \geq \delta^2 \sum_{t = t_{\delta}}^{\infty} \eta_t = \infty$, which contradicts $\sum_{t=0}^{\infty} \eta_t G_t^2 < \infty$. Hence, we must have 
$\liminf_{t \rightarrow \infty} G_t = 0$. Since $\Phi_t$ is non--increasing and bounded below by zero, there exists $\Phi_{\infty} \geq 0$ such that $\Phi_t \rightarrow \Phi_{\infty}$. Since $R_t \rightarrow 0$, we further have $L_t = \Phi_t - K R_t \rightarrow \Phi_{\infty}$. Therefore, $L_t$ has a limit, which we denote by $L_{\infty}$ (note that $L_{\infty} = \Phi_{\infty}$). Given $\liminf_{t \rightarrow \infty} G_t = 0$, there exists a subsequence $t_k \rightarrow \infty$ such that $G_{t_k} \rightarrow 0$. Hence, we have $G_{t_k} \leq \frac{1}{2n}$ for all sufficiently large $k$. By Lemma \ref{lem:loss-proxy}, we have $L_{t_k} \leq 2G_{t_k}$. Therefore, $L_{t_k} \rightarrow 0$ given $G_{t_k} \rightarrow 0$. Given the entire sequence $L_t$ has the limit $L_{\infty}$, every subsequence must have the same limit. This implies that $L_{\infty} = 0$. Therefore, we conclude that $L_t \rightarrow 0$. Moreover, given $0 < G_t \leq L_t$ by Lemma \ref{lem:loss-proxy}, we further obtain $G_t \rightarrow 0$. 

Next, we construct a time $T_*$ that works for every $\eps \in (0,1]$. Let $t_0 := t_{\rm ema}$. By Lemma \ref{lem:uniform-D}, 
we have $\norm{w_{t_0}}_{\infty} = \norm{w_0 - \sum_{s=0}^{t_0 - 1} \eta_s D_s}_{\infty} \leq \norm{w_0}_{\infty} + C_D \sum_{s=0}^{t_0-1} \eta_s$. Define $M_0 := \norm{w_0}_{\infty} + C_D S_{t_0}$. Then, we have $\norm{w_{t_0}}_{\infty} \leq M_0$ uniformly in $\eps$. Note that $S_{t_0} \leq \eta_0 t_0$ and $M_0 \leq \norm{w_0}_{\infty} + C_D \eta_0 t_0$. For each sample, we have $|z_i^T w_{t_0}| \leq \norm{z_i}_1 \norm{w_{t_0}}_{\infty} \leq R_1 M_0$. This leads to $l(z_i^T w_{t_0}) \leq \log (1 + e^{R_1 M_0})$ and by averaging, we further obtain $L_{t_{0}} \leq B_L$ where $B_L := \log(1 + e^{R_1 M_0})$. Therefore, we obtain
\begin{align}
    \Phi_{t_0} = L_{t_0} + K R_{t_0} \leq B, \qquad B:= B_L + K R_{t_0} < \infty. \label{eq:phi0}
\end{align}
Importantly, $B$ is independent of $\eps$. Let $\ell_i := \ell(z_i^T w)$ and $a_i(w) := 1 - e^{-\ell_i}$. Recall that $L(w) = \frac{1}{n} \sum_{i=1}^n \ell_i$ and $G(w) = \frac{1}{n} \sum_{i=1}^n a_i(w)$.  
Given $L(w)$ is the average of the $\ell_i$'s, there is at least one index $i_{*}$ such that $\ell_{i_*} \geq L(w)$. Given the function $x \rightarrow 1 - e^{-x}$ is increasing, we have 
\begin{align*}
    a_{i_*}(w) = 1 - e^{-\ell_{i_*}} \geq 1 - e^{-L(w)}. 
\end{align*}
Therefore, we have 
\begin{align}
    G(w) = \frac{1}{n} \sum_{i=1}^n a_i(w) \geq  \frac{1}{n} a_{i_*}(w) \geq \frac{1}{n}(1 - e^{-L(w)}). \label{eq:lowloss_2}
\end{align}

Define $h := \frac{\log 2}{n}$. We can choose a deterministic time $t_1 \geq t_0$ such that $K R_{t_1} \leq \frac{h}{2}$. Concretely,  we have via integral approximation that 
\begin{align*}
    R_t = \eta_0^3 \sum_{s =t}^{\infty} \frac{1}{(s+1)^{3a}} = \eta_0^3  \sum_{k=t+1}^{\infty} k^{-3a} \leq \eta_0^3 \int_{t}^{\infty} x^{-3a} dx = \frac{\eta_0^3}{3a - 1} t^{1 - 3a}, \qquad t \geq 1. 
\end{align*}
Therefore, $K R_t \leq \frac{h}{2}$ is guaranteed whenever $t^{3a-1} \geq \frac{2K \eta_0^3}{(3a-1)h}$. We define $t_1 := \max \{t_0,1,\lceil (\frac{2 K \eta_0^3}{(3a-1)h})^{\frac{1}{3a-1}}\rceil\}$. Then, it holds that $K R_t \leq \frac{h}{2}, \forall t \geq t_1$.  
Note that $t_1$ does not depend on $\eps$. 

Suppose that $t \geq t_1$ and $\Phi_{t} > h$. Then, we have 
\begin{align*}
    L_t = \Phi_t - K R_t > h - K R_{t_1} \geq \frac{h}{2},
\end{align*}
where we used $R_t \leq R_{t_1}$. By \eqref{eq:lowloss_2}, we have 
\begin{align*}
    G_t \geq \frac{1}{n}(1 - e^{-h/2}) = \frac{1}{n} (1 - \exp(-\frac{\log 2}{2n})) =: \bar{g} > 0.
\end{align*}
Therefore, $\Phi_t > h, t \geq t_1$ implies that $G_t \geq \bar{g}$. By \eqref{eq:lowloss_descent}, we have the following whenever $t \geq t_1$ and $\Phi_t > h$, 
\begin{align}
    \Phi_{t+1} \leq \Phi_t - c \bar{g}^2 \eta_t. \label{eq:lowloss_3}
\end{align}
Since $\sum_t \eta_t = \infty$, we can choose a deterministic time $T_* > t_1$ such that $c \bar{g}^2 \sum_{t = t_1}^{T_* - 1} \eta_t > B$. Concretely, in the case where $1/3 < a < 1$, we have 
\begin{align*}
    \sum_{t = t_1}^{T-1} \eta_t = \eta_0 \sum_{k = t_1 + 1}^T k^{-a} \geq \eta_0 \int_{t_1 + 1}^{T+1} x^{-a} dx = \frac{\eta_0}{1-a}[(T+1)^{1-a} - (t_1 + 1)^{1-a}].
\end{align*}
Therefore, it is sufficient to require $\frac{c \bar{g}^2 \eta_0}{1 - a} [(T_*+1)^{1-a} - (t_1 + 1)^{1-a}]> B$. Solving for $T_*$ gives
\begin{align*}
    T_* = \bigl \lceil [(t_1 + 1)^{1-a} + \frac{(1 - a)B}{c \bar{g}^2 \eta_0}]^{\frac{1}{1-a}} \bigr \rceil, \qquad \frac{1}{3} < a < 1.
\end{align*}
For the case of $a = 1$, we have
\begin{align*}
    \sum_{t = t_1}^{T-1} \eta_t = \eta_0 \sum_{k = t_1 + 1}^T \frac{1}{k} \geq \eta_0 \int_{t_1 + 1}^{T + 1} \frac{dx}{x} = \eta_0 \log (\frac{T + 1}{t_1 + 1}).
\end{align*}
Therefore, it is sufficient to require $c \bar{g}^2 \eta_0 \log (\frac{T^* + 1}{t_1 + 1}) > B$. Solving for $T_*$ leads to
\begin{align*}
    T_* = \bigl \lceil (t_1 + 1) \exp(\frac{B}{c \bar{g}^2 \eta_0}) \bigr \rceil, \qquad a = 1.
\end{align*}
We now show that $\Phi_t$ must fall below $h$ by time $T_*$. Suppose that $\Phi_t > h$ for $t = t_1,...,T_*$. Then iterating \eqref{eq:lowloss_3}, we obtain
\begin{align*}
    \Phi_{T_*} \leq \Phi_{t_1} - c \bar{g}^2 \sum_{t = t_1}^{T_* - 1} \eta_t \leq B - c \bar{g}^2 \sum_{t = t_1}^{T_* - 1} \eta_t < 0,
\end{align*}
which is impossible because $\Phi_{T_*} \geq 0$. Therefore, there exists some $\tau \leq T_*$ such that $\Phi_{\tau} \leq h$. Since $\Phi_t$ is non-increasing, we have $\Phi_t \leq h = \frac{\log 2}{n}, t \geq T_{*}$. Because $L_t \leq \Phi_{t}$, we obtain
\begin{align}
    L_t \leq \frac{\log 2}{n}, \qquad t \geq T_*. \label{eq:lowloss_4}
\end{align}
Importantly, $T_*$ is independent of $\eps$ and it works for every $\eps \in (0,1]$. By \cref{lem:loss-proxy}, we always have $G_t \leq L_t$. By \eqref{eq:lowloss_4} and \cref{lem:loss-proxy}, we have $L_t \leq 2G_t$. Consequently, we have $\frac{1}{2} L_t \leq G_t \leq L_t$ for every $t \geq T_*$. 

Finally, \cref{lem:proxy-path-ratio} gives $G_t \geq G_r \exp(-R_1 C_D \sum_{s=r}^{t-1} \eta_s)$ for $0 \leq r \leq t$. Applying this with $r = 0$ and $t = T_*$, we obtain 
\begin{align*}
    G_{T_*} \geq G_0 \exp(-R_1  C_D S_{T_*}) =: g_* > 0. 
\end{align*}
The last inequality holds because $G_0 > 0$. Note that $g_*$ is independent of $\eps$. 
\end{proof}

\section{Sign and Gradient Residual Characterization} \label{app:res}
Throughout this section, we assume data separability, $0 \leq \beta_1 \leq \beta_2 < 1$, and $\eps \in (0,1]$. Recall that $R_1  :=\max_{i \in [n]} \norm{x_i}_1$ and $R_2  :=\max_{i \in [n]} \norm{x_i}_2$. The step size schedule is $\eta_t = \eta_0 (t+1)^{-a}$ with $\eta_0 > 0$ and $a \in (1/3,1]$. 
\begin{localproposition}
\label{prop:local-interpolation_app}
Denote $C_{\rm loc} := C_{\rm ema} \max \{ \frac{1}{\gamma_{\infty}}, \frac{1}{\gamma_2} \}$. The following holds for every $\eps>0$ and $t\ge t_{\rm ema}$ 
\begin{align}
\left[
\frac{1}{1+R_1\lambda_t}-C_{\rm loc}\eta_t
\right]_+
&\le \mathfrak r_t^{\rm sgn}
\le
\min\left\{1,\frac{d}{\gamma_\infty\lambda_t}\right\}
+C_{\rm loc}\eta_t,
\label{eq:sign-residual-bounds}\\
\left[
1-\frac{\sqrt d}{\gamma_2\lambda_t}-C_{\rm loc}\eta_t
\right]_+
&\le \mathfrak r_t^{\rm grad}
\le
\frac{R_1\lambda_t}{1+R_1\lambda_t}
+C_{\rm loc}\eta_t,
\label{eq:gradient-residual-bounds}
\end{align}
where $[x]_+:=\max\{x,0\}$. Consequently, for every family $t=t(\eps)$ with $t(\eps)\to\infty$ as $\eps \downarrow 0$, we have $\lambda_{t(\eps)}\to\infty$ implying $\bigl(\mathfrak r_{t(\eps)}^{\rm sgn},
\mathfrak r_{t(\eps)}^{\rm grad}\bigr)\to(0,1)$, and $\lambda_{t(\eps)}\to0$ implying $\bigl(\mathfrak r_{t(\eps)}^{\rm sgn},
\mathfrak r_{t(\eps)}^{\rm grad}\bigr)\to(1,0)$. 
\end{localproposition}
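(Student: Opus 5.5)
Throughout, $\mathfrak r_t^{\rm sgn}$ and $\mathfrak r_t^{\rm grad}$ denote the residuals $r_t^{\rm sgn}$ and $r_t^{\rm grad}$ from the main text. The plan is to compare $D_t$ with the soft-sign $\sigma:=\sigma_\eps(g_t)$, bound the idealized residuals with $D_t$ replaced by $\sigma$ explicitly in terms of $\lambda_t=G_t/\eps$, and absorb the replacement error with \cref{lem:alignment}.

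\emph{Sign residual.} By the triangle inequality,
\[
\mathfrak r_t^{\rm sgn}=\frac{\norm{g_t\eprod(D_t-\sign(g_t))}_1}{\norm{g_t}_1}
\]
differs from $\norm{g_t\eprod(\sigma-\sign(g_t))}_1/\norm{g_t}_1$ by at most $\norm{g_t\eprod(D_t-\sigma)}_1/\norm{g_t}_1\le C_{\rm ema}\eta_tG_t/(\gamma_\infty G_t)\le C_{\rm loc}\eta_t$, using \eqref{eq:weighted-alignment-app} and \cref{lem:proxy}. Coordinatewise, $|g_j|\,|\sigma_j-\sign(g_j)|=|g_j|\eps/(|g_j|+\eps)$, so the idealized residual is $\sum_j |g_j|\eps/(|g_j|+\eps)$ divided by $\norm{g_t}_1$. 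For the upper bound, each summand is at most $\min\{|g_j|,\eps\}$. This gives at most $\min\{1,d\eps/\norm{g_t}_1\}\le\min\{1,d/(\gamma_\infty\lambda_t)\}$. For the lower bound, $|g_j|\le\norm{g_t}_1\le R_1G_t=R_1\eps\lambda_t$ gives $\eps/(|g_j|+\eps)\ge 1/(1+R_1\lambda_t)$. Summing against $|g_j|$ yields the lower bound after subtracting $C_{\rm loc}\eta_t$, and taking $[\cdot]_+$ is free.

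\emph{Gradient residual.} Similarly, $\mathfrak r_t^{\rm grad}=\norm{\eps D_t-g_t}_2/\norm{g_t}_2$ is within $\eps\norm{D_t-\sigma}_2/\norm{g_t}_2\le C_{\rm ema}\eta_tG_t/(\gamma_2G_t)\le C_{\rm loc}\eta_t$ of $\norm{\eps\sigma-g_t}_2/\norm{g_t}_2$, using \eqref{eq:floor-alignment-app}. Coordinatewise, $\eps\sigma_j-g_j=-g_j|g_j|/(|g_j|+\eps)$. For the upper bound, $|g_j|/(|g_j|+\eps)\le R_1\lambda_t/(1+R_1\lambda_t)$, by the same bound $|g_j|\le R_1\eps\lambda_t$ and monotonicity of $x\mapsto x/(x+\eps)$; hence the norm ratio is at most that factor. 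For the lower bound, $|g_j|/(|g_j|+\eps)=1-\eps/(|g_j|+\eps)$. Then
\[
\norm{\eps\sigma-g}_2\ge\norm{g}_2-\Bigl\|\Bigl(\tfrac{\eps g_j}{|g_j|+\eps}\Bigr)_j\Bigr\|_2\ge\norm{g}_2-\sqrt d\,\eps,
\]
and dividing by $\norm{g}_2\ge\gamma_2\eps\lambda_t$ gives $1-\sqrt d/(\gamma_2\lambda_t)$.

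\emph{Limits.} If $t(\eps)\to\infty$, then $\eta_{t(\eps)}\to0$ and eventually $t(\eps)\ge t_{\rm ema}$. If $\lambda\to\infty$, the sign bounds squeeze $\mathfrak r^{\rm sgn}$ to $0$ and the gradient bounds squeeze $\mathfrak r^{\rm grad}$ to $1$. If $\lambda\to0$, both limits flip, since $1/(1+R_1\lambda)\to1$ and the upper bound $\min\{1,\cdot\}$ caps $\mathfrak r^{\rm sgn}$ at $1+o(1)$.

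The main care point is choosing, for each of the four bounds, which alignment estimate controls the $D_t$-versus-$\sigma$ error and in which norm: the weighted $\ell_1$ estimate \eqref{eq:weighted-alignment-app} for the sign residual and the $\eps$-scaled $\ell_2$ estimate \eqref{eq:floor-alignment-app} for the gradient residual. That choice is what yields the single constant $C_{\rm loc}$. The rest is elementary coordinatewise algebra.
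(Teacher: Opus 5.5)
Your proposal is correct and follows essentially the same route as the paper: you compare $D_t$ with the soft-sign $\sigma_\eps(g_t)$ via the reverse triangle inequality, using \eqref{eq:weighted-alignment-app} for the sign residual and \eqref{eq:floor-alignment-app} for the gradient residual, which gives the $C_{\rm loc}\eta_t$ error, and you then bound the idealized soft-sign residuals coordinatewise in terms of $\lambda_t$. The differences are cosmetic. You get both sign upper bounds at once from $|g_j|\eps/(|g_j|+\eps)\le\min\{|g_j|,\eps\}$, where the paper uses two separate bounds. You also note explicitly that $t(\eps)\ge t_{\rm ema}$ eventually, which the paper leaves implicit.
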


\begin{proof}[Proof of \cref{prop:local-interpolation_app}]
Fix any $\eps > 0$ and $t\ge t_{\rm ema}$, and abbreviate $g := g_t$, $D := D_t$, $G := G_t$, and $\lambda := \lambda_t = G_t/\eps$. We introduce the corresponding soft-sign residuals
\[
\bar{\mathfrak r}_t^{\rm sgn}
:=\frac{\norm{g\eprod(\sigma_\eps(g)-\sign(g))}_1}{\norm{g}_1},
\qquad
\bar{\mathfrak r}_t^{\rm grad}
:=\frac{\norm{\eps\sigma_\eps(g)-g}_2}{\norm{g}_2}.
\]
Let $A:= g \odot (D - \sign(g))$ and $B:= g \odot (\sigma_{\eps}(g) - \sign(g))$. Then we have $A - B = g \odot (D - \sigma_{\eps}(g))$, $\frac{\norm{A}_1}{\norm{g}_1} = \mathfrak r_t^{\rm sgn}$, and $\frac{\norm{B}_1}{\norm{g}_1} = \bar{\mathfrak r}_t^{\rm sgn}$. The reverse triangle inequality $\big| \norm{A}_1 - \norm{B}_1 \big| \leq \norm{A - B}_1$ gives
\begin{align*}
    | \mathfrak r_t^{\rm sgn} - \bar{\mathfrak r}_t^{\rm sgn}| = \frac{|\norm{A}_1 - \norm{B}_1|}{\norm{g}_1}\leq \frac{\norm{g \odot (D - \sigma_{\eps}(g))}_1}{\norm{g}_1} \leq \frac{C_{\rm ema}}{\gamma_{\infty}} \eta_t,
\end{align*}
where we used $\norm{g}_1 \geq \gamma_{\infty} G_t$ and $\norm{g_t\eprod(D_t-\sigma_\eps(g_t))}_1
\le C_{\rm ema}\eta_tG_t$ by \cref{lem:alignment}. Similarly, we let $\tilde{A} :=  \eps D - g$ and $\tilde{B}:= \eps \sigma_{\eps}(g) - g$.
Then, we have $\tilde{A} - \tilde{B} = \eps(D - \sigma_{\eps}(g))$, $\frac{\norm{\tilde{A}}_2}{\norm{g}_2} = \mathfrak r_t^{\rm grad}$, and $\frac{\norm{\tilde{B}}_2}{\norm{g}_2} = \bar{\mathfrak r}_t^{\rm grad}$. 
The reverse triangle inequality $\big| \norm{\tilde{A}}_2 - \norm{\tilde{B}}_2 \big| \leq \norm{\tilde{A} - \tilde{B}}_2$ gives
\begin{align*}
    | \mathfrak r_t^{\rm grad} - \bar{\mathfrak r}_t^{\rm grad}| = \frac{|\norm{\tilde{A}}_2 - \norm{\tilde{B}}_2|}{\norm{g}_2}\leq \frac{\eps \norm{D - \sigma_{\eps}(g)}_2}{\norm{g}_2} \leq \frac{C_{\rm ema}}{\gamma_2} \eta_t,
\end{align*}
where we used $\norm{g}_2 \geq \gamma_{2} G_t$ and $\eps \norm{D_t-\sigma_\eps(g_t)}_2
\le C_{\rm ema}\eta_t G_t$ by \cref{lem:alignment}. Combining the previous bounds, we obtain
\begin{align}
    \max \{ | \mathfrak r_t^{\rm sgn} - \bar{\mathfrak r}_t^{\rm sgn}|, | \mathfrak r_t^{\rm grad} - \bar{\mathfrak r}_t^{\rm grad}|\} \leq C_0 \eta_t \quad \text{where} \quad C_0 := C_{\rm ema} \max \{ \frac{1}{\gamma_{\infty}}, \frac{1}{\gamma_2} \} \label{eq:prop4_1}.
\end{align} 

Define $C_{\rm loc} := C_0 = C_{\rm ema} \max \{ \frac{1}{\gamma_{\infty}}, \frac{1}{\gamma_2} \}$. Next, we upper and lower bound $\bar{\mathfrak r}_t^{\rm sgn}$ (and consequently $\mathfrak r_t^{\rm sgn}$ by \eqref{eq:prop4_1}). For a coordinate with $g_j \neq 0$, we have $\sigma_{\eps}(g)_j = \sign(g_j) \frac{|g_j|}{|g_j| + \eps}$. This gives $\sigma_{\eps}(g)_j - \sign(g_j) = - \sign(g_j) \frac{\eps}{|g_j| + \eps}$ and consequently $|g_j| |\sigma_{\eps}(g)_j - \sign(g_j)| = \frac{\eps |g_j|}{|g_j| + \eps}$. The last equality also holds when $g_j = 0$. Summing over $j$ gives $\bar{\mathfrak r}_t^{\rm sgn} = \frac{\eps}{\norm{g}_1} \sum_{j=1}^d \frac{|g_j|}{|g_j| + \eps}$. Let $p_j := \frac{|g_j|}{\norm{g}_1}$. Note that $p_j \geq 0, \forall j$ and $\sum_{j=1}^d p_j = 1$ as $g \neq 0$. From this, we obtain
\begin{align*}
    \bar{\mathfrak r}_t^{\rm sgn} = \sum_{j=1}^d p_j \frac{\eps}{|g_j| + \eps} \geq \sum_{j=1}^d p_j \frac{\eps}{\norm{g}_{\infty} + \eps} = \frac{\eps}{\norm{g}_{\infty} + \eps}. 
\end{align*}
From $\norm{g}_{\infty} \leq \norm{g}_1 \leq R_1 G$, we further obtain $\bar{\mathfrak r}_t^{\rm sgn} \geq \frac{\eps}{R_1 G + \eps} = \frac{1}{1 + R_1 \lambda}$. Next, from $0 \leq \frac{\eps}{|g_j| + \eps} \leq 1$, it holds that $\bar{\mathfrak r}_t^{\rm sgn} \leq 1$. Moreover, we have $\bar{\mathfrak r}_t^{\rm sgn} = \frac{\eps}{\norm{g}_1} \sum_{j=1}^d \frac{|g_j|}{|g_j| + \eps} \leq \frac{d \eps}{\norm{g}_1} \leq \frac{d \eps}{\gamma_{\infty} G} = \frac{d}{\gamma_{\infty} \lambda}$ where the last inequality is by $\norm{g}_1 \geq \gamma_{\infty} G$. Putting things together, we have
\begin{align*}
    \frac{1}{1 + R_1 \lambda} \leq \bar{\mathfrak r}_t^{\rm sgn} \leq \min \{1, \frac{d}{\gamma_{\infty} \lambda} \}.
\end{align*}
From \eqref{eq:prop4_1}, we have $\bar{\mathfrak r}_t^{\rm sgn} - C_0 \eta_t \leq \mathfrak r_t^{\rm sgn}\leq \bar{\mathfrak r}_t^{\rm sgn}  + C_0 \eta_t$. This leads to 
\begin{align}
    \frac{1}{1 + R_1 \lambda} - C_0 \eta_t \leq \mathfrak r_t^{\rm sgn} \leq \min \{1, \frac{d}{\gamma_{\infty} \lambda}\} + C_0 \eta_t. \label{eq:tau_sign}
\end{align}
Given $\mathfrak r_t^{\rm sgn} \geq 0$, the lower bound becomes $\mathfrak r_t^{\rm sgn} \geq \big[ \frac{1}{1 + R_1 \lambda} - C_0 \eta_t\big]_+$ where $[x]_+:=\max\{x,0\}$.

Next, we focus on bounding $\bar{\mathfrak r}_t^{\rm grad}$ (and consequently $\mathfrak r_t^{\rm grad}$ by \eqref{eq:prop4_1}). From the coordinatewise equality $\eps \sigma_{\eps}(g)_j - g_j = -g_j \frac{|g_j|}{|g_j| + \eps}$, we have $\eps \sigma_{\eps} (g) - g = -g \odot \frac{|g|}{|g|+\eps}$. Therefore, the gradient residual $\bar{\mathfrak r}_t^{\rm grad}$ is $\bar{\mathfrak r}_t^{\rm grad} = \frac{\norm{g \odot \frac{|g|}{|g|+\eps}}_2}{\norm{g}_2}$. Given the scalar function $x \rightarrow \frac{x}{x + \eps}$ is increasing for $x \geq 0$, it holds that $\frac{|g_j|}{|g_j|+\eps} \leq \frac{\norm{g}_{\infty}}{\norm{g}_{\infty} + \eps}$ and consequently $\norm{g \odot \frac{|g|}{|g|+\eps}}_2 \leq \frac{\norm{g}_{\infty}}{\norm{g}_{\infty} + \eps} \norm{g}_2$. From this, we conclude that 
\begin{align*}
    \bar{\mathfrak r}_t^{\rm grad} = \frac{\norm{g \odot \frac{|g|}{|g|+\eps}}_2}{\norm{g}_2} \leq \frac{\frac{\norm{g}_{\infty}}{\norm{g}_{\infty} + \eps} \norm{g}_2}{\norm{g}_2} = \frac{\norm{g}_{\infty}}{\norm{g}_{\infty} + \eps}. 
\end{align*}
Using $\norm{g}_{\infty} \leq \norm{g}_1 \leq R_1 G$ and applying the monotonicity of $\frac{x}{x+\eps}$, we obtain
\begin{align*}
    \bar{\mathfrak r}_t^{\rm grad} \leq \frac{R_1 G}{R_1 G + \eps} = \frac{R_1 \lambda}{1 + R_1 \lambda}. 
\end{align*}
For the lower bound, using the reverse triangle inequality $\norm{\eps \sigma_{\eps}(g) - g}_2 \geq \norm{g}_2 - \eps \norm{\sigma_{\eps}(g)}_2$ and dividing by $\norm{g}_2$, we obtain $\bar{\mathfrak r}_t^{\rm grad} \geq 1 - \frac{\eps \norm{\sigma_{\eps}(g)}_2}{\norm{g}_2}$. From $|\sigma_{\eps}(g)_j| = \frac{|g_j|}{|g_j| + \eps} \leq 1$, it holds $\norm{\sigma_{\eps}(g)}_2 \leq \sqrt{d}$. Moreover, we also have $\norm{g}_{2} \geq \gamma_2 G$. Putting them together, we conclude
\begin{align*}
    \bar{\mathfrak r}_t^{\rm grad} \geq 1 - \frac{\eps \norm{\sigma_{\eps}(g)}_2}{\norm{g}_2} \geq 1 - \frac{\eps \sqrt{d}}{\gamma_2 G} = 1 - \frac{\sqrt{d}}{\gamma_2 \lambda}. 
\end{align*}
From \eqref{eq:prop4_1}, we have $\bar{\mathfrak r}_t^{\rm grad} - C_0 \eta_t \leq \mathfrak r_t^{\rm grad}\leq \bar{\mathfrak r}_t^{\rm grad}  + C_0 \eta_t$. This leads to 
\begin{align}
         1 - \frac{\sqrt{d}}{\gamma_2 \lambda} - C_0 \eta_t \leq \mathfrak r_t^{\rm grad} \leq \frac{R_1 G}{R_1 G + \eps} + C_0 \eta_t = \frac{R_1 \lambda}{1 + R_1 \lambda} + C_0 \eta_t. \label{eq:tau_grad}
\end{align}
Given $\mathfrak r_t^{\rm grad}$ is non-negative, the lower bound becomes $\mathfrak r_t^{\rm grad} \geq \bigl[ 1 - \frac{\sqrt{d}}{\gamma_2 \lambda} - C_0 \eta_t \bigr]_{+}$.

Let $t(\eps)$ be any family such that $t(\eps) \rightarrow \infty$ and $\lambda_{t(\eps)} \rightarrow \infty$ as $\eps \downarrow 0$. Since the learning rate is $\eta_t = \eta_0 (t+1)^{-a}$ with $a>0$, we have $\eta_{t(\eps)} \rightarrow 0$. By \eqref{eq:tau_sign}, we have $0 \leq \mathfrak r_{t(\eps)}^{\rm sgn} \leq \min \{ 1, \frac{d}{\gamma_{\infty} \lambda_{t(\eps)} } \} + C_{0} \eta_{t(\eps)}$. Hence, we conclude that $\mathfrak r_{t(\eps)}^{\rm sgn} \rightarrow 0$. For $\mathfrak r_{t(\eps)}^{\rm grad}$, given $1 - \frac{\sqrt{d}}{\gamma_2 \lambda_{t(\eps)}} - C_0 \eta_{t(\eps)} \rightarrow 1$ and $\frac{R_1 \lambda_{t(\eps)}}{1 + R_1 \lambda_{t(\eps
)}} + C_0 \eta_{t(\eps)} \rightarrow 1$, we conclude that $\mathfrak r_{t(\eps)}^{\rm grad} \rightarrow 1$ by \eqref{eq:tau_grad}. 

Let $t(\eps)$ be any family such that $t(\eps) \rightarrow \infty$ and $\lambda_{t(\eps)} \rightarrow 0$ as $\eps \downarrow 0$. We again have $\eta_{t(\eps)} \rightarrow 0$. From \eqref{eq:tau_grad}, we have that $0 \leq \mathfrak r_{t(\eps)}^{\rm grad} \leq \frac{R_1 \lambda_{t(\eps)}}{1 + R_1 \lambda_{t(\eps)}} + C_0 \eta_{t(\eps)}$. Consequently, it holds that $\mathfrak r_{t(\eps)}^{\rm grad} \rightarrow 0$. For $\mathfrak r_{t(\eps)}^{\rm sgn}$, given $\frac{1}{1 + R_1 \lambda_{t(\eps)}} - C_0 \eta_{t(\eps)} \rightarrow 1$ and $\frac{d}{\gamma_{\infty} \lambda_{t(\eps)}} \rightarrow \infty$, we conclude from \eqref{eq:tau_sign} that $\mathfrak r_{t(\eps)}^{\rm sgn} \rightarrow 1$.
\end{proof}

\begin{localproposition} \label{prop:time_bracket_app} For every $b, \delta \in (0,1)$, there exists a constant $\eps_0(b,\delta)$ such that 
\begin{align*}
    T_1^-(\epsilon;b) < T_1^{\delta}(\epsilon) < T_1^+(\epsilon;b), 
\end{align*} 
for every $0 < \eps \leq \eps_0(b,\delta)$. Moreover, it holds that $T^{-}_{1}(\eps;b) \rightarrow \infty$, $\lambda_{T^{-}_{1}(\eps;b)} \rightarrow \infty$, $T^{+}_{1}(\eps;b) \rightarrow \infty$, and $\lambda_{T^{+}_{1}(\eps;b)} \rightarrow 0$ as $\eps \downarrow 0$. Consequently, \cref{prop:local-interpolation_app} gives 
\begin{align*}
        \left(
        \mathfrak r_{T_1^-(\epsilon;b)}^{\rm sgn},
        \mathfrak r_{T_1^-(\epsilon;b)}^{\rm grad}
    \right)
    \rightarrow
    (0,1), \quad \text{and} \quad     \left(
        \mathfrak r_{T_1^+(\epsilon;b)}^{\rm sgn},
        \mathfrak r_{T_1^+(\epsilon;b)}^{\rm grad}
    \right)
    \rightarrow
    (1,0) \qquad \text{as} \, \, \eps \downarrow 0.
\end{align*}
Moreover, it holds for $T_1^{\delta}(\epsilon)$ that 
$\limsup_{\epsilon\downarrow0}
\max\left\{
\left|
r_{T_1^{\delta}(\epsilon)}^{\mathrm{sgn}}-1
\right|,
r_{T_1^{\delta}(\epsilon)}^{\mathrm{grad}}
\right\}
\le
\frac{R_1\delta}{1+R_1\delta}$. 
\end{localproposition}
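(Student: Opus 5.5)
The plan is to work only with the gradient-scale proxy $G_t$, using three earlier results: the $\eps$-uniform clock of \cref{prop:uniform-clock}, the path-ratio bound of \cref{lem:proxy-path-ratio}, and the residual sandwich of \cref{prop:local-interpolation_app}. Write $K_0=R_1C_D$.

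\textbf{Step 1: the stopping times are finite and lie after $T_*$.} By \cref{prop:uniform-clock}, $G_t\to0$ for every fixed $\eps$, so each of the three stopping times is finite. Also $G_{T_*}\ge g_*$ with $g_*$ independent of $\eps$. Since $\eps\ell_\eps^b$, $\delta\eps$ and $\eps\ell_\eps^{-b}$ all tend to $0$, for small $\eps$ they lie below $g_*$. Hence none of the stopping times equals $T_*$, and each is at least $T_*+1$. For small $\eps$ we also have $\ell_\eps^{-b}<\delta<\ell_\eps^{b}$, which holds once $\ell_\eps>\max\{\delta^{-1/b},\delta^{1/b}\}$. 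The thresholds are therefore ordered, and monotonicity of hitting times gives the weak ordering $T_1^-\le T_1\le T_1^+$.

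\textbf{Step 2: strict ordering via a one-step ratio bound.} The key tool is \cref{lem:proxy-path-ratio} applied to a single step, which gives $G_{t+1}\ge e^{-K_0\eta_t}G_t\ge e^{-K_0\eta_0}G_t$. At $t=T_1^-$, minimality gives $G_{T_1^--1}>\eps\ell_\eps^b$. Therefore
\[
G_{T_1^-}> e^{-K_0\eta_0}\eps\ell_\eps^b,
\]
and this exceeds $\delta\eps$ as soon as $\ell_\eps^b>\delta e^{K_0\eta_0}$. Hence $T_1^-<T_1$. In the same way, $G_{T_1}>e^{-K_0\eta_0}\delta\eps$, which exceeds $\eps\ell_\eps^{-b}$ once $\ell_\eps^{-b}<\delta e^{-K_0\eta_0}$. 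Hence $T_1<T_1^+$. Collecting the finitely many smallness conditions on $\eps$ defines $\eps_0(b,\delta)$. I expect this step to be the main obstacle: a single Adam step could in principle make $G$ jump past several thresholds at once, and the uniform bound $\norm{D_t}_\infty\le C_D$ is exactly what prevents this.

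\textbf{Step 3: divergence and the limits of $\lambda$.} \cref{lem:proxy-path-ratio} with $r=T_*$ gives $G_t\ge g_*e^{-K_0(S_t-S_{T_*})}$. Evaluating at $t=T_1^-$ and using $G_{T_1^-}\le\eps\ell_\eps^b$ yields
\[
S_{T_1^-}-S_{T_*}\ge K_0^{-1}\log\bigl(g_*/(\eps\ell_\eps^b)\bigr)\to\infty,
\]
so $T_1^-\to\infty$, and hence $T_1^+\ge T_1^-\to\infty$ as well. Step 2 gives $\lambda_{T_1^-}\ge e^{-K_0\eta_0}\ell_\eps^b\to\infty$, while the definition gives $\lambda_{T_1^+}\le\ell_\eps^{-b}\to0$. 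Since these times exceed $t_{\rm ema}$, the limit part of \cref{prop:local-interpolation_app} yields the residual limits $(0,1)$ at $T_1^-$ and $(1,0)$ at $T_1^+$.

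\textbf{Step 4: the bound at $T_1^\delta$.} At $T_1^\delta$ we have $\lambda\le\delta$ and $\eta_{T_1}\to0$, because $T_1\ge T_1^-\to\infty$. By \eqref{eq:gradient-residual-bounds} and monotonicity of $x\mapsto R_1x/(1+R_1x)$,
\[
r^{\rm grad}_{T_1}\le\frac{R_1\delta}{1+R_1\delta}+C_{\rm loc}\eta_{T_1}.
\]
By \eqref{eq:sign-residual-bounds},
\[
\frac{1}{1+R_1\delta}-C_{\rm loc}\eta_{T_1}\le r^{\rm sgn}_{T_1}\le 1+C_{\rm loc}\eta_{T_1}.
\]
It follows that $|r^{\rm sgn}_{T_1}-1|\le R_1\delta/(1+R_1\delta)+C_{\rm loc}\eta_{T_1}$. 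Taking $\limsup$ as $\eps\downarrow0$ gives the stated bound.
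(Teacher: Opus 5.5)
Your proposal is correct and follows essentially the same route as the paper. The ingredients match: threshold ordering gives the weak inequalities; minimality plus the one-step case of \cref{lem:proxy-path-ratio} gives strictness and $\lambda_{T_1^-}\to\infty$; the path-ratio lower bound forces $S_{T_1^-}\to\infty$; and \cref{prop:local-interpolation_app} supplies the residual statements. The only differences are simplifications. You use the uniform one-step factor $e^{-K_0\eta_0}$, which suffices because $\ell_\eps^b\to\infty$, whereas the paper first forces $T_1^->N$ with $K_0\eta_N<\log 2$. At $T_1^\delta(\eps)$ you need only $\lambda_{T_1^\delta}\le\delta$ and the monotonicity of the residual bounds, rather than the paper's $\lambda_{T_1^\delta}\to\delta$.
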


\begin{proof}[Proof of \cref{prop:time_bracket_app}] Fix any $b,\delta \in (0,1)$. For simplicity, we let $T^{-}_{\eps}:= T^{-}_1(\eps;b)$, $T^{\delta}_{\eps
}:= T^{\delta}_{1}(\eps)$, and $T^{+}_{\eps} := T^{+}_{1} (\eps;b)$. Recall that $\ell_{\eps} = \log \frac{1}{\eps}$. Define $h^{-}_{\eps} := \eps \ell_{\eps}^b$, $h^{\delta}_{\eps} := \delta \eps$, and $h^{+}_{\eps} := \eps \ell_{\eps}^{-b}$. As $\eps \downarrow 0$, it holds that $h^{-}_{\eps} \rightarrow 0$, $h^{\delta}_{\eps} \rightarrow 0$, and $h^{+}_{\eps} \rightarrow 0$. For each fixed $\eps$, given $G_t \rightarrow 0$, we have $T^{-}_{\eps}, T^{\delta}_{\eps},T^{+}_{\eps} < \infty$. As $\ell_{\eps}^b \rightarrow \infty$ and $\ell_{\eps}^{-b} \rightarrow 0$, we have $\ell_{\eps}^b > \delta > \ell_{\eps}^{-b}$ (or equivalently, $h_{\eps}^{-} > h_{\eps}^{\delta} > h_{\eps}^{+}$) for sufficiently small $\eps$. Concretely, we can let $\bar{\eps}_{0}(b,\delta):= \exp(-2 \delta^{-1/b})$ and choose any $\eps$ such that $0< \eps \leq \bar{\eps}_{0}(b,\delta)$. This implies that 
\begin{align*}
    \{ t\geq T_*: G_t \leq h_{\eps}^{+}\} \subseteq \{ t\geq T_*: G_t \leq h_{\eps}^{\delta}\} \subseteq
    \{ t\geq T_*: G_t \leq h_{\eps}^{-}\}.
\end{align*}
Consequently, we have $T_{\eps}^{-} \leq T_{\eps}^{\delta} \leq T_{\eps}^{+}$. 

Let $S_t := \sum_{s=0}^{t-1} \eta_s$ and $K:= R_1 C_D$. By \cref{lem:proxy-path-ratio}, we have $G_t \geq G_0 e^{-K S_t}$. Consequently, it holds that $G_0 e^{-K S_{T_{\eps}^{-}}} \leq G_{T_{\eps}^-} \leq \eps \ell_{\eps}^b$, where the second inequality is by the definition of $T_{\eps}^{-}$. From $G_0 e^{-K S_{T_{\eps}^-}} \leq \eps \ell_{\eps}^b$, we have $S_{T_{\eps}^-} \geq \frac{1}{K} \log  \frac{G_0}{\eps \ell_{\eps}^b}$ by taking logarithms and rearranging. Since $\log \frac{G_0}{\eps \ell_{\eps}^b} = \ell_{\eps} - b \log \ell_{\eps} + \log G_0 \rightarrow \infty$ as $\eps \downarrow 0$, we conclude that $S_{T^-_{\eps}} \rightarrow \infty$ and consequently $T_{\eps}^{-} \rightarrow \infty$. 

Since $T_{\eps}^{-} \rightarrow \infty$, we have $T_{\eps}^{-} > T_* $ for sufficiently small $\eps$. Concretely, we set $\eps_{\rm burn}(b,\delta) := \min \{e^{-2}, \frac{g_*^2}{4} \}$ (recall that $g_*$ is the constant in \cref{prop:uniform-clock}) and choose $0 < \eps \leq \eps_{\rm burn}$. For $\eps \leq e^{-2}$, we have $\ell_{\eps} \geq 2$. Since $0 < b < 1$, we have $\ell_{\eps}^b \leq \ell_{\eps}$. For $\ell_{\eps} \geq 2$, we also have $\ell_{\eps} \leq e^{\ell_{\eps} / 2}$. Consequently, it holds that
\begin{align*}
    \eps \ell_{\eps}^b &= e^{-\ell_{\eps}} \ell_{\eps}^b \leq e^{-\ell_{\eps}}  e^{\ell_{\eps}/2} = e^{-l_{\eps} / 2} = \sqrt{\eps} \leq \frac{g_*}{2} < g_*.
\end{align*}
Therefore, we must have $T_{\eps}^{-} > T_* $ given $G_{T_*} \geq g_* > \eps \ell_{\eps}^b$.  

By minimality of $T^{-}_{\eps}$, we have $G_{T_{\eps}^{-} - 1} > \eps \ell_{\eps}^b$ and $G_{T_{\eps}^{-}} \leq \eps \ell_{\eps}^b$. These inequalities and \cref{lem:proxy-path-ratio} lead to
\begin{align}
    \eps \ell_{\eps}^b \geq G_{T_{\eps}^{-}} \geq e^{-K \eta_{T_{\eps}^{-} - 1}} G_{T_{\eps}^{-} - 1} > e^{-K \eta_{T_{\eps}^{-} - 1}} \eps \ell_{\eps}^b. \label{eq:pp_1}
\end{align}
Therefore, we obtain after rearranging that 
\begin{align*}
    e^{-K \eta_{T_{\eps}^- - 1}} < \frac{G_{T_{\eps}^-}}{\eps \ell_{\eps}^b} \leq 1.
\end{align*}
Given $T_{\eps}^{-} \rightarrow \infty$, we have $\eta_{T_{\eps}^- - 1} \rightarrow 0$ and consequently $e^{-K \eta_{T_{\eps}^- - 1}} \rightarrow 1$. This gives $\frac{G_{T_{\eps}^-}}{\eps \ell_{\eps}^{b}} \rightarrow 1$. It follows that
\begin{align*}
    \lambda_{T_{\eps}^-} = \frac{G_{T_{\eps}^-}}{\eps} = \ell_{\eps}^b \frac{G_{T_{\eps}^-}}{\eps \ell_{\eps}^b} \rightarrow \infty. 
\end{align*}
Set $N := \max \{T_*, \lceil (\frac{K \eta_0}{\log 2})^{1/a} \rceil \}$. Then we have $K \eta_N < \log 2$. Consequently, it holds that $e^{-K \eta_N} > e^{- \log 2} = \frac{1}{2}$. Define $c_N := G_0 e^{-K S_N} > 0$. \cref{lem:proxy-path-ratio} gives $G_t \geq G_0 e^{-K S_t} \geq c_N$ when $0 \leq t \leq N$. Set $x_0(b,\delta) := \max \{2, (2\delta)^{1/b},2 \log \frac{2}{c_N}\}$ and choose $0 < \eps \leq e^{-x_0}$. Letting $x = \log (\frac{1}{\eps})$, we have $x \geq x_0 \geq 2$ and $x^b \leq e^{x/2}$ since $b \in (0,1)$. Consequently, we obtain
\begin{align*}
    \eps \ell_{\eps}^b = e^{-x} x^b \leq e^{-x/2} \leq e^{-x_0 / 2} \leq \frac{c_N}{2} < c_N. 
\end{align*}
Combining this with $G_t \geq c_N, 0\leq t\leq N$, we conclude that $T_{\eps}^{-} > N$ (as $G_t > \eps \ell_{\eps}^b,0 \leq t \leq N$). 
Consequently, it holds that $\eta_{T_{\eps}^{-} - 1} \leq \eta_N$. 
Combining this with \eqref{eq:pp_1}, we obtain
\begin{align}
    \frac{G_{T_{\eps}^{-}}}{\eps} > e^{-K \eta_{T_{\eps}^{-} - 1}} \ell_{\eps}^b \geq e^{-K \eta_N} \ell_{\eps}^b > \frac{1}{2}l_{\eps}^b = \frac{1}{2} x^b \geq \delta, \label{eq:res_1}
\end{align}
where in the last inequality we used $x^b \geq x_0^b \geq 2 \delta$. We have already shown $T_{\eps}^- \leq T_{\eps}^{\delta}$. Therefore, $T_{\eps}^- < T_{\eps}^{\delta}$; otherwise, we would have $G_{T_{\eps}^-} = G_{T_{\eps}^{\delta}} \leq \delta \eps$, contradicting $G_{T_{\eps}^-} > \delta \eps$. 
At $T^{\delta}_{\eps}$, it holds that $G_{T_{\eps}^\delta} \leq \delta \eps$. By \cref{lem:proxy-path-ratio}, we have $G_0 e^{-K S_{T^{\delta}_{\eps}}} \leq \delta \eps$ and therefore 
\begin{align*}
    S_{T^{\delta}_{\eps}} \geq \frac{1}{K} \log \frac{G_0}{\delta \eps} = \frac{1}{K} (\ell_{\eps} + \log \frac{G_0}{\delta}) \rightarrow \infty, \quad \text{as} \, \eps \downarrow 0.
\end{align*}
Consequently, it holds that $T_{\eps}^\delta \rightarrow \infty$. This implies that $T_{\eps}^{\delta} > T_*$ when $\eps$ is sufficiently small. Concretely, we set $\eps_{\rm base}(b,\delta) := \min \{1, \frac{g_*}{2\delta}\}$ and choose $0 < \eps \leq \eps_{\rm base}$. 
Then, we have that $\delta \eps \leq \frac{g_*}{2} < g_* \leq G_{T_*}$ and consequently $T_{\eps}^\delta > T_*$. 
By minimality of $T_{\eps}^{\delta}$ and \cref{lem:proxy-path-ratio}, we have $G_{T_{\eps}^\delta - 1} > \delta \eps$ and $G_{T_{\eps}^\delta} \geq G_{T_{\eps}^\delta - 1} e^{-K \eta_{T_{\eps}^\delta - 1}} > \delta \eps e^{-K \eta_{T_{\eps}^\delta - 1}}$. Together with the defining upper bound, we have $\delta \eps e^{-K \eta_{T_{\eps}^\delta - 1}} < G_{T_{\eps}^\delta} \leq \delta \eps$. After dividing by $\delta \eps$, we obtain
\begin{align}
    e^{-K \eta_{T_{\eps}^\delta - 1}} < \frac{G_{T_{\eps}^\delta}}{\delta \eps} \leq 1. \label{eq:tau_s} 
\end{align}
Consequently, we conclude that $\frac{G_{T_{\eps}^\delta}}{\delta \eps} \rightarrow 1$ given $\eta_{T_{\eps}^\delta - 1} \rightarrow 0$ as $T^{\delta}_{\eps} \rightarrow \infty$. In particular, it holds that $G_{T_{\eps}^{\delta}} > \frac{\delta}{2} \eps$ for sufficiently small $\eps$. Concretely, given 
$\delta \eps e^{-K \eta_{T_{\eps}^\delta - 1}} < G_{T_{\eps}^\delta}$, to obtain $G_{T_{\eps}^{\delta}} > \frac{\delta}{2} \eps$, it is sufficient to guarantee that $e^{-K \eta_{T_{\eps}^\delta - 1}} > \frac{1}{2}$ (or equivalently, $K \eta_{T_{\eps}^\delta - 1} < \log 2$). For the choice of $N$ above, it holds that $e^{-K \eta_{N}} > \frac{1}{2}$. Recall that we have shown $G_t \geq c_N, 0\leq t\leq N$ where $c_N := G_0 e^{-K S_N} > 0$. Set $\bar{\eps}_{3}(b,\delta):= \min \{1, \frac{c_N}{2 \delta}\}$ and choose $0 < \eps \leq \bar{\eps}_{3}(b,\delta)$. Then we have
\begin{align*}
    \delta \eps \leq \frac{c_N}{2} < c_N \leq G_t, \quad \forall \, 0 \leq t\leq N. 
\end{align*}
Hence, the condition $G_t \leq \delta \eps$ cannot occur by time $N$, and therefore $T_{\eps}^{\delta} > N$. Since $T_{\eps}^{\delta} > N$, it also holds that $T_{\eps}^{\delta} - 1 \geq N$ and consequently $\eta_{T_{\eps}^{\delta} - 1} \leq \eta_N$. From this, we conclude that
\begin{align*}
    G_{T_{\eps}^{\delta}} > \delta \eps e^{-K \eta_{T_{\eps}^{\delta} - 1}} \geq \delta \eps e^{-K \eta_{N}} > \frac{\delta}{2} \eps. 
\end{align*}

Moreover, as $\ell_{\eps}^{-b} \rightarrow 0$, we have $\ell_{\eps}^{-b} < \frac{\delta}{2}$ for sufficiently small $\eps$. Concretely, we set  $\eps_{0}^+(b,\delta) = \exp[-2(\frac{2}{\delta})^{1/b}]$ and choose $0 < \eps \leq \eps_{0}^+(b,\delta)$. Combining them together, we have $G_{T_{\eps}^{\delta}} > \frac{\delta}{2} \eps > \eps \ell_{\eps}^{-b}$. Therefore, the threshold for $T_{\eps}^+$ has not yet been reached at $T_{\eps}^{\delta}$. Since $T_{\eps}^{\delta} \leq T_{\eps}^{+}$, we conclude that $T_{\eps}^{\delta} < T_{\eps}^+$. Otherwise, we would have $G_{T_{\eps}^{\delta}} = G_{T_{\eps}^+} \leq \eps \ell_{\eps}^{-b}$, contradicting $G_{T_{\eps}^{\delta}} > \eps l_{\eps}^{-b}$. This finishes the proof of $T_{\eps}^- < T_{\eps}^{\delta} < T_{\eps}^+$. 

We already proved $T^{\delta}_{\eps} \rightarrow \infty$. Since $T_{\eps}^{+} > T_{\eps}^{\delta}$, it follows that $T_{\eps}^+ \rightarrow \infty$. By the definition of $T^{+}_{\eps}$, we have $G_{T^{+}_{\eps}} \leq \eps \ell_{\eps}^{-b}$. Therefore, it holds that $0 < \lambda_{T^+_{\eps}} = \frac{G_{T_{\eps}^+}}{\eps} \leq \ell_{\eps}^{-b}$. Since $\ell_{\eps}^{-b} \rightarrow 0$, we conclude that $\lambda_{T^+_{\eps}} \rightarrow 0$. 
So far, we have shown that $T^{-}_{1}(\eps;b) \rightarrow \infty$, $\lambda_{T^{-}_{1}(\eps;b)} \rightarrow \infty$, $T^{+}_{1}(\eps;b) \rightarrow \infty$, and $\lambda_{T^{+}_{1}(\eps;b)} \rightarrow 0$ as $\eps \downarrow 0$. By 
 \cref{prop:local-interpolation_app}, we have
\begin{align*}
        \left(
        \mathfrak r_{T_1^-(\epsilon;b)}^{\rm sgn},
        \mathfrak r_{T_1^-(\epsilon;b)}^{\rm grad}
    \right)
    \rightarrow
    (0,1), \quad \text{and} \quad     \left(
        \mathfrak r_{T_1^+(\epsilon;b)}^{\rm sgn},
        \mathfrak r_{T_1^+(\epsilon;b)}^{\rm grad}
    \right)
    \rightarrow
    (1,0) \qquad \text{as} \, \, \eps \downarrow 0.
\end{align*}
Since $T_{\eps}^{\delta} \to \infty$ as $\eps \downarrow 0$, we have
$\eta_{T_{\eps}^{\delta}-1}\to 0$. Therefore, we obtain by \eqref{eq:tau_s} that 
\[
\frac{G_{T_{\eps}^{\delta}}}{\delta\epsilon}\to 1,
\qquad
\lambda_{T_{\eps}^{\delta}}
=
\frac{G_{T_{\eps}^{\delta}}}{\epsilon}
\to \delta.
\]
Consequently, applying 
 \cref{prop:local-interpolation_app} gives 
\[
\frac{1}{1+R_1\delta}
\le
\liminf_{\epsilon\downarrow 0}
r_{T_{\eps}^{\delta}}^{\mathrm{sgn}}
\le
\limsup_{\epsilon\downarrow 0}
r_{T_{\eps}^{\delta}}^{\mathrm{sgn}}
\le
\min\left\{
1,\frac{d}{\gamma_\infty\delta}
\right\},
\]
and
\[
\left[
1-\frac{\sqrt{d}}{\gamma_2\delta}
\right]_+
\le
\liminf_{\epsilon\downarrow 0}
r_{T_{\eps}^{\delta}}^{\mathrm{grad}}
\le
\limsup_{\epsilon\downarrow 0}
r_{T_{\eps}^{\delta}}^{\mathrm{grad}}
\le
\frac{R_1\delta}{1+R_1\delta}.
\]
Define $A_\delta:=\frac{R_1\delta}{1+R_1\delta}$.
Since $\frac{1}{1+R_1\delta}=1-A_\delta$,
we have
\begin{align*}
    1-A_\delta-o_\epsilon(1)
\le
r_{T_{\eps}^{\delta}}^{\mathrm{sgn}}
\le
1+o_\epsilon(1), \qquad 0
\le
r_{T_{\eps}^{\delta}}^{\mathrm{grad}}
\le
A_\delta+o_\epsilon(1).
\end{align*}
Equivalently, it holds that 
\[
\limsup_{\epsilon\downarrow0}
\max\left\{
\left|
r_{T_{\eps}^{\delta}}^{\mathrm{sgn}}-1
\right|,
r_{T_{\eps}^{\delta}}^{\mathrm{grad}}
\right\}
\le
\frac{R_1\delta}{1+R_1\delta}.
\]
Because $A_\delta=O(\delta)$, the residual pair at
$T_1^\delta(\epsilon)$ lies within $O(\delta)$ of the gradient-like residual pair $(1,0)$.

Finally, we summarize the requirements on $\eps$. We set 
\begin{align*}
    \eps_{\rm all} (b,\delta) := \min \{ \bar{\eps}_{0}(b,\delta), \eps_{\rm burn}(b,\delta), e^{-x_0(b,\delta)}, \eps_{\rm base}(b,\delta), \bar{\eps}_{3}(b,\delta) , \eps_0^{+}(b,\delta) \},
\end{align*} 
where $\bar{\eps}_{0}(b,\delta) = \exp(-2 \delta^{-1/b})$, $\eps_{\rm burn}(b,\delta) = \min \{e^{-2}, \frac{g_*^2}{4} \}$, $x_0(b,\delta) = \max \{2, (2\delta)^{1/b},2 \log \frac{2}{c_N}\}$, $\eps_{\rm base}(b,\delta) = \min \{1, \frac{g_*}{2\delta} \}$, $\bar{\eps}_{3}(b,\delta) = \min \{1, \frac{c_N}{2 \delta}\}$ ($c_N = G_0 e^{-R_1 C_D S_N}$), and $\eps_{0}^+(b,\delta) = \exp[-2(\frac{2}{\delta})^{1/b}]$. 
\end{proof}

\begin{proof}[Proof of \cref{thm:residual_nonasymp}] For brevity, we write
$T^-:=T_1^-(\eps;b)$, $T^\delta:=T_1^\delta(\eps)$, and
$T^+:=T_1^+(\eps;b)$.
Recall that
$C_D:=\sqrt{\frac{1-\beta_1}{1-\beta_2}}$,
$K_0:=R_1C_D$, and 
$C_{\mathrm{loc}}
:=
C_{\mathrm{ema}}
\max\left\{
\frac{1}{\gamma_\infty},
\frac{1}{\gamma_2}
\right\}$.
Let $\eps_{\mathrm{all}}(b,\delta)$ be the threshold
defined in the proof of \cref{prop:time_bracket_app}. Define
$L_{\mathrm{clk}}
:=
\max\left\{
16,\,
4\left[\log\frac{1}{G_0}\right]_+
\right\}$ and set
$
\Tilde{\eps}_{\mathrm{res}}(b,\delta)
:=
\min\left\{
\eps_{\mathrm{all}}(b,\delta),
e^{-L_{\mathrm{clk}}}
\right\}$.
We prove the result holds for every
$0<\eps\le \Tilde{\eps}_{\mathrm{res}}(b,\delta)$.

By the construction in the proof of \cref{prop:uniform-clock}, we have 
$T_*>t_{\mathrm{ema}}$. Consequently, we also have
$T^-,T^\delta,T^+\ge T_*>t_{\mathrm{ema}}$.
Therefore, \cref{prop:local-interpolation_app} applies at all three stopping times. 
In particular, for every $t\ge t_{\mathrm{ema}}$, we have 
\begin{align}
r_t^{\mathrm{sgn}}
&\le
\min\left\{
1,\frac{d}{\gamma_\infty\lambda_t}
\right\}
+C_{\mathrm{loc}}\eta_t, \qquad 
r_t^{\mathrm{grad}} \le
\frac{R_1\lambda_t}{1+R_1\lambda_t}
+C_{\mathrm{loc}}\eta_t.
\label{eq:res-proof}
\end{align}

We first control $\lambda_t$ at the three stopping times. From \eqref{eq:res_1}, we have 
\begin{equation}
\lambda_{T^-}
=
\frac{G_{T^-}}{\eps}
>
\frac{1}{2}\ell_\eps^b.
\label{eq:lambda-minus-proof}
\end{equation}
At the other two stopping times, their definitions directly give
\begin{equation}
\lambda_{T^\delta}
=
\frac{G_{T^\delta}}{\eps}
\le \delta,
\qquad
\lambda_{T^+}
=
\frac{G_{T^+}}{\eps}
\le \ell_\eps^{-b}.
\label{eq:lambda-delta-plus-proof}
\end{equation}

Substituting \eqref{eq:lambda-minus-proof} into
\eqref{eq:res-proof}, we obtain
\begin{equation}
r_{T^-}^{\mathrm{sgn}}
\le
\frac{2d}{\gamma_\infty}\ell_\eps^{-b}
+C_{\mathrm{loc}}\eta_{T^-}.
\label{eq:sign-minus-proof}
\end{equation}
Similarly, using \eqref{eq:lambda-delta-plus-proof} in
\eqref{eq:res-proof}, together with the monotonicity of
$x\mapsto R_1x/(1+R_1x)$ over $x \geq 0$, we obtain 
\begin{align}
r_{T^+}^{\mathrm{grad}}
&\le
\frac{R_1\ell_\eps^{-b}}
     {1+R_1\ell_\eps^{-b}}
+C_{\mathrm{loc}}\eta_{T^+}
\le
R_1\ell_\eps^{-b}
+C_{\mathrm{loc}}\eta_{T^+},
\label{eq:grad-plus-proof}\\
r_{T^\delta}^{\mathrm{grad}}
&\le
\frac{R_1\delta}{1+R_1\delta}
+C_{\mathrm{loc}}\eta_{T^\delta}.
\label{eq:grad-delta-proof}
\end{align}
Recall that $S_t:=\sum_{s=0}^{t-1}\eta_s$.
 The definition of $T^-$ together with \cref{lem:proxy-path-ratio} imply
$
G_0e^{-K_0S_{T^-}}
\le G_{T^-}
\le \eps\ell_\eps^b$.
Taking logarithms and rearranging gives 
\begin{equation}
S_{T^-}
\ge
\frac{1}{K_0}
\log\frac{G_0}{\eps\ell_\eps^b}
=
\frac{1}{K_0}
\left(
\ell_\eps-b\log\ell_\eps+\log G_0
\right).
\label{eq:S-minus-preliminary}
\end{equation}
Because $\eps\le e^{-L_{\mathrm{clk}}}$, we have
$\ell_\eps\ge L_{\mathrm{clk}}\ge16$. Therefore, it holds that 
$b\log\ell_\eps
\le
\log\ell_\eps
\le
\frac{1}{4}\ell_\eps$. 
Moreover, by the definition of $L_{\mathrm{clk}}$, we have
$
\log G_0
\ge
-\frac{1}{4}\ell_\eps$.
Consequently, we obtain
$
\ell_\eps-b\log\ell_\eps+\log G_0
\ge
\frac{1}{2}\ell_\eps$.
Substituting this into \eqref{eq:S-minus-preliminary} gives $S_{T^-}\ge \frac{\ell_\eps}{2K_0}$.

Suppose first that $a\in(1/3,1)$. Since the function $x^{-a}$ is positive and decreasing, it holds that $\sum_{k=2}^t k^{-a} \leq \int_{1}^{x} x^{-a} dx = \frac{t^{1-a} - 1}{1 - a}$. This implies that $\sum_{k=1}^t k^{-a} \leq 1 + \frac{t^{1-a} - 1}{1 - a} = \frac{t^{1-a} - a}{1 -a} \leq \frac{t^{1-a}}{1-a}$. Consequently, 
for every integer $t\ge1$, we have 
$S_t
=
\eta_0\sum_{k=1}^t k^{-a}
\le
\frac{\eta_0}{1-a}t^{1-a}$.
Combining this with $S_{T^-} \geq \frac{\ell_{\eps}}{2 K_0}$ gives
$T^-
\ge
\left(
\frac{1-a}{2K_0\eta_0}\ell_\eps
\right)^{\frac{1}{1-a}}$.
Consequently, we obtain 
\begin{equation}
\eta_{T^-}
=
\eta_0(T^-+1)^{-a}
\le
\eta_0
\left(
\frac{2K_0\eta_0}{1-a}
\right)^{\frac{a}{1-a}}
\ell_\eps^{-\frac{a}{1-a}}
=
O\left(
\ell_\eps^{-\frac{a}{1-a}}
\right).
\label{eq:eta-minus-poly-proof}
\end{equation}
Since the step sizes are non-increasing and
$T^-<T^\delta<T^+$, we have 
\[
\max\{\eta_{T^-},\eta_{T^\delta},\eta_{T^+}\}
\leq 
\eta_0
\left(
\frac{2K_0\eta_0}{1-a}
\right)^{\frac{a}{1-a}}
\ell_\eps^{-\frac{a}{1-a}}
=
O\left(
\ell_\eps^{-\frac{a}{1-a}}
\right). 
\]
Combining this estimate with
\eqref{eq:sign-minus-proof} and
\eqref{eq:grad-plus-proof} gives
$\max\left\{
r_{T^-}^{\mathrm{sgn}},
r_{T^+}^{\mathrm{grad}}
\right\}
=
O\left(
\ell_\eps^{-b}
+
\ell_\eps^{-\frac{a}{1-a}}
\right)$.
Similarly, \eqref{eq:grad-delta-proof} gives
$r_{T^\delta}^{\mathrm{grad}}
\le
\frac{R_1\delta}{1+R_1\delta}
+
O\left(
\ell_\eps^{-\frac{a}{1-a}}
\right)$.

In the case of $a = 1$, we have
$S_t
=
\eta_0\sum_{k=1}^t\frac{1}{k}
\le
\eta_0(1+\log t)$.
Together with $S_{T^-} \geq \frac{\ell_{\eps}}{2 K_0}$, we obtain 
$\eta_0(1+\log T^-)
\ge
\frac{\ell_\eps}{2K_0}$,
and hence 
\[
T^-
\ge
\exp\left(
\frac{\ell_\eps}{2K_0\eta_0}-1
\right)
=
e^{-1}\eps^{-\frac{1}{2K_0\eta_0}}.
\]
It follows that
\begin{align*}
\eta_{T^-}
=
\frac{\eta_0}{T^-+1}
\le
e\eta_0
\eps^{\frac{1}{2K_0\eta_0}}
=
O\left(
\eps^{\frac{1}{2K_0\eta_0}}
\right).
\end{align*}
Once again, the ordering of the stopping times and the
monotonicity of $\eta_t$ imply
\begin{align}
\max\{\eta_{T^-},\eta_{T^\delta},\eta_{T^+}\}
\le
e\eta_0
\eps^{\frac{1}{2K_0\eta_0}}
=
O\left(
\eps^{\frac{1}{2K_0\eta_0}}
\right).
\label{eq:eta-minus-harmonic-proof}
\end{align}
Therefore, \eqref{eq:grad-delta-proof} yields $r_{T^\delta}^{\mathrm{grad}}
\le
\frac{R_1\delta}{1+R_1\delta}
+
O\left(
\eps^{\frac{1}{2K_0\eta_0}}
\right)$.

Assume that $q>0$ and $b>0$. Define
$
f(x):=e^{-qx}x^b,\, \, x>0$ and 
$\phi(x):=\log f(x)=b\log x-qx$.
Then, we have 
\[
\phi'(x)=\frac{b}{x}-q,
\qquad
\phi''(x)=-\frac{b}{x^2}<0.
\]
Thus, $\phi$ is strictly concave, and its unique critical point
$x=\frac{b}{q}$
is the global maximizer. Given the logarithm preserves the maximizing point, we have 
\[
\begin{aligned}
\sup_{x>0}e^{-qx}x^b
&=e^{-q(b/q)}\left(\frac{b}{q}\right)^b =e^{-b}\left(\frac{b}{q}\right)^b =\left(\frac{b}{eq}\right)^b.
\end{aligned}
\]
In particular, taking $x=\ell_\eps$ gives
$ \eps^q\ell_\eps^b
=
e^{-q\ell_\eps}\ell_\eps^b
\le
\left(\frac{b}{eq}\right)^b$, or equivalently,
$
\eps^q
\le
\left(\frac{b}{eq}\right)^b\ell_\eps^{-b}$. Using this in \eqref{eq:eta-minus-harmonic-proof} with $q = \frac{1}{2 K_0 \eta_0}$, we obtain 
\begin{align*}
    \max\{\eta_{T^-},\eta_{T^+}\}\leq e \eta_0 (\frac{2bK_0 \eta_0}{e})^b \ell_{\eps}^{-b} = O(\ell_{\eps}^{-b}). 
\end{align*}
Substituting this into \eqref{eq:sign-minus-proof} and \eqref{eq:grad-plus-proof} finishes the proof. 
\end{proof}
\section{Proof of \cref{lem:radius}}
\label{app:radius}


\cref{lem:radius} is essential for the proofs of \cref{thm:first-clock} and \cref{thm:margin_app_T1}. Before presenting its full proof, we first provide a proof sketch to illustrate the core steps. 

\paragraph{Proof Sketch} The main difficulty is to retain the sharp coefficient \(1\) in front of
$S_{r,T}:=\sum_{t=r}^{T-1}\eta_t$;
a uniform pointwise bound on \(D_t\) (see \cref{lem:uniform-D}) would only yield a constant multiple of \(S_{r,T}\). The case \(\beta_2=0\) is immediate. In this case, we have \(\beta_1=0\) and the magnitude of every coordinate of \(D_t\) is at most one.
We therefore assume \(\beta_2>0\) and work coordinatewise.

Fixing any \(j\in[d]\), we write
\[
x_t:=g_t[j],\qquad
\mu_t:=\bar m_t[j],\qquad
\nu_t:=\bar v_t[j],\qquad
d_t:=D_t[j].
\]
Let \(c_t:=1-\beta_1^{t+1}\) and \(s_t:=1-\beta_2^{t+1}\). We separate
the bias correction by writing
\[
d_t=\chi_tR_t,\qquad
\chi_t:=\frac{\sqrt{s_t}}{c_t},\qquad
R_t:=\frac{\mu_t}{\sqrt{\nu_t}+\epsilon\sqrt{s_t}}.
\]
By decomposing $d_t$ as $d_t = R_t + (\chi_t - 1)R_t$, it holds that
\begin{align*}
    |\sum_{t=r}^{T-1} \eta_t d_t| \leq |\sum_{t=r}^{T-1} \eta_t R_t| + \sum_{t=r}^{T-1} |\chi_{t-1}| |R_t| 
\end{align*}
A weighted Cauchy--Schwarz inequality applied to the  EMA recursions of $\mu_t$ and $\nu_t$ gives $|\mu_t|^2\leq K_\beta^2\nu_t,
$. This implies that 
\(|R_t|\leq K_\beta\), from which we show the second term is bounded by $O(\eta_r)$. The difficulty is in controlling the first term.  
We split the first-moment at time \(r\):
\[
\mu_t=o_t+n_t,\qquad
o_t:=\beta_1^{t-r+1}\mu_{r-1},\qquad
n_t:=(1-\beta_1)\sum_{s=r}^t\beta_1^{t-s}x_s.
\]
Consequently, the first term is bounded as
\begin{align*}
    |\sum_{t=r}^{T-1} \eta_t R_t| \leq \underbrace{|\sum_{t=r}^{T-1} \eta_t \frac{n_t}{\sqrt{\nu_t} + \epsilon \sqrt{s_t}}|}_{\rm post-r  \, \, contribution} + \underbrace{\sum_{t=r}^{T-1} \eta_t \frac{|o_t|}{\sqrt{\nu_t} + \epsilon \sqrt{s_t}}}_{\rm pre-r \, \, contribution}.   
\end{align*}
The pre-\(r\) contribution is controlled by a geometric series:
its numerator scales as \(\beta_1^{t-r+1}\), whereas the corresponding
part of \(\sqrt{\nu_t}\) scales as
\(\beta_2^{(t-r+1)/2}\). Since
\(\beta_1/\sqrt{\beta_2}<1\), its cumulative contribution is
\(O(\eta_r)\). It remains to bound the post-r contribution. We let $N := T -r$ and define
\begin{align*}
    \alpha_{k} = \eta_{r+k}, \qquad M_k := n_{r+k}
= (1-\beta_1)\sum_{\ell=0}^{k}
\beta_1^\ell x_{r+k-\ell}, \qquad \Delta_k:=\sqrt{\nu_{r+k}}+\epsilon\sqrt{s_{r+k}}.
\end{align*}
Then, it holds that $\sum_{k=0}^{N-1} \alpha_k \frac{M_k}{\Delta_{k}} = \sum_{t=r}^{T-1} \eta_t \frac{n_t}{\sqrt{\nu_t} + \epsilon \sqrt{s_t}}$. By Cauchy-Schwarz, we have $|\sum_{k=0}^{N-1} \alpha_k \frac{M_k}{\triangle_k}|^2 \leq S_{r,T} (\sum_{k=0}^{N-1} \alpha_k \frac{M_k^2}{\Delta_k^2})$. It remains to control the ratio $\frac{M_k^2}{\Delta_k^2}$. A natural lower bound on $\Delta_k^2$ is $\nu_{r+k} + \eps^2 s_{r+k}$. However, this quantity contains all history before time \(r\), whereas \(M_k\) contains only the first-moment history after time \(r\). Therefore, they have mismatched initial conditions. To avoid this, we introduce an alternative lower bound $Q_k$ whose index starts from $r$. We show 
\begin{align*}
    \nu_{r+k}
\geq
(1-\beta_2)\sum_{\ell=0}^{k}
\beta_2^\ell x_{r+k-\ell}^2, \qquad s_{r+k} \geq (1 - \beta_2) \sum_{l=0}^k \beta_2^l
\end{align*}
Combining them together, we have 
\begin{align*}
    \nu_{r+k}+\epsilon^2 s_{r+k}
\geq
(1-\beta_2)\sum_{\ell=0}^{k}
\beta_2^\ell
\left(x_{r+k-\ell}^2+\epsilon^2\right). 
\end{align*}
The right side of this inequality is precisely the definition of $Q_k$, i.e. $Q_k
:=
(1-\beta_2)\sum_{\ell=0}^{k}
\beta_2^\ell
\left(x_{r+k-\ell}^2+\epsilon^2\right)$. We can write $M_k$ and $Q_k$ in one-step recursion form:
\begin{align*}
M_k
=
\beta_1 M_{k-1}
+
(1-\beta_1)x_{r+k}, \qquad
    Q_k =
\beta_2 Q_{k-1} +
(1-\beta_2)\left(x_{r+k}^2+\epsilon^2\right),
\end{align*}
with $M_{-1} = Q_{-1} = 0$. Therefore, $M_k$ and $Q_k$ share the same initial conditions. This makes the recurrence and logarithmic cancellation clean. Using the lower bound $Q_k$ (on $\Delta_k^2$), we next control the ratio $\frac{M_k^2}{Q_k}$ (note that $\frac{M_k^2}{\Delta_k^2} \leq \frac{M_k^2}{Q_k}$). 

Let $h_t^2 := x_t^2 + \epsilon^2$. Using the upper bound $M_k \leq (1 - \beta_1) \sum_{l=0}^k \beta_1^l h_{r+k - l}^2$, the relationship $h_{r+k - l}^2 = \frac{Q_{k-l} - \beta_2 Q_{k-l-1}}{1 - \beta_2}$ (obtained from the $Q_k$ recursion), and the inequality $1 - z \leq - \log z$ (applied with $z = \frac{Q_{k-l}}{Q_k} > 0$), we obtain 
\[
\frac{M_k^2}{Q_k}
\leq
\underbrace{1}_{\text{main term}}+\underbrace{c_\beta 
\beta_1^k}_{{\text{geometric transient}}} + \underbrace{c_\beta (1-\beta_1)\sum_{\ell=1}^k
\beta_1^{\ell-1}
\bigl(\log Q_k-\log Q_{k-\ell}\bigr)}_{\text{weighted logarithmic increments}},
\]
where \(c_\beta\) depends only on \((\beta_1,\beta_2)\). 
After multiplying by
\(\alpha_k=\eta_{r+k}\) and summing over \(k\), the main term contributes
exactly \(S_{r,T}\), the geometric transient contributes
\(O(\eta_r)\), and the logarithmic increments contribute
\(O(\eta_rH_\epsilon)\). This is what preserves the sharp leading
coefficient \(1\).
\begin{proof} [Proof of \cref{lem:radius}]
    Define $H_{\eps}:= \log \frac{B_g^2 + \eps^2}{(1 - \beta_2) \epsilon^2}$. Fix one coordinate $j$, and let $x_t := g_t[j]$, $\mu_t := \bar{m}_t[j]$, $\nu_t := \bar{v}_t[j]$, and $d_t := D_t[j]$. Then, we have 
    \begin{align}
        \mu_t = \beta_1 \mu_{t-1} + (1 - \beta_1) x_t, \qquad \nu_t = \beta_2 \nu_{t-1} + (1 - \beta_2) x_t^2, \label{eq:radius_1}
    \end{align}
    with $\mu_{-1} = \nu_{-1} = 0$. Moreover, it also holds that $d_t = \frac{\mu_t/(1 - \beta_1^{t+1})}{\sqrt{\nu_t/(1 - \beta_2^{t+1})} + \eps}$. We prove that
    \begin{align}
        \bigl | \sum_{t=r}^{T-1} \eta_t d_t \bigr | \leq \sum_{t=r}^{T-1} \eta_t + C_{\rm rad} \eta_r (1 + H_{\epsilon}), \label{eq:main_weight}
    \end{align}
    for some constant $C_{\rm rad}$ that works for all coordinates $j$. In the special case where $\beta_2 = 0$, it holds that $\beta_1 = 0$ as $0 \leq \beta_1 \leq \beta_2$. Thus, we have $\mu_t = x_t$, $\nu_t = x_t^2$, and $d_t = \frac{x_t}{|x_t| + \epsilon}$. Therefore, it holds that $|d_t| \leq 1$ and consequently $|\sum_{t=r}^{T-1} \eta_t d_t| \leq \sum_{t=r}^{T-1} \eta_t$. In this case, the result holds with $C_{\rm rad} = 0$. From now on, we assume that $\beta_2 > 0$. 

    \paragraph{Part 1} By defining the bias-correction factors: $c_t := 1 - \beta_1^{t+1}$ and $s_t := 1 - \beta_2^{t+1}$, we have $d_t = \frac{\mu_t / c_t}{\sqrt{\nu_t / s_t} + \epsilon} = \frac{\sqrt{v_t}}{c_t} \frac{\mu_t}{\sqrt{\nu_t} + \epsilon \sqrt{s_t}}$. We define 
    \begin{align*}
        \chi_t := \frac{\sqrt{s_t}}{c_t}, \qquad R_t := \frac{\mu_t}{\sqrt{\nu_t} + \epsilon \sqrt{s_t}}.
    \end{align*}
    Then, it holds that $d_t = \chi_t R_t$. The proof first bounds $R_t$, and then controls the difference $d_t - R_t = (\chi_t - 1)R_t$. Next, we show $|\mu_t|^2 \leq K_{\beta}^2 := \frac{\beta_2 (1 - \beta_1)^2}{(1 - \beta_2)(\beta_2 - \beta_1^2)}$. This constant is finite as $\beta_1^2 \leq \beta_2^2 \leq \beta_2$ (implied by $\beta_1 \leq \beta_2 < 1$).
    Unrolling \eqref{eq:radius_1} gives $\mu_t = (1 - \beta_1) \sum_{k=0}^t \beta_1^k x_{t - k}$ and $\nu_t = (1- \beta_2) \sum_{k=0}^t \beta_2^k x^2_{t - k}$. By Cauchy-Schwarz, we obtain
    \begin{align*}
        |\mu_t|^2 &= (1 - \beta_1)^2 \bigr| \sum_{k=0}^t (\frac{\beta_1}{\sqrt{\beta_2}})^k \beta_2^{k/2} x_{t-k}\bigl|^2 \\ 
        &\leq (1 - \beta_1)^2 \bigl ( \sum_{k=0}^t (\frac{\beta_1^2}{\beta_2})^k \bigr) \bigl ( \sum_{k=0}^t \beta_2^k x^2_{t-k} \bigr). 
    \end{align*}
    From $\sum_{k=0}^t (\frac{\beta_1^2}{\beta_2})^k \leq \frac{1}{1 - \beta_1^2/ \beta_2} = \frac{\beta_2}{\beta_2 - \beta_1^2}$ (as $\frac{\beta_1^2}{\beta_2} < 1$) and $\sum_{k=0}^t \beta_2^k x^2_{t-k} = \frac{\nu_t}{1 - \beta_2}$, we further obtain
    \begin{align*}
        |\mu_t|^2 \leq \frac{\beta_2 (1 - \beta_1)^2}{(1 - \beta_2)(\beta_2 - \beta_1^2)} \nu_t = K_{\beta}^2 \nu_t, \qquad \forall t.
    \end{align*}
    If $\nu_t = 0$, then all $x_0,...,x_t$ are zero and $\mu_t = 0$ holds as well. In this case, the inequality trivially holds. Consequently, it holds that 
    \begin{align}
        |R_t| = \frac{|\mu_t|}{\sqrt{\nu_t} + \eps \sqrt{s_t}} \leq \frac{|\mu_t|}{\sqrt{\nu_t}} \leq K_{\beta}. \label{eq:radius_R}
    \end{align}

    For $t \geq r$, we split the first moment into two parts
    \begin{align*}
        \mu_t = o_t + n_t, \qquad o_t:= \beta_1^{t-r+1} \mu_{r-1}, \qquad n_t:= (1-\beta_1) \sum_{s=r}^t \beta_1^{t-s} x_s. 
    \end{align*}
    For the second moment, we have $\nu_t = \beta_2^{t-r+1} \nu_{r-1} + (1 - \beta_2) \sum_{s=r}^t \beta_2^{t-s} x_s^2 \geq \beta_2^{t-r+1} \nu_{r-1}$. Using $|\mu_{r-1}| \leq K_{\beta} \sqrt{\nu_{r-1}}$, we obtain
    \begin{align*}
        \frac{|o_t|}{\sqrt{\nu_t} + \epsilon \sqrt{s_t}} \leq \frac{\beta_1^{t-r+1}|\mu_{r-1}|}{\sqrt{\nu_t}} \leq \frac{\beta_1^{t-r+1} K_{\beta} \sqrt{\nu_{r-1}}}{\beta_2^{(t-r+1)/2} \sqrt{\nu_{r-1}}} = K_{\beta}(\frac{\beta_1}{\sqrt{\beta_2}})^{t-r+1}. 
    \end{align*}
    Define $\rho_{\beta}:= \frac{\beta_1}{\sqrt{\beta_2}}$. We have $\beta_1^2 \leq \beta_2^2 < \beta_2$. Since $\eta_t \leq \eta_r, t \geq r$, we obtain
    \begin{align}
        \sum_{t=r}^{T-1} \eta_t \frac{|o_t|}{\sqrt{\nu_t} + \epsilon \sqrt{s_t}} \leq K_{\beta} \eta_r \sum_{k=1}^{\infty} \rho_{\beta}^k = K_{\beta} \eta_r \frac{\rho_{\beta}}{1 - \rho_{\beta}}. \label{eq:radius_ot}
    \end{align}

    \paragraph{Part 2} Set $N := T - r$. For $k = 0,...,N-1$, we define
    \begin{align}
        M_k := n_{r + k} = (1 - \beta_1) \sum_{l=0}^k \beta_1^l x_{r+k - l}, \qquad Q_k := (1 - \beta_2) \sum_{l=0}^k \beta_2^l (x^2_{r+k-l} + \epsilon^2) \label{eq:radius_2}
    \end{align} 
    From $\nu_{r+k} \geq (1 - \beta_2) \sum_{l=0}^k \beta_2^l x^2_{r+k - l}$  and $s_{r+k} = 1 - \beta_2^{r+k+1} \geq 1 - \beta_2^{k+1} = (1 - \beta_2) \sum_{l=0}^k \beta_2^l$, we obtain using the definition of $Q_k$ that
    \begin{align*}
        \nu_{r+k} + \epsilon^2 s_{r+k} \geq Q_k.
    \end{align*}
    Moreover, we have
    \begin{align*}
        (\sqrt{\nu_{r+k}} + \epsilon \sqrt{s_{r+k}})^2 = \nu_{r+k} + \epsilon^2 s_{r+k} + 2\epsilon \sqrt{\nu_{r+k} s_{r+k}} \geq \nu_{r+k} + \epsilon^2 s_{r+k}.
    \end{align*}
    Hence, we obtain $(\sqrt{\nu_{r+k}} + \epsilon \sqrt{s_{r+k}})^2 \geq Q_k$ or equivalently $\sqrt{\nu_{r+k}} + \epsilon \sqrt{s_{r+k}} \geq \sqrt{Q_k}$. We note that $Q_k \geq (1-\beta_2) \eps^2$ because the $l=0$ term contains $\epsilon^2$. Since $|x_t| \leq B_g$, it holds that $x_t^2 + \epsilon^2 \leq B_g^2 + \epsilon^2$. Moreover, we have $(1-\beta_2) \sum_{l=0}^k \beta_2^l = 1 - \beta_2^{k+1} \leq 1$. Therefore, we obtain
    \begin{align}
        (1 - \beta_2) \epsilon^2 \leq Q_k \leq (1-\beta_2) \sum_{l=0}^k \beta_2^l (B_g^2 + \epsilon^2) \leq B_g^2 + \epsilon^2. \label{eq:radius_Q}
    \end{align}
    Let $L_{\epsilon}:= (1 - \beta_2) \epsilon^2$ and $U_{\epsilon}:= B_g^2 + \epsilon^2$. Thus, we have shown that $\log Q_i \in [\log L_{\epsilon}, \log U_{\epsilon}]$ for any $i \in \{ 0,...,k \}$. Consequently, we obtain
    \begin{align*}
        \max_{u,v \in \{0,...,k\}} |\log Q_u - \log Q_v| \leq \log U_{\eps} - \log L_{\eps} = \log \frac{U_{\epsilon}}{L_{\epsilon}} =: H_{\epsilon}. 
    \end{align*}
    Define $h^2_t := x_t^2 + \epsilon^2, \forall t \in \{r,...,T-1\}$ and $w_l:= (1 - \beta_1) \beta_1^l$. By Cauchy-Schwarz, we have
    \begin{align*}
        M_k^2 &= [(1 - \beta_1) \sum_{l=0}^k \beta_1^l x_{r+k-l}]^2 = [(1 - \beta_1)]^2 [\sum_{l=0}^k \sqrt{\beta_1^l} \sqrt{\beta_1^l} x_{r+k-l}]^2\\
        &\leq [(1 - \beta_1) \sum_{l=0}^k \beta_1^l][(1 - \beta_1) \sum_{l=0}^k \beta_1^l x_{r+k-l}^2]\\ 
        &\leq (1 - \beta_1) \sum_{l=0}^k \beta_1^l x_{r+k-l}^2 \\
        &\leq (1 - \beta_1) \sum_{l=0}^k \beta_1^l h_{r+k-l}^2. 
    \end{align*}
    where in the second inequality we used $\sum_{l=0}^k w_l = 1 - \beta_1^{k+1} \leq 1$. 

    By the definition of $Q_k$, we have 
    \begin{align}
        Q_k = \beta_2 Q_{k-1} + (1 - \beta_2) h_{r+k}^2, \qquad Q_{-1} := 0. \label{eq:radius_4}
    \end{align}
    Thus, it holds that $h_{r+k-l}^2 = \frac{Q_{k-l} - \beta_2 Q_{k-l-1}}{1 - \beta_2}, \forall l \in \{0,...,k\}$. Consequently, we obtain
    \begin{align}
        M_k^2 &\leq \frac{1 - \beta_1}{1 - \beta_2} \sum_{l=0}^k \beta_1^l (Q_{k-l} - \beta_2 Q_{k-l-1}) \nonumber \\
        &= \frac{1 - \beta_1}{1 - \beta_2} [Q_k - (\beta_2 - \beta_1) \sum_{l=1}^k \beta_1^{l-1} Q_{k-l}].  \label{eq:radius_3}
    \end{align}
    Define $c_{\beta}:= \frac{\beta_2 - \beta_1}{1 - \beta_2} \geq 0$. Then we have $\frac{1 - \beta_1}{1 - \beta_2} = 1 + c_{\beta}$. Dividing \eqref{eq:radius_3} by $Q_k$ and using the definition of $c_{\beta}$, we obtain
    \begin{align*}
        \frac{M_k^2}{Q_k} \leq 1 + c_{\beta}[1 - (1 - \beta_1) \sum_{l=1}^k \beta_1^{l-1} \frac{Q_{k-l}}{Q_k}]. 
    \end{align*}
    From $(1 - \beta_1) \sum_{l=1}^k \beta_1^{l-1} = 1 - \beta_1^k$, we have 
    \begin{align*}
        1 - (1 - \beta_1) \sum_{l=1}^k \beta_1^{l-1} \frac{Q_{k-l}}{Q_k} = \beta_1^k + (1 - \beta_1) \sum_{l=1}^k \beta_1^{l-1}(1 - \frac{Q_{k-l}}{Q_k}). 
    \end{align*}
    For every $z > 0$, it holds that $1 - z \leq - \log z$. Applying this with $z = \frac{Q_{-l}}{Q_k}$, we have $1 - \frac{Q_{k-l}}{Q_k} \leq \log Q_k - \log Q_{k-l}$. Putting things together, we have
    \begin{align}
        \frac{M_k^2}{Q_k} \leq 1 + c_{\beta}[\beta_1^k + (1 - \beta_1)\sum_{l=1}^k \beta_1^{l-1} (\log Q_k - \log Q_{k-l})] \label{eq:radius_5}.
    \end{align}
    Next, we verify \eqref{eq:radius_5} still holds in the $\beta_1= 0$ case. In this case, we have $M_k = x_{r+k}$, $c_{\beta} = \frac{\beta_2}{1-\beta_2}$, and the recursion in \eqref{eq:radius_4} still holds. Note that all logarithms are well-defined as $Q_k \geq (1 - \beta_2) \epsilon^2 > 0$ as $\epsilon > 0$. When $k = 0$, we have $M_0 = x_r$, $Q_0 = (1 - \beta_2)(x_r^2 + \epsilon^2)$, and therefore $\frac{M_0^2}{Q_0} = \frac{x_r^2}{(1 - \beta_2) (x_r^2 + \epsilon^2)} \leq \frac{1}{1 - \beta_2}$. Thus, \eqref{eq:radius_5} is valid at $k = 0$. For $k \geq 1$, we have $M_k^2 = x_{r+k}^2 \leq x_{r+k}^2 + \epsilon^2 = \frac{Q_k - \beta_2 Q_{k-1}}{1 - \beta_2}$ where the last equality is by the recursion $Q_k - \beta_2 Q_{k-1} = (1 - \beta_2)(x_{r+k}^2 + \epsilon^2)$. Dividing by $Q_k$ gives
    \begin{align*}
        \frac{M_k^2}{Q_k} \leq \frac{Q_k - \beta_2 Q_{k-1}}{(1 - \beta_2) Q_k} &= \frac{1 - \beta_2(Q_{k-1} / Q_k)}{1 - \beta_2} = 1 + \frac{\beta_2}{1 - \beta_2}(1 - \frac{Q_{k-1}}{Q_k}) \\ 
        & \leq 1 + \frac{\beta_2}{1 - \beta_2}(\log Q_k - \log Q_{k-1}). 
    \end{align*}
    This is the inequality \eqref{eq:radius_5} given $\beta_1^k = 0$ and only the $l = 1$ term survives with the convention that $0^0 = 1$. 
    \paragraph{Part 3} Define $\alpha_k := \eta_{r+k}, k \in \{0,...,N-1 \}$. After multiplying \eqref{eq:radius_5} by $\alpha_k$ and summing, we obtain
    \begin{align}
        \sum_{k=0}^{N-1} \alpha_k \frac{M_k^2}{Q_k} \leq \sum_{k=0}^{N-1} \alpha_k + c_{\beta} \sum_{k=0}^{N-1} \alpha_k \beta_1^k + c_{\beta} \Omega, \label{eq:radius_main}
    \end{align}
    where 
    \begin{align*}
        \Omega := (1 - \beta_1) \sum_{k=0}^{N-1} \alpha_k \sum_{l=1}^k \beta_1^{l - 1} (\log Q_k - \log Q_{k-l}).
    \end{align*}
    We now bound the term $\Omega$. We reorganize the terms in $\Omega$ and write it as $\Omega = \sum_{u=0}^{N-1} A_u \log Q_u$. The positive appearances of $\log Q_u$ come from $k=u$ and their total coefficient is $\alpha_u(1 - \beta_1) \sum_{l=1}^{u} \beta_1^{l-1} = \alpha_u (1 - \beta_1^u)$. The negative appearances occur when $k - l = u$, or $k = u + l$. Thus, we have 
    \begin{align}
        A_u = \alpha_u (1 - \beta_1^u) - (1 - \beta_1) \sum_{l=1}^{N - 1 - u} \alpha_{u + l} \beta_1^{l-1}. \label{eq:radius_6}
    \end{align}
    Next, we show two important properties of $A_u$. After summing \eqref{eq:radius_6}, we have
    \begin{align*}
        \sum_{u=0}^{N-1} A_u &= \sum_{u=0}^{N-1} \alpha_u (1 - \beta_1^u) - (1 - \beta_1) \sum_{u=0}^{N-1} \sum_{l=1}^{N - 1 - u} \alpha_{u + l} \beta_1^{l - 1} \\ 
        &= \sum_{u=0}^{N-1} \alpha_u (1 - \beta_1^u) - (1 - \beta_1) \sum_{k=1}^{N-1} \alpha_k \sum_{l=1}^k \beta_1^{l -1} \\
        &= \sum_{u=0}^{N-1} \alpha_u (1 - \beta_1^u) - \sum_{k=1}^{N-1} \alpha_k (1 - \beta_1^k) = 0,
    \end{align*}
    where in the second equality we set $k = u+l$. Then  $k = 1,...,N-1$, and for each $k$, we have $l = 1,...,k$. For the last equality, we used $\alpha_0 (1 - \beta_1^0) = 0$. Consequently, we conclude that $\sum_{u=0}^{N-1} A_u = 0$. For the second property, as $\alpha_{u + l} \leq \alpha_u$, we have 
    \begin{align*}
        A_u \geq \alpha_u (1 - \beta_1^u) - (1 - \beta_1) \alpha_u \sum_{l=1}^{N - 1 - u} \beta_1^{l-1} = \alpha_u (1 - \beta_1^u) - \alpha_u (1 - \beta_1^{N -1 - u}) = \alpha_u (\beta_1^{N-1-u} - \beta_1^u). 
    \end{align*}
    In particular, we have $A_u \geq -\alpha_u \beta_1^u$. Consequently, whenever $A_u < 0$, it holds that $-A_u \leq \alpha_u \beta_1^u$. From \eqref{eq:radius_Q}, we have $\log L_{\epsilon} \leq \log Q_u \leq \log U_{\epsilon}$ for any $u \in \{0,...,N-1\}$. Define $l_u := \log Q_u - \log U_{\epsilon}$. From the definition of $H_{\epsilon}$, we have $-H_{\epsilon} \leq l_u \leq 0, \forall u \in \{0,...,N-1\}$. Since $\sum_{u=0}^{N-1} A_u = 0$, we have $\sum_{u=0}^{N-1} A_u \log Q_u = \sum_{u=0}^{N-1} A_u l_u$. If $A_u \geq 0$, then we have $A_u l_u \leq 0$ because $l_u \leq 0$. If $A_u < 0$, from $- A_u \leq \alpha_u \beta_1^u$ and $-l_u \leq H_{\epsilon}$, we have $A_u l_u = (-A_u) (- l_u) \leq \alpha_u \beta_1^u H_{\epsilon}$. Therefore, we obtain $\Omega \leq H_{\epsilon} \sum_{u=0}^{N-1} \alpha_u \beta_1^u$. Since $\alpha_u \leq \alpha_0 = \eta_r$, it holds that $\sum_{u=0}^{N-1} \alpha_u \beta_1^u \leq \eta_r \sum_{u=0}^{\infty} \beta_1^u = \frac{\eta_r}{1 - \beta_1}$. Consequently, we have $\Omega \leq \frac{\eta_r H_{\epsilon}}{1 - \beta_1}$. Substituting these bounds into \eqref{eq:radius_main}, we obtain
    \begin{align*}
        \sum_{k=0}^{N-1} \alpha_k \frac{M_k^2}{Q_k} \leq \sum_{k=0}^{N-1} \alpha_k + \frac{c_{\beta}}{1 - \beta_1} \eta_r (1 + H_{\epsilon}). 
    \end{align*}
    Define $S_{r,T}:= \sum_{t=r}^{T-1} \eta_t = \sum_{k=0}^{N-1} \alpha_k$ and $C_{E} := \frac{c_{\beta}}{1 - \beta_1} = \frac{\beta_2 - \beta_1}{(1 - \beta_2)(1 - \beta_1)}$.Then we can further write
    \begin{align}
        \sum_{k=0}^{N-1} \alpha_k \frac{M_k^2}{Q_k} \leq S_{r,T} + C_{E} \eta_r (1 + H_{\epsilon}). \label{eq:radius_m2}
    \end{align}
    \paragraph{Part 4} Define $\triangle_k := \sqrt{\nu_{r+k}} + \epsilon \sqrt{s_{r+k}}$. Previously, we have shown that $\triangle_k^2 \geq Q_k$. By Cauchy-Schwarz, we have 
    \begin{align*}
        |\sum_{k=0}^{N-1} \alpha_k \frac{M_k}{\triangle_k}|^2 &\leq (\sum_{k=0}^{N-1} \alpha_k) (\sum_{k=0}^{N-1} \alpha_k \frac{M_k^2}{\triangle_k^2}) = S_{r,T} (\sum_{k=0}^{N-1} \alpha_k \frac{M_k^2}{\triangle_k^2}) \leq S_{r,T} (\sum_{k=0}^{N-1} \alpha_k \frac{M_k^2}{Q_k})\\ 
        &\leq S_{r,T}[S_{r,T} + C_{E} \eta_r(1 + H_{\epsilon}) ],
    \end{align*}
    where we used \eqref{eq:radius_m2} in the last inequality. For any $A,B \geq 0$, we have $\sqrt{A(A+B)} \leq A + \frac{B}{2}$. Applying this fact, we further obtain that
    \begin{align}
        |\sum_{k=0}^{N-1} \alpha_k \frac{M_k}{\triangle_k}| \leq S_{r,T} + \frac{C_{E}}{2} \eta_r (1 + H_{\epsilon}). \label{eq:radius_m3}
    \end{align}
    Recall that $\alpha_k = \eta_{r + k}$, $M_k = n_{r+k}$, and $\triangle_k = \sqrt{\nu_{r+k}} + \epsilon \sqrt{s_{r+k}}$. Consequently, it holds that 
    \begin{align*}
        \sum_{k=0}^{N-1} \alpha_k \frac{M_k}{\triangle_k} = \sum_{k=0}^{N-1} \eta_{r+k} \frac{n_{r+k}}{\sqrt{\nu_{r+k}} + \epsilon \sqrt{s_{r+k}}} = \sum_{t=r}^{T-1} \eta_t \frac{n_t}{\sqrt{\nu_t} + \epsilon \sqrt{s_t}}. 
    \end{align*}
    From this and \eqref{eq:radius_m3}, we have
    \begin{align}
        |\sum_{t=r}^{T-1} \eta_t \frac{n_t}{\sqrt{\nu_t} + \epsilon \sqrt{s_t}}| \leq S_{r,T} + \frac{C_E}{2} \eta_r (1 + H_{\epsilon}). \label{eq:radius_nt}
    \end{align}
    \paragraph{Part 5} Recall that $R_t = \frac{o_t + n_t}{\sqrt{\nu_t} + \eps \sqrt{s_t}}$. Combining \eqref{eq:radius_ot} and \eqref{eq:radius_nt}, we obtain 
    \begin{align*}
        |\sum_{t=r}^{T-1} \eta_t R_t| &\leq |\sum_{t=r}^{T-1}\eta_t \frac{n_t}{\sqrt{\nu_t} + \epsilon \sqrt{s_t}}| + \sum_{t=r}^{T-1} \eta_t \frac{|o_t|}{\sqrt{\nu_t} + \epsilon \sqrt{s_t}} \\
        &\leq S_{r,T} + \frac{C_E}{2} \eta_r (1 + H_{\epsilon}) + C_o \eta_r,
    \end{align*}
    where $C_o := K_{\beta} \frac{\rho_{\beta}}{1 - \rho_{\beta}}$. Since $H_{\epsilon} \geq 0$, it holds that $C_{o} \eta_r \leq C_o \eta_r (1 + H_{\epsilon})$. Hence, we obtain
    \begin{align*} 
        |\sum_{t=r}^{T-1} \eta_t R_t| \leq S_{r,T} + (\frac{C_{E}}{2} + C_o) \eta_r (1 + H_{\epsilon}). 
    \end{align*}
    Recall that $\chi_t = \frac{\sqrt{s_t}}{c_t}$, $c_t = 1 - \beta_1^{t+1}$, and $s_t = 1 - \beta_2^{t+1}$. Since $c_t \geq 1 - \beta_1$, we have
    \begin{align*}
        |\chi_t - 1| = \frac{|\sqrt{s_t} - c_t|}{c_t} \leq \frac{|1- \sqrt{s_t}| + |1 - c_t|}{1 - \beta_1}.
    \end{align*}
    Note that $1 - c_t = \beta_1^{t+1}$. For any $z \in [0,1]$, $1 - \sqrt{1 - z} = \frac{z}{1 + \sqrt{1 - z}} \leq z$. Taking $z = \beta_2^{t+1}$, we have $1 - \sqrt{s_t} = 1 - \sqrt{1 - \beta_{2}^{t+1}} \leq \beta_2^{t+1}$. Hence, we have
    \begin{align}
        |\chi_t - 1| \leq \frac{\beta_1^{t+1} + \beta_2^{t+1}}{1 - \beta_1}. \label{eq:radius_chi} 
    \end{align}
    From \eqref{eq:radius_R} and \eqref{eq:radius_chi}, we have 
    \begin{align*}
        \sum_{t = r}^{T - 1} \eta_t |\chi_t - 1| |R_t| &\leq \frac{K_{\beta}}{1 - \beta_1} \sum_{t = r}^{T - 1} \eta_t (\beta_1^{t+1} + \beta_2^{t+1}) \\
        &\leq \frac{K_{\beta} \eta_r}{1 - \beta_1} [\sum_{t=r}^\infty \beta_1^{t+1} + \sum_{t=r}^{\infty} \beta_2^{t+1}]\\
        &\leq \frac{K_{\beta} \eta_r}{1 - \beta_1} [\frac{\beta_1}{1 - \beta_1} + \frac{\beta_2}{1 - \beta_2}].
    \end{align*}
    Define $C_{\rm bc}:= \frac{K_{\beta}}{1 - \beta_1}[\frac{\beta_1}{1 - \beta_1} + \frac{\beta_2}{1 - \beta_2}]$. Then we have $\sum_{t = r}^{T - 1} \eta_t |\chi_t - 1| |R_t| \leq C_{\rm bc} \eta_r$. Since $d_t = \chi_t R_t = R_t + (\chi_t - 1)R_t$, we have
    \begin{align*}
        |\sum_{t = r}^{T - 1} \eta_t d_t| &\leq |\sum_{t = r}^{T - 1} \eta_t R_t| + \sum_{t = r}^{T - 1} \eta_t |\chi_t - 1| |R_t| \\
        &\leq S_{r,T} + (\frac{C_E}{2} + C_{o} + C_{\rm bc}) \eta_r (1 + H_{\epsilon}).
    \end{align*}
    This proves \eqref{eq:main_weight} with $C_{\rm rad} = \frac{C_E}{2} + C_o + C_{\rm bc}$. 
    \paragraph{Part 6} Recall that $d_t = D_t[j]$. Substituting this into \eqref{eq:main_weight} and using the definition of $H_{\epsilon}$, we have
    \begin{align*}
        |\sum_{t = r}^{T - 1} \eta_t D_t[j]| &\leq \sum_{t = r}^{T - 1} \eta_t + C_{\rm rad} \eta_r [1 + \log \frac{B_g^2 + \epsilon^2}{(1 - \beta_2) \epsilon^2}]
    \end{align*}
    Taking the maximum over the coordinates $j$ completes the proof.
\end{proof}
\section{Adam's Local Update Transition}
\label{app:transition}
Throughout this section, we assume data separability, $0 \leq \beta_1 \leq \beta_2 < 1$, and $\eps \in (0,1]$. Recall that $R_1  :=\max_{i \in [n]} \norm{x_i}_1$ and $R_2  :=\max_{i \in [n]} \norm{x_i}_2$. The step size schedule is $\eta_t = \eta_0 (t+1)^{-a}$ with $\eta_0 > 0$ and $a \in (1/3,1]$. 

\subsection{Proof of \cref{thm:first-clock}}
Recall the three stopping times:
\begin{align*}
    T_1^-(\epsilon;b) &=
    \inf\left\{
        t\ge T_{*}:
        G_t
        \le
        \epsilon\ell_\epsilon^b
    \right\},\\
    T_1^{\delta}(\epsilon)
    &=
    \inf\left\{
        t\ge T_{*}:
        G_t
        \le
        \delta\epsilon
    \right\},\\
    T_1^+(\epsilon;b)
    &=
    \inf\left\{
        t\ge T_*:
        G_t
        \le
        \epsilon\ell_\epsilon^{-b}
    \right\},
\end{align*}
where $\ell_{\epsilon} = \log \frac{1}{\epsilon}$ and $b \in (0,1)$. For simplicity, we let $T^- := T_1^-(\epsilon;b)$, $T^{\delta}:=T_1^{\delta}(\epsilon)$, and $T^+:= T_1^+(\epsilon;b)$. 

We first show a lemma that follows directly from \cref{prop:uniform-clock}.
\begin{lemma} \label{lem:trans_aux}
For every $\zeta > 0$, there exists $t_{\zeta} < \infty$ (independent of $\epsilon$) such that for every $\epsilon \in (0,1]$ 
\begin{align}
    L_t \leq \zeta, \qquad \forall t \geq t_{\zeta}. \label{eq:lemma_23}
\end{align}
\begin{proof} Fix $\zeta>0$. We use the constants $c,K$ from \eqref{eq:lowloss_1} and the notations $t_0:=t_{\mathrm{ema}}$,
$R_t:=\sum_{s=t}^{\infty}\eta_s^3$, and $\Phi_t:=L_t+KR_t$
introduced in the proof of \cref{prop:uniform-clock}. By \eqref{eq:lowloss_descent}, we have
\[
0\le \Phi_{t+1}\le \Phi_t-c\eta_tG_t^2,
\qquad t\ge t_0.
\]
Moreover, the bound in \eqref{eq:phi0} gives
$\Phi_{t_0}\le B$ where $B<\infty$ is a constant independent of $\epsilon$. Therefore, we obtain 
\begin{align}
    0\le L_t\le\Phi_t\le B, \qquad t\ge t_0, \label{eq:mon}
\end{align}
and $\Phi_t$ is non-increasing on this range. Since $R_t\downarrow0$, we can choose an integer $r_\zeta\ge t_0$ 
(independent of $\epsilon$) such that $KR_{r_\zeta}\le\frac{\zeta}{2}$. Using the estimate established in the proof of \cref{prop:uniform-clock}, one possible choice is
\[
r_\zeta
:=
\max\left\{
t_0,\,
1,\,
\left\lceil
\left(
\frac{2K\eta_0^3}{(3a-1)\zeta}
\right)^{1/(3a-1)}
\right\rceil
\right\}.
\]
This is the choice of $t_1$ in that proof with $h$ replaced by $\zeta$. The argument leading to \eqref{eq:lowloss_3} uses only $h>0$, $KR_{t_1}\le h/2$, \eqref{eq:lowloss_descent}, and \eqref{eq:lowloss_2}. Therefore, replacing $h \text{ by } \zeta$, $t_1 \text{ by } r_\zeta$, and 
$\bar g \text{ by } \bar g_\zeta:=\frac{1}{n} (1-e^{-\zeta/2}) > 0$, we obtain the following 
\[
t\ge r_\zeta,\quad \Phi_t>\zeta
\quad\Longrightarrow\quad
\Phi_{t+1}\le\Phi_t-c\bar g_\zeta^{\,2}\eta_t.
\]
Because $\sum_{t=0}^{\infty}\eta_t=\infty$, we can choose an integer
$t_\zeta>r_\zeta$ such that $c\bar g_\zeta^{\,2}
\sum_{t=r_\zeta}^{t_\zeta-1}\eta_t>B$.
Concretely, we define
\[
t_\zeta
:=
\min\left\{
T\in\mathbb{N}:
T>r_\zeta,\quad
c\bar g_\zeta^{\,2}
\sum_{t=r_\zeta}^{T-1}\eta_t>B
\right\}.
\]
The set is nonempty and $t_\zeta$ is independent of $\epsilon$. For every $\epsilon\in(0,1]$, the arguments in \cref{prop:uniform-clock} give a time $\tau=\tau(\epsilon,\zeta)\in[r_\zeta,t_\zeta]$ such that $\Phi_\tau\le\zeta$ (note that $\tau$ may depend on $\epsilon$ but  $t_\zeta$ does not). By the monotonicity of $\Phi_t$ following from \eqref{eq:lowloss_descent}, we obtain
\[
L_t\le\Phi_t\le\Phi_{\tau}\le\zeta,
\qquad t\ge t_\zeta.
\]
This holds for every $\epsilon\in(0,1]$.
\end{proof}
\end{lemma}

\begin{lemma} \label{lem:ratio} Let $h_{\epsilon} > 0 $ satisfy $h_{\epsilon} \rightarrow 0$ as $\epsilon \downarrow 0$ and define $\tau_{\epsilon} := \inf \{ t \geq T_*: G_t \leq h_{\epsilon}\}$. The following holds: 
\begin{align*}
    \tau_{\epsilon} \rightarrow \infty, \qquad \frac{G_{\tau_{\epsilon}}}{h_{\epsilon}} \rightarrow 1, \qquad \frac{L_{\tau_{\epsilon}}}{h_{\epsilon}} \rightarrow 1, \qquad \text{as} \, \epsilon \downarrow 0. 
\end{align*}
\end{lemma}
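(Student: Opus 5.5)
The argument has three parts, in the order below: show $\tau_\epsilon\to\infty$, pin $G_{\tau_\epsilon}/h_\epsilon$ between two bounds that tend to $1$, and convert that into the loss ratio. Throughout, write $K:=R_1C_D$.

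\emph{Step 1: $\tau_\epsilon\to\infty$ and $\tau_\epsilon>T_*$.} For each fixed $\epsilon$, \cref{prop:uniform-clock} gives $G_t\to0$, so $\tau_\epsilon<\infty$. \Cref{lem:proxy-path-ratio} with $r=0$ gives $G_t\ge G_0e^{-KS_t}$. Evaluating at $t=\tau_\epsilon$ yields $G_0e^{-KS_{\tau_\epsilon}}\le h_\epsilon$, that is,
\[
S_{\tau_\epsilon}\ge K^{-1}\log(G_0/h_\epsilon)\to\infty .
\]
Hence $\tau_\epsilon\to\infty$. Also, once $h_\epsilon<g_*\le G_{T_*}$ (which holds for all small $\epsilon$), we get $\tau_\epsilon>T_*$. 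This is the same pattern as in the proof of \cref{prop:time_bracket_app}.

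\emph{Step 2: $G_{\tau_\epsilon}/h_\epsilon\to1$.} Since $\tau_\epsilon>T_*$, minimality of $\tau_\epsilon$ gives $G_{\tau_\epsilon-1}>h_\epsilon$. The one-step case of \cref{lem:proxy-path-ratio} then gives
\[
h_\epsilon\ge G_{\tau_\epsilon}\ge e^{-K\eta_{\tau_\epsilon-1}}G_{\tau_\epsilon-1}>e^{-K\eta_{\tau_\epsilon-1}}h_\epsilon .
\]
Because $\tau_\epsilon\to\infty$, we have $\eta_{\tau_\epsilon-1}\to0$. Dividing by $h_\epsilon$ and letting $\epsilon\downarrow0$ gives the claim.

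\emph{Step 3: $L_{\tau_\epsilon}/h_\epsilon\to1$.} This is the main point. The sandwich $\tfrac12L\le G\le L$ is too coarse here, so I would use the sharper bound from \cref{lem:loss-proxy}:
\[
1-\tfrac{n}{2}L_t\le G_t/L_t\le 1 .
\]
Since $\tau_\epsilon\ge T_*$, \cref{prop:uniform-clock} gives $L_{\tau_\epsilon}\le 2G_{\tau_\epsilon}\le 2h_\epsilon\to0$. The lower bound $G/L\ge1-\tfrac n2L$ therefore forces $G_{\tau_\epsilon}/L_{\tau_\epsilon}\to1$. Combining this with Step 2,
\[
\frac{L_{\tau_\epsilon}}{h_\epsilon}=\frac{L_{\tau_\epsilon}}{G_{\tau_\epsilon}}\cdot\frac{G_{\tau_\epsilon}}{h_\epsilon}\to1 .
\]
Note that \cref{lem:trans_aux} is not needed, because $\tau_\epsilon\ge T_*$ already places the iterate in the low-loss regime.
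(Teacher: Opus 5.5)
Your proof is correct and follows essentially the same route as the paper. Both arguments use the one-step bound from \cref{lem:proxy-path-ratio} at the first crossing to get $G_{\tau_\epsilon}/h_\epsilon\to1$, and then the refined comparison $1-\tfrac{n}{2}L_t\le G_t/L_t\le1$ from \cref{lem:loss-proxy}, together with $L_{\tau_\epsilon}\le 2h_\epsilon$, for the loss ratio.

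The only difference is how $\tau_\epsilon\to\infty$ is shown. The paper fixes an arbitrary $N$ and uses $G_t\ge G_0e^{-KS_N}$ on $[0,N]$. You instead use the clock bound $S_{\tau_\epsilon}\ge K^{-1}\log(G_0/h_\epsilon)$ and check $\tau_\epsilon>T_*$ explicitly, which spells out the condition needed for the minimality step.
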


\begin{proof} 
Denote $K_0 := R_1 C_D$ where $C_D = \sqrt{\frac{1 - \beta_1}{1 - \beta_2}}$. By \cref{lem:proxy-path-ratio}, it holds that $G_t \geq G_0 e^{-K_0 S_t}$. Fix any deterministic integer $N$. Since $S_t \leq S_{N}$ for $t \leq N$, we have 
\begin{align*}
    G_t \geq G_0 e^{-K_0 S_{N}} =: c_N > 0, \qquad 0 \leq t \leq N. 
\end{align*}
For sufficiently small $\epsilon$, we have $h_{\epsilon} < c_N$. Therefore, the event $G_t \leq h_{\epsilon}$ cannot occur by time $N$. Therefore, it holds that $\tau_{\epsilon} > N$. Since $N$ was arbitrary, we conclude that $\tau_{\epsilon} \rightarrow \infty$.  

By minimality of $\tau_{\epsilon}$, we have $G_{\tau_{\epsilon} - 1} > h_{\epsilon}$ and $G_{\tau_{\epsilon}} \leq h_{\epsilon}$. Applying \cref{lem:proxy-path-ratio} over the single step from $\tau_{\epsilon} - 1$ to $\tau_{\epsilon}$, we obtain
\begin{align*}
    G_{\tau_{\epsilon}} \geq e^{-K_0 \eta_{\tau_{\epsilon} - 1}} G_{\tau_{\epsilon} - 1} > e^{-K_0 \eta_{\tau_{\epsilon} - 1}} h_{\epsilon}. 
\end{align*}
Therefore, it holds that $e^{-K_0 \eta_{\tau_{\epsilon} - 1}} < \frac{G_{\tau_{\epsilon}}}{h_{\epsilon}} \leq 1$. Because $\tau_{\epsilon} \rightarrow \infty$, it holds that $e^{-K_0 \eta_{\tau_{\epsilon} - 1}} \rightarrow 1$. Therefore, we conclude that $\frac{G_{\tau_{\epsilon}}}{h_{\epsilon}} \rightarrow 1$. For $t \geq T_{*}$, \cref{prop:uniform-clock} gives $G_t \leq L_t \leq 2 G_t$. Therefore, it holds that $L_{\tau_{\epsilon}} \leq 2 G_{\tau_{\epsilon}} \leq 2 h_{\epsilon} \rightarrow 0$. By \cref{lem:loss-proxy}, $1 \geq \frac{G_t}{L_t} \geq 1 - \frac{n L_t}{2}$. Since  $L_{\tau_{\epsilon}} \rightarrow 0$, we have $\frac{G_{\tau_{\epsilon}}}{L_{\tau_{\epsilon}}} \rightarrow 1$ and consequently $\frac{L_{\tau_{\epsilon}}}{G_{\tau_{\epsilon}}} \rightarrow 1$. Combining this with $\frac{G_{\tau_{\epsilon}}}{h_{\epsilon}} \rightarrow 1$, we obtain $\frac{L_{\tau_{\epsilon}}}{h_{\epsilon}} = \frac{L_{\tau_{\epsilon}}}{G_{\tau_{\epsilon}}} \frac{G_{\tau_{\epsilon}}}{h_{\epsilon}} \rightarrow 1$.   
\end{proof}

Applying \cref{lem:ratio} to the three stopping times, we obtain 
\begin{align}
    & G_{T^-} = \epsilon \ell_{\epsilon}^b (1 + o(1)), \qquad L_{T^-} = \epsilon \ell_{\epsilon}^b (1 + o(1)), \notag \\
    & G_{T^{\delta}} = \delta \epsilon (1 + o(1)), \qquad L_{T^{\delta}} = \delta \epsilon (1 + o(1)), \notag \\
    & G_{T^+} = \epsilon \ell_{\epsilon}^{-b} (1 + o(1)), \qquad L_{T^+} = \epsilon \ell_{\epsilon}^{-b} (1 + o(1)). \label{eq:lem26}
\end{align}
Moreover, all three stopping times diverge as
\(\epsilon\downarrow0\), and hence the corresponding learning rates
converge to zero.
\paragraph{Proof Sketch of \cref{thm:first-clock}} We first provide a proof sketch of \cref{thm:first-clock}. The main idea is that the loss satisfies two qualitatively different
recursions:
\[
L_{t+1}-L_t\approx-\gamma_\infty\eta_tL_t
\quad\text{before the crossover},
\]
whereas
\[
L_{t+1}-L_t\asymp
-\frac{\eta_t}{\epsilon}L_t^2
\quad\text{after the crossover}.
\]
Therefore, the loss decays exponentially in the
\(S\)-clock before the crossover, whereas after the crossover, the reciprocal loss grows
linearly in \(S_t/\epsilon\). 

\emph{\textbf{Part 1: Characterization of $S_{T^-}$.}}
Fix \(0<\zeta<2/n\). By \cref{lem:trans_aux}, there exists an
\(\epsilon\)-independent time \(t_\zeta\) such that
$L_t\leq \zeta$ when $t\geq t_\zeta$. Choose an integer $r$ such that $r \geq t_{\zeta}$, $r \geq T_*$, $C_{\rm md} \eta_r < \gamma_{\infty}$, and $\gamma_{\infty} \eta_{r} \leq \frac{1}{2}$. We highlight the important roles played by $r$ in the proof:

\begin{enumerate}
    \item To make \(G_t\) arbitrarily close to \(L_t\). By choosing \(r\geq t_\zeta\), \cref{lem:loss-proxy}  gives
\[
\frac{G_t}{L_t}
\geq
1-\frac{nL_t}{2}
\geq
1-\frac{n\zeta}{2}
=:\alpha_\zeta,
\qquad
t\geq r.
\] 
Therefore, it holds that $G_t\geq\alpha_\zeta L_t$ where $\alpha_\zeta\longrightarrow1$ as $\zeta\downarrow0$. This estimate is needed to recover the sharp leading coefficient
\(1/\gamma_\infty\). If we started directly from \(T_*\), we would only
know that $G_t\geq\frac{1}{2}L_t$.
This would lose a factor of 2 in the resulting upper bound (see \eqref{eq:firstclock_m1}). 

\item To obtain a strict contraction. 
Before \(T^-\), the proof obtains
\begin{align}
    L_{t+1} \leq
(1-\alpha_{\zeta} q_{\epsilon,r}\eta_t) L_t, \qquad q_{\epsilon,r}
:=
\gamma_\infty
-C_{\mathrm{md}}\eta_r
-d\ell_\epsilon^{-b}. \label{eq:recur}
\end{align}
For fixed \(r\), we have $q_{\epsilon,r}
\rightarrow
\gamma_\infty-C_{\mathrm{md}}\eta_r>0$ as $\epsilon\downarrow0$. Therefore, it holds that $0 < q_{\epsilon,r} \leq \gamma_{\infty}$ when $\epsilon$
is sufficiently small. Consequently, the recursion in \eqref{eq:recur} is a strict contraction since $0 <
\alpha_\zeta q_{\epsilon,r}\eta_t
\leq
\gamma_\infty\eta_r
\leq
\frac{1}{2}$ and it holds for every $t \geq r$. 
\item To separate the finite burn-in from the \(\epsilon\)-dependent time scale. We know $T^-\rightarrow \infty$ as $\epsilon\downarrow0$. Therefore, it holds that $T^ >r+1$ when $\eps$ is sufficiently small, and the recursion in \eqref{eq:recur} holds throughout the interval $r \leq t < T^-$. Moreover, because \(r\) is fixed, we have $S_r-S_{T_*} = O(1) = o(\ell_\epsilon)$. Thus, removing the finite initial segment \([T_*,r]\) does not affect the leading-order asymptotics on the \(\ell_\epsilon\) scale. The limits are taken in the order $\epsilon\downarrow0$, $r\to\infty$, and $\zeta\downarrow0$. For fixed \(r\) and \(\zeta\), the proof first obtains
\[
\limsup_{\epsilon\downarrow0}
\frac{S_{T^-}-S_{T_*}}{\ell_\epsilon}
\leq
\frac{1}{
\alpha_\zeta
\left(
\gamma_\infty-C_{\mathrm{md}}\eta_r
\right)}.
\] 
Since $\eta_r \rightarrow 0$ (as \(r\to\infty\)) and $\alpha_{\zeta} \rightarrow 1$ (as \(\zeta\downarrow0\)), we obtain $\limsup_{\epsilon\downarrow0}
\frac{S_{T^-}-S_{T_*}}{\ell_\epsilon}
\leq
\frac{1}{\gamma_\infty}$. 
\end{enumerate}
For the reverse inequality, \cref{prop:uniform-clock} gives $L_{T^-} \leq 2 G_{T^{-}} \leq 2 \eps \ell_{\epsilon}^b$. The loss-margin inequality in \cref{lem:loss-proxy} therefore yields
\begin{align*}
    \rho(w_{T^-}) \geq
\log\frac{c_{\log}}{L_{T^-}} \geq
\ell_\epsilon-b\log\ell_\epsilon
+\log\frac{c_{\log}}{2},
\end{align*}
where \(c_{\log}:=(\log 2)/n\). This implies that 
$\liminf_{\epsilon\downarrow0}
\frac{\rho(w_{T^-})}{\ell_\epsilon}
\geq 1$. By the definition of \(\gamma_\infty\), we have $\rho(w_{T^-})\leq\gamma_\infty\|w_{T^-}\|_\infty$. On the other hand, writing $w_{T^-} =
w_r-\sum_{t=r}^{T^--1}\eta_tD_t$ and applying \cref{lem:radius} gives 
\[
\|w_{T^-}\|_\infty
\leq
M_r+(S_{T^-}-S_r)
+C_{\mathrm{rad}}\eta_r
\left[
1+
\log
\frac{R_1^2+\epsilon^2}
     {(1-\beta_2)\epsilon^2}
\right],
\]
where \(M_r\) is a bound on $\norm{w_r}_{\infty}$ independent of \(\epsilon\). Chaining these two inequalities, rearranging, dividing by $\ell_{\epsilon}$, sending $\epsilon \downarrow 0$, and then letting $r \rightarrow \infty$, we obtain $\liminf_{\epsilon\downarrow0}
\frac{S_{T^-}-S_{T_*}}{\ell_\epsilon}
\geq
\frac{1}{\gamma_\infty}$. Combining the upper and lower bounds gives
\[
S_{T^-}-S_{T_*}
=
\frac{1}{\gamma_\infty}\ell_\epsilon
+o(\ell_\epsilon).
\]
\emph{\textbf{Part 2: Characterization of $S_{T^{\delta}} - S_{T^-}$.}} For \(T^-\leq t<T^\delta\), the definition of \(T^\delta\) gives
$G_t>\delta\epsilon$. Using the soft-sign bound from \cref{lem:softsign} and the gradient bound in \cref{lem:proxy}, we obtain
\begin{align*}
    \left\langle g_t,\sigma_\epsilon(g_t)\right\rangle \geq
\frac{\|g_t\|_1^2}{\|g_t\|_1+d\epsilon} \geq
\frac{\gamma_\infty^2G_t^2}
     {R_1G_t+d\epsilon} \geq
\frac{\gamma_\infty^2}
     {R_1+d/\delta}G_t
=:\kappa_\delta G_t.
\end{align*}
Because \(T^-\to\infty\) as $\epsilon \downarrow 0$, the term
\(C_{\mathrm{md}}\eta_t^2G_t\) in the loss recursion can be absorbed
for sufficiently small \(\epsilon\). Using \(G_t\geq L_t/2\), we obtain $L_{t+1}
\leq
\left(
1-\frac{\kappa_\delta}{4}\eta_t
\right)L_t$ when $T^-\leq t<T^\delta$.
Since \(L_{T^-}\leq2\epsilon\ell_\epsilon^b\), it follows after unrolling that
\[
L_t
\leq
2\epsilon\ell_\epsilon^b
\exp\left(
-\frac{\kappa_\delta}{4}
(S_t-S_{T^-})
\right), \qquad T^{-} \leq t \leq T^{\delta}. 
\]
The right-hand side is at most \(\delta\epsilon\) once
\[
S_t-S_{T^-}
\geq
\frac{4}{\kappa_\delta}
\log\frac{2\ell_\epsilon^b}{\delta} =: A_{\epsilon} >  0,
\]
where the last inequality holds for sufficiently small $\epsilon$. Consequently, if we denote $\bar{T} = \inf \{ t \geq T^-: S_t - S_{T^-} \geq A_{\eps} \}$, it must hold that $\bar{T} \geq T^- + 1$ as $S_{\bar{T}} - S_{T^-} > 0$. Moreover, it also holds that $\bar{T} \geq T^{\delta}$. Otherwise, we would have $G_{\bar{T}} \leq L_{\bar{T}} \leq \delta \epsilon$, contradicting the definition of $T^{\delta}$. Based on these two inequalities, we have by the minimality of $\bar{T}$ that 
\begin{align*}
    S_{T^{\delta}} - S_{T^{-}} \leq S_{\bar{T}} - S_{T^-} = S_{\bar{T} - 1} - S_{T^-} + \eta_{\bar{T} -1} < A_{\epsilon} + \eta_{\bar{T} -1} \leq A_{\epsilon} + \eta_{T^-},
\end{align*}
where we used $\eta_{\bar{T} - 1} \leq \eta_{T^-}$ as $\bar{T} - 1 \geq T^{-}$. Based on this, it follows that $S_{T^{\delta}} - S_{T^-} = O(\log \ell_{\epsilon})$. Since \(\log\ell_\epsilon=o(\ell_\epsilon)\), using the results of Part 1, we obtain
\[
S_{T^\delta}-S_{T_*}
= S_{T^\delta} - S_{T^-} + S_{T^-} - S_{T_*} =
\frac{1}{\gamma_\infty}\ell_\epsilon
+o(\ell_\epsilon).
\]

\emph{\textbf{Part 3: Characterization of $S_{T^{+}} - S_{T^{\delta}}$.}} Set $\tau:=T^\delta$.
We first prove inductively that $G_t\leq L_t\leq2\delta\epsilon, \,
t\geq\tau$. 
At \(t=\tau\), this follows from
$G_\tau\leq\delta\epsilon$ and 
$L_\tau\leq2G_\tau$. 
Suppose that \(L_t\leq2\delta\epsilon\). Then
it holds that
$\frac{G_t}{\epsilon}
\leq
\frac{L_t}{\epsilon}
\leq 2\delta$.
\cref{lem:alignment} gives
$\epsilon
\left\|D_t-\sigma_\epsilon(g_t)\right\|_2
\leq
C_{\mathrm{ema}}\eta_tG_t$.
Moreover, we can show $
\epsilon\|\sigma_\epsilon(g_t)\|_2
\leq
\|g_t\|_2
\leq
R_2G_t$. Combining them together, we obtain
$\|D_t\|_2
\leq
M_D\frac{G_t}{\epsilon} $
for a constant \(M_D\) independent of \(t\) and \(\epsilon\).
Using $ |g_t[j]|
\leq
\|g_t\|_1
\leq
R_1G_t
\leq
2R_1\delta\epsilon$
and \cref{lem:proxy}, we obtain
\[
a_\delta\frac{G_t^2}{\epsilon}
\leq
\left\langle g_t,\sigma_\epsilon(g_t)\right\rangle
\leq
R_2^2\frac{G_t^2}{\epsilon},
\qquad
a_\delta
:=
\frac{\gamma_2^2}{1+2R_1\delta}.
\]
The bound in \cref{lem:alignment} further gives $
\left|
\left\langle
g_t,D_t-\sigma_\epsilon(g_t)
\right\rangle
\right|
\leq
C_{\mathrm{ema}}R_2\eta_t
\frac{G_t^2}{\epsilon}$.
Since \(\eta_\tau\to0\) as \(\tau\to\infty\), the difference caused by using $D_t$ instead of $\sigma_{\epsilon}(g_t)$ can be
absorbed for sufficiently small \(\epsilon\). Therefore, it holds that 
\[
\frac{a_\delta}{2}\frac{G_t^2}{\epsilon}
\leq
\langle g_t,D_t\rangle
\leq
A_\delta\frac{G_t^2}{\epsilon}
\]
for a constant \(A_\delta>0\). The Taylor expansion of the loss is $L_{t+1}
=
L_t-\eta_t\langle g_t,D_t\rangle+R_t$. Since $\frac{G_{t}}{\epsilon} \leq 2 \delta$
and $\norm{D_t}_2 \leq M_{D}\frac{G_t}{\epsilon}$, 
we can show the Taylor remainder $R_t$ satisfies
$0\leq R_t
\leq
B_\delta\eta_t^2
\frac{G_t^3}{\epsilon^2}$
for a constant \(B_\delta>0\). 
Because \(G_t/\epsilon\leq2\delta\) and
\(\eta_t\leq\eta_\tau\to0\), the Taylor remainder is lower order than 
the first-order descent term. Consequently, we have 
\[
\frac{a_\delta}{4}
\frac{\eta_t}{\epsilon}G_t^2
\leq
L_t-L_{t+1}
\leq
A_\delta
\frac{\eta_t}{\epsilon}G_t^2.
\]
In particular, it holds that \(L_{t+1}\leq L_t\leq2\delta\epsilon\), which closes the
induction.

Since \(t\geq\tau\geq T_*\), \cref{prop:uniform-clock} gives $
\frac{1}{2}L_t\leq G_t\leq L_t$.
It follows that 
\begin{align}
    c_\delta
\frac{\eta_t}{\epsilon}L_t^2
\leq
L_t-L_{t+1}
\leq
A_\delta
\frac{\eta_t}{\epsilon}L_t^2, \label{eq:s_l}
\end{align}
for some \(c_\delta>0\). We also derive that
\(L_{t+1}\geq L_t/2\) for sufficiently small \(\epsilon\). Since $\frac{1}{L_{t+1}} - \frac{1}{L_t} = \frac{L_t - L_{t+1}}{L_t L_{t+1}}$, $L_t L_{t+1} \leq L_t^2$, and $L_t L_{t+1} \geq \frac{1}{2} L_t^2$, we further obtain via \eqref{eq:s_l} that 
\[
c_\delta\frac{\eta_t}{\epsilon}
\leq
\frac{1}{L_{t+1}}-\frac{1}{L_t}
\leq
C_\delta\frac{\eta_t}{\epsilon}.
\]
Summing from \(t=T^\delta\) to \(T^+-1\) gives
\[
c_\delta
\frac{S_{T^+}-S_{T^\delta}}{\epsilon}
\leq
\frac{1}{L_{T^+}}-\frac{1}{L_{T^\delta}}
\leq
C_\delta
\frac{S_{T^+}-S_{T^\delta}}{\epsilon}.
\]
Finally, the stopping-time estimates from \cref{lem:ratio} imply
\[
\begin{aligned}
\epsilon
\left(
\frac{1}{L_{T^+}}
-
\frac{1}{L_{T^\delta}}
\right)
&=
\ell_\epsilon^b(1+o(1))
-\frac{1}{\delta}(1+o(1)) =
\Theta(\ell_\epsilon^b),
\end{aligned}
\]
from which we conclude that 
$S_{T^+}-S_{T^\delta} = \Theta(\ell_\epsilon^b)$.

\begin{proof}[Proof of \cref{thm:first-clock}]
    Fix $0 < \zeta < \frac{2}{n}$ and define $\alpha_{\zeta} := 1 - \frac{n \zeta}{2} > 0$. By \cref{lem:trans_aux}, there is a time $t_{\zeta}$ (independent of $\epsilon$) such that $L_t \leq \zeta, t\geq t_{\zeta}$. Choose an integer $r$ such that $r \geq t_{\zeta}$, $r \geq T_*$, $C_{\rm md} \eta_r < \gamma_{\infty}$, and $\gamma_{\infty} \eta_{r} \leq \frac{1}{2}$. Recall that $C_{\rm md}$ is the constant in \eqref{eq:master-descent-app}, i.e.
    \begin{align}
        L_{t+1} \leq L_t - \eta_t \langle g_t, \sigma_{\epsilon}(g_t)\rangle + C_{\rm md} \eta_t^2 G_t. \label{eq:firstclock_1}
    \end{align}
    Importantly, the integer $r$ is fixed before sending $\epsilon \downarrow 0$. Since $T^{-} \rightarrow \infty$ by \cref{lem:ratio}, we have $T^{-} > r + 1$ for all sufficiently small $\epsilon$. For $r \leq t < T^{-}$, the definition of $T^-$ gives $G_t > h_{\epsilon} = \epsilon \ell_{\epsilon}^b$. From $\langle g_t, \sigma_{\epsilon}(g_t) \rangle \geq \norm{g_t}_1 - d \epsilon$ (\cref{lem:softsign}) and $\norm{g_t}_1 \geq \gamma_{\infty} G_t$ (\cref{lem:proxy}), we have $\langle g_t,  \sigma_{\epsilon}(g_t)\rangle \geq \gamma_{\infty} G_t - d \epsilon$. Substituting this into \eqref{eq:firstclock_1}, we obtain
    \begin{align}
        L_{t+1} &\leq L_t - \eta_t (\gamma_{\infty} G_t - d \epsilon) + C_{\rm md} \eta_t^2 G_t \nonumber \\
        &= L_t - \eta_t (\gamma_{\infty} - C_{\rm md} \eta_t - \frac{d \epsilon}{G_t}) G_t. \label{eq:firstclock_2}
    \end{align}
    From  $G_t > \epsilon \ell_{\epsilon}^b$, we have $\frac{\epsilon}{G_t} < \frac{1}{\ell_{\epsilon}^b}$. This implies that 
    $\gamma_{\infty} - C_{\rm md} \eta_t - \frac{d \epsilon}{G_t} \geq \gamma_{\infty} - C_{\rm md} \eta_r - \frac{d}{l_{\epsilon}^b}$. Define $q_{\epsilon,r}:= \gamma_{\infty} - C_{\rm md} \eta_r - \frac{d}{\ell_{\epsilon}^b}$. For fixed $r$, since $\ell_{\epsilon}^b \rightarrow \infty$, we have $q_{\epsilon,r} \rightarrow \gamma_{\infty} - C_{\rm md} \eta_r > 0$. Therefore, we have $q_{\epsilon,r} > 0$ for all sufficiently small $\epsilon$ (for example, choosing $\epsilon$ such that $\frac{d}{\ell_{\epsilon}^b} \leq \frac{\gamma_{\infty} - C_{\rm md} \eta_r}{2}$) and 
    \begin{align}
        L_{t+1} \leq L_t - q_{\epsilon,r} \eta_t G_t. \label{eq:firstclock_3}
    \end{align}   

    \paragraph{Part 1} From \cref{lem:loss-proxy}, we know $\frac{G_t}{L_t} \geq 1 - \frac{n L_t}{2}$. Since $t \geq r \geq t_{\zeta}$, from \eqref{eq:lemma_23}, we have $G_t \geq (1 - \frac{n \zeta}{2}) L_t = \alpha_{\zeta} L_t$. Substituting this into \eqref{eq:firstclock_3}, we have $L_{t+1} \leq (1 - \alpha_{\zeta} q_{\epsilon,r} \eta_t) L_t$. Given $0 < q_{\epsilon,r} \leq \gamma_{\infty}$ for sufficiently small $\epsilon$, we have $0 < \alpha_{\zeta} q_{\epsilon,r} \eta_t \leq \gamma_{\infty} \eta_r \leq \frac{1}{2}$.  Iterating from $r$ to $t-1$ and using $1 - x \leq e^{-x}$, we obtain
    \begin{align}
        L_t \leq L_r \prod_{s=r}^{t-1} (1 - \alpha_{\zeta} q_{\epsilon,r} \eta_s) \leq L_r \exp(-\alpha_{\zeta} q_{\epsilon,r} \sum_{s=r}^{t-1} \eta_s) = L_r \exp(-\alpha_{\zeta} q_{\epsilon,r}(S_t - S_r)). \label{eq:firstclock_4}
    \end{align}
    This holds for every $r \leq t < T^{-}$. By minimality of $T^-$ and \cref{lem:loss-proxy}, we have $L_{T^- - 1} \geq G_{T^- - 1} > h_{\epsilon}$. Applying \eqref{eq:firstclock_4} with $t = T^- - 1$, we obtain 
    \begin{align*}
        h_{\epsilon} < L_{T^- - 1} \leq L_{r} \exp(- \alpha_{\zeta} q_{\epsilon,r} (S_{T^- - 1} - S_r)). 
    \end{align*}
    Rearranging this leads to $S_{T^- - 1} - S_r \leq \frac{1}{\alpha_{\zeta} q_{\epsilon,r}} \log \frac{L_r}{h_{\epsilon}}$. Since $S_{T^-} = S_{T^- - 1} + \eta_{T^- - 1}$ and $\eta_{T^- - 1} \leq \eta_r$, we further obtain
    \begin{align*}
        S_{T^-} - S_r \leq \frac{1}{\alpha_{\zeta} q_{\epsilon,r}} \log \frac{L_r}{h_{\epsilon}} + \eta_r.
    \end{align*}
    Because $L_r \leq \zeta$, we have $\log \frac{L_r}{h_{\epsilon}} \leq \log \frac{\zeta}{\epsilon \ell_{\epsilon}^b} = \ell_{\epsilon} - b \log \ell_{\epsilon} + \log \zeta$. 
    Given this, we obtain
    \begin{align*}
        S_{T^-} - S_r \leq \frac{\ell_{\epsilon} - b \log \ell_{\epsilon} + \log \zeta}{\alpha_{\zeta} (\gamma_{\infty} - C_{\rm md} \eta_r - d \ell_{\epsilon}^{-b})} + \eta_r = \ell_{\epsilon} \frac{1 - \frac{b \log \ell_{\epsilon}}{\ell_{\epsilon}} + \frac{\log \zeta}{\ell_{\epsilon}}}{\alpha_{\zeta} (\gamma_{\infty} - C_{\rm md} \eta_r - d \ell_{\epsilon}^{-b})} + \eta_r. 
    \end{align*}
    From $S_{T^-} - S_{T_*} = (S_{T^-} - S_r) + (S_r - S_{T_*})$, we have
    \begin{align*}
        \frac{S_{T^-} - S_{T_*}}{\ell_{\epsilon}} \leq \frac{1 - \frac{b \log \ell_{\epsilon}}{\ell_{\epsilon}} + \frac{\log \zeta}{\ell_{\epsilon}}}{\alpha_{\zeta} (\gamma_{\infty} - C_{\rm md} \eta_r - d \ell_{\epsilon}^{-b})} + \frac{S_r - S_{T_*} + \eta_r}{\ell_{\epsilon}},
    \end{align*}
    Note that $S_r - S_{T_*} + \eta_r$ is independent of $\epsilon$. Using $\frac{\log \ell_{\epsilon}}{\ell_{\epsilon}} \rightarrow 0$ and $\ell_{\epsilon}^{-b} \rightarrow 0$ as $\epsilon \downarrow 0$, we obtain
    \begin{align*}
        \limsup_{\epsilon \downarrow 0} \frac{S_{T^-} - S_{T_*}}{\ell_{\epsilon}} \leq \frac{1}{\alpha_{\zeta} (\gamma_{\infty} - C_{\rm md} \eta_r)}. 
    \end{align*}
    This holds for every sufficiently large fixed $r \geq t_{\zeta}$. Given $\eta_r \rightarrow 0$ as $r \rightarrow \infty$, we have $\limsup_{\epsilon \downarrow 0} \frac{S_{T^-} - S_{T_*}}{\ell_{\epsilon}} \leq \frac{1}{\alpha_{\zeta} \gamma_{\infty}}$. Finally, letting $\zeta \downarrow 0$ gives $\alpha_{\zeta} = 1 - \frac{n \zeta}{2} \rightarrow 1$ and we conclude that
    \begin{align}
        \limsup_{\epsilon \downarrow 0} \frac{S_{T^-} - S_{T_*}}{\ell_{\epsilon}} \leq \frac{1}{\gamma_{\infty}}. \label{eq:firstclock_m1}
    \end{align}

    We now prove $\liminf_{\epsilon \downarrow 0} \frac{S_{T^-} - S_{T_*}}{\ell_{\epsilon}} \geq \frac{1}{\gamma_{\infty}}$. By definition, we have $G_{T^-} \leq h_{\epsilon}$. Because $T^- \geq T_*$, \cref{prop:uniform-clock} gives $L_{T^-} \leq 2 G_{T^-} \leq 2 h_{\epsilon} = 2 \epsilon \ell_{\epsilon}^b$. By \cref{lem:loss-proxy}, we have $L_t \geq \frac{\log 2}{n} e^{- \rho(w_t)}$. From this, we then have $\rho(w_{T-}) \geq \log \frac{c_{\log}}{L_{T^-}}$ where we let $c_{\log}:=\frac{\log 2}{n}$. From $L_{T^-} \leq 2\epsilon \ell_{\epsilon}^b$, we obtain
    \begin{align*}
        \rho(w_{T^-}) \geq \log \frac{c_{\log}}{2 \epsilon \ell_{\epsilon}^b} = \ell_{\epsilon} - b \log \ell_{\epsilon} + \log \frac{c_{\log}}{2}.  
    \end{align*}
    This implies that 
    \begin{align}
        \liminf_{\epsilon \downarrow 0} \frac{\rho(w_{T^-})}{\ell_{\epsilon}} \geq 1. \label{eq:firstclock_8}
    \end{align} 
    By the definition of $\gamma_{\infty}$, we have $\rho(w) \leq \gamma_{\infty} \norm{w}_{\infty}$ for every $w \in \mathbb R^d$. Therefore, we have $\frac{\rho(w_{T^-})}{\gamma_{\infty}} \leq \norm{w_{T^-}}_{\infty}$. Given $T^- > r$ for sufficiently small $\epsilon$, we have $w_{T^-} = w_r - \sum_{t=r}^{T^- - 1} \eta_t D_t$ and consequently 
    \begin{align}
        \norm{w_{T^-}}_{\infty} \leq \norm{w_r}_{\infty} + \norm{\sum_{t=r}^{T^- - 1} \eta_t D_t}_{\infty}. \label{eq:firstclock_5}
    \end{align}
    The first term on the right-hand side can be bounded as: $\norm{w_r}_{\infty} \leq \norm{w_0}_{\infty} + C_D S_r =: M_r$. Note that $M_r$ is independent of $\epsilon$. From \cref{lem:proxy}, we know $|g_t[j]| \leq \norm{g_t}_{\infty}\leq R_1 G_t \leq R_1$. Therefore, by taking $B_g = R_1$, we obtain 
    \begin{align}
        \left\|\sum_{t=r}^{T^{-} - 1}\eta_tD_t\right\|_\infty \le S_{T^-} - S_r + C_{\rm rad} \eta_r \bigl[ 1 + \log \frac{R_1^2 + \epsilon^2}{(1 - \beta_2) \epsilon^2} \bigr]. \label{eq:firstclock_6}
    \end{align} 
    Substituting \eqref{eq:firstclock_6} into \eqref{eq:firstclock_5}, we obtain
    \begin{align*}
        \frac{\rho(w_{T^-})}{\gamma_{\infty}} \leq \norm{w_{T^-}}_{\infty} \leq M_r + S_{T^-} - S_r + C_{\rm rad} \eta_r [1 + \log \frac{R_1^2 + \epsilon^2}{(1 - \beta_2) \epsilon^2}]. 
    \end{align*}
    Rearranging and dividing by $\ell_{\epsilon}$ gives
    \begin{align*}
        \frac{S_{T^-} - S_r}{\ell_{\epsilon}} \geq \frac{1}{\gamma_{\infty}} \frac{\rho(w_{T^-})}{\ell_{\epsilon}} - \frac{M_r}{\ell_{\epsilon}} - C_{\rm rad} \eta_r \frac{1}{\ell_{\epsilon}} [1 + \log \frac{R_1^2 + \epsilon^2}{(1 - \beta_2) \epsilon^2}]. 
    \end{align*}
    By taking $\epsilon \downarrow 0$ and using \eqref{eq:firstclock_8}, we obtain
    \begin{align}
        \liminf_{\epsilon \downarrow 0} \frac{S_{T^-} - S_{T_*}}{\ell_{\epsilon}} \geq \frac{1}{\gamma_{\infty}} - 2 C_{\rm rad} \eta_r \label{eq:firstclock_9}.
    \end{align}
    where we used $\log \frac{R_1^2 + \epsilon^2}{(1 - \beta_2) \epsilon^2} = 2 \log \frac{1}{\epsilon} + \log \frac{R_1^2 + \epsilon^2}{1 - \beta_2} = 2 \ell_{\epsilon} + O(1)$ and consequently $\frac{1}{\ell_{\epsilon}}[1 + \log \frac{R_1^2 + \epsilon^2}{(1 - \beta_2) \epsilon^2}] \rightarrow 2$. \eqref{eq:firstclock_9} holds for every sufficiently large fixed $r$. We let $r \rightarrow \infty$ and use $\eta_r \rightarrow 0$ to conclude that
    \begin{align}
        \liminf_{\epsilon \downarrow 0} \frac{S_{T^-} - S_{T_*}}{\ell_{\epsilon}} \geq \frac{1}{\gamma_{\infty}}. \label{eq:firstclock_m2}
    \end{align}
    Combining \eqref{eq:firstclock_m1} and \eqref{eq:firstclock_m2}, we have
    \begin{align*}
        \frac{1}{\gamma_{\infty}} \leq \liminf_{\epsilon \downarrow 0} \frac{S_{T^-} - S_{T_*}}{\ell_{\epsilon}} \leq \limsup_{\epsilon \downarrow 0} \frac{S_{T^-} - S_{T_*}}{\ell_{\epsilon}} \leq \frac{1}{\gamma_{\infty}}. 
    \end{align*}
    Therefore, we obtain
    \begin{align*}
        \frac{S_{T^-} - S_{T_*}}{\ell_{\epsilon}} \longrightarrow \frac{1}{\gamma_{\infty}}, \qquad \text{as} \, \epsilon \downarrow 0,
    \end{align*}
    or equivalently, 
    \begin{align}
        S_{T^-} - S_{T_*} = \frac{1}{\gamma_{\infty}} \ell_{\epsilon} + o(\ell_{\epsilon}). \label{eq:mmm}
    \end{align}
    \paragraph{Part 2} Next, we prove $S_{T^{\delta}} - S_{T^-} = O(\log \ell_{\epsilon})$. Using $\norm{g_t}_1 \geq \gamma_{\infty}G_t$, $\norm{g_t}_1 \leq R_1 G_t$ and \eqref{eq:softsign-titu}, we get 
    \begin{align*}
        \langle g_t, \sigma_{\epsilon}(g_t) \rangle \geq \frac{\gamma_{\infty}^2 G_t^2}{R_1 G_t + d \epsilon} = \frac{\gamma^2 G_t}{R_1 + d \epsilon/G_t} \geq \frac{\gamma_{\infty}^2 G_t}{R_1 + d / \delta}. 
    \end{align*}
    Define $\kappa_{\delta} = \frac{\gamma_{\infty}^2}{R_1 + d/\delta} = \frac{\gamma_{\infty}^2 \delta}{R_1 \delta + d} > 0$. Then we have $\langle g_t , \sigma_{\epsilon}(g_t) \rangle \geq \kappa_{\delta} G_t$. Substituting this into \eqref{eq:firstclock_1}, we obtain
    \begin{align*}
        L_{t+1} \leq L_t - \eta_t (\kappa_{\delta} - C_{\rm md} \eta_t) G_t.     
    \end{align*}
    Since $T^{-} \rightarrow \infty$, we have $\eta_{T^-} \rightarrow 0$. Thus, for sufficiently small $\eps$, we have $C_{\rm md} \eta_t \leq C_{\rm md} \eta_{T^-} \leq \frac{\kappa_{\delta}}{2}, t \geq T^-$. This leads to $L_{t+1} \leq L_t - \frac{\kappa_{\delta}}{2} \eta_t G_t$. Using 
    $G_t \geq L_t /2, \forall t \geq T_{*}$, we further obtain 
    \begin{align}
        L_{t+1} \leq (1 - \frac{\kappa_{\delta}}{4} \eta_t) L_t. \label{eq:firstclock_m3}
    \end{align} 
    At the initial time, it holds that $L_{T^-} \leq 2 G_{T^-} \leq 2 \epsilon \ell_{\epsilon}^b$. Iterating \eqref{eq:firstclock_m3} leads to
    \begin{align*}
        L_t \leq 2 \epsilon \ell_{\epsilon}^b \exp(- \frac{\kappa_{\delta}}{4} (S_t - S_{T^-})) =: U_t, \qquad T^{-} \leq t \leq T^{\delta}. 
    \end{align*}
    Let $\bar{T}$ be the first time at which the right side is at most $\delta \epsilon$, i.e. $\bar{T} := \inf \{ t \geq T^{-}: U_t \leq \delta \epsilon \}$. The condition $U_t \leq \delta \epsilon$ is equivalent to
    \begin{align*}
        S_t - S_{T^-} \geq \frac{4}{\kappa_{\delta}} \log \frac{2 \ell_{\epsilon}^b}{\delta} =: A_{\epsilon}. 
    \end{align*}
    Thus, we can rewrite $\bar{T}$ as $\bar{T} = \inf \{ t \geq T^-: S_t - S_{T^-} \geq A_{\eps} \}$. For sufficiently small $\epsilon$, we have $\frac{2 l_{\epsilon}^b}{\delta} > 1$ and consequently $A_{\epsilon} > 0$. Therefore, we have $\bar{T} \geq T^- + 1$ when $\epsilon$ is sufficiently small.  
    Moreover, we must have $\bar{T} \geq T^{\delta}$, otherwise we would have $G_{\bar{T}} \leq L_{\bar{T}} \leq \delta \epsilon$ contradicting the definition of $T^{\delta}$. Since $S_t$ is increasing and $T^{\delta} \leq \bar{T}$, we have $S_{T^{\delta}} \leq S_{\bar{T}}$ and 
    \begin{align}
        S_{T^{\delta}} - S_{T^-} \leq S_{\bar{T}} - S_{T^-}. \label{eq:ttt}
    \end{align}
    By the minimality of $\bar{T}$, we have
    \begin{align*}
        S_{\bar{T}} - S_{T^-} = S_{\bar{T} - 1} - S_{T^-} + \eta_{\bar{T} -1} < A_{\epsilon} + \eta_{\bar{T} -1} \leq A_{\epsilon} + \eta_{T^-},
    \end{align*}
    where we used $\eta_{\bar{T} - 1} \leq \eta_{T^-}$ in the last inequality as $\bar{T} - 1 \geq T^{-}$. Combining this with \eqref{eq:ttt} and using the definition of $A_{\epsilon}$, we obtain 
    \begin{align*}
        S_{T^{\delta}} - S_{T^-} \leq \frac{4}{\kappa_{\delta}} \log \frac{2 \ell_{\eps}^b}{\delta} + \eta_{T^-} = \frac{4 b}{\kappa_{\delta}} \log \ell_{\epsilon} + \frac{4}{\kappa_{\delta}} \log \frac{2}{\delta} + \eta_{T^-}.  
    \end{align*}
    Since $\frac{4}{\kappa_{\delta}} \log \frac{2}{\delta}$ is a constant independent of $\epsilon$ and $\eta_{T^-} \rightarrow 0$ as $\epsilon \downarrow 0$, we conclude that $ S_{T^{\delta}} - S_{T^-} = O(\log \ell_{\epsilon})$. Combining this with \eqref{eq:mmm}, we obtain
    \begin{align*}
        S_{T^{\delta}} - S_{T^*} = (S_{T^-} - S_{T^*}) + (S_{T^{\delta}} - S_{T^-}) = \frac{1}{\gamma_{\infty}} \ell_{\epsilon} + o(\ell_{\epsilon}).
    \end{align*}
    \paragraph{Part 3} Finally, we prove $S_{T^+} - S_{T^{\delta}} = \Theta(\ell_{\epsilon}^b)$. 
    Set $\tau := T^{\delta}$. 
    We prove inductively that $L_t \leq 2 \delta \epsilon, t \geq \tau$. For the base case, we have $G_{\tau} \leq \delta \epsilon$ by definition. Because $\tau \geq T_{*}$, by \cref{prop:uniform-clock}, it holds that $L_{\tau} \leq 2 G_{\tau} \leq 2 \delta \epsilon$. Now, we assume that the statement holds at time t and try to show that it also holds at $t+1$. Since $G_t \leq L_t$, we have $\lambda_t = \frac{G_t}{\epsilon} \leq \frac{L_t}{\epsilon} \leq 2 \delta$. \cref{lem:alignment} gives $\epsilon \norm{D_t - \sigma_{\epsilon}(g_t)}_2 \leq C_{\rm ema} \eta_t G_t$. Also, coordinatewise, we have $\epsilon |\sigma_{\epsilon}(g_t) [j]| = \frac{\epsilon |g_t[j]|}{|g_t[j]| + \epsilon} \leq |g_t[j]|$. Hence, we have $\epsilon \norm{\sigma_{\epsilon}(g_t)}_2 \leq \norm{g_t}_2 \leq R_2 G_t$. Putting things together, we obtain
    \begin{align*}
        \epsilon \norm{D_t}_2  \leq (R_2 + C_{\rm ema} \eta_t) G_t \leq M_D G_t,
    \end{align*}
    where $M_D := R_2 + C_{\rm ema} \eta_0$. 
    Therefore, we have $\norm{D_t}_2 \leq M_D \frac{G_t}{\epsilon}$. 
    
    Because $|g_t[j]| \leq \norm{g_t}_1 \leq R_1 G_t$ and $\frac{G_t}{\epsilon} \leq 2 \delta$, we have $|g_{t}[j]| + \epsilon \leq \epsilon(1 + 2 R_1 \delta)$. Therefore, we obtain
    \begin{align*}
        \langle g_t, \sigma_{\epsilon}(g_t)\rangle = \sum_{j} \frac{g_t[j]^2}{|g_t[j]| + \epsilon} \geq \frac{\norm{g_t}_2^2}{\epsilon(1 + 2 R_1 \delta)} \geq \frac{\gamma_2^2}{1 + 2R_1 \delta} \frac{G_t^2}{\epsilon} = a_{\delta} \frac{G_t^2}{\epsilon},
    \end{align*}
    where we define $a_{\delta} := \frac{\gamma_2^2}{1 + 2R_1 \delta} > 0$. Moreover, we further obtain
    \begin{align*}
        \langle g_t, \sigma_{\epsilon}(g_t)\rangle \leq \frac{1}{\epsilon} \sum_{j=1}^d g_t[j]^2 = \frac{\norm{g_t}_2^2}{\epsilon} \leq R_2^2 \frac{G_t^2}{\epsilon}. 
    \end{align*}
    Denote $\sigma_t := \sigma_{\epsilon}(g_t)$. Next, we bound the term $|\langle g_t, D_t - \sigma_t \rangle|$ as
    \begin{align*}
        |\langle g_t, D_t - \sigma_t \rangle| \leq \norm{g_t}_2 \norm{D_t - \sigma_t}_2 \leq R_2 G_t (C_{\rm ema} \eta_t \frac{G_t}{\epsilon}) = C_{\rm ema} R_2 \eta_t \frac{G_t^2}{\epsilon}.
    \end{align*}
    We can write $\langle g_t, D_t \rangle = \langle g_t, \sigma_t\rangle + \langle g_t, D_t - \sigma_t \rangle$. For the lower bound, we have
    \begin{align*}
        \langle g_t, D_t \rangle \geq \langle g_t, \sigma_t \rangle - |\langle g_t, D_t - \sigma_t \rangle| \geq a_{\delta} \frac{G_t^2}{\epsilon} - C_{\rm ema} R_2 \eta_t \frac{G_t^2}{\epsilon} = (a_{\delta} - C_{\rm ema} R_2 \eta_t) \frac{G_t^2}{\epsilon}. 
    \end{align*}
    For the upper bound, we have
    \begin{align*}
        \langle g_t, D_t \rangle &\leq \langle g_t, \sigma_t \rangle + |\langle g_t, D_t - \sigma_t \rangle| \leq R_2^2 \frac{G_t^2}{\epsilon} + C_{\rm ema} R_2 \eta_t \frac{G_t^2}{\epsilon} = (R_2^2  + C_{\rm ema} R_2 \eta_t) \frac{G_t^2}{\epsilon}. 
    \end{align*} 
    Recall that $\tau = T^{\delta} = T_1^{\delta}(\epsilon)$. It holds that $\tau \rightarrow \infty$ and $\eta_{\tau} \rightarrow 0$ as $\epsilon \downarrow 0$. Consequently, we have for sufficiently small $\epsilon$ that 
    \begin{align*}
        C_{\rm ema} R_2 \eta_t \leq C_{\rm ema} R_2 \eta_{\tau} \leq \frac{a_{\delta}}{2}, \qquad t \geq \tau. 
    \end{align*}
    Substituting this into the lower bound, we obtain $a_{\delta} - C_{\rm ema} R_2 \eta_t \geq a_{\delta} - \frac{a_{\delta}}{2} = \frac{a_{\delta}}{2}$. For the upper bound, we have $R_2^2 + C_{\rm ema} R_2 \eta_t \leq R_2^2 + \frac{a_{\delta}}{2}$. Putting things together, we conclude that
    \begin{align}
        \frac{a_{\delta}}{2} \frac{G_t^2}{\epsilon} \leq \langle g_t, D_t \rangle \leq A_{\delta} \frac{G_t^2}{\epsilon}, \label{eq:t1_temp} 
    \end{align}
    where $A_{\delta}:= R_2^2 + \frac{a_{\delta}}{2} = R_2^2 + \frac{\gamma_2^2}{2(1 + 2R_1 \delta)}$. 
    Recall the Taylor remainder formula used in \cref{lem:taylor}:
    \begin{align*}
        L_{t+1} = L_t - \eta_t \langle g_t,D_t\rangle + R_t,
    \end{align*}
    where $R_t = \eta_t^2 \int_{0}^1 (1 - s) D_t^T \nabla^2 L(w_t - s \eta_t D_t) D_t ds \geq 0$. We now exploit the bound $\norm{D_t}_2 \leq M_D \frac{G_t}{\epsilon}$
    shown above. Let $\tilde{w}_{t,s} := w_t - s \eta_t D_t$. For every sample $i$, \cref{lem:derivative-ratio} gives $a_i(\tilde{w}_{t,s}) \leq e^{|z_i^T(\tilde{w}_{t,s} - w_t)|}a_i(w_t)$. Since $\frac{G_{t}}{\epsilon} \leq 2 \delta$
    and $\norm{D_t}_2 \leq M_{D}\frac{G_t}{\epsilon}$, defining $E_{\delta}:= \exp (2 \delta R_2 M_D \eta_0)$, we have $a_i(\tilde{w}_{t,s}) \leq E_{\delta} a_{i}(w_t)$. 
    Recall that $a_i(w) = -\ell'(z_i^T w)$. From \eqref{eq:md_1}, we have 
    \begin{align*}
        D_t^T \nabla^2 L(\tilde{w}_{t,s}) D_t \leq \frac{E_{\delta}}{n} \sum_{i} a_i(w_t) R_2^2 \norm{D_t}_2^2 = E_{\delta} R_2^2 G_t \norm{D_t}_2^2 \leq E_{\delta} R_2^2 M_D^2 \frac{G_t^3}{\epsilon^2}. 
    \end{align*}
    Consequently, we have 
    \begin{align}
    0 \leq R_t \leq B_{\delta} \eta_t^2 \frac{G_t^3}{\epsilon^2}, \label{eq:t1_temp2}
    \end{align}
     where $B_{\delta} = \frac{1}{2} E_{\delta} R_2^2 M_D^2$. Denote $\triangle_t := L_t - L_{t+1}$. From $\triangle_t = \eta_t \langle g_t, D_t \rangle - R_t$ and using \eqref{eq:t1_temp} and \eqref{eq:t1_temp2}, we have 
     \begin{align*}
         \triangle_{t} \geq \frac{a_{\delta}}{2} \eta_t \frac{G_t^2}{\epsilon} - B_{\delta} \eta_t^2 \frac{G_t^3}{\epsilon^2} = \eta_t \frac{G_t^2}{\epsilon} (\frac{a_{\delta}}{2} - B_{\delta} \eta_t \frac{G_t}{\epsilon}). 
     \end{align*}
     Since $\frac{G_t}{\epsilon} \leq 2 \delta$ and $\eta_t \leq \eta_{\tau}$ (as $t \geq \tau$), we have for sufficiently small $\epsilon$ that
     \begin{align*}
         B_{\delta} \eta_t \frac{G_t}{\epsilon} \leq 2 \delta B_{\delta} \eta_{\tau} \leq \frac{a_{\delta}}{4}. 
     \end{align*}
     Therefore, it holds that $\triangle_t \geq \frac{a_{\delta}}{4} \eta_t \frac{G_t^2}{\epsilon} \geq 0$. This implies that $L_{t+1} \leq L_t \leq 2 \delta \epsilon$. This closes the induction argument and shows that $L_{t} \leq 2 \delta \epsilon, t\geq \tau$.  
     
     Given $R_t \geq 0$, we have $\triangle_t = \eta_t \langle g_t, D_t\rangle - R_t \leq \eta_t \langle g_t, D_t \rangle$. Using the upper bound in \eqref{eq:t1_temp}, we conclude that 
     \begin{align*}
         \frac{a_{\delta}}{4} \frac{\eta_t}{\epsilon} G_t^2 \leq L_t - L_{t+1} \leq A_{\delta} \frac{\eta_t}{\epsilon} G_t^2. 
     \end{align*}
     Since $t \geq \tau \geq T_*$, \cref{prop:uniform-clock} gives $\frac{1}{2} L_t \leq G_t \leq L_t$ and consequently $\frac{1}{4} L_t^2 \leq G_t^2 \leq L_t^2$. Therefore, we further obtain 
     \begin{align}
         c_{\delta} \frac{\eta_t}{\epsilon} L_t^2 \leq L_t - L_{t+1} \leq A_{\delta} \frac{\eta_t}{\epsilon} L_t^2, \label{eq:ccc}
     \end{align}
     where $c_{\delta} := \frac{a_{\delta}}{16}$. Given this, we have for $t \geq \tau$ that
     \begin{align}
         \frac{L_t - L_{t+1}}{L_t} \leq A_{\delta} \eta_t \frac{L_t}{\epsilon} \leq 2 \delta A_{\delta} \eta_{\tau}. \label{eq:upper_lower}
     \end{align}   
     Since $\eta_{\tau} \rightarrow 0$, for sufficiently small $\epsilon$, we have 
     $L_{t+1} \geq \frac{1}{2} L_t, \, t \geq \tau$. Note that $\frac{1}{L_{t+1}} - \frac{1}{L_t} = \frac{L_t - L_{t+1}}{L_t L_{t+1}}$. Given $L_t L_{t+1} \leq L_t^2$ (as $L_{t+1} \leq L_t$), we obtain via the lower bound in \eqref{eq:ccc} that
     \begin{align*}
         \frac{1}{L_{t+1}} - \frac{1}{L_t} \geq c_{\delta} \frac{\eta_t}{\epsilon}. 
     \end{align*}
     Given $L_{t} L_{t+1} \geq \frac{1}{2} L_t^2$, we obtain via the upper bound in \eqref{eq:ccc} that
     \begin{align*}
         \frac{1}{L_{t+1}} - \frac{1}{L_t} \leq 2 A_{\delta} \frac{\eta_t}{\epsilon}.
     \end{align*}
     Combining them together, we conclude that
    $c_{\delta} \frac{\eta_t}{\epsilon} \leq \frac{1}{L_{t+1}} - \frac{1}{L_t} \leq 2 A_{\delta} \frac{\eta_t}{\epsilon}, \, t \geq \tau$. Recall that $\tau = T^{\delta}$. 
    Letting $C_{\delta} = 2 A_{\delta}$ and summing from $t = T^{\delta}$ to $T^+ - 1$ gives 
    \begin{align}
        c_{\delta} \frac{S_{T^+} - S_{T^{\delta}}}{\epsilon} \leq \frac{1}{L_{T^+}} - \frac{1}{L_{T^{\delta}}} \leq C_{\delta} \frac{S_{T^+} - S_{T^{\delta}}}{\epsilon}. \label{eq:cc1}
    \end{align}
    Given $\frac{L_{T^+}}{\epsilon \ell_{\epsilon}^{-b}} \rightarrow 1$ and $\frac{L_{T^{\delta}}}{\delta \epsilon} \rightarrow 1$, we can write $L_{T^+} = \epsilon \ell_{\epsilon}^{-b}(1 + r_{\epsilon})$ and $L_{T^{\delta}} = \delta \epsilon (1 + s_{\epsilon})$ where $r_{\epsilon} \rightarrow 0$ and $s_{\epsilon} \rightarrow 0$ as $\epsilon \downarrow 0$. Define $\tilde{r}_{\epsilon} := - \frac{r_{\epsilon}}{1 + r_{\epsilon}}$ and $\tilde{s}_{\epsilon}:= - \frac{s_{\epsilon}}{1+ s_{\epsilon}}$. Then we have
    \begin{align*}
        \frac{\epsilon}{L_{T^+}} = \ell_{\epsilon}^b (1 + \tilde{r}_{\epsilon}) = \ell_{\epsilon}^b (1 + o(1)), \qquad \frac{\epsilon}{L_{T^{\delta}}} = \frac{1}{\delta} (1 + \tilde{s}_{\epsilon}) = \frac{1}{\delta}(1+o(1)).  
    \end{align*}
    Combining them together, we have
    \begin{align}
        \epsilon (\frac{1}{L_{T^+}} - \frac{1}{L_{T^{\delta}}}) = \ell_{\epsilon}^b (1 + o(1)) - \frac{1}{\delta} (1 + o(1)) = \Theta (\ell_{\epsilon}^b).  \label{eq:cc2}
    \end{align}
    Multiplying \eqref{eq:cc1} by $\epsilon$ and using \eqref{eq:cc2} leads to 
    \begin{align*}
        S_{T^+} - S_{T^{\delta}} = \Theta (\ell_{\epsilon}^b). 
    \end{align*}
\end{proof}

\subsection{Implications of \cref{thm:first-clock}} \label{app:thm3_imp}

\begin{corollary} \label{cor:invariant} For any $\delta \in (0,1)$, there exists a constant $\epsilon_{\delta}^{\rm inv}$ such that for every $0 < \epsilon \leq \epsilon_{\delta}^{\rm inv}$, it holds that 
\begin{align*}
    G_t \leq L_t \leq 2 \delta \epsilon, \qquad \forall t \geq T_{1}^{\delta}(\epsilon)
\end{align*}
\end{corollary}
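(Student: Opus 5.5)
This corollary just extracts the invariant that was established by induction in Part~3 of the proof of \cref{thm:first-clock}. My plan is to make the ``sufficiently small $\eps$'' requirements of that induction explicit and gather them into a single threshold $\eps_\delta^{\rm inv}$.

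First I would set $\tau:=T_1^\delta(\eps)$ and record that $\tau\ge T_*$. Then \cref{prop:uniform-clock} gives $G_t\le L_t\le 2G_t$ for every $t\ge\tau$. This settles the left inequality and the base case, since $L_\tau\le 2G_\tau\le 2\delta\eps$.

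For the induction step, I would assume $L_t\le 2\delta\eps$ and repeat the Part~3 argument.
\begin{itemize}
\item From $G_t/\eps\le L_t/\eps\le 2\delta$, obtain $\norm{D_t}_2\le M_D G_t/\eps$ with $M_D=R_2+C_{\rm ema}\eta_0$.
\item Obtain the two-sided bound \eqref{eq:t1_temp} on $\langle g_t,D_t\rangle$.
\item Bound the Taylor remainder by $0\le R_t\le B_\delta\eta_t^2G_t^3/\eps^2$.
\end{itemize}
These combine to give $L_t-L_{t+1}\ge \frac{a_\delta}{4}\eta_t G_t^2/\eps\ge0$, hence $L_{t+1}\le L_t\le 2\delta\eps$. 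The only places where smallness of $\eps$ enters are the two conditions
\begin{align*}
C_{\rm ema}R_2\eta_\tau\le \frac{a_\delta}{2},\qquad 2\delta B_\delta\eta_\tau\le \frac{a_\delta}{4}.
\end{align*}
Since $\eta_t\le\eta_\tau$ for $t\ge\tau$, both conditions hold uniformly in $t$ once they hold at $\tau$. The induction also needs $t\ge t_{\rm ema}$ so that \cref{lem:alignment} applies, and this is automatic because $T_*>t_{\rm ema}$.

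The main obstacle is turning ``$\eta_\tau\to0$ as $\eps\downarrow0$'' into an explicit threshold depending only on $\delta$. I would use the lower bound on $T_1^\delta(\eps)$ from \cref{lem:proxy-path-ratio}:
\begin{align*}
G_0e^{-K_0S_\tau}\le G_\tau\le\delta\eps \quad\Longrightarrow\quad S_\tau\ge K_0^{-1}\log\bigl(G_0/(\delta\eps)\bigr).
\end{align*}
Let $N_\delta$ be the least integer with $\eta_{N_\delta}\le\min\{a_\delta/(2C_{\rm ema}R_2),\,a_\delta/(8\delta B_\delta)\}$. Then I would choose $\eps$ small enough that $K_0^{-1}\log(G_0/(\delta\eps))>S_{N_\delta}$, which forces $\tau>N_\delta$. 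More simply, as in the proof of \cref{lem:ratio}, one can instead require $\delta\eps<c_{N_\delta}:=G_0e^{-K_0S_{N_\delta}}$. Taking $\eps_\delta^{\rm inv}$ to be the minimum of this threshold and the threshold of \cref{prop:time_bracket_app} (which guarantees $\tau>T_*$) then completes the argument.
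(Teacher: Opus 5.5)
Your proposal is correct and follows the paper's proof essentially verbatim. The paper likewise extracts the two step-size conditions $C_{\rm ema}R_2\eta_\tau\le a_\delta/2$ and $2\delta B_\delta\eta_\tau\le a_\delta/4$ from Part~3, chooses $N_\delta$ with $\eta_{N_\delta}$ below the resulting threshold, forces $\tau>N_\delta$ via $G_t\ge G_0e^{-K_0S_{N_\delta}}$ for $t\le N_\delta$, and sets $\eps_\delta^{\rm inv}=\min\{1,c_{N_\delta}/(2\delta)\}$. Your extra minimum with the threshold of \cref{prop:time_bracket_app} is harmless but unnecessary: $T_1^\delta(\eps)\ge T_*>t_{\rm ema}$ holds by definition, and that is all the induction needs.
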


\begin{proof} This is already proved in Part 3 of the proof of \cref{thm:first-clock}. Here, we specify the constant $\epsilon_{\delta}^{\rm inv}$. The two requirements appearing in Part 3 are: 
\begin{align*}
    C_{\rm ema} R_2 \eta_{\tau} \leq \frac{a_{\delta}}{2}, \qquad 2 \delta B_{\delta} \eta_{\tau} \leq \frac{a_{\delta}}{4},
\end{align*}
where $\tau := T_1^{\delta}(\epsilon)$, $a_{\delta} := \frac{\gamma_2^2}{1 + 2R_1 \delta}$, $B_{\delta}:= \frac{1}{2} E_{\delta} R_2^2 M_D^2$, and $M_D := R_2 + C_{\rm ema} \eta_0$. Define $\bar{\eta}_{\delta}:= \min \{ \eta_0, \frac{a_{\delta}}{2 C_{\rm ema} R_2}, \frac{a_{\delta}}{8 \delta B_{\delta}} \}$. It is sufficient to require $\eta_{\tau} \leq \bar{\eta}_{\delta}$. Choosing $N_{\delta} := \max \{ T_*, t_{\rm ema}, \lceil (\frac{\eta_0}{\bar{\eta}_{\delta}})^{1/a}\rceil\}$, it holds that $\eta_{N_{\delta}} \leq \bar{\eta}_{\delta}$. 

\cref{lem:proxy-path-ratio} gives $G_t \geq G_0 e^{- K_0 S_t}$ where $K_0 = R_1 C_D$. Consequently, for every $0 \leq t \leq N_{\delta}$, we have 
\begin{align*}
    G_t \geq G_0 e^{-K_0 S_{N_{\delta}}} =: c_{N_{\delta}} > 0. 
\end{align*}
Set $\epsilon_{\delta}^{\rm inv} := \min \{1, \frac{c_{N_{\delta}}}{2 \delta}\}$. Next, we show this choice works. Recall that $\tau = T_{1}^{\delta}(\epsilon) = \inf \{ t \geq T_*: G_t \leq \delta \epsilon \}$. If $0 < \epsilon \leq \epsilon_{\delta}^{\rm inv} $, then it holds that
\begin{align*}
    \delta \epsilon \leq \frac{c_{N_{\delta}}}{2} < c_{N_{\delta}} \leq G_t, \qquad 0 \leq t \leq N_{\delta}. 
\end{align*}
Therefore, the event $G_t \leq \delta \epsilon$ cannot occur before or at $N_{\delta}$, which implies that $\tau > N_{\delta}$. Since the learning rate is decreasing, it holds that $\eta_{\tau} \leq \eta_{N_{\delta}} \leq \bar{\eta}_{\delta}$ as desired. 
\end{proof}

\begin{corollary} \label{cor:residual_bound_app} Suppose that $p, q \in (0,1)$ are two constants such that $p + q = 1$. For any $0 < \delta < \min \{1, \frac{p}{2 R_1 q} \}$, there exists a constant $\hat{\epsilon}_{1}(\delta)$ such that for every $0 < \epsilon \leq \hat{\epsilon}_{1}(\delta)$, we have $\mathfrak r_t^{\rm grad} \leq p$, $\mathfrak r_t^{\rm sgn} \geq q$ for every $t\geq T_1^{\delta}(\epsilon)$. 
\end{corollary}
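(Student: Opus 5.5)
The plan is to combine \cref{cor:invariant} with the pointwise residual bounds of \cref{prop:local-interpolation_app}. By \cref{cor:invariant}, for $0<\epsilon\le\epsilon_\delta^{\rm inv}$ we have $G_t\le L_t\le 2\delta\epsilon$ for all $t\ge T_1^\delta(\epsilon)$. Hence $\lambda_t=G_t/\epsilon\le 2\delta$ uniformly on this infinite time range, not only at the stopping time itself. Since $T_1^\delta(\epsilon)\ge T_*>t_{\rm ema}$, \cref{prop:local-interpolation_app} applies at every such $t$.

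For the gradient residual, \eqref{eq:gradient-residual-bounds} and the monotonicity of $x\mapsto R_1x/(1+R_1x)$ give
\[
\mathfrak r_t^{\rm grad}\le \frac{2R_1\delta}{1+2R_1\delta}+C_{\rm loc}\eta_t .
\]
For the sign residual, \eqref{eq:sign-residual-bounds} gives
\[
\mathfrak r_t^{\rm sgn}\ge \frac{1}{1+2R_1\delta}-C_{\rm loc}\eta_t .
\]
The condition $\delta<p/(2R_1q)$ is exactly $2R_1\delta q<p$. Using $p+q=1$, this is equivalent to $\frac{2R_1\delta}{1+2R_1\delta}<p$, and also to $\frac{1}{1+2R_1\delta}>q$, since $1-\frac{2R_1\delta}{1+2R_1\delta}=\frac{1}{1+2R_1\delta}$. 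Define the slack
\[
s_\delta:=p-\frac{2R_1\delta}{1+2R_1\delta}>0 .
\]
Then $\frac{1}{1+2R_1\delta}-q=s_\delta$ as well. It therefore suffices to ensure $C_{\rm loc}\eta_t\le s_\delta$ for all $t\ge T_1^\delta(\epsilon)$.

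Since $\eta_t$ is nonincreasing, this reduces to $\eta_{T_1^\delta(\epsilon)}\le s_\delta/C_{\rm loc}$. The argument of \cref{cor:invariant} handles this. Choose
\[
N:=\max\Bigl\{T_*,\ \bigl\lceil(\eta_0C_{\rm loc}/s_\delta)^{1/a}\bigr\rceil\Bigr\}.
\]
\cref{lem:proxy-path-ratio} gives $G_t\ge c_N:=G_0e^{-K_0S_N}$ for $t\le N$. Whenever $\delta\epsilon<c_N$, this forces $T_1^\delta(\epsilon)>N$. We then take
\[
\hat\epsilon_1(\delta):=\min\Bigl\{\epsilon_\delta^{\rm inv},\ 1,\ \frac{c_N}{2\delta}\Bigr\}.
\]

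The only real obstacle is the uniformity in $t$, and \cref{cor:invariant} already supplies it through the invariant $L_t\le 2\delta\epsilon$. This invariant is also why the threshold carries the factor $2$, i.e.\ it is $p/(2R_1q)$ rather than $p/(R_1q)$.
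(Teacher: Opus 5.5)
Your proposal is correct and follows essentially the same route as the paper. You use \cref{cor:invariant} to get $\lambda_t\le 2\delta$ for every $t\ge T_1^\delta(\epsilon)$, and then apply \cref{prop:local-interpolation_app} with the slack $p-\frac{2R_1\delta}{1+2R_1\delta}$ absorbing $C_{\rm loc}\eta_t$. The step-size condition is enforced by forcing $T_1^\delta(\epsilon)>N$ through \cref{lem:proxy-path-ratio}. The only cosmetic difference is that the paper folds the invariance and residual step-size requirements into a single $\bar\eta_{\delta,p}$ and a single threshold, whereas you take the minimum of $\epsilon_\delta^{\rm inv}$ with a separate threshold; the two are equivalent.
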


\begin{proof} [Proof of \cref{cor:residual_bound_app}]
Define $A'_{\delta} := \frac{2 R_1 \delta}{1 + 2 R_1 \delta}$. 
We first choose $\delta$ such that $A'_{\delta} < p$ (equivalently, $2 R_1 (1-p) \delta < p$). Therefore, a feasible $\delta$ satisfies
\begin{align*}
    0 < \delta < \min \{ 1, \frac{p}{2 R_1 (1-p)}\} = \min \{ 1, \frac{p}{2 R_1 q}\}. 
\end{align*}
For any fixed $\delta$ in this range, we define a slack variable $\omega_{\delta,p} := p - \frac{2 R_1 \delta}{1 + 2 R_1 \delta} = \frac{p - 2 R_1 q \delta}{1 + 2 R_1 \delta} > 0$. 
Recall the constants used in \cref{cor:invariant}: $a_{\delta} := \frac{\gamma_2^2}{1 + 2 R_1 \delta}$, $M_D := R_2 + C_{\rm ema} \eta_0$, $E_{\delta} := \exp(2 \delta R_2 M_D \eta_0)$, and $B_{\delta}:= \frac{1}{2} E_{\delta} R_2^2 M_D^2$. We set
\begin{align*}
    \bar{\eta}_{\delta,p} := \min \{ \eta_0, \frac{a_{\delta}}{2 C_{\rm ema} R_2}, \frac{a_{\delta}}{8 \delta B_{\delta}}, \frac{\omega_{\delta,p}}{C_{\rm loc}} \},
\end{align*}
where $C_{\rm loc} = C_{\rm ema} \max \{ \frac{1}{\gamma_{\infty}}, \frac{1}{\gamma_2} \}$ is the constant in \cref{prop:local-interpolation_app}.
Note that the first three constants in the definition of $\bar{\eta}_{\delta,p}$ are used to ensure the invariant conclusion of \cref{cor:invariant} holds. Since $\eta_t = \eta_0 (t+1)^{-a}$, we define
\begin{align*}
    N_{\delta,p} := \max \{ T_*, t_{\rm ema}, \lceil (\frac{\eta_0}{\bar{\eta}_{\delta,p}})^{1/a}\rceil \},
\end{align*}
and then it holds that $\eta_{N_{\delta,p}} \leq \bar{\eta}_{\delta,p}$. We further define $c_{N_{\delta},p} := G_0 \exp (-K_0 S_{N_{\delta},p})$ where $K_0 = R_1 C_D$. Next, we show the choice $\hat{\epsilon}_{1}(\delta) := \min \{ 1, \frac{c_{N_{\delta},p}}{2 \delta} \}$ works. Let $\tau := T_1^{\delta}(\epsilon) = \inf \{ t \geq T_*: G_t \leq \delta \epsilon \}$. For $0 < \epsilon \leq \hat{\epsilon}_{1}(\delta)$, by \cref{lem:proxy-path-ratio}, we have for every $t \leq N_{\delta,p}$ that
\begin{align*}
    G_t \geq G_0 e^{- K_0 S_t} \geq G_0 e^{-K_0 S_{N_{\delta
    },p}} = c_{N_{\delta},p} > \frac{c_{N_{\delta},p}}{2} \geq \delta \epsilon.
\end{align*}
Therefore, the event $G_t\leq \delta \epsilon$ cannot occur at or before $N_{\delta,p}$. This implies that $\tau > N_{\delta,p}$. Since the learning rate is decreasing, we have 
\begin{align*}
    \eta_{\tau} \leq \eta_{N_{\delta},p} \leq \bar{\eta}_{\delta,p} \leq \frac{\omega_{\delta,p}}{C_{\rm loc}}.
\end{align*}
Consequently, it holds that $C_{\rm loc} \eta_{\tau}\leq \omega_{\delta,p}$. 
For every $t \geq \tau$, \cref{cor:invariant} gives $\lambda_{t} = \frac{G_t}{\epsilon} \leq \frac{L_t}{\epsilon} \leq 2{\delta}$.
Using \cref{prop:local-interpolation_app} and $\eta_t \leq \eta_{\tau}$, it holds that
\begin{align*}
    \mathfrak r_t^{\rm grad} \le \frac{2 R_1 \delta}{1+ 2 R_1 \delta} + C_{\rm loc}\eta_{\tau} \leq A'_{\delta} + \omega_{\delta,p} = p. 
\end{align*}
Similarly, it also holds that
\begin{align*}
    \mathfrak r_t^{\rm sgn} \geq \frac{1}{1 + 2 R_1 \delta} - C_{\rm loc} \eta_{\tau} = 1 - A'_{\delta} - C_{\rm loc} \eta_{\tau} \geq 1 - A'_{\delta} - \omega_{\delta,p} = q. 
\end{align*}
\end{proof}

\subsection{Proof of \cref{prop:bounded_euc}}

\begin{lemma} \label{lem:helper} 
For a nonnegative integer $T$, define $V_{T,t} := \sum_{s=T}^{t-1} \eta_s$ and $V_{T,T} = 0$. It holds that 
\begin{align*}
    \sum_{t  = T}^{\infty} \frac{\eta_t^2}{ 1 + V_{T,t}} \leq Q_{\eta}, \qquad Q_{\eta} := \eta_0^2 + \frac{1}{a} \max \{ \eta_0^2,\eta_0 (1-a)\}.  
\end{align*}
\end{lemma}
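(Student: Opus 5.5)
We prove \cref{lem:helper} by separating the $t=T$ term and comparing the remaining sum to an integral in the variable $V_{T,t}$. The $t=T$ term equals $\eta_T^2/(1+0)\le\eta_0^2$, which accounts for the first summand of $Q_\eta$. For $t\ge T+1$, we use $\eta_t\le\eta_{t-1}$ and write $\eta_t^2\le \eta_t\cdot\eta_{t-1}$. Since $V_{T,t}-V_{T,t-1}=\eta_{t-1}$, the term $\eta_{t-1}/(1+V_{T,t})$ is bounded by $\int_{V_{T,t-1}}^{V_{T,t}}\frac{dv}{1+v}$. This is because the integrand $1/(1+v)$ is decreasing and the interval has length $\eta_{t-1}$. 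Hence
\[
\sum_{t\ge T+1}\frac{\eta_t^2}{1+V_{T,t}}\le\sum_{t\ge T+1}\eta_t\bigl[\log(1+V_{T,t})-\log(1+V_{T,t-1})\bigr].
\]
This form is not immediately summable, since $\eta_t$ decays only polynomially while $\log(1+V)$ diverges. So a sharper route is needed, and the factor $1/a$ in $Q_\eta$ suggests one based directly on the schedule.

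We therefore use the explicit schedule. Write $k=t+1$ and $m=T+1$, so that $\eta_t=\eta_0k^{-a}$ and $V_{T,t}=\eta_0\sum_{j=m}^{k-1}j^{-a}\ge \eta_0\int_m^k x^{-a}dx$. The key estimate is a lower bound on $1+V_{T,t}$ in terms of $k$. When $a<1$, the mean value theorem gives $\int_m^k x^{-a}dx\ge (k-m)k^{-a}$. Hence $1+V_{T,t}\ge 1+\eta_0(k-m)k^{-a}$, and each term is at most $\eta_0^2k^{-2a}/(1+\eta_0(k-m)k^{-a})$. We then split at $k\le 2m$ versus $k>2m$.

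For $k>2m$, we have $k-m\ge k/2$, so the term is at most $2\eta_0k^{-1-a}$ and the tail sum is $O(\eta_0 m^{-a}/a)$. For $m\le k\le 2m$, we use $k^{-a}\asymp m^{-a}$ and sum $\eta_0^2m^{-2a}/(1+\eta_0 (k-m)(2m)^{-a})$ over $k-m=0,\dots,m$. This is a harmonic-type sum of order $\eta_0 m^{-a}\log(1+\eta_0 m^{1-a})$, which is bounded but not obviously by the constant in $Q_\eta$. Matching the exact constant $\frac1a\max\{\eta_0^2,\eta_0(1-a)\}$ will likely need a cleaner change of variables. For $a=1$, the same scheme uses $V\ge\eta_0\log(k/m)$.

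The cleaner plan, which I would try first, is the substitution $u=V_{T,t}$ together with the identity $\eta_t\approx\eta_0^{1/a}(\cdots)$. Along the schedule, $\eta$ as a function of cumulative step $V$ satisfies $d\eta/dV=-\frac{a}{\eta_0^{1/a}}\eta^{1+1/a}$. I would then bound $\sum \eta_t^2/(1+V_{T,t})\le \int_0^\infty \eta(v)\,\frac{dv}{1+v}$ plus the first term, and integrate by parts to produce the $1/a$ factor. The main obstacle is exactly this step: obtaining a bound uniform in $T$, with the precise constant $\frac1a\max\{\eta_0^2,\eta_0(1-a)\}$, since the naive splitting loses a logarithm near $k\approx m$. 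If the exact constant proves elusive, I would settle for a $T$-uniform constant depending on $(\eta_0,a)$, which is all that later uses (summability in \cref{prop:bounded_euc}) require.
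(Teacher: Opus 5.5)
Your proposal does not prove the lemma as stated. The splitting route you actually carry out is valid, but it only gives some $T$-uniform constant depending on $(\eta_0,a)$, roughly of order $\eta_0^2+\eta_0\log(1+\eta_0)+\eta_0/a$, and this constant is not in general below $Q_\eta$. For instance, with $T=0$ your far-tail estimate $\sum_{k>2m}2\eta_0k^{-1-a}\le 2\eta_0(2m)^{-a}/a$ alone already equals $2^{1-a}\eta_0/a$. By contrast, $Q_\eta-\eta_0^2=\eta_0(1-a)/a$ whenever $\eta_0\le 1-a$.

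The ``cleaner plan'' that is meant to deliver the exact constant is never executed, and the sketch has problems:
\begin{itemize}
\item Along the schedule one has $d\eta/dV=-a\,\eta_0^{-1/a}\eta^{1/a}$, not $\eta^{1+1/a}$.
\item The comparison of the sum with $\int_0^\infty\eta(v)\,dv/(1+v)$ is asserted without justification.
\item Integration by parts is not where the factor $1/a$ comes from.
\end{itemize}
Your treatment of $a=1$ is also left vague, although that case is immediate: drop the denominator and use $\sum_t\eta_t^2\le\eta_0^2\pi^2/6\le 2\eta_0^2=Q_\eta$. You are right that the downstream constants $E_0$ and $K_\theta$ only need some $T$-uniform bound, but that is a weaker statement than the lemma.

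The missing steps for $a<1$ are as follows.
\begin{enumerate}
\item Set $f(x)=\eta_0(x+1)^{-a}$ and $F_T(x)=\int_T^x f$. Then $V_{T,t}\ge F_T(t)$, which you essentially noted.
\item The function $h_T(x):=f(x)^2/(1+F_T(x))$ is decreasing, because $f>0$, $f'<0$ and $F_T'=f$. Hence $\sum_{t\ge T}\eta_t^2/(1+V_{T,t})\le h_T(T)+\int_T^\infty h_T(x)\,dx$, with $h_T(T)=\eta_T^2\le\eta_0^2$.
\item Substitute $y=(x+1)^{1-a}-(T+1)^{1-a}$; your $v$ is just $cy$ with $c=\eta_0/(1-a)$. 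With $p=a/(1-a)$ and $A_T=(T+1)^{1-a}$ this gives
\[
\int_T^\infty h_T(x)\,dx=\frac{\eta_0^2}{1-a}\int_0^\infty\frac{(A_T+y)^{-p}}{1+cy}\,dy .
\]
\item Since $A_T\ge1$, we have $(A_T+y)^{-p}\le(1+y)^{-p}$. This removes all $T$-dependence in one stroke, so no splitting at $k\approx 2m$ and no logarithmic loss. Also $1+cy\ge\min\{1,c\}(1+y)$.
\item Therefore the integral is at most
\[
\frac{\eta_0^2}{(1-a)\min\{1,c\}}\int_0^\infty(1+y)^{-p-1}\,dy=\frac{\eta_0^2}{a\min\{1,c\}}=\frac1a\max\{\eta_0^2,\eta_0(1-a)\}.
\]
The factor $1/a$ is $1/p=(1-a)/a$ multiplied by the prefactor $1/(1-a)$.
\end{enumerate}
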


\begin{proof} For $a =1$ (i.e. $\eta_t = \frac{\eta_0}{t + 1}$), we have
\begin{align*}
    \sum_{t=T}^{\infty} \frac{\eta_t^2}{1 + V_{T,t}} \leq \sum_{t=T}^{\infty} \eta_t^2 = \eta_0^2 \sum_{t=T}^{\infty} \frac{1}{(t+1)^2} \leq \eta_0^2 \sum_{k=1}^{\infty} \frac{1}{k^2} = \frac{\pi^2}{6} \eta_0^2 \leq 2 \eta_0^2 = Q_{\eta}. 
\end{align*}
Next, we consider the case where $0 < a < 1$. Fix any $T \geq 0$ and define the functions $f(x):=\eta_0 (x+1)^{-a}, x \geq T$ and $F_T(x) := \int_{T}^x f(r) d r$. Note that $f(t) = \eta_t$. Because $f$ is positive and decreasing, it holds that $f(s) \geq \int_{s}^{s+1} f(x) dx$ for every $s \geq T$. Therefore, we have 
\begin{align*}
    V_{T,t} = \sum_{s = T}^{t - 1} f(s) \geq \sum_{s=T}^{t-1} \int_{s}^{s+1} f(x) dx = \int_{T}^t f(x) d x = F_T(t). 
\end{align*}
From this, it holds that $\frac{\eta_t^2}{1 + V_{T,t}} \leq \frac{f(t)^2}{1 + F_T(t)}$. 

We further define $h_T(x) := \frac{f(x)^2}{1 + F{T}(x)}$. Differentiating this function gives
\begin{align*}
    h'_T(x) = \frac{2 f(x) f'(x) (1+ F_{T}(x)) - f(x)^3}{(1 + F_T(x))^2},
\end{align*}
where we used $F'_T(x) = f(x)$. Given $f(x) > 0$ and  $f'(x) < 0$, it holds that $h'_T(x) < 0$. Therefore, the function $h_T(x)$ is decreasing and consequently we have 
\begin{align*}
    \sum_{t=T}^{\infty} h_T(t) \leq h_T(T) + \int_{T}^{\infty} h_T(x) dx. 
\end{align*}
Combining this with $\frac{\eta_t^2}{1 + V_{T,t}} \leq \frac{f(t)^2}{1 + F_T(t)}$, we have 
\begin{align}
    \sum_{t=T}^{\infty} \frac{\eta_t^2}{1 + V_{T,t}} \leq h_T(T) + \int_{T}^{\infty} \frac{f(x)^2}{1 + F_T(x)} dx. \label{eq:helper_t1}
\end{align}
Given $h_T(T) = f(T)^2 = \eta_T^2 \leq \eta_0^2$, it remains to bound the second term $I_T := \int_{T}^{\infty} \frac{f(x)^2}{1 + F_T(x)} dx$. Given $F_T(x) = \eta_0 \int_{T}^x (r+1)^{-a} dr = \frac{\eta_)}{1-a} [(x+1)^{1-a} - (T+1)^{1-a}]$, it holds that
\begin{align*}
    I_T = \int_{T}^{\infty} \frac{\eta_0^2 (x+1)^{-2a}}{1 + \frac{\eta_0}{1 - a} [(x+1)^{1-a} - (T+1)^{1-a}]} dx 
\end{align*}
Define $A_T := (T+1)^{1 - a}$ and $y := (x+1)^{1-a} - A_T$. Then, it holds that $y \geq 0$, $(x+1)^{1-a} = A_T + y$, and $dx = \frac{(x+1)^a}{1-a} dy$. Consequently, we have
\begin{align*}
    (x+1)^{-2a} dx = \frac{1}{1-a} (x+1)^{-a} dy = \frac{1}{1-a} (A_T + y)^{-a/(1-a)} dy. 
\end{align*}
Setting $p: = \frac{a}{1-a}$ and $c:= \frac{\eta_0}{1-a}$ and substituting these results into the expression of $I_T$, we obtain 
\begin{align*}
    I_T = \frac{\eta_0^2}{1 - a} \int_{0}^{\infty} \frac{(A_T + y)^{-p}}{1 + cy} dy. 
\end{align*}
Because $A_T = (T + 1)^{1-a} \geq 1$, we have $A_T + y \geq 1 + y$. Since $p > 0$, we further have $(A_T + y)^{-p} \leq (1 + y)^{-p}$. Therefore, it holds that
\begin{align*}
    I_{T} \leq \frac{\eta_0^2}{1 - a} \int_{0}^{\infty} \frac{(1 + y)^{-p}}{1 + cy} dy. 
\end{align*}

Define $m:= \min \{ 1, c\} = \min \{ 1, \frac{\eta_0}{1-a} \}$. It holds that $1 + cy \geq m (1 + y)$ as $y \geq 0$. Therefore, we have
\begin{align}
    I_{T} \leq \frac{\eta_0^2}{(1-a) m} \int_{0}^{\infty} (1 + y)^{-(p+1)} dy = \frac{\eta_0^2}{(1 - a)mp} = \frac{\eta_0^2}{a \min \{ 1, \frac{\eta_0}{1 - a}} \}. \label{eq:helper_t2}
\end{align}

Combining \eqref{eq:helper_t1} and \eqref{eq:helper_t2} with the bound $h_T(T) \leq \eta_0^2$, we obtain $\sum_{t=T}^{\infty} \frac{\eta_t^2}{1 + V_{T,t}} \leq \eta_0^2 + \frac{\eta_0^2}{a \min \{ 1, \frac{\eta_0}{1 - a}} \}$. Because this holds for every $T \geq 0$, we conclude that
\begin{align*}
    \sup_{T \geq 0} \sum_{t=T}^{\infty} \frac{\eta_t^2}{1 + \sum_{s=T}^{t-1} \eta_s} \leq \eta_0^2 + \frac{\eta_0^2}{a \min \{ 1, \frac{\eta_0}{1 - a}\} } = \eta_0^2 + \frac{1}{a} \max \{ \eta_0^2, \eta_0 (1-a)\} = Q_{\eta}. 
\end{align*}
Therefore, the constant $Q_{\eta}$ works for both cases.  
\end{proof}

Before we present the proof of  \cref{prop:bounded_euc}, we first present a short sketch highlighting the main ideas. Let $\tau := T_{1}^{\delta}(\epsilon)$, $V_t := V_t^{\delta} (\epsilon) = \sum_{s = \tau}^{t-1} \eta_s$, and $H_t := H_t^{\delta}(\epsilon) = \sum_{s=\tau}^{t-1}\theta_s$. 

\paragraph{Proof Sketch of \cref{prop:bounded_euc}} 
The proof of \cref{thm:first-clock} gives
\[
c_\delta\frac{\eta_s}{\epsilon}
\leq
\frac{1}{L_{s+1}}-\frac{1}{L_s}
\leq
2A_\delta\frac{\eta_s}{\epsilon}, \qquad s \geq \tau. 
\]
Summing from \(s=\tau\) to \(t-1\) yields
$ \frac{\epsilon}{L_t}
= \frac{\epsilon}{L_\tau} +
\Theta_\delta(V_t)$. Next, we show $\frac{\epsilon}{L_\tau}\asymp_\delta 1$. This can be seen from $L_\tau=\delta\epsilon(1+o(1))$. The proof below provides a more explicit bound.  
Consequently, we have $\frac{\epsilon}{L_t}
\asymp_\delta
1+V_t$, and $L_t
\asymp_\delta
\frac{\epsilon}{1+V_t}$.
Since \cref{prop:uniform-clock} gives
$\frac{1}{2}L_t\leq G_t\leq L_t$,
we also obtain $G_t
\asymp_\delta
\frac{\epsilon}{1+V_t}$.
 
Next, Lemma~12 gives $\gamma_2G_s
\leq
\lVert g_s\rVert_2
\leq
R_2G_s$.
Consequently, we have 
$\theta_s = \frac{\eta_s\lVert g_s\rVert_2}{\epsilon}
\asymp_\delta
\frac{\eta_s}{1+V_s}$.
Because \(V_{s+1}=V_s+\eta_s\), setting
$
x_s
:=
\frac{\eta_s}{1+V_s}
\leq
\eta_0
$
gives
\[
\log(1+V_{s+1})-\log(1+V_s)
=
\log(1+x_s)
\asymp
x_s.
\]
Summing this telescoping relation proves
$H_t
\asymp_\delta
\log(1+V_t)$.

Recall that $e_t := \frac{\epsilon D_t-g_t}{\|g_t\|_2}$ and $\theta_t := \frac{\eta_t\|g_t\|_2}{\epsilon}$.
Therefore, it holds that $\theta_t\|e_t\|_2
= \frac{\eta_t}{\epsilon}\|\epsilon D_t-g_t\|_2$.
We decompose
\[
\epsilon D_t-g_t
=
\epsilon\bigl(D_t-\sigma_\epsilon(g_t)\bigr)
+
\bigl(\epsilon\sigma_\epsilon(g_t)-g_t\bigr).
\]
Based on \cref{lem:alignment} and \cref{lem:proxy}, we can show 
\begin{align}
    \theta_t\|e_t\|_2
\leq
C_{\mathrm{ema}}\eta_t^2\frac{G_t}{\epsilon}
+
R_1R_2\eta_t
\left(\frac{G_t}{\epsilon}\right)^2. \label{eq:theta_et}
\end{align} 
The proof of \cref{thm:first-clock} establishes 
\begin{align*}
    c_{\delta} \frac{\eta_t}{\epsilon} L_t^2 \leq L_t - L_{t+1}, \qquad t \geq \tau. 
\end{align*}
Based on this, we prove
\begin{align*}
    \frac{G_t}{\epsilon}
\leq
\frac{C_L}{1+V_t},\quad \text{and} \quad \eta_t
\left(\frac{G_t}{\epsilon}\right)^2
\leq
\frac{L_t-L_{t+1}}{c_*\epsilon}.
\end{align*}
The first inequality is used in
\cref{lem:helper} to yield  
$\sum_{t=\tau}^{\infty}
\eta_t^2\frac{G_t}{\epsilon}
\leq C_L Q_\eta$. 
For the second inequality, we obtain $\sum_{t=\tau}^{\infty}
\eta_t
\left(\frac{G_t}{\epsilon}\right)^2
\leq
\frac{L_\tau}{c_*\epsilon}
\leq
\frac{2}{c_*}$ after telescoping and using \(L_\tau\leq 2\delta\epsilon\leq 2\epsilon\).
Substituting these two bounds into~\eqref{eq:theta_et}
gives
\[
\sum_{t=\tau}^{\infty}
\theta_t\|e_t\|_2
\leq
C_{\mathrm{ema}}C_LQ_\eta
+
\frac{2R_1R_2}{c_*}
=:E_0.
\]

\begin{proof} [Proof of \cref{prop:bounded_euc}] Let $\tau := T_{1}^{\delta}(\epsilon)$, $V_t := V_t^{\delta} (\epsilon) = \sum_{s = \tau}^{t-1} \eta_s$, and $H_t := H_t^{\delta}(\epsilon) = \sum_{s=\tau}^{t-1}\theta_s$. From the argument following \eqref{eq:ccc}, we have for $t \geq \tau$ that
\begin{align*}
    c_{\delta} \frac{\eta_t}{\epsilon} \leq \frac{1}{L_{t+1}} - \frac{1}{L_t} \leq 2 A_{\delta} \frac{\eta_t}{\epsilon},
\end{align*}
where $c_{\delta} = \frac{\gamma_2^2}{16(1 + 2 R_1 \delta)}$ and $A_{\delta} = R_2^2 + \frac{a_{\delta}}{2} = R_2^2 + \frac{\gamma_2^2}{2(1 + 2 R_1 \delta)}$. Summing from $s = \tau$ to $t - 1$ gives 
\begin{align}
    c_{\delta} \frac{V_t}{\epsilon} \leq \frac{1}{L_t} - \frac{1}{L_{\tau}} \leq 2 A_{\delta} \frac{V_t}{\epsilon}. \label{eq:vt_1}
\end{align}
Let $K_0:= R_1 C_D$. Define $\overline{\eta}^{\mathrm{entry}}
:=
\min\left\{
\eta_0,\,
\frac{\log 2}{K_0}
\right\}$ and $N^{\mathrm{entry}}
:=
\max\left\{
T_*,
\left\lceil
\left(
\frac{\eta_0}
{\overline{\eta}^{\mathrm{entry}}}
\right)^{1/a}
\right\rceil
\right\}$. 
Then, it holds that $\eta_{N^{\mathrm{entry}}}
=
\frac{\eta_0}{\bigl(N^{\mathrm{entry}}+1\bigr)^a}
\leq
\overline{\eta}^{\mathrm{entry}}
\leq
\frac{\log 2}{K_0}$ or equivalently $K_0\eta_{N^{\mathrm{entry}}}\leq \log 2$. By \cref{lem:proxy-path-ratio}, we have $G_t
\geq
G_0 e^{-K_0 S_t}
\geq
G_0 e^{-K_0 S_{N^{\mathrm{entry}}}}
=:c_{\mathrm{entry}}
> 0 $ for every $t\leq N^{\mathrm{entry}}$. We now define $\epsilon_\delta^{\mathrm{entry}}
:=
\min\left\{
1,\,
\frac{c_{\mathrm{entry}}}{2\delta}
\right\}
=
\min\left\{
1,\,
\frac{G_0 e^{-K_0 S_{N^{\mathrm{entry}}}}}{2\delta}
\right\}$.
Suppose that $0<\epsilon\leq\epsilon_\delta^{\mathrm{entry}}$. Then, we have 
\[
\delta\epsilon
\leq
\frac{c_{\mathrm{entry}}}{2}
<
c_{\mathrm{entry}}
\leq G_t,
\qquad
0\leq t\leq N^{\mathrm{entry}}.
\]
Consequently, by the definition of $\tau$, we must have $\tau > N^{\rm entry}$, from which we obtain $\eta_{\tau-1}
\leq
\eta_{N^{\mathrm{entry}}}
\leq
\frac{\log 2}{K_0}$. Consequently, it holds that $e^{-K_0\eta_{\tau-1}}
\geq
e^{-\log 2}
=
\frac{1}{2}$. By the minimality of $\tau$, we have $G_{\tau-1}>\delta\epsilon$ and $G_\tau\leq\delta\epsilon$. Applying \cref{lem:proxy-path-ratio} with \(r=\tau-1\) and \(t=\tau\) gives
\[
G_\tau
\geq
e^{-K_0\eta_{\tau-1}}G_{\tau-1} \geq e^{-K_0\eta_{\tau-1}} \delta \epsilon \geq \frac{\delta \epsilon}{2}.
\]  
Together with the defining upper bound, we obtain $\frac{\delta \epsilon}{2} < G_{\tau} \leq \delta \epsilon$. 
Since \(\tau\geq T_*\), \cref{prop:uniform-clock} gives
$\frac{1}{2}L_\tau
\leq
G_\tau
\leq
L_\tau$. Combining them together, we further obtain that $\frac{\delta \epsilon}{2} < L_{\tau} \leq 2 \delta \epsilon$. Consequently, it holds that $\frac{\epsilon}{L_{\tau}} \asymp_{\delta} 1$. 
Therefore, by multiplying \eqref{eq:vt_1} by $\epsilon$, we obtain 
\begin{align*}
    \frac{\epsilon}{L_t} = \frac{\epsilon}{L_{\tau}} + \Theta_{\delta}(V_t) \asymp_{\delta} 1 + V_t. 
\end{align*}
From this, we conclude that $L_t \asymp_{\delta} \frac{\epsilon}{1 + V_t}$. By \cref{prop:uniform-clock}, we have $\frac{1}{2} L_t \leq G_t \leq L_t$ when $t \geq T_*$. This immediately gives $G_t \asymp_{\delta} \frac{\epsilon}{1 + V_t}$ when $t \geq \tau$ (note that $\tau \geq T_*$).  

By \cref{lem:proxy}, it holds that $\gamma_2 G_s \leq \norm{g_s}_2 \leq R_2 G_s$. Therefore, we have 
\begin{align*}
    \theta_s = \frac{\eta_s \norm{g_s}_2}{\epsilon} \asymp \eta_s \frac{G_s}{\epsilon} \asymp_{\delta} \frac{\eta_s}{1 + V_s}. 
\end{align*}
This implies that $H_t \asymp_{\delta} \sum_{s=\tau}^{t-1} \frac{\eta_s}{1 + V_s}$. It remains to compare $\sum_{s=\tau}^{t-1} \frac{\eta_s}{1 + V_s}$ with $\log (1 + V_t)$. Since $V_{s+1} = V_s + \eta_s$, we have
\begin{align}
    \log(1 + V_{s+1}) - \log(1 + V_s) = \log(1 + \frac{\eta_s}{1 + V_s}) = \log (1 + x_s), \label{eq:xs}
\end{align}
where we have let $x_s := \frac{\eta_s}{1 + V_s}$. Note that $0 \leq x_s \leq \eta_s \leq \eta_0$. For $0 \leq u \leq x$, it holds that $\frac{1}{1+x} \leq \frac{1}{1 + u} \leq 1$. Integrating from $0$ to $x$ gives
$ \frac{x}{1+x} \leq \log (1+x) \leq x$. Applying this with $x = x_s$ and using $x_s \leq \eta_0$, we obtain
\begin{align*}
    \frac{x_s}{1 + \eta_0} \leq \frac{x_s}{1+x_s} \leq \log (1+x_s) \leq x_s,
\end{align*}
or equivalently, 
\begin{align*}
    \log(1+x_s) \leq x_s \leq (1 + \eta_0) \log (1+x_s).  
\end{align*}
Substituting $x_s = \frac{\eta_s}{1 + V_s}$ and using \eqref{eq:xs}, it holds that
\begin{align*}
    \log(1 + V_{s+1}) - \log(1+V_s) \leq \frac{\eta_s}{1 + V_s} \leq (1+\eta_0) [\log(1 + V_{s+1}) - \log(1+ V_s)],
\end{align*}
from which we conclude that $\frac{\eta_s}{1 + V_s} \asymp \log(1 + V_{s+1}) - \log(1 + V_s)$. Given $\sum_{s=\tau}^{t-1} [\log(1+V_{s+1})- \log(1 + V_s)] = \log(1 + V_t)$, it holds that $\sum_{s=\tau}^{t-1} \frac{\eta_s}{1 + V_s} \asymp \log (1 + V_t)$. Therefore, we conclude that $H_t \asymp_{\delta} \log (1 + V_t)$. 

Next, we show $\sum_{t = \tau}^{\infty} \theta_t \norm{e_t}2 \leq E_0$ for some constant $E_0$. Recall some definitions: $\theta_t = \frac{\eta_t \norm{g_t}_2}{\epsilon}$ and $e_t = \frac{\epsilon D_t - g_t}{\norm{g_t}_2}$. We first provide an upper bound on $\theta_t \norm{e_t}_2$. By noting $\theta_t \norm{e_t}_2 = \frac{\eta_t}{\epsilon} \norm{\epsilon D_t - g_t}_2$, it holds that 
\begin{align}
    \theta_t \norm{e_t}_2 \leq \frac{\eta_t}{\epsilon}[\epsilon \norm{D_t - \sigma_{\epsilon}(g_t)}_2 + \norm{\epsilon \sigma_{\epsilon}(g_t) - g_t}_2], \label{eq:trans_1}
\end{align}
where we used the decomposition $\epsilon D_t - g_t = \epsilon (D_t - \sigma_{\epsilon}(g_t)) + (\epsilon \sigma_{\epsilon}(g_t) - g_t)$. By \cref{lem:alignment}, it holds that $\epsilon \norm{D_t - \sigma_{\epsilon}(g_t)}_2 \leq C_{\rm ema} \eta_t G_t, t \geq t_{\rm ema}$. For the second term, we have for each coordinate that $\epsilon \sigma_{\epsilon}(g_t)[j] - g_{t}[j] = - g_t[j] \frac{|g_t[j]|}{|g_{t}[j]| + \epsilon}$. Since $\frac{|g_t[j]|}{|g_{t}[j]| + \epsilon} \leq \frac{\norm{g_t}_{\infty}}{\epsilon}$, we obtain
\begin{align*}
    \norm{\epsilon \sigma_{\epsilon}(g_t) - g_t}_2 \leq \frac{\norm{g_t}_{\infty} \norm{g_t}_{2}}{\epsilon} \leq R_1 R_2 \frac{G_t^2}{\epsilon}.
\end{align*}
Substituting this into \eqref{eq:trans_1}, we obtain 
\begin{align}
    \theta_t \norm{e_t}_2 \leq C_{\rm ema} \eta_t^2 \frac{G_t}{\epsilon}  + R_1 R_2 \eta_t (\frac{G_t}{\epsilon})^2. \label{eq:helper_cc}
\end{align}
Next, we bound $\sum_{t = \tau}^{\infty} \eta_t^2 \frac{G_t}{\epsilon}$ and $\sum_{t = \tau}^{\infty} \eta_t (\frac{G_t}{\epsilon})^2$. To do so, we first control $\frac{G_t}{\epsilon}$. Part 3 of \cref{thm:first-clock} establishes 
\begin{align}
    c_{\delta} \frac{\eta_t}{\epsilon} L_t^2 \leq L_t - L_{t+1}, \qquad t \geq \tau, \label{eq:final_hh}
\end{align}
where $c_{\delta} = \frac{\gamma_2^2}{16(1+2 R_1 \delta)}$. Because $0 < \delta < 1$, it holds that $c_{\delta} \geq c_* := \frac{\gamma_2^2}{16 (1 + 2 R_1)} > 0$. Since $L_{t+1} \leq L_t$, we have
\begin{align}
    \frac{1}{L_{t+1}} - \frac{1}{L_t} = \frac{L_t - L_{t+1}}{L_t L_{t+1}} \geq \frac{L_t - L_{t+1}}{L_t^2} \geq c_* \frac{\eta_t}{\epsilon}. \label{eq:helper_tt}
\end{align}
Summing from $\tau$ to $t - 1$ leads to $\frac{1}{L_t} \geq \frac{1}{L_{\tau}} + c_* \frac{V_t}{\epsilon}$. Therefore, we have $\frac{L_t}{\epsilon} \leq \frac{1}{\epsilon / L_{\tau} + c_* V_t}$.
Since $L_{\tau} \leq 2 \delta \epsilon \leq 2 \eps$, it holds that $\frac{\epsilon}{L_{\tau}} \geq \frac{1}{2}$. Therefore, we have 
\begin{align}
    \frac{G_t}{\epsilon} \leq \frac{L_t}{\epsilon} \leq \frac{1}{1/2 + c_* V_t} \leq \frac{C_L}{ 1 + V_t}, \label{eq:final_helper}
\end{align}
where $C_L := \frac{1}{\min \{ 1/2, c_*\}}$. Next, we apply \cref{lem:helper} to obtain
\begin{align}
    \sum_{t = \tau}^{\infty} \eta_t^2 \frac{G_t}{\epsilon} \leq C_L \sum_{t = \tau}^{\infty} \frac{\eta_t^2}{1 + V_t} \leq C_L Q_{\eta} = C_L (\eta_0^2 + \frac{1}{a} \max \{ \eta_0^2, \eta_0 (1-a)\}). \label{eq:h1}
\end{align}
From \eqref{eq:final_hh} and $G_t \leq L_t$, it holds that
\begin{align*}
    L_t - L_{t+1} \geq c_* \frac{\eta_t}{\epsilon} L_t^2 \geq c_* \frac{\eta_t}{\epsilon} G_t^2. 
\end{align*}
Therefore, we have $\eta_t (\frac{G_t}{\epsilon})^2 \leq \frac{L_t - L_{t+1}}{c_* \epsilon}$. Summing this leads to 
\begin{align}
    \sum_{t = \tau}^{\infty} \eta_t (\frac{G_t}{\epsilon})^2 \leq \frac{1}{c_* \epsilon} \sum_{t = \tau}^{\infty} (L_t - L_{t+1}) \leq \frac{L_{\tau}}{c_* \epsilon} \leq \frac{2 \delta}{c_*} \leq \frac{2}{c_*}. \label{eq:h2}
\end{align}
Summing \eqref{eq:helper_cc} and applying \eqref{eq:h1} and \eqref{eq:h2}, we obtain 
\begin{align*}
    \sum_{t=\tau}^{\infty} \theta_t \norm{e_t}_2 \leq C_{\rm ema} \sum_{t=\tau}^{\infty} \eta_t^2 \frac{G_t}{\epsilon} + R_1 R_2 \sum_{t=\tau}^{\infty} \eta_t (\frac{G_t}{\epsilon})^2 \leq C_{\rm ema} C_L Q_{\eta} + \frac{2 R_1 R_2}{c_*}.
\end{align*}
Therefore, we can take 
\begin{align*}
    E_0 = C_{\rm ema} C_L Q_{\eta} + \frac{2 R_1 R_2}{c_*}, \qquad c_* = \frac{\gamma_2^2}{16(1 + 2 R_1)}. 
\end{align*}
Finally, we summarize the conditions on $\epsilon$. For \eqref{eq:vt_1} to hold, we require three conditions: $C_{\mathrm{ema}}R_2\eta_\tau
\leq
\frac{a_\delta}{2}$, $2\delta B_\delta\eta_\tau
\leq
\frac{a_\delta}{4}$, and $2 \delta A_{\delta} \eta_{\tau} \leq \frac{1}{2}$. The constants $A_{\delta}$ and $B_{\delta}$ are given in \cref{thm:first-clock}. The first two conditions ensure \eqref{eq:ccc} holds. They can be satisfied by requiring $0 < \epsilon \leq \epsilon_{\delta}^{\rm inv}$, where $\epsilon_{\delta}^{\rm inv}$ is given in \cref{cor:invariant}. The last condition is for ensuring $L_{t+1} \geq \frac{1}{2} L_t$. Define $\overline{\eta}_\delta^{\mathrm{rec}}
:=
\min\left\{
\eta_0,\,
\frac{1}{4\delta A_\delta}
\right\}$, 
and
$N_\delta^{\mathrm{rec}}
:=
\max\left\{
T_*,
t_{\mathrm{ema}},
\left\lceil
\left(
\frac{\eta_0}{\overline{\eta}_\delta^{\mathrm{rec}}}
\right)^{1/a}
\right\rceil
\right\}$.
Choose $\epsilon_\delta^{\mathrm{rec}}
:=
\min\left\{
1,\,
\frac{G_0 e^{-K_0 S_{N_\delta^{\mathrm{rec}}}}}{2\delta}
\right\}$. If \(0<\epsilon\leq\epsilon_\delta^{\mathrm{rec}}\), then
we have 
$G_t
\geq
G_0 e^{-K_0 S_{N_\delta^{\mathrm{rec}}}}
>
\delta\epsilon$ when $0\leq t\leq N_\delta^{\mathrm{rec}}$. Therefore, it holds that
$\tau>N_\delta^{\mathrm{rec}}$ and 
$\eta_\tau
\leq
\eta_{N_\delta^{\mathrm{rec}}}
\leq
\overline{\eta}_\delta^{\mathrm{rec}}
\leq
\frac{1}{4\delta A_\delta}$. Consequently, \eqref{eq:upper_lower} gives
$\frac{L_t-L_{t+1}}{L_t}
\leq
2\delta A_\delta\eta_\tau
\leq
\frac{1}{2}$,
which implies
$L_{t+1}\geq\frac{1}{2}L_t$. In summary, we can let $\widetilde{\epsilon}_1(\delta)
:=
\min\left\{
\epsilon_\delta^{\mathrm{inv}},
\epsilon_\delta^{\mathrm{rec}},
\epsilon_\delta^{\mathrm{entry}}
\right\}.$
\end{proof}

\section{Adam's $\ell_{\infty}$-margin optimality}
\label{app:margin}

Throughout this section, we assume data separability, $0 \leq \beta_1 \leq \beta_2 < 1$, and $\eps \in (0,1]$. Recall that $R_1  :=\max_{i \in [n]} \norm{x_i}_1$ and $R_2  :=\max_{i \in [n]} \norm{x_i}_2$. The step size schedule is $\eta_t = \eta_0 (t+1)^{-a}$ with $\eta_0 > 0$ and $a \in (1/3,1]$. We first present some useful lemmas. They include Hoffman's error bound in \cref{lem:hoff} and the margin difference bound in \cref{lem:margin_helper}. The margin gaps at $T_1^{-}(\eps;b)$, $T_1^{\delta}(\eps)$,
and $T_1^{+}(\eps;b)$ are proved in \cref{thm:margin_app_T1-}, \cref{thm:margin_app_T1}, and \cref{cor:margin_app_T1+}, respectively. 

\begin{lemma} \label{lem:vec_inverse} For any two nonzero vectors $x$ and $u$, it holds that
\begin{align*}
    \left|\frac{1}{\norm{x}_2} - \frac{1}{\norm{u}_2} \right | \leq \frac{\norm{x - u}_2}{\norm{x}_2 \norm{u}_2}. 
\end{align*}
\end{lemma}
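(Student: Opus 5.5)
The plan is to reduce the inequality to a single algebraic identity followed by the triangle inequality; no earlier result from the paper is needed. Since $x$ and $u$ are nonzero, both $\norm{x}_2$ and $\norm{u}_2$ are strictly positive. The two reciprocals can therefore be placed over the common denominator $\norm{x}_2\norm{u}_2$, which gives
\begin{align*}
\frac{1}{\norm{x}_2}-\frac{1}{\norm{u}_2}=\frac{\norm{u}_2-\norm{x}_2}{\norm{x}_2\norm{u}_2}.
\end{align*}

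Next, take absolute values of both sides. The denominator is positive, so it is unaffected, and the inequality reduces to bounding the numerator $\bigl|\norm{u}_2-\norm{x}_2\bigr|$.

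The reverse triangle inequality for the Euclidean norm gives $\bigl|\norm{u}_2-\norm{x}_2\bigr|\le\norm{u-x}_2=\norm{x-u}_2$. Dividing this by the positive quantity $\norm{x}_2\norm{u}_2$ yields exactly the claimed bound.

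I do not expect any step to be a real obstacle. The only point that needs care is keeping track of the nonzero hypothesis, which is exactly what makes both reciprocals well defined and the common denominator strictly positive.
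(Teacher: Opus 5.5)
Your proof is correct and follows essentially the same route as the paper: combine the reciprocals over the common denominator $\norm{x}_2\norm{u}_2$, take absolute values, and apply the reverse triangle inequality $\bigl|\norm{u}_2-\norm{x}_2\bigr|\le\norm{u-x}_2$ to the numerator.
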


\begin{proof}
    It holds that
    \begin{align*}
        \left |  \frac{1}{\norm{x}_2} - \frac{1}{\norm{u}_2} \right | = \left | \frac{\norm{u}_2 - \norm{x}_2}{\norm{x}_2 \norm{u}_2} \right | = \frac{|\norm{u}_2 - \norm{x}_2|}{\norm{x}_2 \norm{u}_2}
    \end{align*}
    The conclusion follows from the reverse triangle inequality $|\norm{u}_2 - \norm{x}_2| \leq \norm{u - x}_2$. 
\end{proof}

\begin{lemma}[\citep{hoffman1952approximate,pena2021new}] \label{lem:hoff} Define
\begin{align*}
    A_{\mathrm H}
:=
\begin{pmatrix}
I_d \\
-I_d \\
-Z
\end{pmatrix}
\in \mathbb{R}^{(2d+n)\times d},
\qquad
c_{\mathrm H}
:=
\begin{pmatrix}
\mathbf{1}_d \\
\mathbf{1}_d \\
-\gamma_\infty \mathbf{1}_n
\end{pmatrix} \in \mathbb{R}^{2d+n},
\end{align*}
where the $i$-th row of $Z$ is $z_i^T$. 
There exists a finite constant $H_{2,\infty} (A_H)$ (depending only on the matrix $A_H$) such that 
\begin{align}
    \operatorname{dist}_2\!\left(x,\mathcal{M}_\infty\right)
\le
H_{2,\infty}\!\left(A_{\mathrm H}\right)
\left\|
\left(A_{\mathrm H}x-c_{\mathrm H}\right)_+
\right\|_\infty, \label{eq:hoff}
\end{align}
where $\operatorname{dist}_2\!\left(x,\mathcal{M}_\infty\right) := \min_{u \in \cM_{\infty}} \norm{x - u}_2$ and $[y]_+ := (\max \{y_k, 0 \})_k$. 
\end{lemma}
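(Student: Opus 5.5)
This is a direct application of Hoffman's classical error bound for polyhedra, and the plan is to place $\mathcal{M}_\infty$ in that framework.

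First I will verify that $\mathcal{M}_\infty=\{u\in\mathbb{R}^d: A_{\mathrm H}u\le c_{\mathrm H}\}$. The rows $I_d$ and $-I_d$ with right-hand side $\mathbf 1_d$ encode $-1\le u_j\le 1$, which is $\norm{u}_\infty\le1$. The rows $-Z$ with right-hand side $-\gamma_\infty\mathbf 1_n$ encode $z_i^\top u\ge\gamma_\infty$ for every $i$, which is $\rho(u)\ge\gamma_\infty$. On the unit $\ell_\infty$ ball the definition of $\gamma_\infty$ gives $\rho(u)\le\gamma_\infty$, so $\rho(u)\ge\gamma_\infty$ there forces $\rho(u)=\gamma_\infty$. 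This is exactly the definition of $\mathcal{M}_\infty$. The set is nonempty: the maximizer $u_\infty$ from the proof of \cref{lem:proxy} lies in it by compactness and continuity of $\rho$. It is also closed and convex, so $\dist_2(x,\mathcal{M}_\infty)$ is attained and the minimum is well defined.

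Next I will invoke Hoffman's theorem \citep{hoffman1952approximate}. For any matrix $A$ and any $c$ for which $\{u:Au\le c\}$ is nonempty, there is a constant $H(A)$, depending only on $A$, such that
$$\dist(x,\{Au\le c\})\le H(A)\,\norm{(Ax-c)_+}$$
for all $x$, with any fixed pair of norms. Since all norms on finite-dimensional spaces are equivalent, I can pass to $\ell_2$ in the domain and $\ell_\infty$ in the residual at the cost of a dimension-dependent factor that I absorb into $H_{2,\infty}(A_{\mathrm H})$.

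If a self-contained justification is needed, I will use the characterization of \citet{pena2021new}, who state the bound for arbitrary norms. It gives
$$H_{2,\infty}(A)=\max_{J}\;\max_{\substack{y\ge0,\ \mathrm{supp}(y)\subseteq J\\ \norm{A^\top y}_2\le1}}\norm{y}_1,$$
where the outer maximum is over index sets $J$ for which $A_J$ has full row rank in the appropriate sense. There are finitely many such $J$, and each inner maximum is finite, so the constant is finite and depends only on $A_{\mathrm H}$.

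The only step needing care is this finiteness and its independence from $c_{\mathrm H}$, which is the essential content of Hoffman's lemma. Because the result is classical and cited, I will rely on the citation rather than reprove it. The remaining work is checking the polyhedral representation and nonemptiness.
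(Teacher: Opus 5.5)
Your proposal is correct and follows essentially the same route as the paper. Both verify $\mathcal{M}_\infty=\{u: A_{\mathrm H}u\le c_{\mathrm H}\}$ block by block, using the definition of $\gamma_\infty$ to turn $\rho(u)\ge\gamma_\infty$ into equality on the unit $\ell_\infty$ ball, and both get nonemptiness from compactness. Both then invoke Hoffman's bound in the form of \citet{pena2021new}, with finiteness coming from the finitely many full-row-rank subsets $J$. Your dual expression $\max_J\max\{\norm{y}_1: y\ge0,\ \operatorname{supp}(y)\subseteq J,\ \norm{A^\top y}_2\le1\}$ is just a rescaling of the paper's $\max_J\bigl[\min_{v\ge0,\,\mathbf 1^\top v=1}\norm{A_J^\top v}_2\bigr]^{-1}$.
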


\begin{proof} Recall that $\cM_{\infty} = \{ u \in \R^d: \norm{u}_{\infty} \leq 1, \rho(u) = \gamma_{\infty} \}$. We prove that $\cM_{\infty} = \{ u \in \R^d: A_H u \leq c_H \}$. We first show that $\cM_{\infty} \subseteq \{ u: A_H u \leq c_H \}$. For any $u \in \cM_{\infty}$, it holds that $\norm{u}_{\infty} \leq 1$. Consequently, we have $u \leq \mathbf{1}_d$ and $-u \leq \mathbf{1}_d$. Moreover, it holds that $z_i^T u \geq \gamma_{\infty}, \forall i$ given $\rho(u) = \min_i z_i^T u = \gamma_{\infty}$. Therefore, we have $Z u \geq \gamma_{\infty} \mathbf{1}_n$ or equivalently $-Zu \leq - \gamma_{\infty} \mathbf{1}_n$. Hence, all the block inequalities hold and we conclude that $A_H u \leq c_H$. Next, we show $\{u: A_H u \leq c_H \} \subseteq \cM_{\infty}$. For any $u$ satisfying $A_H u \leq c_H$, the first two blocks imply $\norm{u}_{\infty} \leq 1$, and the third block implies $z_i^T u \geq \gamma_{\infty}, \forall i$ (consequently, it holds that $\rho(u) = \min_i z_i^T u \geq \gamma_{\infty}$). Since $\norm{u}_{\infty} \leq 1$, the definition of $\gamma_{\infty}$ implies that $\rho(u) \leq \gamma_{\infty}$. Therefore, we have $\rho(u) = \gamma_{\infty}$ and $u \in \cM_{\infty}$. Since $[-1,1]^d$ is compact and $\rho$ is continuous, the
maximum defining $\gamma_\infty$ is attained. Hence,
$\mathcal M_\infty$ is nonempty. Therefore, the system
$A_Hu\le c_H$ is feasible. By applying \citet[Proposition~1 and 2]{pena2021new}, we conclude that \eqref{eq:hoff} holds with the constant 
\begin{align*}
    H_{2,\infty}\!\left(A_{\mathrm H}\right)
:=
\max_{\substack{
\varnothing\neq J\subseteq[2d+n]\\
\operatorname{rank} \, \!\left((A_{\mathrm H})_J\right)=|J|
}}
\left[
\min_{\substack{
v\in\mathbb{R}^{|J|}_{+}\\
\mathbf{1}^{\top}v=1
}}
\left\|
(A_{\mathrm H})_J^{\top}v
\right\|_2
\right]^{-1},
\end{align*}
where $(A_H)_J$ denotes the submatrix formed by the rows of $A_H$ indexed by $J$. Denote $\sigma_{\min}\bigl((A_H)_J\bigr)
:=
\sqrt{
\lambda_{\min}\!\left(
(A_H)_J(A_H)_J^\top
\right)
}$. Let $A := A_H$. The inner quantity $\alpha_J := \min_{v \geq 0, \, \norm{v}_1 = 1} \norm{A_J^T v}_2$ is strictly positive whenever $A_J$ has full row rank. Since there are finitely many row subsets $J$, the constant $H_{2,\infty}(A_H)$ is finite. From the definition of $H_{2,\infty}(A)$, we can derive a simple upper bound. For every full-row-rank $A_J$, it holds that $\norm{A_J^T v}_2 \geq \sigma_{\min} (A_J) \norm{v}_2$. If $v \geq 0$ and $\norm{v}_1 = 1$, then we have $\norm{v}_2 \geq \frac{1}{\sqrt{|J|}}$. Therefore, it holds that $\alpha_J \geq \frac{\sigma_{\min}(A_J)}{\sqrt{|J|}}$ and consequently 
\begin{align*}
    H_{2,\infty}(A)
    \le
    \max_{\substack{
    \varnothing \neq J \subseteq [2d+n] \\
    \operatorname{rank}(A_J)=|J|
    }}
    \frac{\sqrt{|J|}}
    {\sigma_{\min}(A_J)}.
\end{align*}
\end{proof}

\begin{lemma} \label{lem:rho_lip} Recall that  $\rho(u):=\min_{i \in [n]} z_i^\top u$. It holds that the function $\rho$ is $R_1$-Lipschitz in max-norm and $R_2$-Lipschitz in 2-norm: 
\begin{align*}
    |\rho(u) - \rho(v)| \leq R_1 \norm{u - v}_{\infty}, \qquad |\rho(u) - \rho(v)| \leq R_2 \norm{u - v}_{2}.
\end{align*}
\end{lemma}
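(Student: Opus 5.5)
The plan is to reduce both inequalities to a single pointwise estimate on the individual linear forms $z_i^\top u$. The key elementary fact is that a minimum of finitely many functions inherits any Lipschitz bound that holds uniformly for each of them.

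First, fix $u,v\in\R^d$ and choose an index $i^*$ with $\rho(v)=z_{i^*}^\top v$. Since $\rho(u)\le z_{i^*}^\top u$, we get
\[
\rho(u)-\rho(v)\le z_{i^*}^\top(u-v)\le \max_{i\in[n]}|z_i^\top(u-v)|.
\]
Exchanging the roles of $u$ and $v$ gives the same bound for $\rho(v)-\rho(u)$. Hence
\[
|\rho(u)-\rho(v)|\le\max_i|z_i^\top(u-v)|.
\]

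Second, bound each linear form by Hölder's inequality. We have $|z_i^\top(u-v)|\le\norm{z_i}_1\norm{u-v}_\infty\le R_1\norm{u-v}_\infty$, and by Cauchy--Schwarz $|z_i^\top(u-v)|\le\norm{z_i}_2\norm{u-v}_2\le R_2\norm{u-v}_2$. Here we use $\norm{z_i}_p=\norm{y_ix_i}_p=\norm{x_i}_p$ because $y_i\in\{-1,+1\}$, together with the definitions of $R_1$ and $R_2$.

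There is no real obstacle. The only point needing care is that the minimizing index may differ between $u$ and $v$, and the one-sided argument in the first step handles exactly this.
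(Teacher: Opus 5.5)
Your proposal is correct and follows essentially the same argument as the paper: pick the index attaining $\rho(v)$, use $\rho(u)\le z_{i^*}^\top u$ to get a one-sided bound, swap $u$ and $v$, and finish with H\"older/Cauchy--Schwarz using $\norm{z_i}_p=\norm{x_i}_p$. The only difference is that you route through $\max_i|z_i^\top(u-v)|$ before applying H\"older; this is a cosmetic regrouping of the same steps.
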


\begin{proof}
    Choose $i_v$ such that $\rho(v) = z_{i_v}^T v$. By definition, it holds that $\rho(u) \leq z_{i_v}^T u$. 
    Therefore, we have 
    \begin{align*}
        \rho(u) - \rho(v) &\leq z_{i_v}^T u - z_{i_v}^T v = z_{i_v}^T (u - v) \\
        &\leq |z_{i_v}^T (u - v)| \leq \norm{z_{i_v}}_1 \norm{u - v}_{\infty} \leq R_1 \norm{u - v}_{\infty}
    \end{align*}
    Similarly, we also have
    \begin{align*}
        \rho(u) - \rho(v) \leq \norm{z_{i_v}}_2 \norm{u - v}_{2} \leq R_2 \norm{u - v}_{2}.
    \end{align*}
Interchanging \(u\) and \(v\) gives the corresponding lower bounds.
\end{proof}

\begin{lemma} \label{lem:margin_helper}
For any nonzero vectors $x$ and $y$, it holds that
\begin{align*}
    |\hat{\gamma}_{\infty}(x) - \hat{\gamma}_{\infty}(y)| \leq \frac{2 R_1 \norm{x - y}_{\infty}}{ \norm{y}_{\infty} }, \qquad |\hat{\gamma}_2(x) - \hat{\gamma}_2 (y)| \leq \frac{2 R_2 \norm{x-y}_2}{\norm{y}_2}.
\end{align*}
\end{lemma}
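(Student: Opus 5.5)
The plan is to treat both norms in one argument, writing $\norm{\cdot}$ for either $\norm{\cdot}_\infty$ or $\norm{\cdot}_2$, and $R$ for the matching constant: $R_1$ for $\ell_\infty$ and $R_2$ for $\ell_2$. The Lipschitz constant comes from \cref{lem:rho_lip}, which gives $|\rho(u)-\rho(v)|\le R\norm{u-v}$. We decompose
\[
\hat\gamma(x)-\hat\gamma(y)=\frac{\rho(x)-\rho(y)}{\norm{y}}+\rho(x)\left(\frac{1}{\norm{x}}-\frac{1}{\norm{y}}\right).
\]
The first term is at most $R\norm{x-y}/\norm{y}$ in absolute value, directly by \cref{lem:rho_lip}.

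For the second term, we use the same reverse-triangle computation as in \cref{lem:vec_inverse}, valid for any norm:
\[
\left|\frac{1}{\norm{x}}-\frac{1}{\norm{y}}\right|\le \frac{\norm{x-y}}{\norm{x}\norm{y}}.
\]
We then bound $|\rho(x)|$ by $R\norm{x}$. This holds because $|z_i^\top x|\le \norm{z_i}_1\norm{x}_\infty\le R_1\norm{x}_\infty$ (Hölder) and $|z_i^\top x|\le R_2\norm{x}_2$ (Cauchy–Schwarz), and $\rho(x)$ equals one of the $z_i^\top x$. With this bound the second term is at most $R\norm{x}\cdot\norm{x-y}/(\norm{x}\norm{y})=R\norm{x-y}/\norm{y}$.

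Adding the two contributions gives the constant $2R$ in each norm, which is exactly the statement. The only step needing care is bounding $|\rho(x)|$, since $\rho(x)$ may be negative. Because we control it through the absolute values $|z_i^\top x|$, the sign plays no role. There is no real obstacle; the argument is routine.
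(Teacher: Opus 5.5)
Your proposal is correct and follows essentially the same argument as the paper: the same decomposition of $\hat\gamma(x)-\hat\gamma(y)$, the Lipschitz bound from \cref{lem:rho_lip} for the first term, and the reverse-triangle bound on $\left|1/\norm{x}-1/\norm{y}\right|$ for the second. The only cosmetic difference is that the paper obtains $|\rho(x)|\le R\norm{x}$ by setting $y=0$ in \cref{lem:rho_lip} (using $\rho(0)=0$), whereas you derive it directly via H\"older and Cauchy--Schwarz.
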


\begin{proof}
    Let $R_x := \norm{x}_{\infty}$ and $R_y := \norm{y}_{\infty}$. It holds that
    \begin{align}
        \frac{\rho(x)}{R_x} - \frac{\rho(y)}{R_y} = \frac{\rho(x) - \rho(y)}{R_y} + \rho(x) (\frac{1}{R_x} - \frac{1}{R_y}). \label{eq:rho_1}
    \end{align} 
    From \cref{lem:rho_lip}, we know $|\rho(x) - \rho(y)| \leq R_1 \norm{x - y}_{\infty}$. Setting $y = 0$ leads to $|\rho(x)| \leq R_1 \norm{x}_{\infty} = R_1 R_x$. By the reverse triangle inequality, we also have $|R_x - R_y| \leq \norm{x - y}_{\infty}$. Substituting these inequalities into \eqref{eq:rho_1}, we obtain
    \begin{align*}
        \bigl | \frac{\rho(x)}{R_x} - \frac{\rho(y)}{R_y} \bigr | &\leq \frac{|\rho(x) - \rho(y)|}{R_y} + |\rho(x)| \frac{|R_x - R_y|}{R_x R_y} \\ 
        &\leq \frac{R_1 \norm{x - y}_{\infty}}{R_y} + R_1 R_x \frac{\norm{x-y}_{\infty}}{R_x R_y} \\ 
        &= \frac{2 R_1 \norm{x-y}_{\infty}}{R_y}. 
    \end{align*}
    Similarly, from \cref{lem:rho_lip}, we know $|\rho(x) - \rho(y)| \leq R_2 \norm{x - y}_2$ and $|\rho(x)| \leq R_2 \norm{x}_2$. Consequently, we have
    \begin{align*}
        |\hat{\gamma}_2(x) - \hat{\gamma}_2 (y)| \leq \frac{R_2 \norm{x - y}_2}{\norm{y}_2} + R_2 \norm{x}_2 \frac{\norm{x-y}_2}{\norm{x}_2 \norm{y}_2} \leq \frac{2 R_2 \norm{x-y}_2}{\norm{y}_2}. 
    \end{align*}
\end{proof}

\subsection{Margin Gap at $T_1$} \label{app:margin_T1}

\begin{theorem} \label{thm:margin_app_T1} Denote $r_{\eps}:= \ell_{\eps}^{-a}$ when $a \in (\frac{1}{3},1)$ and $r_{\eps}:= \frac{\log \ell_{\eps}}{\ell_{\eps}}$ when $a = 1$.  For any $\delta \in (0,1)$, 
there exists a constant $\Tilde{\eps}_2(\delta)$ such that for every $0< \eps \leq \Tilde{\eps}_2(\delta)$, we have 
\[
\gamma_\infty-\widehat{\gamma}_\infty(w_{T_1^{\delta}(\eps)})
\leq
O(r_\epsilon), \qquad 
\gamma_2-\widehat{\gamma}_2(w_{T_1^{\delta}(\eps)})
\geq
\Delta_{\mathrm{geom}}
- O(r_\epsilon).
\]
\end{theorem}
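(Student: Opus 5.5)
The plan follows the four-ingredient outline given after \cref{thm:margin_main}, specialized to $T_1:=T_1^\delta(\eps)$. Set $\Lambda_\eps:=\ell_\eps^{a}$ for $a<1$ and $\Lambda_\eps:=\ell_\eps/\log\ell_\eps$ for $a=1$, and introduce the auxiliary stopping time $\bar T_\eps:=\inf\{t\ge T_*: G_t\le \eps\Lambda_\eps\}$. For small $\eps$ we have $\bar T_\eps\le T_1$, and \cref{lem:ratio} gives $\bar T_\eps\to\infty$.

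The first step is the sharp clock bound \eqref{eq:s_tt_main}, $S_{T_1}-S_{T_*}\le \ell_\eps/\gamma_\infty+O(\omega_a(\ell_\eps))$. On $[T_*,\bar T_\eps)$ I would combine \cref{prop:master-descent} with \eqref{eq:softsign-linear} and the bound $d\eps/G_t<d/\Lambda_\eps$ to get
\[
L_{t+1}\le L_t-\eta_t(\gamma_\infty-d\Lambda_\eps^{-1}-C_{\rm md}\eta_t)G_t .
\]
Unlike Part~1 of \cref{thm:first-clock}, I cannot pass to a limit in $r$ and $\zeta$, because an explicit rate is needed. So I use $G_t\ge (1-nL_t/2)L_t$ and take logs, which gives
\[
\log\frac{L_{T_*}}{L_{\bar T_\eps-1}}\ge \gamma_\infty (S_{\bar T_\eps-1}-S_{T_*}) - O\Big(\frac{S}{\Lambda_\eps}+\sum_t\eta_t^2+\sum_t\eta_t L_t\Big).
\]
The term $\sum_t \eta_t L_t$ is $O(1)$ because $L_t$ decays geometrically in the $S$-clock. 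Crudely, $S=O(\ell_\eps)$ (from \cref{thm:first-clock}), so $S/\Lambda_\eps=O(\omega_a)$. For $\sum_{t<\bar T_\eps}\eta_t^2$: if $a>1/2$ it is $O(1)$; if $a\le 1/2$, $S_t\asymp t^{1-a}$ gives $O(\ell_\eps^{(1-2a)/(1-a)})$, which is $O(\ell_\eps^{1-a})$ since $(1-2a)/(1-a)\le 1-a$. The left side is $\le \ell_\eps-\log\Lambda_\eps+O(1)$. On $[\bar T_\eps,T_1)$ I would repeat Part~2 of \cref{thm:first-clock} with $\kappa_\delta$, obtaining an increment of $O_\delta(\log\Lambda_\eps)=O(\omega_a)$.

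The second step is the norm bound \eqref{eq:radius_tau_main}. Split at $k_\eps=\lceil\ell_\eps\rceil$, bound $\|w_{k_\eps}\|_\infty\le\|w_0\|_\infty+C_DS_{k_\eps}$, and apply \cref{lem:radius} on $[k_\eps,T_1)$ with $B_g=R_1$. This works because $S_{k_\eps}$ and $\eta_{k_\eps}\ell_\eps$ are both $O(\omega_a(\ell_\eps))$: for $a<1$ they are $\ell_\eps^{1-a}$, and for $a=1$ they are $\log\ell_\eps$ and $O(1)$. One must also check $T_*\le k_\eps<T_1$, which follows since $S_{T_1}=\Omega(\ell_\eps)$ by \cref{thm:first-clock}.

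The third step is the $\ell_\infty$ margin. From \cref{prop:uniform-clock} and \cref{lem:loss-proxy}, $\rho(w_{T_1})\ge \log\frac{\log 2}{2n\delta\eps}=\ell_\eps-O_\delta(1)$. Dividing by the norm bound gives
\[
\widehat\gamma_\infty(w_{T_1})\ge\gamma_\infty-O(\omega_a/\ell_\eps)=\gamma_\infty-O(r_\eps).
\]

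The fourth step transfers this to the $\ell_2$ margin. Let $x_\eps=w_{T_1}/\|w_{T_1}\|_\infty$. Then $(A_Hx_\eps-c_H)_+$ has only margin components, each at most $\gamma_\infty-\rho(x_\eps)$, so \cref{lem:hoff} gives $u_\eps\in\cM_\infty$ with $\|x_\eps-u_\eps\|_2=O(r_\eps)$. Since $\|u_\eps\|_2\ge\|u_\eps\|_\infty\ge \gamma_\infty/R_1>0$, \cref{lem:margin_helper} yields
\[
\widehat\gamma_2(w_{T_1})=\widehat\gamma_2(x_\eps)\le \widehat\gamma_2(u_\eps)+O(r_\eps)\le\gamma_{2|\infty}+O(r_\eps),
\]
which is the second claim. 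I expect the main obstacle to be the first step, that is, controlling all error terms in the pre-$\bar T_\eps$ recursion quantitatively rather than via limits. This includes the $\eta_t^2$ sum when $a\le1/2$ and the $L_t$-vs-$G_t$ factor, which I handle through $\sum\eta_tL_t=O(1)$.
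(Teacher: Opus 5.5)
Your proposal is correct and follows essentially the same route as the paper's proof. The shared steps are the auxiliary stopping time at level $\eps\Lambda_\eps$ with the logarithmic loss recursion, the $\kappa_\delta$ argument on $[\bar T_\eps,T_1)$, the split at $k_\eps=\lceil\ell_\eps\rceil$ with \cref{lem:radius}, and Hoffman's bound for the Euclidean transfer. The remaining differences are minor: you take the crude bound $S_{T_1}-S_{T_*}=O(\ell_\eps)$ from \cref{thm:first-clock}, whereas the paper bootstraps it internally to get explicit thresholds; the geometric-decay estimate should start at a fixed burn-in $t_0\ge T_*$ with $C_{\rm md}\eta_{t_0}\le\gamma_\infty/4$; and at $a=1/2$ the $\sum\eta_t^2$ term is $O(\log\ell_\eps)$ rather than $O(1)$, which is still $O(\omega_a(\ell_\eps))$.
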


\begin{proof}[Proof of \cref{thm:margin_app_T1}]
    Recall that $\ell_{\eps} = \log \frac{1}{\eps}$ and let $\tau_{\eps} := T_1^{\delta} (\eps) = \inf \{ t \geq T_*: G_t \leq \delta \eps \}$. 
    We further define $\omega_{a}(l) := l^{1-a}$ when $a \in (1/3,1)$ and $\omega_{a}(l) := \log l$ when $a = 1$. We note that $\frac{\omega_a(\ell_{\epsilon})}{\ell_{\epsilon}} = r_{\epsilon}$ where $r_{\eps}:= l_{\eps}^{-a}$ when $a \in (\frac{1}{3},1)$ and $r_{\eps}:= \frac{\log l_{\eps}}{l_{\eps}}$ when $a = 1$. \cref{thm:first-clock} already gives $S_{\tau_{\epsilon}} - S_{T_*} = \frac{\ell_{\epsilon}}{ \gamma_{\infty}} + o(\ell_{\epsilon})$. 


    \paragraph{Part 1} Define $\Lambda_{\epsilon} = \ell_{\epsilon}^a$ when $a \in (1/3,1)$ and $\frac{\ell_{\epsilon}}{\log \ell_{\epsilon}}$ when $a = 1$. Note that $\Lambda_{\epsilon} \rightarrow \infty$ and $\epsilon \Lambda_{\epsilon} \rightarrow 0$ as $\epsilon \downarrow 0$. We also define an auxiliary stopping time $\sigma_{\epsilon} := \inf \{ t \geq T_*: G_t \leq \epsilon \Lambda_{\epsilon} \}$ that is useful for the proof. For sufficiently small $\epsilon$, we have $\Lambda_{\epsilon} > \delta$ and consequently $\sigma_{\epsilon} \leq \tau_{\epsilon
    }$. By the definition of $\sigma_{\epsilon}$, it holds for $T_* \leq t < \sigma_{\epsilon}$ that $G_t > \epsilon \Lambda_{\epsilon}$ or equivalently $\frac{\epsilon}{G_t} < \frac{1}{\Lambda_{\epsilon}}$. Using \eqref{eq:softsign-linear} and $\norm{g_t}_1 \geq \gamma_{\infty} G_t$, it holds that
    \begin{align*}
        \langle g_t, \sigma_{\epsilon} (g_t)\rangle \geq \gamma_{\infty} G_t - d \epsilon > (\gamma_{\infty} - \frac{d}{\Lambda_{\epsilon}}) G_t. 
    \end{align*}
    Substituting this into \eqref{eq:master-descent-app}, we have
    \begin{align*}
        L_{t+1} \leq L_t - \eta_t A_t G_t, \qquad A_t := \gamma_{\infty} - \frac{d}{\Lambda_{\epsilon}} - C_{\rm md} \eta_t.
    \end{align*}
    Next, we define $\bar{\eta}_0 := \min \{ \frac{\gamma_{\infty}}{4 C_{\rm md}}, \frac{1}{2 \gamma_{\infty}} \}$ and choose $t_0 := \max \{ T_*, t_{\rm ema}, \lceil (\frac{\eta_0}{\bar{\eta}_0})^{1/a}\rceil \}$. Then, we have that $C_{\rm md} \eta_t \leq \frac{\gamma_{\infty}}{4}$ and $\gamma_{\infty} \eta_t \leq \frac{1}{2}$ when $t \geq t_0$. Given this fixed $t_0$, \cref{lem:proxy-path-ratio} gives $G_t \geq G_0 \exp(-R_1 C_D S_{t_0}) =: c_{t_0} > 0$ when $t \leq t_0$. We choose $\epsilon$ sufficiently small such that $\epsilon \Lambda_{\epsilon} < c_{t_0}$. This would ensure that $G_t > \epsilon \Lambda_{\epsilon}$ for $0 \leq t \leq t_0$ and consequently we have $\sigma_{\epsilon} > t_0$. We also choose $\epsilon$ sufficiently small such that $\frac{d}{\Lambda_{\epsilon}} \leq \frac{\gamma_{\infty}}{4}$. With the choice of $t_0$ and $\epsilon$, we obtain for every $t_0 \leq t < \sigma_{\epsilon}$ 
    \begin{align*}
        \gamma_{\infty} \geq A_t = \gamma_{\infty} - \frac{d}{\Lambda_{\epsilon}} - C_{\rm md} \eta_t \geq \gamma_{\infty} - \frac{\gamma_{\infty}}{4} - \frac{\gamma_{\infty}}{4} = \frac{\gamma_{\infty}}{2}. 
    \end{align*}
    Since $G_t / L_t \leq 1$, we also have $0 \leq \eta_t A_t \frac{G_t}{L_t} \leq \gamma_{\infty} \eta_t \leq \frac{1}{2}$ for $t_0 \leq t < \sigma_{\epsilon}$. Let $y_t := \eta_t A_t \frac{G_t}{L_t}$, we then have $0 \leq y_t \leq \frac{1}{2}$ and $L_{t+1} \leq L_t - L_t y_t = L_t (1 - y_t)$. This implies that
    \begin{align*}
        \log \frac{L_t}{L_{t+1}} \geq - \log (1 - y_t) \geq y_t = \eta_t A_t \frac{G_t}{L_t}, \qquad t_0 \leq t < \sigma_{\epsilon},
    \end{align*}
    where we used $- \log (1-y) \geq y$ for $0 \leq y < 1$. 
    From $\frac{G_t}{L_t} \geq 1 - \frac{n L_t}{2}$, we have
    \begin{align*}
        A_t \frac{G_t}{L_t} &\geq A_t (1 - \frac{n L_t}{2}) \\
        &= \gamma_{\infty} - \frac{d}{\Lambda_{\epsilon}} - C_{\rm md} \eta_t -\frac{n \gamma_{\infty}}{2} L_t + \frac{nd}{2 \Lambda_{\epsilon}} L_t + \frac{n C_{\rm md}}{2} \eta_t L_t \\
        &\geq \gamma_{\infty} - \frac{d}{\Lambda_{\epsilon}} - C_{\rm md} \eta_t -\frac{n \gamma_{\infty}}{2} L_t.
    \end{align*}
    Therefore, it holds that 
    \begin{align}
        \log \frac{L_t}{L_{t+1}} \geq \gamma_{\infty} \eta_t - \frac{d}{\Lambda_{\epsilon}} \eta_t - C_{\rm md} \eta_t^2 - \frac{n \gamma_{\infty}}{2} \eta_t L_t, \qquad t_0 \leq t < \sigma_{\epsilon}. \label{eq:log_1}
    \end{align}
    Summing \eqref{eq:log_1} from $t_0$ to $\sigma_{\epsilon} - 1$, we obtain
    \begin{align}
        \log \frac{L_{t_0}}{L_{\sigma_{\epsilon}}} &\geq \gamma_{\infty} (S_{\sigma_{\epsilon}} - S_{t_0}) - \frac{d}{\Lambda_{\epsilon}} (S_{\sigma_{\epsilon}} - S_{t_0}) - C_{\rm md} Q_{\sigma_{\epsilon};t_0} - \frac{n \gamma_{\infty}}{2} \sum_{t = t_0}^{\sigma_{\epsilon} - 1} \eta_t L_t  \label{eq:log_mm}
    \end{align}
    Next, we bound the term $\sum_{t = t_0}^{\sigma_{\epsilon} - 1} \eta_t L_t$. Starting from $L_{t+1} \leq L_t - \eta_t A_t G_t$, we obtain
    \begin{align}
        L_t - L_{t+1} \geq \eta_t A_t G_t \geq \frac{\gamma_{\infty}}{2} \eta_t G_t \geq \frac{\gamma_{\infty}}{4} \eta_t L_t, \label{eq:log_2}
    \end{align}
    where we used $A_t \geq \frac{\gamma_{\infty}}{2}$ and $G_t \geq \frac{1}{2} L_t$ (as $T_* \leq t_0 \leq t < \sigma_{\epsilon}$). 
    Rearranging and summing \eqref{eq:log_2} from $t_0$ to $\sigma_{\epsilon} -1$, we obtain
    \begin{align*}
        \sum_{t = t_0}^{\sigma_{\epsilon} - 1} \eta_t L_t \leq \frac{4}{\gamma_{\infty}} \sum_{t = t_0}^{\sigma_{\epsilon} - 1} (L_t - L_{t+1}) = \frac{4}{\gamma_{\infty}}(L_{t_0} - L_{\sigma_{\epsilon}}). 
    \end{align*}
    Since $L_{\sigma_{\epsilon}} > 0$, it holds that $\sum_{t = t_0}^{\sigma_{\epsilon} - 1} \eta_t L_t \leq \frac{4 L_{t_0}}{\gamma_{\infty}} \leq \frac{4 \log 2}{n \gamma_{\infty}}$ (as $L_{t_0} \leq \frac{\log 2}{n}$ when $t_0 \geq T_*$).

    Denote $K_0 := R_1 C_D$. We have shown that $\sigma_{\eps} > t_0$. By minimality of $\sigma_{\eps}$, it holds that $G_{\sigma_{\eps} - 1} > \eps \Lambda_{\eps}$. From \cref{lem:proxy-path-ratio}, we obtain
    \begin{align*}
        L_{\sigma_{\eps}} \geq G_{\sigma_{\eps}} \geq e^{-K_0 \eta_{\sigma_{\eps} - 1}} G_{\sigma_{\eps} - 1} \geq \eps \Lambda_{\eps} e^{-K_0 \eta_{\sigma_{\eps} - 1}} \geq \eps \Lambda_{\eps} e^{-K_0 \eta_{t_0}}. 
    \end{align*}
    Consequently, it holds that $\frac{L_{t_0}}{L_{\sigma_{\eps}}} < \frac{L_{t_0} e^{K_0 \eta_{t_0}}}{\eps \Lambda_{\eps}}$. After taking logarithms, we obtain 
    \begin{align*}
        \log \frac{L_{t_0}}{L_{\sigma_{\eps}}} &\leq \log (\frac{L_{t_0} e^{K_0 \eta_{t_0}}}{\eps \Lambda_{\eps}}) = \ell_{\eps} - \log \Lambda_{\eps} + \log L_{t_0} + K_0 \eta_{t_0} \\ 
        &\leq \ell_{\eps} + K_0 \eta_{t_0},
    \end{align*}
    where the second inequality is by $\Lambda_{\eps} \geq 1$ (ensured by choosing $\eps$ sufficiently small) and $L_{t_0} \leq \frac{\log 2}{n} < 1$. Consequently, we let $C_1 := K_0 \eta_{t_0}$ and it holds that $\log \frac{L_{t_0}}{L_{\sigma_{\eps}}} \leq \ell_{\eps} + C_1$. 

    By defining $V_{\eps} := S_{\sigma_{\eps}} - S_{t_0}$ and using $\sum_{t = t_0}^{\sigma_{\epsilon} - 1} \eta_t L_t \leq \frac{4 \log 2}{n \gamma_{\infty}}$ and $\log \frac{L_{t_0}}{L_{\sigma_{\eps}}} \leq \ell_{\eps} + C_1$, we obtain from \eqref{eq:log_mm} that 
    \begin{align}
        \gamma_{\infty} V_{\eps} \leq \ell_{\eps} + C_1 + \frac{d}{\Lambda_{\eps}} V_{\eps} + C_{\rm md} Q_{\sigma_{\epsilon};t_0} + 2 \log 2. \label{eq:log_4}
    \end{align}
    From $\frac{d}{\Lambda_{\eps}} \leq \frac{\gamma_{\infty}}{4}$ and $C_{\rm md} \eta_{t_0} \leq \frac{\gamma_{\infty}}{4}$, we further obtain $\frac{\gamma_{\infty}}{2} V_{\eps} \leq \ell_{\eps} + C_1 + 2 \log 2$. Consequently, it holds that $V_{\eps} \leq \frac{2}{\gamma_{\infty}} (\ell_{\eps} + C_1 + 2 \log 2)$. We choose $\eps$ sufficiently small such that $\ell_{\eps} \geq e > 1$. Since $S_{\sigma_{\eps}} = S_{t_0} + V_{\eps}$ and $\ell_{\eps} \geq 1$, it holds that 
    \begin{align*}
        S_{\sigma_{\eps}} &\leq S_{t_0} + \frac{2}{\gamma_{\infty}} (\ell_{\eps} + C_1 + 2 \log 2) \leq S_{t_0} \ell_{\eps} + \frac{2}{\gamma_{\infty}} \ell_{\eps} + \frac{2}{\gamma_{\infty}} (C_1 + 2 \log 2) \ell_{\eps} \\
        &= [S_{t_0} + \frac{2}{\gamma_{\infty}}(1 + C_1 + 2 \log 2)] \ell_{\eps} = A_0 \ell_{\eps},
    \end{align*}
    where $A_0 := S_{t_0} + \frac{2}{\gamma_{\infty}} (1 + C_1 + 2 \log 2)$. For $a < 1$, we have 
    \begin{align*}
        S_{\sigma_{\eps}} = \eta_0 \sum_{k=1}^{\sigma_{\eps}} k^{-a} \geq \eta_0 \sum_{k=1}^{\sigma_{\eps}} \sigma_{\eps}^{-a} = \eta_0 \sigma_{\eps}^{1 - a}. 
    \end{align*}
    Combining this with $S_{\sigma_{\eps}} \leq A_0 \ell_{\eps}$, we obtain $\sigma_{\eps}^{1-a} \leq \frac{A_0}{\eta_0} \ell_{\eps} \leq B \ell_{\eps}$ where $B := \max \{ 1, \frac{A_0}{\eta_0}\}$. The condition $B \geq 1$ ensures that $\log B \geq 0$. 
    
    By the definition of $Q_{\sigma_{\epsilon};t_0}$, we have 
    \begin{align*}
        Q_{\sigma_{\epsilon};t_0} = \eta_0^2 \sum_{t = t_0}^{\sigma_{\eps} -1} (t+1)^{-2a} \leq \eta_0^2 \sum_{t=0}^{\sigma_{\eps} - 1} (t+1)^{-2a} = \eta_0^2 \sum_{k=1}^{\sigma_{\eps}} k^{-2a}. 
    \end{align*}
    In the case where $1/3 < a < 1/2$, we have
    \begin{align*}
        \sum_{k=1}^{\sigma_{\eps}} k^{-2a} \leq 1 + \int_{1}^{\sigma_{\eps}} x^{-2a} dx = \frac{\sigma_{\eps}^{1-2a} - 2a}{1 - 2 a} \leq \frac{\sigma_{\eps}^{1-2a}}{1 - 2a}.
    \end{align*}
    Consequently, it holds that $Q_{\sigma_{\epsilon};t_0} \leq \frac{\eta_0^2}{1 - 2a} \sigma_{\eps}^{1 - 2a}$. Define $p_a := \frac{1 - 2a}{1-a}$. It holds that $1 - 2a = p_a (1-a)$ and $p_a > 0$ given $a < \frac{1}{2}$. Therefore, we have $\sigma_{\eps}^{1 -2a} = (\sigma_{\eps}^{1 - a})^{p_a} \leq (B \ell_{\eps})^{p_a} = B^{p_a} \ell_{\eps}^{p_a}$. Moreover, it holds that $(1-a) - p_a = \frac{a^2}{1-a} > 0$ and therefore $p_a < 1 - a$. This implies that $\ell_{\eps}^{p_a} \leq \ell_{\eps}^{1-a}$ given $\ell_{\eps} \geq 1$ and $Q_{\sigma_{\epsilon};t_0} \leq \frac{\eta_0^2}{1 - 2a} B^{p_a} \ell_{\eps}^{1-a}$.  
    In the case where $a = \frac{1}{2}$, we have
    \begin{align*}
        Q_{\sigma_{\epsilon};t_0} \leq \eta_0^2 \sum_{k=1}^{\sigma_{\eps}} \frac{1}{k} \leq \eta_0^2 (1 + \log \sigma_{\eps}). 
    \end{align*}
    From $\sigma_{\eps}^{1-a} \leq B \ell_{\eps}$ when $a < 1$, we obtain $\sigma_{\eps}^{1/2} \leq B \ell_{\eps}$ and consequently $\sigma_{\eps} \leq B^2 \ell_{\eps}^2$. This implies that $\log \sigma_{\eps} \leq \log (B^2 \ell_{\eps}^2) = 2 \log B + 2 \log \ell_{\eps}$. Therefore, we have $Q_{\sigma_{\epsilon};t_0} \leq \eta_0^2 (1 + 2 \log B + 2 \log \ell_{\eps})$. Given $\ell_{\eps} \geq 1$, we have $1 \leq \sqrt{\ell_{\eps}}$ and $\log \ell_{\eps} \leq \sqrt{\ell_{\eps}}$. Therefore, it holds that 
    \begin{align*}
        1 + 2 \log B + 2 \log \ell_{\eps} \leq (1 + 2 \log B) \sqrt{\ell_{\eps}} + 2 \sqrt{\ell_{\eps}} = (3 + 2 \log B) \sqrt{\ell_{\eps}}.
    \end{align*}
    From this, we conclude that $Q_{\sigma_{\epsilon};t_0} \leq \eta_0^2 (3 + 2 \log B) \sqrt{\ell_{\eps}}$. In the case where $\frac{1}{2} < a < 1$, given $\sum_{k=1}^{\infty} k^{-2a} \leq 1 + \int_{1}^{\infty} x^{-2a} dx = 1 + \frac{1}{2a - 1}$, it holds that 
    \begin{align*}
        Q_{\sigma_{\epsilon};t_0} \leq \eta_0^2(1 + \frac{1}{2a - 1}) \leq \eta_0^2(1 + \frac{1}{2a - 1}) \omega_a(\ell_{\eps}),
    \end{align*} 
    where the last inequality is by $\omega_a(\ell_{\eps}) = \ell_{\eps}^{1-a} \geq 1$ given $\ell_{\eps} \geq 1$. Finally, in the case where $a=1$,  we have 
    \begin{align*}
        Q_{\sigma_{\epsilon};t_0} \leq \eta_0^2 \sum_{t=0}^{\infty} (t + 1)^{-2} = \frac{\pi^2}{6} \eta_0^2 \leq \frac{\pi^2}{6} \eta_0^2 \log \ell_{\eps} = \frac{\pi^2}{6} \eta_0^2 \omega_1(\ell_{\eps}),
    \end{align*}
    where we used $\omega_1(\ell_{\eps}) = \log \ell_{\eps} \geq 1$ given $\ell_{\eps} \geq e$. In summary, we let $C_2 = \frac{\eta_0^2}{1-2a} B^{\frac{1 - 2a}{1 -a}}$ when $\frac{1}{3} < a < \frac{1}{2}$, $C_2 = \eta_0^2 (3 + 2 \log B)$ when $a = \frac{1}{2}$, $C_2 = \eta_0^2 (1 + \frac{1}{2a -1})$ when $\frac{1}{2} < a < 1$, and $C_2 = \frac{\pi^2}{6} \eta_0^2$ when $a = 1$, then it holds that $Q_{\sigma_{\epsilon};t_0} \leq C_2 \omega_a(\ell_{\eps})$. 
        
    Define $U_{\sigma_{\eps}} := S_{\sigma_{\eps}} - S_{T_*}$ and $B_{0} := S_{t_0} - S_{T_*}$ (note that $B_0$ is a constant independent of $\eps$). Then it holds that $V_{\eps} = U_{\sigma_{\eps}} - B_0$.  Substituting this into \eqref{eq:log_mm}, together with the bound $\sum_{t=t_0}^{\sigma_{\eps} -1} \eta_t L_t \leq \frac{4 \log 2}{n \gamma_{\infty}}$, we obtain
    \begin{align}
        \gamma_{\infty} U_{\sigma_{\eps}} &\leq \log \frac{L_{t_0}}{L_{\sigma_{\eps}}} + \gamma_{\infty} B_0 + \frac{d}{\Lambda_{\eps}} U_{\sigma_{\eps}} - \frac{d B_0}{\Lambda_{\eps}} + C_{\rm md} Q_{\sigma_{\epsilon};t_0} + 2 \log 2 \nonumber \\ 
        &\leq \log \frac{L_{t_0}}{L_{\sigma_{\eps}}} + \gamma_{\infty} B_0 + \frac{d}{\Lambda_{\eps}} U_{\sigma_{\eps}} + C_{\rm md} Q_{\sigma_{\epsilon};t_0} + 2 \log 2 \nonumber \\
        &\leq \ell_{\eps} + C_1 + \gamma_{\infty} B_0 + \frac{d}{\Lambda_{\eps}} U_{\sigma_{\eps}} + C_{\rm md} C_2 \omega_a (\ell_{\eps}) + 2 \log 2 \nonumber \\ 
        &\leq \ell_{\eps} + \frac{d}{\Lambda_{\eps}} U_{\sigma_{\eps}} + C_{\rm md} C_2 \omega_a(\ell_{\eps}) + C_3 \nonumber \nonumber \\
        &\leq \ell_{\eps} + C_{4} [1 + \frac{U_{\sigma_{\eps}}}{\Lambda_{\eps}} + \omega_a(\ell_{\eps})], \label{eq:log_5}
    \end{align}
    where $C_3 := C_1 + \gamma_{\infty} B_0 + 2 \log 2$ and $C_4 := \max \{ d, C_{\rm md} C_2, C_3\}$. Note that $U_{\sigma_{\eps}}$ appears in both sides of the previous equation. We remove it in two stages. We first show the coarse bound $U_{\sigma_{\eps}} = O(\ell_{\eps})$. Choosing $\eps$ sufficiently small such that $\frac{C_4}{\Lambda_{\eps}} \leq \frac{\gamma_{\infty}}{2}$, we obtain from \eqref{eq:log_5} that 
    \begin{align*}
        \frac{\gamma_{\infty}}{2} U_{\sigma_{\eps}} \leq (\gamma_{\infty} - \frac{C_4}{\Lambda_{\eps}}) U_{\sigma_{\eps}} \leq \ell_{\eps} + C_4 [1 + \omega_{a}(\ell_{\eps})]. 
    \end{align*}
    Therefore, it holds that $U_{\sigma_{\eps}} \leq \frac{2}{\gamma_{\infty}} [\ell_{\eps} + C_4(1+ \omega_a(\ell_{\eps}))]$. 
    Given $\omega_{a}(\ell_{\eps}) \leq \ell_{\eps}$ (as $\ell_{\eps} \geq 1$), we further have
    \begin{align*}
        U_{\sigma_{\eps}} \leq \frac{2(1 + 2C_4)}{\gamma_{\infty}} \ell_{\eps} = C_U \ell_{\eps},
    \end{align*}
    where $C_U := \frac{2(1 + 2C_4)}{\gamma_{\infty}}$. This implies that $\frac{U_{\sigma_{\eps}}}{\Lambda_{\eps}} \leq C_U \frac{\ell_{\eps}}{\Lambda_{\eps}}$. By the definition of $\ell_{\eps}$ and $\Lambda_{\eps}$, it holds that $\frac{\ell_{\eps}}{\Lambda_{\eps}} = \omega_{a}(\ell_{\eps})$. Therefore, we have that $\frac{U_{\sigma_{\eps}}}{\Lambda_{\eps}} \leq C_U \omega_{a}(\ell_{\eps})$. Substituting this into \eqref{eq:log_5}, we obtain
    \begin{align*}
        \gamma_{\infty} U_{\sigma_{\eps}} \leq \ell_{\eps} + C_4 [1 + C_U \omega_{a}(\ell_{\eps}) + \omega_a(\ell_{\eps})]. 
    \end{align*}
    Given $\ell_{\eps} \geq e$, it holds that $\omega_{a}(\ell_{\eps}) \geq 1$ and consequently $1 + (C_U + 1) \omega_{a}(\ell_{\eps}) \leq (C_U + 2) \omega_a(\ell_{\eps})$. Therefore, we have $\gamma_{\infty} U_{\sigma_{\eps}} \leq \ell_{\eps} + C_4 (C_U + 2) \omega_{a}(\ell_{\eps})$. After dividing by $\gamma_{\infty}$, we obtain
    \begin{align*}
        U_{\sigma_{\eps}} \leq \frac{\ell_{\eps}}{\gamma_{\infty}} + \frac{C_4 (C_U + 2)}{\gamma_{\infty}} \omega_a(\ell_{\eps}).
    \end{align*}
    From this, we conclude that $S_{\sigma_{\eps}} - S_{T_*} \leq \frac{\ell_{\eps}}{\gamma_{\infty}} + C_{\sigma} \omega_a(\ell_{\eps})$ where $C_{\sigma} := \frac{C_4 (C_U + 2)}{\gamma_{\infty}}$. Finally, we give concrete choices of $\eps$ such that all conditions above are satisfied. 
    We define:
    \begin{align*}
        M_{\delta} := \max \{1, 2 \delta, \frac{4 d}{\gamma_{\infty}}, \frac{2 C_4}{\gamma_{\infty}} \}, \quad L_{1,\delta}
        :=
        \begin{cases}
        \displaystyle
        \max\left\{
        \mathrm{e},\,
        2\log\left(\frac{2}{c_{t_0}}\right),\,
        M_{\delta}^{\,1/a}
        \right\},
        & \displaystyle \frac{1}{3}<a<1,
        \\[3mm]
        \displaystyle
        \max\left\{
        \mathrm{e},\,
        2\log\left(\frac{2}{c_{t_0}}\right),\,
        M_{\delta}^{\,2}
        \right\},
        & a=1.
        \end{cases}
    \end{align*}
    Let $\eps_1(\delta) := e^{-L_{1,\delta}}$. We now verify every $0 < \eps \leq \eps_{1}(\delta)$ satisfies all the desired properties. Given $\eps \leq e^{-L_{1,\delta}}$, it holds that $\ell_{\eps} = \log \frac{1}{\eps} \geq L_{1,\delta}$. Consequently, it holds that $\ell_{\eps} \geq e$ and $\ell_{\eps} \geq 2 \log \frac{2}{c_{t_0}}$. Given $\ell_{\eps} \geq M_{\delta}^{1/a}$ when $\frac{1}{3} < a < 1$, we have $\Lambda_{\eps} = \ell_{\eps}^a \geq (M_{\delta}^{1/a})^{a} = M_{\delta}$. Given $\ell_{\eps} \geq M_{\delta}^{2}$ when $a = 1$, we have $\Lambda_{\eps} = \frac{\ell_{\eps}}{\log \ell_{\eps}} \geq \sqrt{\ell_{\eps}} \geq M_{\delta}$ where we used $\log x \leq \sqrt{x}$ when $x \geq 1$. Therefore, it holds that $\Lambda_{\eps} \geq M_{\delta}$ in both cases. Moreover, it holds that 
    \begin{align*}
        \eps \Lambda_{\eps} \stackrel{(a)}{\leq} e^{-\ell_{\eps}} \ell_{\eps} \stackrel{(b)}{\leq} e^{-\ell_{\eps}/2} \stackrel{(c)}{\leq} \frac{c_{t_0}}{2} < c_{t_0},
    \end{align*}
    where (a) is by $\Lambda_{\eps} \leq \ell_{\eps}$ (as $\ell_{\eps} \geq e > 1$), (b) is by $\log x < \frac{x}{2}$ for every $x > 0$, and (c) is by $\ell_{\eps} \geq 2 \log \frac{2}{c_{t_0}}$. In summary, the following conditions simultaneously hold for every $0 < \eps \leq \eps_{1}(\delta)$: $\ell_{\eps} \geq e > 1$, $\Lambda_{\eps} \geq 2\delta > \delta$, $\Lambda_{\eps} \geq 1$, $\frac{d}{\Lambda_{\eps}} \leq \frac{\gamma_{\infty}}{4}$, $\frac{C_4}{\Lambda_{\eps}} \leq \frac{\gamma_{\infty}}{2}$, and $\eps \Lambda_{\eps} < c_{t_0}$. Consequently, it holds that $\sigma_{\eps} > t_0$ and $\sigma_{\eps} \leq \tau_{\eps}$. For the remainder of the proof, we assume $0< \eps \leq \eps_1(\delta)$ and track additional conditions on $\eps$ that haven't been included. 
     
    \paragraph{Part 2} Next, we want to show $S_{\tau_{\eps}} - S_{T_*} \leq \frac{\ell_{\eps}}{\gamma_{\infty}} + O_{\delta} (\omega_a(\ell_{\eps}))$. If $\sigma_{\eps} = \tau_{\eps}$, the desired  bound follows immediately. Below, we consider the nontrivial case $\sigma_{\eps} < \tau_{\eps}$. By the definition of $\tau_{\eps}$, we have 
    \begin{align}
        G_t > \delta \eps, \qquad \sigma_{\eps} \leq t < \tau_{\eps}. \label{eq:log_6}
    \end{align}
    \cref{lem:softsign} gives $\ip{g}{\sigma_\eps(g)} \ge\frac{\norm{g}_1^2}{\norm{g}_1+d\eps}$. From $\norm{g_t}_1 \geq \gamma_{\infty} G_t$ and $\norm{g_t}_1 \leq R_1 G_t$, we have 
    \begin{align*}
        \langle g_t, \sigma_{\eps} (g_t) \rangle &\geq \frac{\norm{g_t}_1^2}{\norm{g_t}_1 + d \eps} \geq \frac{\gamma_{\infty}^2 G_t^2}{R_1 G_t + d \eps} = \frac{\gamma_{\infty}^2}{R_1 + d \eps / G_t} G_t. 
    \end{align*}
    From \eqref{eq:log_6}, we have $\frac{d \eps}{G_t} < \frac{d}{\delta}$. Consequently, it holds that $\frac{1}{R_1 + d \eps / G_t} \geq \frac{1}{R_1 + d/\delta}$. Define $\kappa_{\delta}:= \frac{\gamma_{\infty^2}}{R_1 + d/\delta} = \frac{\gamma_{\infty}^2 \delta}{R_1 \delta + d} > 0$, then it holds that
    \begin{align*}
        \langle g_t, \sigma_{\eps} (g_t) \rangle \geq \kappa_{\delta} G_t, \qquad \sigma_{\eps} \leq t < \tau_{\eps}. 
    \end{align*}
    From \eqref{eq:master-descent-app}, we have
    \begin{align*}
        L_{t+1} \leq L_t - \kappa_{\delta} \eta_t G_t + C_{\rm md} \eta_t^2 G_t = L_t - \eta_t (\kappa_{\delta} - C_{\rm md} \eta_t) G_t. 
    \end{align*}
    Given $\sigma_{\eps} \rightarrow \infty$, we have $\eta_{\sigma_{\eps}} \rightarrow 0$. Consequently, for sufficiently small $\eps$, we have $C_{\rm md} \eta_t \leq C_{\rm md} \eta_{\sigma_{\eps}} \leq \frac{\kappa_{\delta}}{2}, t \geq \sigma_{\eps}$. This leads to
    \begin{align*}
        L_{t+1} \leq L_t - \frac{\kappa_{\delta}}{2} \eta_t G_t, \qquad \sigma_{\eps} \leq t < \tau_{\eps}. 
    \end{align*}
    For $t \geq \sigma_{\eps} \geq T_*$, we have $G_t \geq \frac{1}{2} L_t$ by \cref{prop:uniform-clock}. Using this, we further obtain 
    \begin{align}
        L_{t+1} \leq Lt - \frac{\kappa_{\delta}}{4} \eta_t L_t = (1 - c_{\delta} \eta_t) L_t, \qquad \sigma_{\eps} \leq t < \tau_{\eps}. \label{eq:log_7} 
    \end{align}
    where $c_{\delta} := \frac{\kappa_{\delta}}{4}$. We additionally require $\eps$ to be sufficiently small such that $\eta_{\sigma_{\eps}} \leq \frac{2}{\kappa_{\delta}}$. This would ensure that $\frac{1}{2} \leq 1 - \frac{\kappa_{\delta}}{4} \eta_t \leq 1$. Iterating \eqref{eq:log_7} from $\sigma_{\eps}$ to any $\sigma_{\eps} \leq t \leq \tau_{\eps}$, it holds that
    \begin{align}
        L_t &\leq L_{\sigma_{\eps}} \prod_{s = \sigma_{\eps}}^{t-1} (1 - c_{\delta} \eta_s) \leq L_{\sigma_{\eps}} \prod_{s = \sigma_{\eps}}^{t-1} e^{-c_{\delta} \eta_s} = L_{\sigma_{\eps}} \exp(-c_{\delta} \sum_{s = \sigma_{\eps}}^{t-1} \eta_s) \nonumber \\
        &= L_{\sigma_{\eps}} \exp[-\frac{\kappa_{\delta}}{4} (S_t - S_{\sigma_{\eps}})]. \label{eq:log_8} 
    \end{align}
    By the definition of $\sigma_{\eps}$, we have $G_{\sigma_{\eps}} \leq \eps \Lambda_{\eps}$. Since $\sigma_{\eps} \geq T_*$, \cref{prop:uniform-clock} gives $L_{\sigma_{\eps}} \leq 2 G_{\sigma_{\eps}}$. Consequently, it holds that $L_{\sigma_{\eps}} \leq 2 \eps \Lambda_{\eps}$. Define $A_{\eps} := \frac{4}{\kappa_{\delta}} \log \frac{2 \Lambda_{\eps}}{\delta}$. Given $\Lambda_{\eps} > \delta$, it holds that $A_{\eps} > 0$. Define $\bar{\tau}_{\eps} := \inf \{ t \geq \sigma_{\eps}: S_t - S_{\sigma_{\eps}} \geq A_{\eps} \}$. This time is finite because $S_t - S_{\sigma_{\eps}} = \sum_{s = \sigma_{\eps}}^{t-1} \eta_s \rightarrow \infty$ as $t \rightarrow \infty$. We claim that $\tau_{\eps} \leq \bar{\tau}_{\eps}$. Suppose (for contradiction) that $\bar{\tau}_{\eps} < \tau_{\eps}$. Applying \eqref{eq:log_8} and using $L_{\sigma_{\eps}} \leq 2 \eps \Lambda_{\eps}$, we obtain
    \begin{align*}
        L_{\bar{\tau}_{\eps}} \leq L_{\sigma_{\eps}} \exp[-\frac{\kappa_{\delta}}{4} (S_{\bar{\tau}_{\eps}} - S_{\sigma_{\eps}})] \leq 2 \eps \Lambda_{\eps} \exp[-\frac{\kappa_{\delta}}{4} A_{\eps}] = 2 \eps \Lambda_{\eps} \frac{\delta}{2 \Lambda_{\eps}} = \delta \eps. 
    \end{align*}
    Consequently, we have $G_{\bar{\tau}_{\eps}} \leq L_{\bar{\tau}_{\eps}} \leq \delta \eps$, which means that $\bar{\tau}_{\eps}$ belongs to the set $\{ t \geq T^*: G_t \leq \delta \eps \}$. By the definition of $\tau_{\eps}$, it holds that $\tau_{\eps} \leq \bar{\tau}_{\eps}$, which is a contradiction. Therefore, the claim $\tau_{\eps} \leq \bar{\tau}_{\eps}$ holds. 
    By minimality of $\bar{\tau}_{\eps}$, it holds that $S_{\bar{\tau}_{\eps} - 1} - S_{\sigma_{\eps}} < A_{\eps}$ and $S_{\bar{\tau}_{\eps}} - S_{\sigma_{\eps}} \geq A_{\eps}$. This implies that $S_{\bar{\tau}_{\eps}} - S_{\sigma_{\eps}} = S_{\bar{\tau}_{\eps} - 1} - S_{\sigma_{\eps}} + \eta_{\bar{\tau}_{\eps} - 1} < A_{\eps} + \eta_{\bar{\tau}_{\eps} - 1}$. Since $\eta_t$ is decreasing and $\bar{\tau}_{\eps} - 1 \geq \sigma_{\eps}$, we have $\eta_{\bar{\tau}_{\eps} - 1} \leq \eta_{\sigma_{\eps}}$ and consequently $S_{\bar{\tau}_{\eps}} - S_{\sigma_{\eps}} < A_{\eps} + \eta_{\sigma_{\eps}}$. Given $\tau_{\eps} \leq \bar{\tau}_{\eps}$ and $S_t$ is increasing, we conclude that
    \begin{align}
        S_{\tau_{\eps}} - S_{\sigma_{\eps}} \leq S_{\bar{\tau}_{\eps}} - S_{\sigma_{\eps}} \leq \frac{4}{\kappa_{\delta}} \log \frac{2 \Lambda_{\eps}}{\delta} + \eta_{\sigma_{\eps}} \leq \frac{4}{\kappa_{\delta}} \log \Lambda_{\eps} + \frac{4}{\kappa_{\delta}} \log \frac{2}{\delta} + \eta_{0}. \label{eq:log_tt}
    \end{align}
    Next, we show that $\log \Lambda_{\eps} = O(\omega_a(\ell_{\eps}))$. For $a \in (\frac{1}{3},1)$, we have $\Lambda_{\eps} =\ell_{\eps}^a$ and $\log \Lambda_{\eps} = a \log \ell_{\eps}$. For any $p > 0$, since $u^p \geq 1$ when $u \geq 1$, it holds that
    \begin{align*}
        \log x = \int_{1}^x \frac{1}{u} du \leq \int_{1}^x u^{p-1} du = \frac{x^p - 1}{p} \leq \frac{x^p}{p}.
    \end{align*}
    Taking $x = \ell_{\eps}$ and $p = 1 - a$ gives $\log \ell_{\eps} \leq \frac{\ell_{\eps}^{1-a}}{1 - a}$. Therefore, we conclude that
    \begin{align*}
        \log \Lambda_{\eps} = a \log \ell_{\eps} \leq \frac{a}{1-a} \ell_{\eps}^{1-a} = \frac{a}{1-a} \omega_{a} (\ell_{\eps}). 
    \end{align*}
    For $a = 1$, we have $\Lambda_{\eps} = \frac{\ell_{\eps}}{\log \ell_{\eps}}$. For $\ell_{\eps} \geq e$, it holds that 
    \begin{align*}
        \log \Lambda_{\eps} = \log \ell_{\eps} - \log \log \ell_{\eps} \leq \log \ell_{\eps} = \omega_1 (\ell_{\eps}).
    \end{align*}
    Therefore, by defining $c_a := \frac{a}{1-a}$ when $a \in (\frac{1}{3},1)$ and $c_a := 1$ when $a = 1$, it holds that $\log \Lambda_{\eps} \leq c_a \omega_{a}(\ell_{\eps})$. Given $\omega_{a}(\ell_{\eps}) \geq 1$ (as $\ell_{\eps} \geq e$), we obtain via \eqref{eq:log_tt} that 
    \begin{align*}
        S_{\tau_{\eps}} - S_{\sigma_{\eps}} \leq C_{\rm tr,\delta} \omega_a(\ell_{\eps}),
    \end{align*}
    where $C_{\rm tr,\delta} := \eta_0 + \frac{4}{\kappa_{\delta}} (c_a + \log \frac{2}{\delta}) = \eta_0 + \frac{4(R_1 \delta + d)}{\gamma_{\infty}^2 \delta} (c_a + \log \frac{2}{\delta})$. 
    Previously, we have shown that $S_{\sigma_{\eps}} - S_{T_*} \leq \frac{\ell_{\eps}}{\gamma_{\infty}} + C_{\sigma} \omega_a(\ell_{\eps})$. Combining them together, we obtain 
    \begin{align}
        S_{\tau_{\eps}} - S_{T_*} = S_{\tau_{\eps}} - S_{\sigma_{\eps}} + S_{\sigma_{\eps}} - S_{T_*} \leq \frac{\ell_{\eps}}{\gamma_{\infty}} + C_{\tau,\delta} \omega_a(\ell_{\eps}), \label{eq:log_S}  
    \end{align}
    where $C_{\tau, \delta} := C_{\sigma} + C_{\rm tr,\delta}$. The additional conditions on the choice of $\eps$ in this part are $C_{\rm md} \eta_{\sigma_{\eps}} \leq \frac{\kappa_{\delta}}{2}$ and $\eta_{\sigma_{\eps}} \leq \frac{2}{\kappa_{\delta}}$. Choose $\bar{\eta}_{2,\delta} := \min \{ \eta_0, \frac{\kappa_{\delta}}{2 C_{\rm md}},\frac{2}{\kappa_{\delta}}\}$ and $N_{2,\delta} := \max \{ t_0, \lceil (\frac{\eta_0}{\bar{\eta}_{2,\delta}})^{1/a} \rceil \}$. It holds that $\eta_{N_{2,\delta}} = \frac{\eta_0}{(N_{2,\delta} + 1)^a} \leq \bar{\eta}_{2,\delta}$. By \cref{lem:proxy-path-ratio}, we have for every $0 \leq t \leq N_{2,\delta}$ that
    \begin{align}
        G_t \geq G_0 e^{-K_0 S_t} \geq G_0 e^{-K_0 S_{N_{2,\delta}}} = c_{2,\delta}. \label{eq:pp1}
    \end{align}
    Define $\eps_2(\delta) := (\frac{c_{2,\delta}}{2})^2$ and choose $0 < \eps \leq \min \{ \eps_1(\delta), \eps_2(\delta)\}$. Then, we have 
    \begin{align}
        \eps \Lambda_{\eps} \leq e^{-\ell_{\eps}} \ell_{\eps} \leq e^{-\ell_{\eps}/2} = \sqrt{\eps} \leq \frac{c_{2,\delta}}{2} < c_{2,\delta}. \label{eq:pp2}
    \end{align}
    Combining \eqref{eq:pp1} and \eqref{eq:pp2}, it holds that
    \begin{align*}
        G_t > \eps \Lambda_{\eps}, \qquad 0 \leq t \leq N_{2,\delta}.
    \end{align*}
    This implies that $\sigma_{\eps} > N_{2,\delta}$ and consequently $\eta_{\sigma_{\eps}} \leq \eta_{N_{2,\delta}} \leq \bar{\eta}_{2,\delta}$. Therefore, it holds that $C_{\rm md} \eta_{\sigma_{\eps}} \leq C_{\rm md} \bar{\eta}_{2,\delta} \leq \frac{\kappa_{\delta}}{2}$ and $\eta_{\sigma_{\eps}} \leq \bar{\eta}_{2,\delta} \leq \frac{2}{\kappa_{\delta}}$. For the remainder of the proof, we assume $0< \eps \leq \min \{ \eps_1(\delta), \eps_2(\delta)\}$ and track additional conditions on $\eps$ that haven't been included. 
    
    \paragraph{Part 3} We prove that $\norm{w_{\tau_{\eps}}}_{\infty} \leq S_{\tau_{\eps}} - S_{T_*} + O(\omega_a(\ell_{\eps}))$. Its proof uses the radius bound of \cref{lem:radius}. Define $k_{\eps} := \lceil \ell_{\eps} \rceil$. Given $\eps$ is chosen such that $\ell_{\eps} \geq e$, it ensures that $k_{\eps} \leq \ell_{\eps} + 1 \leq 2 \ell_{\eps}$ and $\log \ell_{\eps} \geq 1$. 
    In the case of $a \in (\frac{1}{3},1)$, using integral approximation we have
    \begin{align*}
        S_{k_{\eps}} &= \eta_0 \sum_{j=1}^{k_{\eps}} j^{-a} \leq \eta_0 (1 + \int_{1}^{k_{\eps}} x^{-a} dx) = \eta_0 (1 + \frac{k_{\eps}^{1-a} - 1}{1 - a}) = \eta_0 \frac{k_{\eps}^{1-a} - a}{1 - a} \\
        &\leq \frac{\eta_0}{1-a} k_{\eps}^{1-a} \leq \frac{2^{1-a} \eta_0}{1-a} \ell_{\eps}^{1-a}. 
    \end{align*}
    In the case where $a = 1$, we have
    \begin{align*}
        S_{k_{\eps}} = \eta_0 \sum_{j=1}^{k_{\eps}} \frac{1}{j} \leq \eta_0 (1 + \log k_{\eps}) \leq \eta_0 (1 + \log 2 + \log \ell_{\eps}) \leq \eta_0 (2+ \log 2) \log \ell_{\eps}. 
    \end{align*}
    Define $K_S := \frac{2^{1-a} \eta_0}{1 - a}$ when $a \in (\frac{1}{3},1)$ and $K_S := \eta_0 (2 + \log 2)$ when $a = 1$, it then holds that $S_{k_{\eps}} \leq K_S \omega_a(\ell_{\eps})$. This implies that $\frac{S_{k_{\eps}}}{\ell_{\eps}} \leq K_S \frac{\omega_{a}(\ell_{\eps})}{\ell_{\eps}} \rightarrow 0$ and therefore $S_{k_{\eps}} = o(\ell_{\eps})$.  

    Define $L_{3,\delta} := \max \{ T_*, 2[\log \frac{\delta}{G_0}]_{+}, \Omega_a \}$ where $\Omega_a := (4 K_0 K_S)^{1/a}$ when $\frac{1}{3} < a < 1$ and $(4 K_0 K_S)^2$ when $a = 1$. We show that $T_* \leq k_{\eps} < \tau_{\eps}$ for every $0 < \eps \leq \eps_3(\delta) := e^{-L_{3,\delta}}$. \cref{lem:proxy-path-ratio} gives $G_{\tau_{\eps}} \geq G_0 \exp(-K_0 S_{\tau_{\eps}})$. By the definition of $\tau_{\eps}$, we have $G_{\tau_{\eps}} \leq \delta \eps$ and therefore $G_0 \exp(-K_0 S_{\tau_{\eps}}) \leq \delta \eps$. 
    This implies that $S_{\tau_{\eps}} \geq \frac{1}{K_0} \log \frac{G_0}{ \delta \eps} = \frac{1}{K_0} (\ell_{\eps} + \log \frac{G_0}{\delta})$. Therefore, given $\ell_{\eps} \geq 2 [\log \frac{\delta}{G_0}]_{+}$, it holds that $S_{\tau_{\eps}} \geq \frac{\ell_{\eps}}{2 K_0}$. When $\frac{1}{3} < a < 1$, the condition $\ell_{\eps} \geq \Omega_a$ implies that $K_S \ell_{\eps}^{1-a} \leq \frac{\ell_{\eps}}{4 K_0}$, using $\log x \leq \sqrt{x}, x \geq 1$, the condition $\ell_{\eps} \geq \Omega_1 = (4 K_0 K_S)^2$ implies that $K_S \log \ell_{\eps} \leq K_S \sqrt{\ell_{\eps}} \leq \frac{\ell_{\eps}}{4 K_0}$. Consequently, in both cases, we have 
    \begin{align*}
        S_{k_{\eps}} \leq K_S \omega_a (\ell_{\eps}) \leq \frac{\ell_{\eps}}{4 K_0} < \frac{\ell_{\eps}}{2 K_0} \leq S_{\tau_{\eps}}. 
    \end{align*}
    Since $S_t$ is strictly increasing, this implies that $k_{\eps} < \tau_{\eps}$. Given $\ell_{\eps} \geq T_*$, it holds that $k_{\eps} = \lceil \ell_{\eps} \rceil \geq \ell_{\eps}$. Therefore, we conclude that $T_* \leq k_{\eps} < \tau_{\eps}$ for every $0 < \eps \leq \eps_{3}(\delta)$. 
    

    From \cref{lem:uniform-D} and $w_{k_{\eps}} = w_0 - \sum_{t=0}^{k_{\eps} - 1} \eta_t D_t$, we have
    \begin{align*}
        \norm{w_{k_{\eps}}}_{\infty} &\leq \norm{w_0}_{\infty} + \sum_{t=0}^{k_{\eps} - 1} \eta_t \norm{D_t}_{\infty} \leq \norm{w_0}_{\infty} + C_D S_{k_{\eps}} \\ 
        &\leq \norm{w_0}_{\infty} + C_D K_S \omega_{a}(\ell_{\eps}) \leq (\norm{w_0}_{\infty} + C_D K_S) \omega_a (\ell_{\eps}) \\
        &= K_{p} \omega_a(\ell_{\eps}),
    \end{align*}
    where $K_{p} := \norm{w_0}_{\infty} + C_D K_S$ and we used $\omega_{a}(\ell_{\eps}) \geq 1$ given $\ell_{\eps} \geq e$. Therefore, it holds that $\norm{w_{k_{\eps}}}_{\infty} = O(\omega_{a}(\ell_{\eps}))$. 

    Given $|g_t [j]| \leq R_1 G_t \leq R_1, \forall j$, we can set $B_g = R_1$ in \cref{lem:radius}. With this $B_g$, we define $H_{\eps}:= 1 + \log \frac{R_1^2 + \eps^2}{(1 - \beta_2) \eps^2}$. It then holds that
    \begin{align}
        H_{\eps} = 2 \ell_{\eps} + 1 + \log \frac{R_1^2 + \eps^2}{1 - \beta_2} \leq 2 \ell_{\eps} + H_0, \label{eq:log_9}
    \end{align}
    where $H_0:= 1 + \log \frac{R_1^2 + 1}{1 - \beta_2}$ and we used $R_1^2 + \eps^2 \leq R_1^2 + 1$ given $0 < \eps \leq 1$. 

    Next, we bound the term $\eta_{k_{\eps}} H_{\eps}$. Because $k_{\eps} + 1 \geq \ell_{\eps}$, it holds that $\eta_{k_{\eps}} = \eta_0 \frac{1}{(k_{\eps}+1)^a} \leq \eta_0 \ell_{\eps}^{-a}$ and consequently $\eta_{k_{\eps}} H_{\eps} \leq \eta_0 \ell_{\eps}^{-a} (2 \ell_{\eps} + H_0)$ by \eqref{eq:log_9}. For $a \in (\frac{1}{3},1)$, we have $\ell_{\eps}^{-a} \leq \ell_{\eps}^{1 - a}$ since $\ell_{\eps} \geq 1$. Therefore, it holds that
    \begin{align*}
        \eta_{k_{\eps}} H_{\eps} \leq 2 \eta_0 \ell_{\eps}^{1-a} + H_0 \eta_0 \ell_{\eps}^{-a} \leq \eta_0 (2 + H_0) \ell_{\eps}^{1-a}
    \end{align*}
    In the case where $a = 1$, we have
    \begin{align*}
        \eta_{k_{\eps}} H_{\eps} &\leq \eta_0 \ell_{\eps}^{-1} (2 \ell_{\eps} + H_0) = \eta_0 (2 + \frac{H_0}{\ell_{\eps}}) \\
        &\leq \eta_0 (2 + H_0) \leq \eta_0 (2 + H_0) \log \ell_{\eps},
    \end{align*}
    where we used $\ell_{\eps} \geq 1$ and $\log \ell_{\eps} \geq 1$. Therefore, in both cases, we have $\eta_{k_{\eps}} H_{\eps} \leq \eta_0 (2 + H_0) \omega_a (\ell_{\eps})$. Defining $K_{r} := C_{\rm rad} \eta_0 (2 + H_0)$ and multiplying by $C_{\rm rad}$, we conclude
    \begin{align}
        C_{\rm rad} \eta_{k_{\eps}} H_{\eps} \leq K_{r} \omega_a(\ell_{\eps}). \label{eq:log_10}
    \end{align}
    Because $k_{\eps} < \tau_{\eps}$, we apply \cref{lem:radius} with $r = k_{\eps}$ and $T = \tau_{\eps}$ to obtain
    \begin{align*}
        \norm{\sum_{t = k_{\eps}}^{\tau_{\eps} - 1} \eta_t D_t  }_{\infty} &\leq S_{\tau_{\eps}} - S_{k_{\eps}} + C_{\rm rad} \eta_{k_{\eps}} H_{\eps} \\
        &\leq S_{\tau_{\eps}} - S_{k_{\eps}} + K_r \omega_a (\ell_{\eps}). 
    \end{align*}
    Based on $\norm{w_{k_{\eps}}}_{\infty} \leq K_{p} \omega_a (\ell_{\eps})$ and $w_{\tau_{\eps}} = w_{k_{\eps}} - \sum_{t = k_{\eps}}^{\tau_{\eps} - 1} \eta_t D_t$, it holds that
    \begin{align*}
        \norm{w_{\tau_{\eps}}}_{\infty} &\leq \norm{w_{k_{\eps}}}_{\infty} + \norm{\sum_{t = k_{\eps}}^{\tau_{\eps} - 1} \eta_t D_t  }_{\infty}  \\
        &\leq K_p \omega_a(\ell_{\eps}) + S_{\tau_{\eps}} - S_{k_{\eps}} + K_r \omega_a(\ell_{\eps}) \\
        &= S_{\tau_{\eps}} - S_{k_{\eps}} + (K_p + K_r) \omega_a(\ell_{\eps}). 
    \end{align*}
    Because $k_{\eps} \geq T_*$, it holds that $S_{\tau_{\eps}} - S_{k_{\eps}} \leq S_{\tau_{\eps}} - S_{T_*}$. By defining $K_{\tau} := K_p + K_r = \norm{w_0}_{\infty} + C_D K_S + C_{\rm rad} \eta_0[3 + \log \frac{R_1^2 + 1}{1 - \beta_2}]$, we conclude that 
    \begin{align}
        \norm{w_{\tau_{\eps}}}_{\infty} \leq S_{\tau_{\eps}} - S_{T^*} + K_{\tau} \omega_a(\ell_{\eps}). \label{eq:log_w}
    \end{align}
    Combining this with $S_{\tau_\epsilon} - S_{T_*} \leq \frac{\ell_{\eps}}{\gamma_{\infty}} + C_{\tau,\delta} \omega_a(\ell_{\eps})$ (shown above), we conclude that 
    \begin{align}
        \norm{w_{\tau_{\eps}}}_{\infty} \leq \frac{\ell_{\eps}}{\gamma_{\infty}} + C_{w,\delta} \omega_a (\ell_{\eps}), \label{eq:log_11}
    \end{align}
    where $C_{w,\delta} := C_{\tau,\delta} + K_{\tau}$.
    So far, the thresholds on $\eps$ that have been introduced are $\eps_1(\delta)$, $\eps_2(\delta)$, and $\eps_3(\delta)$. Consequently, we set $\Tilde{\eps}_2 (\delta) = \min \{ \eps_1(\delta), \eps_2(\delta), \eps_3(\delta)\}$ and choose $0 < \eps \leq \Tilde{\eps}_2 (\delta)$. This ensures all the requirements on $\eps$ appeared in Parts 1–3 are satisfied and the conclusion in \eqref{eq:log_11} holds.   
    
    \paragraph{Part 4} By definition, we have $\gamma_{\infty} = \max_{\norm{u}_{\infty} \leq 1} \rho(u)$. It holds that $\rho(w) \leq \gamma_{\infty} \norm{w}_{\infty}$ for every $w$. By definition of $\tau_{\eps}$, we have $G_{\tau_{\eps}} \leq \delta \eps$. Since $\tau_{\eps} \geq T_*$, \cref{prop:uniform-clock} gives $L_{\tau_{\eps}} \leq 2 G_{\tau_{\eps}} \leq 2 \delta \eps$. From \cref{lem:loss-proxy}, it also holds that $L_{\tau_{\eps}} \geq \frac{\log 2}{n} e^{-\rho(w_{\tau_{\eps}})}$. Consequently, we have $\frac{\log 2}{n} e^{-\rho(w_{\tau_{\eps}})} \leq 2 \delta \eps$. Rearranging and taking logarithms lead to 
    \begin{align}
        \rho(w_{\tau_{\eps}}) \geq \ell_{\eps} - \log \frac{2 n \delta}{\log 2}. \label{eq:lower_rho1} 
     \end{align}
     By defining $A_{\delta} := [\log \frac{2 n \delta}{\log 2}]_{+}$, we further obtain $\rho(w_{\tau_{\eps}}) \geq \ell_{\eps} - A_{\delta}$ given $A_{\delta} = \max \{ \log \frac{2 n \delta}{\log 2}, 0\} \geq \log \frac{2 n \delta}{\log 2}$. Define $R_{\eps} := \norm{w_{\tau_{\eps}}}_{\infty}$. By \eqref{eq:log_11}, it holds that $R_{\eps} \leq \frac{\ell_{\eps}}{\gamma_{\infty}} + C_{w,\delta} \omega_a(\ell_{\eps})$. Then, we have
     \begin{align*}
         \hat{\gamma}_{\infty}(w_{\tau_{\eps}}) = \frac{\rho(w_{\tau_\eps})}{R_\eps} \geq \frac{\ell_{\eps} - A_{\delta}}{R_{\eps}} \geq \frac{\ell_{\eps} - A_{\delta}}{\ell_{\eps}/\gamma_{\infty} + C_{w,\delta} \omega_a (\ell_{\eps})}. 
     \end{align*}
     Consequently, this leads to
     \begin{align*}
         \gamma_{\infty} - \hat{\gamma}_{\infty} (w_{\tau_{\eps}}) &\leq \gamma_{\infty} - \frac{\ell_{\eps} - A_{\delta}}{\ell_{\eps}/\gamma_{\infty} + C_{w,\delta} \omega_a(\ell_{\eps})} \\
         &= \frac{\gamma_{\infty} C_{w,\delta} \omega_a(\ell_{\eps}) + A_{\delta}}{\ell_{\eps} / \gamma_{\infty} + C_{w,\delta} \omega_a(\ell_{\eps})}. 
     \end{align*}
     Given $C_{w,\delta} \geq 0$ and $\omega_a(\ell_{\eps}) \geq 0$, it holds that $\frac{\ell_{\eps}}{\gamma_{\infty}} + C_{w,\delta} \omega_a(\ell_{\eps}) \geq \frac{\ell_{\eps}}{\gamma_{\infty}}$. Consequently, we have
     \begin{align*}
         \gamma_{\infty} - \hat{\gamma}_{\infty}(w_{\tau_{\eps}}) \leq \gamma_{\infty} \frac{\gamma_{\infty} C_{w,\delta} \omega_a(\ell_{\eps}) + A_{\delta}}{\ell_{\eps}} = \gamma_{\infty}^2 C_{w,\delta} \frac{\omega_a(\ell_{\eps})}{\ell_{\eps}} + \gamma_{\infty} A_{\delta} \frac{1}{\ell_{\eps}}. 
      \end{align*}
      Recall that $r_{\eps} = \frac{\omega_a(\ell_{\eps})}{\ell_{\eps}}$. Given $\omega_a(\ell_{\eps}) \geq 1$, it holds that $r_{\eps} \geq \frac{1}{\ell_{\eps}}$. Then, we have
      \begin{align}
          0 \leq \gamma_{\infty} - \hat{\gamma}_{\infty}(w_{\tau_{\eps}})  \leq \gamma_{\infty}^2 C_{w,\delta} r_{\eps} + \gamma_{\infty} A_{\delta} r_{\eps} = C_{\infty,\delta} r_{\eps}, \label{eq:squeeze}
      \end{align}
      where $C_{\infty,\delta} := \gamma_{\infty}^2 C_{w,\delta} + \gamma_{\infty} A_{\delta}$. For $a \in (\frac{1}{3}, 1)$, it holds that $r_{\eps} = \ell_{\eps}^{-a} \rightarrow 0$ because $a > 0$ and $\ell_{\eps} \rightarrow \infty$. For $a = 1$, we again have $r_{\eps} = \frac{\log \ell_{\eps}}{\ell_{\eps}} \rightarrow 0$. Consequently, by \eqref{eq:squeeze}, it holds that $\hat{\gamma}_{\infty} (w_{T_1^{\delta}(\eps)}) \rightarrow \gamma_{\infty}$ as $\eps \downarrow 0$. 

     From $\rho(w_{\tau_{\eps}}) \leq \gamma_{\infty} \norm{w_{\tau_{\eps}}}_{\infty}$, we further obtain $\norm{w_{\tau_{\eps}}}_{\infty} \geq \frac{\ell_{\eps}}{\gamma_{\infty}} - \frac{1}{\gamma_{\infty}} \log \frac{2 n \delta}{\log 2}$ from \eqref{eq:lower_rho1}. By defining $A_{\delta} := [\log \frac{2 n \delta}{\log 2}]_{+}$, it holds that
     \begin{align*}
         \norm{w_{\tau_{\eps}}}_{\infty} \geq \frac{\ell_{\eps}}{\gamma_{\infty}} - \frac{A_{\delta}}{\gamma_{\infty}} \geq \frac{\ell_{\eps}}{\gamma_{\infty}} - \frac{A_{\delta}}{\gamma_{\infty}} \omega_a (\ell_{\eps}), 
     \end{align*}
     where we used $\omega_a(\ell_{\eps}) \geq 1$ in the last inequality. Combining this with \eqref{eq:log_11}, we obtain 
     $|\norm{w_{\tau_{\eps}}}_{\infty} - \frac{\ell_\eps}{\gamma_{\infty}}| \leq C_{\delta}^{+} \omega_a(\ell_{\eps})$ where $C_{\delta}^{+}:= \max \{ C_{w,\delta}, \frac{A_{\delta}}{\gamma_{\infty}}\}$. Therefore, we conclude that 
     \begin{align}
        \norm{w_{\tau_{\eps}}}_{\infty} = \frac{\ell_{\eps}}{\gamma_{\infty}} + O_{\delta}(\omega_a(\ell_{\eps})). \label{eq:w_scale}
     \end{align}

    \paragraph{Part 5} Denote $w_{\eps}:= w_{\tau_{\eps}}$. Since $\ell_\eps\ge e$ and
$\delta\in(0,1)$, we have
$
G_{\tau_\eps}\le \delta\eps<\frac{1}{2}=G(0)$.
Therefore, we have $w_\eps\neq 0$. Define $x_{\eps} := \frac{w_{\eps}}{\norm{w_\eps}_{\infty}}$. Fixing any $0 < \eps \leq \Tilde{\eps}_2(\delta)$, the conclusion $0 \leq \gamma_{\infty} - \hat{\gamma}_{\infty}(w_{\eps}) \leq C_{\infty,\delta} r_{\eps}$ in \eqref{eq:squeeze} holds. The margin is positively homogeneous as $\rho(cw) = \min_i z_i^T (cw) = c \min_i z_i^T w = c \rho(w), \, c \geq 0$. Therefore, it holds that $\rho(x_{\eps}) = \rho(\frac{w_{\eps}}{\norm{w_{\eps}}_{\infty}}) = \frac{\rho(w_{\eps})}{\norm{w_{\eps}}_{\infty}} = \hat{\gamma}_{\infty}(w_{\eps})$. Therefore, we have
    \begin{align*}
        0 \leq \gamma_{\infty} - \rho(x_{\eps}) \leq C_{\infty,\delta} r_{\eps}. 
    \end{align*}
    From \cref{lem:hoff}, we know $\cM_{\infty} = \{u:Au \leq b\}$ where
\begin{align*}
    A
:=
\begin{pmatrix}
I_d \\
-I_d \\
-Z
\end{pmatrix},
\qquad
b
:=
\begin{pmatrix}
\mathbf{1}_d \\
\mathbf{1}_d \\
-\gamma_\infty \mathbf{1}_n
\end{pmatrix}.
\end{align*}
We next compute $\norm{(Ax_{\eps} - b)_+}_{\infty}$. Given $\norm{x_{\eps}}_{\infty} = 1$, it holds that $x_{\eps}[j] - 1 \leq 0$ and $-x_{\eps}[j] - 1 \leq 0$ for all $j \in [d]$. Therefore, we have 
\begin{align*}
    [x_{\eps} - \mathbf{1}_d]_{+}=0, \qquad [-x_{\eps}  - \mathbf{1}_d]_{+} = 0.
\end{align*}
The remaining part is $[-Z x_{\eps} + \gamma_{\infty} \mathbf{1}_n]_{+} = ([\gamma_{\infty} - z_i^T x_{\eps}]_{+})_{i=1}^n$. Therefore, it holds that 
\begin{align*}
    \norm{(A x_{\eps} - b)_{+}}_{\infty} = \max_i [\gamma_{\infty} - z_i^T x_{\eps}]_{+}
\end{align*}
Given $\min_i z_i^T x_{\eps} = \rho(x_{\eps})$, it holds that
\begin{align*}
    \max_i (\gamma_{\infty} - z_i^T x_{\eps}) = \gamma_{\infty} - \min_i z_i^T x_{\eps} = \gamma_{\infty} - \rho(x_{\eps}) \geq 0.  
\end{align*}
Consequently, we have $\max_i (\gamma_{\infty} - z_i^T x_{\eps}) = \max_i [\gamma_{\infty} - z_i^T x_{\eps}]_{+}$. Applying \cref{lem:hoff}, we have 
\begin{align*}
     \operatorname{dist}_2\!\left(x_{\eps},\mathcal{M}_\infty\right) \leq H_{2,\infty}\!\left(A \right)
    \left\| \left(A x_{\eps} - b \right)_+
    \right\|_\infty = H_{2,\infty} [\gamma_{\infty} - \rho(x_{\eps})] \leq H_{2,\infty} C_{\infty, \delta} r_{\eps}, 
\end{align*}
where we denote $H_{2,\infty} := H_{2,\infty}(A)$ for simplicity. 
By choosing $u_{\eps} \in \argmin_{u \in \cM_{\infty}} \norm{x_\eps - u}_2$, it holds that
\begin{align*}
    \norm{x_{\eps} - u_{\eps}}_2 \leq H_{2,\infty} C_{\infty, \delta} r_{\eps}.  
\end{align*}
Next, we prove that every $u\in\mathcal M_\infty$ satisfies
$\|u\|_\infty=1$. Suppose instead that
$c:=\|u\|_\infty<1$. Since
$\rho(u)=\gamma_\infty>0$, we have $c>0$, and hence
$c\in(0,1)$. Define $\tilde u:=u/c$.
It then holds that $\rho(\tilde{{u}}) = \frac{\rho(u)}{c} = \frac{\gamma_{\infty}}{c} > \gamma_{\infty}$, which contradicts the definition of $\gamma_{\infty}$. This implies that $\norm{u_{\eps}}_{\infty} = 1$. Given $\norm{v}_2 \geq \norm{v}_{\infty}$ for every vector $v$, we also have $\norm{x_{\eps}}_2 \geq 1$ and $\norm{u_{\eps}}_2 \geq 1$.

Define $F(v) := \frac{\rho(v)}{\norm{v}_2}, v \neq 0$. We now compare $F(x_{\eps})$ and $F(u_{\eps})$. By definition, it holds that
\begin{align*}
    F(x_{\eps}) - F(u_{\eps}) = \frac{\rho(x_{\eps}) - \rho(u_{\eps})}{\norm{x_{\eps}}_2} + \rho(u_{\eps})[\frac{1}{\norm{x_\eps}_2} - \frac{1}{\norm{u_\eps}_2}]. 
\end{align*}
By \cref{lem:rho_lip}, it holds that $|\rho(x_{\eps}) - \rho(u_{\eps})| \leq R_2 \norm{x_{\eps} - u_{\eps}}_2$. For the first term, using this and $\norm{x_{\eps}}_2 \geq 1$, we obtain
\begin{align*}
    \frac{\rho(x_{\eps}) - \rho(u_{\eps})}{\norm{x_{\eps}}_2} \leq \frac{|\rho(x_{\eps}) - \rho(u_{\eps})|}{\norm{x_{\eps}}_2} \leq R_2 \norm{x_{\eps} - u_{\eps}}_2. 
\end{align*}
Using \cref{lem:vec_inverse} and $\norm{x_{\eps}}_2 \norm{u_{\eps}}_2 \geq 1$, it holds that 
\begin{align*}
    \bigl | \frac{1}{\norm{x_{\eps}}_{2}} - \frac{1}{\norm{u_{\eps}}_2} \bigr | \leq \norm{x_{\eps} - u_{\eps}}_2. 
\end{align*}
Given this and using $\rho(u_{\eps}) = \gamma_{\infty} > 0$, we have
\begin{align*}
    \rho(u_\epsilon)
\left(
\frac{1}{\|x_\epsilon\|_2}
-
\frac{1}{\|u_\epsilon\|_2}
\right)
&\le
\gamma_\infty
\left|
\frac{1}{\|x_\epsilon\|_2}
-
\frac{1}{\|u_\epsilon\|_2}
\right| \\
&\le
\gamma_\infty
\|x_\epsilon-u_\epsilon\|_2.
\end{align*}
Putting things together, we conclude that
\begin{align*}
    F(x_{\eps}) \leq F(u_{\eps}) + (R_2 + \gamma_{\infty}) \norm{x_{\eps} - u_{\eps}}_2 \leq F(u_{\eps}) + (R_2 + \gamma_{\infty}) H_{2,\infty} C_{\infty, \delta} r_{\eps}. 
\end{align*}
Given $u_{\eps} \in \cM_{\infty}$, the definition $\gamma_{2|\infty} = \max_{u \in \cM_{\infty}} \frac{\rho(u)}{\norm{u}_2}$ implies that $F(u_{\eps}) = \frac{\rho(u_{\eps})}{\norm{u_{\eps}}_2} \leq \gamma_{2|\infty}$. Using this, we conclude that
\begin{align}
    F(x_{\eps}) \leq \gamma_{2|\infty} + C_{2,\delta} r_{\eps}, \qquad C_{2,\delta} := (R_2 + \gamma_{\infty}) H_{2,\infty} C_{\infty, \delta}. \label{eq:F_temp}
\end{align}
Given $x_{\eps} = \frac{w_{\eps}}{\norm{w_{\eps}}_{\infty}}$, we have $\rho(x_{\eps}) = \frac{\rho(w_{\eps})}{\norm{w_{\eps}}_{\infty}}$ and $\norm{x_{\eps}}_2 = \frac{\norm{w_{\eps}}_2}{\norm{w_{\eps}}_{\infty}}$. Consequently, it holds that $F(x_{\eps}) = \frac{\rho(w_{\eps})}{\norm{w_{\eps}}_2} = \hat{\gamma}_2 (w_{\eps})$. Substituting this into \eqref{eq:F_temp}, we obtain $\hat{\gamma}_2 (w_{T_1^{\delta}}(\eps)) \leq \gamma_{2|\infty} + C_{2,\delta} r_{\eps}$. Subtracting both sides from $\gamma_2$, it holds that 
\begin{align*}
    \gamma_2 - \hat{\gamma}_2 (w_{T_1^{\delta}(\eps)}) \geq \gamma_2 - \gamma_{2|\infty} - C_{2,\delta} r_{\eps} = \triangle_{\rm geom} - C_{2,\delta} r_{\eps},
\end{align*}
where $\triangle_{\rm geom} = \gamma_2 - \gamma_{2|\infty}$. This is non-negative. For any $v = \frac{u}{\norm{u}_2}$, it holds that $\norm{v}_2 = 1$ and $\rho(v) = \frac{\rho(u)}{\norm{u}_2} = F(u) \leq \gamma_{2}$. Taking the maximum over $u \in \cM_{\infty}$ leads to $\gamma_{2|\infty} \leq \gamma_2$. 
 \end{proof}

\begin{corollary} \label{cor:gamma_2_cor_app} In the same setting as \cref{thm:margin_app_T1}, if $\Delta_{\rm geom} > 0$, then it holds that $ \liminf_{\eps \downarrow 0} [\gamma_2 - \hat{\gamma}_2(w_{T_1^{\delta}(\eps)})] \geq \Delta_{\rm geom} > 0$.  Moreover, there exists a constant $\hat{\eps}_{2}(\delta) \leq \Tilde{\eps}_2(\delta)$ such that for every $0 < \eps \leq \hat{\eps}_{2}(\delta)$, we have $\gamma_2 - \hat{\gamma}_2(w_{T_1^{\delta}(\eps)}) \geq \frac{\Delta_{\rm geom}}{2}$. 
\end{corollary}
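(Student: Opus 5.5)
The plan is to read both claims directly off the explicit bound established in Part~5 of the proof of \cref{thm:margin_app_T1}. There we showed that for every $0<\eps\le\Tilde{\eps}_2(\delta)$,
\[
\gamma_2-\widehat{\gamma}_2\bigl(w_{T_1^{\delta}(\eps)}\bigr)\ \ge\ \Delta_{\rm geom}-C_{2,\delta}\,r_\eps ,
\]
with $C_{2,\delta}=(R_2+\gamma_\infty)H_{2,\infty}C_{\infty,\delta}$ a finite constant independent of $\eps$. Here $r_\eps=\ell_\eps^{-a}$ for $a\in(1/3,1)$ and $r_\eps=\log\ell_\eps/\ell_\eps$ for $a=1$. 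In both cases $r_\eps\to0$ as $\eps\downarrow0$, since $\ell_\eps\to\infty$.

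For the liminf statement, take $\liminf_{\eps\downarrow0}$ on both sides of this inequality. Because $C_{2,\delta}r_\eps\to0$, we get $\liminf_{\eps\downarrow0}[\gamma_2-\widehat{\gamma}_2(w_{T_1^\delta(\eps)})]\ge\Delta_{\rm geom}$, and this is positive by hypothesis.

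For the quantitative statement, the task is to choose $\eps$ small enough that $C_{2,\delta}r_\eps\le\Delta_{\rm geom}/2$.

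When $a\in(1/3,1)$, this reads $\ell_\eps\ge(2C_{2,\delta}/\Delta_{\rm geom})^{1/a}$. Equivalently, $\eps\le\exp\bigl(-(2C_{2,\delta}/\Delta_{\rm geom})^{1/a}\bigr)$.

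When $a=1$, use $\log\ell\le\sqrt{\ell}$ for $\ell\ge1$, which gives $r_\eps\le\ell_\eps^{-1/2}$. It then suffices to have $\ell_\eps\ge(2C_{2,\delta}/\Delta_{\rm geom})^2$.

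Let $\hat{\eps}_2(\delta)$ be the minimum of $\Tilde{\eps}_2(\delta)$ and the threshold just obtained. By construction $\hat{\eps}_2(\delta)\le\Tilde{\eps}_2(\delta)$, so the Part~5 bound still applies, and every $0<\eps\le\hat{\eps}_2(\delta)$ satisfies $\gamma_2-\widehat\gamma_2\ge\Delta_{\rm geom}-\Delta_{\rm geom}/2=\Delta_{\rm geom}/2$.

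I expect no real obstacle here. The only point needing care is that the bound from \cref{thm:margin_app_T1} must be used in its explicit form with the constant $C_{2,\delta}$, not just as $O(r_\eps)$. That constant depends on $\delta$ and the data but not on $\eps$, which is what makes the threshold $\hat{\eps}_2(\delta)$ well defined.
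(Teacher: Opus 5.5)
Your proposal is correct and follows essentially the same route as the paper. You take the liminf of the explicit bound $\gamma_2-\widehat{\gamma}_2(w_{T_1^\delta(\eps)})\ge\Delta_{\rm geom}-C_{2,\delta}r_\eps$ from \cref{thm:margin_app_T1}. You then set $\hat{\eps}_2(\delta)=\min\{\Tilde{\eps}_2(\delta),e^{-L}\}$, with $L=(2C_{2,\delta}/\Delta_{\rm geom})^{1/a}$ for $a<1$ and $L=(2C_{2,\delta}/\Delta_{\rm geom})^{2}$ for $a=1$ (via $\log\ell_\eps\le\sqrt{\ell_\eps}$), which are exactly the paper's choices.
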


\begin{proof} [Proof of \cref{cor:gamma_2_cor_app}]Let $\tau_{\eps} := T_1^{\delta}(\eps)$ and $\triangle := \triangle_{\rm geom} > 0$. From \cref{thm:margin_app_T1}, we have $\gamma_2 - \hat{\gamma}_2 (w_{\tau_{\eps}}) \geq \triangle - C_{2,\delta} r_{\eps}$. Taking lower limits and using $r_{\eps} \rightarrow 0$, we obtain 
\begin{align*}
    \liminf_{\eps \downarrow 0} [\gamma_2 - \hat{\gamma}_{2}(w_{\tau_{\eps}})] \geq \liminf_{\eps \downarrow 0} [\triangle - C_{2,\delta} r_{\eps}] = \triangle > 0.
\end{align*}
For the second conclusion, the sufficient conditions on $\eps$ are 
\begin{align*}
    0 < \eps \leq \tilde{\eps}_2 (\delta), \quad \text{and} \quad C_{2,\delta} r_{\eps} \leq \frac{\triangle_{\rm geom}}{2}. 
\end{align*}
In the case where $\frac{1}{3} < a < 1$, we have $r_{\eps} = (\log \frac{1}{\eps})^{-a}$. Therefore, it is sufficient to require $\eps \leq \exp[-(\frac{2 C_{2,\delta}}{\triangle_{\rm geom}})^{1/a}]$ to ensure $C_{2,\delta} (\log \frac{1}{\eps})^{-a} \leq \frac{\triangle_{\rm geom}}{2}$. In the case where $a = 1$, we have $r_{\eps} = \frac{\log \ell_{\eps}}{\ell_{\eps}}$. For $\ell_{\eps} \geq 1$ (ensured via $\eps \leq \Tilde{\eps}_2(\delta)$), it holds that $\log \ell_{\eps} \leq \sqrt{\ell_{\eps}}$ and consequently $C_{2,\delta} \frac{\log \ell_{\eps}}{\ell_{\eps}} \leq \frac{C_{2,\delta}}{\sqrt{\ell_{\eps}}}$. It is therefore sufficient to require $\ell_{\eps} \geq (\frac{2 C_{2,\delta}}{\triangle_{\rm geom}})^2$. Therefore, we can define $L_{3,\delta} := (\frac{2 C_{2,\delta}}{\triangle_{\rm geom}})^{1/a}$ when $\frac{1}{3} < a < 1$ and $L_{3,\delta} := (\frac{2 C_{2,\delta}}{\triangle_{\rm geom}})^2$ when $a = 1$. Then, we set $\hat{\eps}_{2}(\delta) := \min \{ \Tilde{\eps}_2(\delta), e^{-L_{3,\delta}}\}$. 
\end{proof}

\subsection{Margin Gap at $T_1^{-}$} \label{app:margin_T1M}

\begin{theorem} \label{thm:margin_app_T1-}
Denote $r_{\eps}:= \ell_{\eps}^{-a}$ when $a \in (\frac{1}{3},1)$ and $r_{\eps}:= \frac{\log \ell_{\eps}}{\ell_{\eps}}$ when $a = 1$.  For any $\delta,b \in (0,1)$, 
there exists a constant $\eps'_2(\delta,b)$ such that for every $0< \eps \leq \eps'_2(\delta,b)$, we have 
\[
\gamma_\infty-\widehat{\gamma}_\infty(w_{T_1^{-}(\eps;b)})
\leq
O(r_\epsilon), \qquad 
\gamma_2-\widehat{\gamma}_2(w_{T_1^{-}(\eps;b)})
\geq
\Delta_{\mathrm{geom}}
- O(r_\epsilon).
\]
\end{theorem}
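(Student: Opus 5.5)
The plan is to transfer the margin bounds already proved at $T_1^\delta(\eps)$ in \cref{thm:margin_app_T1} to $T_1^-(\eps;b)$. The tool is \cref{lem:margin_helper} with $x=w_{T_1^-}$ and $y=w_{T_1^\delta}$, so it suffices to show two things: $\norm{w_{T_1^\delta}}_p\gtrsim \ell_\eps$ and $\norm{w_{T_1^\delta}-w_{T_1^-}}_p\lesssim \log\ell_\eps$ for $p\in\{2,\infty\}$. Then $|\widehat\gamma_p(w_{T_1^-})-\widehat\gamma_p(w_{T_1^\delta})|=O(\log\ell_\eps/\ell_\eps)$. For $a=1$ this is exactly $r_\eps$. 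For $a\in(1/3,1)$ it is $o(\ell_\eps^{-a})$, since $\log\ell_\eps\le \ell_\eps^{1-a}/(1-a)$. The triangle inequality then gives both displayed bounds, with $\Delta_{\rm geom}-O(r_\eps)$ for the $\ell_2$ gap.

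Choose $\eps$ below the thresholds $\tilde\eps_2(\delta)$ of \cref{thm:margin_app_T1} and $\eps_{\rm all}(b,\delta)$ of \cref{prop:time_bracket_app}, so that $T_*<T_1^-<T_1^\delta$. The lower bound on the norm is already available. Equation \eqref{eq:lower_rho1} gives $\rho(w_{T_1^\delta})\ge \ell_\eps-A_\delta$. Combining this with $\rho(w)\le\gamma_\infty\norm{w}_\infty$ and $\norm{w}_2\ge\norm{w}_\infty$ yields $\norm{w_{T_1^\delta}}_p\ge (\ell_\eps-A_\delta)/\gamma_\infty\ge \ell_\eps/(2\gamma_\infty)$ for small $\eps$.

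For the displacement, the first step is the crude bound from \cref{lem:uniform-D}:
\[
\norm{w_{T_1^\delta}-w_{T_1^-}}_\infty\le C_D\,(S_{T_1^\delta}-S_{T_1^-}).
\]
The $\ell_2$ version costs a $\sqrt d$ factor, which is a data constant. Part 2 of the proof of \cref{thm:first-clock} gives $S_{T_1^\delta}-S_{T_1^-}\le \frac{4b}{\kappa_\delta}\log\ell_\eps+\frac4{\kappa_\delta}\log\frac2\delta+\eta_{T^-}=O_\delta(\log\ell_\eps)$, so the displacement is $O_\delta(\log\ell_\eps)$. The main obstacle is bookkeeping, not analysis. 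That bound holds only under the smallness conditions $C_{\rm md}\eta_{T^-}\le\kappa_\delta/2$ and $\frac{2\ell_\eps^b}{\delta}>1$. These must be turned into an explicit $\eps$-threshold, by the same device used repeatedly in the paper: pick $N$ with $\eta_N$ small, and use $G_t\ge G_0e^{-K_0S_N}$ for $t\le N$ to force $T_1^->N$. With that threshold, define $\eps_2'(\delta,b)$ as the minimum of all the thresholds involved.

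Finally, assemble the pieces:
\[
\gamma_\infty-\widehat\gamma_\infty(w_{T_1^-})\le C_{\infty,\delta}r_\eps+\frac{2R_1\cdot O(\log\ell_\eps)}{\ell_\eps/(2\gamma_\infty)}=O(r_\eps),
\]
and similarly $\gamma_2-\widehat\gamma_2(w_{T_1^-})\ge \Delta_{\rm geom}-C_{2,\delta}r_\eps-O(\log\ell_\eps/\ell_\eps)$.
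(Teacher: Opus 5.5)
Your proposal is correct and follows the paper's proof. You transfer the bounds of \cref{thm:margin_app_T1} from $T_1^\delta(\eps)$ to $T_1^-(\eps;b)$ via \cref{lem:margin_helper}, control the displacement by $C_D(S_{T_1^\delta}-S_{T_1^-})=O_\delta(\log\ell_\eps)$ from Part~2 of \cref{thm:first-clock}, and use an $\eps$-threshold that forces $T_1^->N$. The one cosmetic difference is the denominator: you use $w_{T_1^\delta}$ and reuse \eqref{eq:lower_rho1}, whereas the paper lower-bounds $\norm{w_{T_1^-}}_\infty\ge\ell_\eps/(2\gamma_\infty)$ directly from the loss--margin inequality at $T_1^-$, which costs it the extra condition $b\log\ell_\eps+\log\frac{2n}{\log 2}\le\ell_\eps/2$; either choice works.
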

 
\begin{proof}[Proof of \cref{thm:margin_app_T1-}] 

Let $\tau_\eps^-:=T_1^-(\eps;b)$,
$\tau_\eps^\delta:=T_1^\delta(\eps)$,
and $\ell_\eps:=\log\frac{1}{\eps}$.
We first specify a sufficient choice of the threshold on $\eps$. From \cref{prop:master-descent}, we have 
\[
L_{t+1}
\le
L_t-\eta_t\langle g_t,\sigma_\eps(g_t)\rangle
+C_{\rm md}\eta_t^2G_t,
\qquad
C_{\rm md}:=C_H+C_{\rm ema}.
\]
Define $\kappa_\delta
:=
\frac{\gamma_\infty^2}{R_1+d/\delta}$,
$C_D:=\sqrt{\frac{1-\beta_1}{1-\beta_2}}$,
$K_0:=R_1C_D$,
and
$
\bar\eta_\delta
:=
\min\left\{
\eta_0,\,
\frac{\kappa_\delta}{2C_{\rm md}},\,
\frac{2}{\kappa_\delta}
\right\}$. Let
\[
N_\delta
:=
\max\left\{
T_*,
\left\lceil
\left(\frac{\eta_0}{\bar\eta_\delta}\right)^{1/a}
\right\rceil
\right\}.
\]
Since $G_{T_*}\ge g_*$ by \cref{prop:uniform-clock}, \cref{lem:proxy-path-ratio} gives
\[
G_t
\ge
G_{T_*}\exp\left(
-K_0(S_t-S_{T_*})
\right)
\ge
g_*\exp\left(
-K_0(S_{N_\delta}-S_{T_*})
\right)
=:g_{\mathrm{burn},\delta}
\]
for every $T_*\le t\le N_\delta$. Set $
B_n:=\log\frac{2n}{\log 2}$ 
and
\[
L_{\delta,b}
:=
\max\left\{
e,\,
2\left[\log\frac{2}{g_{\mathrm{burn},\delta}}\right]_+,\,
\left(b+\sqrt{b^2+2B_n}\right)^2
\right\}.
\]
A sufficient choice for $\eps_2'(\delta,b)$ is $
\eps_2'(\delta,b)
:=
\min\left\{ \eps_0(b,\delta), 
\Tilde{\eps}_2(\delta),\,
e^{-L_{\delta,b}}
\right\}$,
where $\Tilde{\eps}_2(\delta)$ and $\eps_0(b,\delta)$ are the thresholds in \cref{thm:margin_app_T1} and \cref{prop:time_bracket_app}, respectively. We henceforth fix $ 0<\eps\le \eps_2'(\delta,b)$.

We first lower-bound the norm of $w_{\tau_\eps^-}$. Since
$\ell_\eps\ge L_{\delta,b}\ge e$, we have
$\log\ell_\eps\le\sqrt{\ell_\eps}$. Moreover, the definition of
$L_{\delta,b}$ implies $
\sqrt{\ell_\eps}
\ge
b+\sqrt{b^2+2B_n}$.
Consequently, it holds that 
\begin{align}
    b\log\ell_\eps+B_n
\le
b\sqrt{\ell_\eps}+B_n
\le
\frac{\ell_\eps}{2} \label{eq:le_temp}.
\end{align}
By the definition of $\tau_\eps^-$ and Proposition~2.1, we have $
L_{\tau_\eps^-}
\le
2G_{\tau_\eps^-}
\le
2\eps\ell_\eps^b$.
The loss--margin bound in \cref{lem:loss-proxy} therefore yields
\begin{align}
\rho(w_{\tau_\eps^-})
&\ge
\log\frac{\log 2}{nL_{\tau_\eps^-}} \ge
\log\frac{\log 2}{2n\eps\ell_\eps^b}
=
\ell_\eps-b\log\ell_\eps-B_n
\ge
\frac{\ell_\eps}{2},
\label{eq:Tminus-margin-lower}
\end{align}
where the last inequality follows from \eqref{eq:le_temp}. In particular,
$w_{\tau_\eps^-}\neq0$. Since
$ \rho(w)\le\gamma_\infty\|w\|_\infty $
for every $w$, we obtain
\begin{equation}
\|w_{\tau_\eps^-}\|_\infty
\ge
\frac{\ell_\eps}{2\gamma_\infty},
\qquad
\|w_{\tau_\eps^-}\|_2
\ge
\|w_{\tau_\eps^-}\|_\infty
\ge
\frac{\ell_\eps}{2\gamma_\infty}.
\label{eq:Tminus-radius-lower}
\end{equation}

We next control the displacement between $\tau_\eps^-$ and
$\tau_\eps^\delta$. Since $\ell_\eps\ge e$ and $b<1$, we have  
\[
\eps\ell_\eps^b
=
\exp\left(-\ell_\eps+b\log\ell_\eps\right)
\le
\exp\left(-\frac{\ell_\eps}{2}\right)
=
\sqrt{\eps}.
\]
The definition of $L_{\delta,b}$ further gives $\sqrt{\eps}
\le
\frac{g_{\mathrm{burn},\delta}}{2}$.
It follows that
$
G_t
\ge
g_{\mathrm{burn},\delta}
>
\eps\ell_\eps^b$ when $T_*\le t\le N_\delta$
and hence $
\tau_\eps^->N_\delta$.
Therefore, by the choice of $N_{\delta}$, we have for every $t\ge\tau_\eps^-$ that 
\begin{align}
    C_{\rm md}\eta_t
\le
\frac{\kappa_\delta}{2},
\qquad
\eta_t\le\frac{2}{\kappa_\delta}. \label{eq:lr_tt}
\end{align}

Since $\eps \leq \eps_{0}(b,\delta)$, we have
$\tau_\eps^- < \tau_\eps^\delta$. For
$\tau_\eps^-\le t<\tau_\eps^\delta$, the definition of
$\tau_\eps^\delta$ gives $G_t>\delta\eps$. As in Part~2 of the
proof of \cref{thm:first-clock}, \cref{lem:proxy,lem:softsign} give
$
\langle g_t,\sigma_\eps(g_t)\rangle
\ge
\frac{\gamma_\infty^2}{R_1+d/\delta}G_t
=
\kappa_\delta G_t$.
Combining this with \cref{eq:master-descent-app} and \eqref{eq:lr_tt}, we obtain
\[
L_{t+1}
\le
L_t-\eta_t
\left(\kappa_\delta-C_{\rm md}\eta_t\right)G_t
\le
L_t-\frac{\kappa_\delta}{2}\eta_tG_t
\le
\left(1-\frac{\kappa_\delta}{4}\eta_t\right)L_t.
\]

Define 
\begin{align*}
    B_\delta
:=
\frac{4}{\kappa_\delta}\log\frac{2}{\delta}
+\eta_0,
\qquad
q_a
:=
\begin{cases}
\dfrac{1}{1-a},
& a\in(1/3,1),\\[1mm]
1,
& a=1,
\end{cases} 
\qquad 
\mathcal{K}_{a,\delta,b}
:=
\frac{4bq_a}{\kappa_\delta}
+B_\delta.
\end{align*}

Repeating the argument in Part~2 of the proof of \cref{thm:first-clock} gives
\begin{align*}
S_{\tau_\eps^\delta}-S_{\tau_\eps^-}
&\le
\frac{4}{\kappa_\delta}
\log\frac{2\ell_\eps^b}{\delta}
+\eta_{\tau_\eps^-} \le
\frac{4b}{\kappa_\delta}\log\ell_\eps
+\frac{4}{\kappa_\delta}\log\frac{2}{\delta}
+\eta_0
=
\frac{4b}{\kappa_\delta}\log\ell_\eps+B_\delta.
\end{align*}

Using the uniform update bound
$\|D_t\|_\infty\le C_D$ from \cref{lem:uniform-D}, we obtain
\begin{align*}
\|w_{\tau_\eps^\delta}-w_{\tau_\eps^-}\|_\infty
&\le
\sum_{t=\tau_\eps^-}^{\tau_\eps^\delta-1}
\eta_t\|D_t\|_\infty \le
C_D\left(S_{\tau_\eps^\delta}-S_{\tau_\eps^-}\right) \le
C_D\left(
\frac{4b}{\kappa_\delta}\log\ell_\eps+B_\delta
\right),
\end{align*}
and hence
\begin{equation*}
\|w_{\tau_\eps^\delta}-w_{\tau_\eps^-}\|_2
\le
\sqrt d\,C_D\left(
\frac{4b}{\kappa_\delta}\log\ell_\eps+B_\delta
\right).
\end{equation*}

Applying \cref{lem:margin_helper} with
$x=w_{\tau_\eps^\delta}$ and $y=w_{\tau_\eps^-}$, and using
\eqref{eq:Tminus-radius-lower}, gives
\begin{align}
\left|
\widehat\gamma_\infty(w_{\tau_\eps^\delta})
-
\widehat\gamma_\infty(w_{\tau_\eps^-})
\right|
&\le
\frac{
2R_1\|w_{\tau_\eps^\delta}-w_{\tau_\eps^-}\|_\infty
}{
\|w_{\tau_\eps^-}\|_\infty
} \le
4R_1\gamma_\infty C_D
\left(
\frac{4b}{\kappa_\delta}
\frac{\log\ell_\eps}{\ell_\eps}
+
\frac{B_\delta}{\ell_\eps}
\right)
\nonumber\\
&\le
4R_1\gamma_\infty C_D
\mathcal K_{a,\delta,b}r_\eps,
\label{eq:Tminus-transfer-inf}
\end{align}
and
\begin{align}
\left|
\widehat\gamma_2(w_{\tau_\eps^\delta})
-
\widehat\gamma_2(w_{\tau_\eps^-})
\right|
&\le
\frac{
2R_2\|w_{\tau_\eps^\delta}-w_{\tau_\eps^-}\|_2
}{
\|w_{\tau_\eps^-}\|_2
} \le
4R_2\gamma_\infty\sqrt d\,C_D
\left(
\frac{4b}{\kappa_\delta}
\frac{\log\ell_\eps}{\ell_\eps}
+
\frac{B_\delta}{\ell_\eps}
\right)
\nonumber\\
&\le
4R_2\gamma_\infty\sqrt d\,C_D
\mathcal K_{a,\delta,b}r_\eps.
\label{eq:Tminus-transfer-two}
\end{align}

The last inequalities in \eqref{eq:Tminus-transfer-inf} and
\eqref{eq:Tminus-transfer-two} follow from
$ \frac{1}{\ell_\eps}\le r_\eps $ and 
$\frac{\log\ell_\eps}{\ell_\eps}\le q_a r_\eps$. Indeed, when $a\in(1/3,1)$, we have 
$\frac{\log\ell_\eps}{\ell_\eps}
\le
\frac{1}{1-a}\ell_\eps^{-a}
=
q_a r_\eps $ and $
\frac{1}{\ell_\eps}
\le
\ell_\eps^{-a}
=
r_\eps$; when $a=1$, we have
$\frac{\log\ell_\eps}{\ell_\eps}
=
r_\eps$ and $
\frac{1}{\ell_\eps}
\le
\frac{\log\ell_\eps}{\ell_\eps}
=
r_\eps$ since $\ell_\eps\ge e$.

Finally, because $\eps\le\Tilde{\eps}_2(\delta)$,
\cref{thm:margin_app_T1} gives
\[
0
\le
\gamma_\infty
-
\widehat\gamma_\infty(w_{\tau_\eps^\delta})
\le
C_{\infty,\delta}r_\eps, \qquad 
\gamma_2
-
\widehat\gamma_2(w_{\tau_\eps^\delta})
\ge
\Delta_{\rm geom}
-
C_{2,\delta}r_\eps.
\]
Combining these bounds with
\eqref{eq:Tminus-transfer-inf} gives
\begin{align*}
\gamma_\infty
-
\widehat\gamma_\infty(w_{\tau_\eps^-})
&\le
\gamma_\infty
-
\widehat\gamma_\infty(w_{\tau_\eps^\delta})
+
\left|
\widehat\gamma_\infty(w_{\tau_\eps^\delta})
-
\widehat\gamma_\infty(w_{\tau_\eps^-})
\right|\\
&\le
\left(
C_{\infty,\delta}
+
4R_1\gamma_\infty C_D
\mathcal{K}_{a,\delta,b}
\right)r_\eps = C_{\infty,a,\delta,b}^{-} r_{\eps},
\end{align*}
where we let $C_{\infty,a,\delta,b}^{-}
:=
C_{\infty,\delta}
+
4R_1\gamma_\infty C_D
\mathcal{K}_{a,\delta,b}$. 
Similarly, \eqref{eq:Tminus-transfer-two} gives
\begin{align*}
\gamma_2
-
\widehat\gamma_2(w_{\tau_\eps^-})
&\ge
\gamma_2
-
\widehat\gamma_2(w_{\tau_\eps^\delta})
-
\left|
\widehat\gamma_2(w_{\tau_\eps^\delta})
-
\widehat\gamma_2(w_{\tau_\eps^-})
\right|\\
&\ge
\Delta_{\rm geom}
-
\left(
C_{2,\delta}
+
4R_2\gamma_\infty\sqrt{d}\,C_D
\mathcal{K}_{a,\delta,b}
\right)r_\eps = \Delta_{\rm geom}
- C_{2,a,\delta,b}^{-} r_{\eps},
\end{align*}
where we let $C_{2,a,\delta,b}^{-}
:=
C_{2,\delta}
+
4R_2\gamma_\infty\sqrt{d}\,C_D
\mathcal{K}_{a,\delta,b}$. 
\end{proof}



\subsection{Margin Gap at $T_1^{+}$} \label{app:margin_T1P}

We first prove \cref{prop:post_euc} and then specialize it to $T_1^{+}(\eps;b)$. 

\begin{proof} [Proof of \cref{prop:post_euc}] Denote $\tau := T_{1}^{\delta}(\eps)$, $H_t := H_t^{\delta}(\eps) = \sum_{s = \tau}^{t-1} \theta_s$, $R_{\tau} := \norm{w_{\tau}}_{\infty}$, and $R_t := \norm{w_t}_{\infty}$. Let $\eps_{\delta}^{\rm inv}$ denote the threshold in \cref{cor:invariant}. 
Define $A_{\delta} = [\log \frac{2 n \delta}{\log 2}]_{+}$ and $\bar{l}_{\delta} = \max \{ e, 2 A_{\delta} \}$. 
We choose $0 < \eps \leq \tilde{\eps}_3(\delta) := \min \{\tilde{\eps}_1(\delta), \tilde{\eps}_2(\delta), \eps_{\delta}^{\rm inv}, e^{- \bar{l}_{\delta}}\}$ (note that $\tilde{\eps}_1(\delta)$ and $\tilde{\eps}_2(\delta)$ are the thresholds in the statements of \cref{prop:bounded_euc,thm:margin_app_T1}, respectively). This ensures that the conclusions of \cref{prop:bounded_euc,thm:margin_app_T1} hold. 
Summing the post-crossover recursion \eqref{eq:local_ngd} from $\tau$ to $t - 1$ gives
\begin{align*}
    w_t - w_{\tau} = - \sum_{s = \tau}^{t-1} \theta_s u_s - \sum_{s = \tau}^{t-1} \theta_s e_s. 
\end{align*}
Given $u_s = \frac{g_s}{\norm{g_s}_2}$, it holds that $\norm{u_s}_{\infty} \leq \norm{u_s}_{2} \leq 1$ and
\begin{align}
    \norm{\sum_{s = \tau}^{t-1} \theta_s u_s }_{\infty} \leq \sum_{s = \tau}^{t-1} \theta_s \norm{u_s}_{\infty} \leq \sum_{s = \tau}^{t-1} \theta_s = H_t. \label{eq:post1}
\end{align}
We also have that
\begin{align}
    \norm{\sum_{s = \tau}^{t-1} \theta_s e_s}_{\infty} \leq \sum_{s = \tau}^{t-1} \theta_s \norm{e_s}_{\infty} \leq \sum_{s = \tau}^{t-1} \theta_s \norm{e_s}_2 \leq E_0. \label{eq:post2}
\end{align}
Denoting $\triangle_t := \norm{w_t - w_{\tau}}_{\infty}$, and combining \eqref{eq:post1} and \eqref{eq:post2} together, we obtain
\begin{align}
    \triangle_t = \norm{w_t - w_{\tau}}_{\infty} \leq H_t + E_0, \qquad t \geq \tau.  \label{eq:ineq_1}
\end{align}

From \cref{cor:invariant}, we know $G_t \leq L_t \leq 2 \delta \eps, t \geq \tau$. Given $t \geq \tau \geq T_*$, \cref{prop:uniform-clock} guarantees $L_t \leq \frac{\log 2}{n}$. By \cref{lem:loss-proxy}, we also have $L_t \geq \frac{\log 2}{n} e^{-\rho(w_t)}$. Combining them together, we obtain $\frac{\log 2}{n} e^{- \rho(w_t)} \leq 2 \delta \eps$. Taking logarithms and rearranging gives $\rho(w_t) \geq \log \frac{\log 2}{2 n \delta \eps} = \ell_{\eps} - \log \frac{2 n \delta}{\log 2} \geq \ell_{\eps} - A_{\delta}$. The condition on $\eps$ ensures that $\ell_{\eps} \geq \bar{\ell}_{\delta} \geq \max \{ e, 2 A_{\delta} \}$. Consequently, we obtain
\begin{align}
    \rho(w_t) \geq \frac{1}{2} \ell_{\eps} > 0, \qquad t \geq \tau. \label{eq:w_help}
\end{align}
Since $\rho(0) = 0$, \eqref{eq:w_help} implies that $w_t \neq 0$ for every $t \geq \tau$. 

For any nonzero $w$, it holds that $\rho(w) = \norm{w}_{\infty} \rho(\frac{w}{\norm{w}_{\infty}}) \leq \gamma_{\infty} \norm{w}_{\infty}$. Combining this and \eqref{eq:w_help}, we have
\begin{align}
    R_{\tau} = \norm{w_{\tau}}_{\infty} \geq \frac{\ell_{\eps}}{2 \gamma_{\infty}}.  \label{eq:ineq_2}
\end{align}

Applying \cref{lem:margin_helper} with $x = w_t$ and $y = w_{\tau}$ and using the inequalities \eqref{eq:ineq_1} and \eqref{eq:ineq_2}, it holds that
\begin{align*}
    |\hat{\gamma}_{\infty}(w_t) - \hat{\gamma}_{\infty}(w_{\tau})| \leq \frac{2 R_1 \norm{w_t - w_{\tau}}_{\infty}}{R_{\tau}} \leq \frac{2 R_1 (H_t + E_0)}{R_{\tau}} \leq 4 R_1 \gamma_{\infty} \frac{H_t + E_0}{\ell_{\eps}}. 
\end{align*}
Since $\eps \leq \tilde{\eps}_2(\delta)$, \cref{thm:margin_app_T1} gives $0 \leq \gamma_{\infty} - \hat{\gamma}_{\infty}(w_{\tau}) \leq C_{\infty, \delta} r_{\eps}$. Consequently, it holds that
\begin{align*}
    \gamma_{\infty} - \hat{\gamma}_{\infty}(w_t) \leq C_{\infty, \delta} r_{\eps} + 4 R_1 \gamma_{\infty} \frac{H_t + E_0}{\ell_{\eps}}. 
\end{align*}
The left-hand side is nonnegative because $\hat{\gamma}_{\infty}(w_t) = \rho(\frac{w_t}{\norm{w_t}_{\infty}}) \leq \gamma_{\infty}$. Define $C_{\rm p, \delta} := \max \{ C_{\infty,\delta}, 4 R_1 \gamma_{\infty} \}$. We conclude that
\begin{align}
    0 \leq \gamma_{\infty} - \hat{\gamma}_{\infty} (w_t) \leq C_{\rm p, \delta} (r_{\eps} + \frac{H_t + E_0}{\ell_{\eps}}), \qquad t \geq \tau. \label{eq:gamma_temp}
\end{align}
We next prove the second inequality. Using \eqref{eq:post1} and \eqref{eq:post2}, we have
\begin{align}
    \|w_t-w_\tau\|_2 \le H_t+E_0, \qquad t \geq \tau. \label{eq:w_tt}  
\end{align}

At $t=\tau$, \eqref{eq:w_help} gives
$ \rho(w_\tau)\ge\frac{\ell_\eps}{2}$.
Since
$\rho(w)\le\gamma_2\|w\|_2$ for every $w$, it holds that  
\begin{equation}
\|w_\tau\|_2
\ge
\frac{\rho(w_\tau)}{\gamma_2}
\ge
\frac{\ell_\eps}{2\gamma_2}.
\label{eq:prop33-euclidean-radius}
\end{equation}

Applying \cref{lem:margin_helper} with $x=w_t$ and $y=w_\tau$, and using
\eqref{eq:w_tt} and
\eqref{eq:prop33-euclidean-radius}, we obtain
\begin{align}
\left|
\widehat\gamma_2(w_t)
-
\widehat\gamma_2(w_\tau)
\right|
&\le
\frac{2R_2\|w_t-w_\tau\|_2}{\|w_\tau\|_2} \leq 
4R_2\gamma_2
\frac{H_t+E_0}{\ell_\eps}.
\label{eq:prop33-euclidean-transfer}
\end{align}

Because $\eps\le\Tilde\eps_2(\delta)$, \cref{thm:margin_app_T1} gives
$ \gamma_2-\widehat\gamma_2(w_\tau)
\ge
\Delta_{\rm geom}-C_{2,\delta}r_\eps$.
Combining this bound with
\eqref{eq:prop33-euclidean-transfer}, we obtain
\begin{align}
\gamma_2-\widehat\gamma_2(w_t)
&\ge
\gamma_2-\widehat\gamma_2(w_\tau)
-
\left|
\widehat\gamma_2(w_t)
-
\widehat\gamma_2(w_\tau)
\right|
\nonumber\\
&\ge
\Delta_{\rm geom}
-
C_{2,\delta}r_\eps
-
4R_2\gamma_2
\frac{H_t+E_0}{\ell_\eps}.
\label{eq:prop33-euclidean-preliminary}
\end{align}

The definition of $\Tilde\eps_3(\delta)$ ensures that
$\ell_\eps\ge e$. Hence, we have
$
\frac{1}{\ell_\eps}\le r_\eps$
for both $a\in(1/3,1)$ and $a=1$. It follows from
\eqref{eq:prop33-euclidean-preliminary} that
\begin{align*}
\gamma_2-\widehat\gamma_2(w_t)
& \ge
\Delta_{\rm geom}
-
\left(
C_{2,\delta}+4R_2\gamma_2E_0
\right)r_\eps
-
4R_2\gamma_2\frac{H_t}{\ell_\eps} \\
& \geq \Delta_{\rm geom}
-
C_{2,\delta}^{\rm post}
\left(
r_\eps+\frac{H_t^\delta(\eps)}{\ell_\eps}
\right),
\qquad
t\ge T_1^\delta(\eps),
\end{align*}
where we let 
$C_{2,\delta}^{\rm post}
:=
\max\left\{
C_{2,\delta}+4R_2\gamma_2E_0,\,
4R_2\gamma_2
\right\}$.
\end{proof}

\begin{corollary} \label{cor:margin_app_T1+} Denote $r_{\eps}:= \ell_{\eps}^{-a}$ when $a \in (\frac{1}{3},1)$ and $r_{\eps}:= \frac{\log \ell_{\eps}}{\ell_{\eps}}$ when $a = 1$.  For any $\delta,b \in (0,1)$, 
there exists a constant $\eps''_2(\delta,b)$ such that for every $0< \eps \leq \eps''_2(\delta,b)$, we have 
\[
\gamma_\infty-\widehat{\gamma}_\infty(w_{T_1^{+}(\eps;b)})
\leq
O(r_\epsilon), \qquad 
\gamma_2-\widehat{\gamma}_2(w_{T_1^{+}(\eps;b)})
\geq
\Delta_{\mathrm{geom}}
- O(r_\epsilon).
\]
\end{corollary}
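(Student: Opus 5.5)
The plan is to specialize \cref{prop:post_euc} at $t = T_1^+(\eps;b)$ and show that the post-crossover term $H^\delta_{T_1^+}(\eps)/\ell_\eps$ is $O(r_\eps)$. Concretely, I will take $\eps''_2(\delta,b)$ as the minimum of $\tilde\eps_3(\delta)$ from \cref{prop:post_euc}, $\tilde\eps_1(\delta)$ from \cref{prop:bounded_euc}, $\eps_0(b,\delta)$ from \cref{prop:time_bracket_app}, and a further threshold that makes $\ell_\eps \ge e$ and the $\Theta$-bound of Part 3 of \cref{thm:first-clock} effective. With these choices, $T_1^+ > T_1^\delta$, so \cref{prop:post_euc} applies at $t = T_1^+$ and gives
\[
\gamma_\infty - \widehat\gamma_\infty(w_{T_1^+}) \le C\Bigl(r_\eps + \frac{H^\delta_{T_1^+}}{\ell_\eps}\Bigr),
\]
together with the matching lower bound for the Euclidean gap.

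Next I bound $H^\delta_{T_1^+}$. By \cref{prop:bounded_euc}, $H_t \asymp_\delta \log(1+V_t)$, so $H_{T_1^+} \le C_\delta \log(1+V_{T_1^+})$. By definition, $V_{T_1^+} = S_{T_1^+} - S_{T_1^\delta}$. Part 3 of \cref{thm:first-clock}, specifically the upper half of \eqref{eq:cc1}, gives
\[
c_\delta\, \frac{V_{T_1^+}}{\eps} \le \frac{1}{L_{T_1^+}}.
\]
Moreover, \cref{lem:ratio} gives $L_{T_1^+} \ge \tfrac12 \eps \ell_\eps^{-b}$ for small $\eps$. Together these yield $V_{T_1^+} \le 2 c_\delta^{-1} \ell_\eps^b$. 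To keep the threshold explicit rather than relying on a $\liminf$, I will instead use the one-step argument from the proof of \cref{prop:time_bracket_app}: minimality of $T_1^+$, combined with \cref{lem:proxy-path-ratio} and $T_1^+ > N$, gives $G_{T_1^+} > \tfrac12 \eps \ell_\eps^{-b}$, and therefore $L_{T_1^+} \ge G_{T_1^+}$ is bounded below as needed. Hence
\[
H_{T_1^+} \le C_\delta \log\bigl(1 + 2c_\delta^{-1}\ell_\eps^b\bigr) \le C'_\delta \log \ell_\eps.
\]

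Finally I compare $\log\ell_\eps/\ell_\eps$ with $r_\eps$. For $a = 1$ the two are equal. For $a \in (1/3,1)$, the estimate $\log x \le x^{1-a}/(1-a)$ already used in the proof of \cref{thm:margin_app_T1} gives $\log\ell_\eps/\ell_\eps \le (1-a)^{-1}\ell_\eps^{-a} = (1-a)^{-1} r_\eps$. Substituting this into the two inequalities of \cref{prop:post_euc} yields both claims, with constants depending on $\delta$, $b$, and $a$.

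The main obstacle is making the bound $V_{T_1^+} = O(\ell_\eps^b)$ hold non-asymptotically, i.e.\ for every $\eps$ below an explicit threshold, rather than only as a $\Theta$-statement. I will handle this by reusing the telescoped reciprocal-loss inequality \eqref{eq:cc1}, which holds under $\eps \le \tilde\eps_1(\delta)$, together with the explicit lower bound on $G_{T_1^+}$ obtained from the minimality argument.
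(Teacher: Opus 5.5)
Your proposal is correct and follows essentially the same route as the paper. Both bound $V_{T_1^+}\le 2c_\delta^{-1}\ell_\eps^b$ via the telescoped reciprocal-loss inequality together with the one-step minimality bound $L_{T_1^+}\ge G_{T_1^+}>\tfrac12\eps\ell_\eps^{-b}$. Both then convert this to $H_{T_1^+}=O(\log\ell_\eps)$, compare $\log\ell_\eps/\ell_\eps$ with $r_\eps$, and substitute into \cref{prop:post_euc}.

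The differences are minor. The paper re-derives $H_t\le R_2C_L(1+\eta_0)\log(1+V_t)$ explicitly from \eqref{eq:final_helper} instead of citing the $\asymp_\delta$ statement of \cref{prop:bounded_euc}. It also takes $\eps_2''=\min\{\tilde\eps_3(\delta),\eps_0(\delta,b)\}$, which already includes your extra thresholds $\tilde\eps_1(\delta)$ and $\ell_\eps\ge e$.
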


\begin{proof}
Let $
\ell_\eps := \log \frac{1}{\eps}$, $C_D  := \sqrt{\frac{1-\beta_1}{1-\beta_2}}$ and 
$c_\delta
:=
\frac{\gamma_2^2}{16(1+2R_1\delta)}$. Set $\eps_2''(\delta,b)
:=
\min\left\{
\Tilde{\eps}_3(\delta), \eps_{0}(\delta,b)
\right\}$ where $\eps_{0}(\delta,b)$ is the threshold in \cref{prop:time_bracket_app}.
Fix \(0<\eps\leq \eps_2''(\delta,b)\) and let
$\tau:=T_1^\delta(\eps)$, $t_+:=T_1^+(\eps;b)$, $V_+:=V_{t_+}^\delta (\eps)$, and 
$H_+:=H_{t_+}^\delta(\eps)$. 
Since $\eps\leq \eps_0(\delta,b)$, 
we have $\tau < t_+$ by \cref{prop:time_bracket_app}. Moreover, the condition
\(\eps\leq\Tilde{\eps}_3(\delta)\) ensures that the
conclusions of \cref{prop:bounded_euc} apply from time \(\tau\) onward.
In particular, we have \(\tau>T_*\) and
$K_0\eta_{\tau-1}\leq\log 2$ where $
K_0:=R_1 C_D$. 
By the minimality of \(t_+\) and \cref{lem:proxy-path-ratio}, it holds that 
\[
L_{t_+}\geq G_{t_+} \geq \exp(-K_0 \eta_{t_+ - 1}) G_{t_+ - 1} \geq e^{- \log 2} G_{t_+ - 1} >
\frac{\eps}{2}\ell_\eps^{-b}.
\]
Consequently, \eqref{eq:vt_1} gives
\[
c_\delta\frac{V_+}{\eps}
\leq
\frac{1}{L_{t_+}}-\frac{1}{L_\tau}
\leq
\frac{1}{L_{t_+}}
<
\frac{2}{\eps}\ell_\eps^b.
\]
Thus, we have $
V_+
\leq
\frac{2}{c_\delta}\ell_\eps^b$. Denote $V_t := \sum_{s = \tau}^{t-1} \eta_s$ and $H_t := \sum_{s = \tau}^{t-1} \theta_s$. By \eqref{eq:final_helper}, we know $\frac{G_s}{\eps} \leq \frac{C_L}{1+V_s}$ where $C_L = 1 / \min \{ \frac{1}{2}, \frac{\gamma_2^2}{16(1 + 2 R_1)}\}$. 
Hence, it holds that  
\[
\theta_s
=
\frac{\eta_s\|g_s\|_2}{\eps}
\leq
R_2\eta_s\frac{G_s}{\eps}
\leq
R_2C_L\frac{\eta_s}{1+V_s}.
\]
Consequently, we have
$
H_t
\leq
R_2C_L
\sum_{s=\tau}^{t-1}
\frac{\eta_s}{1+V_s}$ for any $t > \tau$.
Setting
$x_s:=\frac{\eta_s}{1+V_s}$, we obtain
$\log(1+V_{s+1})-\log(1+V_s)
=
\log(1+x_s)$ and $0\leq x_s\leq\eta_s\leq\eta_0$.
For every \(x\geq 0\), we have 
$\log(1+x)\geq\frac{x}{1+x}$.
This leads to 
\[
\frac{\eta_s}{1+V_s}
=
x_s
\leq
(1+\eta_0)
\left[
\log(1+V_{s+1})-\log(1+V_s)
\right].
\]
Summing this inequality gives
\begin{align*}
\sum_{s=\tau}^{t-1}\frac{\eta_s}{1+V_s}
&\leq
(1+\eta_0)
\sum_{s=\tau}^{t-1}
\left[
\log(1+V_{s+1})-\log(1+V_s)
\right] \\
&=
(1+\eta_0)
\left[
\log(1+V_t)-\log(1+V_\tau)
\right].
\end{align*}
Because \(V_\tau=0\), it follows that
$
\sum_{s=\tau}^{t-1}\frac{\eta_s}{1+V_s}
\leq
(1+\eta_0)\log(1+V_t)$.
Consequently, we obtain 
$H_t
\leq
R_2C_L(1+\eta_0)\log(1+V_t) = C_{\rm HV} \log (1 + V_t)$ for any $t > \tau$ where $C_{\rm HV} := R_2 C_L (1 + \eta_0)$. Since $V_+ \leq \frac{2}{c_{\delta}} \ell_{\eps}^b$ and $t_{+} > \tau$, it holds that
\begin{align*}
H_+
&\leq
C_{\mathrm{HV}}\log\left(
1+\frac{2}{c_\delta}\ell_\eps^b
\right) \leq
C_{\mathrm{HV}}\left[
\log\left(1+\frac{2}{c_\delta}\right)
+b\log\ell_\eps
\right].
\end{align*}
Define $q_a := \dfrac{1}{1-a}$ when $a\in(1/3,1)$ and $q_a := 1$ when $a = 1$.
In both cases, we have $\frac{\log\ell_\eps}{\ell_\eps}
\leq q_a r_\eps$. 
We also \(1/\ell_\eps\leq r_\eps\). Consequently, we obtain 
\[
\frac{H_+}{\ell_\eps}
\leq
K_{H,a,\delta,b}^{+}r_\eps, \qquad K_{H,a,\delta,b}^{+}
:=
C_{\mathrm{HV}}
\log\left(1+\frac{2}{c_\delta}\right)
+bq_a. 
\]

Applying \cref{prop:post_euc} at \(t=t_+\) yields
\begin{align*}
\gamma_\infty-\widehat{\gamma}_\infty(w_{t_+})
&\leq
C_{\infty,\delta}r_\eps
+
4R_1\gamma_\infty
\frac{H_++E_0}{\ell_\eps}                                   \\
&\leq
\left[
C_{\infty,\delta}
+
4R_1\gamma_\infty
\left(E_0+K_{H,a,\delta,b}^{+}\right)
\right]r_\eps = C_{\infty,a,\delta,b}^{+} r_{\eps},
\end{align*}
where $C_{\infty,a,\delta,b}^{+}
:=
C_{\infty,\delta}
+
4R_1\gamma_\infty
\left(E_0+K_{H,a,\delta,b}^{+}\right)$. Similarly, the second inequality of \cref{prop:post_euc} gives
\begin{align*}
\gamma_2-\widehat{\gamma}_2(w_{t_+})
&\geq
\Delta_{\mathrm{geom}}
-
C_{2,\delta}r_\eps
-
4R_2\gamma_2
\frac{H_++E_0}{\ell_\eps}                                   \\
&\geq
\Delta_{\mathrm{geom}}
-
\left[
C_{2,\delta}
+
4R_2\gamma_2
\left(E_0+K_{H,a,\delta,b}^{+}\right)
\right]r_\eps = \Delta_{\mathrm{geom}} - C_{2,a,\delta,b}^{+} r_{\eps},
\end{align*}
where $
C_{2,a,\delta,b}^{+}
:=
C_{2,\delta}
+
4R_2\gamma_2
\left(E_0+K_{H,a,\delta,b}^{+}\right)$. 
\end{proof}

\begin{proof}[Proof of \cref{thm:margin_main}] We can let $\Tilde{\eps}_2(\delta,b) := \min \{ \Tilde{\eps}_2(\delta), \eps_2'(\delta,b), \eps_2''(\delta,b)\}$, where $\Tilde{\eps}_2(\delta)$, $\eps_2'(\delta,b)$, and $\eps_2''(\delta,b)\}$ are the thresholds in \cref{thm:margin_app_T1}, \cref{thm:margin_app_T1-}, and \cref{cor:margin_app_T1+}, respectively. The conclusion then follows. 
    
\end{proof}

\section{Adam's Euclidean Optimality} \label{app:euclidean}
We first provide a proof sketch of \cref{thm:margin_euc}.
Let $\tau := T_1^\delta(\epsilon)$, $V_t := \sum_{s=\tau}^{t-1}\eta_s$, and $H_t := \sum_{s=\tau}^{t-1}\theta_s$. Recall the post-crossover update can be written as
\begin{align}
    w_{s+1}
=
w_s-\theta_s d_s,
\qquad
d_s:=u_s+e_s,
\qquad
u_s:=\frac{g_s}{\|g_s\|_2}, \label{eq:post_u}
\end{align}

where $\theta_s:=\frac{\eta_s\|g_s\|_2}{\epsilon}$ and $\|u_s\|_2=1$. Using Taylor's theorem and following the same approach as \cref{lem:taylor}, we show
\begin{align*}
    L_{s+1}
\leq
L_s
-\theta_s \langle g_s,d_s\rangle
+C_H L_s\bigl(\theta_s\|d_s\|_2\bigr)^2,
\end{align*}
for some constant $C_H$. From this, we further obtain 
$\frac{L_{s+1}}{L_s} \leq 1 - \gamma_2 \theta_s + r_s$, where 
\begin{align*}
    r_s
:=
\gamma_2\theta_s
\left(1-\frac{G_s}{L_s}\right)
+
R_2\theta_s\|e_s\|_2
+
2C_H
\left[
\theta_s^2
+
\bigl(\theta_s\|e_s\|_2\bigr)^2
\right].
\end{align*}
Based on \cref{prop:bounded_euc}, we can show $\sum_{\tau}^{\infty} \theta_s^2 \leq K_{\theta}^2$ for some constant $K_{\theta}$, $\sum_{s=\tau}^{\infty} \theta_s \norm{e_s}_2 \leq E_0$, $\sum_{s=\tau}^{\infty} (\theta_s \norm{e_s}_2)^2 \leq E_0^2$, and $\sum_{s=\tau}^{\infty} \theta_s (1 - \frac{G_s}{L_s}) \leq K_p$ for some constant $K_p$. Consequently, we have $\sum_{s=\tau}^{\infty} r_s \leq A_0  < \infty$ for some constant $A_0$. Algebraic manipulation leads to $L_t \leq L_\tau \exp\!\left(A_0-\gamma_2 H_t\right)$. Hence, the loss decreases exponentially in $H_t$. Using \cref{prop:uniform-clock} and \cref{lem:loss-proxy}, it follows that
$\rho(w_t)\geq\gamma_2H_t-A_0$. On the other hand, unrolling the update gives $\|w_t\|_2 \leq \|w_\tau\|_2 + H_t + E_0$. Combining them together (with a separate trivial
treatment of the case $H_t < A_0/\gamma_2$), we obtain 
\[
\gamma_2-\widehat{\gamma}_2(w_t)
\leq
C_\gamma {}\frac{1+\|w_\tau\|_2}{1+H_t},
\]
where $C_{\gamma}$ is some constant. 
This proves part~(i) of \cref{thm:margin_euc}. For part~(ii), we further denote
$\Delta:=\Delta_{\mathrm{geom}}$,
$\tau_\zeta:=\tau_\zeta^\delta(\epsilon)$,
$R_\tau:=\|w_\tau\|_2$.
By \cref{prop:bounded_euc}, we have 
$H_t\asymp_\delta \log(1+V_t)$. Since \(V_t\to\infty\), it holds that \(H_t\to\infty\). Define
$
\overline{\tau}_\zeta
:=
\inf\left\{
t\geq\tau:
H_t\geq
\frac{C_\gamma}{\zeta}(1+R_\tau)
\right\}$.
Part~(i) implies that 
$
\gamma_2-\widehat{\gamma}_2
\bigl(w_{\overline{\tau}_\zeta}\bigr)
\leq
C_\gamma
\frac{1+R_\tau}{1+H_{\overline{\tau}_\zeta}}
\leq \zeta$.
Therefore, we have $\tau_\zeta\leq\overline{\tau}_\zeta$.
By the minimality of \(\overline{\tau}_\zeta\), it holds that
$H_{\overline{\tau}_\zeta-1} <
\frac{C_\gamma}{\zeta}(1+R_\tau)$.
Since \(\theta_s\leq K_\theta\), we obtain
\begin{align}
H_{\tau_\zeta}
&\leq H_{\overline{\tau}_\zeta} \notag =
H_{\overline{\tau}_\zeta-1}
+\theta_{\overline{\tau}_\zeta-1} \notag <
\frac{C_\gamma}{\zeta}(1+R_\tau)+K_\theta \notag \leq
\left(
\frac{C_\gamma}{\zeta}+K_\theta
\right)(1+R_\tau).
\end{align}
This proves the upper bound in part~(ii). 
For the lower bound, \cref{thm:margin_app_T1} gives
$\gamma_2-\widehat{\gamma}_2(w_\tau)
\geq
\Delta-C_{2,\delta}r_\epsilon$. 
Since \(r_\epsilon\to0\), we choose \(\epsilon\) sufficiently small such
that $C_{2,\delta}r_\epsilon \leq \frac{\Delta-\zeta}{2}$.
Then, we have 
\[
\gamma_2-\widehat{\gamma}_2(w_\tau)
\geq
\Delta-\frac{\Delta-\zeta}{2}
=
\frac{\Delta+\zeta}{2}
>
\zeta.
\]
Consequently, it holds that \(\tau_\zeta>\tau\). At time $\tau_{\zeta}$, we also have 
$\gamma_2-\widehat{\gamma}_2(w_{\tau_\zeta})
\leq\zeta$. 
Therefore, combining them together gives  
\begin{align*}
\widehat{\gamma}_2(w_{\tau_\zeta})
-\widehat{\gamma}_2(w_\tau)
&=
\left[
\gamma_2-\widehat{\gamma}_2(w_\tau)
\right]
-
\left[
\gamma_2-\widehat{\gamma}_2(w_{\tau_\zeta})
\right] \notag \geq
\frac{\Delta+\zeta}{2}-\zeta =
\frac{\Delta-\zeta}{2}.
\end{align*}

Applying \cref{lem:margin_helper} with \(x=w_{\tau_\zeta}\) and \(y=w_\tau\), we
obtain $\frac{\Delta-\zeta}{2}
\leq
\frac{
2R_2\|w_{\tau_\zeta}-w_\tau\|_2
}{
R_\tau
}$. Moreover, unrolling the post-crossover update gives
$\|w_{\tau_\zeta}-w_\tau\|_2 \leq
H_{\tau_\zeta}+E_0$.
Consequently, it holds that  $H_{\tau_\zeta}
\geq \frac{\Delta-\zeta}{4R_2}R_\tau-E_0$. By \eqref{eq:prop33-euclidean-radius}, we know \(R_\tau\to\infty\) as
\(\epsilon\downarrow0\). Hence, after decreasing \(\epsilon\), 
we can ensure that $E_0
\leq
\frac{\Delta-\zeta}{8R_2}R_\tau$. Together, they imply that $H_{\tau_\zeta}
\geq
\frac{\Delta-\zeta}{8R_2}R_\tau$.
Therefore, we can set 
$c_\zeta
:=
\frac{\Delta-\zeta}{8R_2}$ 
and $C_\zeta
:=
\max\left\{
\frac{C_\gamma}{\zeta}+K_\theta,\,
c_\zeta
\right\}$, and it holds that
$c_\zeta\|w_\tau\|_2
\leq
H_{\tau_\zeta}
\leq
C_\zeta\bigl(1+\|w_\tau\|_2\bigr)$. 

\begin{proof} [Proof of \cref{thm:margin_euc}] 
Denote $\tau := T_1^{\delta} (\eps)$ and $H_t := H_{t}^{\delta} (\eps) = \sum_{s = \tau}^{t-1} \theta_s$. 
We first prove part (i). Set $\Tilde{\eps}^{(i)}_{4}(\delta) := \min \{\eps_{\delta}^{\rm inv} , \frac{\log 2}{4 \delta}\}$, where $\eps_{\delta}^{\rm inv}$ is the constant appearing in \cref{cor:invariant}. We choose $\eps$ such that $0 < \eps \leq \Tilde{\eps}^{(i)}_{4}(\delta)$. The condition $0 < \eps \leq \eps_{\delta}^{\rm inv}$ ensures that $G_t \leq L_t \leq 2 \delta \eps, \, \, t \geq \tau$. The extra condition $\eps \leq \frac{\log 2}{4 \delta}$ implies that $G_t \leq L_t \leq 2 \delta \eps \leq \frac{\log 2}{2} < \frac{1}{2}, \,\,t\geq \tau$. Since $G(0) = \frac{1}{2}$, this guarantees that $w_t \neq 0, \,\, t \geq \tau$. Therefore, $\hat{\gamma}_{2}(w_t)$ is well-defined. 

We know from the proof of \cref{prop:bounded_euc} (\eqref{eq:final_helper} and \eqref{eq:final_hh}) that 
\begin{align}
    L_{s} - L_{s+1} \geq c_* \frac{\eta_s}{\eps} L_s^2, \qquad \frac{G_s}{\eps} \leq \frac{C_L}{1 + V_s}, \qquad s \geq \tau \label{eq:LL_temp}
\end{align}
where $C_L = \frac{1}{\min \{ 1/2, c_*\}}$ and $c_* = \frac{\gamma_2^2}{16(1 + 2R_1)}$. Using this and $\norm{g_s}_2 \leq R_2 G_s$, it holds that
\begin{align*}
    \theta_s = \frac{\eta_s \norm{g_s}_2}{\eps} \leq R_2 \eta_s \frac{G_s}{\eps} \leq R_2 C_L \frac{\eta_s}{1 + V_s}.
\end{align*}
Given this, we can further obtain
\begin{align*}
    \sum_{s = \tau}^{\infty} \theta_s^2 \leq R_2^2 C_L^2 \sum_{s = \tau}^{\infty} \frac{\eta_s^2}{(1 + V_s)^2} \leq R_2^2 C_L^2 \sum_{s = \tau}^{\infty} \frac{\eta_s^2}{1 + V_s}.  
\end{align*}
\cref{lem:helper} gives $\sum_{s = \tau}^{\infty} \frac{\eta_s^2}{1 + V_s} \leq Q_{\eta}$ where $Q_{\eta} = \eta_0^2 + \frac{1}{a} \max \{ \eta_0^2, \eta_0 (1-a) \}$. Therefore, it holds that
\begin{align}
    \sum_{s = \tau}^{\infty} \theta_s^2 \leq K_{\theta}^2, \qquad K_{\theta} = R_2 C_L \sqrt{Q_{\eta}}. \label{eq:a1}
\end{align}
In particular, it holds that $\sup_{s \geq \tau} \theta_s \leq K_{\theta}$. 

From \cref{prop:bounded_euc}, we know 
\begin{align}
    \sum_{s = \tau}^{\infty} \theta_s \norm{e_s}_2 \leq E_0, \qquad E_0 = C_{\rm ema} C_L Q_{\eta} + \frac{2 R_1 R_2}{c_*}. \label{eq:a2}
\end{align}
Let $d_s = u_s + e_s$. Since $\norm{u_s}_2 = 1$, it holds that $\theta_s \norm{d_s}_2 \leq \theta_s + \theta_s \norm{e_s}_2$. Consequently, it holds that
\begin{align*}
    \sup_{s \geq \tau} \theta_s \norm{d_s}_2 \leq K_{\theta} + E_0 =: B_0 < \infty. 
\end{align*}
Define $b_s := \theta_s \norm{e_s}_2 \geq 0$. Since $\sum_{s = \tau}^{\infty} b_s \leq E_0$, we have
\begin{align}
    \sum_{s = \tau}^{\infty} (\theta_s \norm{e_s}_2)^2 = \sum_{s = \tau}^{\infty} b_s^2 \leq (\sum_{s = \tau}^{\infty} b_s)^2 \leq E_0^2. \label{eq:a3}
\end{align}

Next, we bound the term $\sum_{s = \tau}^{\infty} \theta_s (1 - \frac{G_s}{L_s})$. We first bound $\sum_{s = \tau}^{\infty} \theta_s L_s$. From $\theta_s L_s = \frac{\eta_s \norm{g_s}_2}{\eps} L_s \leq R_2 \frac{\eta_s}{\eps} G_s L_s \leq R_2 \frac{\eta_s}{\eps} L_s^2$ and \eqref{eq:LL_temp}, we obtain
$\theta_s L_s \leq \frac{R_2}{c_*} (L_s - L_{s+1})$. Consequently, it holds that
\begin{align*}
    \sum_{s = \tau}^{\infty} \theta_s L_s \leq \frac{R_2}{c_*} \sum_{s = \tau}^{\infty} (L_s - L_{s+1}) \leq \frac{R_2 L_{\tau}}{c_*}. 
\end{align*}
Since $\tau \geq T_*$, we know $L_{\tau} \leq \frac{\log 2}{n}$ from \cref{prop:uniform-clock}. \cref{lem:loss-proxy} further gives $0 \leq 1 - \frac{G_s}{L_s} \leq \frac{n L_s}{2}$. Based on these inequalities, we obtain
\begin{align}
    \sum_{s = \tau}^{\infty} \theta_s (1 - \frac{G_s}{L_s}) \leq \frac{n}{2} \sum_{s = \tau}^{\infty} \theta_s L_s \leq \frac{R_2 \log 2}{2 c_*} =: K_p. \label{eq:a4}
\end{align}

For $s \geq \tau$, the update is $w_{s+1} = w_s - \theta_s d_s$. For $\xi \in [0,1]$, we define $w_{s,\xi} = w_s - \xi \theta_s d_s$. From \cref{lem:taylor}, we know the Hessian of the logistic loss is $\nabla^2 L(w) = \frac{1}{n} \sum_{i=1}^n \ell''(z_i^T w) z_i z_i^T$ with $0 \leq \ell''(r) \leq - \ell'(r)$. This implies that
\begin{align*}
    d_s^T \nabla^2 L(w_{s,\xi}) d_s = \frac{1}{n} \sum_{i=1}^n \ell''(z_i^T w_{s,\xi}) (z_i^T d_s)^2 \leq \frac{1}{n} \sum_{i=1}^n a_i(w_{s,\xi}) (z_i^T d_s)^2. 
\end{align*}

For every sample $i$, it holds that $|z_i^T (w_{s,\xi} - w_s)| \leq \norm{z_i}_2 \theta_s \norm{d_s}_2 \leq R_2 B_0$. From \cref{lem:derivative-ratio}, we have $\frac{a_i(w_{s,\xi})}{a_i(w_s)} = \frac{-\ell'(z_i^T w_{s,\xi})}{-\ell'(z_i^T w_s)} \leq \exp (|z_i^T w_{s,\xi} - z_i^T w_s|) \leq e^{R_2 B_0}$. Consequently, it holds that
\begin{align*}
    G(w_{s,\xi}) = \frac{1}{n} \sum_{i=1}^n a_i(w_{s,\xi}) \leq e^{R_2 B_0} \frac{1}{n} \sum_{i=1}^n a_i (w_s) = e^{R_2 B_0} G_s. 
\end{align*}
Moreover, we have $(z_i^T d_s)^2 \leq R_2^2 \norm{d_s}_2^2$ given $|z_i^T d_s| \leq \norm{z_i}_2 \norm{d_s}_2 \leq R_2 \norm{d_s}_2$. Putting things together, we obtain
\begin{align*}
    d_s^T \nabla^2 L(w_{s,\xi}) d_s  \leq \frac{1}{n} \sum_{i=1}^n a_i(w_{s,\xi}) R_2^2 \norm{d_s}_2^2 = R_2^2 G(w_{s,\xi}) \norm{d_s}_2^2 \leq e^{R_2 B_0} R_2^2 G_s \norm{d_s}_2^2 \leq e^{R_2 B_0} R_2^2 L_s \norm{d_s}_2^2,
\end{align*}
where we used $G_s \leq L_s$. Following the same approach as \cref{lem:taylor}, we have
\begin{align*}
    L_{s+1} &= L_s - \theta_s \langle g_s, d_s \rangle + \theta_s^2 \int_{0}^1 (1 - \xi) d_s^T \nabla^2 L(w_{s,\xi}) d_s d\xi \\
    &\leq L_s - \theta_s \langle g_s, d_s \rangle + \theta_s^2 \int_{0}^1 (1 - \xi) e^{R_2 B_0} R_2^2 L_s \norm{d_s}_2^2 d\xi \\
    &=  L_s - \theta_s \langle g_s, d_s \rangle + \frac{1}{2} e^{R_2 B_0}  R_2^2 L_s \theta_s^2 \norm{d_s}_2^2 \\
    &= L_s - \theta_s \langle g_s, d_s \rangle + C_H L_s (\theta_s \norm{d_s}_2)^2,
\end{align*}
where $C_H = \frac{1}{2} R_2^2 e^{R_2 B_0}$. 

Next, we focus on the first-order term. Given $\langle g_s, d_s\rangle = \langle g_s, u_s \rangle + \langle g_s, e_s \rangle = \norm{g_s}_2 + \langle g_s, e_s \rangle$, it holds that $- \theta_s \langle g_s, d_s \rangle \leq - \theta_s \norm{g_s}_2 + \theta_s \norm{g_s}_2 \norm{e_s}_2$. From \cref{lem:proxy}, we know $\gamma_2 G_s \leq \norm{g_s}_2 \leq R_2 G_s$. This implies
\begin{align*}
    -\theta_s \norm{g_s}_2 \leq - \gamma_2 \theta_s G_s = -\gamma_2 \theta_s L_s + \gamma_2 \theta_s (L_s - G_s), \qquad \theta_s \norm{g_s}_2 \norm{e_s}_2 \leq R_2 \theta_s L_s \norm{e_s}_2. 
\end{align*}
Substituting these bounds into above, we obtain 
\begin{align}
    L_{s+1} \leq L_s \bigl[ 1 - \gamma_2 \theta_s + \gamma_2 \theta_s ( 1 - \frac{G_s}{L_s}) + R_2 \theta_s \norm{e_s}_2 + C_H (\theta_s \norm{d_s}_2)^2 \bigr]. \label{eq:L_temp}
\end{align}
Since $\theta_s \norm{d_s}_2 \leq \theta_s + \theta_s \norm{e_s}_2$, we have $(\theta_s \norm{d_s}_2)^2 \leq 2 \theta_s^2 + 2 (\theta_s \norm{e_s}_2)^2$. Define 
\begin{align}
    r_s := \gamma_2 \theta_s (1 - \frac{G_s}{L_s}) + R_2 \theta_s \norm{e_s}_2 + 2 C_H [\theta_s^2 + (\theta_s \norm{e_s}_2)^2 \bigr ] \label{eq:rs}
\end{align}
Then, \eqref{eq:L_temp} becomes
\begin{align}
    \frac{L_{s+1}}{L_s} \leq 1 - \gamma_2 \theta_s + r_s. \label{eq:frac_L}
\end{align}
Because the left side is positive, the right side of \eqref{eq:frac_L} is also positive. Using $\log (1+ x) \leq x$ for $x > -1$, we obtain
\begin{align*}
    \log \frac{L_{s+1}}{L_s} \leq \log (1 - \gamma_2 \theta_s + r_s) \leq -\gamma_2 \theta_s + r_s. 
\end{align*}
Equivalently, we have 
\begin{align}
    \log \frac{L_s}{L_{s+1}} \geq \gamma_2 \theta_s - r_s. \label{eq:log_L} 
\end{align}
Using \eqref{eq:a1}, \eqref{eq:a2}, \eqref{eq:a3}, and \eqref{eq:a4}, we obtain from \eqref{eq:rs} after summing that
\begin{align*}
    \sum_{s = \tau}^{\infty} r_s \leq \gamma_2 K_p + R_2 E_0 + 2 C_H (K_{\theta}^2 + E_0^2).
\end{align*}
Define $A_0: = \gamma_2 K_p + R_2 E_0 + 2 C_H (K_{\theta}^2 + E_0^2)$. Summing \eqref{eq:log_L} from $s = \tau$ to $t - 1$ gives
\begin{align*}
    \log \frac{L_{\tau}}{L_{t}} \geq \gamma_2 \sum_{s = \tau}^{t-1} \theta_s - \sum_{s = \tau}^{t-1} r_s \geq \gamma_2 H_t - A_0. 
\end{align*}
Therefore, it holds that
\begin{align}
    L_t \leq L_{\tau} \exp (A_0 - \gamma_2 H_t), 
    \qquad t \geq \tau. \label{eq:exp_L}
\end{align}
For every $t \geq \tau \geq T_*$, \cref{prop:uniform-clock} gives $L_t \leq \frac{\log 2}{n}$. Given this, \cref{lem:loss-proxy} gives $L_t \geq \frac{\log 2}{n} e^{-\rho(w_t)}$. Denote $c_{\log} := \frac{\log 2}{n}$. Then, it holds that $\rho(w_t) \geq \log \frac{c_{\log}}{L_t}$.  Using \eqref{eq:exp_L}, we obtain for $t \geq \tau$ that
\begin{align}
    \rho(w_t) &\geq \log \frac{c_{\log}}{L_{\tau}} + \gamma_2 H_t - A_0 \nonumber \\
    &\geq \gamma_2 H_t -A_0, \qquad t \geq \tau, \label{eq:lower_rho}
\end{align}
where we used $\log \frac{c_{\log}}{L_{\tau}} \geq 0$ as $L_{\tau} \leq c_{\log}$. 

Unrolling \eqref{eq:post_u} gives $w_t = w_{\tau} - \sum_{s = \tau}^{t-1} \theta_s u_s - \sum_{s = \tau}^{t-1} \theta_s e_s$. Since $\norm{u_s}_2 = 1$, we have $\norm{\sum_{s = \tau}^{t-1} \theta_s u_s}_{2} \leq \sum_{s = \tau}^{t-1} \theta_s = H_t$. Moreover, \eqref{eq:a2} gives $\norm{\sum_{s = \tau}^{t-1} \theta_s e_s}_2 \leq \sum_{s = \tau}^{t-1} \theta_s \norm{e_s}_2 \leq E_0$. Therefore, we have
\begin{align}
    \norm{w_t}_2 \leq H_t + \norm{w_{\tau}}_2 + E_0 \leq H_t + B_{\tau}, \label{eq:w_bound}
\end{align}
where $B_{\tau} := \norm{w_{\tau}}_2 + E_0 + 1$.  

Next, we split the analysis into two cases. We first note that $\rho(w_t) \geq 0, \, t \geq \tau$ by \cref{lem:loss-proxy}. Also, by the definition of $\gamma_2$, it holds that $\hat{\gamma}_2 (w_t) = \rho (\frac{w_t}{\norm{w_t}_2}) \leq \gamma_2$. Consequently, it holds that $0 \leq \gamma_2 - \hat{\gamma}_2(w_t) \leq \gamma_2$. We first consider the case where $H_t \geq \frac{A_0}{\gamma_2}$. It then holds that $\gamma_2 H_t - A_0 \geq 0$. Using \eqref{eq:lower_rho} and \eqref{eq:w_bound}, we have $\hat{\gamma}_2(w_t) = \frac{\rho(w_t)}{\norm{w_t}_2} \geq \frac{\gamma_2 H_t - A_0}{H_t + B_{\tau}}$. Consequently, it holds that
\begin{align*}
    \gamma_2 - \hat{\gamma}_2(w_t) \leq \gamma_2 - \frac{\gamma_2 H_t- A_0}{H_t + B_{\tau}} = \frac{\gamma_2 B_{\tau} + A_0}{H_t + B_{\tau}}.
\end{align*}
Because $B_{\tau} \geq 1$, we have $H_t + B_{\tau} \geq 1 + H_t$. Therefore, it holds that $\gamma_2 - \hat{\gamma}_2 (w_t) \leq \frac{\gamma_2 B_{\tau} + A_0}{1 + H_t}$.  Moreover, we note that 
\begin{align*}
    \gamma_2 B_{\tau} + A_0 = \gamma_2 (1 + \norm{w_{\tau}}_2 + E_0) + A_0 \leq [\gamma_2 (1 + E_0) + A_0 + \gamma_2] (1 + \norm{w_{\tau}}_2).  
\end{align*}
If we define $C_{\gamma} = \gamma_2 (1 + E_0) + A_0 + \gamma_2$, then it holds that $\gamma_2 B_{\tau} + A_0 \leq C_{\gamma} (1 + \norm{w_{\tau}}_2)$. Consequently, we conclude  for the case $H_t \geq \frac{A_0}{\gamma_2}$ that
\begin{align}
    \gamma_2 - \hat{\gamma}_2(w_t) \leq C_{\gamma} \frac{1 + \norm{w_{\tau}}_2}{1 + H_t}. \label{eq:f1}
\end{align}
For the case where $H_t < \frac{A_0}{\gamma_2}$, we only use $0 \leq \gamma_2 - \hat{\gamma}_{2}(w_t) \leq \gamma_2$. Since $1 + H_t < 1 + \frac{A_0}{\gamma_2}$, we obtain
\begin{align}
    \gamma_2 - \hat{\gamma}_2 (w_t) \leq \gamma_2 \leq \gamma_2 (1 + \frac{A_0}{\gamma_2}) \frac{1}{1 + H_t} \leq (\gamma_2 + A_0) \frac{1 + \norm{w_{\tau}}_2}{1 + H_t}. \label{eq:f2}
\end{align}
Combining \eqref{eq:f1} and \eqref{eq:f2} together, we conclude that $ \gamma_2 - \hat{\gamma}_2(w_t) \leq C_{\gamma} \frac{1 + \norm{w_{\tau}}_2}{1 + H_t}$. 

Next, we prove part (ii).  Denote $\triangle := \triangle_{\rm geom} > 0$, $\tau_{\zeta} := \tau_{\zeta}^{\delta}(\eps) = \inf \{ t \geq \tau: \gamma_2 - \hat{\gamma}_2(w_t) \leq \zeta\}$, $R_{\tau} := \norm{w_{\tau}}_2$, and $V_t := V_t^{\delta}(\eps) = \sum_{s=\tau}^{t-1} \eta_s$. Fix any $0 < \zeta < \triangle$. \cref{prop:bounded_euc} gives $H_t \asymp_{\delta} \log (1 + V_t)$.
Given $V_t = \sum_{s = \tau}^{t-1} \eta_s \rightarrow \infty$ as $t \rightarrow \infty$, it holds that $H_t \rightarrow \infty$. Define $A_{\zeta} := \frac{C_{\gamma}}{\zeta} (1 + R_{\tau})$ and $\bar{\tau}_{\zeta} := \inf \{ t \geq \tau: H_t \geq A_{\zeta} \}$. Since $H_t \rightarrow \infty$, this time is finite. At $t = \bar{\tau}_{\zeta}$, from part (i), we obtain
\begin{align*}
    \gamma_2 - \hat{\gamma}_2 (w_{\bar{\tau}_{\zeta}}) \leq C_{\gamma} \frac{1 + R_{\tau}}{ 1 + H_{\bar{\tau}_{\zeta}}} \leq C_{\gamma} \frac{1 + R_{\tau}}{H_{\bar{\tau}_{\zeta}}} \leq \zeta. 
\end{align*}
Therefore, $\bar{\tau}_{\zeta}$ belongs to the set defining $\tau_{\zeta}$, and consequently we have $\tau_{\zeta} \leq \bar{\tau}_{\zeta}$. In particular, this shows that $\tau_{\zeta} < \infty$. Since $H_t$ is increasing, it holds that $H_{\tau_{\zeta}} \leq H_{\bar{\tau}_{\zeta}}$. By minimality of $\bar{\tau}_{\zeta}$, it also holds that $H_{\bar{\tau}_{\zeta} - 1} < A_{\zeta}$. Using $H_{\bar{\tau}_{\zeta}} = H_{\bar{\tau}_{\zeta} - 1} + \theta_{\bar{\tau}_{\zeta} - 1}$, we obtain $H_{\bar{\tau}_{\zeta}} < A_{\zeta} + \theta_{\bar{\tau}_{\zeta} - 1}$. Because $H_{\tau}= 0 < A_{\zeta}$, we must have $\bar{\tau}_{\zeta} \geq \tau + 1$ or equivalently $\bar{\tau}_{\zeta} - 1 \geq \tau$. Given $\sup_{s \geq \tau} \theta_s \leq K_{\theta}$ (proved in (i)), we have $\theta_{\bar{\tau}_{\zeta} - 1} \leq K_{\theta}$. These inequalities lead to
\begin{align*}
    H_{\tau_{\zeta}} \leq H_{\bar{\tau}_{\zeta}} < \frac{C_{\gamma}}{\zeta} (1 + R_{\tau}) + K_{\theta} \leq (\frac{C_{\gamma}}{\zeta} + K_{\theta}) (1 + R_{\tau}).
\end{align*}

Next, we show the lower bound on $H_{\tau_{\zeta}}$. From \cref{thm:margin_app_T1}, we have $\gamma_2 - \hat{\gamma}_2 (w_{\tau}) \geq \triangle - C_{2,\delta} r_{\eps}$. Choose $\eps$ sufficiently small such that $C_{2,\delta} r_{\eps} \leq \frac{\triangle - \zeta}{2}$. Then, it holds that
\begin{align*}
    \gamma_2 - \hat{\gamma}_{2} (w_{\tau}) \geq \triangle - \frac{\triangle - \zeta}{2} = \frac{\triangle + \zeta}{2}. 
\end{align*}
In particular, we have $\gamma_2 - \hat{\gamma}_2 (w_{\tau}) > \zeta$. This implies that $\tau_{\zeta} > \tau$. At $t  = \tau_{\zeta}$, we have $\gamma_2 - \hat{\gamma}_2 (w_{\tau_{\zeta}}) \leq \zeta$. Consequently, it holds that
\begin{align*}
    \hat{\gamma}_2(w_{\tau_{\zeta}}) - \hat{\gamma}_2(w_{\tau}) &= [\gamma_2 - \hat{\gamma}_2(w_{\tau})] - [\gamma_2 - \hat{\gamma}_2(w_{\tau_{\zeta}})] \\ 
    & \geq \frac{\triangle + \zeta}{2} - \zeta = \frac{\triangle - \zeta}{2}.
\end{align*}
Applying \cref{lem:margin_helper} with $x = w_{\tau_{\zeta}}$ and $y = w_{\tau}$, we obtain
\begin{align}
    \frac{\triangle - \zeta}{2} \leq \frac{2 R_2 \norm{w_{\tau_{\zeta}}- w_{\tau}}_2}{R_{\tau}}. \label{eq:ff1}
\end{align}
Based on the post-crossover update $w_{s+1} = w_s - \theta_s(u_s + e_s)$ with  $\norm{u_s}_2 =1$, we unroll from $\tau$ to $\tau_{\zeta} - 1$ to obtain $w_{\tau_{\zeta}} - w_{\tau} = - \sum_{s=\tau}^{\tau_{\zeta} - 1} \theta_s u_s - \sum_{s=\tau}^{\tau_{\zeta} - 1} \theta_s e_s$. Therefore, it holds that
\begin{align}
    \norm{w_{\tau_{\zeta}} - w_{\tau}}_2 \leq \sum_{s=\tau}^{\tau_{\zeta} - 1} \theta_s \norm{u_s}_2 + \sum_{s=\tau}^{\tau_{\zeta} - 1} \theta_s \norm{e_s}_2 \leq H_{\tau_{\zeta}} + E_0. \label{eq:ff2}
\end{align}
Combining \eqref{eq:ff1} and \eqref{eq:ff2}, we obtain $\frac{\triangle - \zeta}{2} \leq \frac{2 R_2 (H_{\tau_{\zeta}} + E_0)}{R_{\tau}}$.
This is equivalent to 
\begin{align}
    H_{\tau_{\zeta}} \geq \frac{\triangle - \zeta}{4 R_2} R_{\tau} - E_0. \label{eq:H_lower}
\end{align}
It remains to absorb the constant $E_0$. 

From the proof of \cref{prop:post_euc}, at time $\tau$, we have $\rho(w_{\tau}) \geq \ell_{\eps} - A_{\delta}$ where $A_{\delta} := [\log \frac{2 n \delta}{\log 2}]_{+}$. Because $\rho(w) \leq \gamma_2 \norm{w}_2$, we have $R_{\tau} \geq \frac{\ell_{\eps} - A_{\delta}}{\gamma_2}$. Therefore, it holds that $R_{\tau} \rightarrow \infty$ as $\eps \downarrow 0$. In particular, we choose $\eps$ sufficiently small such that
$R_{\tau} \geq \frac{8 R_2 E_0}{\triangle - \zeta}$. This implies that $E_0 \leq \frac{\triangle - \zeta}{8 R_2} R_{\tau}$. Substituting this into \eqref{eq:H_lower}, we obtain
\begin{align*}
    H_{\tau_{\zeta}} \geq \frac{\triangle - \zeta}{4 R_2} R_{\tau} - \frac{\triangle - \zeta}{8 R_2} R_{\tau} = \frac{\triangle - \zeta}{8 R_2} R_{\tau}. 
\end{align*}
Therefore, we can take
\begin{align*}
    c_{\zeta} = \frac{\triangle - \zeta}{8 R_2}, \qquad C_{\zeta} = \max \{ \frac{C_{\gamma}}{\zeta} + K_{\theta}, \frac{\triangle - \zeta}{8 R_2} \}. 
\end{align*}
Then, it holds that $0 < c_{\zeta} \leq C_{\zeta} < \infty$ and 
\begin{align}
    c_{\zeta} \norm{w_{T_1^{\delta}(\eps)}}_2 \leq H^{\delta}_{\tau_{\zeta}^{\delta}(\eps)}(\eps) \leq C_{\zeta} (1 + \norm{w_{T_1^{\delta}(\eps)}}_2). \label{eq:H_lower_upper}
\end{align}
In the proof of \cref{thm:margin_app_T1}, we have shown 
$\norm{w_{\tau_{\eps}}}_{\infty} = \frac{\ell_{\eps}}{\gamma_{\infty}} + O_{\delta}(\omega_a(\ell_{\eps}))$. Given this, we can write $\norm{w_{\tau_{\eps}}}_{\infty} = \frac{\ell_{\eps}}{\gamma_{\infty}} + R_{\eps}$ with $|R_{\eps}| \leq C_{\delta} \omega_a (\ell_{\eps})$ when $\eps$ is sufficiently small. Given $r_{\eps} \rightarrow 0$, it holds that $C_{\delta} \gamma_{\infty} r_{\eps} \leq \frac{1}{2}$ or equivalently $C_{\delta} \omega_a(\ell_{\eps}) \leq \frac{\ell_{\eps}}{2 \gamma_{\infty}}$ when $\eps$ is sufficiently small. Combining them together, for sufficiently small $\eps$, it holds that $|R_{\eps}| \leq \frac{\ell_{\eps}}{2 \gamma_{\infty}}$. This leads to $\norm{w_{\tau_{\eps}}}_{\infty} = \frac{\ell_{\eps}}{\gamma_{\infty}} + R_{\eps} \geq \frac{\ell_{\eps}}{\gamma_{\infty}} - |R_{\eps}| \geq \frac{\ell_{\eps}}{\gamma_{\infty}} - \frac{\ell_{\eps}}{2 \gamma_{\infty}} = \frac{\ell_{\eps}}{2 \gamma_{\infty}}$ and $\norm{w_{\tau_{\eps}}}_{\infty} = \frac{\ell_{\eps}}{\gamma_{\infty}} + R_{\eps} \leq \frac{\ell_{\eps}}{\gamma_{\infty}} + |R_{\eps}| \leq \frac{\ell_{\eps}}{\gamma_{\infty}} + \frac{\ell_{\eps}}{2 \gamma_{\infty}} \leq \frac{2 \ell_{\eps}}{\gamma_{\infty}}$. Therefore, we conclude for sufficiently small $\eps$ that
\begin{align*}
    \frac{\ell_{\eps}}{2 \gamma_{\infty}} \leq \norm{w_{\tau_{\eps}}}_{\infty} \leq \frac{2 \ell_{\eps}}{\gamma_{\infty}}. 
\end{align*}
Using $\norm{w}_{\infty} \leq \norm{w}_2 \leq \sqrt{d} \norm{w}_{\infty}$, we further obtain $\frac{\ell_{\eps}}{2 \gamma_{\infty}} \leq \norm{w_{\tau_{\eps}}}_2 \leq \frac{2 \sqrt{d}}{\gamma_{\infty}} \ell_{\eps}$. Substituting this into \eqref{eq:H_lower_upper}, we conclude that
\begin{align}
    H^{\delta}_{\tau_{\zeta}^{\delta}(\eps)} (\eps) = \Theta (\ell_{\eps}) = \Theta(\log \frac{1}{\eps}). \label{eq:H_scale}
\end{align}
Based on $H_t \asymp_{\delta} \log (1 + V_t)$ (shown in \cref{prop:bounded_euc}), we know there exist constants $0 < \bar{c}_{\delta} \leq \bar{C}_{\delta} < \infty$ such that
\begin{align*}
    \bar{c}_{\delta} \log (1 + V_t) \leq H_t \leq \bar{C}_{\delta} \log (1 + V_t), \qquad t \geq \tau. 
\end{align*}
At $t = \tau_{\zeta}$, \eqref{eq:H_scale} implies that there exist constants $0 < h^{-}_{\zeta} \leq h^{+}_{\zeta} < \infty$ such that $h^{-}_{\zeta} \ell_{\eps} \leq H_{\tau_{\zeta}} \leq h^{+}_{\zeta} \ell_{\eps}$. Combining them, we obtain
\begin{align}
    \frac{h_{\zeta}^{-}}{\bar{C}_{\delta}} \ell_{\eps} \leq \log (1 + V_{\tau_{\zeta}}) \leq \frac{h_{\zeta}^{+}}{\bar{c}_{\delta}} \ell_{\eps}. \label{eq:log_V} 
\end{align} 
Define $\alpha_{\zeta} := \frac{h_{\zeta}^{-}}{\bar{C}_{\delta}} > 0$ and $\beta_{\zeta} := \frac{h_{\zeta}^{+}}{\bar{c}_{\delta}} < \infty$. Exponentiating \eqref{eq:log_V} and using $e^{\ell_{\eps}} = \frac{1}{\eps}$, we obtain $\eps^{- \alpha_{\zeta}} \leq 1 + V_{\tau_{\zeta}} \leq \eps^{-\beta_{\zeta}}$. For sufficiently small $\eps$, it holds that $\eps^{- \alpha_{\zeta}} \geq 2$. Consequently, we have $\frac{1}{2} \eps^{-\alpha_{\zeta}} \leq V_{\tau_{\zeta}} \leq \eps^{- \beta_{\zeta}}$.  Therefore, we conclude that $V^{\delta}_{\tau^{\delta}_{\zeta}(\eps)} (\eps) = \eps^{- \Theta(1)}$. 

Finally, we specify the threshold $\Tilde{\eps}_{4}(\delta,\zeta)$ that incorporates the conditions on $\eps$ from both parts (i) and (ii). We define
\begin{align*}
    L_{2}
:=
\begin{cases}
\displaystyle
\max\left\{
1,\,
\left(
\frac{2C_{2,\delta}}{\Delta-\zeta}
\right)^{1/a}
\right\},
& \dfrac{1}{3}<a<1,\\[1.2em]
\displaystyle
\max\left\{
e,\,
\left(
\frac{2C_{2,\delta}}{\Delta-\zeta}
\right)^2
\right\},
& a=1, 
\end{cases}
\end{align*}
and $L_{\delta, \zeta} := \max \{ 2 A_{\delta}, \frac{16 \gamma_2 R_2 E_0}{\triangle - \zeta}, L_2\}$. We set $\Tilde{\eps}_{4}(\delta, \zeta) := \min \{ \Tilde{\eps}_1(\delta), \Tilde{\eps}^{(i)}_{4} (\delta), \Tilde{\eps}_2(\delta), e^{-L_{\delta, \zeta}}\}$. Then, any $0 < \eps \leq \Tilde{\eps}_{4}(\delta, \zeta)$ satisfies the required conditions. Indeed, $L_{\delta, \zeta} \geq L_{2}$ guarantees that $C_{2,\delta} r_{\eps} \leq \frac{\triangle - \zeta}{2}$ (for $a =1$, we used the fact $\frac{\log \ell_{\eps}}{\ell_{\eps}} \leq \frac{1}{\sqrt{\ell_{\eps}}}$); $L_{\delta, \zeta} \geq 2 A_{\delta}$ guarantees $R_{\tau} \geq \frac{\ell_{\eps}}{2 \gamma_2}$; and $L_{\delta,\zeta} \geq \frac{16 \gamma_2 R_2 E_0}{\triangle - \zeta}$ guarantees $R_{\tau} \geq \frac{8 R_2 E_0}{\triangle - \zeta}$. Moreover, we have ensured that $\Tilde{\eps}_{4}(\delta, \zeta) \leq \Tilde{\eps}_1(\delta)$ so we can apply the results of \cref{prop:bounded_euc}. 
\end{proof}

\section{Proof of \cref{cor:iter_complex}}

\begin{proof} [Proof of \cref{cor:iter_complex}] Denote $\tau_{\eps} := T_1^{\delta} (\eps)$ and $\tau_{\zeta} := \tau_{\zeta}^{\delta}(\eps)$. From \cref{thm:first-clock}, we know $S_{\tau_{\eps}} - S_{T_*} = \frac{\ell_{\eps}}{\gamma_{\infty}} + o(\ell_{\eps})$. Since $S_{T_*}$ is a fixed constant independent of $\eps$, we can write $S_{\tau_{\eps}} = \frac{1}{\gamma_{\infty}} \ell_{\eps} + o(\ell_{\eps})$. From \cref{thm:margin_euc}, we proved there exist constants $0 < \alpha_{\delta,\zeta} \leq \beta_{\delta, \zeta} < \infty$ such that $\frac{1}{2} \eps^{- \alpha_{\delta,\zeta}} \leq V^{\delta}_{\tau_{\zeta}} (\eps) \leq \eps^{- \beta_{\delta, \zeta}}$ when $\eps$ is sufficiently small. For simplicity, we let $\alpha := \alpha_{\delta,\zeta}$, $\beta := \beta_{\delta, \zeta}$, and $V_{\eps} := V^{\delta}_{\tau_{\zeta}} (\eps)$. 

Since $\eta_s = \eta_0 (s + 1)^{-a}$, we have $S_t = \eta_0 \sum_{k=1}^t k^{-a}$. For the $0 < a < 1$ case, since the function $x^{-a}$ is decreasing, it holds that $\int_{1}^{t+1} x^{-a} d x \leq \sum_{k=1}^t k^{-a} \leq 1 + \int_{1}^t x^{-a} dx$. Evaluating the integrals leads to 
\begin{align}
    \frac{(t+1)^{1-a} - 1}{1 - a} \leq \sum_{k=1}^t k^{-a} \leq 1 + \frac{t^{1-a} - 1}{1 -a}. \label{eq:int_app}
\end{align}
Define $M_t := \frac{t^{1-a}}{1-a}$. Then, the upper bound satisfies $1 + \frac{t^{1-a}}{1 -a} = M_t - \frac{a}{1 - a}$. For the lower bound, we have
\begin{align}
    \frac{(t+1)^{1-a} - 1}{1 - a} - M_t = \frac{(t+1)^{1-a} - t^{1-a} - 1}{1 - a}. \label{eq:MT}
\end{align}
Applying the mean value theorem to $h(x) = x^{1-a}$, there exists some $\nu_t \in (t, t + 1)$ such that $(t+1)^{1-a} - t^{1 - a} = h'(\nu_t) = (1 - a) \nu_t^{-a}$. For $t \geq 1$, we have $\nu_t \geq 1$ and hence $0 < \nu_t^{-a} \leq 1$. Consequently, it holds that $0 < (t+1)^{1-a} - t^{1 -a} \leq 1 - a$. Substituting this into \eqref{eq:MT}, we obtain
\begin{align*}
    -\frac{1}{1-a} < \frac{(t+1)^{1-a} - 1}{1 - a} - M_t \leq - \frac{a}{1 - a}.
\end{align*}
Combining this with \eqref{eq:int_app}, we obtain
\begin{align*}
    \frac{t^{1-a}}{1 - a} - \frac{1}{1 - a} \leq \sum_{k=1}^t k^{-a} \leq \frac{t^{1-a}}{1 - a} - \frac{a}{1 - a}.
\end{align*}
If we define $E_t := \sum_{k=1}^t k^{-a} - \frac{t^{1-a}}{1 -a}$, then it holds that $-\frac{1}{1-a} \leq E_t \leq -\frac{a}{1 -a}$. This implies that $|E_t| \leq \frac{1}{1 - a}$ and hence $E_t = O(1)$. From this, we obtain $\sum_{k=1}^t k^{-a} = \frac{t^{1-a}}{1 -a} + O(1)$ and consequently 
\begin{align}
    S_t = \frac{\eta_0}{1 - a} t^{1 - a} + O(1), \qquad 0< a < 1 \label{eq:St_1}
\end{align} 
In the case where $a = 1$, we have $\int_{1}^{t+1} \frac{dx}{x} \leq \sum_{k= 1}^t \frac{1}{k} \leq 1 + \int_{1}^t \frac{dx}{x}$. Evaluating the integral leads to $\log (t+1) \leq \sum_{k=1}^t \frac{1}{k} \leq 1 + \log t$. If we let $\Omega_t := \sum_{k=1}^t \frac{1}{k}$, then we have $\log t \leq \Omega_t \leq \log t + 1$ and consequently $\Omega_t = \log t + O(1)$. From this, we conclude that 
\begin{align}
    S_t = \eta_0 \log t + O(1), \qquad a = 1. \label{eq:St_2}
\end{align}

Now, we prove part (i) where the learning rate schedule is $\eta_t = \eta_0 (t+1)^{-a}, \, a \in (1/3,1)$. Applying \eqref{eq:St_1} at $t = \tau_{\eps}$ and using $S_{\tau_{\eps}} = \frac{1}{\gamma_{\infty}} \ell_{\eps} + o(\ell_{\eps})$, we obtain
\begin{align*}
    \frac{\eta_0}{1 - a} \tau_{\eps}^{1 - a} + O(1) = \frac{1}{\gamma_{\infty}} \ell_{\eps} + o(\ell_{\eps}). 
\end{align*}
Since $\ell_{\eps} \rightarrow \infty$, the $O(1)$ term is $o(\ell_{\eps})$ and it holds that $\tau_{\eps}^{1-a} = \frac{1 - a}{\eta_0 \gamma_{\infty}} \ell_{\eps} + o(\ell_{\eps})$. Equivalently, we can write $\tau_{\eps}^{1-a} = \frac{1 - a}{\eta_0 \gamma_{\infty}} \ell_{\eps} (1 + o(1))$. Taking the $1/(1-a)$-th power gives 
\begin{align*}
    T_1^{\delta}(\eps) = \bigl( \frac{1 - a}{\eta_0 \gamma_{\infty}} \log \frac{1}{\eps} \bigr)^{1/(1-a)} \bigl( 1 + o(1) \bigr) = \Theta \bigr( (\log \frac{1}{\eps})^{1/(1-a)}\bigr). 
\end{align*}
By definition, it holds that $S_{\tau_{\zeta}} = S_{\tau_{\eps}} + V_{\eps}$. From $S_{\tau_{\eps}} = \frac{1}{\gamma_{\infty}} \ell_{\eps} + o(\ell_{\eps})$, we know $S_{\tau_{\eps}} = O(\ell_{\eps})$. On the other hand, we have $V_{\eps} \geq \frac{1}{2} \eps^{-\alpha}$. Consequently, it holds that $\frac{S_{\tau_{\eps}}}{V_{\eps}} \leq O(\ell_{\eps} \eps^{\alpha}) \rightarrow 0$ given $\eps^{\alpha} \log \frac{1}{\eps} \rightarrow 0$ for every $\alpha > 0$. This implies that $S_{\tau_{\zeta}} = V_{\eps} (1 + o(1))$. Applying \eqref{eq:St_1} at $t = \tau_{\zeta}$, we obtain $\frac{\eta_0}{1 -a} \tau_{\zeta}^{1-a} + O(1) = V_{\eps} (1 + o(1))$. Because $V_{\eps} \rightarrow \infty$, the O(1) term is $o(V_{\eps})$. Therefore, after rearranging we obtain
\begin{align*}
    \tau_{\zeta}^{\,1-a}
=
\frac{1-a}{\eta_0}\,
V_\epsilon
\bigl(1+o(1)\bigr).
\end{align*}
Define $K_a := \frac{1 - a}{\eta_0} > 0$. We can write $\tau_{\zeta}^{1 - a} = K_a V_{\eps} (1 + r_{\eps})$ where $r_{\eps} \rightarrow 0$ as $\eps \downarrow 0$. Consequently, we have $|r_{\eps}| \leq \frac{1}{2}$ when $\eps$ is sufficiently small. Consequently, it holds that $\frac{1}{2} \leq 1 + r_{\eps} \leq \frac{3}{2}$. Substituting this into $\tau_{\zeta}^{1 - a} = K_a V_{\eps} (1 + r_{\eps})$, we obtain $\frac{K_a}{2} V_{\eps} \leq \tau_{\zeta}^{1-a} \leq \frac{3 K_a}{2} V_{\eps}$. Combining this with $\frac{1}{2} \eps^{- \alpha} \leq V_{\eps} \leq \eps^{-\beta}$, we obtain for sufficiently small $\eps$ that
\begin{align*}
    \frac{K_a}{2} \frac{1}{2} \eps^{-\alpha} \leq \frac{K_a}{2} V_{\eps} \leq \tau_{\zeta}^{1-a} \leq \frac{3 K_a}{2} V_{\eps} \leq \frac{3 K_a}{2} \eps^{-\beta}.
\end{align*}
Consequently, it holds that $\frac{K_a}{4} \eps^{-\alpha} \leq \tau_{\zeta}^{1-a} \leq \frac{3 K_a}{2} \eps^{-\beta}$ when $\eps$ is sufficiently small. Given $a \in (\frac{1}{3},1)$, the function $x^{1/(1-a)}$ is strictly increasing. Applying this function, we obtain 
\begin{align*}
    (\frac{K_a}{4})^{1/(1-a)} \eps^{-\alpha/(1-a)} \leq \tau_{\zeta} \leq (\frac{3 K_a}{2})^{1/(1-a)} \eps^{-\beta/(1-a)}.
\end{align*}
Therefore, we conclude that $\tau_{\zeta}^{\delta}(\eps) = \eps^{- \Theta(1)/(1 - a)}$. 

Now, we prove part (ii) where $\eta_t = \eta_0 (t+1)^{-1}$. Applying \eqref{eq:St_2} at $t = \tau_{\eps}$ and using $S_{\tau_{\eps}} = \frac{1}{\gamma_{\infty}} \ell_{\eps} + o(\ell_{\eps})$, we obtain
\begin{align*}
    \eta_0 \log \tau_{\eps} + O(1) = \frac{\ell_{\eps}}{\gamma_{\infty}} + o(\ell_{\eps}). 
\end{align*}
Given $O(1) = o(\ell_{\eps})$, it holds that $\log \tau_{\eps} = \frac{1}{\eta_0 \gamma_{\infty}} \ell_{\eps} + o(\ell_{\eps}) = (\frac{1}{\eta_0 \gamma_{\infty}} + o(1)) \log \frac{1}{\eps}$. Exponentiating leads to $T_1^{\delta}(\eps) = \eps ^ {-1/(\eta_0 \gamma_{\infty}) + o(1)}$. Given $S_{\tau_{\zeta}} = V_{\eps} (1 + o(1))$ (shown above), using \eqref{eq:St_2} at $t = \tau_{\zeta}$, we obtain 
\begin{align*}
    \eta_0 \log \tau_{\zeta} + O(1) = V_{\eps} (1 + o(1)). 
\end{align*}
We can rearrange this equation and obtain $\eta_0 \log \tau_{\zeta} = V_{\eps} (1 + o(1) - \frac{O(1)}{V_{\eps}})$. 
Since $V_{\eps} \rightarrow \infty$, we have $\frac{O(1)}{V_{\eps}} \rightarrow 0$ and consequently it holds that $\log \tau_{\zeta} = \frac{1}{\eta_0} V_{\eps} (1 + o(1))$. We can write $\log \tau_{\zeta} = \frac{V_{\eps}}{\eta_0} (1+r_{\eps})$ where $r_{\eps} \rightarrow 0$. For sufficiently small $\eps$, it holds that $-\frac{1}{2} \leq r_{\eps} \leq \frac{1}{2}$. This leads to $\frac{1}{2} \leq 1 + r_{\eps} \leq \frac{3}{2}$ and consequently $\frac{1}{2 \eta_0} V_{\eps} \leq \log \tau_{\zeta} \leq \frac{3}{2 \eta_0} V_{\eps}$. Using the bound $\frac{1}{2} \eps^{- \alpha} \leq V_{\eps} \leq \eps^{-\beta}$, we obtain
\begin{align*}
    \frac{1}{4 \eta_0} \eps^{-\alpha}= \frac{1}{2 \eta_0} (\frac{1}{2} \eps^{-\alpha}) \leq \frac{1}{2 \eta_0} V_{\eps} \leq \log \tau_{\zeta} \leq \frac{3}{2 \eta_0} V_{\eps} \leq \frac{3}{2 \eta_0} \eps^{- \beta}. 
\end{align*}
Exponentiating leads to $\exp(\frac{1}{4 \eta_0} \eps^{-\alpha}) \leq \tau_{\zeta} \leq \exp(\frac{3}{2 \eta_0} \eps^{-\beta})$. Therefore, we conclude that $\tau_{\zeta}^{\delta}(\eps) = \exp(\eps^{-\Theta(1)})$. 
\end{proof}
\section{Multiclass Extension}
\label{app:multiclass}
\subsection{Setup and notation}
We let the training set be $\{(x_i,y_i)\}_{i=1}^n$, where $x_i\in\R^d$ and $y_i\in[K]$ (with $K\geq2$). The classifier is $W\in\R^{K\times d}$ and we denote 
\[
    p_i(W):=\softmax(W x_i),
    \qquad
    p_{ic}(W):=[p_i(W)]_c.
\]
For every $i\in[n]$ and $c \neq y_i$, we define the pairwise feature matrix and the pairwise margin, respectively, as 
\begin{align*}
    Z_{ic}:=(e_{y_i}-e_c)x_i^\top, \qquad  m_{ic}(W):=\ip{W}{Z_{ic}}
    =(e_{y_i}-e_c)^\top W x_i.
\end{align*}

The cross-entropy loss (on the training set) is defined as 
\begin{align}
    L(W)
    :=-\frac1n\sum_{i=1}^n\log p_{i,y_i}(W)
    =\frac1n\sum_{i=1}^n
      \log\!\left(1+\sum_{c\neq y_i}e^{-m_{ic}(W)}\right). \label{eq:mc-loss} 
\end{align}
Following the construction in \citet{fan2025spectral}, the multiclass proxy (analogous to the binary proxy) is defined as 
\begin{align*}
    G(W)
    :=\frac1n\sum_{i=1}^n\bigl(1-p_{i,y_i}(W)\bigr)
    =\frac1n\sum_{i=1}^n\sum_{c\neq y_i}p_{ic}(W).
\end{align*}
For simplicity, we let $L_t := L(W_t)$, $G_t = G(W_t)$, and $g_t := \nabla L(W_t)$. 
The multiclass unnormalized margin is
\[
    \rho(W):=\min_{i\in[n],\,c\neq y_i}m_{ic}(W).
\]
Matrix norms relevant in the multiclass setting are
\[
    \text{max-norm: } \, \|A\|_{\max}
    := \max_{c\in[K],\,j\in[d]} |A[c,j]|,
    \qquad
    \text{sum-norm: } \,
    \|A\|_{\mathrm{sum}}
    := \sum_{c=1}^{K}\sum_{j=1}^{d}|A[c,j]|,
\]
and
\[
    \text{Frobenius-norm: } \,
    \|A\|_{F}
    := \left(
        \sum_{c=1}^{K}\sum_{j=1}^{d} A[c,j]^2
       \right)^{1/2}
    = \sqrt{\langle A,A\rangle}.
\]
The max-norm and the sum-norm are dual to each other. 
We define the max-norm margin and the Frobenius-norm margin as
\begin{align}
\label{eq:mc-max-margins}
    \gamma_{\max}:=\max_{\|U\|_{\max}\leq1}\rho(U),
    \qquad
    \gamma_F:=\max_{\|U\|_{F}\leq1}\rho(U).
\end{align}

Similar to the binary case, we assume the multiclass data is separable: there exists $W \in \mathbb{R}^{k \times d}$ such that $\min_{c \neq y_i} (e_{y_i} - e_c)^T W x_i > 0$ for all $i \in [n]$, and the momentum parameters satisfy $0 \leq \beta_1 \leq \beta_2 < 1$. We study the same step size schedule $\eta_t = \eta_0 (t+1)^{-a}$ with $a \in (1/3,1]$. Denote $\max_i\nor[1]{x_i} = B_1$ and $\max_i\nor[2]{x_i} = B_2$.

We set $R_1:=2B_1$ and $R_2:=\sqrt{2}\,B_2$. Importantly, the constants $R_1$ and $R_2$ play the same roles as the ones for binary data. Using the outer-product identities $\|uv^\top\|_{\mathrm{sum}}=\|u\|_1\|v\|_1$ and $\|uv^\top\|_F=\|u\|_2\|v\|_2$ together with $ \|e_{y_i}-e_c\|_1=2$ and $\|e_{y_i}-e_c\|_2=\sqrt{2}$, we obtain $\|Z_{ic}\|_{\mathrm{sum}} =\|e_{y_i}-e_c\|_1\|x_i\|_1 = 2\|x_i\|_1$ and $\|Z_{ic}\|_F =\|e_{y_i}-e_c\|_2\|x_i\|_2 = \sqrt{2}\|x_i\|_2$. Consequently, it holds that 
\begin{align*}
     \max_{i,\,c\neq y_i}\|Z_{ic}\|_{\mathrm{sum}}
    =2\max_{i \in [n]}\|x_i\|_1
    \leq 2B_1
    =R_1, \quad    \max_{i,\,c\neq y_i}\|Z_{ic}\|_F
    =\sqrt{2}\max_{i \in [n]} \|x_i\|_2
    \leq \sqrt{2}B_2
    =R_2.
\end{align*}
All products, quotients, powers, square roots, and absolute values in defining Adam's updates below are entrywise (as in the binary case). Therefore, the parameter contains $d_{\mathrm{mul}}:=Kd$
scalar coordinates. The proof in the multiclass setting is very similar to that in the binary setting. Once the analogues of the lemmas in App. \ref{app:prelim} and App. \ref{app:margin} are established, the remaining arguments follow the same steps, with mostly notational changes. We prove the required analogous lemmas in the next section and then state the multiclass results.
\subsection{\cref{app:prelim} Analogous Lemmas}
The following lemma is analogous to \cref{lem:proxy}. Its proof can be found in \citet[Lemma~16]{fan2025spectral}. We provide the proof here for completeness. 
 \begin{lemma}
\label{lem:mc-gradient-sandwich}
For every $W\in\R^{K\times d}$, it holds that
\begin{equation}
\label{eq:mc-gradient-decomposition}
    \nabla L(W)
    =-\frac1n\sum_{i=1}^n\sum_{c\neq y_i}p_{ic}(W)Z_{ic}.
\end{equation}
Moreover, we have
\begin{align}
    \gamma_{\max}G(W)
    \leq \nor[\mathrm{sum}]{\nabla L(W)}
    \leq R_1G(W), \qquad
    \gamma_FG(W)
    \leq \nor[F]{\nabla L(W)}
    \leq R_2G(W).\label{eq:mc-gradient-sandwich}
\end{align}
In particular, it holds that $G(W)>0$ and $\nabla L(W)\neq0$ at every finite $W$.

\begin{proof}
Differentiating the $i$th cross-entropy term gives
\[
    \nabla_W[-\log p_{i,y_i}(W)]
    =(p_i(W)-e_{y_i})x_i^\top
    =-\sum_{c\neq y_i}p_{ic}(W)Z_{ic},
\]
which proves \eqref{eq:mc-gradient-decomposition}. Let $U_{\max}$ attain the first maximum in
\eqref{eq:mc-max-margins}.  Then, it holds that 
\[
\begin{aligned}
    -\ip{\nabla L(W)}{U_{\max}}
    &=\frac1n\sum_i\sum_{c\neq y_i}
      p_{ic}(W)\ip{Z_{ic}}{U_{\max}} \\
    &\geq \frac{\gamma_{\max}}n
      \sum_i\sum_{c\neq y_i}p_{ic}(W)
     =\gamma_{\max}G(W).
\end{aligned}
\]
By duality and $\nor[\max]{U_{\max}}\leq1$, we have 
$-\ip{\nabla L(W)}{U_{\max}} \leq \nor[\mathrm{sum}]{\nabla L(W)}$.
This proves the lower bound on the sum-norm in
\eqref{eq:mc-gradient-sandwich}.  The upper bound on the sum-norm follows from
\eqref{eq:mc-gradient-decomposition} and the triangle inequality:
\[
    \nor[\mathrm{sum}]{\nabla L(W)}
    \leq\frac1n\sum_i\sum_{c\neq y_i}
       p_{ic}(W)\nor[\mathrm{sum}]{Z_{ic}}
    \leq R_1G(W).
\]
The Frobenius bounds follow from the same argument with a Frobenius max-margin solution $U_F$ (note that the dual of Frobenius norm is itself). Finally, every wrong
class has strictly positive softmax probability at finite $W$, which leads to $G(W)>0$;
the lower bounds then imply $\nabla L(W)\neq0$. 
\end{proof}
\end{lemma}

The following lemma is analogous to \cref{lem:loss-proxy}. The proofs of \eqref{eq:mc-G-L-comparison} and \eqref{eq:mc-low-loss-equivalence} can be found in \citet[Lemma~18]{fan2025spectral}. We provide the proof here for completeness. 

\begin{lemma}
\label{lem:mc-proxy-loss}
For every $W\in\R^{K\times d}$, it holds that 
\begin{equation}
\label{eq:mc-G-L-comparison}
    G(W)\leq L(W),
    \qquad
    \frac{G(W)}{L(W)}\geq1-\frac{nL(W)}2.
\end{equation}
If either $G(W)\leq1/(2n)$ or $L(W)\leq(\log2)/n$ holds, then we have
\begin{equation}
\label{eq:mc-low-loss-equivalence}
    L(W)\leq2G(W).
\end{equation}
Furthermore, $L(W)\leq(\log2)/n$ implies that
\begin{equation}
\label{eq:mc-margin-tail}
    \rho(W)\geq0, \quad \text{and} \quad
    \frac{\log2}{n}e^{-\rho(W)}
    \leq L(W).
\end{equation}
Finally, it holds that $L(W) \leq (K-1)e^{-\rho(W)}$ for every $W\in\R^{K\times d}$.
\end{lemma}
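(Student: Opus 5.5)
The plan is to reduce everything to per-example scalar inequalities, mirroring the proof of \cref{lem:loss-proxy}. Set $\ell_i:=-\log p_{i,y_i}(W)\ge 0$ and $a_i:=1-p_{i,y_i}(W)=1-e^{-\ell_i}$. Then $L(W)=\frac1n\sum_i\ell_i$ and $G(W)=\frac1n\sum_i a_i$, with exactly the same algebraic relation as in the binary case. The only genuinely new ingredient is the comparison between $\ell_i$ and the pairwise margins $m_{ic}(W)$.

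\textbf{The comparison \eqref{eq:mc-G-L-comparison} and the equivalence \eqref{eq:mc-low-loss-equivalence}.} For \eqref{eq:mc-G-L-comparison}, I will use $x-x^2/2\le 1-e^{-x}\le x$ for $x\ge0$ and average over $i$. This gives $G\le L$ and $G\ge L-\frac1{2n}\sum_i\ell_i^2$. Bounding $\sum_i\ell_i^2\le(\sum_i\ell_i)^2=n^2L^2$ then yields $G/L\ge 1-nL/2$.

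For \eqref{eq:mc-low-loss-equivalence}, there are two cases.
\begin{itemize}
\item If $G\le 1/(2n)$, then $a_i\le nG\le 1/2$ for every $i$.
\item If $L\le(\log 2)/n$, then $\ell_i\le nL\le\log2$, so $a_i\le 1/2$.
\end{itemize}
In either case, $\ell_i=-\log(1-a_i)\le 2a_i$ on $[0,1/2]$, and averaging gives $L\le 2G$.

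\textbf{The margin tail \eqref{eq:mc-margin-tail}.} The key identity is $\ell_i=\log\bigl(1+\sum_{c\ne y_i}e^{-m_{ic}}\bigr)$. Dropping all but one wrong class gives, for every $c\ne y_i$,
\[
\ell_i\ge\log\bigl(1+e^{-m_{ic}}\bigr).
\]
Now suppose $L\le(\log2)/n$, so that $\ell_i\le\log 2$ for all $i$. If some $m_{ic}<0$, then $\ell_i>\log 2$, a contradiction; hence $\rho(W)\ge0$. Next, pick $(i^*,c^*)$ attaining $\rho(W)$. Then
\[
L\ge\frac1n\ell_{i^*}\ge\frac1n\log\bigl(1+e^{-\rho(W)}\bigr).
\]
Concavity of $\log(1+x)$ on $[0,1]$ gives $\log(1+x)\ge x\log 2$ there. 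Applying this with $x=e^{-\rho(W)}\in(0,1]$ finishes \eqref{eq:mc-margin-tail}.

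\textbf{The final claim.} I will use $\log(1+y)\le y$, which gives $\ell_i\le\sum_{c\ne y_i}e^{-m_{ic}}\le (K-1)e^{-\rho(W)}$. Averaging over $i$ yields $L(W)\le (K-1)e^{-\rho(W)}$; this bound holds for all $W$, with no sign condition needed.

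No step is a real obstacle. The only point requiring care is that the multiclass $\ell_i$ involves a sum over wrong classes, rather than the single term of the binary case. The lower bound isolates one term, while the upper bound uses the crude count $K-1$.
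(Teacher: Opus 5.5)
Your proposal is correct and follows essentially the same route as the paper: the identity $1-p_{i,y_i}=1-e^{-\ell_i}$, the bounds $x-x^2/2\le 1-e^{-x}\le x$ and $-\log(1-x)\le 2x$, isolating the term that attains $\rho(W)$ together with concavity of $\log(1+x)$, and finally $\log(1+y)\le y$ with the count $K-1$. The only cosmetic difference is in proving $\rho(W)\ge 0$. You argue by contradiction from the single-term bound $\ell_i\ge\log(1+e^{-m_{ic}})$, whereas the paper observes directly that $\sum_{c\ne y_i}e^{-m_{ic}(W)}\le 1$; the two arguments are equivalent.
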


\begin{proof}
Let $\ell_i(W):=-\log p_{iy_i}(W)$ and $q_i(W):=1-p_{iy_i}(W)$. 
The key multiclass identity is
\begin{equation}
\label{eq:q-loss-identity}
    q_i(W)=1-e^{-\ell_i(W)}.
\end{equation}
For $x\geq0$, it holds that $x-\frac{x^2}{2}\leq1-e^{-x}\leq x$. Averaging this inequality and using
$\sum_i\ell_i^2\leq(\sum_i\ell_i)^2=n^2L(W)^2$ gives
\[
    L(W)-\frac n2L(W)^2\leq G(W)\leq L(W),
\]
which proves \eqref{eq:mc-G-L-comparison}.

If $G(W)\leq \frac{1}{2n}$, then we have $q_i(W)\leq nG(W)\leq \frac{1}{2}$ for every $i$.
If instead $L(W)\leq \frac{\log2}{n}$, then we have
$\ell_i(W)\leq nL(W)\leq\log2$, and
\eqref{eq:q-loss-identity} again gives $q_i(W)\leq \frac{1}{2}$. For $x \in [0,\frac{1}{2}]$, it holds that $-\log(1-x)\leq2x$. Consequently, we have for either case that 
\[
    L(W)=\frac1n\sum_i-\log(1-q_i(W))
    \leq\frac2n\sum_iq_i(W)=2G(W).
\]

Now suppose $L(W)\leq(\log2)/n$.  For every $i$, we have
\[
    \log\!\left(1+\sum_{c\neq y_i}e^{-m_{ic}(W)}\right)
    =\ell_i(W)\leq\log2.
\]
Therefore, it holds that $\sum_{c\neq y_i}e^{-m_{ic}(W)}\leq1$. This implies that every
$m_{ic}(W)\geq0$ and therefore $\rho(W)\geq0$.
Choose $(i_\star,c_\star)$ attaining $\rho(W)$. Then, it holds that 
\[
    L(W)  = \frac1n\sum_{i=1}^n
      \log\!\left(1+\sum_{c\neq y_i}e^{-m_{ic}(W)}\right)\geq \frac{1}{n} \log (1 + e^{-m_{i_* c_*}}) = \frac1n\log\bigl(1+e^{-\rho(W)}\bigr).
\]
For $x\in[0,1]$, the concavity of $\log(1+x)$ gives
$\log(1+x)\geq x\log2$.  Substituting $x=e^{-\rho(W)}$ proves the lower
bound in \eqref{eq:mc-margin-tail}.  Finally, $\log(1+x)\leq x$ gives
\[
\begin{aligned}
    L(W)
    &\leq\frac1n\sum_i\sum_{c\neq y_i}e^{-m_{ic}(W)}
     \leq (K-1)e^{-\rho(W)},
\end{aligned}
\]
which proves the upper bound.
\end{proof}

The following lemma is analogous to \cref{lem:proxy-path-ratio} in \cref{app:prelim}. It will be used in the proof of \cref{lem:mc-hessian} below. 
\begin{lemma}
\label{lem:mc-softmax-stability}
For any $U,V\in\R^{K\times d}$, every $i\in[n]$, and every $c\in[K]$, we have  
\begin{equation}
\label{eq:mc-probability-ratio-max}
    e^{-R_1\nor[\max]{U-V}}
    \leq\frac{p_{ic}(U)}{p_{ic}(V)}
    \leq e^{R_1\nor[\max]{U-V}},
\end{equation}
and
\begin{equation}
\label{eq:mc-probability-ratio-fro}
    e^{-R_2\nor[F]{U-V}}
    \leq\frac{p_{ic}(U)}{p_{ic}(V)}
    \leq e^{R_2\nor[F]{U-V}}.
\end{equation}
Consequently, denote
$R_{\max}:=R_1$, $R_F:=R_2$, the following holds for $p\in\{\max,F\}$
\begin{equation}
\label{eq:mc-G-ratio}
    e^{-R_p\nor[p]{U-V}}G(V)
    \leq G(U)
    \leq e^{R_p\nor[p]{U-V}}G(V).
\end{equation}
Further assume that $0 \leq \beta_1 \leq \beta_2 < 1$. It holds for every $0 \leq r \leq t$ that 
\begin{align}
     e^{-R_1C_D\sum_{s=r}^{t-1}\eta_s}G_r
 \le G_t
 \le e^{R_1C_D\sum_{s=r}^{t-1}\eta_s}G_r. \label{eq:mc_g_ratio2}
\end{align}
\end{lemma}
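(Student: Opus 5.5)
The plan mirrors the binary proofs of \cref{lem:derivative-ratio} and \cref{lem:proxy-path-ratio}, with the scalar logit replaced by the softmax vector of logits.

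\textbf{Step 1: a logit bound.} Fix $i$ and write $a:=Ux_i\in\R^K$, $b:=Vx_i\in\R^K$. Since $\log p_{ic}(U)=a_c-\log\sum_k e^{a_k}$, the log-ratio splits as
\[
\log\frac{p_{ic}(U)}{p_{ic}(V)}=(a_c-b_c)-\Bigl(\operatorname{LSE}(a)-\operatorname{LSE}(b)\Bigr),
\]
where $\operatorname{LSE}(v):=\log\sum_k e^{v_k}$. We do not bound the two terms separately, since that would cost a factor of $2\|a-b\|_\infty$.

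\textbf{Step 2: the sharp log-ratio bound.} Use the mean value theorem on $f(s):=\log p_{ic}(V+s(U-V))$. Its derivative is
\[
f'(s)=\bigl\langle e_c-p_i(W_s),\,(U-V)x_i\bigr\rangle,\qquad W_s:=V+s(U-V).
\]
Write $e_c-p=\sum_{k\neq c}p_k(e_c-e_k)$. Each term then satisfies
\[
\bigl|\langle e_c-e_k,(U-V)x_i\rangle\bigr| \le \|e_c-e_k\|_1\,\|x_i\|_1\,\|U-V\|_{\max}.
\]
This gives $|f'(s)|\le (1-p_c)\cdot 2B_1\|U-V\|_{\max}\le R_1\|U-V\|_{\max}$. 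For the Frobenius case, the same expansion uses $\|(e_c-e_k)x_i^\top\|_F=\sqrt2\|x_i\|_2$ and Cauchy--Schwarz, giving $|f'(s)|\le R_2\|U-V\|_F$. Integrating over $s\in[0,1]$ and exponentiating yields \eqref{eq:mc-probability-ratio-max} and \eqref{eq:mc-probability-ratio-fro}.

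\textbf{Step 3: the $G$ ratio.} Since $G(U)=\frac1n\sum_i\sum_{c\neq y_i}p_{ic}(U)$ is a nonnegative sum, multiply each term's ratio bound by $p_{ic}(V)>0$ and sum. This gives \eqref{eq:mc-G-ratio} for $p\in\{\max,F\}$.

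\textbf{Step 4: the trajectory version.} Apply \eqref{eq:mc-G-ratio} with $p=\max$, $U=W_t$ and $V=W_r$. It remains to bound $\|W_t-W_r\|_{\max}$. Adam acts entrywise on the $Kd$ coordinates, and \cref{lem:uniform-D} is purely coordinatewise and uses only $\beta_1\le\beta_2$. It therefore gives $\|D_s\|_{\max}\le C_D$, hence $\|W_t-W_r\|_{\max}\le C_D\sum_{s=r}^{t-1}\eta_s$, which yields \eqref{eq:mc_g_ratio2}.

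\textbf{Main obstacle.} The only delicate point is Step 2: the constant must be $R_1=2B_1$ rather than something larger. The identity $e_c-p=\sum_{k\neq c}p_k(e_c-e_k)$ together with $\sum_{k\neq c}p_k\le1$ is what delivers this constant. Everything else is routine.
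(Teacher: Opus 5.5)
Your proof is correct, but Step~2 takes a different route from the paper. The paper uses no calculus. With $v:=(U-V)x_i$ it writes the exact identity
\[
\frac{p_{ic}(U)}{p_{ic}(V)}=\frac{e^{v_c}}{\sum_{r}p_{ir}(V)\,e^{v_r}},
\]
and sandwiches the denominator between $e^{\min_r v_r}$ and $e^{\max_r v_r}$. This bounds the log-ratio by the oscillation $\max_{r,s}(v_r-v_s)$. Each pairwise difference $\langle U-V,(e_r-e_s)x_i^\top\rangle$ is then controlled by norm duality, exactly as in your Step~2.

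You instead integrate $\frac{\mathrm{d}}{\mathrm{d}s}\log p_{ic}(W_s)=\langle e_c-p_i(W_s),(U-V)x_i\rangle$ along the segment. The identity $e_c-p=\sum_{k\neq c}p_k(e_c-e_k)$ turns this derivative into a nonnegative combination of the same pairwise differences, with weights summing to $1-p_{ic}(W_s)\le 1$.

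Both arguments rest on the same ingredients: only logit differences matter, $\|(e_r-e_s)x_i^\top\|_{\mathrm{sum}}=2\|x_i\|_1$, and $\|(e_r-e_s)x_i^\top\|_F=\sqrt2\|x_i\|_2$. What each buys is slightly different.
\begin{itemize}
\item The paper's identity is purely algebraic, needs no path or differentiability argument, and yields the one-sided bounds $e^{v_c-\max_r v_r}\le p_{ic}(U)/p_{ic}(V)\le e^{v_c-\min_r v_r}$.
\item Yours yields the slightly finer local estimate $\bigl|\frac{\mathrm{d}}{\mathrm{d}s}\log p_{ic}(W_s)\bigr|\le (1-p_{ic}(W_s))R_p\|U-V\|_p$, which is not needed here.
\end{itemize}
Steps~3 and~4 coincide with the paper.

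One small correction to your closing remark about where the pairing is needed. In the max-norm case, the naive split of Step~1 already suffices:
\[
|(Ux_i)_c-(Vx_i)_c|+|\mathrm{LSE}(Ux_i)-\mathrm{LSE}(Vx_i)|\le 2\|x_i\|_1\|U-V\|_{\max}\le R_1\|U-V\|_{\max},
\]
because $\mathrm{LSE}$ is $1$-Lipschitz in $\ell_\infty$. So no pairing is required to get $R_1$. The pairwise differences genuinely matter only for the Frobenius constant $R_2=\sqrt2 B_2$; there the naive split gives only $2B_2$.
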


\begin{proof}
Denote $\Delta:=U-V$. Fix a sample $i\in[n]$ and a class $c\in[K]$. For each $r\in[K]$, we define $z_r:=(Vx_i)_r$ and $a_r:=(\Delta x_i)_r$.
Then, we have $(Ux_i)_r=z_r+a_r$.
By the definition of softmax, we have 
\[
    p_{ic}(U)
    =\frac{e^{z_c+a_c}}{\sum_{r=1}^K e^{z_r+a_r}},
    \qquad
    p_{ic}(V)
    =\frac{e^{z_c}}{\sum_{r=1}^K e^{z_r}}.
\]
Therefore, it holds that
\begin{align}
    \frac{p_{ic}(U)}{p_{ic}(V)} &=
    \frac{e^{z_c+a_c}}{\sum_{r=1}^K e^{z_r+a_r}}
    \frac{\sum_{r=1}^K e^{z_r}}{e^{z_c}}
    \notag =
    e^{a_c}
    \frac{\sum_{r=1}^K e^{z_r}}
         {\sum_{r=1}^K e^{z_r}e^{a_r}}
    \notag\\
    &=
    \frac{e^{a_c}}
         {\sum_{r=1}^K
          \frac{e^{z_r}}{\sum_{s=1}^K e^{z_s}}e^{a_r}} =
    \frac{e^{a_c}}
         {\sum_{r=1}^K p_{ir}(V)e^{a_r}}.
    \label{eq:softmax-ratio-detailed}
\end{align}

We further define $a_{\min}:=\min_{r\in[K]}a_r$ and $a_{\max}:=\max_{r\in[K]}a_r$.
Because $p_{ir}(V)\geq0$ and
$\sum_{r=1}^Kp_{ir}(V)=1$, we have $e^{a_{\min}}
    \leq
    \sum_{r=1}^Kp_{ir}(V)e^{a_r}
    \leq
    e^{a_{\max}}$.
Combining this with \eqref{eq:softmax-ratio-detailed} gives
\[
    e^{a_c-a_{\max}}
    \leq
    \frac{p_{ic}(U)}{p_{ic}(V)}
    \leq
    e^{a_c-a_{\min}}.
\]
Since $a_{\min}\leq a_c\leq a_{\max}$,
we have $a_c-a_{\max} \geq -(a_{\max}-a_{\min})$
and $a_c-a_{\min} \leq a_{\max}-a_{\min}$.
Consequently, it holds that 
\begin{equation}
\label{eq:ratio-by-logit-oscillation}
    e^{-(a_{\max}-a_{\min})}
    \leq
    \frac{p_{ic}(U)}{p_{ic}(V)}
    \leq
    e^{a_{\max}-a_{\min}}.
\end{equation}

It remains to control the difference
$a_{\max}-a_{\min}$. For any $r,s\in[K]$, it holds that
\begin{align}
    a_r-a_s
    &=(e_r-e_s)^\top\Delta x_i =
    \left\langle
        \Delta,(e_r-e_s)x_i^\top
    \right\rangle_F.
    \label{eq:logit-difference-inner-product}
\end{align}
By duality, we have 
\begin{align}
    |a_r-a_s|
    &\leq
    \nor[\max]{\Delta}
    \nor[\mathrm{sum}]{(e_r-e_s)x_i^\top}
    \leq
    2\nor[\max]{\Delta}\nor[1]{x_i}
    \leq
    R_1\nor[\max]{\Delta}. 
    \label{eq:logit-oscillation-max-bound}
\end{align}
It follows that
\[
    a_{\max}-a_{\min}
    =\max_{r,s\in[K]}(a_r-a_s)
    \leq
    R_1\nor[\max]{U-V}.
\]
Substituting this bound into
\eqref{eq:ratio-by-logit-oscillation} yields
\[
    e^{-R_1\nor[\max]{U-V}}
    \leq
    \frac{p_{ic}(U)}{p_{ic}(V)}
    \leq
    e^{R_1\nor[\max]{U-V}}.
\]

Similarly, by Frobenius norm self-duality, we have 
\begin{align}
    |a_r-a_s|
    &\leq
    \nor[F]{\Delta}
    \nor[F]{(e_r-e_s)x_i^\top}
    \leq
    \sqrt{2}\nor[F]{\Delta}\nor[2]{x_i}
    \leq
    R_2\nor[F]{\Delta}. 
    \label{eq:logit-oscillation-fro-bound}
\end{align}
Therefore, it holds that $a_{\max}-a_{\min} \leq R_2\nor[F]{U-V}$.

Using \eqref{eq:ratio-by-logit-oscillation} again gives
\[
    e^{-R_2\nor[F]{U-V}}
    \leq
    \frac{p_{ic}(U)}{p_{ic}(V)}
    \leq
    e^{R_2\nor[F]{U-V}}.
\]

Finally, we let $p\in\{\max,F\}$ where $R_{\max}:=R_1$ and $R_F:=R_2$.
The preceding probability-ratio bounds imply that for every
$i\in[n]$ and $c\neq y_i$, 
\[
    e^{-R_p\nor[p]{U-V}}p_{ic}(V)
    \leq
    p_{ic}(U)
    \leq
    e^{R_p\nor[p]{U-V}}p_{ic}(V).
\]
Summing over $i\in[n]$ and $c\neq y_i$, and then dividing by $n$, we obtain
\begin{align*}
    G(U)
    &=\frac1n\sum_{i=1}^n\sum_{c\neq y_i}p_{ic}(U) \leq
    e^{R_p \nor[p]{U-V}}
    \frac1n \sum_{i=1}^n\sum_{c\neq y_i}p_{ic}(V) =
    e^{R_p\nor[p]{U-V}}G(V),
\end{align*}
and similarly $G(U) \geq e^{-R_p\nor[p]{U-V}}G(V)$.
Hence, it holds that
\[
    e^{-R_p\nor[p]{U-V}}G(V)
    \leq
    G(U)
    \leq
    e^{R_p\nor[p]{U-V}}G(V).
\]
Applying this inequality with $p=\max$, $U=W_t$, and $V=W_r$, we obtain
\begin{align}
    e^{-R_1\|W_t-W_r\|_{\max}}G_r
\le G_t
\le e^{R_1\|W_t-W_r\|_{\max}}G_r. \label{eq:multi_g_temp}
\end{align}
By the triangle inequality and \cref{lem:multi_D}, it holds that
\[
\|W_t-W_r\|_{\max}
\le \sum_{s=r}^{t-1}\eta_s\|D_s\|_{\max}
\le C_D\sum_{s=r}^{t-1}\eta_s.
\]
Substituting this into \eqref{eq:multi_g_temp} finishes the proof of \eqref{eq:mc_g_ratio2}. 
\end{proof}

The probability simplex is defined as
\begin{align*}
    \Delta^{K-1}
    :=
    \left\{
        s\in\mathbb{R}_+^K:
        \sum_{c=1}^K s_c=1
    \right\}.
\end{align*}

The following lemma is a refinement of \citet[Lemma~14]{fan2025spectral}, specialized to the max-norm and Frobenius-norm cases. Direct application of \citet[Lemma~14]{fan2025spectral} results a factor of $2 R_2^2$ in the Frobenius bound \eqref{eq:softmax-covariance-data-fro}. \cref{lem:softmax-covariance-bound} will be used in the proof of \cref{lem:mc-hessian}. 
\begin{lemma}
\label{lem:softmax-covariance-bound}
For every $s\in\Delta^{K-1}$, $y\in[K]$,
$A\in\mathbb{R}^{K\times d}$, and $h\in\mathbb{R}^d$, we have
\begin{equation}
\label{eq:softmax-covariance-max}
    0
    \leq
    h^\top A^\top
    \bigl(\operatorname{diag}(s)-ss^\top\bigr)
    Ah
    \leq
    4(1-s_y)\|h\|_1^2\|A\|_{\max}^2,
\end{equation}
and
\begin{equation}
\label{eq:softmax-covariance-fro}
    0
    \leq
    h^\top A^\top
    \bigl(\operatorname{diag}(s)-ss^\top\bigr)
    Ah
    \leq
    2(1-s_y)\|h\|_2^2\|A\|_F^2.
\end{equation}
Consequently, for every training example $x_i$, we have
\begin{equation}
\label{eq:softmax-covariance-data-max}
    x_i^\top A^\top
    \bigl(\operatorname{diag}(s)-ss^\top\bigr)
    A x_i
    \leq
    R_1^2(1-s_y)\|A\|_{\max}^2,
\end{equation}
and
\begin{equation}
\label{eq:softmax-covariance-data-fro}
    x_i^\top A^\top
    \bigl(\operatorname{diag}(s)-ss^\top\bigr)
    Ax_i
    \leq
    R_2^2(1-s_y)\|A\|_F^2. 
\end{equation}
\end{lemma}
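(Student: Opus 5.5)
Set $u:=Ah\in\mathbb{R}^K$ and $M:=\operatorname{diag}(s)-ss^\top$. Then the quadratic form equals $u^\top M u=\sum_c s_cu_c^2-(\sum_c s_cu_c)^2=\operatorname{Var}_{c\sim s}(u_c)$. Nonnegativity is immediate: it is a variance, or equivalently Jensen's inequality. For the upper bounds we exploit the fact that a variance is the minimal second moment about any center. Taking the center to be $u_y$ gives
\[
u^\top M u\le \sum_c s_c(u_c-u_y)^2=\sum_{c\neq y}s_c(u_c-u_y)^2\le (1-s_y)\max_{c\neq y}(u_c-u_y)^2 .
\]
This is where the factor $(1-s_y)$ comes from, and it avoids the cruder estimate that led to the $2R_2^2$ loss.

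\textbf{Max-norm case.} We have $|u_c|=|\langle A[c,:],h\rangle|\le\|A\|_{\max}\|h\|_1$, so $(u_c-u_y)^2\le 4\|A\|_{\max}^2\|h\|_1^2$. This gives \eqref{eq:softmax-covariance-max}.

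\textbf{Frobenius case.} Write $u_c-u_y=\langle A[c,:]-A[y,:],h\rangle$. Then
\[
(u_c-u_y)^2\le\|h\|_2^2\|A[c,:]-A[y,:]\|_2^2\le 2\|h\|_2^2\bigl(\|A[c,:]\|_2^2+\|A[y,:]\|_2^2\bigr)\le 2\|h\|_2^2\|A\|_F^2,
\]
where the last step uses $c\neq y$, so both rows are distinct rows of $A$ and $\|A[c,:]\|_2^2+\|A[y,:]\|_2^2\le\|A\|_F^2$. This is the main point to check: one must not bound each row norm separately by $\|A\|_F$, which would give $4$ instead of $2$.

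\textbf{Specialization to data.} Take $h=x_i$ and use $\|x_i\|_1\le B_1$ with $4B_1^2=R_1^2$, and $\|x_i\|_2\le B_2$ with $2B_2^2=R_2^2$. These yield \eqref{eq:softmax-covariance-data-max} and \eqref{eq:softmax-covariance-data-fro}.
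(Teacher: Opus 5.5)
Your proof is correct and follows the paper's argument: you write the quadratic form as the variance of $u_C$, bound it by the second moment about $u_y$ to extract the factor $(1-s_y)$, and then control $|u_c-u_y|$. The only cosmetic difference is that the paper bounds $|u_c-u_y|=|\langle A,(e_c-e_y)h^\top\rangle|$ in one step by norm duality (max/sum and Frobenius/Frobenius). You instead argue row by row, using the triangle inequality in the max-norm case and $\|a-b\|_2^2\le 2\|a\|_2^2+2\|b\|_2^2$ on two distinct rows in the Frobenius case, which gives the same constants.
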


\begin{proof} Let $s \in \Delta^{K-1}$. Denote $v:=Ah\in\mathbb{R}^K$ and $C_s:=\operatorname{diag}(s)-ss^\top$. Let $C$ be a random class distributed according to $s$, i.e. $\mathbb{P}(C=c)=s_c, \, c\in[K]$.
Then, we have 
\begin{align}
    v^\top C_sv =
    \sum_{c=1}^K s_cv_c^2
    -
    \left(\sum_{c=1}^K s_cv_c\right)^2 =
    \operatorname{Var}(v_C)
    \geq 0.
    \label{eq:softmax-covariance-variance}
\end{align}
For any $a\in\mathbb{R}$, it holds that
\begin{align}
    \mathbb{E}\bigl[(v_C-a)^2\bigr]
    &=
    \mathbb{E}\left[
        \bigl(
            v_C-\mathbb{E}[v_C]
            +\mathbb{E}[v_C]-a
        \bigr)^2
    \right]
    =
    \operatorname{Var}(v_C)
    + \bigl(\mathbb{E}[v_C]-a\bigr)^2 \geq
    \operatorname{Var}(v_C).
    \label{eq:variance-upper-by-anchor}
\end{align}
Choosing $a=v_y$ in \eqref{eq:variance-upper-by-anchor}, we obtain
\begin{align}
    v^\top C_sv
    &=
    \operatorname{Var}(v_C) \leq
    \mathbb{E}\bigl[(v_C-v_y)^2\bigr] =
    \sum_{c=1}^K s_c(v_c-v_y)^2=
    \sum_{c\neq y}s_c(v_c-v_y)^2.
    \label{eq:softmax-covariance-anchor}
\end{align}

For the max-norm bound, it holds for every $c\neq y$ that 
\begin{align}
    |v_c-v_y|
    &=
    |(e_c-e_y)^\top Ah|
    =
    \left|
        \left\langle
            A,(e_c-e_y)h^\top
        \right\rangle
    \right|\notag\\
    &\leq
    \|A\|_{\max}
    \left\|(e_c-e_y)h^\top\right\|_{\mathrm{sum}}
    =
    2\|A\|_{\max}\|h\|_1.
    \label{eq:class-difference-max}
\end{align}
Therefore, we conclude that $(v_c-v_y)^2
\leq 4\|A\|_{\max}^2\|h\|_1^2$.
Substituting this into
\eqref{eq:softmax-covariance-anchor} gives
\begin{align}
    v^\top C_sv
    &\leq
    4\|A\|_{\max}^2\|h\|_1^2
    \sum_{c\neq y}s_c =
    4(1-s_y)\|h\|_1^2\|A\|_{\max}^2.
\end{align}

For the Frobenius-norm bound, it holds for every $c\neq y$ that 
\begin{align}
    |v_c-v_y|
    &=
    \left|
        \left\langle
            A,(e_c-e_y)h^\top
        \right\rangle_F
    \right|
    \leq
    \|A\|_F
    \left\|(e_c-e_y)h^\top\right\|_F
    =
    \sqrt{2}\|A\|_F\|h\|_2.
    \label{eq:class-difference-fro}
\end{align}
Thus, it holds that $(v_c-v_y)^2
    \leq
    2\|A\|_F^2\|h\|_2^2$.
Substituting this inequality into
\eqref{eq:softmax-covariance-anchor} yields
\begin{align}
    v^\top C_sv
    &\leq
    2\|A\|_F^2\|h\|_2^2
    \sum_{c\neq y}s_c =
    2(1-s_y)\|h\|_2^2\|A\|_F^2.
\end{align}
\end{proof}

The notation $\nabla^2 L(W)[A,A]$
denotes the second directional derivative of $L$ at $W$ in the matrix direction $A$: 
\[
    \nabla^2 L(W)[A,A]
    :=
    \left.
    \frac{\mathrm{d}^2}{\mathrm{d}s^2}
    L(W+sA)
    \right|_{s=0}.
\]
Equivalently, after vectorizing the matrices, we have 
\[
    \nabla^2 L(W)[A,A]
    =
    \operatorname{vec}(A)^\top
    \nabla_{\operatorname{vec}(W)}^2 L(W)
    \operatorname{vec}(A).
\]

The Hessian identity in \eqref{eq:mc-hessian-identity} follows from \citet[Lemma~10]{fan2025spectral}. Applying \cref{lem:softmax-covariance-bound} to \eqref{eq:mc-hessian-identity} leads to \eqref{eq:mc-hessian-max} and \eqref{eq:mc-hessian-fro}. Finally, we derive \eqref{eq:mc-taylor-bound} via Taylor expansion. \cref{lem:mc-hessian} is analogous to \cref{lem:taylor} in \cref{app:prelim}. 
\begin{lemma}
\label{lem:mc-hessian}
For every $W,A\in\R^{K\times d}$, we have
\begin{equation}
\label{eq:mc-hessian-identity}
\begin{aligned}
    \nabla^2L(W)[A,A]
    =\frac1n\sum_{i=1}^n
    (Ax_i)^\top
    \bigl(\diag(p_i(W))-p_i(W)p_i(W)^\top\bigr)(Ax_i).
\end{aligned}
\end{equation}
It satisfies
\begin{equation}
\label{eq:mc-hessian-max}
    0\leq\nabla^2L(W)[A,A]
    \leq R_1^2G(W)\nor[\max]{A}^2
\end{equation}
and
\begin{equation}
\label{eq:mc-hessian-fro}
    0\leq\nabla^2L(W)[A,A]
    \leq R_2^2G(W)\nor[F]{A}^2.
\end{equation}
Consequently, for $p\in\{\max,F\}$, it holds that
\begin{equation}
\label{eq:mc-taylor-bound}
\begin{aligned}
    L(U)
    \leq{}&L(V)+\ip{\nabla L(V)}{U-V} +\frac{R_p^2}{2}
      e^{R_p\nor[p]{U-V}}
      G(V)\nor[p]{U-V}^2,
\end{aligned}
\end{equation}
 where $R_{\max}:=R_1$ and $R_F:=R_2$.
\end{lemma}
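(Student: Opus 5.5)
The plan has three steps: derive the Hessian identity \eqref{eq:mc-hessian-identity} by direct differentiation, bound it with \cref{lem:softmax-covariance-bound}, and then integrate along the segment from $V$ to $U$, controlling the probabilities by \cref{lem:mc-softmax-stability}.

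\textbf{Hessian identity.} Fix $W$ and $A$, and set $\phi_i(s):=-\log p_{i,y_i}(W+sA)=\operatorname{logsumexp}((W+sA)x_i)-e_{y_i}^\top(W+sA)x_i$. The second term is linear in $s$, so it does not contribute to the second derivative. For $z\in\R^K$, the Hessian of $\operatorname{logsumexp}$ at $z$ is $\diag(\softmax(z))-\softmax(z)\softmax(z)^\top$. By the chain rule with $z(s)=Wx_i+sAx_i$, we get $\phi_i''(0)=(Ax_i)^\top\bigl(\diag(p_i(W))-p_i(W)p_i(W)^\top\bigr)(Ax_i)$. Averaging over $i$ gives \eqref{eq:mc-hessian-identity}.

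\textbf{Hessian bounds.} Apply \cref{lem:softmax-covariance-bound} to each summand with $s=p_i(W)\in\Delta^{K-1}$, $y=y_i$ and $h=x_i$.
\begin{itemize}
\item Nonnegativity is the variance identity in that lemma.
\item The data-dependent bounds \eqref{eq:softmax-covariance-data-max} and \eqref{eq:softmax-covariance-data-fro} give each summand at most $R_1^2(1-p_{i,y_i}(W))\nor[\max]{A}^2$, respectively $R_2^2(1-p_{i,y_i}(W))\nor[F]{A}^2$. This uses $4\|x_i\|_1^2\le R_1^2$ and $2\|x_i\|_2^2\le R_2^2$, since $R_1=2B_1$ and $R_2=\sqrt2B_2$.
\item Averaging over $i$ and recalling $G(W)=\frac1n\sum_i(1-p_{i,y_i}(W))$ yields \eqref{eq:mc-hessian-max} and \eqref{eq:mc-hessian-fro}.
\end{itemize}

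\textbf{Taylor bound.} Let $A:=U-V$ and $\psi(\xi):=L(V+\xi A)$; since $L$ is smooth, $\psi$ is $C^2$. Taylor's theorem with integral remainder gives
\[
L(U)=L(V)+\ip{\nabla L(V)}{A}+\int_0^1(1-\xi)\,\nabla^2L(V+\xi A)[A,A]\,d\xi .
\]
By the bound just proved, the integrand is at most $(1-\xi)R_p^2G(V+\xi A)\nor[p]{A}^2$. Next, \eqref{eq:mc-G-ratio} gives $G(V+\xi A)\le e^{R_p\xi\nor[p]{A}}G(V)\le e^{R_p\nor[p]{A}}G(V)$. Since $\int_0^1(1-\xi)\,d\xi=1/2$, this yields \eqref{eq:mc-taylor-bound}.

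\textbf{Main obstacle.} The argument is mostly bookkeeping. The only delicate point is the constant matching in the second step: one must use the anchored-variance form of \cref{lem:softmax-covariance-bound}, which carries the factor $(1-s_y)$, so that the bound reduces exactly to $G(W)$ with constants $R_1^2$ and $R_2^2$ rather than anything larger.
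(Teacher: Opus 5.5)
Your proposal is correct and follows the paper's proof essentially step for step. The paper also derives the Hessian identity from the Hessian of $-\log\softmax(z)_y$, though it cites \citet[Lemma~10]{fan2025spectral} where you derive that Hessian directly via the log-sum-exp decomposition. It likewise averages the per-sample bounds of \cref{lem:softmax-covariance-bound} into $G(W)$, and closes with the integral-remainder Taylor expansion, \eqref{eq:mc-G-ratio}, and $\int_0^1(1-\xi)\,d\xi=1/2$.
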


\begin{proof} Denote $\ell_y(z):=-\log\bigl(\operatorname{softmax}(z)_y\bigr)$. Following  \citet[Lemma~10]{fan2025spectral}, 
we have
\[
    \nabla^2\ell_y(z)
    =
    \operatorname{diag}(\operatorname{softmax}(z))
    -
    \operatorname{softmax}(z)
    \operatorname{softmax}(z)^\top.
\]
Applying this identity with $z=W x_i$ and directional perturbation
$A x_i$, and then averaging over $i\in[n]$, we obtain 
\[
    \nabla^2L(W)[A,A]
    =
    \frac1n\sum_{i=1}^n
    (A x_i)^\top
    \left(
        \operatorname{diag}(p_i(W))
        -p_i(W)p_i(W)^\top
    \right)
    (Ax_i).
\]

For each $i\in[n]$, we apply
Lemma~\ref{lem:softmax-covariance-bound} with $s=p_i(W)$, $y=y_i$, and $h=x_i$.
The max-norm part of
Lemma~\ref{lem:softmax-covariance-bound} gives
\[
\begin{aligned}
0
&\leq
(Ax_i)^\top
\left(
    \operatorname{diag}(p_i(W))
    -p_i(W)p_i(W)^\top
\right)
(Ax_i) \leq
R_1^2
\bigl(1-p_{i,y_i}(W)\bigr)
\|A\|_{\max}^2.
\end{aligned}
\]
Averaging this inequality over $i\in[n]$ and using the Hessian
identity, we obtain
\begin{align*}
\nabla^2L(W)[A,A]
&\leq
\frac{R_1^2\|A\|_{\max}^2}{n}
\sum_{i=1}^n
\bigl(1-p_{i,y_i}(W)\bigr) = 
R_1^2G(W)\|A\|_{\max}^2,
\end{align*}
where the last equality follows from $G(W) =
    \frac1n\sum_{i=1}^n
    \bigl(1-p_{i,y_i}(W)\bigr)$.

Similarly, the Frobenius-norm part of
Lemma~\ref{lem:softmax-covariance-bound} gives
\[
\begin{aligned}
0
&\leq
(Ax_i)^\top
\left(
    \operatorname{diag}(p_i(W))
    -p_i(W)p_i(W)^\top
\right)
(Ax_i) \leq 
R_2^2
\bigl(1-p_{i,y_i}(W)\bigr)
\|A\|_F^2.
\end{aligned}
\]
Averaging over $i\in[n]$ yields
\begin{align*}
\nabla^2L(W)[A,A]
&\leq
\frac{R_2^2\|A\|_F^2}{n}
\sum_{i=1}^n
\bigl(1-p_{i,y_i}(W)\bigr) =
R_2^2G(W)\|A\|_F^2.
\end{align*}
For the final claim, we fix $U,V\in\mathbb{R}^{K\times d}$ and define $\Delta:=U-V$.
Consider the scalar function $\phi(\tau):=L(V+\tau\Delta)$ where $\tau\in[0,1]$.
Then, it holds that $\phi(0)=L(V)$, $\phi(1)=L(U)$, $\phi'(0) = \left\langle \nabla L(V),\Delta
\right\rangle_F$, and $\phi''(\tau) = \nabla^2L(V+\tau\Delta)[\Delta,\Delta]$.
Therefore, from Taylor's theorem, we obtain
\begin{align}
    L(U)
    ={}&
    L(V)
    +
    \left\langle
        \nabla L(V),\Delta
    \right\rangle_F +
    \int_0^1
    (1-\tau)
    \nabla^2L(V+\tau\Delta)[\Delta,\Delta]
    \,\mathrm{d}\tau.
    \label{eq:multiclass-taylor-integral}
\end{align}

Let $p\in\{\max,F\}$ and define $R_{\max}:=R_1$, and $R_F:=R_2$. 
Applying the corresponding Hessian bound at $V+\tau\Delta$ gives
\[
    \nabla^2L(V+\tau\Delta)[\Delta,\Delta]
    \leq
    R_p^2
    G(V+\tau\Delta)
    \|\Delta\|_p^2.
\]
By Lemma~\ref{lem:mc-softmax-stability},
\begin{align*}
    G(V+\tau\Delta)
    &\leq 
    \exp\left(
        \tau R_p\|\Delta\|_p
    \right)G(V) \leq
    \exp\left(
        R_p\|\Delta\|_p
    \right)G(V),
\end{align*}
where the last inequality follows from $0\leq\tau\leq1$.
Consequently, it holds that 
\[
    \nabla^2L(V+\tau\Delta)[\Delta,\Delta]
    \leq
    R_p^2
    e^{R_p\|\Delta\|_p}
    G(V)\|\Delta\|_p^2.
\]
Substituting this bound into
\eqref{eq:multiclass-taylor-integral} yields
\begin{align*}
    L(U)
    \leq{}&
    L(V)
    +
    \left\langle
        \nabla L(V),\Delta
    \right\rangle_F +
    R_p^2
    e^{R_p\|\Delta\|_p}
    G(V)\|\Delta\|_p^2
    \int_0^1(1-\tau)\,\mathrm{d}\tau.
\end{align*}
Since $\int_0^1(1-\tau)\,\mathrm{d}\tau = \frac12$,
we conclude that
\[
    L(U)
    \leq
    L(V)
    +
    \left\langle
        \nabla L(V),U-V
    \right\rangle_F
    +
    \frac{R_p^2}{2}
    e^{R_p\|U-V\|_p}
    G(V)\|U-V\|_p^2.
\]
\end{proof}

For the multiclass setting, we 
consider the same Adam update as \eqref{eq:adam_update}, now applied entrywise to $K\times d$ matrices. 
The bound from
\cref{lem:uniform-D} remains valid.
\begin{lemma} \label{lem:multi_D}
    In the multiclass setting, Adam's update satisfies $\nor[\max]{D_t}\leq C_D$ where $C_D:=\sqrt{\frac{1-\beta_1}{1-\beta_2}}$. 
\end{lemma}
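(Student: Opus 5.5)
The plan is to transfer the proof of \cref{lem:uniform-D} verbatim, observing that Adam's update in the multiclass setting is defined entrywise on the $K\times d$ matrix $W_t$. Fix an arbitrary entry $(c,j)\in[K]\times[d]$ and set $x_s:=g_s[c,j]$ for $s=0,\dots,t$. Since $\bar m_{-1}=\bar v_{-1}=0$ and the recursions act entrywise, unrolling them gives
\[
m_t[c,j]=\sum_{s=0}^t a_{t,s}x_s,\qquad v_t[c,j]=\sum_{s=0}^t b_{t,s}x_s^2,
\]
with the same normalized EMA weights $a_{t,s}=\frac{(1-\beta_1)\beta_1^{t-s}}{1-\beta_1^{t+1}}$ and $b_{t,s}=\frac{(1-\beta_2)\beta_2^{t-s}}{1-\beta_2^{t+1}}$ as in the binary case. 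These weights are nonnegative and each family sums to one.

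The argument then runs in three steps.
\begin{enumerate}
\item Jensen's inequality gives $m_t[c,j]^2\le\sum_s a_{t,s}x_s^2$.
\item Using $\beta_1\le\beta_2$, the ratio satisfies $a_{t,s}/b_{t,s}\le\frac{1-\beta_1}{1-\beta_2}=C_D^2$, because $(\beta_1/\beta_2)^{t-s}\le1$ and $\frac{1-\beta_2^{t+1}}{1-\beta_1^{t+1}}\le1$. Hence $m_t[c,j]^2\le C_D^2 v_t[c,j]$, that is, $|m_t[c,j]|\le C_D\sqrt{v_t[c,j]}$.
\item Dividing by $\sqrt{v_t[c,j]}+\epsilon$ gives $|D_t[c,j]|\le C_D\,\frac{\sqrt{v_t[c,j]}}{\sqrt{v_t[c,j]}+\epsilon}\le C_D$.
\end{enumerate}
Taking the maximum over all $(c,j)$ yields $\|D_t\|_{\max}\le C_D$.

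Three degenerate cases need brief attention. If $\beta_2=0$, then $\beta_1=0$ and $|D_t[c,j]|=|x_t|/(|x_t|+\epsilon)\le1=C_D$. If $\beta_1=0$, we use the convention $0^0=1$, exactly as in the binary proof. If $v_t[c,j]=0$, all $x_s$ vanish, so $m_t[c,j]=0$ and the bound holds trivially; in any case $\epsilon>0$ keeps the denominator positive.

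I do not expect a real obstacle here, since nothing in the binary argument used the structure of the loss or the dimension. The only point to state explicitly is that the max-norm on matrices is the entrywise maximum, so the coordinatewise bound over $d_{\mathrm{mul}}=Kd$ scalar coordinates is exactly the claimed bound.
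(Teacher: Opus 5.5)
Your proposal is correct and matches the paper's approach. The paper only remarks that Adam acts entrywise on $K\times d$ matrices, so the coordinatewise argument of \cref{lem:uniform-D} carries over verbatim to each of the $Kd$ entries. That argument is exactly the one you write out: Jensen's inequality, the weight-ratio bound $a_{t,s}\le C_D^2 b_{t,s}$ from $\beta_1\le\beta_2$, and then division by $\sqrt{v_t}+\epsilon$.
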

We first derive an intermediate result that will be useful in the proof of \cref{lem:mc-adam-gradient-drift}. 
\begin{lemma}
\label{lem:mc-gradient-drift}
For any $U,V\in\R^{K\times d}$, we have
\begin{equation}
\label{eq:mc-gradient-drift-sum}
    \nor[\mathrm{sum}]{\nabla L(U)-\nabla L(V)}
    \leq R_1G(V)
      \left(e^{R_1\nor[\max]{U-V}}-1\right),
\end{equation}
and
\begin{equation}
\label{eq:mc-gradient-drift-fro}
    \nor[F]{\nabla L(U)-\nabla L(V)}
    \leq R_2G(V)
      \left(e^{R_2\nor[F]{U-V}}-1\right).
\end{equation}
\end{lemma}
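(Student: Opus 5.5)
I would mimic the binary proof of \cref{lem:gradient-drift}, using the decomposition \eqref{eq:mc-gradient-decomposition} and the probability-ratio bounds of \cref{lem:mc-softmax-stability}. Subtracting the decomposition at $U$ and at $V$ gives
\[
\nabla L(U)-\nabla L(V)=-\frac1n\sum_{i=1}^n\sum_{c\neq y_i}\bigl(p_{ic}(U)-p_{ic}(V)\bigr)Z_{ic},
\]
so the whole problem reduces to controlling the scalar differences $|p_{ic}(U)-p_{ic}(V)|$ for the wrong classes $c\neq y_i$.

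To control these differences, I fix $p\in\{\max,F\}$ and set $h:=R_p\nor[p]{U-V}$. By \eqref{eq:mc-probability-ratio-max} or \eqref{eq:mc-probability-ratio-fro}, the ratio $r:=p_{ic}(U)/p_{ic}(V)$ satisfies $e^{-h}\le r\le e^{h}$. The elementary argument at the end of the proof of \cref{lem:derivative-ratio} then gives $|r-1|\le e^{h}-1$: if $r\ge1$ this is immediate, and if $r<1$ then $1-r\le1-e^{-h}\le e^h-1$. Multiplying through by $p_{ic}(V)>0$ yields
\[
|p_{ic}(U)-p_{ic}(V)|\le p_{ic}(V)\bigl(e^{h}-1\bigr).
\]

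Next, I apply the triangle inequality in the relevant norm. For the sum-norm I use $\nor[\mathrm{sum}]{Z_{ic}}\le R_1$, and for the Frobenius norm I use $\nor[F]{Z_{ic}}\le R_2$, both established in the setup. This gives
\[
\nor[\mathrm{sum}]{\nabla L(U)-\nabla L(V)}\le \frac{R_1}{n}\sum_i\sum_{c\neq y_i}p_{ic}(V)\bigl(e^{R_1\nor[\max]{U-V}}-1\bigr),
\]
and the same bound with $(F,R_2)$ in place of $(\mathrm{sum},R_1)$ and $\nor[F]{U-V}$ in the exponent. Finally, I identify $\frac1n\sum_i\sum_{c\neq y_i}p_{ic}(V)=G(V)$, which is exactly the multiclass proxy.

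There is no serious obstacle here. The only point needing care is that the sum-norm bound has to be paired with max-norm displacements, and the Frobenius bound with Frobenius displacements. Both pairings are already supplied by the two versions of \cref{lem:mc-softmax-stability}.
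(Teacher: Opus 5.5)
Your proposal is correct and follows essentially the same route as the paper: subtract the gradient decompositions, bound $|p_{ic}(U)-p_{ic}(V)|\le p_{ic}(V)(e^{h}-1)$ via the probability-ratio bounds of the softmax-stability lemma, then apply the triangle inequality with $\|Z_{ic}\|_{\mathrm{sum}}\le R_1$ (respectively $\|Z_{ic}\|_F\le R_2$) and identify $G(V)$. The only difference is that you spell out the elementary ratio-to-difference step, which the paper leaves implicit.
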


\begin{proof}
    Denote $r:=R_1\nor[\max]{U-V}$.  Equation
\eqref{eq:mc-probability-ratio-max} implies that
\[
    |p_{ic}(U)-p_{ic}(V)|
    \leq p_{ic}(V)(e^r-1).
\]
Using the gradient decomposition from
Lemma~\ref{lem:mc-gradient-sandwich},
\[
\begin{aligned}
    \nor[\mathrm{sum}]{\nabla L(U)-\nabla L(V)}
    &\leq\frac1n\sum_i\sum_{c\neq y_i}
      |p_{ic}(U)-p_{ic}(V)|\nor[\mathrm{sum}]{Z_{ic}} \\
    &\leq R_1G(V)(e^r-1),
\end{aligned}
\]
which proves \eqref{eq:mc-gradient-drift-sum}. For the Frobenius norm, the argument is
the same after using \eqref{eq:mc-probability-ratio-fro} and
$\nor[F]{Z_{ic}}\leq R_2$. 
\end{proof}

The following lemma is analogous to \cref{lem:gradient-drift} in \cref{app:prelim}. It will be used in the proof of \cref{lem:mc-aggregate-moment}. 
\begin{lemma}
\label{lem:mc-adam-gradient-drift} Suppose that $0 \leq \beta_1 \leq \beta_2 < 1$. 
For every $0\leq r\leq t$, we have 
\begin{equation}
\label{eq:mc-path-gradient-drift}
\begin{aligned}
    \nor[\mathrm{sum}]{g_{t-r}-g_t}
    \leq R_1G_t
    \left[
      \exp\!\left(
        R_1C_D\sum_{s=t-r}^{t-1}\eta_s
      \right)-1
    \right].
\end{aligned}
\end{equation}
\end{lemma}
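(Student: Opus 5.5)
The plan is to combine the static drift bound of \cref{lem:mc-gradient-drift} with the uniform update bound of \cref{lem:multi_D}, mirroring the binary proof of \cref{lem:gradient-drift}. The one subtlety is the anchor: \eqref{eq:mc-gradient-drift-sum} is stated with $G(V)$ on the right-hand side, and we need $G_t$ (the later iterate), so we take $V=W_t$ and $U=W_{t-r}$.

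First, since $W_t-W_{t-r}=-\sum_{s=t-r}^{t-1}\eta_sD_s$, the triangle inequality in the max-norm together with \cref{lem:multi_D} gives
\[
\nor[\max]{W_{t-r}-W_t}\leq\sum_{s=t-r}^{t-1}\eta_s\nor[\max]{D_s}\leq C_D\sum_{s=t-r}^{t-1}\eta_s .
\]
This is where $\beta_1\leq\beta_2$ enters, through $C_D$.

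Second, apply \eqref{eq:mc-gradient-drift-sum} with $U=W_{t-r}$ and $V=W_t$:
\[
\nor[\mathrm{sum}]{g_{t-r}-g_t}\leq R_1G_t\left(e^{R_1\nor[\max]{W_{t-r}-W_t}}-1\right).
\]
The map $x\mapsto e^{R_1x}-1$ is nondecreasing on $[0,\infty)$, so we may substitute the displacement bound from the first step. This yields \eqref{eq:mc-path-gradient-drift}. The case $r=0$ is trivial, since both sides vanish.

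No real obstacle is expected. The only point to check is that the anchor is placed at $W_t$. This works because \eqref{eq:mc-probability-ratio-max} is symmetric in $U$ and $V$, which is already built into \cref{lem:mc-gradient-drift}.
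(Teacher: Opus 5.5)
Your proposal is correct and matches the paper's proof: bound $\nor[\max]{W_t-W_{t-r}}\leq C_D\sum_{s=t-r}^{t-1}\eta_s$ via \cref{lem:multi_D}, then apply \eqref{eq:mc-gradient-drift-sum} with $U=W_{t-r}$ and $V=W_t$. Your explicit monotonicity step for $x\mapsto e^{R_1x}-1$ is the only detail the paper leaves implicit.
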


\begin{proof}
By \cref{lem:multi_D}, we obtain 
\[
    \nor[\max]{W_t-W_{t-r}}
    \leq C_D\sum_{s=t-r}^{t-1}\eta_s.
\]
Applying \eqref{eq:mc-gradient-drift-sum} with
$U=W_{t-r}$ and $V=W_t$ finishes the proof.
\end{proof}

\cref{lem:mc-aggregate-moment} is analogous to \cref{lem:first-track} and \cref{lem:second-track}. 
\begin{lemma}
\label{lem:mc-aggregate-moment} Suppose that $0 \leq \beta_1 \leq \beta_2 < 1$. There exist a finite time
$t_{\mathrm{ema}}$ and constants $C_m,C_v<\infty$ (independent of
$\eps$), such that for every $t\geq t_{\mathrm{ema}}$,
\begin{equation}
\label{eq:mc-first-moment}
    \nor[\mathrm{sum}]{m_t-g_t}\leq C_m\eta_tG_t
\end{equation}
and
\begin{equation}
\label{eq:mc-second-moment}
    \nor[\mathrm{sum}]{\sqrt{v_t}-|g_t|}
    \leq C_v\eta_tG_t.
\end{equation}
\end{lemma}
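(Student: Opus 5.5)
The plan mirrors the binary proofs of \cref{lem:first-track} and \cref{lem:second-track}, with $\norm{\cdot}_1$ replaced by $\nor[\mathrm{sum}]{\cdot}$ and $d$ by $d_{\mathrm{mul}}=Kd$. All Adam operations are entrywise, so the bias-corrected moments still have the exponential-moving-average form. Specifically, $m_t=\sum_{r=0}^t a_{t,r}g_{t-r}$ with $a_{t,r}=\frac{(1-\beta_1)\beta_1^r}{1-\beta_1^{t+1}}$, and $v_t=\sum_{r=0}^t b_{t,r}g_{t-r}^{2}$ with $b_{t,r}=\frac{(1-\beta_2)\beta_2^r}{1-\beta_2^{t+1}}$. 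Both weight families sum to one and satisfy $a_{t,r}\le\beta_1^r$ and $\sqrt{b_{t,r}}\le\beta_2^{r/2}$. If $\beta_1=0$ (respectively $\beta_2=0$), the first (respectively second) bound is trivial.

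For the first moment, I would apply the triangle inequality in $\nor[\mathrm{sum}]{\cdot}$ to $m_t-g_t=\sum_r a_{t,r}(g_{t-r}-g_t)$. This gives $\nor[\mathrm{sum}]{m_t-g_t}\le\sum_{r=0}^t\beta_1^r\nor[\mathrm{sum}]{g_{t-r}-g_t}$. Then \cref{lem:mc-adam-gradient-drift} bounds each summand, yielding
\[
\nor[\mathrm{sum}]{m_t-g_t}\le R_1G_t\sum_{r=0}^t\beta_1^r\Bigl[\exp\Bigl(R_1C_D\sum_{s=t-r}^{t-1}\eta_s\Bigr)-1\Bigr].
\]
\cref{prop:ema-sum} with $q=\beta_1$ and $c=R_1C_D$ bounds the right-hand side by $R_1C_{\rm geo}\eta_tG_t$ for $t\ge t_m:=t_0(\beta_1,R_1C_D,a,\eta_0)$. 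This gives $C_m=R_1C_{\rm geo}(\beta_1,R_1C_D,a,\eta_0)$.

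For the second moment, I would fix an entry $(c,j)$ and set $X=(\sqrt{b_{t,r}}\,g_{t-r}[c,j])_r$ and $Y=(\sqrt{b_{t,r}}\,g_t[c,j])_r$. These satisfy $\norm{X}_2=\sqrt{v_t[c,j]}$ and $\norm{Y}_2=|g_t[c,j]|$. The reverse triangle inequality and $\norm{\cdot}_2\le\norm{\cdot}_1$ then give $\bigl|\sqrt{v_t[c,j]}-|g_t[c,j]|\bigr|\le\sum_r\beta_2^{r/2}|g_{t-r}[c,j]-g_t[c,j]|$. Summing over all $Kd$ entries gives $\nor[\mathrm{sum}]{\sqrt{v_t}-|g_t|}\le\sum_r\beta_2^{r/2}\nor[\mathrm{sum}]{g_{t-r}-g_t}$. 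The same drift lemma and \cref{prop:ema-sum}, now with $q=\sqrt{\beta_2}$, finish the bound with $C_v=R_1C_{\rm geo}(\sqrt{\beta_2},R_1C_D,a,\eta_0)$ for $t\ge t_v$. I then take $t_{\rm ema}=\max\{t_m,t_v\}$. None of these constants depends on $\eps$.

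There is no genuine obstacle, because the heavy lifting is already done by \cref{lem:mc-adam-gradient-drift}. That lemma rests on \cref{lem:multi_D}, whose entrywise proof copies \cref{lem:uniform-D} verbatim, and on the max/sum duality bound on softmax ratios. The only point needing care is to use the sum-norm consistently, so that the drift bound with constant $R_1=2B_1$ applies exactly as $\norm{\cdot}_1$ did in the binary case.
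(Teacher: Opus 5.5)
Your proposal is correct and follows the paper's proof: the paper likewise reruns the binary EMA arguments of \cref{lem:first-track,lem:second-track} with \cref{lem:mc-adam-gradient-drift} in place of the binary drift bound, and applies \cref{prop:ema-sum} with $q=\beta_1$ and $q=\sqrt{\beta_2}$ (both with $c=R_1C_D$). This yields the same constants $C_m$ and $C_v$ and the same burn-in $t_{\rm ema}=\max\{t_m,t_v\}$; you simply write out the entrywise steps that the paper leaves to the reader.
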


\begin{proof}
    The only loss-specific ingredient needed in the binary proofs of \cref{lem:first-track} and \cref{lem:second-track} is the gradient-difference bound in \cref{lem:gradient-drift}.  That ingredient is now supplied
by Lemma~\ref{lem:mc-adam-gradient-drift}.  Apply binary \cref{lem:first-track} with
$q=\beta_1$, and binary \cref{lem:second-track} with $q=\sqrt{\beta_2}$, in each case with
$c=R_1C_D$.  Their EMA calculations are unchanged
after vectorizing the $K\times d$ gradient. Therefore, 
\eqref{eq:mc-first-moment} and \eqref{eq:mc-second-moment} hold after $t \geq t_m: = t_0(\beta_1,R_1C_D,a,\eta_0)$ and $t \geq t_v: = t_0(\sqrt{\beta_2},R_1C_D,a,\eta_0)$, where $t_0(q,c,a,\eta_0)$ is defined in  \cref{prop:ema-sum}. The constants $C_{m}$ and $C_{v}$ are $C_m = R_1 C_{\rm geo}(\beta_1, R_1C_D, a,\eta_0)$ and $C_v = R_1 C_{\rm geo}(\sqrt{\beta_2}, R_1C_D, a, \eta_0)$.  Setting $t_{\rm ema} := \max \{t_{m}, t_{v}\}$ finishes the proof. 
\end{proof}

As in the binary case, we can define the entrywise matrix soft-sign map for any matrix $A \in \mathbb{R}^{K \times d}$.
\begin{align*}
    \sigma_\epsilon(A)_{kj}
    :=
    \frac{A_{kj}}{|A_{kj}|+\epsilon},
    \qquad \epsilon>0.
\end{align*}

The following lemma is analogous to \cref{lem:alignment} in \cref{app:prelim}. 
\begin{lemma}
\label{lem:mc-adam-softsign}
There is a constant $C_{\mathrm{ema}}<\infty$ (independent of
$\eps$) such that the following holds for every $t\geq t_{\mathrm{ema}}$
\begin{align}
\label{eq:mc-adam-softsign-weighted}
    \nor[\mathrm{sum}]{
      g_t\odot(D_t-\sigma_\eps(g_t))}
    &\leq C_{\mathrm{ema}}\eta_tG_t,\\
\label{eq:mc-adam-softsign-fro}
    \eps\nor[F]{D_t-\sigma_\eps(g_t)}
    &\leq C_{\mathrm{ema}}\eta_tG_t,\\
\label{eq:mc-adam-softsign-inner}
    \left|\ip{g_t}{D_t-\sigma_\eps(g_t)}\right|
    &\leq C_{\mathrm{ema}}\eta_tG_t.
\end{align}
\end{lemma}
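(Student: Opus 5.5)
The plan is to repeat the coordinatewise argument of \cref{lem:alignment} verbatim after vectorizing the $K\times d$ matrices. Under vectorization, $\nor[\mathrm{sum}]{\cdot}$ becomes the $\ell_1$ norm and $\nor[F]{\cdot}$ becomes the $\ell_2$ norm on $\R^{Kd}$. All Adam operations are entrywise, so the binary coordinatewise inequalities transfer unchanged. The only loss-dependent inputs are the moment-tracking bounds \eqref{eq:mc-first-moment}--\eqref{eq:mc-second-moment} from \cref{lem:mc-aggregate-moment} and the uniform bound $\nor[\max]{D_t}\le C_D$ from \cref{lem:multi_D}. Both are already available.

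Concretely, I fix an entry $(k,j)$ and write $a:=g_t[k,j]$, $b:=m_t[k,j]$, $c:=\sqrt{v_t[k,j]}$. For the weighted bound, I use the identity
\[
\frac{a}{|a|+\eps}-\frac{b}{c+\eps}=\frac{(a-b)(c+\eps)+b(c-|a|)}{(|a|+\eps)(c+\eps)}.
\]
Combined with $\frac{|a|}{|a|+\eps}\le1$ and $\frac{|b|}{c+\eps}=|D_t[k,j]|\le C_D$ from \cref{lem:multi_D}, this gives $|a|\,|D_t[k,j]-\sigma_\eps(g_t)_{kj}|\le|a-b|+C_D\,\bigl|c-|a|\bigr|$. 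Summing over all $Kd$ entries and applying \cref{lem:mc-aggregate-moment} yields \eqref{eq:mc-adam-softsign-weighted} with constant $C_m+C_DC_v$.

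For \eqref{eq:mc-adam-softsign-fro}, I use the second decomposition from the binary proof,
\[
\frac{b}{c+\eps}-\frac{a}{|a|+\eps}=\frac{b-a}{c+\eps}+\frac{a(|a|-c)}{(c+\eps)(|a|+\eps)}.
\]
It gives $\eps|D_t[k,j]-\sigma_\eps(g_t)_{kj}|\le|b-a|+\bigl|c-|a|\bigr|$. I then bound $\nor[F]{\cdot}\le\nor[\mathrm{sum}]{\cdot}$, sum over entries, and obtain the constant $C_m+C_v$. The inner-product bound \eqref{eq:mc-adam-softsign-inner} follows from $|\ip{g_t}{X}|\le\nor[\mathrm{sum}]{g_t\odot X}$ together with \eqref{eq:mc-adam-softsign-weighted}. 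I then take $C_{\mathrm{ema}}:=\max\{C_m+C_DC_v,\,C_m+C_v\}$, and all three bounds hold for $t\ge t_{\mathrm{ema}}$.

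No step is a real obstacle. The only point needing care is confirming that $C_m$, $C_v$, and $t_{\mathrm{ema}}$ in \cref{lem:mc-aggregate-moment} are independent of $\eps$, so that $C_{\mathrm{ema}}$ is as well. This holds because those constants come from \cref{prop:ema-sum} with parameters $(\beta_1,\sqrt{\beta_2},R_1C_D,a,\eta_0)$ only.
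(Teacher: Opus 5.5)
Your proposal is correct and matches the paper's proof: vectorize the $K\times d$ matrices, rerun the entrywise argument of \cref{lem:alignment} using \cref{lem:mc-aggregate-moment} and \cref{lem:multi_D}, and pass to the Frobenius bound via $\nor[F]{\cdot}\le\nor[\mathrm{sum}]{\cdot}$. Your constant $C_{\mathrm{ema}}=\max\{C_m+C_DC_v,\,C_m+C_v\}$ and your justification that it does not depend on $\eps$ are also the same as in the paper.
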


\begin{proof}
We first vectorize the $K\times d$ matrices.  The proof of \cref{lem:alignment} in the binary setting is entirely coordinatewise and uses only
\eqref{eq:mc-first-moment}, \eqref{eq:mc-second-moment}, and the bound
$|m_t[j]|\leq C_D\sqrt{v_t[j]}$ from \cref{lem:uniform-D}.  Applying that argument to the
$Kd$ vectorized coordinates proves
\eqref{eq:mc-adam-softsign-weighted} and its inner-product consequence.
The same coordinatewise decomposition multiplied by $\eps$ (followed by
$\nor[F]{A}\leq\nor[\mathrm{sum}]{A}$) proves
\eqref{eq:mc-adam-softsign-fro}.
\end{proof}

\begin{lemma} \label{lem:multi_softsign} For any $A \in \mathbb{R}^{K \times d}$, it holds that
\begin{align}
    \langle A, \sigma_{\epsilon}(A) \rangle \geq \norm{A}_{\rm sum} - K d \epsilon, \label{eq:multi_soft1} \\
    \langle A, \sigma_{\epsilon}(A) \rangle \geq \frac{\norm{A}_{\rm sum}^2}{\norm{A}_{\rm sum} + K d \epsilon}. \label{eq:multi_soft2}
\end{align}
\end{lemma}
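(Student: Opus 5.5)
The plan is to reduce \cref{lem:multi_softsign} to the binary \cref{lem:softsign} by vectorization. Since $\sigma_\eps$ acts entrywise and the Frobenius inner product of two $K\times d$ matrices equals the Euclidean inner product of their vectorizations, put $a:=\operatorname{vec}(A)\in\R^{Kd}$. Then $\ip{A}{\sigma_\eps(A)}=\ip{a}{\sigma_\eps(a)}$ and $\norm{A}_{\rm sum}=\norm{a}_1$. Applying \cref{lem:softsign} with ambient dimension $Kd$ in place of $d$ yields both \eqref{eq:multi_soft1} and \eqref{eq:multi_soft2} immediately.

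For a self-contained version, I would repeat the two coordinatewise steps. First, for each entry $A_{kj}$, write
\[
\frac{A_{kj}^2}{|A_{kj}|+\eps}=|A_{kj}|-\eps\frac{|A_{kj}|}{|A_{kj}|+\eps}\ge |A_{kj}|-\eps ,
\]
and sum over the $Kd$ entries to get \eqref{eq:multi_soft1}. Second, apply Cauchy--Schwarz in Engel (Titu) form, with $x_{kj}=|A_{kj}|$ and $y_{kj}=|A_{kj}|+\eps>0$:
\[
\sum_{k,j}\frac{x_{kj}^2}{y_{kj}}\ge\frac{\bigl(\sum_{k,j}x_{kj}\bigr)^2}{\sum_{k,j}y_{kj}}=\frac{\norm{A}_{\rm sum}^2}{\norm{A}_{\rm sum}+Kd\eps}.
\]
This gives \eqref{eq:multi_soft2}.

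There is no real obstacle here. The only point that needs care is that the count of coordinates is $d_{\rm mul}=Kd$, so the stability term picks up the factor $Kd$ rather than $d$.
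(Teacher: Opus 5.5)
Your proposal is correct and matches the paper's proof: you sum the entrywise identity $\frac{A_{kj}^2}{|A_{kj}|+\epsilon}=|A_{kj}|-\epsilon\frac{|A_{kj}|}{|A_{kj}|+\epsilon}$ over the $Kd$ entries for the first bound, and apply Cauchy--Schwarz (Engel form, valid since $|A_{kj}|+\epsilon>0$) for the second. Your vectorization shortcut through \cref{lem:softsign} is also valid, because the paper's binary argument is purely coordinatewise.
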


\begin{proof} Based on the definition of $\sigma_{\epsilon}(A)$, we have 
$\left\langle A,\sigma_\epsilon(A)\right\rangle
= \sum_{k=1}^K \sum_{j=1}^d \frac{A_{kj}^2}{|A_{kj}|+\epsilon}$.
For \eqref{eq:multi_soft1}, we have the following entrywise inequality
\[
    \frac{A_{kj}^2}{|A_{kj}|+\epsilon}
    =
    |A_{kj}|
    -
    \epsilon
    \frac{|A_{kj}|}{|A_{kj}|+\epsilon}
    \geq
    |A_{kj}|-\epsilon,
\]
where we used $0 \leq \frac{|A_{kj}|}{|A_{kj}|+\epsilon} \leq 1$. 
Summing over all $Kd$ entries yields
$ \left\langle A,\sigma_\epsilon(A)\right\rangle_F
\geq \|A\|_{\mathrm{sum}}-Kd\epsilon$.

For \eqref{eq:multi_soft2}, we can write
\[
    \|A\|_{\mathrm{sum}}
    =
    \sum_{k=1}^K\sum_{j=1}^d |A_{kj}|
    =
    \sum_{k=1}^K\sum_{j=1}^d
    \frac{|A_{kj}|}{\sqrt{|A_{kj}|+\epsilon}}
    \sqrt{|A_{kj}|+\epsilon}.
\]
By the Cauchy--Schwarz inequality, it holds that 
\[
\begin{aligned}
    \|A\|_{\mathrm{sum}}^2
    &\leq
    \left(
        \sum_{k=1}^K\sum_{j=1}^d
        \frac{A_{kj}^2}{|A_{kj}|+\epsilon}
    \right)
    \left(
        \sum_{k=1}^K\sum_{j=1}^d
        \bigl(|A_{kj}|+\epsilon\bigr)
    \right) = 
    \left\langle A,\sigma_\epsilon(A)\right\rangle
    \left(
        \|A\|_{\mathrm{sum}}+Kd\epsilon
    \right).
\end{aligned}
\]
Since $\|A\|_{\mathrm{sum}}+Kd\epsilon>0$, dividing both sides by this
quantity gives
\[
    \left\langle A,\sigma_\epsilon(A)\right\rangle_F
    \geq
    \frac{\|A\|_{\mathrm{sum}}^2}
         {\|A\|_{\mathrm{sum}}+Kd\epsilon}.
\]
\end{proof}

\begin{proposition} \label{prop:multi_recurse}
For every $t \geq t_{\rm ema}$, it holds that
\begin{align*}
    L_{t+1} \leq L_t - \eta_t \langle g_t, \sigma_{\epsilon}(g_t) \rangle_F + C_{\mathrm{md}}^{\mathrm{mc}} \eta_t^2 G_t,
\end{align*}
where $C_{\mathrm{md}}^{\mathrm{mc}} = C_{\rm ema} + \frac{R_1^2 C_D^2}{2} e^{R_1 C_D \eta_0}$. 
\end{proposition}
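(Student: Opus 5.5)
The plan mirrors the binary proof of \cref{prop:master-descent}: a second-order Taylor bound for the step $W_{t+1}-W_t=-\eta_tD_t$, followed by replacing $D_t$ with $\sigma_\eps(g_t)$ in the first-order term.

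For the second-order term, apply \eqref{eq:mc-taylor-bound} of \cref{lem:mc-hessian} with $p=\max$, $V=W_t$ and $U=W_{t+1}$. This gives
\[
L_{t+1}\le L_t-\eta_t\ip{g_t}{D_t}+\frac{R_1^2}{2}e^{R_1\eta_t\nor[\max]{D_t}}G_t\,\eta_t^2\nor[\max]{D_t}^2 .
\]
By \cref{lem:multi_D} we have $\nor[\max]{D_t}\le C_D$, and $\eta_t\le\eta_0$. The remainder is therefore at most $\frac{R_1^2C_D^2}{2}e^{R_1C_D\eta_0}\eta_t^2G_t$. This step holds for every $t\ge0$ and does not use the moment tracking.

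For the first-order term, split $\ip{g_t}{D_t}=\ip{g_t}{\sigma_\eps(g_t)}+\ip{g_t}{D_t-\sigma_\eps(g_t)}$. For $t\ge t_{\rm ema}$, \eqref{eq:mc-adam-softsign-inner} of \cref{lem:mc-adam-softsign} bounds the second piece by $C_{\rm ema}\eta_tG_t$ in absolute value. Hence
\[
-\eta_t\ip{g_t}{D_t}\le-\eta_t\ip{g_t}{\sigma_\eps(g_t)}+C_{\rm ema}\eta_t^2G_t .
\]
Adding this to the previous bound gives the claimed recursion with $C_{\rm md}^{\rm mc}=C_{\rm ema}+\frac{R_1^2C_D^2}{2}e^{R_1C_D\eta_0}$.

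The argument is essentially bookkeeping, since \cref{lem:mc-hessian,lem:multi_D,lem:mc-adam-softsign} already carry the multiclass-specific content. The one point to check is that the Taylor bound is applied in the max-norm geometry with constant $R_1=2B_1$, so that the entrywise bound on $D_t$ from \cref{lem:multi_D} is the relevant control. A Frobenius version would introduce a factor $Kd$ through $\nor[F]{D_t}\le\sqrt{Kd}\,C_D$ and would not match the stated constant.
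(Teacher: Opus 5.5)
Your proposal is correct and is essentially the paper's proof. Like the paper, you apply the max-norm Taylor bound of \cref{lem:mc-hessian} with $U=W_{t+1}$, $V=W_t$, bound the remainder using $\nor[\max]{D_t}\le C_D$ and $\eta_t\le\eta_0$, and then absorb $\ip{g_t}{D_t-\sigma_\eps(g_t)}$ via \eqref{eq:mc-adam-softsign-inner} for $t\ge t_{\rm ema}$, which yields the same constant $C_{\rm md}^{\rm mc}$.
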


\begin{proof} Fix $\epsilon > 0$. Denote $g_t := \nabla L(W_t)$.
Applying \cref{lem:mc-hessian} with
$U = W_{t+1}$, $V = W_t$, and $p = \max$,
and noting that \(U-V=-\eta_tD_t\), we obtain
\begin{align}
    L_{t+1}
    &\leq
    L_t
    + \left\langle g_t,-\eta_tD_t\right\rangle_F
    + \frac{R_1^2}{2}
      \exp\!\left(
          R_1\eta_t\|D_t\|_{\max}
      \right)
      G_t\eta_t^2\|D_t\|_{\max}^2
      \notag\\
    &=
    L_t
    - \eta_t\langle g_t,D_t\rangle_F
    + \frac{R_1^2}{2}
      \exp\!\left(
          R_1\eta_t\|D_t\|_{\max}
      \right)
      G_t\eta_t^2\|D_t\|_{\max}^2.
    \label{eq:mc-loss-preliminary}
\end{align}

The coordinatewise Adam bound in \cref{lem:multi_D} gives
$\|D_t\|_{\max} \leq C_D$ where $C_D :=
    \sqrt{\frac{1-\beta_1}{1-\beta_2}}$.
Since \(\eta_t\leq\eta_0\), it follows that
$\exp\!\left(
        R_1\eta_t\|D_t\|_{\max}
    \right)
    \leq
    \exp\!\left(R_1C_D\eta_0\right)$ and  $\|D_t\|_{\max}^2\leq C_D^2$.
Therefore, \eqref{eq:mc-loss-preliminary} implies
\begin{equation}
    L_{t+1}
    \leq
    L_t
    - \eta_t\langle g_t,D_t\rangle_F
    + C_H^{\mathrm{mc}}\eta_t^2G_t,
    \label{eq:mc-loss-adam-direction}
\end{equation}
where $C_H^{\mathrm{mc}} :=
\frac{R_1^2C_D^2}{2} \exp\!\left(R_1C_D\eta_0\right)$. Next, we decompose the first-order term as
\[
    \langle g_t,D_t\rangle_F
    =
    \left\langle
        g_t,\sigma_\epsilon(g_t)
    \right\rangle_F
    +
    \left\langle
        g_t,D_t-\sigma_\epsilon(g_t)
    \right\rangle_F.
\]
Define $\delta_t :=
    \left\langle
        g_t,D_t-\sigma_\epsilon(g_t)
    \right\rangle_F$.
\cref{lem:mc-adam-softsign} gives for every \(t\geq t_{\mathrm{ema}}\) that 
$|\delta_t| \leq C_{\mathrm{ema}}\eta_t G_t$. In particular, we have $\delta_t
\geq -C_{\mathrm{ema}}\eta_tG_t$
and hence
\[
    \langle g_t,D_t\rangle_F
    \geq
    \left\langle
        g_t,\sigma_\epsilon(g_t)
    \right\rangle_F
    -
    C_{\mathrm{ema}}\eta_tG_t.
\]
Multiplying both sides by \(-\eta_t\) yields
\[
    -\eta_t\langle g_t,D_t\rangle_F
    \leq
    -\eta_t
    \left\langle
        g_t,\sigma_\epsilon(g_t)
    \right\rangle_F
    +
    C_{\mathrm{ema}}\eta_t^2G_t.
\]

Substituting this estimate into
\eqref{eq:mc-loss-adam-direction}, we obtain for every $t \geq t_{\rm ema}$ that
\begin{align*} 
    L_{t+1}
    &\leq
    L_t
    -
    \eta_t
    \left\langle
        g_t,\sigma_\epsilon(g_t)
    \right\rangle_F
    +
    C_{\mathrm{ema}}\eta_t^2G_t
    +
    C_H^{\mathrm{mc}}\eta_t^2G_t\\
    &=
    L_t
    -
    \eta_t
    \left\langle
        g_t,\sigma_\epsilon(g_t)
    \right\rangle_F
    +
    C_{\mathrm{md}}^{\mathrm{mc}}\eta_t^2G_t,
\end{align*}
where $C_{\mathrm{md}}^{\mathrm{mc}}
    :=
    C_H^{\mathrm{mc}}+C_{\mathrm{ema}}$.

\end{proof}

\subsection{\cref{app:margin} Analogous Lemmas}
In this section, we discuss the multiclass versions of the lemmas in \cref{app:margin}. 
The following lemma is analogous to \cref{lem:rho_lip} in \cref{app:margin}. Recall that
\[
m_{ic}(W)
=
\langle W,Z_{ic}\rangle, \qquad
Z_{ic}=(e_{y_i}-e_c)x_i^\top, \qquad \rho(W)
=
\min_{\substack{i\in[n]\\ c\neq y_i}}
m_{ic}(W). 
\] 
\begin{lemma}
\label{lem:mc-margin-lipschitz}
For all $U,V\in\R^{K\times d}$, we have 
\begin{equation}
\label{eq:mc-margin-lipschitz}
    |\rho(U)-\rho(V)|
    \leq R_1\nor[\max]{U-V},
    \qquad
    |\rho(U)-\rho(V)|
    \leq R_2\nor[F]{U-V}.
\end{equation}
\end{lemma}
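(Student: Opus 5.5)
The plan is to copy the binary argument of \cref{lem:rho_lip}, replacing the vector pairing $z_i^\top u$ with the Frobenius pairing $m_{ic}(W)=\langle W,Z_{ic}\rangle$ and replacing the $\ell_1/\ell_\infty$ duality with the sum-norm/max-norm duality. First, I will check the two pairing inequalities needed for each pair $(i,c)$ with $c\neq y_i$:
\[
|\langle A,Z_{ic}\rangle|\le \|A\|_{\max}\|Z_{ic}\|_{\mathrm{sum}}\le R_1\|A\|_{\max},\qquad |\langle A,Z_{ic}\rangle|\le \|A\|_F\|Z_{ic}\|_F\le R_2\|A\|_F .
\]
The first follows by bounding each entry of $A$ by $\|A\|_{\max}$ and summing $|Z_{ic}[k,j]|$. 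The second is Cauchy--Schwarz. The bounds $\|Z_{ic}\|_{\mathrm{sum}}\le R_1$ and $\|Z_{ic}\|_F\le R_2$ were already established in the setup via the outer-product identities.

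Next, I will fix a minimizing pair $(i_V,c_V)$ with $\rho(V)=m_{i_Vc_V}(V)$, which exists because the index set is finite. By the definition of the minimum, $\rho(U)\le m_{i_Vc_V}(U)$. Hence
\[
\rho(U)-\rho(V)\le \langle U-V,Z_{i_Vc_V}\rangle,
\]
and either pairing inequality bounds the right-hand side by $R_1\|U-V\|_{\max}$ or by $R_2\|U-V\|_F$. Swapping the roles of $U$ and $V$ gives the reverse inequality, and together the two give the absolute-value bounds in \eqref{eq:mc-margin-lipschitz}.

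No step is a real obstacle. The only thing to be careful about is ensuring that the constants $R_1=2B_1$ and $R_2=\sqrt2 B_2$ absorb the factors $\|e_{y_i}-e_c\|_1=2$ and $\|e_{y_i}-e_c\|_2=\sqrt2$, and this was settled in the setup.
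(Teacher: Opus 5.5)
Your proposal is correct and follows the paper's proof. Both reduce to the per-pair bounds $|\langle U-V, Z_{ic}\rangle| \le R_1\|U-V\|_{\max}$ (via sum/max duality) and $|\langle U-V, Z_{ic}\rangle| \le R_2\|U-V\|_F$ (via Cauchy--Schwarz), and then swap $U$ and $V$. The only cosmetic difference is that the paper bounds every pair and then takes the minimum over $(i,c)$, whereas you evaluate at a minimizer of $\rho(V)$ as in the binary \cref{lem:rho_lip}; the two are equivalent.
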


\begin{proof}
Set $\Delta:=U-V$. Fix an example \(i\in[n]\) and an incorrect class \(c\neq y_i\).
By linearity of the pairwise margin, it holds that 
\begin{align*}
m_{ic}(U)-m_{ic}(V)
&=
\langle U,Z_{ic}\rangle
-
\langle V,Z_{ic}\rangle =
\left\langle
\Delta,(e_{y_i}-e_c)x_i^\top
\right\rangle.
\end{align*}
By duality, it holds that
\[
\left|m_{ic}(U)-m_{ic}(V)\right|
\leq
\|\Delta\|_{\max}
\left\|(e_{y_i}-e_c)x_i^\top\right\|_{\mathrm{sum}}.
\]
Because \(c\neq y_i\), we have $\|e_{y_i}-e_c\|_1=2$.
For the rank-one matrix \(Z_{ic}\), we have
\begin{align*}
\left\|(e_{y_i}-e_c)x_i^\top\right\|_{\mathrm{sum}}
&=
\sum_{r=1}^K\sum_{j=1}^d
\left|(e_{y_i}-e_c)_r(x_i)_j\right|\\
&=
\left(
\sum_{r=1}^K |(e_{y_i}-e_c)_r|
\right)
\left(
\sum_{j=1}^d |(x_i)_j|
\right)\\
&=
2\|x_i\|_1 \leq R_1.
\end{align*}
Consequently, for every \(i\in[n]\) and \(c\neq y_i\), it holds that 
\[
\left|m_{ic}(U)-m_{ic}(V)\right|
\leq
R_1\|U-V\|_{\max}.
\]

Let $\delta_{\max}:=R_1\|U-V\|_{\max}$. 
The preceding inequality implies $m_{ic}(U)
\geq m_{ic}(V)-\delta_{\max}$.
Since \(m_{ic}(V)\geq\rho(V)\), it follows that for every
\(i\in[n]\) and \(c\neq y_i\), $m_{ic}(U) \geq
\rho(V)-\delta_{\max}$. 
Taking the minimum over all pairs \((i,c)\) gives
$\rho(U) \geq \rho(V)-\delta_{\max}$.
Interchanging \(U\) and \(V\) gives
$\rho(V) \geq \rho(U)-\delta_{\max}
$.
Combining the two inequalities yields
\[
\left|\rho(U)-\rho(V)\right|
\leq
\delta_{\max}
=
R_1\|U-V\|_{\max}.
\]
For the Frobenius-norm, we have 
\[
\left|m_{ic}(U)-m_{ic}(V)\right|
\leq
\|\Delta\|_F
\left\|(e_{y_i}-e_c)x_i^\top\right\|_F.
\]
For every rank-one matrix \(ab^\top\), it holds that
$\|ab^\top\|_F=\|a\|_2\|b\|_2$.
Consequently, we have  
\begin{align*}
\left\|(e_{y_i}-e_c)x_i^\top\right\|_F
&=
\|e_{y_i}-e_c\|_2\|x_i\|_2 =
\sqrt{2}\,\|x_i\|_2 \leq R_2.
\end{align*}
Therefore, it holds that $\left|m_{ic}(U)-m_{ic}(V)\right|
\leq
R_2\|U-V\|_F$. The rest of the proof follows the same steps as the max-norm case and we obtain  
\[
\left|\rho(U)-\rho(V)\right|
\leq 
R_2\|U-V\|_{F}.
\]
\end{proof}

Based on \cref{lem:mc-margin-lipschitz}, we can derive the following lemma which is analogous to \cref{lem:margin_helper} in \cref{app:margin}. 

\begin{lemma} \label{lem:mc-normalized-margin-stability}
 For any nonzero matrices \(U,V\in\mathbb{R}^{K\times d}\), it holds 
\[
\left|
\frac{\rho(U)}{\|U\|_{\max}}
-
\frac{\rho(V)}{\|V\|_{\max}}
\right|
\leq
\frac{2R_1\|U-V\|_{\max}}{\|V\|_{\max}},
\]
and
\[
\left|
\frac{\rho(U)}{\|U\|_F}
-
\frac{\rho(V)}{\|V\|_F}
\right|
\leq
\frac{2R_2\|U-V\|_F}{\|V\|_F}.
\]
\end{lemma}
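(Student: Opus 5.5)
The plan is to copy the proof of \cref{lem:margin_helper} essentially verbatim, replacing \cref{lem:rho_lip} by its multiclass analogue \cref{lem:mc-margin-lipschitz}. I treat the two norms in parallel. Write $N(\cdot)$ for either $\|\cdot\|_{\max}$ (with constant $R_1$) or $\|\cdot\|_F$ (with constant $R_2$). Set $R_U:=N(U)$ and $R_V:=N(V)$; both are positive because $U,V\neq 0$.

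The key step is the algebraic splitting
\[
\frac{\rho(U)}{R_U}-\frac{\rho(V)}{R_V}
=\frac{\rho(U)-\rho(V)}{R_V}+\rho(U)\Bigl(\frac{1}{R_U}-\frac{1}{R_V}\Bigr).
\]
I bound the two terms separately.

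For the first term, \cref{lem:mc-margin-lipschitz} gives $|\rho(U)-\rho(V)|\le R_pN(U-V)$, where $R_p\in\{R_1,R_2\}$ is the constant matching $N$.

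For the second term I need $|\rho(U)|\le R_pN(U)$. This follows by applying \cref{lem:mc-margin-lipschitz} with $V=0$ and using $\rho(0)=0$; that identity holds because every pairwise margin $m_{ic}$ is linear in $W$. I then control the difference of reciprocals by the reverse triangle inequality $|R_U-R_V|\le N(U-V)$. This yields
\[
|\rho(U)|\,\Bigl|\frac{1}{R_U}-\frac{1}{R_V}\Bigr|\le R_pR_U\,\frac{N(U-V)}{R_UR_V}=\frac{R_pN(U-V)}{R_V}.
\]

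Summing the two bounds gives $2R_pN(U-V)/R_V$, which is the claim for both the max-norm and the Frobenius norm.

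There is no real obstacle in this argument. The only point worth verifying is that $\rho(0)=0$ in the multiclass setting, and this holds because each $m_{ic}(0)=0$.
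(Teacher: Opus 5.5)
Your proposal is correct and follows essentially the same route as the paper. The paper proves this lemma by repeating the argument of \cref{lem:margin_helper} with the matrix max- and Frobenius norms, substituting \cref{lem:mc-margin-lipschitz} for \cref{lem:rho_lip}. That is the same splitting and the same two bounds you give, including $|\rho(U)|\le R_p N(U)$ via $\rho(0)=0$.
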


\begin{proof} The proof follows the same steps as \cref{lem:margin_helper} by replacing the vector $\ell_{\infty}$- and $\ell_2$-norm with the matrix max- and Frobenius-norms respectively, and using \cref{lem:mc-margin-lipschitz} instead of \cref{lem:rho_lip}.     
\end{proof}

Next, we extend \cref{lem:hoff} to the multiclass setting. We define the max-norm solution set in this case as  
\[
    \cM_{\max}
    :=\{U\in\R^{K\times d}:\nor[\max]{U}\leq1,
                              \ \rho(U)=\gamma_{\max}\}. 
\]
\begin{lemma}[Multiclass Hoffman system]
\label{lem:mc-hoffman}
Let $z_{ic}:=\operatorname{vec}(Z_{ic})\in\R^{Kd}$, and let
$Z_{\mathrm{mc}}\in\R^{n(K-1)\times Kd}$ be the matrix containing the row vectors
$z_{ic}^{\top}$, and define
\[
    A_H:=
    \begin{bmatrix}
        I_{Kd}\\[-1mm]
        -I_{Kd}\\[-1mm]
        -Z_{\mathrm{mc}}
    \end{bmatrix},
    \qquad
    c_H:=
    \begin{bmatrix}
        \one_{Kd}\\[-1mm]
        \one_{Kd}\\[-1mm]
        -\gamma_{\max}\one_{n(K-1)}
    \end{bmatrix}.
\] 
 There is a finite constant $H_{2,\infty}(A_H)$, depending only on
$A_H$, such that
\begin{equation}
\label{eq:mc-hoffman}
    \dist_F(W,\cM_{\max})
    \leq H_{2,\infty}(A_H)
       \nor[\infty]{
          \bigl[A_H\operatorname{vec}(W)-c_H\bigr]_+},
\end{equation}
where $[\cdot]_+$ is entrywise and
$\dist_F(W,\cM_{\max}):=\inf_{U\in\cM_{\max}} \nor[F]{W-U}$.
\end{lemma}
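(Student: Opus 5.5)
The plan is to reduce the multiclass statement to the vector case of \cref{lem:hoff} by vectorization. Writing $u=\operatorname{vec}(U)\in\R^{Kd}$, the max-norm satisfies $\nor[\max]{U}=\nor[\infty]{u}$, the Frobenius norm satisfies $\nor[F]{U}=\nor[2]{u}$, and each pairwise margin satisfies $m_{ic}(U)=\ip{U}{Z_{ic}}=z_{ic}^\top u$. The first step is therefore to verify the set identity
\[
\operatorname{vec}(\cM_{\max})=\{u\in\R^{Kd}:A_Hu\le c_H\}.
\]
The first two row blocks encode $\nor[\infty]{u}\le1$. The third block encodes $z_{ic}^\top u\ge\gamma_{\max}$ for all $i$ and $c\ne y_i$, i.e.\ $\rho(U)\ge\gamma_{\max}$.

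For the reverse inclusion, I will argue exactly as in \cref{lem:hoff}. Any feasible $U$ has $\nor[\max]{U}\le1$, so the definition \eqref{eq:mc-max-margins} forces $\rho(U)\le\gamma_{\max}$. Combined with $\rho(U)\ge\gamma_{\max}$, this gives equality.

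The second step is nonemptiness. The feasible set $\{\nor[\max]{U}\le1\}$ is compact and $\rho$ is continuous (indeed Lipschitz by \cref{lem:mc-margin-lipschitz}), so the maximum defining $\gamma_{\max}$ is attained. Hence $\cM_{\max}$ is nonempty and the linear system $A_Hu\le c_H$ is feasible. Separability is not even needed for this step; it only guarantees $\gamma_{\max}>0$.

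The third step invokes Hoffman's bound in the form used in \cref{lem:hoff}, namely \citet[Propositions~1 and~2]{pena2021new}, applied to the feasible polyhedron $\{u:A_Hu\le c_H\}$ with the Euclidean norm on the distance side and the $\ell_\infty$ norm on the residual side. This yields
\[
\dist_2(u,\operatorname{vec}(\cM_{\max}))\le H_{2,\infty}(A_H)\,\nor[\infty]{[A_Hu-c_H]_+},
\]
with $H_{2,\infty}(A_H)$ given by the same max over full-row-rank row subsets $J$ of the reciprocal of $\min_{v\in\Delta}\nor[2]{(A_H)_J^\top v}$. Finiteness follows as before: each inner minimum is positive by full row rank, and there are finitely many subsets. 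This also gives the same explicit bound $\max_J\sqrt{|J|}/\sigma_{\min}((A_H)_J)$.

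Since $\operatorname{vec}$ is an isometry from the Frobenius norm to the Euclidean norm, $\dist_2(\operatorname{vec}(W),\operatorname{vec}(\cM_{\max}))=\dist_F(W,\cM_{\max})$, which gives \eqref{eq:mc-hoffman}. The infimum is attained because $\cM_{\max}$ is a compact polytope.

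No step is genuinely hard. The only point needing care is bookkeeping: confirming that the row ordering of $Z_{\mathrm{mc}}$ over pairs $(i,c)$ with $c\ne y_i$ matches the $n(K-1)$ entries of $c_H$, and that the duality between the max-norm and the sum-norm corresponds exactly to the $\ell_\infty$/$\ell_1$ pair after vectorization.
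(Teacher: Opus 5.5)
Your proposal is correct and follows the paper's route. You vectorize, verify $\cM_{\max}=\{W: A_H\operatorname{vec}(W)\le c_H\}$ through the two inclusions, and use compactness to see that $\gamma_{\max}$ is attained so the system is feasible. You then invoke the same Pe\~na et al.\ Hoffman bound as in \cref{lem:hoff}, using the Frobenius/Euclidean isometry of $\operatorname{vec}$ to transfer the distance.
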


\begin{proof}
As in the binary case, it is enough to verify that $\cM_{\rm max} = \{W:A_H\operatorname{vec}(W)\leq c_H\}$.  The first two blocks of
$A_H\operatorname{vec}(W)\leq c_H$ are equivalent to
$\nor[\max]{W}\leq1$, and the last block is equivalent to
$m_{ic}(W)\geq\gamma_{\max}$ for all $(i,c)$ with $c\neq y_i$.
Consequently, we have $\rho(W)\geq\gamma_{\max}$. The reverse inequality
$\rho(W) \leq \gamma_{\max}$ follows directly from the definition of $\gamma_{\max}$. Putting things together, we obtain 
\[
    \{W:A_H\operatorname{vec}(W)\leq c_H\}=\cM_{\max}.
\]
The remaining arguments are the same as those in \cref{lem:hoff}. 
\end{proof}

\subsection{Main Results} 
Throughout this section, we assume multiclass separability, $0 \leq \beta_1 \leq \beta_2 < 1$, and $\eps \in (0,1]$. The step size schedule is $\eta_t = \eta_0 (t+1)^{-a}$ with $\eta_0 > 0$ and $a \in (1/3,1]$, satisfying \cref{prop:ema-sum}. The loss is the cross-entropy loss defined in \eqref{eq:mc-loss}. Recall that $R_1 = 2 B_1$ and $R_2 = \sqrt{2} B_2$ where $B_1 = \max_{i \in [n]} \norm{x_i}_1$ and $B_2 = \max_{i \in [n]} \norm{x_i}_2$. Similar to the binary case, we define the sign and gradient residuals in the multiclass setting by  
\begin{align*}
    \lambda_t:=\frac{G_t}{\epsilon},
\qquad
r_t^{\mathrm{sgn}}
:=
\frac{
\left\|
g_t\odot\bigl(D_t-\operatorname{sign}(g_t)\bigr)
\right\|_{\mathrm{sum}}
}{
\|g_t\|_{\mathrm{sum}}
}, \qquad 
r_t^{\mathrm{grad}}
:=
\frac{\|\epsilon D_t-g_t\|_F}{\|g_t\|_F}.
\end{align*}

Throughout the multiclass extension, we make the following substitutions:
\[
\bigl(\|\cdot\|_\infty,\|\cdot\|_1,\|\cdot\|_2;
      \gamma_\infty,\gamma_2\bigr)
\;\longrightarrow\;
\bigl(\|\cdot\|_{\max},\|\cdot\|_{\mathrm{sum}},\|\cdot\|_F;
      \gamma_{\max},\gamma_F\bigr).
\]
Furthermore, we replace the constant $d$ with $Kd$.

The following lemma is the multiclass version of \cref{lem:radius}. It can be directly obtained by applying the coordinatewise proof of \cref{lem:radius} to the \(Kd\) entries.
\begin{lemma}[Multiclass accumulated radius bound]
\label{lem:mc-radius}
Suppose that
$\|\nabla L(W)\|_{\max}\leq B_g$ for every
$W\in\mathbb R^{K\times d}$.
There exists a constant $C_{\mathrm{rad}}\geq0$ (depending only on
$\beta_1$ and $\beta_2$) such that  for every $\eps>0$ and
integers $0\leq r<T$,
\begin{equation}
\label{eq:mc-radius}
\left\|\sum_{t=r}^{T-1}\eta_tD_t\right\|_{\max}
\leq S_T-S_r+
C_{\mathrm{rad}}\eta_r
\left[
1+\log\frac{B_g^2+\eps^2}{(1-\beta_2)\eps^2}
\right].
\end{equation}
In the multiclass setting,
we can take $B_g=R_1$ since $\|\nabla L(W)\|_{\max}\leq R_1G(W)\leq R_1$. 
\end{lemma}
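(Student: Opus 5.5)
The plan is to reduce \cref{lem:mc-radius} to the scalar, coordinatewise estimate \eqref{eq:main_weight} established in the proof of \cref{lem:radius}. That proof never uses the structure of the loss. It only uses three things about a fixed coordinate $j$: the EMA recursions \eqref{eq:radius_1} for $\mu_t=\bar m_t[j]$ and $\nu_t=\bar v_t[j]$, the bias-correction form of $d_t=D_t[j]$, and a uniform bound $|x_t|\le B_g$ on the scalar gradient sequence $x_t=g_t[j]$. First I would vectorize the parameter, $\operatorname{vec}(W)\in\R^{Kd}$. Since Adam's update is entrywise, each matrix entry $(c,j)$ evolves by exactly the scalar recursion \eqref{eq:radius_1}, driven by $x_t:=g_t[c,j]$, with the same $\beta_1,\beta_2,\eps$ and the same bias corrections $c_t=1-\beta_1^{t+1}$ and $s_t=1-\beta_2^{t+1}$.

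Second, I would verify the single loss-dependent hypothesis. By \cref{lem:mc-gradient-sandwich}, $|g_t[c,j]|\le\|g_t\|_{\max}\le\|g_t\|_{\mathrm{sum}}\le R_1G_t$. Since $G(W)=\frac1n\sum_i(1-p_{i,y_i}(W))\le1$, this gives $|x_t|\le R_1$, so $B_g=R_1$ is admissible, as claimed at the end of the statement.

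Third, I would run Parts 1--5 of the proof of \cref{lem:radius} for each entry. In order, these are: the Cauchy--Schwarz bound $|\mu_t|\le K_\beta\sqrt{\nu_t}$; the pre-$r$ momentum term, bounded by $K_\beta\eta_r\rho_\beta/(1-\rho_\beta)$; the surrogate $Q_k$ with $(1-\beta_2)\eps^2\le Q_k\le B_g^2+\eps^2$; the telescoping logarithmic bound with $\sum_uA_u=0$ and $A_u\ge-\alpha_u\beta_1^u$; and the bias-correction term. Together they give
\[
\Bigl|\sum_{t=r}^{T-1}\eta_tD_t[c,j]\Bigr|\le S_T-S_r+C_{\rm rad}\eta_r\Bigl[1+\log\tfrac{B_g^2+\eps^2}{(1-\beta_2)\eps^2}\Bigr],
\]
with the same constant $C_{\rm rad}=\frac{C_E}{2}+C_o+C_{\rm bc}$, which depends only on $\beta_1,\beta_2$. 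The case $\beta_2=0$ (hence $\beta_1=0$) is immediate, since $|D_t[c,j]|\le1$. Taking the maximum over the $Kd$ entries then yields \eqref{eq:mc-radius} in the max-norm.

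I expect no genuine obstacle. The only point to check carefully is that nothing in the scalar proof depends on the dimension or on the binary loss. In particular, $C_{\rm rad}$ must not pick up a factor $Kd$. This holds because the bound is entrywise, and the max-norm takes a maximum rather than a sum over entries.
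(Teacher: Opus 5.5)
Your proposal is correct and is the paper's argument: the paper obtains \cref{lem:mc-radius} by applying the entrywise scalar estimate \eqref{eq:main_weight} from the proof of \cref{lem:radius} to each of the $Kd$ coordinates and taking the maximum, with $B_g=R_1$ justified by $\|\nabla L(W)\|_{\max}\le\|\nabla L(W)\|_{\mathrm{sum}}\le R_1G(W)\le R_1$ via \cref{lem:mc-gradient-sandwich}. You are also right that $C_{\mathrm{rad}}$ picks up no factor of $Kd$, because the max-norm takes a maximum over entries rather than a sum.
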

The following proposition is the multiclass counterpart of
\cref{prop:uniform-clock}. Its proof follows the same steps, with
\cref{lem:proxy}, \cref{lem:loss-proxy}, \cref{lem:uniform-D}, \cref{lem:proxy-path-ratio}, \cref{lem:taylor}, \cref{lem:alignment}, \cref{lem:softsign}, and \cref{prop:master-descent}
replaced by
\cref{lem:mc-gradient-sandwich}, \cref{lem:mc-proxy-loss}, \cref{lem:multi_D}, \cref{lem:mc-softmax-stability}, \cref{lem:mc-hessian}, \cref{lem:mc-adam-softsign}, \cref{lem:multi_softsign}, and \cref{prop:multi_recurse},
respectively.
\begin{proposition}
\label{prop:mc-low-loss}
For every fixed $\eps\in(0,1]$, the
corresponding Adam trajectory satisfies
$L_t\to0$ and $G_t\to0$ as $t\to\infty$.
Moreover, there exist an integer $T_*>t_{\mathrm{ema}}$ and a
constant $g_*>0$ (both independent of $\eps$) such that for every
$\eps\in(0,1]$,
\[
G_{T_*}\geq g_*,
\qquad
L_t\leq\frac{\log 2}{n},
\qquad
\frac12L_t\leq G_t\leq L_t
\quad\text{for every }t\geq T_*.
\]
\end{proposition}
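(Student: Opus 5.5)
The plan is to transcribe the proof of \cref{prop:uniform-clock} verbatim, replacing each binary ingredient by its multiclass analogue. I work entrywise on the $Kd$ coordinates of the vectorized parameter. The first step combines the loss recursion of \cref{prop:multi_recurse} with \eqref{eq:multi_soft2} and the sum-norm sandwich of \cref{lem:mc-gradient-sandwich}. Since $G_t\le1$ and $\eps\le1$, this gives
\[
\langle g_t,\sigma_\eps(g_t)\rangle\ge \frac{\gamma_{\max}^2}{R_1+Kd}\,G_t^2=:\kappa G_t^2 .
\]
Young's inequality then turns the recursion into $L_{t+1}\le L_t-c\eta_tG_t^2+K'\eta_t^3$ for $t\ge t_{\rm ema}$, where $c=\kappa/2$ and $K'=(C^{\rm mc}_{\rm md})^2/(2\kappa)$. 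Because $a>1/3$, the tail $R_t:=\sum_{s\ge t}\eta_s^3$ is finite. The potential $\Phi_t:=L_t+K'R_t$ is therefore nonincreasing and satisfies $\Phi_{t+1}\le\Phi_t-c\eta_tG_t^2$. Summing gives $\sum_t\eta_tG_t^2<\infty$, and together with $\sum_t\eta_t=\infty$ this yields $\liminf G_t=0$.

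Next I upgrade this to $L_t\to0$. Along a subsequence with $G_{t_k}\to0$, the bound $L\le 2G$ from \cref{lem:mc-proxy-loss} (valid once $G\le 1/(2n)$) forces $L_{t_k}\to0$. Since $L_t=\Phi_t-K'R_t$ converges, the whole sequence satisfies $L_t\to0$, and then $0<G_t\le L_t$ gives $G_t\to0$.

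For the $\eps$-uniform time $T_*$, I bound $\|W_{t_{\rm ema}}\|_{\max}\le\|W_0\|_{\max}+C_DS_{t_{\rm ema}}$ using \cref{lem:multi_D}. Every pairwise margin then satisfies $|m_{ic}|\le R_1\|W\|_{\max}$, so $L_{t_{\rm ema}}\le\log(1+(K-1)e^{R_1M_0})$. This gives $\Phi_{t_{\rm ema}}\le B$ with $B$ independent of $\eps$. The averaging argument \eqref{eq:lowloss_2} transfers unchanged through the identity $q_i=1-e^{-\ell_i}$ of \cref{lem:mc-proxy-loss}, giving $G\ge\frac1n(1-e^{-L})$. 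The threshold $h=\log2/n$, the times $t_1$ and $T_*$, and the contradiction showing $\Phi$ drops below $h$ by $T_*$ are then copied verbatim. The comparison $\frac12L_t\le G_t\le L_t$ for $t\ge T_*$ follows from \cref{lem:mc-proxy-loss}, and $G_{T_*}\ge G_0e^{-R_1C_DS_{T_*}}=:g_*$ follows from \eqref{eq:mc_g_ratio2}.

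I expect the only real check, rather than a genuine obstacle, to be confirming that each constant ($\kappa$, $C^{\rm mc}_{\rm md}$, $t_{\rm ema}$, $B$) is $\eps$-independent in the multiclass setting. The one place needing care is that $d$ becomes $Kd$ in $\kappa$, and that the initial loss bound acquires a factor of $K-1$.
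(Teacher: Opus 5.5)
Your proposal is correct and follows the same route as the paper, which proves this proposition by rerunning the proof of \cref{prop:uniform-clock} with the multiclass lemmas substituted. You also correctly pin down where the constants change: $d\to Kd$ in $\kappa$, the factor $K-1$ in the bound on $L_{t_{\rm ema}}$, the identity $q_i=1-e^{-\ell_i}$ in the averaging step, and renaming the Young's-inequality constant to $K'$ so it does not clash with the number of classes.
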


Denote $\ell_{\eps} = \log \frac{1}{\eps}$. Recall that $ G_t = G(W_t) =\frac1n\sum_{i=1}^n\sum_{c\neq y_i}p_{ic}(W_t)$ where $p_{ic}(W_t)$ is the c-th entry of the vector $p_i(W_t) = \softmax(W_t x_i)$. Let $T_*$ be the common low-loss time in \cref{prop:mc-low-loss}. The stopping times in the multiclass setting are defined in the same way as the binary setting using the multiclass $G_t$: 
\[
T_1^-(\eps;b)
 :=\inf\{t\geq T_*:G_t\leq\eps\ell_\eps^b\},\qquad
T_1^\delta(\eps)
 :=\inf\{t\geq T_*:G_t\leq\delta\eps\},
\]
\[
T_1^+(\eps;b)
 :=\inf\{t\geq T_*:G_t\leq\eps\ell_\eps^{-b}\}.
\]

The following proposition is the multiclass counterpart of
\cref{prop:local-interpolation_app}. Its proof follows the same steps, with \cref{lem:proxy} and \cref{lem:alignment} replaced by \cref{lem:mc-gradient-sandwich} and \cref{lem:mc-adam-softsign} respectively. 
\begin{proposition}
\label{prop:mc-local-residual}
Denote $C_{\mathrm{loc}}^{\mathrm{mc}}
:=
C_{\mathrm{ema}}
\max\left\{\frac{1}{\gamma_{\max}},\frac{1}{\gamma_F}\right\}$. 
For every $\eps>0$ and $t\geq t_{\mathrm{ema}}$, we have
\begin{equation*}
\label{eq:mc-sign-residual-bound}
\left[
\frac{1}{1+R_1\lambda_t}
-C_{\mathrm{loc}}^{\mathrm{mc}}\eta_t
\right]_+
\leq r_t^{\mathrm{sgn}}
\leq
\min\left\{1,\frac{Kd}{\gamma_{\max}\lambda_t}\right\}
+C_{\mathrm{loc}}^{\mathrm{mc}}\eta_t,
\end{equation*}
and
\begin{equation*}
\label{eq:mc-grad-residual-bound}
\left[
1-\frac{\sqrt{Kd}}{\gamma_F\lambda_t}
-C_{\mathrm{loc}}^{\mathrm{mc}}\eta_t
\right]_+
\leq r_t^{\mathrm{grad}}
\leq
\frac{R_1\lambda_t}{1+R_1\lambda_t}
+C_{\mathrm{loc}}^{\mathrm{mc}}\eta_t,
\end{equation*}
where $[x]_+:=\max\{x,0\}$.
\end{proposition}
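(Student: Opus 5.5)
The plan is to repeat the proof of \cref{prop:local-interpolation_app} on the vectorized $Kd$-dimensional parameter, changing only the norms and constants. Fix $\eps>0$ and $t\ge t_{\rm ema}$, and write $g:=g_t$, $D:=D_t$, $\lambda:=\lambda_t$. I would first introduce the idealized soft-sign residuals
\[
\bar r^{\rm sgn}:=\frac{\norm{g\eprod(\sigma_\eps(g)-\sign(g))}_{\rm sum}}{\norm{g}_{\rm sum}},\qquad
\bar r^{\rm grad}:=\frac{\norm{\eps\sigma_\eps(g)-g}_F}{\norm{g}_F}.
\]
By the reverse triangle inequality, $|r^{\rm sgn}_t-\bar r^{\rm sgn}|\le \norm{g\eprod(D-\sigma_\eps(g))}_{\rm sum}/\norm{g}_{\rm sum}$. \cref{lem:mc-adam-softsign} bounds this numerator by $C_{\rm ema}\eta_tG_t$, and \cref{lem:mc-gradient-sandwich} gives $\norm{g}_{\rm sum}\ge\gamma_{\max}G_t$, so the difference is at most $C_{\rm ema}\eta_t/\gamma_{\max}$. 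The same argument in the Frobenius norm, using $\eps\norm{D-\sigma_\eps(g)}_F\le C_{\rm ema}\eta_tG_t$ and $\norm{g}_F\ge\gamma_FG_t$, gives $|r^{\rm grad}_t-\bar r^{\rm grad}|\le C_{\rm ema}\eta_t/\gamma_F$. Both errors are therefore at most $C^{\rm mc}_{\rm loc}\eta_t$.

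Next I bound the idealized residuals entrywise. For the sign residual, each entry contributes $\eps|g_{kj}|/(|g_{kj}|+\eps)$. Writing $p_{kj}=|g_{kj}|/\norm{g}_{\rm sum}$ as convex weights gives $\bar r^{\rm sgn}=\sum p_{kj}\,\eps/(|g_{kj}|+\eps)$. This is at least $\eps/(\norm{g}_{\max}+\eps)$, and since $\norm{g}_{\max}\le\norm{g}_{\rm sum}\le R_1G_t$, that is at least $1/(1+R_1\lambda)$. It is also at most $1$, and at most $Kd\,\eps/\norm{g}_{\rm sum}\le Kd/(\gamma_{\max}\lambda)$.

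For the gradient residual, $\eps\sigma_\eps(g)-g=-g\eprod|g|/(|g|+\eps)$. Monotonicity of $x\mapsto x/(x+\eps)$ gives $\bar r^{\rm grad}\le \norm{g}_{\max}/(\norm{g}_{\max}+\eps)\le R_1\lambda/(1+R_1\lambda)$. The reverse triangle inequality, together with $\norm{\sigma_\eps(g)}_F\le\sqrt{Kd}$ and $\norm{g}_F\ge\gamma_FG_t$, gives $\bar r^{\rm grad}\ge 1-\sqrt{Kd}/(\gamma_F\lambda)$.

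Finally, I combine these with the $C^{\rm mc}_{\rm loc}\eta_t$ perturbation bounds from the first step, and take positive parts of the lower bounds since residuals are nonnegative. I see no real obstacle here; the only care needed is to use $R_1=2B_1$ consistently, so that $\norm{g}_{\max}\le\norm{g}_{\rm sum}\le R_1G_t$ holds as \cref{lem:mc-gradient-sandwich} states.
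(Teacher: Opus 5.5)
Your proposal is correct and takes essentially the same route as the paper. The paper proves this by rerunning the proof of \cref{prop:local-interpolation_app} on the $Kd$ vectorized entries, with \cref{lem:mc-gradient-sandwich} and \cref{lem:mc-adam-softsign} in place of \cref{lem:proxy} and \cref{lem:alignment}. That is exactly your comparison with the idealized soft-sign residuals followed by the entrywise bounds on those residuals.
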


The following proposition is the multiclass counterpart of
\cref{prop:time_bracket_app}. Its proof follows the same steps, with \cref{lem:proxy-path-ratio} and \cref{prop:uniform-clock} replaced by \cref{lem:mc-softmax-stability} and \cref{prop:mc-low-loss} respectively. 

\begin{proposition}
\label{prop:mc-stopping-residual}
 For any $b,\delta\in(0,1)$, 
there exists $\eps_0(b,\delta)>0$ such that for every
$0<\eps\leq\eps_0(b,\delta)$, we have 
\[
T_1^-(\eps;b)<T_1^\delta(\eps)<T_1^+(\eps;b).
\]
\end{proposition}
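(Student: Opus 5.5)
The plan is to transcribe the binary proof of \cref{prop:time_bracket_app} almost line by line into the multiclass setting. The only ingredients that argument used were: the path-ratio bound $G_0e^{-K_0S_t}\le G_t$ together with the one-step ratio $G_{t}\ge e^{-K_0\eta_{t-1}}G_{t-1}$ with $K_0=R_1C_D$, the uniform low-loss time $T_*$ with $G_{T_*}\ge g_*$, and $G_t\to0$ for each fixed $\eps$. In the multiclass setting these are supplied by \eqref{eq:mc_g_ratio2} of \cref{lem:mc-softmax-stability} (with the same $K_0=R_1C_D$, now $R_1=2B_1$) and by \cref{prop:mc-low-loss}. Since $G(W)>0$ at every finite $W$ by \cref{lem:mc-gradient-sandwich}, $G_0>0$, and all three stopping times are finite for each fixed $\eps$.

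The key steps, in order, are as follows. First, choose $\eps\le\exp(-2\delta^{-1/b})$ so that $\eps\ell_\eps^b>\delta\eps>\eps\ell_\eps^{-b}$. The defining sets are then nested, which gives the weak ordering $T_1^-\le T_1^\delta\le T_1^+$.

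Second, fix $N:=\max\{T_*,\lceil (K_0\eta_0/\log 2)^{1/a}\rceil\}$, so that $K_0\eta_N<\log 2$. Set $c_N:=G_0e^{-K_0S_N}$, so that $G_t\ge c_N$ for all $t\le N$. Shrinking $\eps$ until $\eps\ell_\eps^b<c_N$ (for instance $\ell_\eps\ge\max\{2,(2\delta)^{1/b},2\log(2/c_N)\}$) forces $T_1^->N$ and $T_1^\delta>N$.

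Third, upgrade to strict inequalities. For the left inequality, minimality of $T_1^-$ gives $G_{T_1^--1}>\eps\ell_\eps^b$. The one-step ratio then yields $G_{T_1^-}>\tfrac12\eps\ell_\eps^b\ge\delta\eps$. Hence $T_1^-\ne T_1^\delta$, so $T_1^-<T_1^\delta$. For the right inequality, the same argument at $T_1^\delta$ gives $G_{T_1^\delta}>\tfrac{\delta}{2}\eps$. Taking also $\ell_\eps^{-b}<\delta/2$ (i.e.\ $\eps\le\exp[-2(2/\delta)^{1/b}]$) gives $G_{T_1^\delta}>\eps\ell_\eps^{-b}$, so $T_1^\delta<T_1^+$. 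The threshold $\eps_0(b,\delta)$ is the minimum of these finitely many explicit constants, together with $\min\{e^{-2},g_*^2/4\}$ and $\min\{1,g_*/(2\delta)\}$, which guarantee the stopping times exceed $T_*$ as in the binary proof.

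I expect no genuine obstacle. The only point needing care is to confirm that \eqref{eq:mc_g_ratio2} is stated with the max-norm constant $R_1C_D$ applied to consecutive iterates. This requires the uniform bound of \cref{lem:multi_D}, which is where $\beta_1\le\beta_2$ enters. Everything else is notational substitution of $(\|\cdot\|_\infty,d)$ by $(\|\cdot\|_{\max},Kd)$, and $d$ does not even appear in the bracket argument.
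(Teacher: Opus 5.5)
Your proposal is correct and follows the paper's route exactly: the multiclass proof is the binary proof of \cref{prop:time_bracket_app} with \cref{lem:mc-softmax-stability} and \cref{prop:mc-low-loss} substituted, and your three steps (nesting of the defining sets, forcing the stopping times past $N$, and the one-step ratio $e^{-K_0\eta_N}>\tfrac12$ for strictness) reproduce it. The $g_*$-based thresholds you add are redundant, since $N\ge T_*$ already places $T_1^-$ and $T_1^\delta$ strictly beyond $T_*$, but they are harmless.
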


The following theorem is the multiclass counterpart of \cref{thm:residual_nonasymp}.
Its proof follows the same steps, replacing
\cref{prop:uniform-clock}, \cref{prop:local-interpolation_app}, \cref{prop:time_bracket_app}, and \cref{lem:proxy-path-ratio} with \cref{prop:mc-low-loss}, \cref{prop:mc-local-residual}, \cref{prop:mc-stopping-residual}, and \cref{lem:mc-softmax-stability}, respectively. 

\begin{theorem}
\label{prop:mc-update-residuals} For any $b,\delta \in (0,1)$, there exists a constant $\Tilde{\eps}_{\rm res}(b,\delta)$ such that for every $0 < \eps \leq \Tilde{\eps}_{\rm res}(b,\delta)$, we have 
\[
\max\!\left\{
r^{\mathrm{sgn}}_{T_1^-(\eps;b)},
r^{\mathrm{grad}}_{T_1^+(\eps;b)}
\right\}
=
\begin{cases}
O\!\left(
\ell_\eps^{-b}+\ell_\eps^{-a/(1-a)}
\right), & a\in(1/3,1),\\[2mm]
O\!\left(\ell_\eps^{-b}\right), & a=1.
\end{cases}
\]
Denote
$K_0:=R_1C_D$. We also have 
\[
r^{\mathrm{grad}}_{T_1^\delta(\eps)}
\leq
\frac{R_1\delta}{1+R_1\delta}
+
\begin{cases}
O\!\left(\ell_\eps^{-a/(1-a)}\right),
    & a\in(1/3,1),\\[2mm]
O\!\left(\eps^{1/(2K_0\eta_0)}\right),
    & a=1.
\end{cases}
\]
\end{theorem}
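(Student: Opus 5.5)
The plan is to transcribe the binary proof of \cref{thm:residual_nonasymp} verbatim, with the multiclass ingredients substituted. The argument has two layers: a pointwise residual bound in terms of $\lambda_t$ and $\eta_t$, and a lower bound on the stopping times that makes $\eta_t$ small there.

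Fix $b,\delta\in(0,1)$ and take $\Tilde{\eps}_{\rm res}(b,\delta)$ to be the minimum of two thresholds. The first is the threshold $\eps_0(b,\delta)$ of \cref{prop:mc-stopping-residual}, together with the multiclass analogues of the auxiliary thresholds from the proof of \cref{prop:time_bracket_app}; those were built only from \cref{lem:proxy-path-ratio} and $G_{T_*}\ge g_*$, now supplied by \cref{lem:mc-softmax-stability} and \cref{prop:mc-low-loss}. The second is $e^{-L_{\rm clk}}$ with $L_{\rm clk}=\max\{16,4[\log(1/G_0)]_+\}$.

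Since $T_*>t_{\rm ema}$, \cref{prop:mc-local-residual} applies at all three stopping times. At $T_1^-$, the first-step argument (minimality plus the one-step ratio bound \eqref{eq:mc_g_ratio2}) gives $\lambda_{T_1^-}>\tfrac12\ell_\eps^b$. Plugging this into the upper bound of \cref{prop:mc-local-residual} yields
\[
r^{\rm sgn}_{T_1^-}\le \frac{2Kd}{\gamma_{\max}}\ell_\eps^{-b}+C^{\rm mc}_{\rm loc}\eta_{T_1^-}.
\]
At $T_1^+$ and $T_1^\delta$ the definitions give $\lambda\le\ell_\eps^{-b}$ and $\lambda\le\delta$. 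Monotonicity of $x\mapsto R_1x/(1+R_1x)$ then gives
\[
r^{\rm grad}_{T_1^+}\le R_1\ell_\eps^{-b}+C^{\rm mc}_{\rm loc}\eta_{T_1^+},\qquad
r^{\rm grad}_{T_1^\delta}\le \frac{R_1\delta}{1+R_1\delta}+C^{\rm mc}_{\rm loc}\eta_{T_1^\delta}.
\]

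It remains to bound the step sizes, and the crux is a lower bound on $T_1^-$. From \eqref{eq:mc_g_ratio2} with $r=0$ we get $G_0e^{-K_0S_{T_1^-}}\le\eps\ell_\eps^b$, so
\[
S_{T_1^-}\ge K_0^{-1}\bigl(\ell_\eps-b\log\ell_\eps+\log G_0\bigr)\ge \ell_\eps/(2K_0),
\]
using $\ell_\eps\ge L_{\rm clk}$. The two schedule regimes then differ only in how $S_t$ grows. For $a<1$, the bound $S_t\le\eta_0t^{1-a}/(1-a)$ gives $T_1^-\gtrsim\ell_\eps^{1/(1-a)}$, hence $\eta_{T_1^-}=O(\ell_\eps^{-a/(1-a)})$. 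For $a=1$, $S_t\le\eta_0(1+\log t)$ gives $T_1^-\ge e^{-1}\eps^{-1/(2K_0\eta_0)}$, hence $\eta_{T_1^-}=O(\eps^{1/(2K_0\eta_0)})$.

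Because $T_1^-<T_1^\delta<T_1^+$ by \cref{prop:mc-stopping-residual} and $\eta_t$ is nonincreasing, the same bound holds at all three times; this already gives both claims for $a\in(1/3,1)$ and the $T_1^\delta$ claim for $a=1$. For the remaining $a=1$ case, the bound $\eps^q\ell_\eps^b\le(b/(eq))^b$ with $q=1/(2K_0\eta_0)$ converts $\eps^{q}$ into $O(\ell_\eps^{-b})$, which is absorbed into the $\ell_\eps^{-b}$ term.

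The only real obstacle is bookkeeping rather than analysis. One must check that every binary ingredient is genuinely available in the multiclass setting: the path ratio in the max norm with constant $R_1=2B_1$, $\|g\|_{\rm sum}\ge\gamma_{\max}G$, and the $Kd$ replacing $d$ in the soft-sign residual bound. All of these are provided by \cref{lem:mc-softmax-stability}, \cref{lem:mc-gradient-sandwich}, and \cref{prop:mc-local-residual}, so the constants change but the rates do not.
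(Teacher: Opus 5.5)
Your proposal is correct and matches the paper's route: the multiclass proof is the binary proof of \cref{thm:residual_nonasymp}, with \cref{prop:mc-low-loss}, \cref{prop:mc-local-residual}, \cref{prop:mc-stopping-residual}, and \cref{lem:mc-softmax-stability} substituted for their binary counterparts. Each of your steps appears in that proof in the same order: the threshold $\min\{\eps_{\rm all}(b,\delta), e^{-L_{\rm clk}}\}$, the first-crossing bound $\lambda_{T_1^-}>\tfrac12\ell_\eps^b$, the clock estimate $S_{T_1^-}\ge \ell_\eps/(2K_0)$, the transfer of the step-size bound through $T_1^-<T_1^\delta<T_1^+$, and the conversion $\eps^q\ell_\eps^b\le (b/(eq))^b$ for $a=1$.
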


Let $\widehat\gamma_p(W):=\frac{\rho(W)}{\|W\|_p},
\quad p\in\{\max,F\}$. The max-norm optimal solution set is defined as 
\[
\mathcal M_{\max}
:=\{U\in\mathbb R^{K\times d}:
       \|U\|_{\max}\le 1,\ \rho(U)=\gamma_{\max}\}.
\]
We further define a few geometric quantities: 
\[
\gamma_{F\mid\max}
:=\max_{U\in\mathcal M_{\max}}
       \frac{\rho(U)}{\|U\|_F},
\qquad
\Delta_{\mathrm{geom}}^{\mathrm{mc}}
:=\gamma_F-\gamma_{F\mid\max}, \qquad r_\eps:=
\begin{cases}
\ell_\eps^{-a}, & a\in(1/3,1),\\[2pt]
(\log\ell_\eps)/\ell_\eps, & a=1.
\end{cases}
\]
The multiclass geometric separation $\triangle_{\rm geom}^{\rm mc}$ is the difference between the Frobenius data margin and the best attainable Frobenius margin by the classifiers in $\mathcal M_{\max}$. 

Denote \(T_1:=T_1^\delta(\eps)\). Define
$u_t:=\frac{g_t}{\|g_t\|_F}$, $\theta_t:=\frac{\eta_t\|g_t\|_F}{\eps}$, and $e_t:=\frac{\eps D_t-g_t}{\|g_t\|_F}$. The post-crossover update of Adam can be written as $W_{t+1} = W_t - \theta_t (u_t + e_t)$. As in the binary case, we further
define the accumulated original and effective step sizes as:
\[
V_t^\delta(\eps):=\sum_{s=T_1}^{t-1}\eta_s,
\qquad
H_t^\delta(\eps):=\sum_{s=T_1}^{t-1}\theta_s.
\]

The following proposition is the multiclass counterpart of
\cref{prop:bounded_euc}. Its proof uses \cref{prop:mc-low-loss} in place of
\cref{prop:uniform-clock}, \cref{lem:mc-gradient-sandwich} and \cref{lem:mc-adam-softsign} in place of \cref{lem:proxy} and \cref{lem:alignment},
and  \cref{lem:mc-softmax-stability} for the trajectory bound from \cref{lem:proxy-path-ratio}.
The scalar stepsize estimate in \cref{lem:helper} applies unchanged.
The required two-sided post-crossover loss estimate follows by
adapting Part~3 of the proof of \cref{thm:first-clock} to the multiclass setting.

\begin{proposition} \label{prop:mc-post-crossover}
    For every $\delta \in (0,1)$, there exists a constant $\Tilde{\eps}^{\rm mc}_{1}(\delta)$ such that, for every $0<\eps\le \Tilde{\eps}^{\rm mc}_{1}(\delta)$ and every $t\ge T_1^{\delta}(\eps)$, we have $L_t \asymp G_t\asymp\frac{\eps}{1+V_t^{\delta}(\eps)}$, $H_t^{\delta}(\eps) \asymp \log(1+V_t^{\delta}(\eps))$, and 
$\sum_{s=T_1^{\delta}(\eps)}^{\infty}\theta_s\norm{e_s}_F\le E_0^{\rm mc}$, where $E_0^{\rm mc}$ is a constant that depends on the data and the step-size schedule.
\end{proposition}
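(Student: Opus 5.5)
The plan is to transcribe the binary proof of \cref{prop:bounded_euc} into the multiclass setting, making the substitutions $(\|\cdot\|_\infty,\|\cdot\|_1,\|\cdot\|_2;\gamma_\infty,\gamma_2;d)\to(\|\cdot\|_{\max},\|\cdot\|_{\mathrm{sum}},\|\cdot\|_F;\gamma_{\max},\gamma_F;Kd)$. The argument has three steps, and the first one is where the real work lies.

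\textbf{Step 1: invariant and two-sided loss recursion.} Here I redo Part~3 of the proof of \cref{thm:first-clock}. Set $\tau:=T_1^\delta(\eps)$. I show by induction that $L_t\le 2\delta\eps$ for every $t\ge\tau$; the base case follows from \cref{prop:mc-low-loss}.

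Assume the bound holds at time $t$. \cref{lem:mc-adam-softsign} gives $\eps\|D_t-\sigma_\eps(g_t)\|_F\le C_{\rm ema}\eta_tG_t$, and entrywise we have $\eps|\sigma_\eps(g_t)_{kj}|\le|g_t[k,j]|$. Together with \cref{lem:mc-gradient-sandwich}, these yield $\|D_t\|_F\le M_D G_t/\eps$ with $M_D=R_2+C_{\rm ema}\eta_0$.

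Next, every entry satisfies $|g_t[k,j]|\le R_1G_t\le 2R_1\delta\eps$. Hence
\[
a_\delta\frac{G_t^2}{\eps}\le\langle g_t,\sigma_\eps(g_t)\rangle\le R_2^2\frac{G_t^2}{\eps},\qquad a_\delta=\frac{\gamma_F^2}{1+2R_1\delta}.
\]
The perturbation $|\langle g_t,D_t-\sigma_\eps(g_t)\rangle|\le C_{\rm ema}R_2\eta_tG_t^2/\eps$ is absorbed once $\eta_\tau$ is small.

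For the second-order term, I use the Frobenius branch of \cref{lem:mc-hessian} with $p=F$. Since $\|U-V\|_F=\eta_t\|D_t\|_F\le 2\delta M_D\eta_0$ is bounded, the exponential factor is a constant, and the Taylor remainder is at most $B_\delta\eta_t^2G_t^3/\eps^2$.

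Choosing $\eps$ small, as in \cref{cor:invariant}, so that $\eta_\tau$ is small (this uses the lower bound on $G_t$ for $t\le N$ from \cref{lem:mc-softmax-stability}), I obtain
\[
c_\delta\frac{\eta_t}{\eps}L_t^2\le L_t-L_{t+1}\le A_\delta\frac{\eta_t}{\eps}L_t^2,
\]
using $\tfrac12L_t\le G_t\le L_t$. The lower bound gives $L_{t+1}\le L_t$, which closes the induction. The upper bound gives $L_{t+1}\ge L_t/2$.

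\textbf{Step 2: rates for $L_t$, $G_t$ and $H_t$.} Dividing the recursion by $L_tL_{t+1}$ gives
\[
c\frac{\eta_t}{\eps}\le\frac{1}{L_{t+1}}-\frac{1}{L_t}\le C\frac{\eta_t}{\eps}.
\]
Summing, and using $\delta\eps/2<G_\tau\le\delta\eps$ (from one-step stability in \cref{lem:mc-softmax-stability} with $K_0\eta_{\tau-1}\le\log2$), I get $L_t\asymp G_t\asymp\eps/(1+V_t)$.

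Then $\theta_s=\eta_s\|g_s\|_F/\eps\asymp\eta_sG_s/\eps\asymp\eta_s/(1+V_s)$ by \cref{lem:mc-gradient-sandwich}. The telescoping comparison $x/(1+\eta_0)\le\log(1+x)\le x$, with $x=\eta_s/(1+V_s)$, then gives $H_t\asymp\log(1+V_t)$.

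\textbf{Step 3: summable residual.} I decompose
\[
\eps D_t-g_t=\eps(D_t-\sigma_\eps(g_t))+(\eps\sigma_\eps(g_t)-g_t),
\]
and bound the second piece entrywise by $|g_t[k,j]|^2/\eps$. This gives
\[
\theta_t\|e_t\|_F\le C_{\rm ema}\eta_t^2\frac{G_t}{\eps}+R_1R_2\eta_t\Big(\frac{G_t}{\eps}\Big)^2,
\]
where I use $\|g_t\|_{\max}\le\|g_t\|_{\mathrm{sum}}\le R_1G_t$.

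For the first sum, $G_t/\eps\le C_L/(1+V_t)$ combined with \cref{lem:helper} (which is scalar and transfers unchanged) bounds it by $C_LQ_\eta$. For the second sum, $\eta_t(G_t/\eps)^2\le(L_t-L_{t+1})/(c_*\eps)$ telescopes to at most $2/c_*$. This gives $E_0^{\rm mc}=C_{\rm ema}C_LQ_\eta+2R_1R_2/c_*$.

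The final threshold $\tilde\eps^{\rm mc}_1(\delta)$ is the minimum of the invariant, recursion and entry thresholds, constructed exactly as in the binary proof.
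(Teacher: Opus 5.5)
Your proposal is correct and follows the paper's route. The paper proves this by rerunning the binary proof of \cref{prop:bounded_euc} with the multiclass substitutions, obtaining the two-sided post-crossover recursion by adapting Part~3 of the proof of \cref{thm:first-clock}; your constants, including $E_0^{\rm mc}=C_{\rm ema}C_LQ_\eta+2R_1R_2/c_*$, match. One step deserves to be stated explicitly: the bound $L_t-L_{t+1}\le\eta_t\langle g_t,D_t\rangle$ uses convexity, namely the lower bound $0\le\nabla^2L(W)[A,A]$ from \cref{lem:mc-hessian}, because the Taylor inequality \eqref{eq:mc-taylor-bound} only gives an upper bound on $L_{t+1}$.
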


The following theorem is the multiclass counterpart of \cref{thm:margin_app_T1}.
Its proof follows the same steps, replacing \cref{lem:proxy-path-ratio}, \cref{lem:radius}, \cref{lem:proxy}, \cref{lem:loss-proxy}, \cref{lem:uniform-D},
\cref{lem:softsign}, \cref{lem:hoff}, and \cref{lem:rho_lip} with \cref{lem:mc-softmax-stability}, \cref{lem:mc-radius}, \cref{lem:mc-gradient-sandwich}, \cref{lem:mc-proxy-loss}, \cref{lem:multi_D}, \cref{lem:multi_softsign}, \cref{lem:mc-hoffman}, and \cref{lem:mc-margin-lipschitz},
respectively, and \cref{prop:master-descent} and \cref{prop:uniform-clock} with  \cref{prop:multi_recurse}
and \cref{prop:mc-low-loss}. The inequality in \cref{lem:vec_inverse} applies
to the Frobenius norm after vectorization. 

\begin{theorem} \label{thm:multi_gap_T1}
For \(\delta\in(0,1)\), there exists a constant $\Tilde{\eps}_{2, \rm mc}(\delta)$ such that for every $0< \eps \leq \Tilde{\eps}_{2,\rm mc}(\delta)$, we have 
\[
\gamma_{\max}-\widehat{\gamma}_{\max}(W_{T_1^{\delta}(\eps)})
\leq
O(r_\epsilon), \qquad 
\gamma_F -\widehat{\gamma}_F(W_{T_1^{\delta}(\eps)})
\geq
\Delta^{\rm mc}_{\mathrm{geom}}
- O(r_\epsilon).
\]
\end{theorem}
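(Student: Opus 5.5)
The plan is to transcribe the five-part proof of \cref{thm:margin_app_T1} to the $Kd$-dimensional vectorized parameter, applying the stated substitutions throughout. Write $\tau_\eps:=T_1^\delta(\eps)$, $\ell_\eps=\log\frac1\eps$, $\omega_a$ and $\Lambda_\eps$ as in the binary proof. The central intermediate result is the sharp clock bound
\[
S_{\tau_\eps}-S_{T_*}\le \frac{\ell_\eps}{\gamma_{\max}}+O_\delta(\omega_a(\ell_\eps)).
\]

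\textbf{Step 1 (dominant phase).} Introduce the auxiliary time $\sigma_\eps:=\inf\{t\ge T_*:G_t\le\eps\Lambda_\eps\}$. For $t_0\le t<\sigma_\eps$, combine \eqref{eq:multi_soft1} with \cref{lem:mc-gradient-sandwich}. This gives $\langle g_t,\sigma_\eps(g_t)\rangle\ge(\gamma_{\max}-Kd/\Lambda_\eps)G_t$. Insert this into \cref{prop:multi_recurse} and pass to $\log(L_t/L_{t+1})$ using $G_t/L_t\ge 1-nL_t/2$ from \cref{lem:mc-proxy-loss}. Telescoping then gives $\gamma_{\max}(S_{\sigma_\eps}-S_{T_*})\le \ell_\eps+O(1+U_{\sigma_\eps}/\Lambda_\eps+\omega_a(\ell_\eps))$, where $U_{\sigma_\eps}:=S_{\sigma_\eps}-S_{T_*}$. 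This uses the same $\sum\eta_t^2$ casework, since the schedule estimates are unchanged. The lower bound on $L_{\sigma_\eps}$ needed here comes from the one-step ratio \eqref{eq:mc_g_ratio2}. As in the binary proof, remove $U_{\sigma_\eps}$ from the right side in two passes: first obtain $U=O(\ell_\eps)$, then use $\ell_\eps/\Lambda_\eps=\omega_a(\ell_\eps)$.

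\textbf{Step 2 (transition phase).} On $[\sigma_\eps,\tau_\eps)$, apply \eqref{eq:multi_soft2} together with $G_t>\delta\eps$. This gives the contraction $L_{t+1}\le(1-\tfrac{\kappa_\delta}{4}\eta_t)L_t$ with $\kappa_\delta=\gamma_{\max}^2/(R_1+Kd/\delta)$. Hence $S_{\tau_\eps}-S_{\sigma_\eps}=O_\delta(\log\Lambda_\eps)=O_\delta(\omega_a(\ell_\eps))$. Both phases require $\eta_{\sigma_\eps}$ to be small. This is secured, as before, by choosing $\eps$ small relative to the $\eps$-independent burn-in bound $G_t\ge G_0e^{-K_0S_N}$.

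\textbf{Step 3 (radius).} Split at $k_\eps=\lceil\ell_\eps\rceil$. Bound $\|W_{k_\eps}\|_{\max}\le\|W_0\|_{\max}+C_DS_{k_\eps}=O(\omega_a(\ell_\eps))$ by \cref{lem:multi_D}. Then apply \cref{lem:mc-radius} with $B_g=R_1$ on $[k_\eps,\tau_\eps)$. The term $C_{\rm rad}\eta_{k_\eps}H_\eps$ is $O(\omega_a(\ell_\eps))$, which yields $\|W_{\tau_\eps}\|_{\max}\le \ell_\eps/\gamma_{\max}+O_\delta(\omega_a(\ell_\eps))$. In the other direction, \eqref{eq:mc-margin-tail} and $L_{\tau_\eps}\le 2\delta\eps$ give $\rho(W_{\tau_\eps})\ge\ell_\eps-O_\delta(1)$. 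Dividing gives $\gamma_{\max}-\widehat\gamma_{\max}(W_{\tau_\eps})\le C r_\eps$.

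\textbf{Step 4 (Frobenius statement).} Set $X_\eps:=W_{\tau_\eps}/\|W_{\tau_\eps}\|_{\max}$. Since $\|X_\eps\|_{\max}=1$, only the margin block of the residual is active, so $\|[A_H\operatorname{vec}(X_\eps)-c_H]_+\|_\infty=\gamma_{\max}-\rho(X_\eps)$. \cref{lem:mc-hoffman} then produces $U_\eps\in\mathcal M_{\max}$ with $\|X_\eps-U_\eps\|_F=O(r_\eps)$. Every $U\in\mathcal M_{\max}$ has $\|U\|_{\max}=1$ by the rescaling argument. Hence $\|X_\eps\|_F,\|U_\eps\|_F\ge1$, and \cref{lem:mc-margin-lipschitz} with \cref{lem:vec_inverse} gives $\widehat\gamma_F(W_{\tau_\eps})=\rho(X_\eps)/\|X_\eps\|_F\le\gamma_{F\mid\max}+(R_2+\gamma_{\max})\|X_\eps-U_\eps\|_F$. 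This is the claim.

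The main obstacle is Step 1: keeping the exact coefficient $1/\gamma_{\max}$ while the soft-sign deficit is now $Kd\,\eps$. I expect the binary bookkeeping to go through verbatim once $d$ is replaced by $Kd$, but the $U_{\sigma_\eps}$ self-reference and the threshold conditions on $\eps$ must be rechecked with the multiclass constants $R_1=2B_1$ and $C^{\rm mc}_{\rm md}$.
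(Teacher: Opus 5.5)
Your proposal is correct and takes essentially the same route as the paper. The paper proves this theorem by rerunning the five-part argument of \cref{thm:margin_app_T1} with $d\to Kd$ and the multiclass lemmas substituted: the auxiliary time $\sigma_\eps$ for the sharp $1/\gamma_{\max}$ clock, the $\kappa_\delta$ transition phase, the split at $k_\eps=\lceil\ell_\eps\rceil$ with \cref{lem:mc-radius}, the loss--margin lower bound, and the Hoffman step via \cref{lem:mc-hoffman} and \cref{lem:mc-margin-lipschitz}. The details you leave implicit, namely $T_*\le k_\eps<\tau_\eps$ and $W_{\tau_\eps}\neq 0$, go through exactly as in the binary proof.
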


The following theorem is the multiclass counterpart of
\cref{thm:margin_app_T1-}. Its proof follows the same steps, and uses
\cref{prop:multi_recurse,prop:mc-low-loss,prop:mc-stopping-residual}
in place of
\cref{prop:master-descent,prop:uniform-clock,prop:time_bracket_app},
\cref{lem:mc-gradient-sandwich,lem:mc-proxy-loss,lem:multi_D,lem:multi_softsign,lem:mc-normalized-margin-stability}
in place of
\cref{lem:proxy,lem:loss-proxy,lem:uniform-D,lem:softsign,lem:margin_helper}, \cref{lem:mc-softmax-stability} in place of \cref{lem:proxy-path-ratio}, and \cref{thm:multi_gap_T1} in place of \cref{thm:margin_app_T1}, respectively. 
\begin{theorem} 
 For any $\delta,b \in (0,1)$, 
there exists a constant $\eps'_{2,\rm mc}(\delta,b)$ such that for every $0< \eps \leq \eps'_{2,\rm mc}(\delta,b)$, we have 
\[
\gamma_{\max}-\widehat{\gamma}_{\max}(W_{T_1^{-}(\eps;b)})
\leq
O(r_\epsilon), \qquad 
\gamma_F - \widehat{\gamma}_F(W_{T_1^{-}(\eps;b)})
\geq
\Delta_{\mathrm{geom}}
- O(r_\epsilon).
\]
\end{theorem}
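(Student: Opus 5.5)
The plan is to transfer the margin bounds already established at $T_1^\delta(\eps)$ in \cref{thm:multi_gap_T1} back to the earlier time $T_1^-(\eps;b)$, following the proof of \cref{thm:margin_app_T1-}. Throughout, write $\tau^-:=T_1^-(\eps;b)$ and $\tau^\delta:=T_1^\delta(\eps)$. The transfer has two ingredients: a lower bound of order $\ell_\eps$ on the size of $W_{\tau^-}$, and an upper bound of order $\log\ell_\eps$ on the displacement $W_{\tau^\delta}-W_{\tau^-}$. Their ratio is $O(\log\ell_\eps/\ell_\eps)=O(r_\eps)$.

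\emph{Norm lower bound.} For $\eps$ small, \cref{prop:mc-low-loss} applies at $\tau^-\ge T_*$ and gives $L_{\tau^-}\le 2G_{\tau^-}\le 2\eps\ell_\eps^b$. The tail bound \eqref{eq:mc-margin-tail} of \cref{lem:mc-proxy-loss} then yields
\[
\rho(W_{\tau^-})\ge \ell_\eps-b\log\ell_\eps-\log\tfrac{2n}{\log 2}\ge \ell_\eps/2 .
\]
Since $\rho(W)\le\gamma_{\max}\|W\|_{\max}$ and $\|W\|_F\ge\|W\|_{\max}$, we get $\|W_{\tau^-}\|_{\max},\|W_{\tau^-}\|_F\ge \ell_\eps/(2\gamma_{\max})$.

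\emph{Displacement bound.} By \cref{prop:mc-stopping-residual}, $\tau^-<\tau^\delta$. On the interval $[\tau^-,\tau^\delta)$ we have $G_t>\delta\eps$. Combining \eqref{eq:multi_soft2} with \cref{lem:mc-gradient-sandwich} gives $\langle g_t,\sigma_\eps(g_t)\rangle\ge\kappa_\delta G_t$, where $\kappa_\delta=\gamma_{\max}^2/(R_1+Kd/\delta)$. Next, choose a burn-in index $N_\delta$ so that $C_{\rm md}^{\rm mc}\eta_t\le\kappa_\delta/2$ for $t\ge N_\delta$. \cref{lem:mc-softmax-stability} bounds $G_t$ from below on $[T_*,N_\delta]$, which ensures $\tau^->N_\delta$ for small $\eps$. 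Then \cref{prop:multi_recurse} together with $G_t\ge L_t/2$ gives the contraction $L_{t+1}\le(1-\tfrac{\kappa_\delta}{4}\eta_t)L_t$. Unrolling it from $L_{\tau^-}\le2\eps\ell_\eps^b$, exactly as in Part~2 of \cref{thm:first-clock}, yields
\[
S_{\tau^\delta}-S_{\tau^-}\le \tfrac{4b}{\kappa_\delta}\log\ell_\eps+B_\delta .
\]
\cref{lem:multi_D} converts this into $\|W_{\tau^\delta}-W_{\tau^-}\|_{\max}\le C_D(S_{\tau^\delta}-S_{\tau^-})$, and the Frobenius bound follows with an extra factor $\sqrt{Kd}$.

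\emph{Conclusion.} Apply \cref{lem:mc-normalized-margin-stability} with $U=W_{\tau^\delta}$ and $V=W_{\tau^-}$. This bounds each normalized-margin difference by a constant times $(\log\ell_\eps+1)/\ell_\eps\le q_a r_\eps+r_\eps$, using $\log\ell_\eps/\ell_\eps\le q_a r_\eps$ and $1/\ell_\eps\le r_\eps$ as in the binary case. The triangle inequality combined with \cref{thm:multi_gap_T1} then gives both claims. For the second claim, the geometric constant is $\Delta^{\rm mc}_{\rm geom}$; this is the quantity written as $\Delta_{\rm geom}$ in the statement.

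\emph{Threshold.} The only delicate step is the bookkeeping for the threshold $\eps'_{2,\rm mc}(\delta,b)$. It is the minimum of the following quantities:
\begin{itemize}
\item the thresholds of \cref{thm:multi_gap_T1} and \cref{prop:mc-stopping-residual};
\item $e^{-L}$ with $L$ large enough that $b\log\ell_\eps+\log\frac{2n}{\log2}\le\ell_\eps/2$;
\item a value small enough that $\sqrt\eps$ falls below the burn-in lower bound on $G_t$, so that $\tau^->N_\delta$.
\end{itemize}
Beyond this, no new multiclass-specific estimate is required.
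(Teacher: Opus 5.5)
Your proposal is correct and follows essentially the same route as the paper, which obtains this theorem by rerunning the binary proof of \cref{thm:margin_app_T1-} with the multiclass substitutions. That proof uses the same four ingredients you give: the $\ell_\eps/(2\gamma_{\max})$ norm lower bound at $T_1^-$, the $O(\log\ell_\eps)$ displacement bound from the $\kappa_\delta$-contraction after the burn-in $N_\delta$, \cref{lem:mc-normalized-margin-stability}, and the transfer from \cref{thm:multi_gap_T1}. Your reading of $\Delta_{\rm geom}$ in the statement as $\Delta^{\rm mc}_{\rm geom}$ is also right.
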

 
The following proposition is the multiclass counterpart of
\cref{prop:post_euc}. Its proof follows the same steps,
using \cref{prop:mc-post-crossover}, \cref{thm:multi_gap_T1}, \cref{prop:mc-low-loss}, \cref{lem:mc-proxy-loss}, \cref{lem:mc-normalized-margin-stability} in place of
\cref{prop:bounded_euc}, \cref{thm:margin_app_T1}, \cref{prop:uniform-clock}, \cref{lem:loss-proxy}, \cref{lem:margin_helper}, respectively.

\begin{proposition} \label{prop:mc_post_margin}
 For every \(\delta\in(0,1)\),
there exists \(\Tilde\eps_3^{\mathrm{mc}}(\delta)>0\)
such that for \(0<\eps\le
\widetilde\eps_3^{\mathrm{mc}}(\delta)\) and
\(t\ge T_1^\delta(\eps)\), we have
\[
\gamma_{\max}-\widehat\gamma_{\max}(W_t)
 \le O\!\left(r_\eps+
       \frac{H_t^\delta(\eps)}{\ell_\eps}\right), \qquad 
\gamma_F-\widehat\gamma_F(W_t)
 \ge \Delta_{\mathrm{geom}}^{\mathrm{mc}}
       -O\!\left(r_\eps+
       \frac{H_t^\delta(\eps)}{\ell_\eps}\right).
\]
\end{proposition}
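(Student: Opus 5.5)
The plan is to mirror the binary proof of \cref{prop:post_euc} verbatim after the substitutions $(\|\cdot\|_\infty,\|\cdot\|_2,\gamma_\infty,\gamma_2)\to(\|\cdot\|_{\max},\|\cdot\|_F,\gamma_{\max},\gamma_F)$. Set $\tau:=T_1^\delta(\eps)$ and take $\tilde\eps_3^{\mathrm{mc}}(\delta)$ to be the minimum of four thresholds: $\tilde\eps^{\mathrm{mc}}_1(\delta)$ from \cref{prop:mc-post-crossover}, $\tilde\eps_{2,\mathrm{mc}}(\delta)$ from \cref{thm:multi_gap_T1}, the multiclass analogue of the invariant threshold of \cref{cor:invariant}, and $e^{-\bar\ell_\delta}$ with $\bar\ell_\delta=\max\{e,2A_\delta\}$, where $A_\delta=[\log\frac{2n\delta}{\log2}]_+$.

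\textbf{Step 1 (displacement).} We unroll the post-crossover representation $W_{t+1}=W_t-\theta_t(u_t+e_t)$ from $\tau$ to $t-1$. Since $\|u_s\|_{\max}\le\|u_s\|_F=1$ and $\|e_s\|_{\max}\le\|e_s\|_F$, the summability bound in \cref{prop:mc-post-crossover} gives
\[
\max\{\|W_t-W_\tau\|_{\max},\|W_t-W_\tau\|_F\}\le H_t^\delta(\eps)+E_0^{\mathrm{mc}}.
\]

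\textbf{Step 2 (lower bound on the radius at $\tau$).} The invariant $G_t\le L_t\le 2\delta\eps$ for $t\ge\tau$ is the multiclass version of Part~3 of \cref{thm:first-clock}, and it is already implicit in \cref{prop:mc-post-crossover}. Combining it with $L_t\le\frac{\log2}{n}$ (\cref{prop:mc-low-loss}) and the tail bound \eqref{eq:mc-margin-tail} yields
\[
\rho(W_t)\ge\ell_\eps-A_\delta\ge\tfrac12\ell_\eps .
\]
In particular $W_t\neq0$. Since $\rho(W)\le\gamma_{\max}\|W\|_{\max}$ and $\rho(W)\le\gamma_F\|W\|_F$ by homogeneity and the definitions \eqref{eq:mc-max-margins}, we obtain
\[
\|W_\tau\|_{\max}\ge\frac{\ell_\eps}{2\gamma_{\max}},\qquad \|W_\tau\|_F\ge\frac{\ell_\eps}{2\gamma_F}.
\]

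\textbf{Step 3 (transfer).} We apply \cref{lem:mc-normalized-margin-stability} with $U=W_t$ and $V=W_\tau$, using Steps 1--2:
\[
|\widehat\gamma_{\max}(W_t)-\widehat\gamma_{\max}(W_\tau)|\le 4R_1\gamma_{\max}\frac{H_t+E_0^{\mathrm{mc}}}{\ell_\eps},
\]
and analogously for the Frobenius case with constant $4R_2\gamma_F$. Then we combine with \cref{thm:multi_gap_T1} at $\tau$ via the triangle inequality. Finally, we absorb $E_0^{\mathrm{mc}}/\ell_\eps$ into $r_\eps$, using $1/\ell_\eps\le r_\eps$ for $\ell_\eps\ge e$ in both regimes of $a$. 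The two bounds then follow with constants of the form $\max\{C_{\infty,\delta}^{\mathrm{mc}},4R_1\gamma_{\max}\}$ (times $1+E_0^{\mathrm{mc}}$) and $\max\{C^{\mathrm{mc}}_{2,\delta}+4R_2\gamma_FE_0^{\mathrm{mc}},4R_2\gamma_F\}$.

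\textbf{Main obstacle.} Everything else is mechanical, so the only point needing care is that the post-$\tau$ invariant $L_t\le 2\delta\eps$ genuinely holds in the multiclass setting. It is used in Step 2 at every $t\ge\tau$, not just at $\tau$; note, however, that the statement only needs the radius bound at $\tau$, so the invariant is needed only for nonvanishing of $W_t$.

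For the invariant itself, I would re-derive the induction of Part~3 of \cref{thm:first-clock}. The ingredients are:
\begin{itemize}
\item $\varepsilon\|D_t\|_F\le M_DG_t$, from \cref{lem:mc-adam-softsign} and \cref{lem:mc-gradient-sandwich};
\item the descent estimate $\langle g_t,D_t\rangle\ge\frac{a_\delta}{2}G_t^2/\eps$ with $a_\delta=\gamma_F^2/(1+2R_1\delta)$;
\item a Taylor remainder bound from \cref{lem:mc-hessian} in the Frobenius case, controlled by $e^{R_2\|\eta_tD_t\|_F}\le e^{2\delta R_2M_D\eta_0}$.
\end{itemize}
These give monotonicity $L_{t+1}\le L_t$ once $\eta_\tau$ is small, which is guaranteed by taking $\eps$ small so that $\tau$ is large via \cref{lem:mc-softmax-stability}.
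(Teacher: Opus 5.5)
Your proposal is correct and follows essentially the same route as the paper. The paper obtains this result by transcribing the binary proof of \cref{prop:post_euc}: a displacement bound $H_t+E_0^{\mathrm{mc}}$ from \cref{prop:mc-post-crossover}, radius lower bounds $\ell_\eps/(2\gamma_{\max})$ and $\ell_\eps/(2\gamma_F)$ at $T_1^\delta(\eps)$ from the post-crossover invariant together with \eqref{eq:mc-margin-tail}, and then \cref{lem:mc-normalized-margin-stability} combined with \cref{thm:multi_gap_T1}, absorbing $E_0^{\mathrm{mc}}/\ell_\eps$ into $r_\eps$. Your sketch of the multiclass invariant $L_t\le 2\delta\eps$ via Part~3 of \cref{thm:first-clock} is precisely the adaptation the paper relies on implicitly.
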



The following corollary is the multiclass counterpart of
\cref{cor:margin_app_T1+}. Its proof follows the same steps, and uses \cref{prop:mc-stopping-residual}, \cref{prop:mc-post-crossover}, \cref{lem:mc-softmax-stability}, and \cref{prop:mc_post_margin}
in place of \cref{prop:time_bracket_app}, \cref{prop:bounded_euc}, \cref{lem:proxy-path-ratio}, and \cref{prop:post_euc}, respectively.
\begin{corollary} \label{cor:mc_margin_app_T1+} 
For any $\delta,b \in (0,1)$, 
there exists a constant $\eps''_{2, \rm mc}(\delta,b)$ such that for every $0< \eps \leq \eps''_{2, \rm mc}(\delta,b)$, we have 
\[
\gamma_{\max}-\widehat{\gamma}_{\max}(W_{T_1^{+}(\eps;b)})
\leq
O(r_\epsilon), \qquad 
\gamma_F-\widehat{\gamma}_F(W_{T_1^{+}(\eps;b)})
\geq
\Delta_{\mathrm{geom}}
- O(r_\epsilon).
\]
\end{corollary}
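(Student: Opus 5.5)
The plan is to transfer the argument of \cref{cor:margin_app_T1+} verbatim to the multiclass setting, keeping its two ingredients: a bound $V_+ := V^\delta_{t_+}(\eps) = O(\ell_\eps^b)$ on the original step sizes accumulated between $T_1 := T_1^\delta(\eps)$ and $t_+ := T_1^+(\eps;b)$, and its conversion into $H^\delta_{t_+}(\eps) = O(\log \ell_\eps)$, which is then fed into \cref{prop:mc_post_margin}. I will set $\eps''_{2,\rm mc}(\delta,b) := \min\{\Tilde\eps_3^{\rm mc}(\delta), \eps_0(b,\delta)\}$ (possibly also intersected with $\Tilde\eps^{\rm mc}_1(\delta)$). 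This choice gives $T_1 < t_+$ by \cref{prop:mc-stopping-residual}, and it makes the post-crossover estimates of \cref{prop:mc-post-crossover} and the conclusions of \cref{prop:mc_post_margin} available for every $t \ge T_1$.

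The first step is a lower bound on the loss at $t_+$. Minimality of $t_+$ gives $G_{t_+-1} > \eps\ell_\eps^{-b}$. The one-step ratio bound \eqref{eq:mc_g_ratio2} gives $G_{t_+} \ge e^{-K_0\eta_{t_+-1}} G_{t_+-1}$. Since $T_1$ lies after a fixed burn-in, $K_0\eta_{t_+-1} \le \log 2$, and therefore $L_{t_+} \ge G_{t_+} > \tfrac{\eps}{2}\ell_\eps^{-b}$. Next I use the multiclass analogue of the reciprocal-loss recursion \eqref{eq:vt_1}, namely $c_\delta V_t/\eps \le 1/L_t - 1/L_{T_1}$. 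This is obtained by adapting Part 3 of \cref{thm:first-clock}, which is exactly what underlies \cref{prop:mc-post-crossover}, with the constant $c_\delta$ built from $\gamma_F$. Combining the two bounds gives $V_+ \le \tfrac{2}{c_\delta}\ell_\eps^b$.

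The second step bounds the effective step sizes. I use $\theta_s \le R_2 \eta_s G_s/\eps$ (from \cref{lem:mc-gradient-sandwich}) together with $G_s/\eps \le C_L/(1+V_s)$, which follows from the same telescoped recursion because $L_{T_1} \le 2\delta\eps$. The telescoping inequality $\eta_s/(1+V_s) \le (1+\eta_0)\bigl[\log(1+V_{s+1}) - \log(1+V_s)\bigr]$ then gives $H_{t_+} \le C_{\rm HV}\log(1 + \tfrac{2}{c_\delta}\ell_\eps^b)$, which is $O(\log\ell_\eps)$. Since $\log\ell_\eps/\ell_\eps \le q_a r_\eps$ in both the case $a<1$ and the case $a=1$, it follows that $H_{t_+}/\ell_\eps = O(r_\eps)$.

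Finally, I apply \cref{prop:mc_post_margin} at $t = t_+$. Both of its bounds become $O\bigl(r_\eps + H_{t_+}/\ell_\eps\bigr) = O(r_\eps)$, which gives the two claimed inequalities; the $\Delta_{\rm geom}$ in the statement should be read as $\Delta^{\rm mc}_{\rm geom}$.

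The main obstacle is that \cref{prop:mc-post-crossover} is stated only as two-sided asymptotics ($\asymp$), whereas the proof needs the explicit one-sided inequalities, namely the lower recursion $c_\delta\eta_s/\eps \le 1/L_{s+1} - 1/L_s$ and the bound $G_s/\eps \le C_L/(1+V_s)$. To get them I will rederive them in the multiclass setting. The first-order term comes from $\langle g_s, \sigma_\eps(g_s)\rangle \ge \|g_s\|_F^2/(\eps(1+2R_1\delta))$ combined with \cref{lem:mc-adam-softsign}. The remainder is controlled by the Frobenius Taylor bound \eqref{eq:mc-taylor-bound}, using $\|D_s\|_F \le M_D G_s/\eps$.
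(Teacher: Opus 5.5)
Your proposal is correct and takes the same route as the paper. The paper proves this corollary by transplanting the binary proof of \cref{cor:margin_app_T1+}: a lower bound $L_{t_+}>\tfrac{\eps}{2}\ell_\eps^{-b}$ from minimality and the one-step ratio, then $V_+\le \tfrac{2}{c_\delta}\ell_\eps^{b}$ from the reciprocal-loss recursion, then $H_{t_+}\le C_{\rm HV}\log(1+V_+)=O(\log\ell_\eps)$, and finally \cref{prop:mc_post_margin}. You are also right to rederive the one-sided inequalities rather than lean on the $\asymp$ statement of \cref{prop:mc-post-crossover}, since the binary proof likewise draws \eqref{eq:vt_1} and \eqref{eq:final_helper} from inside a proof rather than from a stated result.
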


We state two additional lemmas that are needed for the proof of \cref{thm:multi_first_clock}. The following lemma is the multiclass counterpart of \cref{lem:trans_aux}. Its proof follows the same steps, and uses \cref{prop:mc-low-loss}, \cref{lem:mc-gradient-sandwich}, \cref{lem:multi_softsign}, \cref{prop:multi_recurse}, \cref{lem:multi_D} in place of \cref{prop:uniform-clock}, \cref{lem:proxy}, \cref{lem:softsign}, \cref{prop:master-descent}, and \cref{lem:uniform-D}, respectively. 

\begin{lemma}
\label{lem:mc-uniform-small-loss}
For every $\zeta>0$, there exists an integer $t_\zeta$,
(independent of $\eps$) such that for every $\eps\in(0,1]$, we have 
\[
L_t\le\zeta,
\qquad t\ge t_\zeta.
\]
\end{lemma}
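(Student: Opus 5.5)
The plan is to reproduce the proof of \cref{lem:trans_aux} verbatim in the multiclass setting, since that argument uses only a descent inequality, a summability bound on $\eta_t^3$, and an $\eps$-uniform bound on the loss at a fixed time. The first step is to rebuild the potential $\Phi_t:=L_t+KR_t$ with $R_t:=\sum_{s\ge t}\eta_s^3$. For this we combine \cref{prop:multi_recurse} with \eqref{eq:multi_soft2} from \cref{lem:multi_softsign}. The sandwich $\gamma_{\max}G_t\le\|g_t\|_{\mathrm{sum}}\le R_1G_t$ from \cref{lem:mc-gradient-sandwich}, together with $G_t\le1$ and $\eps\le1$, gives
\[
\langle g_t,\sigma_\eps(g_t)\rangle\ge\kappa G_t^2,\qquad \kappa:=\frac{\gamma_{\max}^2}{R_1+Kd}.
\]
Here $G_t\le1$ holds because each $1-p_{i,y_i}\le1$. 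Applying Young's inequality to $C^{\mathrm{mc}}_{\mathrm{md}}\eta_t^2G_t$ yields
\[
L_{t+1}\le L_t-c\eta_tG_t^2+K\eta_t^3,\qquad t\ge t_{\mathrm{ema}},
\]
with $\eps$-independent constants $c,K$. Since $3a>1$, $R_t<\infty$ and $R_t\downarrow0$, so $\Phi_{t+1}\le\Phi_t-c\eta_tG_t^2$ for $t\ge t_{\mathrm{ema}}$.

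The second step is a uniform initial bound. Let $t_0:=t_{\mathrm{ema}}$. By \cref{lem:multi_D}, $\|W_{t_0}\|_{\max}\le\|W_0\|_{\max}+C_DS_{t_0}=:M_0$, and this bound does not depend on $\eps$. Each pairwise margin satisfies $|m_{ic}(W_{t_0})|\le R_1M_0$ by the duality estimate in \cref{lem:mc-margin-lipschitz} (taking $V=0$). Hence $L_{t_0}\le\log(1+(K-1)e^{R_1M_0})$, and therefore $\Phi_{t_0}\le B$ for a constant $B$ independent of $\eps$.

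The third step is a lower bound on $G$ when the loss is not small. By \eqref{eq:q-loss-identity}, $q_i=1-e^{-\ell_i}$. Some index has $\ell_{i_*}\ge L(W)$, so
\[
G(W)\ge\tfrac1n\bigl(1-e^{-L(W)}\bigr).
\]
This is the multiclass analogue of \eqref{eq:lowloss_2}.

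The final step is the hitting argument. Fix $\zeta>0$ and choose $r_\zeta\ge t_0$ with $KR_{r_\zeta}\le\zeta/2$; this $r_\zeta$ is $\eps$-independent. Then $t\ge r_\zeta$ and $\Phi_t>\zeta$ imply $L_t>\zeta/2$, which gives $G_t\ge\bar g_\zeta:=\frac1n(1-e^{-\zeta/2})$ and hence $\Phi_{t+1}\le\Phi_t-c\bar g_\zeta^2\eta_t$. Since $\sum_t\eta_t=\infty$, we can pick the $\eps$-independent time
\[
t_\zeta:=\min\Bigl\{T>r_\zeta:\ c\bar g_\zeta^2\sum_{t=r_\zeta}^{T-1}\eta_t>B\Bigr\}.
\]
Nonnegativity of $\Phi$ then forces $\Phi_\tau\le\zeta$ for some $\tau\in[r_\zeta,t_\zeta]$. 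Monotonicity of $\Phi$ gives $L_t\le\Phi_t\le\zeta$ for all $t\ge t_\zeta$.

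The main obstacle is bookkeeping: confirming that every constant ($c$, $K$, $B$, $t_{\mathrm{ema}}$, $r_\zeta$, $t_\zeta$) is free of $\eps$. This holds because the multiclass moment-tracking constants in \cref{lem:mc-aggregate-moment} and \cref{lem:mc-adam-softsign} are $\eps$-independent, and the only $\eps$-dependence enters through $\langle g,\sigma_\eps(g)\rangle$, which is handled by $\eps\le1$.
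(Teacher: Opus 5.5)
Your proposal is correct and follows the paper's route. The paper proves this lemma by rerunning the binary argument of \cref{lem:trans_aux} with the multiclass substitutions: the potential $\Phi_t=L_t+KR_t$, the $\eps$-uniform bound at $t_{\mathrm{ema}}$, the lower bound on $G$ when the loss is not small, and the hitting-time argument. You also supply the two multiclass ingredients that the paper leaves implicit, namely $L_{t_0}\le\log\bigl(1+(K-1)e^{R_1M_0}\bigr)$ and $G(W)\ge\frac1n\bigl(1-e^{-L(W)}\bigr)$ via $q_i=1-e^{-\ell_i}$.
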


The following lemma is the multiclass counterpart of \cref{lem:ratio}. Its proof follows the same steps, using \cref{prop:mc-low-loss}, \cref{lem:mc-softmax-stability}, and 
\cref{lem:mc-proxy-loss} in place of \cref{prop:uniform-clock}, \cref{lem:proxy-path-ratio}, and \cref{lem:loss-proxy}, respectively. 
\begin{lemma}
\label{lem:mc-first-crossing}
Let $h_\eps>0$ satisfy $h_\eps\to0$ as $\eps\downarrow0$. Define
$\tau_\eps
:=\inf\{t\ge T_*:G_t\le h_\eps\}$ 
where $T_*$ is the time in \cref{prop:mc-low-loss}. 
Then, we have $\tau_\eps<\infty$ for every $\eps\in(0,1]$, and
\[
\tau_\eps\to\infty,
\qquad
\frac{G_{\tau_\eps}}{h_\eps}\to1,
\qquad
\frac{L_{\tau_\eps}}{h_\eps}\to1,
\qquad \text{as} \, \, \eps\downarrow0.
\]
\end{lemma}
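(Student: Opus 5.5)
We follow the proof of \cref{lem:ratio} line by line, replacing the binary ingredients by their multiclass counterparts.

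\emph{Divergence of the stopping time.} First we check that $\tau_\eps<\infty$ for each fixed $\eps$: by \cref{prop:mc-low-loss} we have $G_t\to0$, so $G_t\le h_\eps$ eventually, and the infimum is over a nonempty set. Next, set $K_0:=R_1C_D$ and fix any deterministic integer $N$. Applying \eqref{eq:mc_g_ratio2} with $r=0$ gives
\[
G_t\ge G_0e^{-K_0S_t}\ge G_0e^{-K_0S_N}=:c_N>0,\qquad 0\le t\le N,
\]
where $G_0>0$ by \cref{lem:mc-gradient-sandwich}. Since $h_\eps\to0$, we have $h_\eps<c_N$ for all sufficiently small $\eps$. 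Then no $t\le N$ can satisfy $G_t\le h_\eps$, so $\tau_\eps>N$. Because $N$ is arbitrary, $\tau_\eps\to\infty$.

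\emph{The ratio $G_{\tau_\eps}/h_\eps$.} For small $\eps$ we have $\tau_\eps>T_*$. Minimality of $\tau_\eps$ then gives $G_{\tau_\eps-1}>h_\eps\ge G_{\tau_\eps}$. Applying the one-step version of \eqref{eq:mc_g_ratio2} from $\tau_\eps-1$ to $\tau_\eps$, we obtain
\[
e^{-K_0\eta_{\tau_\eps-1}}<\frac{G_{\tau_\eps}}{h_\eps}\le1.
\]
Since $\tau_\eps\to\infty$, we have $\eta_{\tau_\eps-1}\to0$, so the lower bound tends to $1$. Hence $G_{\tau_\eps}/h_\eps\to1$.

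\emph{The ratio $L_{\tau_\eps}/h_\eps$.} Because $\tau_\eps\ge T_*$, \cref{prop:mc-low-loss} gives $L_{\tau_\eps}\le 2G_{\tau_\eps}\le 2h_\eps\to0$. \cref{lem:mc-proxy-loss} gives $1\ge G_{\tau_\eps}/L_{\tau_\eps}\ge 1-nL_{\tau_\eps}/2\to1$, so $L_{\tau_\eps}/G_{\tau_\eps}\to1$. Multiplying this by $G_{\tau_\eps}/h_\eps\to1$ yields $L_{\tau_\eps}/h_\eps\to1$.

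\emph{Where care is needed.} There is no real obstacle here. The only points to verify are that \eqref{eq:mc_g_ratio2} holds with the same constant $R_1C_D$, which relies on \cref{lem:multi_D}, and that $G_0>0$ in the multiclass setting.
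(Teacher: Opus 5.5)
Your proof is correct and takes essentially the same route as the paper, which obtains this lemma by repeating the proof of \cref{lem:ratio} with \cref{prop:mc-low-loss}, \cref{lem:mc-softmax-stability}, and \cref{lem:mc-proxy-loss} substituted for their binary counterparts. You also check explicitly that $\tau_\eps<\infty$ and that $\tau_\eps>T_*$, which the minimality step $G_{\tau_\eps-1}>h_\eps$ requires; these are small but welcome additions of rigor.
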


 The following theorem is the multiclass counterpart of \cref{thm:first-clock}. Its proof follows the same steps, using \cref{lem:mc-radius}, \cref{lem:mc-gradient-sandwich}, \cref{lem:mc-proxy-loss}, \cref{lem:multi_D}, \cref{lem:multi_softsign}, \cref{lem:mc-uniform-small-loss}, \cref{lem:mc-first-crossing}, \cref{prop:multi_recurse}, \cref{prop:mc-low-loss}, \cref{lem:mc-softmax-stability}, \cref{lem:mc-hessian}, 
 \cref{lem:mc-adam-softsign} in place of \cref{lem:radius}, \cref{lem:proxy}, \cref{lem:loss-proxy}, \cref{lem:uniform-D}, \cref{lem:softsign}, \cref{lem:trans_aux}, \cref{lem:ratio}, \cref{prop:master-descent}, \cref{prop:uniform-clock}, \cref{lem:derivative-ratio}, \cref{lem:taylor}, and \cref{lem:alignment}, respectively. 
 
\begin{theorem} \label{thm:multi_first_clock}
For every fixed \(b,\delta\in(0,1)\),
as \(\eps\downarrow0\), we have 
$
S_{T_1^\delta(\eps)}-S_{T_1^-(\eps;b)}
    =O(\log\ell_\eps)$, 
$S_{T_1^+(\eps;b)}-S_{T_1^\delta(\eps)}
    =\Theta(\ell_\eps^b)$, and 
\[
S_{T_1^\delta(\eps)}-S_{T_*}
    =\frac{\ell_\eps}{\gamma_{\max}}+o(\ell_\eps).
\]
\end{theorem}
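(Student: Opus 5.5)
The plan is to transport the three-part binary argument of \cref{thm:first-clock} verbatim to the $Kd$ vectorized coordinates, applying the substitutions $(\|\cdot\|_\infty,\|\cdot\|_1,\|\cdot\|_2,\gamma_\infty,\gamma_2,d)\to(\|\cdot\|_{\max},\|\cdot\|_{\mathrm{sum}},\|\cdot\|_F,\gamma_{\max},\gamma_F,Kd)$. Each analytic ingredient is supplied by its multiclass counterpart: the descent recursion by \cref{prop:multi_recurse}; the soft-sign bounds by \cref{lem:multi_softsign}; the sandwich $\gamma_{\max}G\le\|g\|_{\mathrm{sum}}\le R_1G$ by \cref{lem:mc-gradient-sandwich}; the loss--proxy and loss--margin relations by \cref{lem:mc-proxy-loss}; the path ratio by \cref{lem:mc-softmax-stability}; the uniform smallness of the loss by \cref{lem:mc-uniform-small-loss}; the crossing ratios by \cref{lem:mc-first-crossing}; and the sharp radius bound by \cref{lem:mc-radius} with $B_g=R_1$.

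\textbf{Part 1} treats $S_{T^-}-S_{T_*}$. Fix $\zeta<2/n$ and choose a large $r$ depending only on $\zeta$. For $r\le t<T^-$ we have $G_t>\eps\ell_\eps^b$, so the linear soft-sign bound gives $\langle g_t,\sigma_\eps(g_t)\rangle\ge(\gamma_{\max}-Kd\ell_\eps^{-b})G_t$. Combining this with $G_t\ge(1-n\zeta/2)L_t$ yields a contraction $L_{t+1}\le(1-\alpha_\zeta q_{\eps,r}\eta_t)L_t$, and unrolling up to time $T^--1$ gives the $\limsup\le1/\gamma_{\max}$ after sending $\eps\downarrow0$, then $r\to\infty$, then $\zeta\downarrow0$. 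For the matching $\liminf$ we use the lower bound $\rho(W_{T^-})\ge\ell_\eps-b\log\ell_\eps-O(1)$ from \eqref{eq:mc-margin-tail}. We then combine it with $\rho(W)\le\gamma_{\max}\|W\|_{\max}$, which follows from the definition of $\gamma_{\max}$ and homogeneity. Finally, \cref{lem:mc-radius} on $[r,T^-)$ gives $\|W_{T^-}\|_{\max}\le M_r+S_{T^-}-S_r+C_{\rm rad}\eta_r(2\ell_\eps+O(1))$, and $\eta_r\to0$ removes the last term. The unit coefficient in \cref{lem:mc-radius} is exactly what makes the two bounds match.

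\textbf{Part 2} shows $S_{T^\delta}-S_{T^-}=O(\log\ell_\eps)$. Between the two times $G_t>\delta\eps$, so the Titu-type bound \eqref{eq:multi_soft2} gives $\langle g_t,\sigma_\eps(g_t)\rangle\ge\kappa_\delta G_t$ with $\kappa_\delta=\gamma_{\max}^2/(R_1+Kd/\delta)$. This yields $L_{t+1}\le(1-\tfrac{\kappa_\delta}{4}\eta_t)L_t$, starting from $L_{T^-}\le2\eps\ell_\eps^b$. The hitting-time argument with $A_\eps=\tfrac{4}{\kappa_\delta}\log(2\ell_\eps^b/\delta)$ then carries over unchanged.

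\textbf{Part 3} treats $S_{T^+}-S_{T^\delta}$. We induct on $L_t\le2\delta\eps$ for $t\ge T^\delta$. Each step uses:
\begin{itemize}
\item the Frobenius bound $\eps\|D_t\|_F\le(R_2+C_{\rm ema}\eta_0)G_t$, obtained from \eqref{eq:mc-adam-softsign-fro} together with $\eps|\sigma_\eps(g)_{kj}|\le|g_{kj}|$;
\item the two-sided estimate $\frac{\gamma_F^2}{1+2R_1\delta}\frac{G_t^2}{\eps}\le\langle g_t,\sigma_\eps(g_t)\rangle\le R_2^2\frac{G_t^2}{\eps}$, which uses $|g_t[k,j]|\le R_1G_t$;
\item the Frobenius Taylor bound \eqref{eq:mc-taylor-bound} with $p=F$, which gives a remainder $O(\eta_t^2G_t^3/\eps^2)$ because $e^{R_2\eta_t\|D_t\|_F}$ stays bounded when $G_t/\eps\le2\delta$.
\end{itemize}
These give $c_\delta\eta_t/\eps\le 1/L_{t+1}-1/L_t\le C_\delta\eta_t/\eps$. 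Summing from $T^\delta$ to $T^+$ and inserting the crossing ratios of \cref{lem:mc-first-crossing} yields $\Theta(\ell_\eps^b)$. Combining Parts 1 and 2 gives the leading asymptotic $S_{T^\delta}-S_{T_*}=\ell_\eps/\gamma_{\max}+o(\ell_\eps)$.

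The step I expect to be the main obstacle is the Part 3 remainder control. The binary proof bounded $D^\top\nabla^2LD$ through $(z_i^\top D)^2\le R_2^2\|D\|_2^2$ and a per-sample derivative ratio. Here we must instead use the softmax-covariance bound \eqref{eq:mc-hessian-fro} and the Frobenius stability \eqref{eq:mc-probability-ratio-fro}, checking that the constant is $R_2^2$ and not $2R_2^2$. \cref{lem:softmax-covariance-bound} already supplies the refined constant, so this step should close as well.
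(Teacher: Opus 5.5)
Your proposal is correct and follows the paper's own route: the paper proves this theorem by rerunning the three parts of the proof of \cref{thm:first-clock} with essentially the multiclass substitutes you list (\cref{prop:multi_recurse}, \cref{lem:multi_softsign}, \cref{lem:mc-gradient-sandwich}, \cref{lem:mc-proxy-loss}, \cref{lem:mc-softmax-stability}, \cref{lem:mc-uniform-small-loss}, \cref{lem:mc-first-crossing}, \cref{lem:mc-radius}, \cref{lem:mc-hessian}, \cref{lem:mc-adam-softsign}). Two small remarks. First, the upper bound $1/L_{t+1}-1/L_t\le C_\delta\eta_t/\eps$ in Part 3 (and hence the lower half of $\Theta(\ell_\eps^b)$, as well as $L_{t+1}\ge L_t/2$) needs the Taylor remainder to be nonnegative. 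That comes from the positive-semidefinite lower bound in \cref{lem:mc-hessian}, not from \eqref{eq:mc-taylor-bound}. Second, the $R_2^2$ versus $2R_2^2$ constant you flag as the main obstacle does not matter, since only orders of magnitude enter the $\Theta$ statement.
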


Finally, we discuss Euclidean optimality in the multiclass case. For \(\zeta\in(0,\Delta_{\mathrm{geom}}^{\mathrm{mc}})\), define
\[
\tau_\zeta^\delta(\eps)
:=\inf\left\{t\ge T_1^\delta(\eps):
       \gamma_F-\widehat\gamma_F(W_t)\le\zeta\right\}.
\]


The following theorem is the multiclass counterpart of
\cref{thm:margin_euc}. Its proof follows the same steps, 
using \cref{prop:mc-post-crossover}, \cref{prop:mc-low-loss}, \cref{lem:mc-gradient-sandwich}, \cref{lem:mc-proxy-loss}, \cref{lem:mc-softmax-stability}, \cref{lem:mc-hessian}, \cref{lem:mc-normalized-margin-stability}, and \cref{thm:multi_gap_T1} in place of \cref{prop:bounded_euc}, \cref{prop:uniform-clock}, \cref{lem:proxy}, \cref{lem:loss-proxy}, \cref{lem:derivative-ratio}, \cref{lem:taylor}, \cref{lem:margin_helper}, \cref{thm:margin_app_T1}, respectively.

\begin{theorem} 
For any $\delta \in (0,1)$ and $\zeta \in (0, \Delta^{\rm mc}_{\rm geom})$, there exists a constant $\tilde{\eps}_{4, \rm mc}(\delta, \zeta)$ such that the following statements hold for every $0 < \eps \leq \tilde{\eps}_{4, \rm mc}(\delta,\zeta)$:

\emph{(i) For every $t \geq T_1^{\delta}(\eps)$, 
we have $\gamma_F - \hat{\gamma}_F (W_t) \leq O \bigl( \frac{ 1+ \norm{W_{T_1^{\delta}(\eps)}}_F}{1 + H_t^{\delta} (\eps)} \bigr)$.
}

\emph{(ii) There exist constants $0< c_{\zeta} \leq C_{\zeta} < \infty$ such that $c_{\zeta} \norm{W_{T_1^{\delta}(\eps)}}_F \leq H_{\tau_{\zeta}^{\delta}(\eps)}^{\delta}(\eps) \leq C_{\zeta} (1 + \norm{W_{T_1^{\delta}(\eps)}}_F)$. Consequently, it holds that
    $H_{\tau_{\zeta}^{\delta}(\eps)}^{\delta}(\eps) = \Theta(\log \frac{1}{\eps})$ and $V_{\tau_{\zeta}^{\delta}(\eps)} = \eps^{- \Theta(1)}$. 
}
\end{theorem}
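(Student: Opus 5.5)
I will mirror the binary proof of \cref{thm:margin_euc} after vectorizing the $K\times d$ parameters. The post-crossover update is written as $W_{s+1}=W_s-\theta_s d_s$ with $d_s:=u_s+e_s$ and $\|u_s\|_F=1$.

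\textbf{Part (i): summable perturbations.} From \cref{prop:mc-post-crossover}, together with $\gamma_F G_s\le\|g_s\|_F\le R_2G_s$ (\cref{lem:mc-gradient-sandwich}) and \cref{lem:helper}, I will obtain the following.
\begin{itemize}
\item $\theta_s\le R_2C_L\eta_s/(1+V_s)$, which gives $\sum_s\theta_s^2\le K_\theta^2$.
\item $\sum_s\theta_s\|e_s\|_F\le E_0^{\mathrm{mc}}$, and hence $\sum_s(\theta_s\|e_s\|_F)^2\le (E_0^{\mathrm{mc}})^2$.
\item $\sum_s\theta_s(1-G_s/L_s)\le K_p$. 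This follows from $1-G_s/L_s\le nL_s/2$ (\cref{lem:mc-proxy-loss}) and the telescoping bound $\theta_sL_s\lesssim L_s-L_{s+1}$, which comes from the multiclass analogue of \eqref{eq:final_hh}.
\end{itemize}

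\textbf{Part (i): exponential decay of the loss.} I will apply the Frobenius Taylor bound \eqref{eq:mc-taylor-bound} with $p=F$. Since $\theta_s\|d_s\|_F\le K_\theta+E_0^{\mathrm{mc}}=:B_0$, the second-order term is at most $C_HL_s(\theta_s\|d_s\|_F)^2$ with $C_H=\tfrac12R_2^2e^{R_2B_0}$. Expanding $\langle g_s,d_s\rangle=\|g_s\|_F+\langle g_s,e_s\rangle$ then yields
\[
L_{s+1}/L_s\le 1-\gamma_F\theta_s+r_s,\qquad \sum_s r_s\le A_0.
\]
Taking logarithms and summing gives $L_t\le L_{T_1}e^{A_0-\gamma_FH_t}$.

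\textbf{Part (i): from loss decay to the margin gap.} The loss–margin bound \eqref{eq:mc-margin-tail} converts this into $\rho(W_t)\ge\gamma_FH_t-A_0$. Unrolling the update gives $\|W_t\|_F\le\|W_{T_1}\|_F+H_t+E_0^{\mathrm{mc}}$. Dividing the two bounds, and handling the case $H_t<A_0/\gamma_F$ through the trivial bound $\gamma_F-\widehat\gamma_F\le\gamma_F$, yields (i). Nonvanishing of $W_t$ follows from $G_t\le 2\delta\eps<G(0)=(K-1)/K$ once $\eps$ is small.

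\textbf{Part (ii): upper bound.} Let $\bar\tau_\zeta$ be the first time with $H_t\ge C_\gamma(1+R_{T_1})/\zeta$. This is finite because $H_t\asymp\log(1+V_t)\to\infty$. Part (i) gives $\tau_\zeta\le\bar\tau_\zeta$, and the one-step overshoot is at most $\sup_s\theta_s\le K_\theta$.

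\textbf{Part (ii): lower bound.} By \cref{thm:multi_gap_T1}, for small $\eps$,
\[
\gamma_F-\widehat\gamma_F(W_{T_1})\ge(\Delta^{\mathrm{mc}}_{\mathrm{geom}}+\zeta)/2.
\]
Hence the Frobenius margin must increase by at least $(\Delta^{\mathrm{mc}}_{\mathrm{geom}}-\zeta)/2$ between $T_1$ and $\tau_\zeta$. \cref{lem:mc-normalized-margin-stability} and $\|W_{\tau_\zeta}-W_{T_1}\|_F\le H_{\tau_\zeta}+E_0^{\mathrm{mc}}$ then give $H_{\tau_\zeta}\ge\frac{\Delta^{\mathrm{mc}}_{\mathrm{geom}}-\zeta}{4R_2}\|W_{T_1}\|_F-E_0^{\mathrm{mc}}$. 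The constant $E_0^{\mathrm{mc}}$ can be absorbed because $\|W_{T_1}\|_F\ge\rho(W_{T_1})/\gamma_F\ge(\ell_\eps-A_\delta)/\gamma_F\to\infty$.

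\textbf{Part (ii): scaling consequences.} The multiclass analogue of the $T_1$ norm estimate, $\|W_{T_1}\|_{\max}=\ell_\eps/\gamma_{\max}+O(\omega_a(\ell_\eps))$, combined with $\|\cdot\|_{\max}\le\|\cdot\|_F\le\sqrt{Kd}\|\cdot\|_{\max}$, gives $H_{\tau_\zeta}=\Theta(\ell_\eps)$. Inverting $H\asymp\log(1+V)$ then gives $V_{\tau_\zeta}=\eps^{-\Theta(1)}$.

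The main obstacle is the Taylor step. I must check that the softmax Hessian bound \eqref{eq:mc-hessian-fro} together with the stability estimate \eqref{eq:mc-G-ratio} yields a remainder of the form $C_HG_s(\theta_s\|d_s\|_F)^2$ with a uniform constant, using $\|\theta_sd_s\|_F\le B_0$. I must also verify the multiclass version of the two-sided loss recursion $c_*\eta_sL_s^2/\eps\le L_s-L_{s+1}$ from Part 3 of \cref{thm:first-clock}. I expect this recursion to go through with $a_\delta=\gamma_F^2/(1+2R_1\delta)$ once $|g_t[c,j]|\le R_1G_t$ is used.
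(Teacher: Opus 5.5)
Your proposal is correct and is essentially the paper's own argument. The paper proves this multiclass theorem by rerunning the binary proof of \cref{thm:margin_euc} with \cref{prop:mc-post-crossover}, \cref{lem:mc-gradient-sandwich}, \cref{lem:mc-proxy-loss}, \cref{lem:mc-hessian}, \cref{lem:mc-normalized-margin-stability}, and \cref{thm:multi_gap_T1} substituted, following the same steps you give: summable perturbations, exponential loss decay in $H_t$, the loss-to-margin conversion, the one-step overshoot for the upper bound, and margin stability for the lower bound. The Taylor step you flag as the main obstacle follows directly from \eqref{eq:mc-taylor-bound} with $p=F$ because $\|\theta_s d_s\|_F\le B_0$, and the recursion $c_*\eta_sL_s^2/\eps\le L_s-L_{s+1}$ is the multiclass adaptation of Part~3 of \cref{thm:first-clock} that the paper already relies on for \cref{prop:mc-post-crossover}, with $a_\delta=\gamma_F^2/(1+2R_1\delta)$ as you expect.
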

The proof of \cref{cor:multi_iter} follows the same steps as \cref{cor:iter_complex}. 
\begin{corollary} \label{cor:multi_iter}
Suppose \(\Delta_{\mathrm{geom}}^{\mathrm{mc}}>0\).
Fix \(\delta\in(0,1)\) and
\(\zeta\in(0,\Delta_{\mathrm{geom}}^{\mathrm{mc}})\).
For \(\eta_t=\eta_0(t+1)^{-a}\), as \(\eps\downarrow0\),
\[
\begin{array}{ll}
T_1^\delta(\eps)
 =\Theta\!\left(\ell_\eps^{1/(1-a)}\right),
&
\tau_\zeta^\delta(\eps)
 =\eps^{-\Theta(1)/(1-a)},
\qquad a\in(1/3,1),\\[5pt]
T_1^\delta(\eps)
 =\eps^{-1/(\eta_0\gamma_{\max})+o(1)},
&
\tau_\zeta^\delta(\eps)
 =\exp\!\left(\eps^{-\Theta(1)}\right),
\qquad a=1.
\end{array}
\]
\end{corollary}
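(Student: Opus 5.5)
The plan is to reduce \cref{cor:multi_iter} to the $S$-clock characterizations already stated for the multiclass setting, exactly as in the proof of \cref{cor:iter_complex}. Two inputs are needed. The first is \cref{thm:multi_first_clock}, which gives $S_{T_1^\delta(\eps)}-S_{T_*}=\ell_\eps/\gamma_{\max}+o(\ell_\eps)$. Since $T_*$ is independent of $\eps$ by \cref{prop:mc-low-loss}, this yields $S_{T_1^\delta(\eps)}=\ell_\eps/\gamma_{\max}+o(\ell_\eps)$. The second is the multiclass counterpart of \cref{thm:margin_euc}, part (ii), which gives $V_\eps:=V^\delta_{\tau^\delta_\zeta(\eps)}(\eps)$ with $\tfrac12\eps^{-\alpha}\le V_\eps\le\eps^{-\beta}$ for some constants $0<\alpha\le\beta<\infty$. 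For this I would reproduce the final paragraph of the binary proof of \cref{thm:margin_euc}: combine $H_{\tau_\zeta}\asymp\|W_{T_1}\|_F$ with $\|W_{T_1}\|_F=\Theta(\ell_\eps)$ and with $H_t\asymp\log(1+V_t)$ from \cref{prop:mc-post-crossover}. The estimate $\|W_{T_1}\|_F=\Theta(\ell_\eps)$ comes from $\ell_\eps/(2\gamma_{\max})\le\|W_{T_1}\|_{\max}\le 2\ell_\eps/\gamma_{\max}$ (by the analogue of \eqref{eq:w_scale}) and $\|W\|_{\max}\le\|W\|_F\le\sqrt{Kd}\|W\|_{\max}$.

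Next comes the purely scalar step-size asymptotics, which are independent of the data and therefore reuse \eqref{eq:St_1} and \eqref{eq:St_2} verbatim: $S_t=\frac{\eta_0}{1-a}t^{1-a}+O(1)$ for $a<1$, and $S_t=\eta_0\log t+O(1)$ for $a=1$.
\begin{itemize}
\item For $T_1^\delta(\eps)$ with $a<1$, invert to get $T_1^{1-a}=\frac{1-a}{\eta_0\gamma_{\max}}\ell_\eps(1+o(1))$, hence $T_1^\delta(\eps)=\Theta(\ell_\eps^{1/(1-a)})$.
\item For $T_1^\delta(\eps)$ with $a=1$, we get $\log T_1=(\frac1{\eta_0\gamma_{\max}}+o(1))\ell_\eps$, which gives the stated $\eps^{-1/(\eta_0\gamma_{\max})+o(1)}$.
\item For $\tau_\zeta$, write $S_{\tau_\zeta}=S_{T_1}+V_\eps$. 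Since $S_{T_1}=O(\ell_\eps)=o(\eps^{-\alpha})$, we have $S_{\tau_\zeta}=V_\eps(1+o(1))$. Inverting then gives $\tau_\zeta^{1-a}\asymp V_\eps$, so $\tau_\zeta=\eps^{-\Theta(1)/(1-a)}$ when $a<1$. When $a=1$, we get $\log\tau_\zeta\asymp V_\eps/\eta_0$, so $\tau_\zeta=\exp(\eps^{-\Theta(1)})$.
\end{itemize}

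The only genuinely nontrivial ingredient is confirming that the multiclass analogue of \cref{thm:margin_euc}(ii) holds with $\Delta^{\rm mc}_{\rm geom}>0$. I expect the main obstacle to be the lower bound $H_{\tau_\zeta}\gtrsim\|W_{T_1}\|_F$. It requires \cref{thm:multi_gap_T1} to place $\gamma_F-\widehat\gamma_F(W_{T_1})$ above $(\Delta^{\rm mc}_{\rm geom}+\zeta)/2$ for small $\eps$. It then requires \cref{lem:mc-normalized-margin-stability} together with the unrolled bound $\|W_{\tau_\zeta}-W_{T_1}\|_F\le H_{\tau_\zeta}+E_0^{\rm mc}$, and finally the absorption of $E_0^{\rm mc}$ using $\|W_{T_1}\|_F\to\infty$. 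Both steps transfer directly, because $\rho(W)\le\gamma_F\|W\|_F$ and the loss–margin bound \eqref{eq:mc-margin-tail} give $\|W_{T_1}\|_F\ge(\ell_\eps-O(1))/\gamma_F$. With these inputs in hand, the rest is the routine inversion already carried out in the binary case.
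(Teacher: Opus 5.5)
Your proposal is correct and follows essentially the same route as the paper, which proves \cref{cor:multi_iter} by repeating the proof of \cref{cor:iter_complex}. That proof combines $S_{T_1^\delta(\eps)}=\ell_\eps/\gamma_{\max}+o(\ell_\eps)$ from \cref{thm:multi_first_clock} with the bound $\tfrac12\eps^{-\alpha}\le V^\delta_{\tau_\zeta^\delta(\eps)}(\eps)\le\eps^{-\beta}$ from the multiclass counterpart of \cref{thm:margin_euc}, and then inverts the scalar asymptotics $S_t=\frac{\eta_0}{1-a}t^{1-a}+O(1)$ or $S_t=\eta_0\log t+O(1)$, exactly as you do.
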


\end{document}